\definecolor{mygray}{gray}{0.5}
\definecolor{cblue}{RGB}{8, 85, 153}
\definecolor{darkblue}{RGB}{31, 64, 96}
\definecolor{cgreen}{RGB}{8, 153, 83}
\definecolor{green}{RGB}{8, 200, 50}
\definecolor{cmaroon}{RGB}{128, 0, 0}
\renewcommand\algorithmicdo{}
\newcommand{\be}{\mathbf{e}}
\renewcommand{\bf}{\mathbf{f}}
\newcommand{\bl}{\mathbf{l}}
\newcommand{\bv}{\mathbf{v}}
\newcommand{\bx}{\mathbf{x}}
\newcommand{\by}{\mathbf{y}}
\newcommand{\bz}{\mathbf{z}}
\newcommand{\br}{\mathbf{r}}
\newcommand{\bG}{\mathbf{G}}
\newcommand{\bI}{\mathbf{I}}
\newcommand{\bM}{\mathbf{M}}
\newcommand{\bP}{\mathbf{P}}
\newcommand{\bX}{\mathbf{X}}
\newcommand{\balpha}{\boldsymbol{\alpha}}
\newcommand{\bbeta}{\boldsymbol{\beta}}
\newcommand{\beps}{\boldsymbol{\epsilon}}
\newcommand{\bmu}{\boldsymbol{\mu}}
\newcommand{\bpi}{\boldsymbol{\pi}}
\newcommand{\bPhi}{\boldsymbol{\Phi}}
\newcommand{\bPsi}{\boldsymbol{\Psi}}
\newcommand{\bSigma}{\boldsymbol{\Sigma}}
\renewcommand{\cal}[1]{\mathcal{#1}}
\DeclarePairedDelimiter{\braces}{\lbrace}{\rbrace}
\DeclarePairedDelimiter{\paren}{(}{)}
\DeclarePairedDelimiter{\abs}{|}{|}
\DeclarePairedDelimiter{\norm}{\|}{\|}
\DeclarePairedDelimiter{\inprod}{\langle}{\rangle}
\newcommand{\R}{\mathbb{R}}
\renewcommand{\P}{\mathbb{P}}
\newcommand{\E}{\mathbb{E}}
\newcommand{\indicator}{\mathbf{1}}
\newcommand{\Var}{\operatorname{Var}}
\newcommand{\Cov}{\operatorname{Cov}}
\newcommand{\trace}{\operatorname{tr}}
\newcommand{\rank}{\operatorname{rank}}
\DeclareMathOperator*{\argmin}{\arg\!\min}
\newcommand{\nn}{\nonumber \\}
\newcommand{\tree}{\mathfrak{T}}
\newcommand{\tse}{\mathfrak{E}}
\newcommand{\cell}{\mathcal{C}}
\newcommand{\node}{\mathfrak{t}}
\newcommand{\leaf}{\mathcal{L}}
\newcommand{\nleaves}{b}
\newcommand{\ntrees}{m}
\newcommand{\tsespace}[1][m]{\Omega_{\text{TSE},#1}}
\newcommand{\leafparam}{\mu}
\newcommand{\leafparamvec}{\bmu}
\newcommand{\bic}{\operatorname{BIC}}
\newcommand{\df}{\operatorname{df}}
\newcommand{\partition}{\mathcal{P}}
\newcommand{\pem}{\mathbb{V}}
\newcommand{\mapone}{\mathcal{F}}
\newcommand{\bias}{\operatorname{Bias}^2}
\newcommand{\opt}[1][m]{\operatorname{OPT}_{#1}}
\newcommand{\setone}{\mathcal{A}}
\newcommand{\settwo}{\mathcal{B}}
\newcommand{\setthree}{\mathcal{C}}
\newcommand{\bart}{\textbf{BART}}
\newcommand{\indep}{\perp\!\!\!\perp}
\newcommand{\tsebad}{\tse_{\text{bad}}}
\newcommand{\tsegood}{\tse_{\text{good}}}
\newcommand{\treebad}{\tree_{\text{bad}}}
\newcommand{\statespace}{\Omega}
\newcommand{\hpp}{\operatorname{HPP}}
\newcommand{\emptytree}{\tree_\emptyset}
\newcommand{\emptytse}{\tse_\emptyset}
\newcommand{\linspan}{\operatorname{span}}
\newcommand{\coverage}{\operatorname{coverage}}
\newcommand{\xspace}{\mathcal{X}}
\newcommand{\xmeasure}{\nu}
\newcommand{\nfeats}{d}
\newcommand{\nsamples}{n}
\newcommand{\ncats}{B}
\newcommand{\data}[1][n]{\mathcal{D}_{#1}}
\newcommand{\maxf}{D_f}
\newcommand{\noisesg}{D_\epsilon}
\newcommand{\var}[1]{\textit{#1}}
\newcommand{\change}[1]{{\color{blue}{#1}}}
\renewcommand{\change}[1]{#1}
\newenvironment{longlist}{
  \begin{itemize}
}{
  \end{itemize}
}
\theoremstyle{plain}
\newtheorem{theorem}{Theorem}[section]
\newtheorem{lemma}[theorem]{Lemma}
\newtheorem{proposition}[theorem]{Proposition}
\newtheorem{remark}[theorem]{Remark}
\title{On the Computational Efficiency of Bayesian Additive Regression Trees: An Asymptotic Analysis}
\newcommand*\samethanks[1][\value{footnote}]{\footnotemark[#1]}
\author[1]{Yan Shuo Tan}
\author[2]{Omer Ronen\thanks{Equal contribution, alphabetical ordering}}
\author[2]{Theo Saarinen\samethanks}
\author[2,3,4]{Bin Yu}
\affil[1]{Department of Statistics and Data Science, National University of Singapore}
\affil[2]{Department of Statistics, UC Berkeley}
\affil[3]{Department of Electrical Engineering and Computer Sciences, UC Berkeley}
\affil[4]{Center for Computational Biology, UC Berkeley}
\date{\today}
\begin{document}

\maketitle

\begin{abstract}

Bayesian Additive Regression Trees (BART) is a popular Bayesian non-parametric regression model that is commonly used in causal inference and beyond. 
Its strong predictive performance is supported by \change{well-developed estimation theory}, comprising guarantees that its posterior distribution concentrates around the true regression function at optimal rates under various data generative settings and for appropriate prior choices. 
\change{However, the computational properties of the widely-used BART sampler proposed by \citet{chipman2010bart} are yet to be well-understood.
In this paper, we perform an asymptotic analysis of a slightly modified version of the default BART sampler when fitted to data-generating processes with discrete covariates.
We show that the sampler's time to convergence, evaluated in terms of the hitting time of a high posterior density set, increases with the number of training samples, due to the multi-modal nature of the target posterior. 
On the other hand, we show that this trend can be dampened by simple changes, such as increasing the number of trees in the ensemble or raising the temperature of the sampler.
These results provide a nuanced picture on the computational efficiency of the BART sampler in the presence of large amounts of training data while suggesting strategies to improve the sampler.
We complement our theoretical analysis with a simulation study focusing on the default BART sampler. 
We observe that the increasing trend of convergence time against number training samples holds for the default BART sampler and is robust to changes in sampler initialization, number of burn-in iterations, feature selection prior, and discretization strategy.
On the other hand, increasing the number of trees or raising the temperature sharply dampens this trend, as indicated by our theory.}
% On the other hand,  that the BART sampler often converges slowly, confirming empirical observations by other researchers. Assuming discrete covariates, we show that, while the BART posterior concentrates on a set comprising all optimal tree structures (smallest bias and complexity), the Markov chain’s hitting time for this set increases with $n$ (training sample size), under several common data generative settings. As $n$ increases, the approximate BART posterior thus becomes increasingly different from the exact posterior (for the same number of MCMC samples), contrasting with earlier concentration results on the exact posterior. This contrast is highlighted by our simulations showing worsening frequentist undercoverage for approximate posterior intervals and a growing ratio between the MSE of the approximate posterior and that obtainable by artificially improving convergence via averaging multiple sampler chains. 
% Finally, based on our theoretical insights, possibilities are discussed to improve the BART sampler convergence performance.
% We support our theoretical results with simulations showing that the approximate posterior returned by BART can have significantly worse MSE and frequentist coverage for posterior intervals than the true posterior.
% In particular, posterior intervals returned by the algorithm should not be taken at face value.
% Finally, our results yield insights on why the BART sampler may converge slowly and suggest methodological improvements.
\end{abstract}

% \paragraph{Keywords:}
% BART, Bayesian CART, MCMC, mixing time, hitting time, regression trees, additive regression, 62G08, 65C40.

%%%%%%%%%%%%%%%%%%%%%%%%%%%%%%%%%%%%%%%%%%%%%%
%%%% Main text entry area:

\section{Introduction}
\label{sec:intro}

\subsection{The rise of BART}

Decision tree models such as CART \citep{breiman1984classification} are piecewise constant regression models obtained by recursively partitioning the covariate space along coordinate axes.
They and their ensembles such as Random Forests (RFs) \citep{breiman2001random} and Gradient Boosted Trees (GBTs) \citep{friedman2001greedy,chen2016xgboost} have proved to be enormously successful because of their strong predictive performance \citep{caruana2006empirical,caruana2008empirical, fernandez2014we}.
Indeed, RFs and GBTs regularly outperform even deep learning on medium-sized tabular datasets \citep{grinsztajn2022tree}.
Nonetheless, these tree-based methods still suffer from several notable problems:
They are defined via algorithms rather than via statistical models, so it is often difficult to quantify the uncertainty of their predictions; they use greedy splitting criteria, so there is no guarantee for the optimality of the fitted model; RFs in particular grow their trees independently of each other, therefore making them statistically inefficient
% it suffers from representational inefficiencies 
when fitted to data with additive structure \citep{tan2021cautionary}.
% An alternative approach is to construct the decision trees in a Bayesian manner \citep{denison1998bayesian,chipman1998bayesian,wu2007bayesian}.

To address these issues, \citet{chipman1998bayesian} proposed a Bayesian adaptation of CART (BCART) and later an ensemble of Bayesian CART trees, which they called Bayesian Additive Regression Trees (BART).
These are Bayesian non-parametric regression models, which put a prior on the space of regression functions, assume a likelihood for the observed data, and combine these to obtain a posterior.
In the case of Bayesian CART and BART, priors and posteriors are supported on the subspace of functions that can be realized by decision trees (or their ensembles).
Similar to Gaussian process (GP) regression, the posterior distribution can be used to provide posterior predictive credible intervals.
% and other forms of uncertainty quantification.
On the other hand, unlike GP regression, there is no closed form formula for the BART posterior and one has to sample from it approximately via a Markov chain Monte Carlo (MCMC) algorithm.
% We may therefore view BART MCMC via a different lens as a non-greedy stochastic \emph{algorithm}, whose randomness allows the algorithm to explore the space of possible decision trees in ways the CART algorithm cannot.
% The Bayesian formulation also allows for a more principled approach to regularization via prior specification.

BART has been shown empirically to enjoy strong predictive performance that is sometimes even superior to that of RFs and GBTs, especially after hyperparameter optimization \citep{hill2020bayesian}.
Naturally, it has become increasingly popular in diverse fields ranging from the social sciences \citep{green2010modeling,yeager2019national} to biostatistics \citep{wendling2018comparing,starling2020bart} and has been particularly enthused causal inference researchers \citep{hill2011bayesian,green2012modeling,kern2016assessing,dorie2019automated,hahn2019atlantic}.
Extending and improving BART methodology remains a highly active area of research, with many variants of the algorithm proposed over the last few years (see for instance \cite{linero2018bayesianJASA,pratola2016efficient,pratola2020heteroscedastic,luo2023sharded}, as well as the survey \cite{hill2020bayesian} and the references therein.) 
%\citep{hill2011bayesian,sparapani2016nonparametric,linero2018bayesianJASA,linero2018bayesian,hahn2020bayesian,pratola2020heteroscedastic,murray2021log,he2021stochastic}.

% Given this popularity, it is perhaps unsurprising that 
The strong predictive performance of BART is supported by a burgeoning body of theoretical evidence regarding the BART posterior.
% Researchers have begun to study BART theoretically from a frequentist perspective, with the vast majority of work focusing on inferential properties of the BART posterior.
Most significantly, researchers have shown that the BART posterior concentrates around the true regression function used to generate the response data as $n$, the number of training samples, increases, with this concentration occuring at optimal rates under various assumptions on the smoothness and sparsity of the regression function and for appropriate prior choices \citep{rovckova2019theory,rovckova2020posterior,linero2018bayesian,jeong2020art,rockova2021ideal,castillo_uncertainty_2021}.
Achieving these optimal rates does not require any oracle knowledge or hyperparameter tuning---instead BART automatically adapts to the level of smoothness and sparsity, with the former even happening at a local level \citep{rockova2021ideal}.

\subsection{Observed poor mixing of BART and its significance}

While there is evidence that the BART \emph{posterior} enjoys favorable properties under a variety of settings, the fact that we can only sample approximately from the posterior via MCMC means that understanding the performance of the BART \emph{algorithm} also entails understanding its sampler's dynamics.
% creates a gap in our understanding of how and why BART works.
% in the literature.
Specifically, if the sampler chain does not converge efficiently to the posterior distribution, that is, if it does not mix well, the output of BART algorithm may not enjoy the same desirable inferential properties as the BART posterior.
% unless we can guarantee that the Markov chain it uses converges efficiently to its stationary distribution.
% In other words, understanding the \emph{mixing properties} of the BART sampler is of utmost practical importance.
Most popular BART implementations use remarkably similar samplers based on the original design of \cite{chipman2010bart}, which uses a Bayesian backfitting approach to update one tree at a time via proposed local changes to the tree nodes coupled with a Metropolis-Hastings filter.
Unfortunately, as described by \cite{hill2020bayesian}, ``while this algorithm is often effective, it does not always mix well.''
Indeed, poor mixing for this sampler has been empirically documented by multiple sources \citep{chipman1998bayesian,Carnegie2019bartmixing}.

The literature contains various suggestions on how to improve the computational efficiency of the BART sampler.
These include parallelization \citep{pratola_parallel_2014}, modifying the MCMC proposal moves \citep{wu2007bayesian,pratola2016efficient,kim2023mixing}, warm starts from greedily constructed tree ensembles \citep{he2021stochastic}, running multiple chains \citep{Carnegie2019bartmixing}, or using a particle Gibbs variant \citep{lakshminarayanan2015particle}.
Despite this interest, there has been minimal theoretical work done to quantify the mixing time and to understand why and under what settings slow mixing occurs.
% Such work would be fundamental to understanding what has now become a cornerstone algorithm of modern data science and may even lead to powerful new insights on how to improve the BART sampler.

% Poor mixing for BART is problematic for several reasons:
% First, credible intervals obtained from the MCMC samples will not actually reflect the posterior, and are therefore of questionable meaning for inference.
% Next, it means that different runs of the algorithm may produce somewhat different results, thereby lacking stability and reproducibility.
% Finally, a necessary condition for the recent results on posterior concentration and model selection consistency, is the ability to sample from the posterior distribution. 
% When the BART MCMC algorithm is slow to mix, it is difficult to satisfy this condition in practice.

% To overcome this, there have been several algorithmic suggestions including modifying the MCMC proposal moves \citep{pratola2016efficient,wu2007bayesian}, initializing the trees in the algorithm from greedily constructed trees as a "warm start" \citep{he2021stochastic}, and running multiple chains to quantify the uncertainty in the predictions \citep{dorie2019automated,Carnegie2019bartmixing}.
% Despite the prevalence of works mentioning the mixing of the BART MCMC algorithm, little theoretical work has been done to understand why the mixing is slow or quantify how slow it is. Therefore leaving a substantial gap between practice and theory of BART.

\subsection{BART MCMC and prior theoretical work} \label{subsec:prior_work}

A regression tree is parameterized via its \emph{tree structure} $\tree$ (which features are split on and at which thresholds) and its leaf parameters $\bmu$ (the function value on each leaf).
BART combines these parameters over multiple trees to parameterize a tree ensemble (see Section \ref{subsec:bayesian_model}.).
The BART sampler is a Metropolis-within-Gibbs sampler that, at each iteration, cycles through all trees in the ensemble.
For each tree $(\tree,\bmu)$, it conditions on the parameters of all other trees in the ensemble and proposes a local change to $\tree$, whose acceptance is controlled by a Metropolis-Hastings filter.
$\bmu$ can then be sampled in closed form due to conditional conjugacy (see Section \ref{subsec:inpractice_sampler}).

% Since the leaf parameters can be sampled in closed form conditionally on tree structures, the original BART sampler, under a slight modification, can be thought of as a Markov chain on the space of tree structures.
The non-standard state space makes this Markov chain difficult to study using standard techniques in Markov chain theory (see Section \ref{subsec:analyzed_sampler}).
As such, prior theoretical work \citep{kim2023mixing,ronen2022mixing} has only studied the sampler for Bayesian CART, which is the restriction to BART to a single tree, and which case, the sampler reduces to a Markov chain on a discrete space of tree structures.
% The first and only works that have sought to theoretically analyze mixing times for BART are \cite{kim2023mixing} and our prior work \cite{ronen2022mixing}.
% \cite{kim2023mixing} and our prior work \citep{ronen2022mixing} sought to analyze mixing times for the Bayesian CART sampler. 

Both \cite{kim2023mixing} and our prior work \cite{ronen2022mixing} made the surprising discovery that the mixing time of this chain can grow exponentially in the training sample size.\footnote{For this result, both works used a restricted move set which only allows growing new leaves and pruning existing ones.}
% , when the only allowed moves are growing new leaves and pruning existing ones.
\cite{kim2023mixing} studied this in a one-dimensional setting and further showed that, with a more aggressive move set, Bayesian CART constrained to dyadic splits has a mixing time upper bound that is linear in the sample size.
\cite{ronen2022mixing}'s result applied to a multi-dimensional setting.
Their proof strategy was to show that this Markov chain has a bottleneck state---the trivial tree comprising a single node.
This bottleneck arises because when Bayesian CART makes a wrong first split followed by other informative splits, the only way to reverse the wrong first split involves pruning the informative splits, which becomes increasingly difficult as the training sample size increases.

When we submitted our prior work to a peer review, some reviewers rightly pointed out that our mixing time lower bounds may be misleading to practitioners.
This is because different tree structures could realize the same partition of the covariate space and hence implement the same regression function.
In other words, the Bayesian CART model is not identifiable at the level of tree structures, which means that failure to mix in this space of tree structures may be benign \citep{redner1984mixture}.
It does not reflect a failure to mix at the level of regression functions, let alone any degradation in the inferential properties of the algorithm's output. 
% In \cref{fig:non_idn} we present a visual representation of this phenomenon, with two different tree structure that realize that exact same function.
\begin{figure}[b]
    \centering
    \includegraphics[scale=0.55]{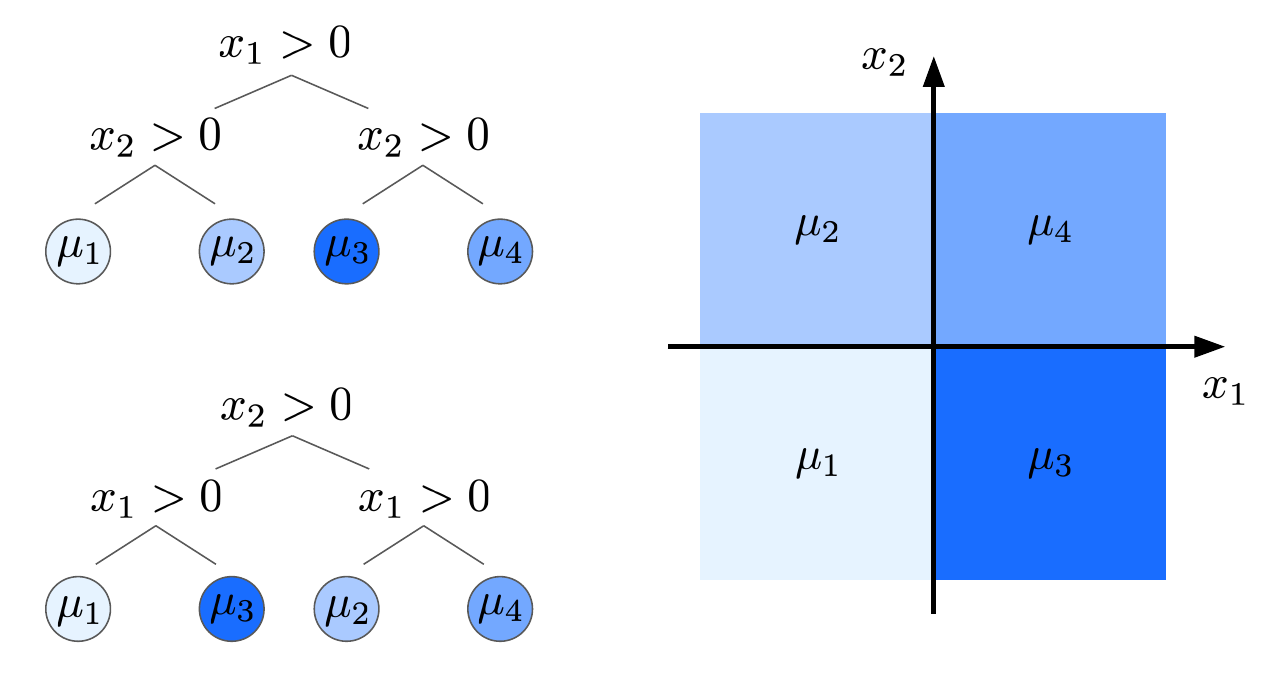}
    \caption{The Bayesian CART model is not identifiable at the level of tree structures. 
    The two tree structures shown on the left both realize the same partition of the covariate space and even the same regression function, which is shown on the right.}
    \label{fig:non_idn}
\end{figure}
A further limitation of our prior analysis is that mixing time is defined as the worst case time to convergence as we vary over all possible initial states.
Since Bayesian CART is always initialized at the trivial tree, a lower bound on this worst case quantity may be unreasonably pessimistic.

% These issues with relevance are resolved by considering hitting times.
\subsection{\change{Main contributions}}

\change{The main contributions of our paper are as follows.
\begin{itemize}
    \item \emph{Developing a theoretical framework to study BART MCMC.} 
    % In this paper, we develop a theoretical machinery to study BART MCMC.
    We modify the BART sampler slightly by replacing the conditional (on other trees) likelihood formula in the Metropolis-Hastings filter with a full marginal likelihood formula (see Section \ref{subsec:analyzed_sampler}).
    This marginalizes out the leaf parameters, which allows us to interpret the sampler as a Markov chain on a finite state space comprising tree structure ensembles, thereby enabling tractable analysis.
    % We verify numerically that these changes do not substantially affect the mixing or inferential properties of the BART algorithm.
    We perform an asymptotic analysis of this modified sampler, in the vein of prior work on posterior contraction for BART and mixing rates for Bayesian CART.
    This means that we assume a frequentist generative model\footnote{This model can and will be different from the BART parameterization.} for the training data used for fitting BART and analyze the scaling of the computational performance of BART as $n$, the amount of training data, increases toward infinity.
    To ensure that the state space remains invariant to $n$, we assume discrete features.
    Under this framing, we show that the BART posterior concentrates on a set, $\text{OPT}$, comprising all optimal tree structure ensembles (i.e. those with the smallest bias and complexity) and which also forms a highest posterior density region (HPDR).
    As such, the hitting time of this set from the BART initialization captures the computational efficiency of the sampler in a meaningful sense, avoiding the issues with non-identifiability of tree parameters discussed in Section \ref{subsec:prior_work}.
    \item \emph{Hitting time lower bounds.} When the true generative model is additive and its number of additive components is larger than or equal to the number of trees in the fitted BART model, we prove that the hitting time of $\text{OPT}$ grows at at least the rate $n^{1/2}$.
    Although this lower bound is relatively mild, it is still of practical interest because the current literature does not give guidance on tuning the number of MCMC iterations according to $n$.
    If default values are used for the number of iterations, as is current practice, our result suggests that the posterior approximation of the BART sampler may deviate from the true posterior at large $n$, despite posterior contraction, potentially resulting in undesirable properties such as poor calibration of credible intervals.
    We also prove other supplementary results comprising hitting time lower bounds for BART under other data generative and algorithmic settings, all of which grow with $n$.
    These lower bounds can be explained by the fact that under these scenarios, the posterior distribution on the sampling space is ``multimodal'' and that the difficulty of traversing between modes increases with $n$.
    \item \emph{Mixing time upper bounds.} In the case of an additive generative model, we formulate three possible modifications to the BART model or sampler to dampen its multimodality, thereby addressing the problem we identified earlier.
    These are (i) increasing the number of trees in the ensemble, (ii) making more global changes to the tree structures before applying the Metropolis-Hastings filter, (iii) using a tempered posterior distribution, whereby the marginal likelihood on tree structures is raised to a $1/T$ power.
    We call $T \geq 1$ the temperature of the sampler.
    Under each modification, we prove an upper bound for the mixing time of the resulting Markov chain that is either constant in $n$ (in the case of (i) and (ii)) or grows more slowly than any power of $n$ (in the case of (iii)).
    Meanwhile, these modifications do not affect the HDPR set, which means that their target posterior has essentially the same inferential properties as the original BART model.
\end{itemize}}

\change{In summary, our theoretical results provide a nuanced picture on the computational efficiency of the BART sampler in the presence of large amounts of training data.
On the one hand, the default sampler when used naively can sometimes suffer from poor mixing.
On the other hand, there are simple modifications to the sampler that partially address this issue.
To reiterate, these results focus on a particular aspect of the sampler's computational efficiency---its dependence on $n$---while ignoring dependence on other data-generating and algorithmic parameters.
In addition to the limited scope of our theoretical analysis, its connection to how BART as used in practice is mediated by several factors, including our modifications to the sampler as well as restrictions of the data generative model to the somewhat unrealistic settings of additive regression functions and discrete features.
We will discuss these limitations further in Section \ref{sec:limitations}.
Consequently, our theoretical results should not be seen as a definitive assessment of the sampler's quality, but rather as a starting point for theoretical analysis and a guide for its potential improvement.
Nonetheless, to bridge some of these limitations and thereby strengthen the practical significance of our results, we further contribute:}

\change{\begin{itemize}
    \item \emph{A simulation study using the original BART sampler.}
    Using the Gelman-Rubin statistic $\hat R$,\footnote{Since $\hat R$ is defined on scalar Markov chains, we use root mean squared error (RMSE) on a held-out test set to summarize the BART ensemble at each iteration of the sampler.} we evaluate the mixing performance of the in-practice BART sampler with default hyperparameter choices when applied to six real-world and synthetic datasets, for a fixed number of sampler iterations.
    We find that $\hat R$ has an increasing trend with $n$, indicating that mixing performance degrades as $n$ increases.
    % We measure this performance  computed using root mean squared error (RMSE) on a held-out test set.
    We also find that this trend can be dampened by either increasing the temperature or by increasing the number of trees, with both strategies also often yielding improvements to the coverage of credible intervals and sometimes even root mean squared error (RMSE) on a held-out test set.
    % In order to bridge the gap between our theoretical analysis and BART as used in practice, we perform an extensive simulation study using the original BART sampler.
    Both these findings support the narrative suggested by our theoretical lower and upper bounds.
    On the other hand, the increasing trend for $\hat R$ seems to be either robust to or even exacerbated by changing the sampler initialization, number of burn-in iterations, feature selection prior, or discretization strategy away from the default. 
    Finally, we find that the modifications we make to the sampler for the sake of theoretical analysis---limiting the proposal moves or using the full marginal posterior formula in the Metropolis-Hastings filter---do not substantially affect either $\hat R$ or the predictive properties of the MCMC approximate posterior.
    % Across a range of data-generative models and algorithmic choices (initializations, number of trees, priors, temperatures, and burn-in periods), we used the Gelman-Rubin statistic \citep{gelman1992inference} to evaluate how the mixing quality of the MCMC posterior samples varies according to $n$, finding that it indeed deteriorates with $n$, as suggested by our lower bounds.
    % On the other hand, increasing the temperature or the number of trees helps to dampen this trend, as suggested by our upper bounds.
    % We performed a similar analysis for the inferential quality of the samples by computing the ratio of the test set mean squared error (and coverage) obtained from a single chain with that obtainable from multiple independent chains while maintaining the same total budget of MCMC samples.
\end{itemize}
}

\section{Data generation models for BART and for frequentist analysis}
\label{sec:methods}

\subsection{Generative model for frequentist analysis}
\label{subsec:generative_model}

We assume a $d$-dimensional discrete covariate space $\xspace = \braces{1,2,\ldots,\ncats}^d$ and a regression function $f^*\colon\xspace \to \R$.
Suppose that we observe a training data set $\data$ comprising $n$ independent and identically distributed tuples $(\bx_i,y_i)$ with $\bx_i \sim \nu$ and $y_i = f^*(\bx_i) + \epsilon_i$ for $i=1,2\ldots,n$.
Here, $\nu$ is a measure on $\xspace$ with full support, while $\epsilon$ is a sub-Gaussian random variable.
For notational convenience, we will use $\bX$ to denote the $n \times d$ matrix formed by stacking the covariate vectors as rows.
We will also use $\by$, $\bf^*$, and $\beps$ to denote the $n$-dimensional vectors formed by stacking the responses $y_i$, the function values $f^*(\bx_i)$ and the noise components $\epsilon_i$ respectively.
Similarly, we will denote all vectors and matrices with boldface notation, with subscripts referencing the index of the vector.
Vector coordinates will be denoted using regular font.
We will denote probabilities and expectations with respect to $\data$ using $\P_n$ and $\E_n$ respectively.

\subsection{Bayesian model specification for BART}
\label{subsec:bayesian_model}

We first describe the version of BART that we analyze in our paper, before discussing its differences with the version described by \cite{chipman2010bart} and which is still most commonly used in practice.

\subsubsection{Regression trees}
A binary axis-aligned regression tree is parameterized by a tuple $(\tree,\leafparamvec)$.
Here, $\tree$ refers to the \emph{tree structure}, which specifies the topology of the tree as a rooted binary tree planar graph and, given an ordering of the graph's vertices (e.g. via breadth-first search), specifies the splitting rule for each internal node $j$.
Note that the splitting rule comprises a feature $v_j$ and a threshold $t_j$.
The leaves $\leaf_1,\leaf_2,\ldots,\leaf_\nleaves$~ of $\tree$ thus correspond to rectangular regions of the covariate space that together form a partition of the space.
We let $\leafparamvec \in \R^\nleaves$ be a vector of \emph{leaf parameters}, one for each leaf of $\tree$.
Together, $(\tree,\leafparamvec)$ specify a piecewise constant function $g$ that outputs
\begin{equation} \nonumber
    g(\bx;\tree,\leafparamvec) = \leafparam_{l(\bx)},
\end{equation}
where $l(\bx)$ is the index of the leaf containing $\bx$.

\subsubsection{Sum-of-trees model}
Given observed data $\data$, the BART model posits $y_i = f(\bx_i) + e_i$ for $i=1,2\ldots,n$, where $e_1,e_2,\ldots,e_n \sim_{i.i.d.} \mathcal{N}(0,\sigma^2)$, and $f$ is a sum of the outputs of $\ntrees$ trees:
\begin{equation} \nonumber
    f(\bx) = g(\bx;\tree_1,\leafparamvec_1) + g(\bx;\tree_2,\leafparamvec_2) + \cdots + g(\bx;\tree_\ntrees,\leafparamvec_\ntrees).
\end{equation}
We denote the ordered tuple $(\tree_1,\tree_2,\ldots,\tree_\ntrees)$ by $\tse$ and call it a \emph{tree structure ensemble} (TSE).
We shall abuse notation and use $\leafparamvec$ to refer to the concatenation of $\leafparamvec_1,\leafparamvec_2,\ldots,\leafparamvec_\ntrees$.
Note that, when conditioned on $\tse$ and $\bX$, this is just a Bayesian linear regression model.
To see this, let $\bPsi$ denote the $n \times b$ matrix whose columns are the indicator vectors over the training set of each leaf in $\tse$.
We then have
\begin{equation} \label{eq:Bayesian_lin_reg}
    \by = \bPsi\leafparamvec + \mathbf{e}.
\end{equation}

\subsubsection{Priors}
We assume a fixed prior distribution $p$ on $\tsespace$, the space of TSEs with $m$ trees.\footnote{Because the emphasis in our theory is on the dependence of hitting times on training sample size in large samples, the specific form of the prior holds no bearing on our results}
Conditioned on a TSE $\tse$, the conditional prior distribution on the leaf parameters is an isotropic Gaussian, i.e. $p(\leafparamvec|\tse) \sim \mathcal{N}(0,(\sigma^2/\lambda) \bI_b)$, where $b$ is the total number of leaves in all trees in $\tse$.
Both $\sigma^2$ and $\lambda$ are assumed to be fixed hyperparameters, with $\sigma^2$ taking the same value as that used in the variance of the additive noise $e_i$, while $\lambda$ is a modulation parameter that should be set to approximately the reciprocal of the signal-to-noise ratio.

\subsubsection{Differences with in-practice BART}
\label{subsec:differences_bart}
\cite{chipman1998bayesian} proposed a prior on tree structures defined in terms of a stochastic process.
Starting from a single root node, the process recursively splits each node at depth $d$ with probability $\alpha(1+d)^{-\beta}$, where $\alpha$ and $\beta$ are hyperparameters with default values $\alpha=0.95$ and $\beta=2$ respectively.
Features and thresholds for splits are selected uniformly at random.
\cite{chipman2010bart} extended this to a prior on TSEs by independence, i.e. $p(\tse) = \prod_{j=1}^m p(\tree_j)$.
After rescaling the response variable to lie between $-0.5$ and $0.5$, the leaf parameter standard deviation is set to be $\sigma_\mu = 0.5/k\sqrt{m}$, where $k$ is a further hyperparameter with default value $k=2$.
Finally, an inverse-$\chi^2$ hyperprior is placed on the noise variance $\sigma^2$ and is calibrated to the observed data.
This last assumption is the only way in which the BART model we study in this paper departs from that in \cite{chipman2010bart}. 
We make this change for analytical tractability and believe it to be minor, since simulations show that the posterior on $\sigma^2$ quickly converges to a fixed value and our theoretical guarantees hold for any fixed choice of $\sigma^2$.

\section{Sampling from BART via MCMC}

\subsection{The in-practice BART sampler}
\label{subsec:inpractice_sampler}

The sampler proposed by \cite{chipman2010bart} can be described as a ``Metropolis-within-Gibbs MCMC sampler'' \citep{hill2020bayesian}.
More precisely, for each outer loop of the algorithm, it iterates over the tree indices $j=1,2,\ldots,m$ and updates the $j$-th pair $(\tree_j,\leafparamvec_j)$ using an approximate draw from the conditional distribution $p(\tree_j,\leafparamvec_j|\tse_{-j},\leafparamvec_{-j},\by,\sigma^2)$, where $\tse_{-j}$ and $\leafparamvec_{-j}$ refer to the concantenation of current tree structures and leaf parameter vectors respectively, each with the $j$-th index omitted.\footnote{Note that the conditional distribution is of course also conditional on the observed covariate data $\bX$.
However, since this is always conditioned upon, we omit it from our notation to avoid clutter}
As a final step in the loop, it updates $\sigma^2$ using a draw from its full conditional distribution $p(\sigma^2|\tse,\leafparamvec,\by)$.

To describe how to sample (approximately) from $p(\tree_j,\leafparamvec_j|\tse_{-j},\leafparamvec_{-j},\by,\sigma^2)$, we first describe \cite{chipman1998bayesian}'s algorithm for Bayesian CART, i.e. when the ensemble comprises a single tree.
In this case, we first factorize the posterior into a conditional posterior on leaf parameters and a marginal posterior on tree structures:\footnote{\cite{chipman1998bayesian}'s formulation of the Bayesian CART sampler marginalizes out $\sigma^2$ instead of conditioning on it. As this is no longer done for BART, we omit discussing it to avoid confusing readers}
\begin{equation} \label{eq:bart_practice_factorization}
    p(\tree,\leafparamvec|\by,\sigma^2) = p(\leafparamvec|\tree,\by,\sigma^2)p(\tree|\by,\sigma^2).
\end{equation}
The first multiplicand on the right,
$p(\bmu|\tree,\by,\sigma^2)$, is a multivariate Gaussian (with diagonal covariance) and can be sampled from directly.
The second multiplicand is proportional to $p(\by|\tree,\sigma^2)p(\tree)$, which is the product between a marginal likelihood of a Bayesian linear regression model and the prior.
The marginal likelihood can be computed using standard techniques, but the posterior cannot be sampled directly.
As such, a Metropolis-Hastings sampler is used with the following types of proposed moves:
\begin{longlist}
    \item Pick a leaf in the tree and split it (grow);
    \item Pick two adjacent leaves and collapse them back into a single leaf (prune);
    \item Pick an interior node and change the splitting rule (change);
    \item Pick a pair of parent-child nodes that are both internal and swap their splitting rules, unless both children of the parent have the same splitting rules, in which case, swap the splitting rule of the parent with that of both children (swap).
\end{longlist}
Note that all selections in these proposed moves (of nodes, splitting rules, etc.) are made uniformly at random from all available choices.\footnote{Splits that result in empty leaves are not allowed.}
The proposed move types are chosen with probabilities $\pi_g$, $\pi_p$, $\pi_c$, and $\pi_s$ respectively.
Let $Q(-,-)$ denote the transition kernel of the proposal, i.e. $Q(\tree,\tree^*)$ is the probability of tree structure $\tree^*$ being proposed given current tree structure $\tree$.
With $\tree$ and $\tree^*$ thus defined, the Metropolis-Hastings algorithm accepts the proposal with probability
\begin{equation} \nonumber
    \alpha(\tree,\tree^*) \coloneqq \min\braces*{\frac{Q(\tree^*,\tree)p(\tree^*|\by,\sigma^2)}{Q(\tree,\tree^*)p(\tree|\by,\sigma^2)}, 1}.
\end{equation}

A simple reparameterization trick is used to adapt this sampler to the case when the ensemble has multiple trees.
Because of the independence of the priors on different trees and the Gaussian likelihood, the conditional posterior $p(\tree_j,\leafparamvec_j|\tse_{-j},\leafparamvec_{-j},\sigma^2,\by)$ can be rewritten in terms of the residual vector
\begin{equation} \nonumber
    \br_{-j} \coloneqq \by - \sum_{k\neq j} g(\bX;\tree_k,\leafparamvec_k).
\end{equation}
Specifically, we have
\begin{equation} \nonumber
    p(\tree_j,\leafparamvec_j|\tse_{-j},\leafparamvec_{-j},\sigma^2,\by) = p(\tree_j,\leafparamvec_j|\br_{-j},\sigma^2),
\end{equation}
where the right-hand side is the single-tree posterior.
A single Metropolis-Hastings update step as described above is performed to draw an approximate sample $(\tree_j,\leafparamvec_j)$ from the conditional posterior.

\subsection{The analyzed BART sampler}
\label{subsec:analyzed_sampler}
The BART sampler described above is difficult to analyze because the deterministic Gibbs outer loop makes it a time-varying Markov chain.
More significantly, it is convenient in analyzing Bayesian CART to collapse the Markov chain state space by marginalizing out the leaf parameters.
The collapsed state space is simply the space of tree structures, which is discrete and finite.
However, we are unable to do this for BART in general because of the conditioning on the residuals from other trees in the inner loop.
Both of these difficulties make it impossible to apply standard techniques in Markov chain theory.

To overcome this, we propose an adaptation of the sampler that brings it closer to Bayesian CART.
We imitate \eqref{eq:bart_practice_factorization} and factorize the posterior into a conditional posterior on leaf parameters and a marginal posterior on tree \emph{ensemble} structures:
\begin{equation} \nonumber
    p(\tse,\leafparamvec|\by) = p(\leafparamvec|\tse,\by)p(\tse|\by).
\end{equation}
The conditional posterior on leaf parameters is still a multivariate Gaussian and can be sampled from directly, while the marginal posterior on tree ensemble structures remains proportional to the product of the marginal likelihood of a Bayesian linear regression model and the prior: $p(\tse|\by) \propto p(\by|\tse)p(\tse)$ (see \eqref{eq:Bayesian_lin_reg}.)
To sample from this marginal posterior, we run Metropolis-Hastings MCMC similarly to before, except for two differences:
First, instead of cycling deterministically through the trees in an inner loop as before, we pick a tree index uniformly at random.
Second, while we propose an updated tree using the same transition kernel $Q(-,-)$, we write the acceptance probability in terms of the full marginal posterior instead of conditioning on the residuals from other trees.
For further clarity, the algorithm is summarized in pseudocode as Algorithm \ref{alg:method}.

\change{We believe that neither of the two modifications substantially changes the sampler's dynamics.
First, recent work has shown that deterministic and randomized versions of the same Gibbs sampler are essentially equivalent \citep{chlebicka2025solidarity}.
Second, our simulations provide some numerical evidence that altering the acceptance probability as described above does not worsen the convergence time, nor does it seem to systematically affect the prediction performance of the approximate posterior obtained from MCMC samples.
We further note that the relationship between the conditional and marginal posterior versions of the sampler is analogous to that between orthogonal matching pursuit and forward stepwise selection, which are iterative methods for sparse linear regression---at each iteration, the former selects the variable that best explains the variance in the residual, while the latter selects the variable that, together with the current active set, best explains the variance in the response---and neither of these is known to dominate the other in terms of prediction performance.
On the other hand, because the marginal posterior is more expensive to compute than the conditional posterior, the per-iteration cost of the analyzed sampler is much higher.}

% Just as neither of these sparse regression algorithms dominate each other in terms of predictive performance, we find empirically that both the original and analyzed BART samplers achieve similar predictive performance under the same number of iterations (see Section ??) \ys{Fill in}.

In the rest of this paper, we denote the transition kernel of the analyzed sampler using $P(-,-)$.
Furthermore, to avoid confusion with randomness arising from sampling the training set, we will denote all probabilities and expectations with respect to the algorithmic randomness using $P$ and $E$ respectively.

\begin{algorithm}[h]
  \caption{BART sampler}
  \label{alg:method}
  \small
\begin{algorithmic}[1]
  \State {\bfseries BART}(\var{$\data$}: data, \var{$m$}: no. of trees, \var{$\sigma^2$}: guess for noise variance, \var{$\lambda$}: guess for reciprocal SNR, \var{$\bpi$}: proposal probabilities, \var{$p_{TSE}$}: TSE prior, \var{$t_{max}$}: no. of sampler iterations)

  \State Initialize $\tree_1,\tree_2,\ldots,\tree_m$ as trivial trees.
  % \Do {\bfseries do}:
  \For{$t=1,2,\ldots,t_{max}$}
    \State Sample $k \sim \text{Unif}(\braces{1,2,\ldots,m})$.
    \State Propose $\tree^* \sim Q(\tree_k,\tree^*)$.
    \State Set $\alpha(\tree_k,\tree^*) = \min\braces*{\frac{Q(\tree^*,\tree_k)p(\tree_1,\ldots,\tree_{k-1},\tree^*,\tree_{k+1},\ldots,\tree_m|\by)}{Q(\tree_k,\tree^*)p(\tree_1,\ldots,\tree_{k-1},\tree_k,\tree_{k+1},\ldots,\tree_m|\by)},1}$.
    \State Set $\tree_k = \tree^*$ with probability $\alpha(\tree,\tree^*)$.
  \EndFor
\end{algorithmic}
\end{algorithm}

\section{BIC for BART and posterior concentration on optimal TSEs}
\label{sec:posterior_conc}

The goal of this section is to first show how to quantify the bias and complexity of a TSE separately and then jointly via BIC.
We will then show that, as a function of TSEs, the 
% marginal likelihood $p(\by|\tse)$, and hence the 
posterior probability $p(\tse|\by)$ concentrates on the set of TSEs with zero bias and the lowest possible complexity, and are therefore minimizers of BIC.
% Specifically, this set contains all TSEs 
As such, as argued in the introduction, the highest posterior density region contains all of the most desirable TSEs.
This implies that lower bounds on the hitting times of this region reflect computational drawbacks of practical consequence.

\subsection{Measuring bias and complexity for TSEs}

We first discuss how to quantify the bias and complexity of a TSE.

\subsubsection{Partitions}
A \emph{cell} $\cell$ is a rectangular region of $\xspace$, i.e.
\begin{equation} \nonumber
    \cell = \braces*{\bx \in \xspace ~\colon~ l_i < x_i \leq u_i ~\text{for}~i=1,\ldots,\nfeats},
\end{equation}
with lower and upper limits $l_i$ and $u_i$ respectively in coordinate $i$ for $i=1,2,\ldots,d$.
A \emph{partition} is a collection of disjoint cells $\cell_1,\ldots,\cell_\nleaves$ whose union is the whole space $\xspace$.
Every tree structure $\tree$ induces a partition $\partition$ via its leaves.
Not only is $\partition$ a sufficient statistic for $\tree$, it also completely characterizes the bias and complexity of the resulting data model conditioned on $\tree$.
Indeed, this data model is just Bayesian linear regression on the indicator functions on the leaves of $\tree$.
Since the functions are orthogonal, the degrees of freedom of the regression is equal to the size of the partition.
% As such, in the case of Bayesian CART, studying mixing or hitting times in terms of partitions resolves any issues with identifiability.

\subsubsection{Partition ensemble models (PEMs)} \label{subsec:pem}
A TSE $\tse$ induces an ensemble of partitions $\partition_1,\partition_2,\ldots,\partition_\ntrees$.
Indeed the data model conditioned on $\tse$ is still a Bayesian linear regression on the indicator functions of all leaves in $\tse$. 
However, these indicators are no longer orthogonal, which means that different ensembles, with possibly different numbers of leaves, can give rise to the same subspace of regressors, making $\tse$ not identifiable from data.
To avoid this issue, we define
% consider $L^2(\xspace^\nfeats,\xmeasure)$, the space of square-integrable functions on $\xspace$ with respect to $\xmeasure$.
% For a partition ensemble $\partition_1,\partition_2,\ldots,\partition_\ntrees$, c
$\pem \subset L^2(\xspace^\nfeats,\xmeasure)$ to be the subspace spanned by indicators of the cells in $\partition_j$ for $j=1,2,\ldots,\ntrees$.
We call this the \emph{partition ensemble model} (PEM) associated to the TSE $\tse$ and indicate this association via the mapping $\pem = \mapone(\tse)$.

\subsubsection{Measuring bias}
Let $\Pi_\tse$ denote orthogonal projection onto $\mapone(\tse)$ in $L^2(\xmeasure)$.
We define the squared (mangitude of the) bias of $\tse$ with respect to a regression function $f$ as
\begin{equation} \label{eq:bias}
    \bias(\tse;f) \coloneqq \int (f-\Pi_\tse[f])^2 d\nu.
\end{equation}
% In a frequentist setting, the squared bias of $\pem$ can be quantified as
This is precisely the squared bias of Bayesian linear regression on $\tse$ if we ignore the regularization effect from the leaf parameter priors, which is inconsequential in large sample sizes.

\subsubsection{Measuring complexity}
% Recall that $\tsespace$ denotes the space of TSEs with $m$ trees, and let $\pemspace$ denote the space of PEMs.
% Let $\mapone\colon \tsespace \to \pemspace$ denote the map that takes a TSE to its associated PEM as defined above.
When conditioned on a TSE $\tse$ and ignoring regularization from leaf parameter priors, the degrees of freedom of the resulting Bayesian linear regression model is just the dimension of $\mapone(\tse)$.
We denote this by $\df(\tse)$ and use it as a measure of complexity of $\tse$.
Note that this definition does not depend on the covariate distribution $\nu$ (see Lemma \ref{lem:invariance_of_dim} in the appendix.)
% \vspace{0.2in}

\subsubsection{Function dimension and optimal sets}
Excessive complexity leads to overfitting and is hence undesirable.
To quantify the excess, we first define the \emph{$m$-ensemble dimension} of a regression function $f$ as
\begin{equation} \label{eq:dimension_function}
    \dim_m(f) \coloneqq \min\braces*{\df(\tse) \colon f \in \mapone(\tse) ~\text{and}~ \tse \in \tsespace}.
\end{equation}
% We also refer to $\mapone(\tsespace)$, the set of PEMs realizable by TSEs with $m$ trees, as the set of $m$-PEMs.
In large sample sizes $n$, which is the setting we are concerned with, the TSEs that result in the smallest MSE must be bias-free.
We hence define the set of optimal TSEs in $\tsespace$ to be the minimizers of \eqref{eq:dimension_function}.
More generally, we define a series of nested sets with increasing levels of suboptimality tolerance via:
\begin{equation} \nonumber
    \opt(f,k) \coloneqq \braces*{\tse \in \tsespace \colon f \in \mapone(\tse) ~\text{and}~ \df(\tse) \leq \dim_m(f) + k}.
\end{equation}
% We also refer to $\mapone(\tsespace)$, the set of PEMs realizable by TSEs with $m$ trees, as the set of $m$-PEMs.
% In large training sample sizes $n$, which is the setting we are concerned with, the PEMs that result in the smallest MSE must be bias-free.
% We hence define the set of optimal $m$-PEMs to be the minimizers of \eqref{eq:dimension_function}.
% More generally, we define a series of nested sets with increasing levels of suboptimality tolerance via:
% \begin{equation}
%     \widetilde{\textnormal{OPT}}_m(f,k) \coloneqq \braces*{\pem \in \mapone(\tsespace) \colon f \in \pem ~\text{and} \dim(\pem) \leq \dim_m(f) + k}.
% \end{equation}

% \begin{definition}[Partition ensemble model]
%     Given a probability measure $\xmeasure$ on $\xspace^\nfeats$, consider $L^2(\xspace^\nfeats,\xmeasure)$, the space of square-integrable functions on $\xspace^\nfeats$ with respect to $\xmeasure$.
%     For a partition $\ntrees$-ensemble $\partition$, consider the subspace of $L^2(\xspace^\nfeats,\xmeasure)$ spanned by indicators of the cells in $\partition$.
%     We call this a \emph{partition $\ntrees$-ensemble model} and denote it by $\pem$.
%     Denote the space of partition $\ntrees$-ensemble models by $\modelspace^\ntrees$.
%     There is a natural map from the space of tree structures $\ntrees$-ensembles to $\modelspace^\ntrees$.
%     Denote this map by $\mapone$.
%     % The \emph{cardinality} of a $\pem$, denoted $\abs{\pem}$, is its dimension 
% \end{definition}

% A \emph{partition $\ntrees$-ensemble} is a union of $\ntrees$ partitions.

\subsection{BIC and BART posterior concentration}
\label{subsec:bic_concentration}

The Bayesian information criterion (BIC) \citep{schwarz1978estimating} of a TSE $\tse$ is given by
\begin{equation} \nonumber
    \bic(\tse) = \frac{\by^T\paren*{\bI - \bP_{\tse}}\by}{\sigma^2} + \df(\tse)\log n + \log\paren*{2\pi\sigma^2}n.
\end{equation}
Here, $\bP_{\tse}$ refers to projection onto $\mapone(\tse)$ with respect to the empirical norm $\norm{\cdot}_n$ (realized as a matrix.)\footnote{This can be explicitly defined as $\bP_{\tse} \coloneqq \bPhi\bPhi^\dagger$, where $\bPhi$ denotes the concantenation of the leaf indicator vectors with respect to $\data$.}
Ignoring the effect of the noise vector $\mathbb{e}$ for now, we see that the first term, divided by the sample size $n$, is an estimate for the squared bias.
Meanwhile, the second term directly measures the model complexity.
Hence, BIC quantifies the quality of a TSE by accounting for both bias and complexity.
Indeed, under our data generative model (Section \ref{subsec:generative_model}) we have the following concentration lemma:

\begin{proposition}[Concentration of BIC differences]
    \label{prop:concentration_bic_diff}
    Consider two TSEs $\tse$ and $\tse'$ and denote the difference in their BIC values as $\Delta \bic(\tse,\tse') \coloneqq \bic(\tse) - \bic(\tse')$.
    % Then for any $0 < \delta < 1/2$, with probability at least $1-\delta$ with respect to $\P_n$, 
    We have
    \begin{equation} 
    % \label{eq:bic_diff_eq1}
        \Delta \bic(\tse,\tse') = \frac{n}{\sigma^2}\paren*{\bias(\tse;f^*) - \bias(\tse';f^*)} + O_\P\paren*{n^{1/2}}
        % O_\P\paren*{\sqrt{n\log(1/\delta)} + \log(1/\delta)}.
    \end{equation}
    If furthermore, both TSEs have the same bias, i.e. $\Pi_{\tse}[f^*] = \Pi_{\tse'}[f^*]$, then we have
    \begin{equation} 
    % \label{eq:bic_diff_eq2}
        \Delta \bic(\tse,\tse') = \paren*{\df(\tse) - \df(\tse')}\log n + O_\P\paren*{1}.\footnote{Note that here and elsewhere in the paper, we use asymptotic notation to compare pairs of sequences of random variables $(a_n)$ and $(b_n)$ that are each indexed by $n$. 
        % We say that $a_n = O(b_n)$ (equivalently $b_n = \Omega(a_n)$) if $\limsup_{n \to \infty} a_n/b_n \leq C$ for some constant $C > 0$. 
        % When $(a_n)$ and $(b_n)$ are sequences of positive random variables, 
        We say that $a_n = O_{\mathbb{P}}(b_n)$ (equivalently $b_n = \Omega_{\mathbb{P}}(a_n)$) if, for every $0 < \delta < 1$, there exists $N, C > 0$ such that $\sup_{n > N}\P\braces*{\abs*{a_n/b_n} > C} < \delta$.
        % We say that $a_n = o(b_n)$ (respectively $a_n = \omega(b_n)$) if $\limsup_{n \to \infty} a_n/b_n = 0$ (respectively $\limsup_{n \to \infty} b_n/a_n = 0$). 
        With the exception of $n$, $C$ and $N$ will be allowed to depend on any other parameters of both the BART algorithm as well as the data to which it is fitted. }
        % , whose dependence will always be specified, such as in equations \eqref{eq:bic_diff_eq1} and \eqref{eq:bic_diff_eq2}.}
    \end{equation}
\end{proposition}
From this proposition, we also see that $\opt(f^*,k)$ for $k=0,1,2,\ldots$ are just sublevel sets of BIC when $n$ is large enough.
We next show that BIC is closely connected to the log marginal likelihood for TSEs as follows:

\begin{proposition}[Log marginal likelihood and BIC]
    \label{prop:lml_bic}
    Consider a TSE $\tse$.
    % Then for any $0 < \delta < 1$, there is a minimal sample size $N$ so that for all $n \geq N$, with probability at least $1-\delta$ with respect to $\P_n$, 
    The log marginal likelihood satisfies
    \begin{equation} \nonumber
        \log p(\by|\tse) = -\frac{\bic(\tse)}{2} +  O_\P\paren*{1}.
    \end{equation}
    Consequently, the log marginal posterior also satisfies
    \begin{equation} \nonumber
        \log p(\tse|\by) = -\frac{\bic(\tse)}{2} - \log p(\by) + O_\P\paren*{1}.
    \end{equation}
\end{proposition}

This almost linear relationship implies that $\opt(f,k)$ for $k=0,1,2,\ldots$ are also superlevel sets of the marginal posterior.
In other words, they form highest posterior density regions (HPDR). 
As advertised, these will be the target sets of our hitting time analysis.
Finally, combining the previous two propositions gives the following result on posterior concentration.

\begin{proposition}[BART posterior concentration] \label{prop:posterior_conc_treespace}
    \label{prop:posterior_conc}
    % For any $0 < \delta < 1$, there is a minimal sample size $N$ so that for all $n \geq N$, with probability at least $1 - \delta$ with respect to $\P_n$, 
    The marginal posterior distribution on $\tsespace$ satisfies
    \begin{equation} \nonumber
        1 - p(\opt(f^*,0))~|~ \by) = O_\P\paren*{n^{-1/2}}.
    \end{equation}
\end{proposition}

\section{Hitting time lower bounds for BART MCMC}
\label{sec:hitting_times}

% Our primary findings consist of lower bounds on HPDR hitting times for BART, explored across four distinct settings for BART and the data generating process (DGP).
% As discussed in the previous section, these regions also comprise sublevel sets of BIC.
% We first define hitting times in a general setting.

% \subsubsection{Hitting times}
Let $(X_t)$ be a discrete time Markov chain on a finite state space $\Omega$.
% , with stationary distribution $\pi$.
% Suppose the chain is initialized at a fixed state $X_0 = x$.
Let $\setone \subset \Omega$ be a subset.
The \emph{hitting time} of $\setone$ is defined as:
\begin{equation} \nonumber
    \tau_{\setone} \coloneqq \min\braces*{t\geq 0 \colon X_t \in \setone}.
\end{equation}
Note that this is a random variable and that it, in principle, depends on the initial state $X_0$.
In our analysis, the initial state is always chosen to be an ensemble of trivial trees and so will not be referenced in the notation to avoid clutter.
As stated in the introduction, the first set of main results in our paper concern hitting time lower bounds for the HDPR identified in Section \ref{sec:posterior_conc}.

\subsection{Lower bounds for additive models}
\label{subsec:lower_bound_additive}

Our first two lower bounds are for the setting where the data-generating process (DGP) is an additive model.

\begin{theorem}[Lower bounds for additive model] 
\label{thm:additive}
    Let $f^*$ be an additive function, i.e.
    \begin{equation} \label{eq:additive}
        f^*(\bx) = f_1(x_1) + f_2(x_2) + \cdots + f_{\ntrees'}(x_{\ntrees'})
    \end{equation}
    % where $f_j$ is nonconstant for each $j=1,2,\ldots,\ntrees'$, 
    with $2 \leq \ntrees' \leq \nfeats$ and $f_1,f_2,\ldots,f_{m'}$ non-constant.
    Suppose $x_1,x_2,\ldots,x_{\ntrees'}$ are independent.
    % Suppose that the covariate distribution is a product distribution, $\mu = \mu_1 \times \mu_2\cdots \times \mu_\nfeats$.
    Suppose $\ntrees \leq \ntrees'$, and we make arbitrary choices for the other BART hyperparameters $\sigma^2$, $\lambda$, $\bpi$, $p_{TSE}$.
    % Then for any $0 < \delta < 1/2$, there is a minimal sample size $N$ so that for all $n \geq N$, with probability at least $1-\delta$ with respect to $\P_n$, 
    The Markov chain induced by $\bart(\data,m,\sigma^2,\lambda,\bpi,p_{\tse},-)$ satisfies
    \begin{equation} \nonumber
        E\braces*{\tau_{\opt(f^*,(q_{\text{max}}-2)(q_{\text{min}} -2))}} = \Omega_{\P}\paren*{n^{1/2}},
    \end{equation}
    where $q_{\text{max}} = \max_{1\leq i\leq m'} \dim_1(f_i)$,  $q_{\text{min}} = \min_{1\leq i\leq m'} \dim_1(f_i)$.
    If furthermore, $m < m'$, $q_{\text{min}} < q_{\text{max}}$,
    % , $\max_{1\leq i\leq m'} \dim_1(f_i) > \min_{1 \leq i \leq m'} \dim_1(f_i)$,
    and we disallow ``change'' and ``swap'' moves, i.e. $\pi_c = \pi_s = 0$, then we have
    % \footnote{Recall that $\opt(f^*,0) \subset \opt(f^*,k)$ for $k \geq 0$. }
    % Since $\opt(f^*,k) = \emptyset$ for negative $k$, these statements can be interpreted as being meaningless unless $q_{\text{max}}$ and $q_{\text{min}}$ satisfy the relevant constraints.}
    \begin{equation} \nonumber
        E\braces*{\tau_{\opt(f^*,q_{\text{max}}- q_{\text{min}}-1)}} = \Omega_{\P}\paren*{n^{q_{\text{min}}/2-1}}.
    \end{equation}
\end{theorem}

Additive models are natural generalizations of linear models that have been widely studied in statistics and machine learning \citep{hastie1986generalized,hastie2009elements}.
When fitted to real world datasets, they often enjoy good prediction accuracy.
Hence, we view an additive generative model to be a natural class of functions for our study.

Furthermore, several works deriving consistency guarantees for frequentist greedy decision trees and random forests have used additive models as a generative function class, as the assumption of additivity helps to circumvent some of the practical and theoretical difficulties arising from greedy splitting \citep{scornet2015consistency,klusowski2024large}.
On the other hand, \cite{tan2021cautionary} showed generalization lower bounds for decision trees in this setting, with the recommendation that models with multiple trees fit additive models better than those comprising a single tree (see also \cite{tan2022fast} and Theorem 6.1 in \cite{rovckova2020posterior}.)
% In the discrete covariates setting, we can show a result of a similar flavor.
Indeed, given the assumptions of Theorem \ref{thm:additive}, we have $\dim_l(f^*) > \dim_{m'}(f^*)$ for any $l < m'$, while $\dim_l(f^*) = \dim_{m'}(f^*)$ for any $l \geq m'$ (see Proposition \ref{prop:additive_dim}.)
In other words, a minimum BIC value is achievable if and only if the number of trees in the BART model is larger than or equal to the number of components in the additive model.

However, at the critical threshold $m' = m$, even though the BART model is correctly specified and contains an efficient representation of $f^*$, Theorem \ref{thm:additive} tells us that the BART sampler may fail to converge efficiently if the number of training samples is large.
On the other hand, as we will subsequently show in Theorem \ref{thm:mixing_upper_bound_ntrees}, if the number of trees is much larger, the mixing time of the sampler does not grow in $n$.
% Indeed, the theorem is able to bound the degree of suboptimality of reachable TSEs by showing that they have an excess of at least $k$ degrees of freedom, as defined in the theorem statement.
Together, these results add another perspective to recent arguments that there may be \emph{computational} value in overparameterization (see for instance \cite{bartlett2020benign}.)
% Allowing for more trees than the number of additive components empowers the sampler with more freedom of navigation to avoid potential computational bottlenecks.
% We confirm this conjecture empirically in our simulations section.

% \begin{equation}
%     \dim_m(f) \begin{cases}
%         > \dim_m(f) & \text{if}~ l < m \\
%         = \dim_m(f) & \text{if}~ l \geq m.
%     \end{cases}
% \end{equation}

\subsection{Lower bound for pure interactions}
\label{subsec:lower_bound_interaction}

Our next lower bound is for the setting when the DGP has a pure interaction, which we define as follows.
Let $x_i$ and $x_j$ be two features, i.e. components of $\bx \sim \nu$.
We say that they form a \emph{pure interaction} with respect to a regression function $f^*$ if they are jointly dependent with the response $y$, but are separately conditionally independent of $y$ for any conditioning set of indices $I$ unless it includes the other's index.
Mathematically, we can write this as follows:
\begin{longlist}
    \item $(x_i,x_j) \not\indep y$;
    \item $x_i \indep y ~|~ x_I$ and $x_j \indep y ~|~ x_I$ for any $I \subset \braces*{1,2,\ldots,\nfeats}$ such that $i,j \notin I$.
\end{longlist}

A canonical example of a pure interaction is the exclusive-or (XOR) function over binary features with a uniform distribution, i.e. $f(\bx) = x_1x_2$ for $\xspace = \braces*{-1,1}^\nfeats$.
This function is well-known to be difficult to learn using either CART \citep{tan2024statistical} or neural networks \citep{abbe2022merged}.
As such, it is perhaps unsurprising that the BART sampler also experiences difficulties in this setting.

\begin{theorem}[Lower bound for pure interaction] 
\label{thm:pure_interaction}
    Let $f^*$ contain a pure interaction.
    Suppose we disallow ``change'' moves, i.e. $\pi_c = 0$, and we make arbitrary choices for all other BART hyperparameters $m$, $\sigma^2$, $\lambda$, $\pi_g$, $\pi_p$, $\pi_s$, $p_{TSE}$.
    % Then for any $0 < \delta < 1/2$, there is a minimal sample size $N$ so that for all $n \geq N$, with probability at least $1-\delta$ with respect to $\P_n$, 
    The Markov chain induced by $\bart(\data,m,\sigma^2,\lambda,\bpi,p_{\tse},-)$ satisfies
    \begin{equation} \nonumber
        E\braces*{\tau_{\opt(f^*,\infty)}} = \Omega_\P\paren*{n^{1/2}},
    \end{equation}
    where $\opt(f^*,\infty) \coloneqq \cup_{k=0}^\infty \opt(f^*,k)$.
\end{theorem}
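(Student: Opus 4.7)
My plan is a bottleneck argument on the marginal posterior over $\tsespace$. Partition $\tsespace$ into three disjoint pieces according to which of the pure-interaction features each ensemble uses:
\begin{align*}
A &= \{\tse : \text{no tree in } \tse \text{ splits on } x_i \text{ or on } x_j\}, \\
B &= \{\tse : \text{the features split on by } \tse \text{ include exactly one of } x_i, x_j\}, \\
C &= \{\tse : \tse \text{ splits on both } x_i \text{ and } x_j, \text{ possibly in different trees}\}.
\end{align*}
The BART chain starts at the all-trivial ensemble, which lies in $A$. I will show (i) $\opt(f^*,\infty) \subset C$; (ii) $B$ separates $A$ from $C$ in the BART move graph; and (iii) every one-step transition probability from a state in $A$ into $B$ is $O(n^{-1/2})$. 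Together these force $E[\tau_{\opt(f^*,\infty)}] \geq E[\tau_B] = \Omega(n^{1/2})$.

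For (i), fix any $\tse$ with no splits on $x_j$. Every leaf of every tree in $\tse$ is unrestricted in the $x_j$-coordinate, so every leaf indicator, and therefore every element of $\mapone(\tse)$, is constant in $x_j$. Combining $(x_i,x_j) \not\indep y$ with $x_j \indep y \mid x_{-ij}$ (a special case of the pure-interaction definition) shows that $f^*$ cannot be constant in $x_j$, so $f^* \notin \mapone(\tse)$ and $\bias(\tse;f^*) > 0$. The symmetric argument covers any $\tse$ with no splits on $x_i$, giving $\opt(f^*,\infty) \subset C$. For (ii), note that ``change'' is disabled, ``swap'' preserves the set of features split on, and ``prune'' can only shrink it; only a ``grow'' can enlarge it, and a single grow introduces at most one new feature, so no single move can go from $A$ directly to $C$.

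For (iii), fix neighbors $\tse \in A$ and $\tse' \in B$, where $\tse'$ is obtained from $\tse$ by a grow introducing $x_i$ (the $x_j$ case is symmetric). Then $\df(\tse') = \df(\tse) + 1$, and an iterated conditional expectation argument using the pure-interaction identity $E[y \mid x_i, x_K] = E[y \mid x_K]$ with $K$ the other split features of $\tse'$ (so $j \notin K$) yields $\Pi_{\tse'}[f^*] = \Pi_\tse[f^*]$, giving equal biases. Proposition~\ref{prop:concentration_bic_diff} then gives $\bic(\tse) - \bic(\tse') = -\log n + O(\log(1/\delta))$ on a $\P_n$-event of probability at least $1-\delta$, and Proposition~\ref{prop:lml_bic} converts this into $p(\tse'|\by)/p(\tse|\by) \leq c\,n^{-1/2}$. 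The proposal ratio $Q(\tse',\tse)/Q(\tse,\tse')$ is bounded by a constant depending only on the (finite, $n$-independent) local topology, so the Metropolis--Hastings acceptance probability is $O(n^{-1/2})$. Since the number of grow neighbors of any $\tse$ in $B$ is $O(1)$, summing over them and taking a union bound of size independent of $n$ over the finitely many source--target pairs gives a uniform per-step exit probability $O(n^{-1/2})$ from $A$ to $B$. Hence $\tau_B$ stochastically dominates a $\text{Geometric}(Cn^{-1/2})$ random variable, so $E[\tau_B] = \Omega(n^{1/2})$, and $\tau_{\opt(f^*,\infty)} \geq \tau_B$ finishes the proof.

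The main obstacle I anticipate is the bias equality $\bias(\tse;f^*) = \bias(\tse';f^*)$ in (iii). Turning the pure-interaction conditional independences into an exact equality of leafwise means requires controlling the conditional law of the other split features given the newly introduced split constraint; if features are dependent under $\nu$ this step is delicate, and one may need either a mild regularity assumption on $\nu$ or an argument that any residual bias gap is $o(1/\log n)$, still small enough that the $\log n$ complexity penalty in $\Delta\bic$ dominates and the $O(n^{-1/2})$ acceptance bound survives.
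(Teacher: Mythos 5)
Your proof takes a genuinely different route from the paper's. The paper constructs a specific maximal reachable $\tsebad$ under a reachability preorder $\succeq$, takes its equivalence class $\setone$ and outer boundary $\settwo$, and invokes the electrical-network/voltage machinery of Proposition~\ref{prop:recipe} to turn a hitting-precedence bound $P\{\tau_{\tsebad} < \tau_{\opt(f^*,\infty)}\} = \Omega(1)$ plus a BIC-barrier bound $\min_{\tse\in\settwo}\Delta\bic(\tse,\tsebad) \geq \log n - O(1)$ into the hitting-time lower bound. You instead partition $\tsespace$ by which of $x_i, x_j$ are split on, observe that $B$ separates $A$ (which contains the initial all-trivial ensemble) from $C \supset \opt(f^*,\infty)$, and bound every one-step $A \to B$ transition by $O(n^{-1/2})$, so that $\tau_B$ stochastically dominates a geometric with mean $\Omega(n^{1/2})$. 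This is more elementary: it bypasses harmonic functions and effective resistance entirely and isolates the $n$-dependence in a single Metropolis acceptance ratio. What the paper's recipe buys in exchange is generality: your argument needs the exit probability to be $O(n^{-1/2})$ uniformly over $A$, which happens to hold here because \emph{every} grow on $x_i$ or $x_j$ from $A$ costs a degree of freedom with no bias reduction, whereas Proposition~\ref{prop:recipe} is designed to handle the subtler bottlenecks of Theorems~\ref{thm:additive} and~\ref{thm:bcart_mix}, where the bad region is slow to exit only along a thin boundary rather than everywhere.

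The obstacle you flag at the end is real and, importantly, it is shared by the paper. For your per-step bound you need the normalized residual $\phi \propto \psi - \Pi_{\mapone(\tse)}\psi$ of the new stump to be orthogonal to $f^*$, not merely $\psi \perp y$; the paper's Lemma~\ref{lem:pure_interaction_helper1} verifies only the latter, and its inference from ``$x_i \indep y \mid x_I$'' to ``$x_i \indep y \mid \bx \in \leaf$'' also skips a step, since conditioning on an $x_I$-measurable \emph{event} is not the same as conditioning on $x_I$. The same delicacy reappears in Lemma~\ref{lem:pure_interaction_helper2}, which you rely on for $\opt(f^*,\infty)\subset C$: the pure-interaction conditional independences do not by themselves prevent $f^*$ from being expressible as a function of a third feature that is dependent with $(x_i,x_j)$ under $\nu$. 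All of these gaps close cleanly if $x_i$ and $x_j$ are independent of the remaining coordinates under $\nu$: then $E[\,\cdot\mid x_{J\setminus\{i\}}]$ maps $\mapone(\tse')$ into $\mapone(\tse)$ and a contraction argument forces $\Pi_{\mapone(\tse')}[g]=\Pi_{\mapone(\tse)}[g]$ for any $\sigma(x_{J\setminus\{i\}})$-measurable $g$. This holds in the XOR-with-product-measure example that motivates the theorem, so your instinct to impose a mild regularity condition on $\nu$ is the right fix, and it would make both your proof and the paper's rigorous.
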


Note that the suboptimality gap for Theorem \ref{thm:pure_interaction} is much wider than that for Theorem \ref{thm:additive}.
Indeed, it implies that the only TSEs that are reachable within $o\paren*{n^{1/2}}$ iterations of the sampler have nonzero bias.
As such, the MSE of the BART sampler output does not even converge to zero with the training sample size, unless we allow for $\Omega(n^{1/2})$ iterations of the sampler.

\subsection{Exponential lower bound for Bayesian CART}
\label{subsec:lower_bound_bcart}

Our final hitting time lower bound shows that the HPDR hitting time for Bayesian CART can be exponential in the training sample size.
This complements and improves our results in \cite{ronen2022mixing}, which provided an exponential lower bound for mixing time for Bayesian CART.
While the previous result relied on extremely weak assumptions, the improved version requires a new assumption on the asymmetry of the regression function $f^*$ in terms of its dependence on different features.
Specifically, we say that a regression function $f^*$ has \emph{root dependence} if there exists a feature $x_i$ and threshold $t$ such that:
\begin{longlist}
    \item $\text{Corr}^2\paren*{y, \indicator\braces*{x_i \leq t}} > 0$;
    \item $(i,t)$ does not occur as a root split on any tree structure $\tree \in \opt[1](f,0)$.
\end{longlist}
An example of such a function is the ``staircase'' function $f^*(\bx) = \indicator\braces*{x_1 > 1} + \indicator\braces*{x_1, x_2 > 1}$.
The feature $x_2$ and the threshold $t=1$ satisfies properties (i) and (ii).\footnote{This example can be generalized to the family of functions $f^*(\bx) = \sum_{j=1}^s \prod_{k=1}^j \indicator\braces*{x_k > 1}$, for $s=2,3,\ldots$.}
% \sum_{j=1}^s \prod_{k=1}^j \indicator\braces*{x_k > 1}$.
% Any feature $x_i$ for $2 \leq i \leq s$ satisfies the above two properties.
On the other hand, additive functions on independent features do not satisfy these properties.

\begin{theorem}[Lower bound for Bayesian CART with root dependence]
\label{thm:bcart_mix}
    Suppose $f^*$ has root dependence.
    Suppose $m=1$ and that we disallow ``change'' and ``swap'' moves, i.e. $\pi_c = \pi_s = 0$.
    Suppose we make arbitrary choices for all other BART hyperparameters $\sigma^2$, $\lambda$, $\pi_g$, $\pi_p$, $p_{TSE}$.
    Then 
    % for any $0 < \delta < 1$, there is a minimal sample size $N$ so that for all $n \geq N$, with probability at least $1-\delta$ with respect to $\P_n$, 
    the Markov chain induced by $\bart(\data,1,\sigma^2,\lambda,\bpi,p_{\tse},-)$ satisfies
    \begin{equation} \nonumber
        \liminf_{n \to \infty}\E_n\braces*{\frac{\log E\braces*{\tau_{\opt[1](f^*,0)}}}{n}} \geq \frac{1}{2\sigma^2}\paren*{\int (f^*)^2 d\nu -  \paren*{\int f^* d\nu}^2}.
    \end{equation}
\end{theorem}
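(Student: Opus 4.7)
By the root-dependence hypothesis, fix a feature--threshold pair $(i,t)$ with $\mathrm{Corr}^2(y,\indicator\{x_i \leq t\}) > 0$ that never occurs as the root split of any $\tree \in \opt[1](f^*,0)$. Let $\setone \subset \treespace$ be the set of tree structures whose root split is exactly $(i,t)$, and write $\tree_{\mathrm{stump}} \in \setone$ for the two-leaf stump with only this root split. By construction $\setone \cap \opt[1](f^*,0) = \emptyset$. The plan is a bottleneck (conductance) argument on $\setone$ whose crucial step is a detailed-balance identity.

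The structural input I would use is that, with only grow and prune moves, the Markov chain's state graph contains a single edge crossing $\partial \setone$, namely $\tree_{\mathrm{stump}} \leftrightarrow \emptytree$: changing the root split forces the chain to first prune everything back to $\emptytree$. Consequently, $\emptytree$ is a cut vertex separating $\setone$ from $\opt[1](f^*,0)$, so on the event that the chain sits in $\setone$ one has $\tau_{\opt[1](f^*,0)} \geq \tau_{\emptytree}$.

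Next I would bound the conductance $\Phi(\setone) = (\text{outgoing flow})/p(\setone\mid\by)$. Detailed balance on the unique crossing edge yields
\[
p(\tree_{\mathrm{stump}}\mid\by)\,P(\tree_{\mathrm{stump}},\emptytree) \;=\; p(\emptytree\mid\by)\,P(\emptytree,\tree_{\mathrm{stump}}) \;\leq\; p(\emptytree\mid\by),
\]
which is the heart of the proof. Direct inspection of the outgoing edge would yield an exponent involving only $\bias(\tree_{\mathrm{stump}};f^*)$ (from comparing the BIC of $\tree_{\mathrm{stump}}$ to that of a zero-bias tree in $\setone$) or only the variance explained by the split $(i,t)$ (from the MH acceptance ratio of the prune), but reversibility collapses these two factors into $p(\emptytree\mid\by)$, whose BIC exponent is the \emph{full} variance $\mathrm{Var}_\nu(f^*)$, since the two ``partial'' exponents sum exactly to $\mathrm{Var}_\nu(f^*) = \int (f^*)^2 d\nu - (\int f^* d\nu)^2$. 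Lower-bounding $p(\setone\mid\by)$ by the posterior mass of any zero-bias tree in $\setone$ (easy to exhibit by taking a deep full-fit tree whose root split is $(i,t)$) loses only a factor of $n^{-O(1)}$, and Propositions \ref{prop:concentration_bic_diff}--\ref{prop:posterior_conc_treespace} control $\log p(\emptytree\mid\by)$ to $O_p(\sqrt{n})$ precision, giving with high probability under $\P_n$
\[
\log \Phi(\setone) \;\leq\; -\tfrac{n}{2\sigma^2}\mathrm{Var}_\nu(f^*) + O_p(\sqrt{n}) + O(\log n).
\]

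Finally I would invoke a standard Cheeger-type hitting-time lower bound to conclude $E_{p(\cdot\mid\by)|_{\setone}}[\tau_{\setone^c}] \gtrsim 1/\Phi(\setone)$, and transfer this to the actual $\emptytree$-start by arguing that (i) with probability bounded away from $0$ independently of $n$, the first move is a grow to $\tree_{\mathrm{stump}}$ (the acceptance is bounded below because $(i,t)$ has positive correlation with $y$), and (ii) inside $\setone$ the restricted chain should mix on a $\mathrm{poly}(n)$ timescale because $\setone$ itself contains no analogous bottleneck. Combined with the cut-vertex property, this gives $E_{\emptytree}[\tau_{\opt[1](f^*,0)}] \geq \exp\!\bigl(\tfrac{n}{2\sigma^2}\mathrm{Var}_\nu(f^*) - O_p(\sqrt{n})\bigr)$ with high probability under $\P_n$. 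Taking $\log$, dividing by $n$, taking $\liminf$ on the BIC-concentration good event, and using $\log E[\tau] \geq 0$ on the complement to handle $\E_n$ yields the stated bound. The main obstacle is step (ii) of this final transfer: the classical Cheeger inequality averages the start over $p(\cdot\mid\by)|_{\setone}$, so matching the deterministic $\emptytree$-start requires an independent $\mathrm{poly}(n)$-mixing argument for the chain restricted to $\setone$ (equivalently, a polynomial lower bound on the restricted chain's spectral gap).
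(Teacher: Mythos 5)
Your bottleneck picture is right: the unique crossing edge $\tree_{\mathrm{stump}}\leftrightarrow\emptytree$, detailed balance on that edge, and the exponent $\mathrm{Var}_\nu(f^*)/(2\sigma^2)$ coming from comparing $p(\emptytree\mid\by)$ to a zero-bias tree in $\setone$ all match the paper. But the conversion of the conductance estimate into a hitting-time bound has two genuine gaps, and the paper's route is designed precisely to avoid them.

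First, as you note, a Cheeger/conductance lower bound on the escape time is naturally stated for a start drawn from $p(\cdot\mid\by)$ restricted to $\setone$, and passing to the deterministic $\emptytree$-start would require a poly$(n)$-mixing argument for the restricted chain. That auxiliary argument is nontrivial and not supplied. Second, and more consequentially: your transfer step (i) only gets the chain to $\tree_{\mathrm{stump}}$, not to a zero-bias tree in $\setone$. Starting from $\tree_{\mathrm{stump}}$, a return-time computation only yields escape time $\gtrsim \pi(\tree_{\mathrm{stump}})/\pi(\emptytree) \approx \exp\!\paren[\big]{\tfrac{n}{2\sigma^2}\,\mathrm{Corr}^2(y,\indicator\{x_i\le t\})\,\mathrm{Var}_\nu(f^*)}$, i.e.\ only the variance explained by the root split, not the full variance. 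To obtain the claimed exponent you must first establish that the chain reaches a \emph{deep}, zero-bias tree in $\setone$ with $\Omega(1)$ probability before hitting $\emptytree$, and this is not a triviality because that tree is many moves away.

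The paper handles both points directly without any within-$\setone$ mixing argument. For the first, it uses Proposition \ref{prop:recipe_general}: starting from a fixed state $x$, $E\{\tau_{\setthree}\mid X_0=x\}\ge 1/P\{\tau_x^+>\tau_\settwo\}$, and $P\{\tau_x^+>\tau_\settwo\}\le E\{$visits to $\settwo$ before returning to $x\}=\pi(\settwo)/\pi(x)$; this is a return-time identity, not a Cheeger bound, so it works for a deterministic start. For the second, it sets $x=\treebad=\arg\min_{\tse\in\setone}\bic(\tse)$ (a zero-bias tree) and proves $P\{\tau_{\treebad}<\tau_{\opt[1]}\}=\Omega(1)$ via Lemma~\ref{lem:reduction_to_path}: one exhibits a monotone-in-bias path $\emptytree=\tree_0,\tree_1,\dots,\tree_k=\treebad$ whose edge conductances all dominate the single escape edge's conductance $c(\emptytree,\setthree)$ for large $n$, then applies Rayleigh monotonicity and the series law. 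That path argument is the missing ingredient in your plan, and supplying it (or the poly$(n)$-mixing substitute you flag) is the actual content of the proof.

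Also a small caution on a side remark: after getting the $\Omega(e^{cn})$ bound on a $1-2\delta$ event, the paper converts this to a bound on $\E_n\{\log E\{\tau\}/n\}$ via Markov's inequality and the nonnegativity of $\log E\{\tau\}$ (since $E\{\tau\}\ge 1$ once the target set excludes the start); this last step is easy but you would need to write it out since the statement is about an expectation under $\P_n$, not a high-probability claim.
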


% \begin{remark}
%     Our hitting time lower bounds directly imply mixing time lower bounds in the space of PEMs, which, as argued in Section \ref{sec:posterior_conc}, are identifiable from data.
%     Since the bias and degrees of freedom of a TSE $\tse$ is defined in terms of its associated PEM $\mapone(\tse)$,
%     $\opt(f^*,k)$ is the preimage under $\mapone$ of a set $\widetilde{\textnormal{OPT}}_m(f^*,k)$ in the space of PEMs.
%     Hence, a hitting time lower bound for $\opt(f^*,k)$ is simultaneously a hitting time lower bound for $\widetilde{\textnormal{OPT}}_m(f^*,k)$ when considering the induced Markov chain on the space of PEMs.
%     It is easy to see that this is a lower bound for the mixing time.
% \end{remark}

\begin{remark}
    For the sake of narrative clarity, we have not tried to optimize the suboptimality gaps (i.e. $k$ in $\opt(f^*,k)$) in our lower bounds.
    Note also that we have not attempted to investigate the dependence of our lower bounds on other data or algorithmic hyperparameters.
    These of course influence the minimum sample size $N$ as well as the hidden constant factor in the Big-Omega notation of our lower bounds.
\end{remark}

\begin{remark}[Relationship between hitting times and mixing times]
    The expected hitting time of a sizable set naturally serves as a lower bound for a Markov chain's mixing time, as mixing requires the chain to visit all parts of the state space. A more profound result is that, under standard conditions, this inequality can be reversed to establish an equivalence. Specifically, the mixing time is equivalent (up to constant factors) to the maximum hitting time over all sets with a stationary probability of at least $1/2$ \citep{griffiths2014tight}.
\end{remark}

\section{Hitting time lower bounds via barrier sets}
\label{sec:proof_sketch}

In this section, we briefly outline the proof strategy we use to derive our hitting time lower bounds.
Our first move is to make use of the standard interpretation of a symmetric Markov chain as a random walk on a network, whose vertices comprise the states of the Markov chain and whose edges comprise pairs of states with positive transition probability.
We next notice that hitting times are closely related to escape probabilities, which can be interpreted as voltages on the network.
Voltages can then be calculated using standard network simplification techniques.
Putting these ingredients together creates the following recipe for deriving hitting time lower bounds:

\begin{proposition}[Recipe for hitting time lower bounds]
\label{prop:recipe}
    Let $\tse_0,\tse_1,\tse_2,\ldots$ denote the Markov chain induced by a run of $\bart(\data,m,\sigma^2,\lambda,\bpi,p_{\tse},-)$ for any fixed dataset $\data$ and any choice of hyperparameters.
    Let $\tsebad \in \tsespace$ be a TSE such that,
    % for some $0 < \delta < 1/2$, with probability at least $1-\delta$ with respect to $\P_n$, 
    \begin{equation} \nonumber
        P\braces*{\tau_{\tsebad} < \tau_{\opt(f^*,k)}} = \Omega_\P(1).
    \end{equation}
    Let $\settwo \subset \tsespace$ be a subset such that every path from $\tsebad$ to $\opt(f^*,k)$ intersects $\settwo$.
    % \begin{longlist}
    %     \item Every path from $\tsebad$ to $\opt(f^*,k)$ intersects $\settwo$;
    %     \item For any $\tse \in \settwo$, we have
    %     \begin{equation}
    %         \Delta\bic(\tse_{bad},\tse) \geq h(n).
    %     \end{equation}
    % \end{longlist}
    % Then with probability at least $1-2\delta$ with respect to $\P_n$, 
    The hitting time of $\opt(f^*,k)$ satisfies
    \begin{equation} \nonumber
        E\braces*{\tau_{\opt(f^*,k)}} = \Omega_\P\paren*{\exp\paren*{\frac{1}{2}\min_{\tse \in \settwo}\Delta\bic(\tse,\tsebad)}}.
    \end{equation}
\end{proposition}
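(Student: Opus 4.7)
The plan is to combine a strong-Markov decomposition with a reversibility-based bottleneck bound. By hypothesis the chain visits $\tsebad$ before reaching $\opt(f^*,k)$ with $\Omega(1)$ probability, so the problem reduces to lower bounding $E_{\tsebad}[\tau_{\opt(f^*,k)}]$; the barrier hypothesis on $\settwo$ reduces this further to lower bounding $E_{\tsebad}[\tau_{\settwo}]$, which can be tied to the posterior (hence BIC) ratio between $\tsebad$ and $\settwo$ via Proposition \ref{prop:lml_bic}.

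For the decomposition, on the event $\{\tau_{\tsebad} < \tau_{\opt(f^*,k)}\}$ the strong Markov property at $\tau_{\tsebad}$ gives
\[
E[\tau_{\opt(f^*,k)}] \;\geq\; P[\tau_{\tsebad} < \tau_{\opt(f^*,k)}]\cdot E_{\tsebad}[\tau_{\opt(f^*,k)}] \;\geq\; \Omega(1)\cdot E_{\tsebad}[\tau_{\settwo}],
\]
where the last inequality uses the barrier assumption: $\tau_{\settwo} \leq \tau_{\opt(f^*,k)}$ under $P_{\tsebad}$, since every path from $\tsebad$ to $\opt(f^*,k)$ must first pass through $\settwo$. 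The first factor is $\Omega(1)$ on a $\P_n$-event of probability at least $1-\delta$.

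For the bottleneck step, Algorithm \ref{alg:method} is a Metropolis--Hastings sampler targeting $\pi(\tse)=p(\tse\mid\by)$, hence reversible with conductances $c(\tse,\tse')=\pi(\tse)P(\tse,\tse')$. A standard excursion argument combined with the electrical-network identity gives
\[
E_{\tsebad}[\tau_{\settwo}] \;\geq\; \frac{1}{P_{\tsebad}[\tau_{\settwo} < \tau_{\tsebad}^+]} \;=\; \frac{\pi(\tsebad)}{C_{\text{eff}}(\tsebad,\settwo)},
\]
where $\tau_{\tsebad}^+$ is the first return time. Applying Dirichlet's principle to the indicator of the connected component $W$ of $\tsespace\setminus\settwo$ containing $\tsebad$, together with detailed balance, bounds the effective conductance by the cut weight:
\[
C_{\text{eff}}(\tsebad,\settwo) \;\leq\; \sum_{\tse\in\settwo,\,\tse'\in W}\pi(\tse)P(\tse,\tse') \;\leq\; \sum_{\tse\in\settwo}\pi(\tse),
\]
which yields $E_{\tsebad}[\tau_{\settwo}] \geq \pi(\tsebad)/\pi(\settwo)$.

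Finally, applying Proposition \ref{prop:lml_bic} to $\tsebad$ and to each $\tse\in\settwo$ -- which, by a union bound over the finite state space $\tsespace$, consumes the second $\delta$ -- converts each posterior ratio into a BIC gap:
\[
\frac{\pi(\tsebad)}{\pi(\tse)} \;=\; \Omega\!\left(\exp\!\left(\tfrac12 \Delta\bic(\tse,\tsebad)\right)\right).
\]
Taking the minimum over $\tse\in\settwo$ (the $|\settwo|$ factor from summation absorbed into the hidden constant) gives $\pi(\tsebad)/\pi(\settwo) = \Omega(\exp(\tfrac12\min_{\tse\in\settwo}\Delta\bic(\tse,\tsebad)))$, and intersecting the two $\P_n$-events yields the stated conclusion with probability at least $1-2\delta$. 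The main subtlety is ensuring that the $O(1)$ slack in Proposition \ref{prop:lml_bic}, after being propagated uniformly over $\settwo$, does not consume the exponential BIC gap -- this is precisely where the finiteness of $\tsespace$ under discrete covariates is essential.
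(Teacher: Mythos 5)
Your proof is correct and mirrors the paper's overall structure: a strong-Markov decomposition at $\tau_{\tsebad}$ isolates the $\Omega(1)$ factor and reduces the problem to lower bounding $E_{\tsebad}[\tau_{\settwo}]$, which you then tie to a posterior mass ratio and convert to a BIC gap via Proposition~\ref{prop:lml_bic} with a union bound over the finite set $\settwo$. The one genuine technical divergence is how you obtain the intermediate bound $E_{\tsebad}[\tau_{\settwo}] \geq \pi(\tsebad)/\pi(\settwo)$: you combine the excursion-geometric identity $E_{\tsebad}[\tau_{\settwo}] \geq 1/P_{\tsebad}[\tau_{\settwo} < \tau_{\tsebad}^+]$ with the electrical-network escape-probability formula and Dirichlet's principle applied to the indicator of the component $W$ of $\tsespace\setminus\settwo$ containing $\tsebad$; this route requires reversibility (which the analyzed sampler has, being a Metropolis--Hastings chain on TSEs). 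The paper instead bounds $P_{\tsebad}[\tau_{\settwo} < \tau_{\tsebad}^+]$ by the expected number of visits to $\settwo$ during an excursion, invoking the stationary-cycle identity $E_x[\text{visits to }y\text{ before }\tau_x^+] = \pi(y)/\pi(x)$ (its Lemma~\ref{lem:num_visits_bound}), which holds for any irreducible finite-state chain and so does not need reversibility. Both arguments are standard and land on the same inequality; the paper's is slightly more general, yours is perhaps more transparent to a reader who thinks in terms of conductances. Your note that the $O(1)$ slack in Proposition~\ref{prop:lml_bic} must be controlled uniformly over $\settwo$ (and that finiteness of $\tsespace$ under discrete covariates is what makes this possible) matches the paper's handling of the second $\delta$.
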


% \begin{proposition}
%     Consider an irreducible aperiodic Markov chain $(X_t)$ on a finite state space $\Omega$ with stationary distribution $\pi$.
%     Let $\setone, \settwo \subset \Omega$ are subsets, and let $x \in \Omega$ be a state such that $x \notin \setone \cap \settwo$.
%     Suppose that every path from $x$ to $\setone$ intersects $\settwo$, i.e. $\settwo$ is a vertex cutset separating $x$ from $\setone$.
%     Then the hitting time of $\setone$ from $x$ satisfies
%     \begin{equation}
%         P\braces*{\tau_{\setone} > \frac{\pi(x)}{\pi(\settwo)} ~\vline~ X_0 = x} \geq \frac{1}{3}.
%     \end{equation}
% \end{proposition}

\begin{figure}[t]
    \centering
    \includegraphics[scale=0.7]{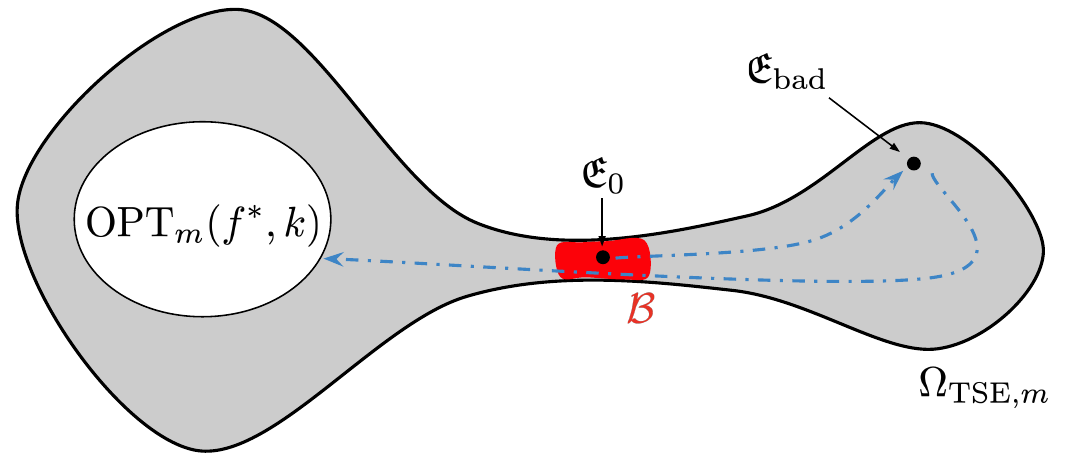}
    \caption{
    Visual illustration of 
    % the proof technique outlined by 
    Proposition \ref{prop:recipe}. 
    The chain is initialized at $\tse_0$ and with positive probability hits a suboptimal TSE $\tsebad$ before $\opt(f^*,k)$. 
    This causes to chain to get stuck, as it can only reach $\opt(f^*,k)$ by passing through an ``impassable'' barrier set $\mathcal{B}$.
    }
    \label{fig:proof_technique}
\end{figure}

In other words, our hitting time lower bounds follow immediately once we are able to identify (i) a suboptimal TSE $\tsebad$ that is reachable by the sampler before it hits $\opt(f^*,k)$ and (ii) a barrier $\settwo$ that separates $\tsebad$ from $\opt(f^*,k)$, i.e. is a vertex cutset, and which has higher BIC, therefore acting as an impassable ``barrier''.
We now briefly describe our choices for each of the three different lower bound settings we have considered in this paper.

\subsection{Additive models} \label{subsec:proof_sketch_additive}
For simplicity, we only discuss the case of $m'= m$ (when the number of components is equal to the number of trees.)
For $i=1,2,\ldots,m$, denote $q_i = \dim_1(f_i)$, and let $0=\xi_{i,0} < \xi_{i,1} < \cdots < \xi_{i,q_{i}} = b$ denote the \emph{knots} of $f_i$, i.e. the values for which $f_i(\xi_{i,j}) \neq f_i(\xi_{i,j}+1)$, together with the endpoints.\footnote{ $\dim_1(f_i)$ is simply the number of constant pieces of $f_i$ or alternatively, one larger than the number of knots of $f_i$}
Without loss of generality, assume that $f_1,\ldots,f_{m}$ are ordered in descending order of their 1-ensemble dimension, i.e. $q_1 \geq q_2 \geq \cdots \geq q_{m}$.

We now define $\tsebad$ and a ``bad set'' $\setone$.
To this end, we define a collection of partition models $\pem_1,\pem_2,\ldots,\pem_m$ (spans of indicators in a single partition) as follows.
First, for $i=3,4,\ldots,m$, $j=1,2,\ldots,q_i$, define the cells $\leaf_{i,j} \coloneqq \braces{\bx \colon \xi_{i,{j-1}} < x_i \leq \xi_{i,j}}$,
and set $\pem_i = \linspan\paren*{\braces*{\indicator_{\leaf_{i,j}} \colon j=1,2,\ldots,q_i}}$, i.e. for each $i$, $\pem_i$ contains splits only on feature $i$ and only at the knots of $f_i$. 
% This also implies that $f_i \in \pem_i$. 
To introduce inefficiency, we define each of $\pem_1$ and $\pem_2$ to have splits on both features 1 and 2.
This construction is demonstrated in Figure \ref{fig:additive_construction}.
The formal details are fairly involved and will be deferred to the appendix. 
% For now, it suffices to assume that $f_1 + f_2 \in \pem_1 + \pem_2$, which implies that $f^* \in \pem_1 + \pem_2 + \cdots + \pem_m$.
We define $\setone$ via
\begin{equation} \nonumber
    \setone \coloneqq \braces*{(\tree_1,\tree_2,\ldots,\tree_m) \colon \mapone(\tree_i) = \pem_i ~\text{for}~i=1,2,\ldots,m},
\end{equation}
and set $\settwo$ to be the \emph{outer boundary} of $\setone$, i.e.
\begin{equation} \nonumber
    \settwo = \braces*{\tse \in \tsespace \colon \tse~\text{has an edge to}~\setone~\text{and}~\tse \notin \setone}.
\end{equation}
Finally, we pick $\tsebad$ to be a particular element of $\setone$ whose precise construction will be detailed in the Section \ref{sec:proof_of_additive}.
Therein, we will also show that 
\begin{longlist}
    \item $\tse$ has zero bias for all $\tse \in \setone$;
    \item $\setone \cap \opt(f^*,(q_{\text{max}}-2)(q_{\text{min}}-2)-2) = \emptyset$;
    \item $\min_{\tse \in \settwo}\Delta\bic(\tse,\tsebad) \geq \log n - O(1)$.
\end{longlist}

\begin{figure}[t]
    \centering
    \includegraphics[scale=0.55]{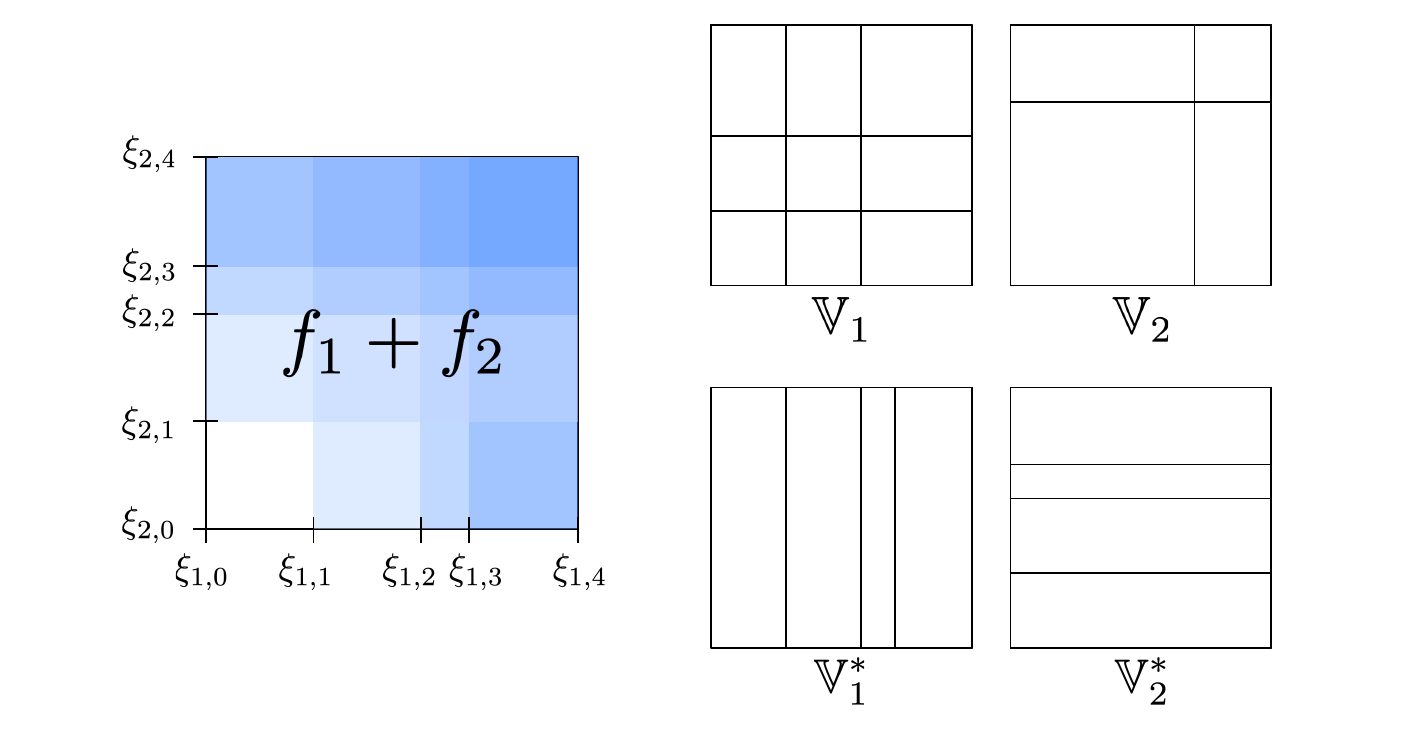}
    \vspace{-0.2in}
    \caption{
    Visual illustration of the construction for $\tsebad$ used in the proof of Theorem \ref{thm:additive}.
    % the proof technique outlined by 
    The left panel displays the function $f_1(x_1) + f_2(x_2)$ together with all knots of $f_1$ and $f_2$.
    We define $\tsebad$ so that its first two trees $\tree_1$ and $\tree_2$ induce the partitions $\pem_1$ and $\pem_2$ respectively.
    These combine for a total of 13 leaves.
    On the other hand, an optimal TSE will instead make use of $\pem_1^*$ and $\pem_2^*$, which combine for a total of only 8 leaves.
    Nonetheless, we still have $f_1 + f_2 \in \pem_1^* + \pem_2^*$.
    }
    \label{fig:additive_construction}
\end{figure}

\subsection{Pure interactions}
We first define the notion of \emph{reachability} as follows:
\change{Given $\tse, \tse' \in \tsespace$, we say that 
% $\tse'$ is reachable from $\tse$, denoted $\tse \succeq \tse'$, 
$\tse \succsim \tse'$ if $\tse$ and $\tse'$ are connected by an edge and if one of the following holds: (a) $\bias(\tse;f^*) > \bias(\tse';f^*)$; or (b) $\bias(\tse;f^*) = \bias(\tse';f^*)$ and $\df(\tse) \geq \df(\tse')$.}
Note that, because we allow only ``grow'' and ``prune'' moves, for adjacent $\tse$ and $\tse'$, $\mapone(\tse)$ and $\mapone(\tse')$ are nested subspaces, so that $\bias(\tse;f^*) = \bias(\tse';f^*)$ if and only if $\Pi_{\tse}[f^*] = \Pi_{\tse'}[f^*]$.
We say that $\tse'$ is reachable from $\tse$, denoted $\tse \succeq \tse'$, if there is a sequence of TSEs $\tse = \tse^0, \tse^1,\ldots,\tse^k = \tse'$ such that $\tse^i \succsim \tse^{i+1}$ for $i=0,1,\ldots,k-1$.

Without loss of generality, let $(x_1,x_2)$ be a pure interaction for $f^*$.
Let $\tsebad$ be any TSE such that
\begin{longlist}
    \item $\tsebad$ is reachable from $\emptytse$;
    \item There does not exist $\tse \in \tsespace$ such that $\tse$ is reachable from $\tsebad$ but $\tsebad$ is not reachable from $\tse$.
\end{longlist}
Note that such a TSE exists because $\tsespace$ is finite and $\succeq$ is a partial ordering on this space.
We set $\setone$ to be the equivalence class of $\tsebad$ under $\succeq$ and set $\settwo$ to be the outer boundary of $\setone$.
We will show in the appendix that no TSE in $\setone$ makes a split on either $x_1$ or $x_2$, which implies that $\setone \cap \opt(f^*,\infty) = \emptyset$.
We will also show that  $\min_{\tse \in \settwo}\Delta\bic(\tse,\tsebad) \geq \log n - O(1)$.
% \begin{longlist}
%     \item $\setone \cap \opt(f^*,\infty) = \emptyset$;
%     \item $\min_{\tse \in \settwo}\Delta\bic(\tse,\tsebad) \geq \log n - O(1)$.
% \end{longlist}
% We set
% \begin{equation}
%     \setone = \braces*{\tse \in \tsespace \colon \tse~\text{does not contain any splits on}~x_1,x_2},
% \end{equation}
% and 
% \begin{equation}
%     \tsebad = \arg\min \braces*{\bic(\tse) \colon \tse \in \setone}.
% \end{equation}
% Note that all TSEs in $\setone$ have nonzero bias.
% Indeed, defining $\settwo$ as the outer boundary of $\setone$, we will show in the appendix that
% \begin{longlist}
%     \item $\setone \cap \opt(f^*,\infty) = \emptyset$;
%     \item $\min_{\tse \in \settwo}\Delta\bic(\tse,\tsebad) \geq \log n - O(1)$.
% \end{longlist}

\subsection{Bayesian CART}
Without loss of generality, let $x_1$ be the feature that gives $f^*$ root dependence.
By assumption, there is a threshold $t$ such that splitting the trivial tree on $x_1$ at $t$ gives a decrease in squared bias.
We set
\begin{equation} \nonumber
    \setone = \braces*{\tree \in \tsespace[1] \colon \tree~\text{has root split on}~x_1~\text{at}~t},
\end{equation}
and 
\begin{equation} \nonumber
    \treebad = \arg\min \braces*{\bic(\tse) \colon \tse \in \setone}.
\end{equation}
Note that the outer boundary of $\setone$ is a singleton set comprising the trivial tree $\emptytree$.
By assumption, we have $\setone \cap \opt[1](f^*,0) = \emptyset$.
We will show in the appendix that
\begin{equation} \nonumber
    \liminf_{n \to \infty} \frac{\sigma^2}{n}\Delta\bic(\emptytree,\treebad) \geq \int (f^*)^2 d\nu -  \paren*{\int f^* d\nu}^2.
\end{equation}

From these constructions, we also see that the reason why hitting times grow with training sample size is because the barrier sets become increasingly difficult to pass through.
Heuristically, we can say that this is because the intrinsic ``temperature'' of the BART sampler is inversely proportional to the training sample size.
Our analysis strategy is quite general and can probably be applied to other sampling problems in Bayesian nonparametrics with ``multimodal'' posteriors, thereby revealing similar mixing issues.
\change{
    \section{Mixing time upper bounds for BART MCMC}
\label{sec:upper_bounds}

In this section, we describe how the undesirable scaling of hitting times with training sample size can be ameliorated using any one of three strategies: (i) increasing the number of trees in the ensemble, (ii) making more global changes to the tree structures before applying the Metropolis-Hastings filter, (iii) using a tempered posterior distribution, whereby the marginal likelihood on tree structures is raised to a $1/T$ power.
We will derive corresponding mixing time upper bounds, proving these results in the context of additive models.
Note that mixing time is easily shown to be an upper bound for hitting time of the HDPR set, which means that these upper bounds stand as a counterpoint to the lower bounds in Theorem \ref{thm:additive}.

Following \cite{kim2023mixing}, we prove our upper bounds for the \emph{lazification} of the BART sampler analyzed in previous sections of this paper.
Specifically, recalling that $P$ denotes the transition matrix of the analyzed sampler, we will focus in this section on the Markov chain with transition kernel $\widetilde P \coloneqq (P+I)/2$.
Note that $\widetilde{P}$ and $P$ both have the same stationary distribution.
Furthermore, since $P$ already does not display periodic behavior, there is little reason to believe that lazification, which is meant to combat periodicity, substantially affects mixing times.
On the other hand, all eigenvalues of $\widetilde{P}$ are nonnegative, which makes its analysis easier.
Indeed, this ensures that its mixing time is controlled by its spectral gap, which we can bound using a canonical path argument \citep{sinclair1992improved}.
% Directly relating the spectral gap of the Markov chain $P$ to its mixing time, however, requires the assumption of \emph{laziness}, that is, that $P(\tse,\tse) \geq 1/2$ for any $\tse \in \tsespace$.\footnote{This assumption ensures that there are no negative eigenvalues, so that the spectral gap is equal to the absolute spectral gap.}
% Therefore, we consider the \emph{lazification} of the BART sampler, which is the Markov chain with transition kernel $\widetilde P \coloneqq (P+I)/2$.
% Note that this modification, while making analysis easier, does not fundamentally affect dynamics of the sampler.
% In particular, $\widetilde{P}$ has the same stationary distribution as $P$.
% Furthermore since a sequence satisfying the law of $P$ can be extracted as a subsequence from one satisfying the law of $\widetilde{P}$, any hitting time lower bound for $\widetilde{P}$ is simultaneously a lower bound for $P$.

We now state our first result, which makes the same assumptions as Theorem \ref{thm:additive}, except for replacing the upper bound on the number of trees with a lower bound.
As explained in the discussion following the statement of Theorem \ref{thm:additive}, that result together with Theorem \ref{thm:mixing_upper_bound_ntrees} give evidence for the computational value in overparameterization.

\begin{theorem}[Upper bound from increasing number of trees]
\label{thm:mixing_upper_bound_ntrees}
    Assume $f^*$ is additive (see \eqref{eq:additive}).
    Suppose $\pi_g,\pi_p > 0$, and $p_{\text{TSE}}$ has full support, and let $\sigma^2$, $\lambda$ be arbitrary.
    If the number of trees satisfies $\ntrees \geq 2\abs{\tsespace[1]}$, 
    then for any tolerance $\varepsilon > 0$, 
    % there exists some $C > 0$ not depending on $n$ such that 
    the lazification of the Markov chain induced by $\bart(\data,m,\sigma^2,\lambda,\bpi,p_{\text{TSE}},-)$ has $\varepsilon$-mixing time satisfying
    \begin{equation}
    \label{eq:more_trees_mixing}
        t_{\operatorname{mix}}(\varepsilon) = O_\P(1).
        % \lim_{n \to \infty}\P_n\braces*{t_{\operatorname{mix}}(\varepsilon) \leq C} = 1.
    \end{equation}
\end{theorem}

To understand this result, it is helpful to revisit the proof sketch for Theorem \ref{thm:additive} presented in Section \ref{subsec:proof_sketch_additive}.
There, we argued that if the number of trees is less than or equal to the number of additive components, one may construct a suboptimal partition ensemble such that any sampler move modifying these partitions would lead to either larger bias (by removing a relevant split) or larger degrees of freedom (by adding an irrelevant split).
As such, there exists a high BIC barrier between $\tsebad$, a TSE whose trees implement the suboptimal partition ensemble, and the set of optimal TSEs.
On the other hand, if we were allowed to expand $\tsebad$ by appending additional trivial trees, it would be possible to grow these trees sequentially until they implement the optimal partition ensemble, before pruning the original trees in $\tsebad$ towards their root nodes.
This sequence of moves can be shown to be non-increasing in BIC, and therefore bypasses the high BIC barrier.
Indeed, one can check that the corresponding partition ensemble \emph{model} (see Section \ref{subsec:pem}) is invariant under the ``growing'' phase, while the ``pruning'' phase only removes irrelevant splits.\footnote{This mechanism for avoiding a bottleneck is reminiscent of the popular puzzle ``The Tower of Hanoi''.}

% allowing us to first sequentially place replacements of the relevant splits on the empty trees (this does ).
Under the assumption $m \geq 2\abs{\tsespace[1]}$, one can generalize this construction to define appropriate paths between any pair $\tse,\tse' \in \tsespace$.
This defines a \emph{canonical path ensemble}, which can be used to bound the spectral gap $\gamma$ of the Markov chain \citep{sinclair1992improved,LevinPeresWilmer2006}.
Note that $\gamma$ is defined to be the difference between the first and second eigenvalues of the Markov chain's transition matrix and it is a standard result that the mixing time of a lazy irreducible Markov chain is bounded as $t_{\operatorname{mix}}(\varepsilon) \leq \log(\pi_{\min}/\varepsilon)/\gamma$, where $\pi_{\min}$ is the minimum probability of a state under its stationary distribution.
The full details of the proof can be found in Section \ref{sec:upper_bounds_proofs}.

% We now outline key steps in the proof of this result, with 
% It is well known that the mixing time of a lazy irreducible Markov chain satisfies the bound $t_{\operatorname{mix}}(\varepsilon) \leq \log(1/\varepsilon\pi_{\min})/\gamma$ \citep{LevinPeresWilmer2006}, where $\pi_{\min}$ is the minimum probability of a state under its stationary distribution and $\gamma$ is its spectral gap, which is the difference between the first and second eigenvalues of its transition matrix.
% We used a modified version of this fact in order to soften the dependence on $\pi_{\min}$, which could grow exponentially in $n$ in the case of the BART sampler.
% To lower bound $\gamma$, we use a canonical path argument, which constructs a canonical path ensemble which comprises a collection of paths, one for every pair of states.

While our first strategy for ``fixing'' the BART sampler was indirect and proceeded by changing the BART parameterization, the next two strategies directly modify the sampler logic and mirror well-known approaches for improving the mixing of sampling algorithms over continuous parameter spaces, such as random walk Metropolis-Hastings (RWMH).
One perspective on the bottleneck described in Section \ref{subsec:proof_sketch_additive} is that the sampler is too myopic---it is unwilling to make a ``bad'' move even if it may subsequently lead towards a TSE with higher posterior value.
In RWMH, a natural solution is to increase the random walk step size.
Although there is no notion of step size for the BART sampler, we may still adjust the proposal by chaining together a sequences of $r$ moves, each drawn according to the original proposal kernel $Q$.
The resulting proposal kernel is denoted as $Q^r$ and can be computed according to the formula
\begin{equation} \label{eq:multi_step_proposal}
    Q^r(\tse,\tse') = \sum_{(\tse_1,\tse_2,\ldots,\tse_{r-1}) \in \tsespace^{\otimes r-1}} Q(\tse,\tse_1) \paren*{\prod_{i=2}^{r-1} Q(\tse_{i-1},\tse_{i})} Q(\tse_{r-1},\tse').
\end{equation}

Yet another natural solution for dealing with a myopic sampler is to perform simulated tempering \citep{marinari1992simulated}, which allows the sampler to switch between different temperatures.\footnote{Simulated tempering for Bayesian CART was proposed by \citet{angelopoulos2005tempering}, but has yet to be investigated for BART.}
One may adjust the temperature of RWMH by raising the MH acceptance probability to a $1/T$ power, where $T > 0$ is the temperature parameter.
For a fixed temperature, the stationary distribution of this tempered sampler is now proportional to $\pi^{1/T}$, where $\pi$ is the original target density.
By allowing temperature switches, one is finally able to extract a sequence of samples approximating $\pi$.
% For our sampler, our targe While adjusting the temperature of a sampler usually involves raising the entire 
Since our target distribution is a posterior, it is arguably more meaningful to raise the temperature of the likelihood factor in the posterior, rather than that of the entire distribution.
More precisely, noting that the posterior ratio between any pair of TSEs decomposes as $p(\tse'|\by)/p(\tse|\by) = p(\tse')p(\by|\tse')/p(\tse)p(\by|\tse)$, we define the acceptance probability at temperature $T$ (and for a $r$-step proposal) as
\begin{equation} \label{eq:temperature_T_MH_ratio}
    \widetilde{\alpha}_{r,T}(\tse,\tse') \coloneqq \min\braces*{\frac{Q^r(\tse',\tse)p(\tse')}{Q^r(\tse,\tse')p(\tse)}\paren*{\frac{p(\by|\tse')}{p(\by|\tse)}}^{1/T}, 1}.
\end{equation}

Using the above two ideas, we may define an enhanced algorithm $\textbf{BART+}$ such that, for any choice of inputs $\data$ (data), $m$ (number of trees), $\sigma^2,\lambda$ (likelihood parameters), $\bpi$ (move weights), $p_{\text{TSE}}$ (tree structure prior), $r$ (number of proposal steps), $T$ (temperature), and $t_{\max}$ (number of iterations), $\textbf{BART+}(\data,m,\sigma^2,\lambda,\bpi,p_{\text{TSE}},r,T,t_{\max})$ follows the the logic of $\textbf{BART}(\data,m,\sigma^2,\lambda,\bpi,p_{\text{TSE}},t_{\max})$ described in Algorithm \ref{alg:method}, except that $Q$ on line 5 is replaced by $Q^r$ and $\alpha$ on lines 6 and 7 is replaced by $\widetilde{\alpha}_{r,T}$.
This sampler can be shown to satisfy the following mixing time upper bounds:

\begin{theorem}[Upper bound from multi-step proposals]
\label{thm:mixing_upper_bound_multistep}
    Assume $f^*$ is additive (see \eqref{eq:additive}).
    Suppose $\pi_g,\pi_p > 0$, and suppose the proposal contains a trivial move (returning the current state) that is selected with some probability $\pi_0 > 0$.
    Also suppose that $p_{TSE}$ has full support, and let $m, \sigma^2$, $\lambda$ be arbitrary.
    % If the number of trees satisfies $\ntrees \geq 2\abs{\tsespace[1]}$, 
    If $r \geq \operatorname{diam}(\tsespace)$, then for any tolerance $\varepsilon > 0$, 
    % there exists some $C > 0$ not depending on $n$ such that 
    the lazification of the Markov chain induced by $\textbf{BART+}(\data,m,\sigma^2,\lambda,\bpi,p_{\text{TSE}},r,1,-)$ has $\varepsilon$-mixing time satisfying
    \begin{equation}
    \label{eq:multistep_mixing}
        t_{\operatorname{mix}}(\varepsilon) = O_\P(1).
        % \lim_{n \to \infty}\P_n\braces*{t_{\operatorname{mix}}(\varepsilon) \leq C} = 1.
    \end{equation}
\end{theorem}

\begin{theorem}[Upper bound from increasing temperature]
\label{thm:mixing_upper_bound_temperature}
    Assume $f^*$ is additive.
    Suppose $m \geq m'$, $\pi_g,\pi_p,\pi_s > 0$, and $p_{TSE}$ has full support, and let $ \sigma^2$, $\lambda$ be arbitrary.
    % If the number of trees satisfies $\ntrees \geq 2\abs{\tsespace[1]}$, 
    Suppose $n, T \to \infty$ such that $T \geq \log^{\beta} n$ for some $\beta > 0$, then for any tolerance $\varepsilon > 0$, 
    % there exists some $C > 0$ not depending on $n$ such that 
    the lazification of the Markov chain induced by $\textbf{BART+}(\data,m,\sigma^2,\lambda,\bpi,p_{\text{TSE}},1,T,-)$ has $\varepsilon$-mixing time satisfying
    \begin{equation}
    \label{eq:tempered_mixing}
    \log t_{\operatorname{mix}}(\varepsilon) = O_\P(\log^{1-\beta}\nsamples).
        % \lim_{n \to \infty}\P_n\braces*{t_{\operatorname{mix}}(\varepsilon) \leq \exp(C\log^{1-\beta}n)} = 1.
    \end{equation}
\end{theorem}

The proofs of these two theorems follow the same approach as that of Theorem \ref{thm:mixing_upper_bound_ntrees} and are deferred to Section \ref{sec:upper_bounds_proofs}.
Note that the upper bound in \eqref{eq:tempered_mixing} grows more slowly than any polynomial in $n$ and is thus an improvement over the $\Omega(n^{1/2})$ lower bound in Theorem \ref{thm:additive}.
On the other hand, the fact that we use a fixed temperature instead of the full simulated tempering framework means that the stationary distribution for the sampler analyzed in Theorem \ref{thm:mixing_upper_bound_temperature} is not the posterior and instead has the formula
\begin{equation} \label{eq:tempered_distribution}
    p(\tse;\by,T) \coloneqq \frac{p(\tse)p(\by|\tse)^{1/T}}{\sum_{\tse' \in \tsespace}p(\tse')p(\by|\tse')^{1/T} }.
\end{equation}
We call this the \emph{tempered posterior distribution} for BART.
Despite its difference from the original posterior, one can still show that it concentrates on the HDPR set (albeit at a slower rate) so long as $T$ does not grow too quickly.

\begin{proposition}[Tempered BART posterior concentration] \label{prop:tempered_posterior_conc_treespace}
    % For any $0 < \epsilon < 1$, there is a minimal sample size $N$ so that for all $n \geq N$ and $T \leq \log^{1-\epsilon}\nsamples$, with probability at least $1 - \epsilon$ with respect to $\P_n$, the tempered posterior distribution on $\tsespace$ satisfies
    % \begin{equation} \nonumber
    %     p(\opt(f^*,0));\by,T) > 1 - \epsilon.
    % \end{equation}
    For any $0 < \beta < 1$, if $T \leq \log^{\beta}\nsamples$ as $n,T \to \infty$,
    % there is a minimal sample size $N$ so that for all $n \geq N$ and $T \leq \log^{1-\epsilon}\nsamples$, with probability at least $1 - \epsilon$ with respect to $\P_n$, 
    the tempered posterior distribution on $\tsespace$ satisfies
    \begin{equation} \nonumber
        1 - p(\opt(f^*,0);\by,T) = \exp\paren*{- \frac{1}{2}\log^{1-\beta} n + O_\P(1)}.
    \end{equation}
\end{proposition}

% if $T = o(\log n)$, we have $\lim_{n \to \infty}\norm{p(-;\by,T) - p(-|\by)}_{\operatorname{TV}} = 0$.
% Under this constraint, therefore, the convergence bound \eqref{eq:tempered_mixing} holds simultaneously with respect to $p(-|\by)$.

\begin{remark}
    Similar to our lower bounds, we have not tried to optimize the assumptions of our upper bounds (such as the lower bound on $m$ in Theorem \ref{thm:mixing_upper_bound_ntrees}), nor have we attempted to investigate the dependence of the upper bounds on other data or algorithmic hyperparameters.
    On the other hand, constant dependence on $n$ is the best possible, which means that our upper bounds are tight in a certain sense.
\end{remark}

\section{Theoretical limitations and future work}
\label{sec:limitations}

In this section, we detail the limitations of our theoretical results, which naturally suggest directions for future work.
% All of these limitations, except the last two, apply to both our lower and upper bounds.
% We first discuss limitations that apply to both our lower and upper bounds.
We first note that all of our results focus purely on how the number of iterations required for sampler convergence scales with the number of training samples $n$. 
In other words, our paper does not address:
\begin{itemize}
\item \emph{Dependence on other DGP parameters.}
We did not analyze the dependence of our results on $\nfeats$, the dimension of the feature space, $\ncats$, the number of categories for each discrete feature, $s$, the sparsity of the true regression function, and $\xmeasure$, the covariate distribution.
\item \emph{Dependence on other algorithmic parameters.}
We did not analyze the dependence of our results on $p_{\text{TSE}}$, the prior on $\tsespace$, the move probabilities $\pi_g, \pi_p,\pi_c, \pi_s$, and the variance parameters $\sigma^2$ and $\lambda$.
\end{itemize}

% Our results hence mask potentially interesting patterns of dependence between the rate of sampler convergence and these other parameters, including how they may interact with $n$.
As such, it is possible that the modifications to the default sampler proposed in Section \ref{sec:upper_bounds} could result in worse dependence on these other parameters.
Furthermore, note that \emph{our results are asymptotic.} Both the hitting time lower bounds as well as the mixing time upper bounds hold only when the training sample size is larger than a minimum number $N$.
This number depends on the DGP and algorithmic parameters, and if tracked, can be exponentially large in some of them.
Together, these limitations imply that for a \emph{fixed} $n$, the proposed modifications to the sampler could potentially make its hitting/mixing time worse. 
In other words, they may not improve the overall effectiveness of the sampler.
% Furthermore, this means that for a \emph{fixed} $n$, the ``fixes'' to the default sampler, as analyzed in Section \ref{sec:upper_bounds}, could potentially make the hitting/mixing time worse.
% it is possible, indeed highly probable, that they obscure worsening dependence on these parameters under the modifications we designed to the sampler.
% In other words, these modifications may not improve the overall effectiveness of the sampler.

In addition to the limited scope of our theoretical analysis, its connection to how BART as used in practice is mediated by several factors:

\begin{itemize}
    \item \emph{Differences between the analyzed and in-practice BART samplers.} The differences are that:
\begin{enumerate}
    \item[(i)] We assume a fixed noise parameter $\sigma^2$ instead of putting a prior on it;
    \item[(ii)] At each iteration, we pick a random tree to update instead of cycling deterministically through the trees;
    \item[(iii)] We change the acceptance probability in the Metropolis filter to be in terms of the fully marginalized posterior on TSEs instead of being conditioned on the parameters of all other trees apart from the one being updated;
    \item[(iv)] Some theoretical results require a restriction of the move set to just ``grow'' and ``prune'' moves.
\end{enumerate}
% We believe that these differences to be relatively minor and present some empirical evidence of this claim in Section \ref{sec:sims}.
% and that they may even improve the mixing properties of BART, albeit at the cost of higher computational complexity per iteration.\footnote{Computing the marginal posterior involves solving a $d$-dimensional linear regression whereas computing the conditional posterior involves solving a univariate linear regression.}
\item \emph{Restriction to discrete covariates.}
We assumed that our covariate distribution was discrete, i.e. $\xspace = \braces*{1,2,\ldots,\ncats}^d$.
Many real datasets, of course, contain continuous features.
In this case, in-practice BART either restricts splits to observed feature values or computes a grid of quantile values for each continuous feature, and selects a split threshold only from among these values.
Although this effectively makes the covariate space discrete, it also means that the space varies with the training sample size $n$.
Our proofs rely heavily on the uniform concentration of all node and split-based quantities across a finite set and so does not automatically generalize to this setting.
\item \emph{Restriction to additive generative models.}
In most of our theoretical results, we assumed that our training data was generated according to an additive model.
While this is a widely-used class of statistical models, it is a stretch to say that it accurately describes most real-world datasets.
\item \emph{Failure (of our lower bounds) to address MSE and coverage directly.}
Let $\tse_0,\tse_1,\tse_2,\ldots$ denote the Markov chain induced by a run of BART.
For $j=0,1,2,\ldots$, let $h_j$ be a draw from the conditional posterior on regression functions, $p(f|\tse_j,\by)$.
The BART algorithm returns the collection $\mathcal{S}_{\operatorname{post}} \coloneqq \braces*{h_{t_\text{burn-in}+1}, h_{t_\text{burn-in}+2},\ldots h_{t_{\text{max}}}}$,
% \begin{equation} \label{eq:BART_output}
%     \braces*{h_{t_\text{burn-in}+1}, h_{t_\text{burn-in}+2},\ldots h_{t_{\text{max}}}},
% \end{equation}
 where $t_{\text{burn-in}}$ is the number of burn-in iterations to be discarded.
The final fitted function is the mean of this collection, while approximate credible prediction intervals are derived from quantiles.
While our hitting time lower bounds imply that the Markov chain on TSEs fails to converge and hence that the $p(f|\tse_j,\by)$'s are separately suboptimal, this does not preclude the \emph{mean} of $\mathcal{S}_{\operatorname{post}}$ having good MSE performance.
It also does not address coverage of the approximate credible intervals.
As such, it is possible that the BART output nonetheless performs well on these metrics prior to reaching the HDPR hitting time.

\end{itemize}

Due to these limitations, our theoretical results should not be seen as a definitive assessment of the sampler's quality, but rather as a starting point for theoretical analysis and a guide for its potential improvement.
% In the next section, we will bridge some of these limitations and thereby strengthen the practical significance of our results, we further contribute:
% Our results give theoretical and empirical support to some of the choices that BART practitioners often already make.
% Specifically, they support the use of more trees in the BART ensemble, and they support the use of multiple BART sampler chains.
% How to select the optimal number of trees and chains is an intriguing and important question and will be left to future work.
% In addition, our results advocate for the design of better BART samplers and suggest possible approaches for improvement.
% First, we believe that there is great potential in exploring various forms of temperature control.
% This could take the form of simulated annealing \citep{van1987simulated}, or simulated tempering \citep{marinari1992simulated}, which has previously been explored for Bayesian CART \citep{angelopoulos2005tempering}, but has yet to be adapted to BART.
In addition to our proposed modifications, we believe that the proposal distribution should favor more promising split directions, instead of being uniform at random.
There is now a vast literature on how using gradient and even Hessian information can help to accelerate MCMC in continuous state spaces (see for instance \cite{neal2011mcmc}), and there is recent work in extending this to discrete spaces \citep{zanella2020informed}.
Third, we believe instead of constraining the proposal distribution to ``local'' moves, it could benefit from incorporating moves that alter tree structures more drastically.
This has been explored somewhat by \cite{kim2023mixing}.

\section{Simulations}
\label{sec:sims}

\subsection{Simulations summary}

In this section, we describe the results of a simulation study of the BART sampler designed to bridge some of the theoretical limitations raised in the previous section.
Specifically, we want to investigate whether the trend identified theoretically in Section \ref{sec:upper_bounds}---that convergence time increases with $n$---occurs for real datasets and for the in-practice BART sampler with default hyperparameter choices.
We further investigate the sensitivity of this trend to various changes in the algorithm hyperparameters and implementation choices, including (1) temperature, (2) number of trees, (3) feature selection prior, (4) number of burn-in iterations, (5) initialization, (6) discretization strategy, (7) choices of proposal moves, (8) using the fully marginalized vs. conditional posterior in the Metropolis-Hastings acceptance probability.

% explore solutions to improve the mixing of the BART MCMC sampler, explore the impact of different hyperparameters and implementation choices, and bridge some of the theoretical limitations raised in the previous section.
% We found it difficult to formulate a meaningful and efficiently computable definition of convergence time when working with continuous features.
Instead of measuring time to convergence, we found it more convenient to measure the degree of convergence of the sampler at a fixed number of iterations.
Following longstanding practice in the Monte Carlo literature, we quantify degree of convergence using
% To measure the computational efficiency of the sampler under each setting, we perform multiple independent runs of the sampler, each for a fixed number of iterations, and make use 
the Gelman-Rubin statistic $\hat R$ \citep{gelman1992inference}, which compares the within-chain variance against between-chain variance for multiple independently run chains.
Since $\hat R$ is defined for scalar Markov chains, we first compute a scalar summary of each sampler iterate via its RMSE evaluated on a held-out test set and then compute $\hat R$ for the RMSE sequences thus obtained.
Smaller $\hat R$ values indicate better mixing, with $\hat R = 1.1$ being the conventional threshold below which a Markov chain is considered to have ``mixed well'' \citep{gelman1992inference}.
To explore the consequences of potential poor mixing on the predictive performance of the model, we used the held-out test set to evaluate the empirical coverage values for level 0.95 posterior prediction intervals, in addition to RMSE.\footnote{For simulated datasets, we instead calculate RMSE and empirical coverage with respect to the true function values.}
We note, however, that RMSE and coverage are also affected by how the hyperparameters modify the target posterior in addition to the sampler's dynamics.
More details on these metrics can be found in Appendix \ref{sec:additional_sims}.

% In order to study the mixing of the BART sampler, we study the Gelman-Rubin statistic calculated over the test set RMSE of the MCMC samples of different chains. For predictive performance and inference, we study the RMSE of the posterior mean, with respect to the true regression function, and the empirical coverage of pointwise posterior credible intervals for the true regression function.
% We do this for the output of the original BART algorithm, investigating how these different metrics vary according to various DGPs and algorithmic parameters.
Some of our key findings are as follows:
\begin{longlist}
    \item $\hat R$ indeed increases with $n$ for the in-practice BART sampler and across all datasets tested. The strength of the trend as well as the magnitude of the $\hat R$ values varies substantially across datasets.
    \item Increasing the temperature of the BART MCMC sampler either by a constant factor, or with a time varying schedule dampens the increasing trend for $\hat R$, improves coverage, and sometimes even RMSE.
    % \item The degree of improvement in $\hat R$ from doubling temperature is positively correlated with the signal to noise ratio of the underlying data-generating process.
    % , which agrees with our explanation of the increasing $\hat R$ trend in Section \ref{sec:proof_sketch}.
    % suggesting that lower signal to noise ratios can act as an increase in temperature.
    \item Increasing the number of trees consistently dampens the trend in $\hat R$. Its effect on coverage and RMSE is ambiguous.
    % and improves coverage.
    \item Initializing the chains from a fitted XGBoost ensemble (rather than the default trivial ensemble) or simply increasing the number of burn-in iterations can increase $\hat R$ values and exaggerate their increasing trend.
    We believe this suggests that the chains become stuck around different local maxima for the target posterior.
    \item The increasing trend for $\hat R$ is relatively robust to the factors (6), (7), and (8) mentioned above.
    % \item Initializing the chains from a strong ensemble rather than a stump can increase predictive performance, but lead the chains to become stuck suboptimal local minima and reduce the mixing. 
    % \item Use of a dirichlet prior can improve performance moderately in DGPs where the underlying function exhibits sparsity.
\end{longlist}

In the rest of this section, we describe and present the results of experiments that lead to our findings (i), (ii), and (iii), with the description of all other experiments deferred to Section \ref{sec:additional_sims} in the appendix.
We also note that the experiments leading to finding (v) also provide evidence that while we made certain modifications to the sampler---limiting the proposal moves or using the full marginal likelihood formula in the Metropolis-Hastings filter---for the sake of theoretical tractability, these do not substantially affect either $\hat R$ or the predictive properties of the MCMC approximate posterior.

\subsubsection{Code availability}
\label{sec:sim_code}

All the code necessary to reproduce the experiments in this section is publicly available at \href{https://github.com/yanshuotan/bart-comp-efficiency/}{\url{https://github.com/yanshuotan/bart-comp-efficiency/}}.
The computing infrastructure used was a Linux cluster managed by the Department of Statistics at UC Berkeley. The runs used both 24-core nodes with 128 GB of RAM, and large-memory nodes with 792 GB RAM and 96 cores.

\subsubsection{Algorithm settings and hyperparameters}
We use a custom implementation of BART (included with our experiments code) and, unless stated otherwise, adopt the hyperparameter configurations suggested by \citet{chipman2010bart}.
To be precise, we set the number of trees, $m=200$ and use the other prior hyperparameters detailed in Section \ref{subsec:differences_bart}.
% The tree prior parameters are set to $\alpha = 0.95$ and $\beta = 2.0$.
% The responses are centered and scaled to have variance one. 
% The prior for the noise variance is calibrated using the residuals from a linear regression model fit on all predictors.\footnote{This is standard for most software implementations of the algorithm. Specifically, we set the hyperparameters of an inverse-gamma prior such that the qth quantile (q=0.9) of the implied prior on $\sigma$ equals the residual standard deviation from the linear fit.}
We use the sorted unique values of each feature in the training data as candidate thresholds for splits.
The responses are scaled to lie between $-.5$ and $.5$.
We set the proposal move type probabilities to be $\pi_g = \pi_p = 0.25$, $\pi_c = 0.4$, $\pi_s = 0.1$.
For each run of each experiment, we run 8 independent chains in parallel, discard the first 1000 burn-in iterations and use the subsequent 10,000 iterations to calculate convergence diagnostics. 
% and the number of posterior samples is \param{ndpost}{10000}.
% The number of burn-in iterations is set to \param{nskip}{1000}, and we run \param{nchain}{8} parallel chains to calculate convergence diagnostics and mixing analysis.
In each experiment, we typically vary a single hyperparameter, keeping all other hyperparameter choices constant at these specified values.
The results are averaged over 25 replicates.
% Unless otherwise noted in the experiment description, the number of trees is kept equal to $m=200$.

% For experiments varying specific parameters (temperature, schedule, initialization, number of trees, burn-in, prior), all other hyperparameters are held constant at the values specified above. This design allows us to isolate the effect of each parameter.

\subsubsection{Data}
We conducted experiments on six datasets: four large-scale real-world datasets from the PMLB benchmark \citep{romano2020pmlb} and two synthetic datasets generated from known data-generating processes (DGPs).
The characteristics of the real-world datasets are summarized in Table~\ref{tab:primary_datasets}.
For these datasets, we held out a fixed 10\% test set. Training sets of size \( n = 200, 500, 1000 \), and \( 10{,}000 \) were generated by subsampling without replacement from the remaining 90\%. The \textit{Satellite Image} dataset was an exception, where the maximum sample size was limited to \( n = 5{,}000 \) due to its smaller total size. For the synthetic datasets, we generated a single, independent test set of 10,000 observations. Training sets of the target sizes were then drawn independently from the same DGP.
% The majority of our experiments are conducted using six datasets: four large-scale real-world datasets from the PMLB benchmark \citep{romano2020pmlb} and two synthetic datasets generated from known DGPs. For each dataset, we evaluated performance across training sample sizes of \( n = 200, 500, 1000, \text{ and } 10{,}000 \). For the real-world datasets, we reserved a fixed 10\% of the data as a test set and subsampled the remaining 90\% to obtain training sets of the desired sizes. 
% The 10\% of the data set that is sampled as the test set remains fixed across iterations, and each training set is a different sample of size $n_{train}$ sampled without replacement from the remaining 90\% of the data set.
% For one of our real data sets, Satellite Image, the number of observations is not large enough to sample 10{,}000 observations, so the largest sample size we use there is 1{,}000.
% For the synthetic datasets, we generated a separate test set consisting of 10{,}000 observations drawn independently from the corresponding DGP.
% The majority of our experiments use six datasets.
% These comprise four real-world benchmarks and two synthetic data generating processes (DGPs). Table~\ref{tab:primary_datasets} summarizes the characteristics of the real-world datasets.
% To create a test set for the real-world data sets we use a fixed sample of 10\% of the data set, and subsample the remaining data for training samples of various sizes.
% For the synthetic data sets, we generate fixed 10,000 observations from the DGP to use as the test set.
The two DGPs are borrowed from recent papers on heterogeneous treatment effect modeling---a key application area for BART---and are detailed as follows:

\begin{longlist}
    \item \textit{Low-Dimensional Smooth} \citep{lei2021conformal}
Let $\bx_i \in \R^{10}$, $\bx_i \sim \mathcal{N}(0,\bSigma)$ with $\Sigma_{ii} = 1$, $\Sigma_{ij} = 0.01$ for $i \neq j$.
The response is generated as $y_i = g(x_{i,0}) \cdot g(x_{i,1}) + \epsilon_i$, where $g(x) = 2/(1 + \exp(-12(x - 0.5)))$.
The noise variance is calibrated to achieve a signal-to-noise ratio of 3.
    \item \textit{Piecewise Linear} \citep{kunzel2019metalearners}
We generate $\bx_i \in \R^{20}$ as a Gaussian copula; that is, let $\tilde\bx_i \in \R^{20}$, $\tilde\bx_i \sim \mathcal{N}(0,\bSigma)$ with $\Sigma_{ii} = 1$, $\Sigma_{ij} = 0.01$ for $i \neq j$ and set $x_{ij} = \Phi(x_{ij})$, where $\Phi$ is the standard normal CDF.
The response is generated as a piecewise linear function based on the last covariate:
\begin{align}
    y_i = \begin{cases}
\bx_i^\top \bP_{1:5}\bbeta_l + \epsilon_i & \text{if } {x}_{i,20} < -0.4 \\
\bx_i^\top \bP_{6:10}\bbeta_m + \epsilon_i & \text{if } -0.4 \leq {x}_{i,20} < 0.4 \\
\bx_i^\top \bP_{11:15}\bbeta_u + \epsilon_i & \text{if } {x}_{i,20} \geq 0.4.
\end{cases}
\end{align}
Here, $\bP_{j:k}$ is a diagonal projection matrix that zeroes out all entries apart from those in positions $j$ to $k$.
% with
% \begin{align*}
%     \bbeta_l(i) = \begin{cases}
% \bbeta(i) & \text{if } i \leq 5 \\
% 0 & \text{otherwise}
% \end{cases}
% \qquad
% \bbeta_m(i) = \begin{cases}
% \bbeta(i) & \text{if } 6 \leq i \leq 10 \\
% 0 & \text{otherwise}
% \end{cases}
% \qquad
% \bbeta_u(i) = \begin{cases}
% \bbeta(i) & \text{if } 11 \leq i \leq 15 \\
% 0 & \text{otherwise}
% \end{cases}
% \end{align*}
The coefficient vector $\bbeta$ has components sampled independently from $\text{Uniform}[-15, 15]$, with the same random seed used across all Monte Carlo replications.
The noise variance is set equal to 1.
% Coefficients are generated deterministically using fixed seeds, 
% (1, 2, 3 for $\bbeta_1, \bbeta_2, \bbeta_3$ respectively), 
% ensuring that all Monte Carlo replications use identical coefficient values.
\end{longlist}
% For Low-Dimensional Smooth, the noise variance $\eta^2$ is calibrated to achieve a signal-to-noise ratio of 3, for Piecewise Linear, the noise variance $\eta^2$ is set to 1.
% $\text{SNR} =  3$.
% Training sample sizes for synthetic data are $n \in \{200, 500, 1000, 10000\}$.

\begin{table}[h]
\centering
\caption{Real-world datasets used in the primary experiments. \\ $n$ denotes the number of observations and $p$ the number of features.}
\label{tab:primary_datasets}
\begin{tabular}{lccc}
\hline
Dataset & $n$ & $p$  \\
\hline
Breast Tumor & 116,640 & 9  \\
Echo Months & 17,496 & 9  \\
Satellite Image & 6,435 & 36 \\
California Housing & 20,640 & 8 \\
\hline
\end{tabular}
\end{table}

\subsection{Experiment 1: Varying temperature}

% This experiment investigates the effect of temperature scaling on BART's MCMC sampler, both with constant temperatures and adaptive schedules. Temperature scaling modifies the acceptance probability of proposed tree structures by raising the likelihood to the power $1/T$, where $T$ is the temperature parameter. Higher temperatures flatten the posterior distribution, making it easier for the sampler to escape local modes and explore the parameter space.
Recall the definition of the acceptance probability with temperature $T$ in \eqref{eq:temperature_T_MH_ratio}.
% \begin{equation*} 
%     \widetilde{\alpha}_{r,T}(\tse,\tse') \coloneqq \min\braces*{\frac{Q^r(\tse',\tse)p(\tse')}{Q^r(\tse,\tse')p(\tse)}\paren*{\frac{p(\by|\tse')}{p(\by|\tse)}}^{1/T}, 1}.
% \end{equation*}
We investigate the effect of temperature in the BART sampler, comparing the default (\(T=1.0\)) to constant temperatures of \(T=2.0\) and \(T=3.0\). We also evaluate a variable temperature schedule that decays linearly from \(T_{\max}=3.0\) to \(T_{\min}=1.0\) over the MCMC iterations. This schedule encourages high exploration early during the burn-in phase before converging to the standard sampler.

% \subsubsection{Results}

\begin{figure}[H]
    \centering
    \begin{tabular}{c}
    % California Housing
    \begin{minipage}{0.9\textwidth}
        \centering
        \includegraphics[width=0.32\linewidth]{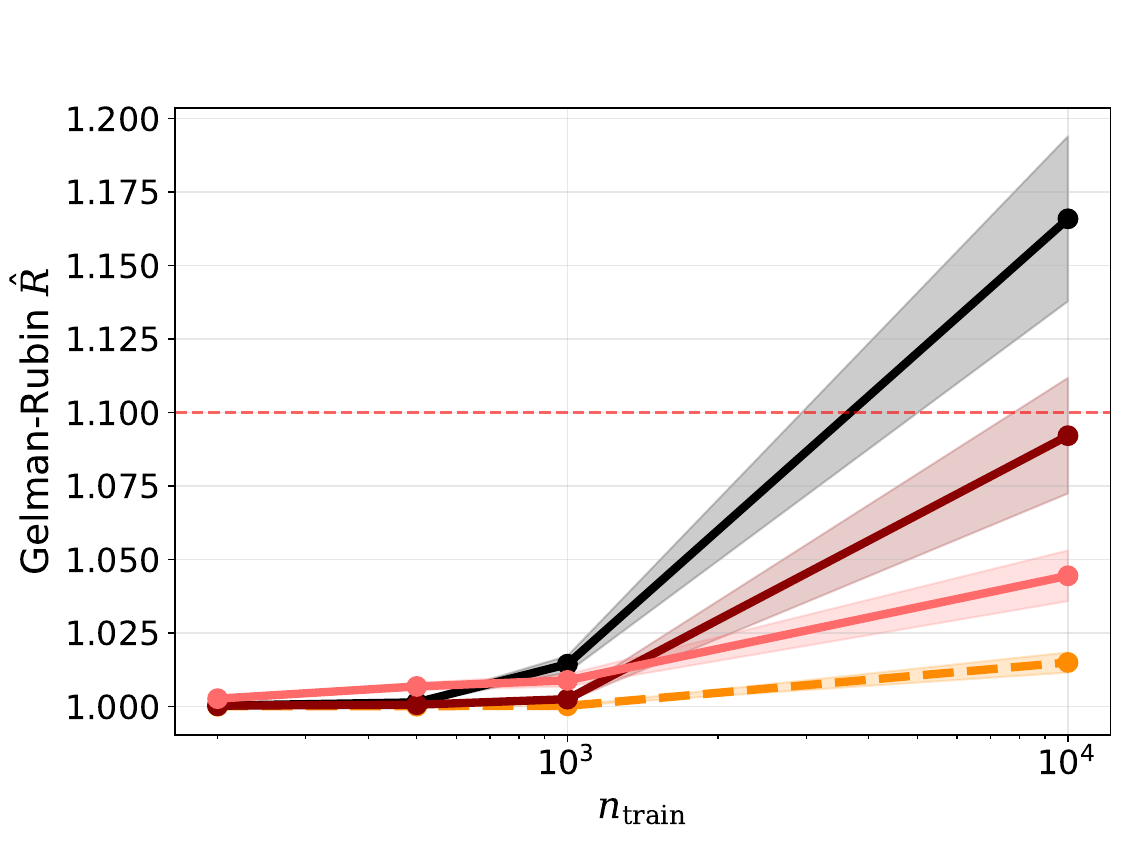}%
        \hfill
        \includegraphics[width=0.32\linewidth]{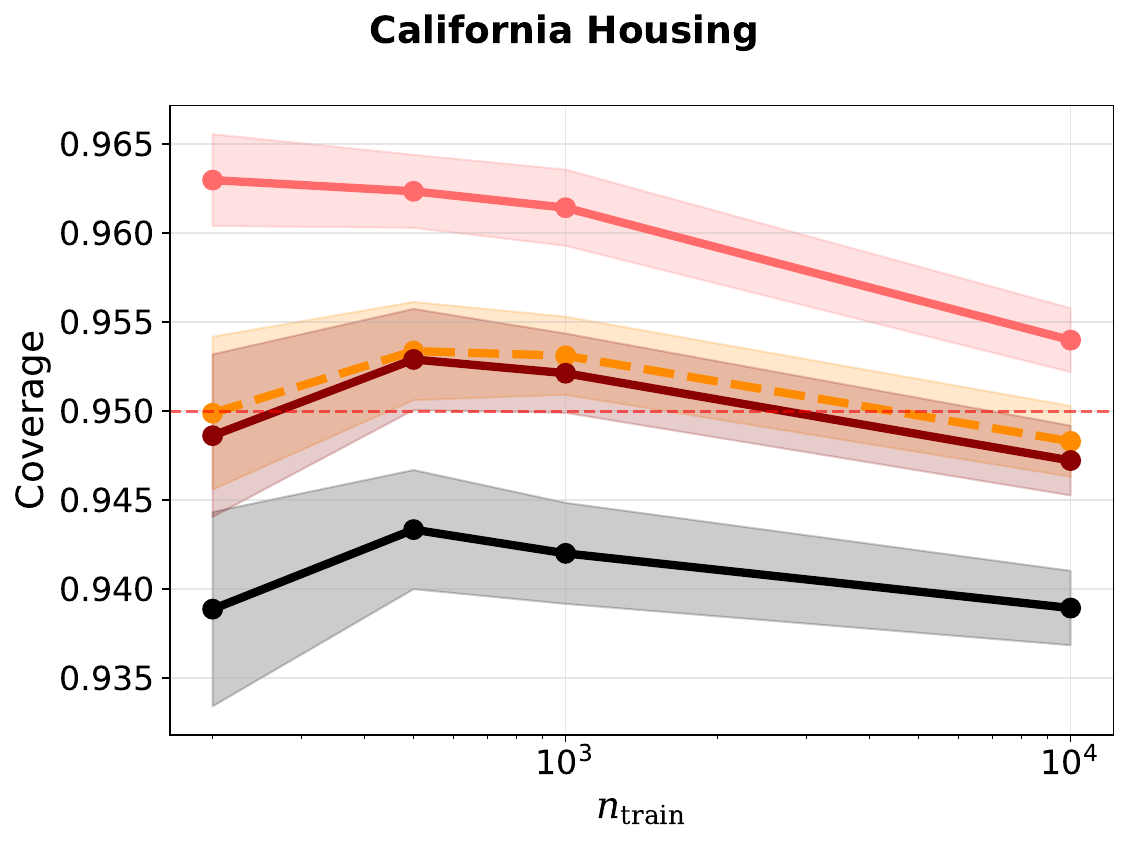}
        \hfill
        \includegraphics[width=0.32\linewidth]{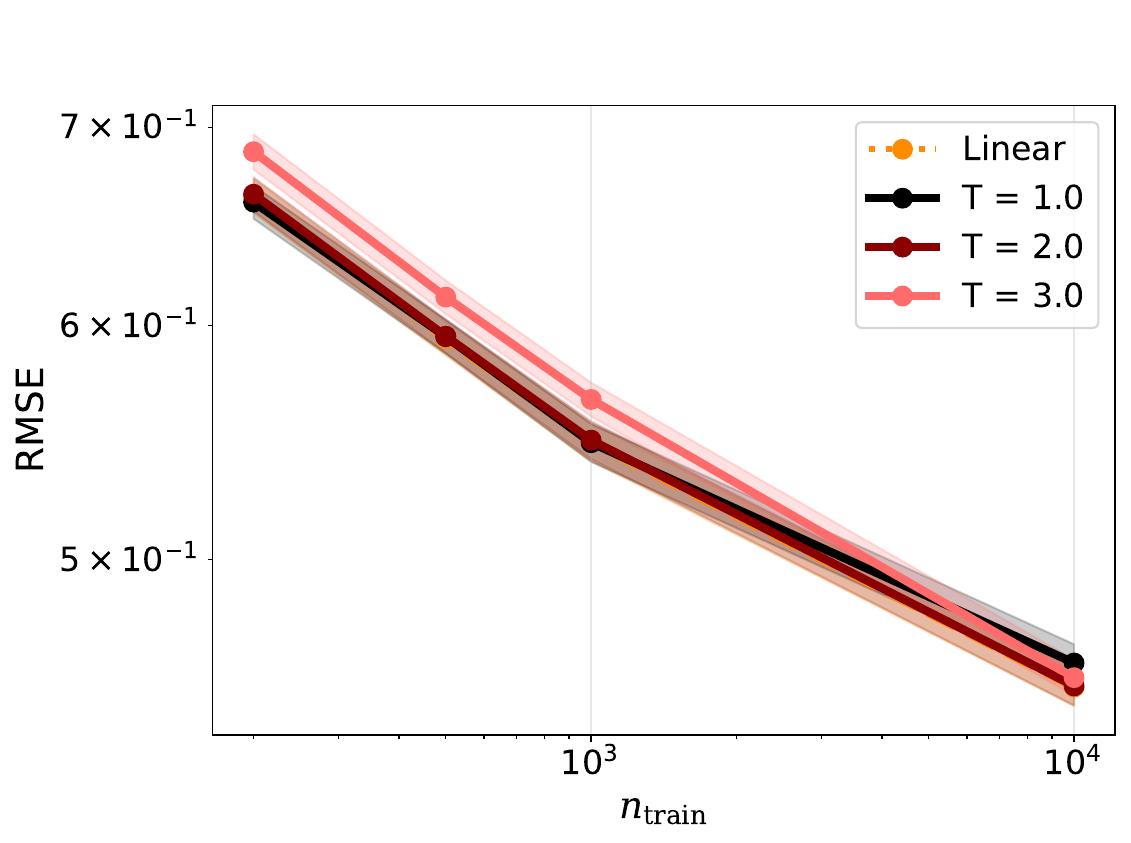}%
    \end{minipage} \\
    %\midrule
    % Low Lei Candes
    \begin{minipage}{0.9\textwidth}
        \centering
        \includegraphics[width=0.32\linewidth]{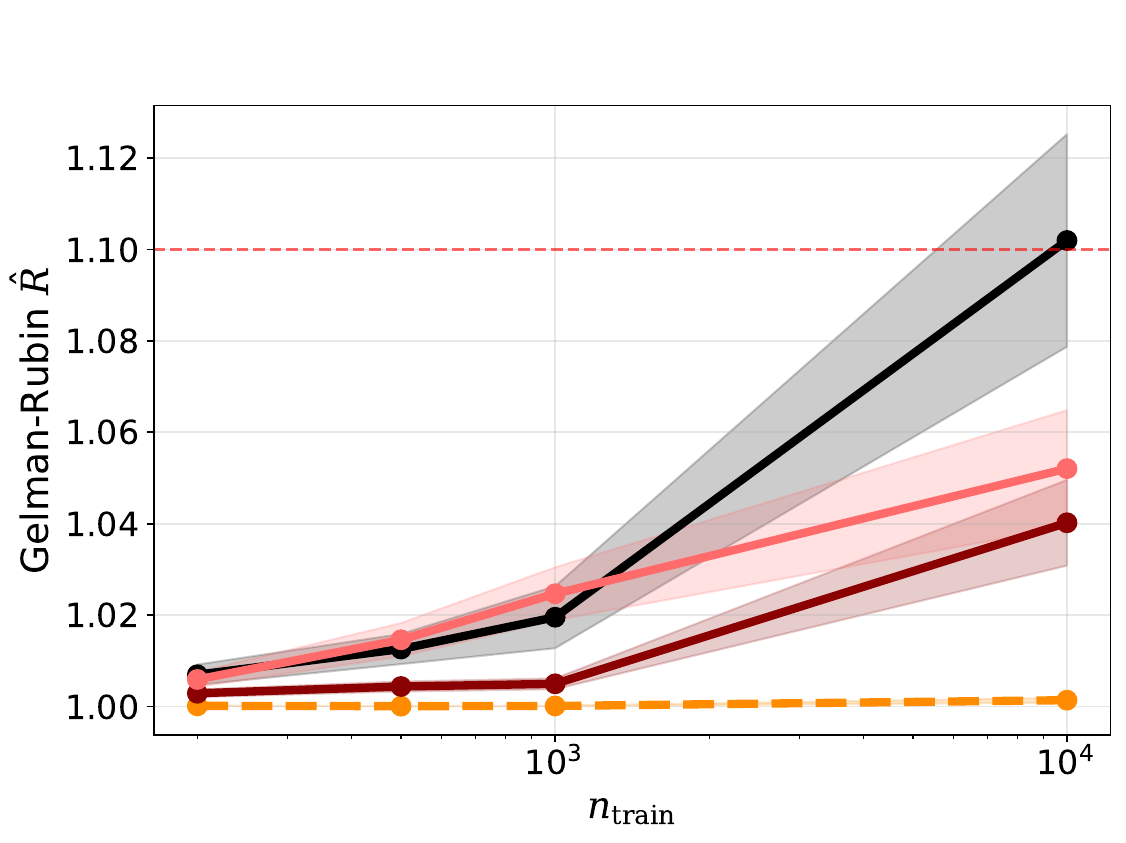}%
        \hfill
        \includegraphics[width=0.32\linewidth]{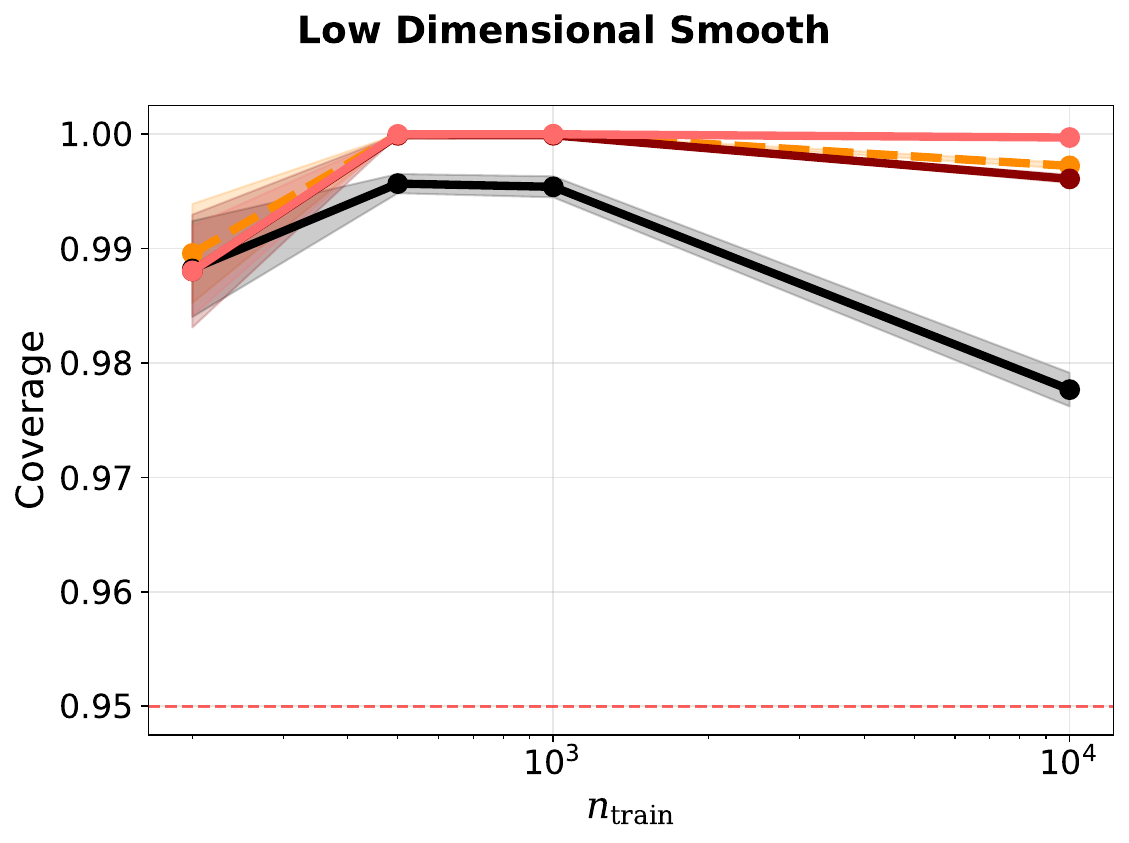}
        \hfill
        \includegraphics[width=0.32\linewidth]{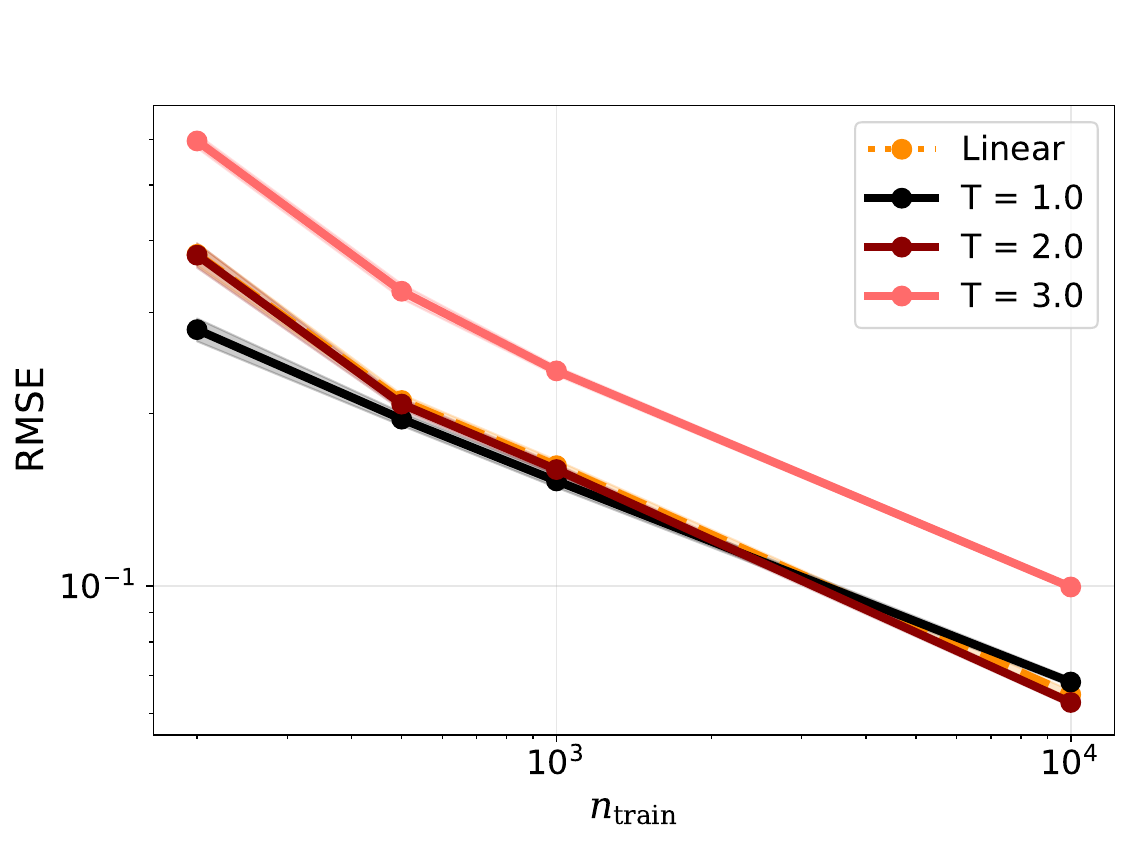}%
    \end{minipage} \\
    %\midrule
    % 1199 BNG echoMonths
    \begin{minipage}{0.9\textwidth}
        \centering
        \includegraphics[width=0.32\linewidth]{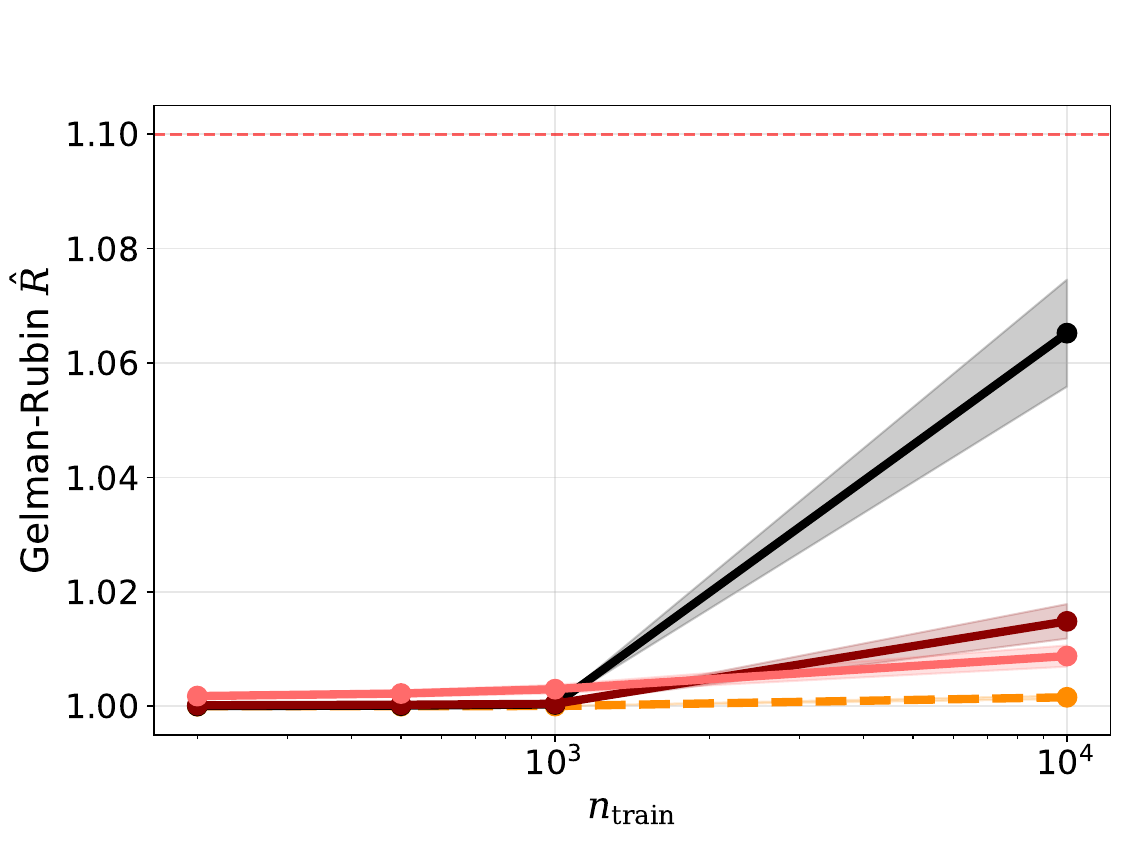}%
        \hfill
        \includegraphics[width=0.32\linewidth]{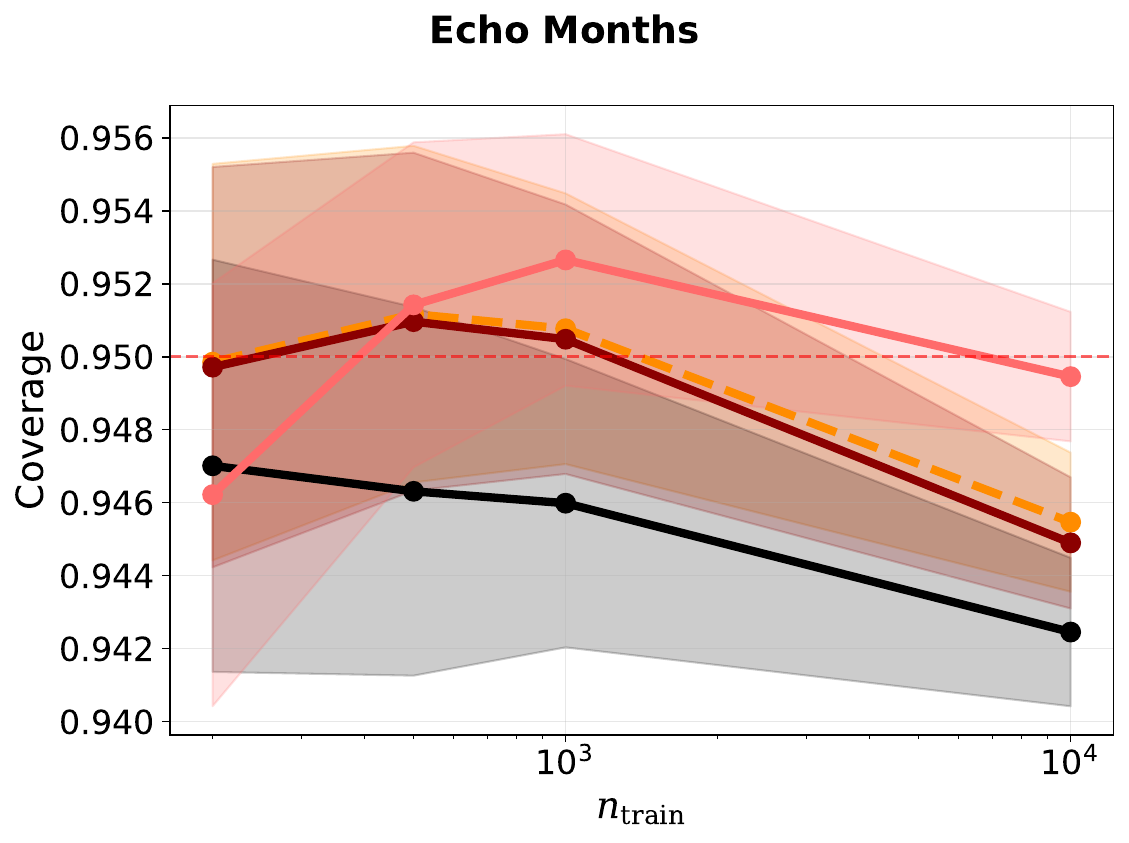}
        \hfill
        \includegraphics[width=0.32\linewidth]{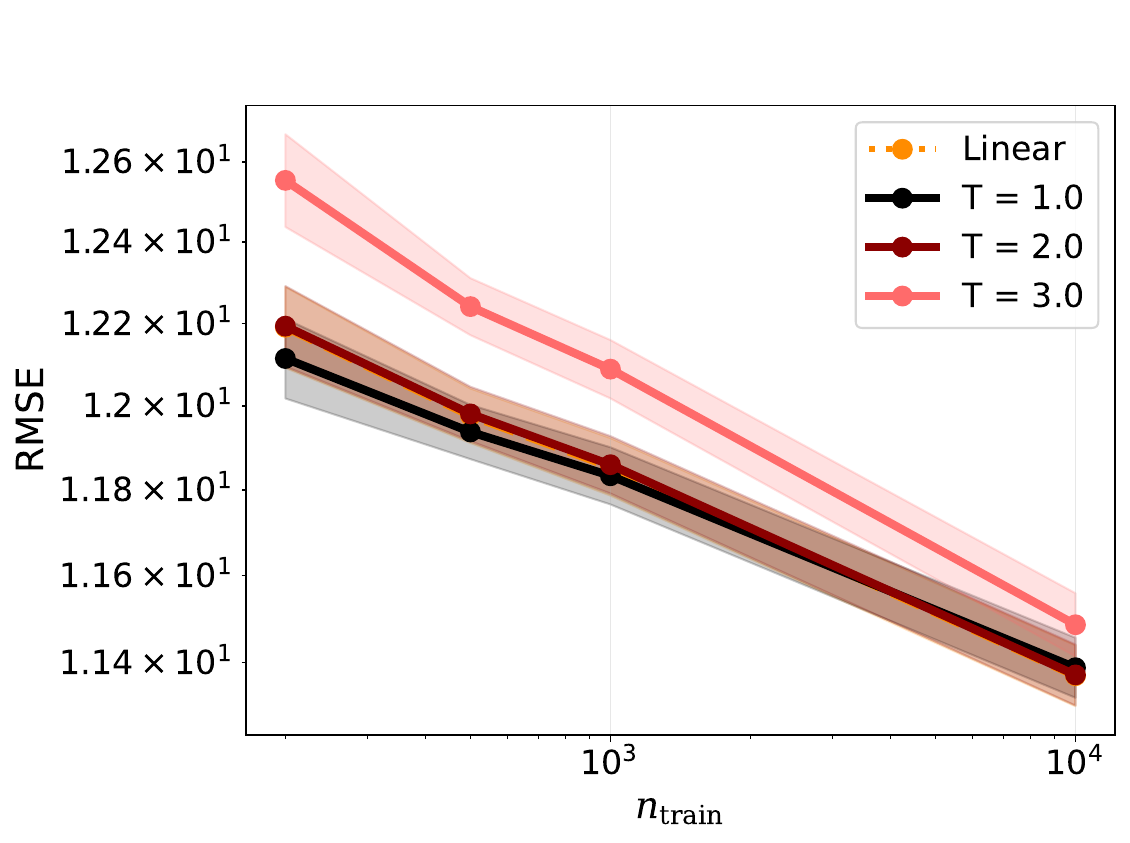}%
    \end{minipage} \\
    \end{tabular}
    \caption{
    Values for Gelman-Rubin $\hat R$ (left), coverage (center), and RMSE (right) for the BART sampler under different fixed temperatures ($T \in \braces{1,2,3}$) as well as a linear temperature schedule.
    Results are plotted for the California Housing (top), Low-Dimensional Smooth (middle), and Echo Months (bottom) datasets.
    Error bars represent $\pm 1.96$ standard errors from 25 replicates.
    }
    \label{fig:experiment1}
\end{figure}

The results of the experiment for three of the six datasets are shown in Figure \ref{fig:experiment1}, with the rest of the results deferred to Figure \ref{fig:experiment1_additional} in the appendix.
% The results for the cosine temperature schedule are highly similar to that for the linear schedule and are hence omitted to avoid clutter.
We see that under default settings ($T=1$), there is an increasing trend for $\hat R$, although the strength of the trend as well as the magnitude of the $\hat R$ values varies substantially across datasets.
Increasing the temperature has a clear dampening effect on this trend, with the best $\hat R$ results achieved by the linear temperature schedule. 
% Increasing the temperature of the BART MCMC sampler, either by a constant value or with a schedule, greatly improves the mixing of the chain and the coverage of credible intervals. At the standard temperature $T=1.0$, Gelman-Rubin statistics frequently exceed 1.1 across datasets, indicating poor mixing. Raising the temperature to $T=2.0$ or $T=3.0$ substantially reduces $\hat{R}$ toward 1, with the most pronounced improvements observed on complex real data sets, with relatively high signal to noise ratios. 
Coverage also improves with higher temperatures, approaching the nominal 95\% level more closely, suggesting that better exploration of the posterior leads to more reliable uncertainty quantification.\footnote{Note that temperature does only affects the acceptance probability for sampling tree structures. The sampling of leaf parameters as well as the noise variance parameter is unaffected by temperature.}
Finally, while the default temperature often achieves the best RMSE performance, this is not always the case, especially for the largest values of $n$.
% Adaptive temperature schedules provide similar benefits, with cosine annealing showing particularly strong performance by balancing exploration and exploitation.

To supplement these results, we replace the RMSE values used to calculate $\hat R$ with other summary statistics, namely the 0.05, 0.25, 0.50, 0.75, and 0.95 quantiles for the fitted responses on the held-out test set.
In Figure \ref{fig:experiment1_bar}, we plot these $\hat R$ values calculated using 10000 posterior samples.
The results for the rest of the datasets can be found in Figure \ref{fig:experiment1_bar_additional} in the appendix.
Observe that there is often significant heterogeneity in the $\hat R$ values for different quantiles, but increasing the temperature uniformly decreases the $\hat R$ values across all quantiles.
% , although t, with the biggest gains typically occurring in the 0.95 quantile, which seem to be the most affected by poor mixing.
% \omer{the above is not true}.
\begin{figure}[H]
    \centering
    \begin{tabular}{c}
    % California Housing
    \begin{minipage}{0.9\textwidth}
        \centering
        \includegraphics[width=0.32\linewidth]{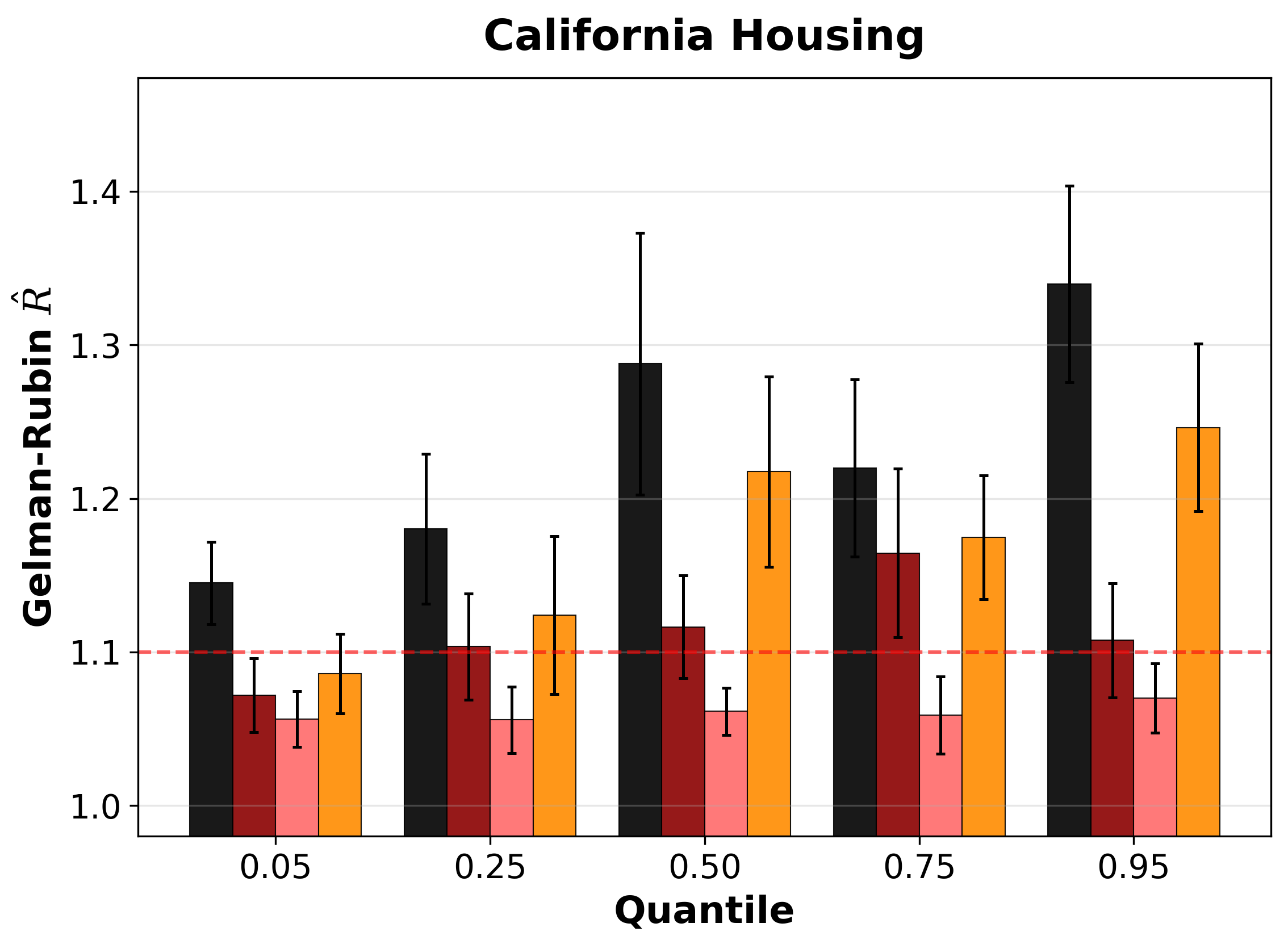}%
        \hfill
        \includegraphics[width=0.32\linewidth]{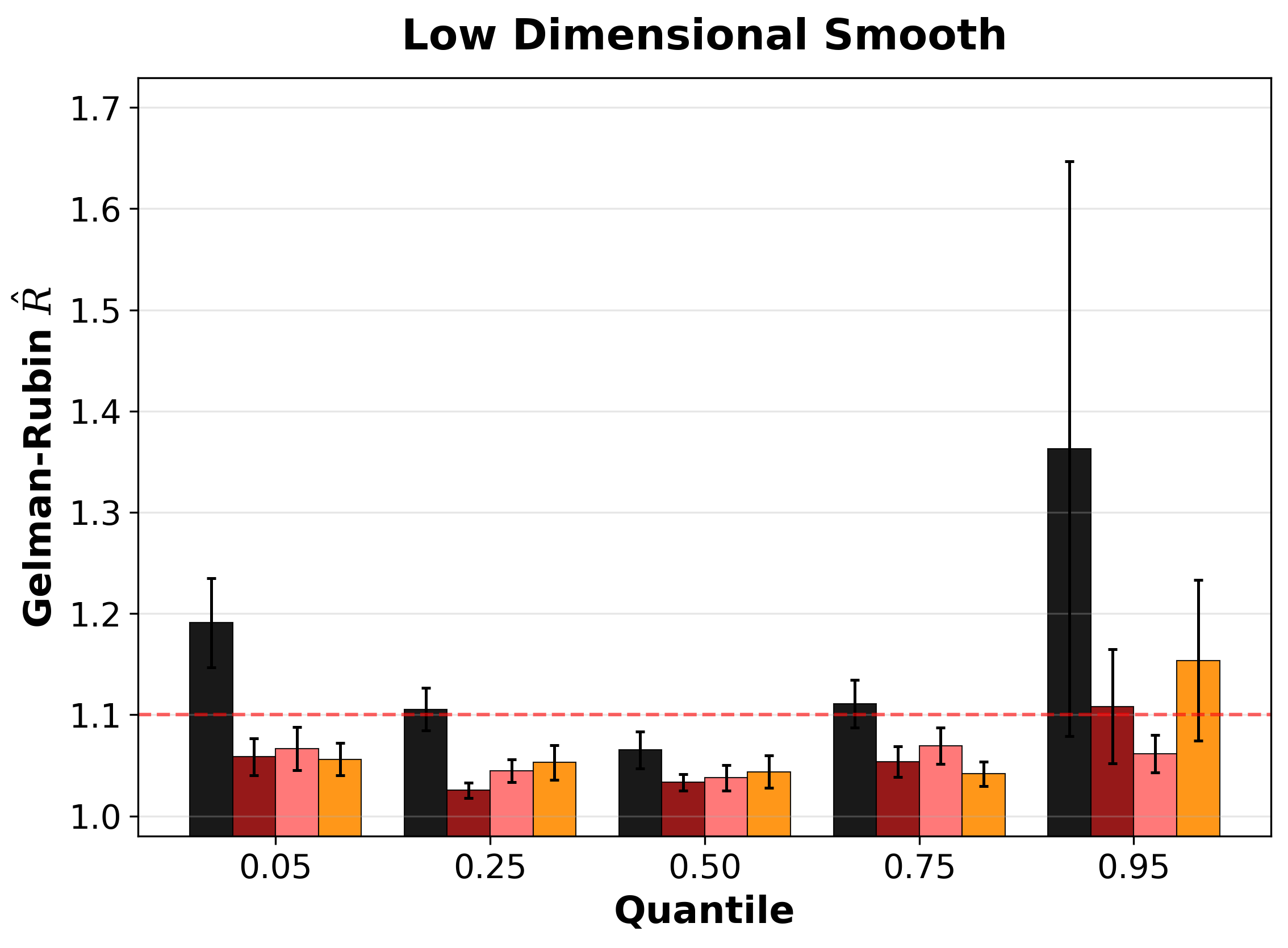}
        \hfill
        \includegraphics[width=0.32\linewidth]{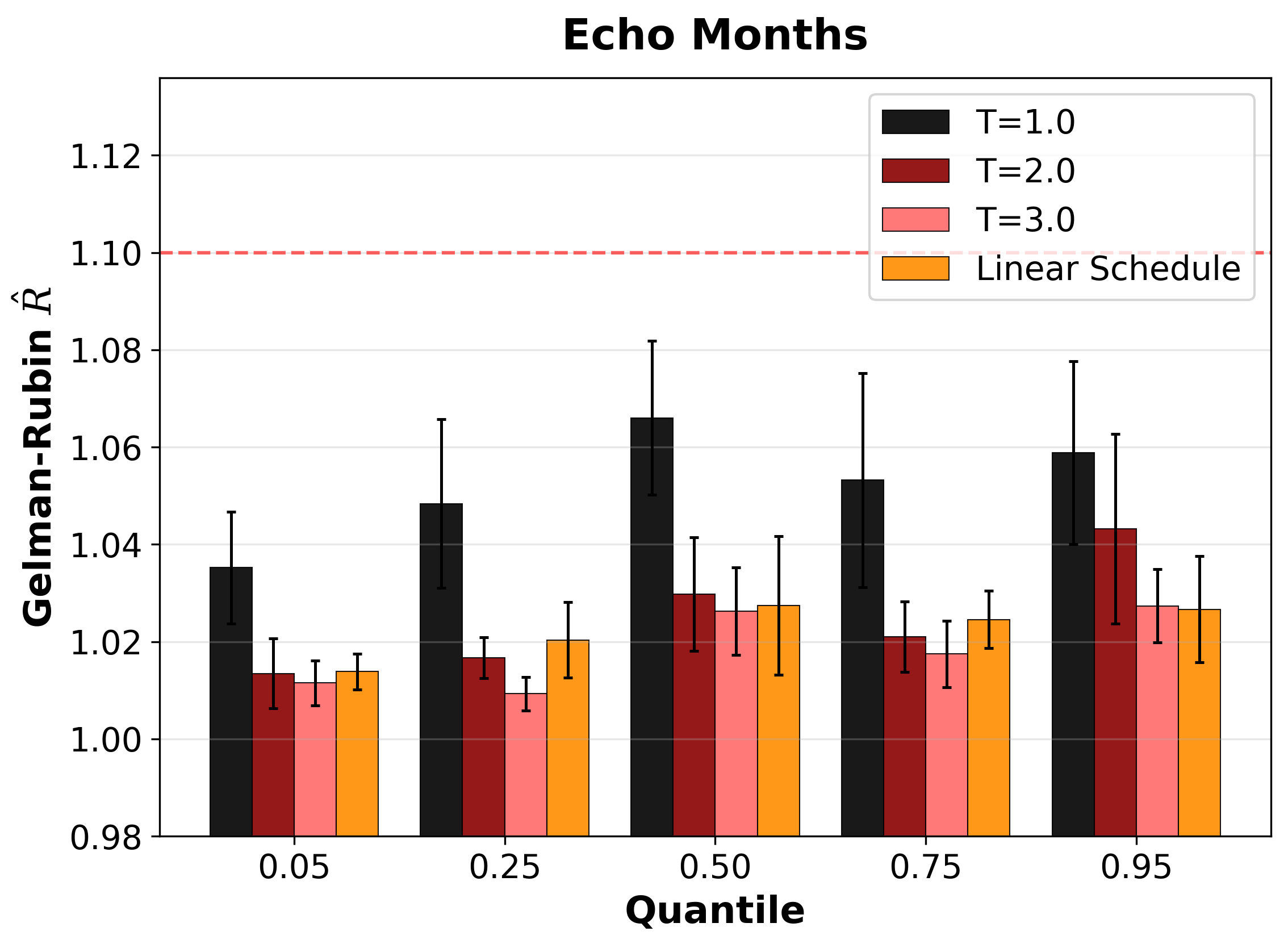}%
    \end{minipage} \\

    \end{tabular}
    \caption{
    Values for Gelman-Rubin $\hat R$ for the BART sampler under different fixed temperatures ($T \in \braces{1,2,3}$) when $\hat R$ is computed with respect to the 0.05, 0.25, 0.50, 0.75, and 0.95 quantiles for the fitted responses on the held-out test set.
    Results are plotted for the California Housing (left), Low Dimensional Smooth (center), and Echo Months (right) datasets.
        }
    \label{fig:experiment1_bar}
\end{figure}

\subsection{Experiment 2: Varying number of trees}

% This experiment investigates how the number of trees in the BART ensemble affects predictive performance, uncertainty quantification, and MCMC mixing. The number of trees $m$ is a fundamental hyperparameter that controls the model's representational capacity: larger ensembles can represent more complex functions by distributing the modeling task across more trees, but also increase the dimensionality of the parameter space and computational cost.

We vary the number of trees over \( m \in \{10, 20, 50, 100, 200, 500\} \), ranging from small ensembles to sizes exceeding typical practice.

% \subsubsection{Results}

\begin{figure}[H]
    \centering
    \begin{tabular}{c}
    % California Housing
    \begin{minipage}{0.9\textwidth}
        \centering
        \includegraphics[width=0.32\linewidth]{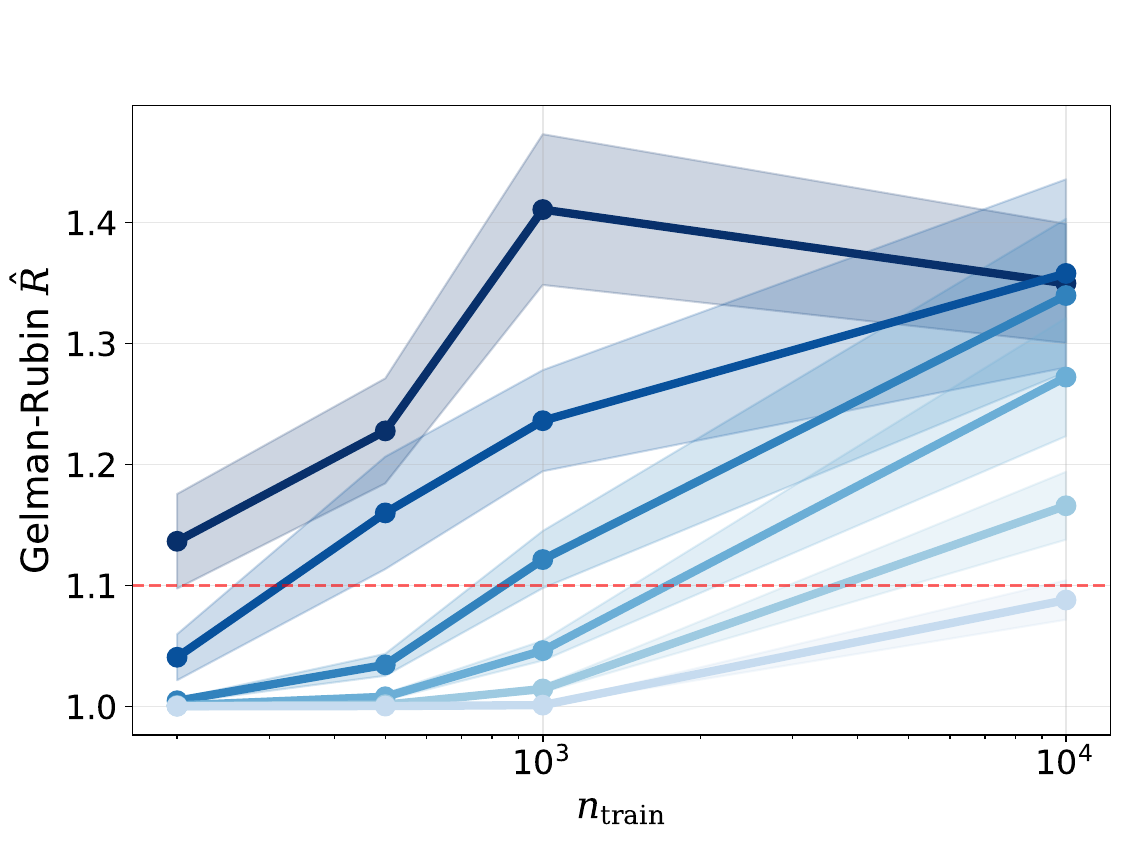}%
        \hfill
        \includegraphics[width=0.32\linewidth]{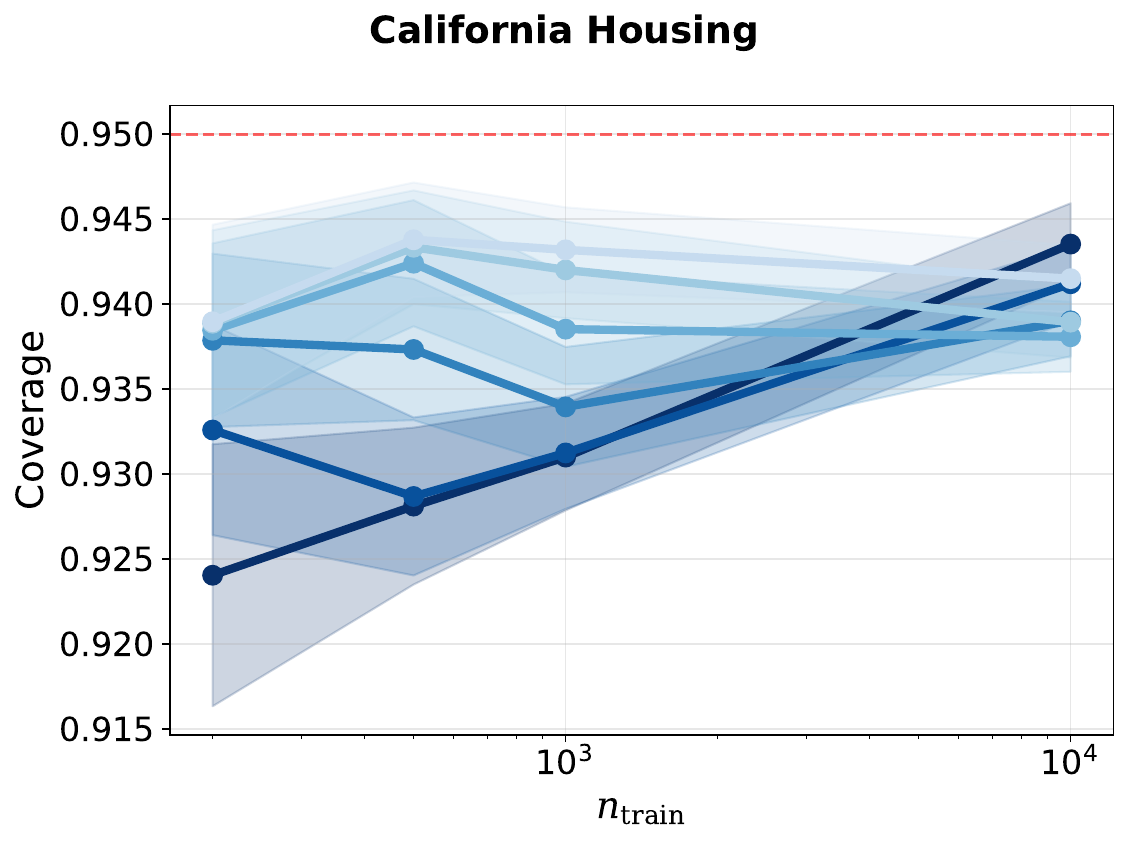}%
        \hfill
        \includegraphics[width=0.32\linewidth]{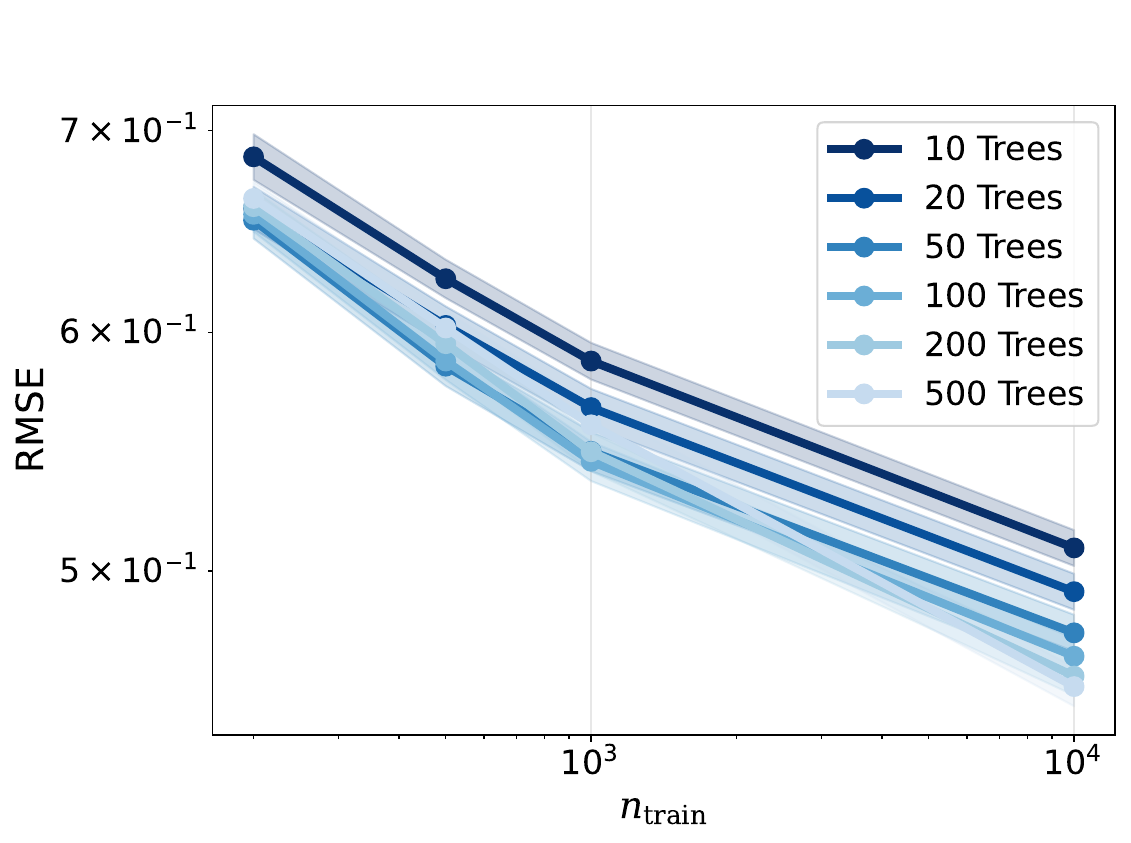}
    \end{minipage} \\
    % Low Lei Candes
    \begin{minipage}{0.9\textwidth}
        \centering
        \includegraphics[width=0.32\linewidth]{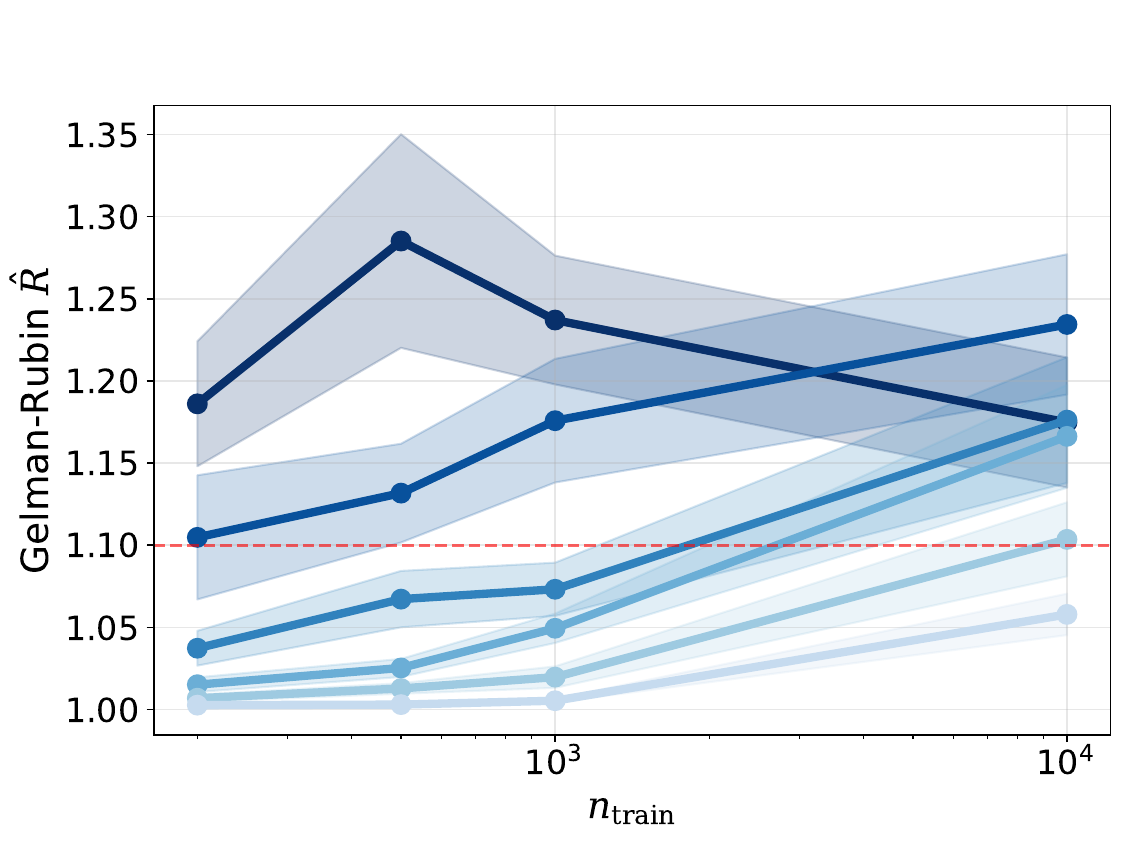}%
        \hfill
        \includegraphics[width=0.32\linewidth]{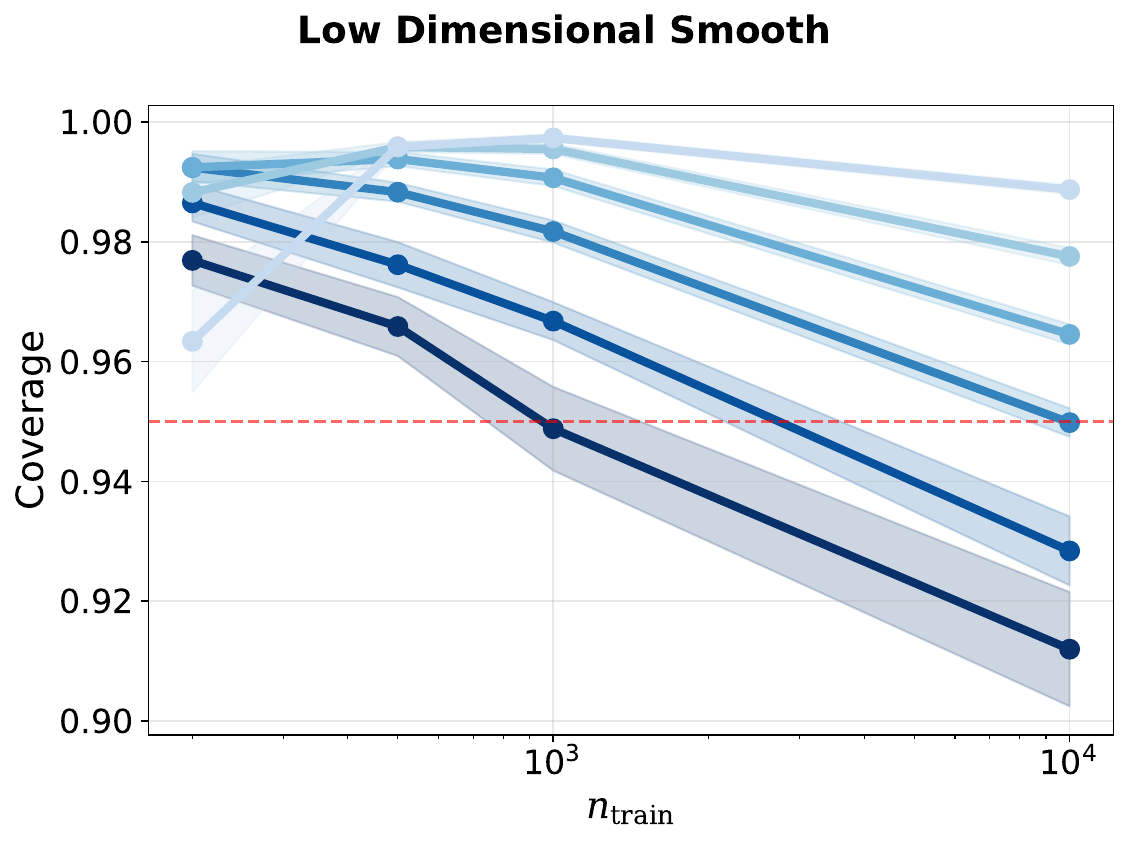}%
        \hfill
        \includegraphics[width=0.32\linewidth]{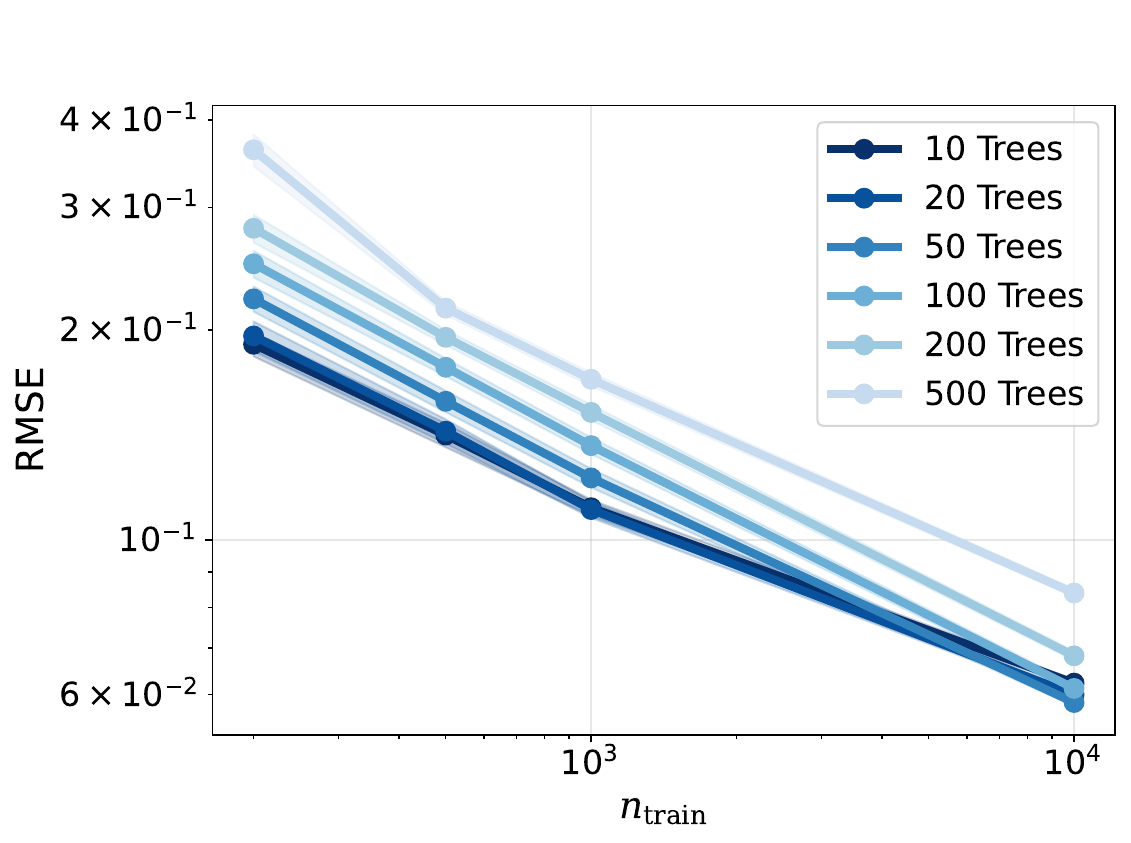}
    \end{minipage} \\
    % 1199 BNG echoMonths
    \begin{minipage}{0.9\textwidth}
        \centering
        \includegraphics[width=0.32\linewidth]{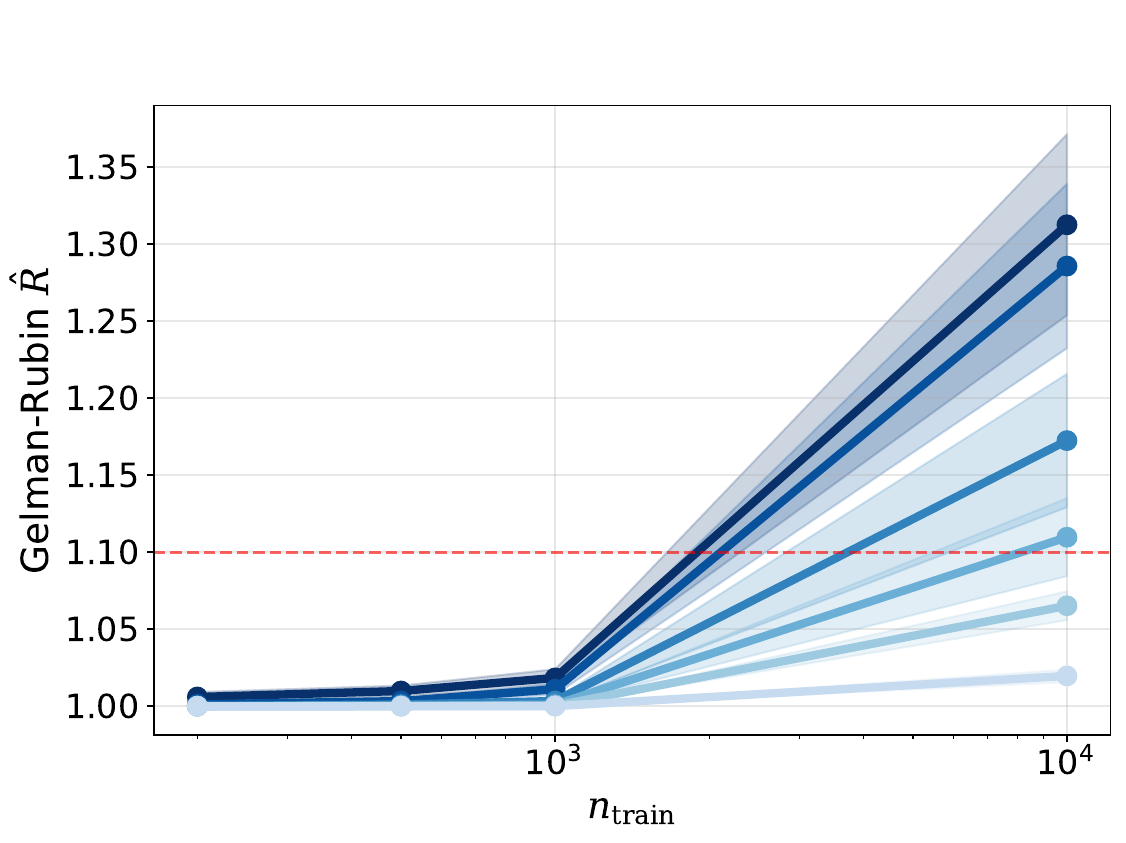}%
        \hfill
        \includegraphics[width=0.32\linewidth]{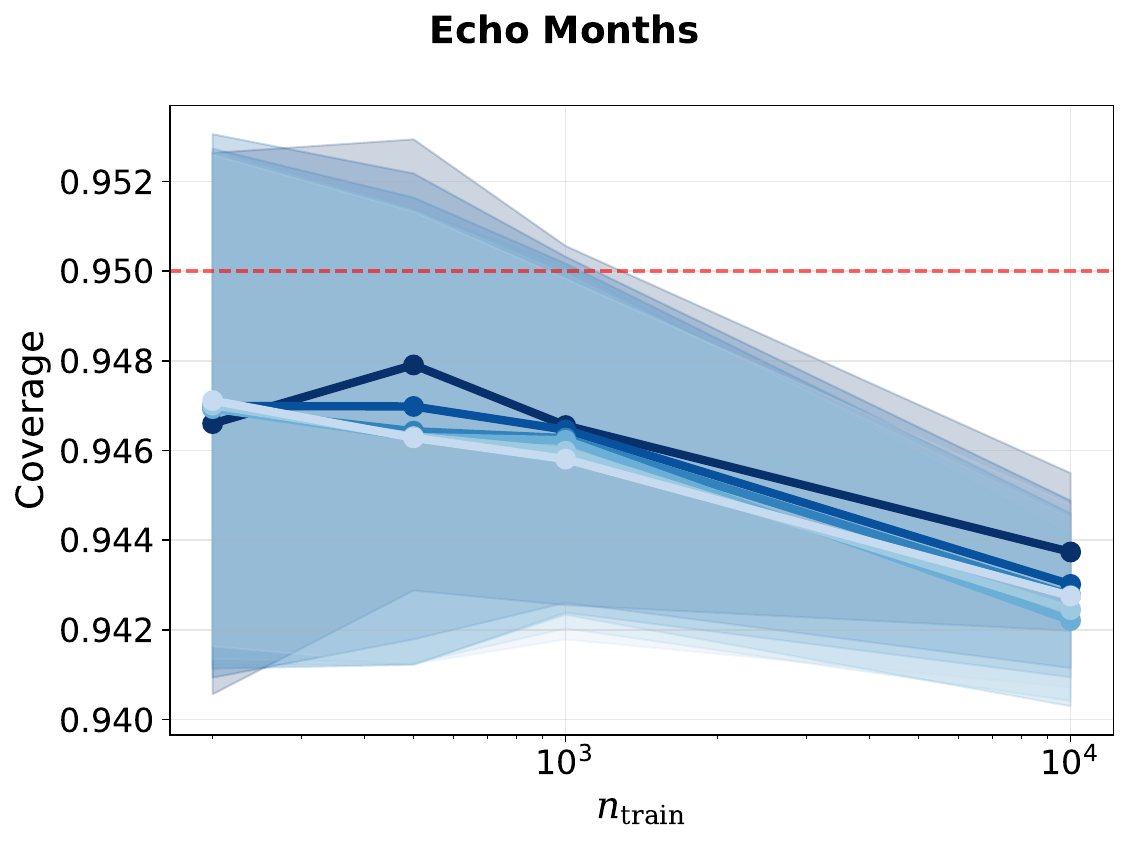}%
        \hfill
        \includegraphics[width=0.32\linewidth]{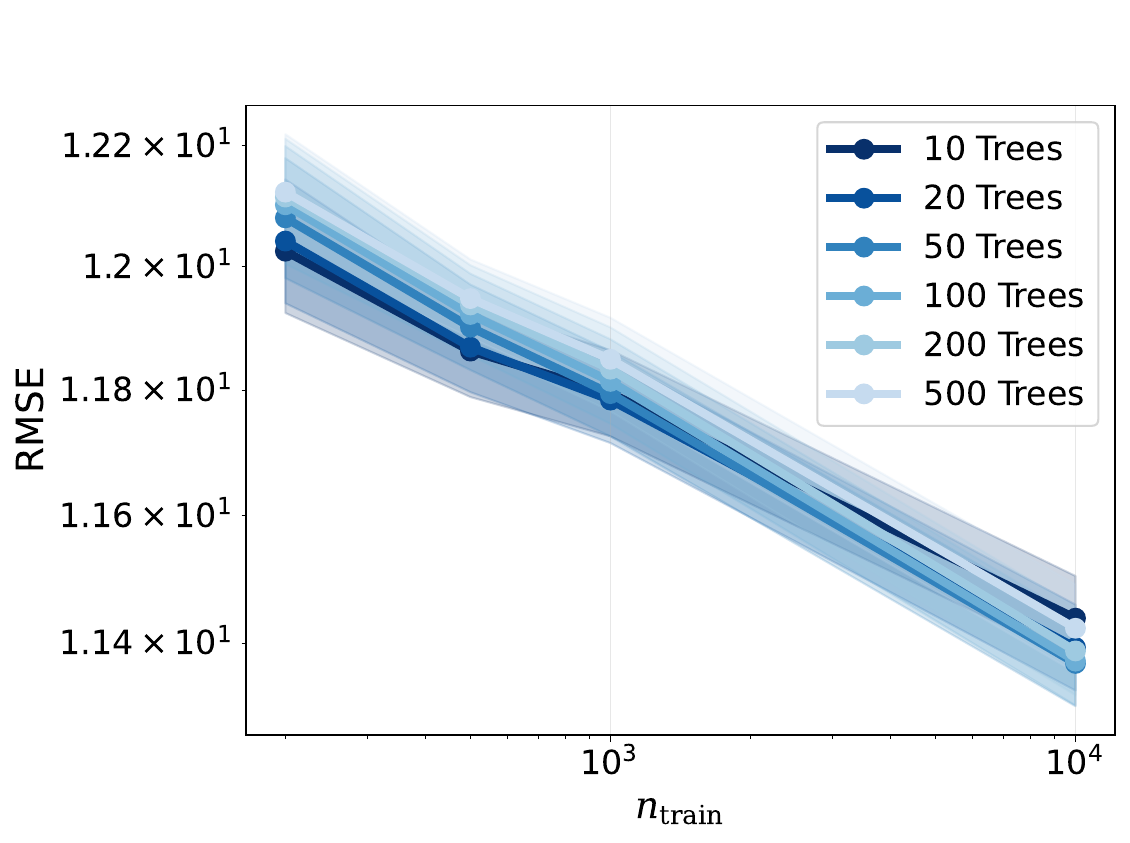}
    \end{minipage} \\
    \end{tabular}
    \caption{
    Values for Gelman-Rubin $\hat R$ (left), coverage (center), and RMSE (right) for the BART sampler under different numbers of trees in the BART model.
    Results are plotted for the California Housing (top), Low Dimensional Smooth (middle), and Echo Months (bottom) datasets.
    Error bars represent $\pm 1.96$ standard errors from 25 replicates.
    }
    \label{fig:experiment2}
\end{figure}

The results of the experiment for three of the six datasets are shown in Figure \ref{fig:experiment2}, with the rest of the results deferred to Figure \ref{fig:experiment2_additional} in the appendix.
Increasing the number of trees consistently dampens the increasing trend for $\hat R$, which provides empirical evidence to support Theorem \ref{thm:additive} and Theorem \ref{thm:mixing_upper_bound_ntrees}.
The results for coverage are highly ambiguous.
% Coverage also generally improves as we increase the number of trees \omer{this is not true}.
% , predictive performance, and the mixing of the sampler. 
Meanwhile, the number of trees achieving the optimal RMSE seems to depend on the dataset and is sometimes achieved at intermediate values of $m$.
% RMSE decreases monotonically as $m$ increases from 10 to 200, with diminishing returns beyond 200 trees. Surprisingly, despite the increased dimensionality, larger ensembles also exhibit better mixing as measured by the Gelman-Rubin statistic. This suggests that more trees provide greater flexibility for the MCMC sampler to explore the posterior distribution, allowing different chains to find diverse high-probability regions more easily. Empirical coverage of credible intervals also improves with more trees, approaching the nominal 95\% level more closely as ensemble size increases.
}

%%%%%%%%%%%%%%%%%%%%%%%%%%%%%%%%%%%%%%%%%%%%%%
%% Acknowledgments & Funding
%%%%%%%%%%%%%%%%%%%%%%%%%%%%%%%%%%%%%%%%%%%%%%
\section*{Acknowledgments}
We thank Antonio Linero, Matthew Pratola, Jungguem Kim, Sameer Desphande, and Veronika Rockova for insightful discussions.
Special thanks go to Ruizhe Deng, Tianyi Zhuang, Yankai Pei, and Hengrui Luo for their contributions to the \texttt{bart-playground} package.
We also acknowledge the computing support from the Statistical Computing Facility (SCF) at UC Berkeley's Department of Statistics, and we are particularly grateful to Jacob Steinhardt for providing access to his group's large-memory nodes on the SCF Linux cluster.
% Finally, we thank the anonymous reviewers and action editor for their many helpful questions and comments leading to a more polished manuscript.

\paragraph{Funding:}
We gratefully acknowledge partial support from NSF TRIPODS Grant 1740855, DMS-1613002, 1953191, 2015341, 2209975, 20241842, IIS 1741340, ONR grant N00014-17-1-2176, NSF grant 2023505 on Collaborative Research: Foundations of Data Science Institute (FODSI), NSF grant MC2378 to the Institute for Artificial CyberThreat Intelligence and OperatioN (ACTION), the NSF and the Simons Foundation for the Collaboration on the Theoretical Foundations of Deep Learning through awards DMS-2031883 and 814639, and a Weill Neurohub grant.
YT gratefully acknowledges support from NUS Start-up Grant A-8000448-00-00 and MOE AcRF Tier 1 Grant A-8002498-00-00.

%%%%%%%%%%%%%%%%%%%%%%%%%%%%%%%%%%%%%%%%%%%%%%
%% Bibliography
%%%%%%%%%%%%%%%%%%%%%%%%%%%%%%%%%%%%%%%%%%%%%%
% Switched to plainnat for standard arXiv compatibility
\bibliographystyle{plainnat} 
\bibliography{arxiv}

%%%%%%%%%%%%%%%%%%%%%%%%%%%%%%%%%%%%%%%%%%%%%%
%% Appendices
%%%%%%%%%%%%%%%%%%%%%%%%%%%%%%%%%%%%%%%%%%%%%%
\newpage
\appendix

% Assuming these files contain the content for the appendix
\section{Literature summary for BART posterior concentration results}

We list here all results on posterior concentration that we are aware of:

\begin{longlist}
    \item Posterior concentration at the optimal $n^{-\alpha/(2\alpha + p)}$ rate for BCART and BART when the regression function is H{\"o}lder $\alpha$-smooth, $0 < \alpha \leq 1$ \citep{rovckova2019theory}.
    \item Posterior concentration at the optimal $n^{-\alpha/(2\alpha + s)}$ rate for BCART and BART when the regression function is H{\"o}lder $\alpha$-smooth, $0 < \alpha \leq 1$, and $s$-sparse \citep{rovckova2020posterior}.
    \item Posterior concentration at the almost optimal $n^{-\alpha/(2\alpha + 1)}\log n$ rate for BART when the regression function is an additive sum of univariate components, each of which is H{\"o}lder $\alpha$-smooth, $0 < \alpha \leq 1$ \citep{rovckova2020posterior}.
    \item Extensions of the above results to $\alpha > 1$, provided BART is modified to allow for ``soft'' splits \citep{linero2018bayesian}.
    \item Posterior concentration at the optimal rate (up to log factors) for BART when the regression function is piecewise heterogeneous anisotropic H{\"o}lder smooth \citep{jeong2020art,rockova2021ideal}.
\end{longlist}
\section{Proofs for Section \ref{subsec:bic_concentration}}

We first introduce some additional notation that will be used throughout the rest of the appendix.
We denote $\maxf = \norm{f^*}_\infty$ and let $\noisesg$ denote the sub-Gaussian parameter of the noise random variable $\epsilon$ \citep{wainwright2019high}.
To augment the asymptotic notation ($O_\P$ and $\Omega_\P$) comparing sequences of random variables described in Section 4, we introduce asymptotic notation comparing sequences of fixed numbers, $(a_n)$ and $(b_n)$.
We say that $a_n = O(b_n)$ (equivalently $b_n = \Omega(a_n)$ if there exists some $C > 0$ such that $\sup_n \abs*{a_n/b_n} \leq C$.
With the exception of $n$ and $\delta$, $C$ will be allowed to to depend on any other parameters of the both the BART algorithm as well as the data to which it is fitted.
In our calculations, we will also use $C$ to denote constants satisfying this rule, with the warning that they may change from line to line.
% will be used to denote a universal constant (i.e. not depending on any parameters) that may change from line to line.
% For simplicity, many of our results are written using big-$O$ notation, which will indicate leading order dependence on the sample size $\nsamples$ as well as the error probability $\delta$.
We further introduce two slightly more refined versions of Proposition \ref{prop:concentration_bic_diff} and Proposition \ref{prop:lml_bic}, which we will use to establish our subsequent results.

\begin{proposition}[Concentration of BIC differences]
    \label{prop:concentration_bic_diff_appendix}
    Consider two TSEs $\tse$ and $\tse'$ and denote the difference in their BIC values as $\Delta \bic(\tse,\tse') \coloneqq \bic(\tse) - \bic(\tse')$.
    Then for any $0 < \delta < 1/2$, with probability at least $1-\delta$ with respect to $\P_n$, we have
    \begin{equation} \label{eq:bic_diff_eq1}
        \Delta \bic(\tse,\tse') = \frac{n}{\sigma^2}\paren*{\bias(\tse;f^*) - \bias(\tse';f^*)} + 
        % O_\P\paren*{n^{1/2}}
        O\paren*{\sqrt{n\log(1/\delta)} + \log(1/\delta)}.
    \end{equation}
    If furthermore, both TSEs have the same bias, i.e. $\Pi_{\tse}[f^*] = \Pi_{\tse'}[f^*]$, then we have
    \begin{equation} \label{eq:bic_diff_eq2}
        \Delta \bic(\tse,\tse') = \paren*{\df(\tse) - \df(\tse')}\log n + O\paren*{\log(1/\delta)}.
    \end{equation}
\end{proposition}

\begin{proposition}[Log marginal likelihood and BIC]
    \label{prop:lml_bic_appendix}
    Consider a TSE $\tse$.
    Then for any $0 < \delta < 1$, there is a minimal sample size $N$ so that for all $n \geq N$, with probability at least $1-\delta$ with respect to $\P_n$, the log marginal likelihood satisfies
    \begin{equation} \nonumber
        \log p(\by|\tse) = -\frac{\bic(\tse)}{2} +  O\paren*{1}.
    \end{equation}
    Consequently, the log marginal posterior also satisfies
    \begin{equation} \nonumber
        \log p(\tse|\by) = -\frac{\bic(\tse)}{2} - \log p(\by) + O\paren*{1}.
    \end{equation}
\end{proposition}

\begin{remark} \label{rem:lml_bic}
    The proof of Proposition \ref{prop:lml_bic_appendix} will show that, in fact, the $O(1)$ terms have an absolute bound that is independent of $\delta$.
\end{remark}

\subsection{Main proofs}

\begin{lemma}[Concentration of empirical risk difference]\label{lem:con_emp_rsk_diff}
    Let $\tse$ and $\tse'$ be two partition ensemble models.
    Then for any $0 < \delta < 1$, with probability at least $1-\delta$, the risk difference between them satisfies
    \begin{equation} \label{eq:con_emp_rsk_diff1}
        \abs*{\by^{\top}\paren*{\bP_\tse - \bP_{\tse'}}\by - n\int \paren*{\Pi_\tse f^*}^2 - \paren*{\Pi_{\tse'} f^*}^2 d\xmeasure} = O\paren*{\sqrt{n\log(1/\delta)}}.
        % \leq C\paren*{\dim(\tse) + \dim(\tse')}\sqrt{n\log(1/\delta)}.
    \end{equation}
    If furthermore, $\Pi_\tse f^* = \Pi_{\tse'}f^*$, then the above bound can be improved to
    \begin{equation} \label{eq:con_emp_rsk_diff2}
        \abs*{\by^{\top}\paren*{\bP_\tse - \bP_{\tse'}}\by } = O\paren*{\log(1/\delta)}.
        % \leq C\paren*{\dim(\tse) + \dim(\tse')}\sqrt{\log(1/\delta)}.
    \end{equation}
\end{lemma}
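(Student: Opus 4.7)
The plan is to decompose $\by = \bf^* + \beps$ and expand
\[
\by^T(\bP_\tse - \bP_{\tse'})\by = \bf^{*T}(\bP_\tse - \bP_{\tse'})\bf^* + 2\bf^{*T}(\bP_\tse - \bP_{\tse'})\beps + \beps^T(\bP_\tse - \bP_{\tse'})\beps,
\]
then control each of the three pieces conditionally on $\bX$ and combine via a union bound. Throughout, I would restrict to a high-probability ``good'' event on which every leaf of both $\tse$ and $\tse'$ is hit by at least one training sample, so that $\rank(\bP_\tse) = \df(\tse)$ and $\rank(\bP_{\tse'}) = \df(\tse')$; in the discrete covariate setting this is guaranteed with probability at least $1-\delta$ by multiplicative Chernoff once $n$ exceeds a constant times $\log(1/\delta)$ and the maximum inverse leaf probability.

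For the signal term, writing $(1/n)\bf^{*T}\bP_\tse\bf^* = (\bPsi^T\bf^*/n)^T(\bPsi^T\bPsi/n)^{-1}(\bPsi^T\bf^*/n)$ for a basis matrix $\bPsi$ of $\mapone(\tse)$, the empirical Gram $\bPsi^T\bPsi/n$ and cross-moment $\bPsi^T\bf^*/n$ each concentrate around their $L^2(\nu)$ counterparts at rate $O(\sqrt{\log(1/\delta)/n})$ by Hoeffding (using boundedness of $f^*$ and of leaf indicators). Multiplying by $n$ yields an $O(\sqrt{n\log(1/\delta)})$ contribution to $n\int ((\Pi_\tse f^*)^2 - (\Pi_{\tse'}f^*)^2)\,d\nu$. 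For the cross term, conditionally on $\bX$ it is a linear combination of sub-Gaussian noises with variance proxy $\noisesg^2\|(\bP_\tse - \bP_{\tse'})\bf^*\|^2 \leq 4 n \maxf^2\noisesg^2$, giving a tail bound of $O(\sqrt{n\log(1/\delta)})$. For the noise quadratic term, Hanson--Wright conditional on $\bX$, combined with $\|\bP_\tse - \bP_{\tse'}\|_F^2 \leq \df(\tse)+\df(\tse') = O(1)$ and operator norm at most $1$, gives concentration around $\sigma^2(\df(\tse)-\df(\tse'))$ with deviation $O(\sqrt{\log(1/\delta)} + \log(1/\delta))$, and the centering itself is $O(1)$. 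Summing yields \eqref{eq:con_emp_rsk_diff1}.

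For the improved bound when $\Pi_\tse f^* = \Pi_{\tse'} f^*$, set $g = \Pi_\tse f^* = \Pi_{\tse'} f^*$, $h = f^* - g$, and denote their training-set evaluations by $\mathbf{g}$ and $\bh$. On the good event $\mathbf{g}$ lies in both column spaces, so $\bP_\tse\mathbf{g} = \bP_{\tse'}\mathbf{g} = \mathbf{g}$; hence $\bf^{*T}(\bP_\tse - \bP_{\tse'})\bf^* = \bh^T(\bP_\tse - \bP_{\tse'})\bh$ and the cross term collapses to $\bh^T(\bP_\tse - \bP_{\tse'})\beps$. Since $h$ is $L^2(\nu)$-orthogonal to every leaf indicator, each coordinate $\bv_l^T\bh = \sum_{i\colon \bx_i \in \leaf_l} h(\bx_i)$ of $\bPsi^T\bh$ is a zero-mean bounded sum of size $O(\sqrt{n\log(1/\delta)})$ by Hoeffding, so $\|\bPsi^T\bh\|^2 = O(n\log(1/\delta))$; combined with $\|(\bPsi^T\bPsi)^{-1}\| = O(1/n)$ this gives $\|\bP_\tse\bh\|^2 = \bh^T\bP_\tse\bh = O(\log(1/\delta))$, and likewise for $\tse'$. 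Plugging back in bounds the signal contribution by $O(\log(1/\delta))$ and the cross term by Cauchy--Schwarz against a sub-Gaussian of the same magnitude by $O(\log(1/\delta))$, which together with the Hanson--Wright bound above yields \eqref{eq:con_emp_rsk_diff2}.

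The main technical obstacle I anticipate is the passage from empirical to population projectors: showing that the empirical Gram $\bPsi^T\bPsi/n$ is well-conditioned with high probability and that $\mathbf{g}$ sits exactly in the empirical column space. Both reduce to each leaf of each partition being adequately sampled, which in our finite discrete setting is handled by the Chernoff bound absorbed into the initial union bound, at the cost only of pushing up the minimal sample size.
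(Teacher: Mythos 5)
Your proposal is correct and the overall skeleton (expand the quadratic form, control each piece with Hanson--Wright and Hoeffding, union bound) matches the paper's, but your treatment of the signal term $\bf^{*T}(\bP_\tse-\bP_{\tse'})\bf^*$ takes a genuinely different and heavier route. The paper decomposes $\by = \beps + (\bf^*-\bf^*_\tse) + \bf^*_\tse$ from the start, which buys the \emph{exact} identity $\bP_\tse\bf^*_\tse = \bf^*_\tse$ with no sampling condition whatsoever: $\bf^*_\tse$ is by construction a linear combination of the columns of the leaf-indicator matrix, so it lies in the column span unconditionally (your invocation of a ``good event'' to get $\bP_\tse\mathbf{g}=\mathbf{g}$ is therefore unnecessary, though harmless). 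With this identity the dominant signal contribution collapses to $(\bf^*_\tse)^T\bf^*_\tse = \sum_i (\Pi_\tse f^*)(\bx_i)^2$, a plain i.i.d.\ bounded sum, and Hoeffding gives the $O(\sqrt{n\log(1/\delta)})$ rate directly; the remaining piece $(\bf^*-\bf^*_\tse)^T\bP_\tse(\bf^*-\bf^*_\tse)$ is a rank-$\df(\tse)$ quadratic form in a conditionally centered bounded vector and is $O(\log(1/\delta))$ by Hanson--Wright. No Gram-matrix inversion appears. Your route instead passes through concentration of $\bPsi^T\bPsi/n$ and $\bPsi^T\bf^*/n$ around their population counterparts followed by a perturbation argument for $(\cdot)^{-1}$. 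This is valid (for a fixed, linearly independent basis the dimension is $O(1)$ so entrywise Hoeffding plus a union bound suffices), but it forces you to assume the Gram matrix is well-conditioned, to choose a basis to deal with linearly dependent indicators across trees, and to pay a minimum-sample-size condition that the paper's argument does not need for this particular lemma. Notably, for the improved bound~\eqref{eq:con_emp_rsk_diff2} you end up re-deriving exactly the paper's decomposition $\bf^* = \mathbf{g}+\bh$; had you started from there, as the paper does, the Gram-matrix detour for part one could be avoided altogether.
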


\begin{proof}[Proof of Proposition \ref{prop:concentration_bic_diff_appendix}]
    Since
    \begin{equation} \nonumber
        \Delta \bic(\tse,\tse') = \frac{\by^{\top}\paren*{\bP_{\mathcal{\tse}} - \bP_{\tse'}}\by}{\sigma^2} + (\df(\tse)-\df(\tse'))\log n,
    \end{equation}
    the desired concentration follows immediately from Lemma \ref{lem:con_emp_rsk_diff}.
\end{proof}

\begin{lemma}[Log marginal likelihood formula] \label{lem:log_marg_lklhd_formula}
    Let $\tse$ be a tree ensemble structure (TSE), and let $\bPsi$ be an $\nsamples \times \nleaves$ matrix whose columns comprise the indicators of the leaves in $\tse$.
    The log marginal likelihood satisfies
    \begin{align} \label{eq:log_marg_lklhd_formula}
        -2\log p(\by|\tse) & = n\log\paren*{2\pi\sigma^2} + \log\det\paren*{\lambda^{-1}\bPsi^{\top}\bPsi + \bI} \nn
        & \quad\quad + \frac{1}{\sigma^2}\paren*{\norm*{\bPsi\hat\leafparamvec_{LS} - \by}_2^2 +  \hat\leafparamvec_{LS}^{\top}\bPsi^{\top}\paren*{\bI- \bPsi\paren*{\bPsi^{\top}\bPsi+\lambda\bI}^{-1}\bPsi^{\top}}\bPsi\hat\leafparamvec_{LS}},
    \end{align}
    where
    \begin{equation}\nonumber
        \hat\leafparamvec_{LS} \coloneqq \argmin_{\leafparamvec} \norm{\bPsi\leafparamvec - \by}_2^2
    \end{equation}
    is the solution to the least squares problem.
\end{lemma}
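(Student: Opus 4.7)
The plan is to treat the marginal likelihood as a Gaussian integral, obtain the marginal distribution of $\by$ explicitly, and then use two standard matrix identities (Sylvester's determinant identity and the Woodbury identity) to rewrite the log-determinant as a $b \times b$ quantity and the quadratic form in terms of $\hat\leafparamvec_{LS}$.

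First, I would write the joint density $p(\by,\leafparamvec \mid \bX,\tse)$ as the product of the Gaussian likelihood $\mathcal{N}(\by;\bPsi\leafparamvec,\sigma^2\bI_n)$ and the Gaussian prior $\mathcal{N}(\leafparamvec;\bzero,(\sigma^2/\lambda)\bI_b)$. Marginalizing $\leafparamvec$ out then yields
\begin{equation} \nonumber
    \by \mid \bX,\tse \sim \mathcal{N}\!\left(\bzero,\; \sigma^2 \bI_n + (\sigma^2/\lambda)\bPsi\bPsi^T\right) = \mathcal{N}\!\left(\bzero,\; \sigma^2\paren*{\bI_n + \lambda^{-1}\bPsi\bPsi^T}\right),
\end{equation}
either by recognizing this as a standard Bayesian linear regression marginal or by completing the square in $\leafparamvec$ inside the integral. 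Taking $-2\log$ immediately gives the $n\log(2\pi\sigma^2)$ term, a log-determinant of $\bI_n + \lambda^{-1}\bPsi\bPsi^T$, and a quadratic form $\sigma^{-2}\by^T(\bI_n + \lambda^{-1}\bPsi\bPsi^T)^{-1}\by$.

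Next, I would reduce the log-determinant via Sylvester's identity,
\begin{equation} \nonumber
    \det\paren*{\bI_n + \lambda^{-1}\bPsi\bPsi^T} = \det\paren*{\bI_b + \lambda^{-1}\bPsi^T\bPsi},
\end{equation}
which produces the desired $b \times b$ log-determinant term. For the quadratic form, I would apply the Woodbury identity
\begin{equation} \nonumber
    \paren*{\bI_n + \lambda^{-1}\bPsi\bPsi^T}^{-1} = \bI_n - \bPsi\paren*{\bPsi^T\bPsi + \lambda\bI_b}^{-1}\bPsi^T
\end{equation}
to get $\by^T\by - \by^T\bPsi(\bPsi^T\bPsi+\lambda\bI)^{-1}\bPsi^T\by$.

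Finally, I would convert this into the $\hat\leafparamvec_{LS}$ form. Using the normal equations $\bPsi^T\by = \bPsi^T\bPsi\hat\leafparamvec_{LS}$ and the Pythagorean decomposition $\|\by\|_2^2 = \|\bPsi\hat\leafparamvec_{LS}-\by\|_2^2 + \hat\leafparamvec_{LS}^T\bPsi^T\bPsi\hat\leafparamvec_{LS}$, the bracket becomes
\begin{equation} \nonumber
    \|\bPsi\hat\leafparamvec_{LS}-\by\|_2^2 + \hat\leafparamvec_{LS}^T\bPsi^T\bPsi\hat\leafparamvec_{LS} - \hat\leafparamvec_{LS}^T\bPsi^T\bPsi(\bPsi^T\bPsi+\lambda\bI)^{-1}\bPsi^T\bPsi\hat\leafparamvec_{LS},
\end{equation}
and factoring $\bPsi^T(\,\cdot\,)\bPsi$ out of the last two terms gives exactly the expression in \eqref{eq:log_marg_lklhd_formula}.

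The only real subtlety is that, for TSEs containing multiple trees, the columns of $\bPsi$ may fail to be linearly independent, so $\hat\leafparamvec_{LS}$ is only defined up to the nullspace of $\bPsi^T\bPsi$. I would handle this by interpreting $\hat\leafparamvec_{LS}$ via the pseudoinverse and noting that every quantity on the right-hand side of \eqref{eq:log_marg_lklhd_formula} depends on $\hat\leafparamvec_{LS}$ only through $\bPsi\hat\leafparamvec_{LS}$ (which equals the projection of $\by$ onto the column space of $\bPsi$) and through $\bPsi^T\bPsi\hat\leafparamvec_{LS} = \bPsi^T\by$, so the formula is unambiguous. Everything else is routine linear algebra; I do not anticipate further obstacles.
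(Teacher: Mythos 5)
Your proposal is correct, but it takes a genuinely different route from the paper. The paper proves this by completing the square in $\leafparamvec$ inside the joint density $p(\by\mid\bX,\tse,\leafparamvec)p(\leafparamvec\mid\tse)$: it first applies the Pythagorean split $\norm{\bPsi\leafparamvec-\by}_2^2 = \norm{\bPsi\hat\leafparamvec_{LS}-\by}_2^2 + \norm{\bPsi(\leafparamvec-\hat\leafparamvec_{LS})}_2^2$, then completes the square of the remaining $\leafparamvec$-dependent quadratic $\norm{\bPsi(\leafparamvec-\hat\leafparamvec_{LS})}_2^2 + \lambda\norm{\leafparamvec}_2^2$ around a shifted mean $\leafparamvec_0 = (\bPsi^T\bPsi+\lambda\bI)^{-1}\bPsi^T\bPsi\hat\leafparamvec_{LS}$, and finally integrates out $\leafparamvec$ to produce the $\det(\lambda^{-1}\bPsi^T\bPsi+\bI)^{-1/2}$ factor directly. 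You instead marginalize first, recognizing $\by\mid\bX,\tse \sim \mathcal{N}(\bzero,\sigma^2(\bI_n + \lambda^{-1}\bPsi\bPsi^T))$, and then use Sylvester's determinant identity and the Woodbury identity to convert the $n\times n$ log-determinant and quadratic form into $b\times b$ quantities, finishing with the normal equations and the Pythagorean decomposition of $\|\by\|_2^2$. Both give the same answer; the paper's route never writes down an $n\times n$ matrix and so avoids Sylvester/Woodbury entirely (the $b\times b$ determinant and resolvent appear organically from the Gaussian integral), while your route is arguably more modular and makes the structure of the Gaussian process marginal explicit. Your remark about handling a rank-deficient $\bPsi$ via the pseudoinverse and noting that only $\bPsi\hat\leafparamvec_{LS}$ and $\bPsi^T\bPsi\hat\leafparamvec_{LS} = \bPsi^T\by$ enter the final formula is a nice bit of care that the paper does not spell out; the same point is implicitly needed in the paper's argument, since the orthogonal decomposition of $\norm{\bPsi\leafparamvec-\by}_2^2$ also only requires $\bPsi\hat\leafparamvec_{LS}$ to be well-defined.
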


\begin{lemma}[Concentration of log-determinent]\label{lem:logdet_concentration}
    % Let $\bPsi$ be a $\nsamples \times \nleaves$ matrix whose columns comprise the leaf indicators. 
    With the notation of Lemma \ref{lem:log_marg_lklhd_formula},
    denote $\hat\bSigma \coloneqq \frac{1}{n}\bPsi^{\top}\bPsi$ and $\bSigma \coloneqq \E\braces*{\hat\bSigma}$.
    % Moreover, 
    % % let $r = \textnormal{rank}\paren*{\bSigma}$, 
    % and let $s_1,s_2,\ldots,s_r$ denote the values of its nonzero eigenvalues.
    Then for any $0 < \delta < 1$, for
    \begin{equation} \nonumber
        n \geq \max\braces*{64m^{3/2}s_{\min}^{-2}\log(2\df(\tse)/\delta),4\lambda s_{\min}^{-1}},
    \end{equation}
     with probability at least $1-\delta$, we have
    \begin{equation} \label{eq:logdet_concentration}
        \abs*{\log\det\left(\lambda^{-1}\bPsi^{\top} \bPsi + \bI\right) - \df(\tse)\log(n) - \sum_{i=1}^{\df(\tse)} \log(s_i/\lambda)} = O\paren*{\sqrt{\frac{\log(1/\delta)}{n}}},
    \end{equation}
    where $s_1,s_2,\ldots,s_{\df(\tse)}$ denote the values of the nonzero eigenvalues of $\bSigma$.
    % Let $q$ be the total number of leaves, $n$ the number of data points. Denote $\hat\bSigma = \frac{1}{n}\bPsi^{\top} \bPsi$ and let $\hat\sigma_1, \dots, \hat\sigma_d$ be the corresponding non-zero eigenvalues. Let $\bSigma = \E \hat\bSigma$ and let $\sigma_1, \dots, \sigma_q$ be the corresponding eigenvalues. then with probability of at least $1-\epsilon$, 
    % \begin{equation}
    %     \abs{\log\det\left(\bPsi^{\top} \bPsi + \lambda\Bi\right) - d\log(n) - (q-d)\log(\lambda)-\sum_{i=1}^d \log(\sigma_i)} \leq C\sqrt{\log(1/\epsilon)/n}
    % \end{equation}
\end{lemma}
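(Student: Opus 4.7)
The plan is to diagonalize $\hat\bSigma$ and reduce the log-determinant to a sum of scalar logarithms. Because any functional linear dependency $\sum_j c_j \indicator_{\leaf_j} \equiv 0$ on $\xspace$ immediately forces $\bPsi c = 0$, the matrix $\bPsi^T\bPsi = n\hat\bSigma$ has rank at most $\df(\tse)$, so its nonzero eigenvalues are exactly $n\hat{s}_1,\ldots,n\hat{s}_{\df(\tse)}$, where $\hat{s}_1,\ldots,\hat{s}_{\df(\tse)}$ are the nonzero eigenvalues of $\hat\bSigma$. The zero eigenvalues contribute $\log 1 = 0$, so factoring $n\hat{s}_i/\lambda + 1 = (n\hat{s}_i/\lambda)(1 + \lambda/(n\hat{s}_i))$ yields
\begin{equation}
\log\det(\lambda^{-1}\bPsi^T\bPsi + \bI_b) = \df(\tse)\log n + \sum_{i=1}^{\df(\tse)}\log(\hat{s}_i/\lambda) + \sum_{i=1}^{\df(\tse)}\log\paren*{1 + \frac{\lambda}{n\hat{s}_i}}.
\end{equation}
The goal is then to match the first two sums to the claimed expression and show that the third is negligible.

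The key step is matrix concentration. Applying a matrix Bernstein inequality to $\hat\bSigma = \frac{1}{n}\sum_{i=1}^n \bPsi_i^T \bPsi_i$, noting that each summand is a rank-one outer product of operator norm at most $\ntrees$ (each row of $\bPsi$ has exactly one nonzero entry per tree), I obtain $\|\hat\bSigma - \bSigma\|_{op} = O\paren*{\sqrt{\log(\df(\tse)/\delta)/n}}$ with probability at least $1-\delta$. Weyl's inequality then gives $|\hat{s}_i - s_i| \leq \|\hat\bSigma - \bSigma\|_{op}$ for each $i$, and the hypothesis $n \geq 64\ntrees^{3/2}s_{\min}^{-2}\log(2\df(\tse)/\delta)$ is precisely what forces this spectral error to be at most $s_{\min}/2$. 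In particular $\hat{s}_i \geq s_{\min}/2 > 0$, confirming that the rank of $\hat\bSigma$ is exactly $\df(\tse)$ on this event.

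With eigenvalue control in hand, the remaining pieces fall into place. Writing $\log(\hat{s}_i/\lambda) - \log(s_i/\lambda) = \log(1 + (\hat{s}_i - s_i)/s_i)$ and applying $|\log(1+x)| \leq 2|x|$ on $|x| \leq 1/2$, the middle sum differs from $\sum_i \log(s_i/\lambda)$ by at most $O(\df(\tse)\|\hat\bSigma - \bSigma\|_{op}/s_{\min})$, which is absorbed into the target error $O(\sqrt{\log(1/\delta)/n})$. For the third sum, the condition $n \geq 4\lambda s_{\min}^{-1}$ ensures $\lambda/(n\hat{s}_i) \leq 1/2$, so each term is bounded by $2\lambda/(n s_{\min})$ and the total is $O(1/n)$, which is subdominant.

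The main obstacle is the rank issue: because distinct leaf indicators can be linearly dependent when the ensemble contains multiple trees, $\hat\bSigma$ has $b - \df(\tse)$ structurally zero eigenvalues that must be cleanly separated from the $\df(\tse)$ truly concentrating ones before any scalar concentration argument can be applied. Identifying the effective rank with the geometric invariant $\df(\tse) = \dim \mapone(\tse)$ (which, by Lemma \ref{lem:invariance_of_dim}, does not depend on $\nu$) is what allows the log-determinant to be reduced to a sum over only $\df(\tse)$ well-behaved eigenvalues; without this identification one would either overcount by including spurious zero eigenvalues or undercount by failing to establish that the empirical rank matches the population rank with high probability.
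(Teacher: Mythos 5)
Your proof is correct and takes essentially the same route as the paper: you reduce the log-determinant to a sum over the $\df(\tse)$ nonzero eigenvalues by exploiting the rank argument (the paper's Lemma \ref{lem:rank_emp_cov}), apply matrix concentration (the paper uses Corollary 6.20 of Wainwright, which is the rank-one matrix Bernstein bound you invoke) and Weyl's inequality, and bound the scalar $\log$ corrections elementarily. The only cosmetic difference is the algebraic grouping: the paper collects the two error corrections into the single quantity $\sum_i\log\bigl((\hat s_i + \lambda/n)/s_i\bigr)$, whereas you split out the $\log(\hat s_i/\lambda)-\log(s_i/\lambda)$ correction and the $\log(1+\lambda/(n\hat s_i))$ tail separately.
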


\begin{lemma}[Concentration of error term] \label{lem:err_term_conc}
    With the notation of Lemma \ref{lem:logdet_concentration}, for any $0 < \delta < 1$, when $n \geq 16\ntrees^{3/2}s_{\min}^{-2}\log(2\df(\tse)/\delta)$, where $s_{\min}$ is the minimum nonzero eigenvalue of $\bSigma$, with probability at least $1-\delta$, we have
    \begin{equation} \label{eq:err_term_conc}
        \hat\leafparamvec_{LS}^{\top}\bPsi^{\top}\paren*{\bI- \bPsi\paren*{\bPsi^{\top}\bPsi+\lambda\bI}^{-1}\bPsi^{\top}}\bPsi\hat\leafparamvec_{LS} = O\paren*{1}.
    \end{equation}
\end{lemma}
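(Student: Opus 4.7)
The plan is to diagonalize the inner matrix via the SVD of $\bPsi$, reduce the quadratic form to a weighted sum of coordinate energies that are trivially bounded by $\norm{\by}_2^2$, and then apply two concentration arguments: a matrix concentration bound to control the minimum nonzero singular value of $\bPsi$ from below, and a standard sub-exponential tail bound to control $\norm{\by}_2^2$ from above.

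Let $\bPsi = \bU\bD\bV^T$ be the SVD with nonzero singular values $\sigma_1\geq \cdots \geq \sigma_r > 0$, where $r = \df(\tse)$. A direct calculation using $\bV^T\bV = \bI$ gives
\[
\bPsi\paren*{\bPsi^T\bPsi + \lambda\bI}^{-1}\bPsi^T = \bU\,\diag\!\paren*{\frac{\sigma_i^2}{\sigma_i^2+\lambda}}\bU^T,
\]
so $\bI - \bPsi\paren*{\bPsi^T\bPsi + \lambda\bI}^{-1}\bPsi^T$ has eigenvalue $\lambda/(\sigma_i^2+\lambda)$ on the $i$-th left singular direction (for $i\leq r$) and eigenvalue $1$ on the orthogonal complement of the column space of $\bPsi$. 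Because $\bPsi\hat\leafparamvec_{LS}$ is the orthogonal projection of $\by$ onto the column space of $\bPsi$, writing $\tilde\by = \bU^T\by$ the quadratic form collapses to
\[
\hat\leafparamvec_{LS}^T\bPsi^T\paren*{\bI - \bPsi\paren*{\bPsi^T\bPsi+\lambda\bI}^{-1}\bPsi^T}\bPsi\hat\leafparamvec_{LS} = \sum_{i=1}^{r} \tilde y_i^2\cdot\frac{\lambda}{\sigma_i^2+\lambda} \;\leq\; \lambda\sum_{i=1}^{r}\frac{\tilde y_i^2}{\sigma_i^2}.
\]

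Next, I would apply a matrix Chernoff inequality to $\hat\bSigma = \frac{1}{n}\bPsi^T\bPsi$, viewed as a sum of $n$ i.i.d.\ bounded rank-one matrices built from leaf-indicator vectors across the $m$ trees. The sample size condition $n \geq 16\ntrees^{3/2}s_{\min}^{-2}\log(2\df(\tse)/\delta)$ is precisely chosen so that, with probability at least $1-\delta/2$, every nonzero eigenvalue of $\hat\bSigma$ is at least $s_{\min}/2$; equivalently $\sigma_i^2 \geq ns_{\min}/2$ for all $i\leq r$. This is the same deviation bound already invoked for Lemma \ref{lem:logdet_concentration}, so it can be imported verbatim. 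Substituting and using $\sum_{i=1}^r \tilde y_i^2 \leq \norm{\by}_2^2$ (since $\bU$ is orthogonal) gives
\[
\lambda\sum_{i=1}^{r}\frac{\tilde y_i^2}{\sigma_i^2} \;\leq\; \frac{2\lambda}{ns_{\min}}\norm{\by}_2^2.
\]

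Finally, since $y_i = f^*(\bx_i) + \epsilon_i$ with $|f^*|\leq\maxf$ and $\epsilon_i$ sub-Gaussian with parameter $\noisesg$, each $y_i^2$ is sub-exponential with mean $O(\maxf^2+\noisesg^2)$, and Bernstein's inequality yields $\norm{\by}_2^2 \leq Cn(\maxf^2+\noisesg^2) + C'\log(1/\delta)$ with probability at least $1-\delta/2$. A union bound then gives that the error quantity is bounded by $\frac{2C\lambda(\maxf^2+\noisesg^2)}{s_{\min}} + O(1/n) = O(1)$, as claimed. The main obstacle is the matrix concentration step producing $\sigma_i^2 \geq ns_{\min}/2$ with precisely the stated threshold on $n$: conceptually this is a routine matrix Chernoff argument, but it must be done carefully because $\bPsi$ has $b$ columns while only $r = \df(\tse)$ are linearly independent. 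Fortunately this is the same step already performed in Lemma \ref{lem:logdet_concentration}, so we inherit it at no additional cost.
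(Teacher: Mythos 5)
Your proposal is correct and takes essentially the same approach as the paper: both reduce the quadratic form to a spectral bound (your SVD diagonalization producing eigenvalues $\lambda/(\sigma_i^2+\lambda)$ is the same as the paper's computation of the eigenvalues of $\bM = \bP_\tse - \bPsi(\bPsi^T\bPsi+\lambda\bI)^{-1}\bPsi^T$), both import the empirical covariance concentration from Lemma \ref{lem:emp_cov_concentration} to get $\hat s_i \geq s_{\min}/2$ under the stated sample size, and both finish with a sub-exponential/Hanson–Wright-type tail bound on the quadratic energy of the response (you bound $\norm{\by}_2^2$, the paper bounds $\norm{\bP_\tse\by}_2^2$; both are $O(n)$).
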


\begin{proof}[Proof of Proposition \ref{prop:lml_bic_appendix}.]
% \begin{proof}[Proof of \cref{prop:lml_bic}]
Starting with equation \eqref{eq:log_marg_lklhd_formula} from Lemma \ref{lem:log_marg_lklhd_formula}, plug in equations \eqref{eq:logdet_concentration} and \eqref{eq:err_term_conc} from Lemma \ref{lem:logdet_concentration} and Lemma \ref{lem:err_term_conc} respectively.
Notice that $\bPsi\hat\bmu_{LS} = \bP_\tse\by$, so that
\begin{equation}\nonumber
    \norm*{\bPsi\hat\leafparamvec_{LS} - \by}_2^2 = \norm*{\paren*{\bI-\bP_\tse}\by}_2^2 = \by^{\top}\paren*{\bI-\bP_\tse}\by.
\end{equation}
This completes the proof.
\end{proof}

\begin{proof}[Proof of Proposition \ref{prop:posterior_conc_treespace}.]
% First fix $\delta > 0$.
Let $\tse^*\in\opt(f^*, 0)$ and $\tse \notin \opt(f^*, 0)$.
Using Proposition \ref{prop:lml_bic} and Proposition \ref{prop:concentration_bic_diff}, we get 
% a probability $1-\delta/\abs{\tsespace}$ event on which
\begin{align}
    & \log p(\tse^*|\by) - \log p(\tse|\by) \nn
    & \quad = \begin{cases}
        \frac{n}{2\sigma^2}\paren*{\bias(\tse;f^*)} + O_\P\paren*{n^{-1/2}} & \text{if}~ \bias(\tse;f^*) \neq 0 \\
        \frac{\log n}{2}\paren*{\df(\tse) - \df(\tse^*)} + O_\P\paren*{1} & \text{otherwise}.\nonumber
    \end{cases}
\end{align}
In either case, we exponentiate to get
\begin{equation}
    p(\tse|\by)\leq \frac{p(\tse|\by)}{p(\tse^*|\by)} = O_\P(n^{-1/2}).
\end{equation}
Since there are only finitely many $\tse \notin \opt(f^*, 0)$, taking an intersection of the ``error'' events and adding up the inequalities over $\tse \in \opt(f^*, 0)^c$ gives
% \begin{equation}
%     p(\opt(f^*, 0)^c~|~\by) = O(n^{-1/2}(1/\delta)^C).
% \end{equation}
% Since $\delta$ was arbitrary, we get
\begin{equation}
    p(\opt(f^*, 0)^c~|~\by) = O_\P(n^{-1/2}),
\end{equation}
which is our desired result.
\end{proof}

\subsection{Further details}

\begin{proof}[Proof of Lemma \ref{lem:con_emp_rsk_diff}.]
% \begin{proof}[Proof of Lemma \ref{lem:con_emp_rsk_diff}]
    Recall that $\Pi_\tse$ refers to orthogonal projection onto $\mapone(\tse)$ in $L^2(\xmeasure)$, while $\bP_{\tse}$ refers to orthogonal projection onto $\mapone(\tse)$ with respect to the empirical norm $\norm{\cdot}_n$.
    With this in mind, decompose $y = \epsilon + (f^*(x)-\Pi_{\tse}f^*(x)) + \Pi_{\tse}f^*(x)$ and write this in vector form as $\by = \beps + \paren*{\bf^* - \bf^*_\tse} + \bf^*_\tse$.
    Since $\Pi_{\tse}f^* \in \mapone(\tse)$, we have $\bP_\tse \bf^*_\tse = \bf^*_\tse$.
    % Note that conditioned on $\tse$, the last two terms have mean zero and are orthogonal to each other as functions in $L^2(\xmeasure)$.
    We can then therefore expand the quadratic form $\by^{\top}\bP_\tse\by$ as follows:
    \begin{align} \label{eq:impurity_quadratic_form_expansion}
    \begin{split}
        \by^{\top}\bP_\tse\by 
        & = \beps^{\top}\bP_\tse\beps + (\bf^* - \bf^*_{\tse})^{\top}\bP_\tse(\bf^* - \bf^*_{\tse}) + 2\beps^{\top}\bP_{\tse}(\bf^* - \bf^*_{\tse}) + (\bf^*_{\tse})^{\top}\bP_\tse\bf^*_{\tse} \\
        & \quad\quad + 2\beps^{\top}\bP_{\tse}\bf^*_{\tse} + 2(\bf^* - \bf^*_{\tse})^{\top}\bP_{\tse}\bf^*_{\tse} \\
        & = \beps^{\top}\bP_\tse\beps + (\bf^* - \bf^*_{\tse})^{\top}\bP_\tse(\bf^* - \bf^*_{\tse}) + 2\beps^{\top}\bP_{\tse}(\bf^* - \bf^*_{\tse}) + (\bf^*_{\tse})^{\top}\bf^*_{\tse} \\
        & \quad\quad + 2\beps^{\top}\bf^*_{\tse} + 2(\bf^* - \bf^*_{\tse})^{\top}\bf^*_{\tse}.
        % &= \beps^{\top}\bP_\tse\beps   + \bf_{\tse}^{\top}\bf_{\tse} + 2\beps^{\top}\bf_{\tse},
    \end{split}
    \end{align}
    % where the second equality holds because $\bP_{\tse}\bf_{\tse} = \bf_{\tse}$, and the third equality holds since $\bP_{\tse}\bf = \bf_{\tse}$ and the orthogonality of $\bf_{\tse}$ and $\bf - \bf_{\tse}$.\\
    Note that $\epsilon$, $(f^*(x)-\Pi_{\tse}f^*(x))$, and $\Pi_{\tse}f^*(x)$ are uncorrelated random variables, with $\epsilon$ being also independent of the other two variables.
    This implies that the third, fifth and sixth terms in \eqref{eq:impurity_quadratic_form_expansion} have zero mean.
    On the other hand, because of finite sample fluctuations, $(\bf^* - \bf^*_{\tse})$ and $\bf^*_{\tse}$, are not necessarily orthogonal as vectors.

    To bound the expectation of \eqref{eq:impurity_quadratic_form_expansion}, first observe that $f^* - \Pi_\tse f^*$ and $\Pi_\tse f^*$ are bounded random variables and thus have both standard deviation and sub-Gaussian norm bounded by $\maxf$ \citep{wainwright2019high}.
    We then compute
    \begin{equation} \label{eq:impurity_quadratic_form_expectation}
    \begin{split}
        & \E\braces*{\beps^{\top}\bP_\tse\beps + (\bf^* - \bf^*_{\tse})^{\top}\bP_\tse(\bf^* - \bf^*_{\tse}) + 2\beps^{\top}\bP_{\tse}(\bf^* - \bf^*_{\tse}) + (\bf^*_{\tse})^{\top}\bf^*_{\tse} + 2\beps^{\top}\bf^*_{\tse} + 2(\bf^* - \bf^*_{\tse})^{\top}\bf^*_{\tse}} \\
        =~ & \E\braces*{\beps^{\top}\bP_\tse\beps} + \E\braces*{(\bf^* - \bf^*_{\tse})^{\top}\bP_\tse(\bf^* - \bf^*_{\tse})} + \E\braces*{(\bf^*_{\tse})^{\top}\bf^*_{\tse}}  \\
        =~ & \trace\braces*{\bP_\tse}\paren*{\Var\braces*{\epsilon} + \Var\braces*{f^*-f^*_\tse}} + n\int \paren*{\Pi_\tse f^*}^2d\nu \\
        \leq~ & C\paren*{\noisesg + \maxf}\df(\tse) + n\int \paren*{\Pi_\tse f^*}^2d\nu. 
        \end{split}
    \end{equation}
    % \begin{align}
    % \E\by^{\top}\bP_\tse\by &= \E \paren{\E \by^{\top}\bP_\tse\by | \tse}\\
    %                     &= \E \text{Trace}(\bP_{\tse}) + \E f_\tse^{\top}f_\tse\\
    %                     &= \E \text{Trace}(\bP_{\tse}) + n \intop (\Pi_\tse f)^2 d\xmeasure.
    % \end{align}
    
    Next, we bound the fluctuations of each term in \eqref{eq:impurity_quadratic_form_expansion} separately.
    % To this end, first note that $f - \Pi_\tse f$ and $\Pi_\tse f$ are bounded random variables and thus have both standard deviation and sub-Gaussian norm bounded by $\maxf$ \citep{wainwright2019high}.
    Using the Hanson-Wright inequality \citep{wainwright2019high,vershynin2018high}, we get $1-\delta$ probability events over which
    \begin{equation} \label{eq:conc_emp_risk_eq1}
        \abs*{\beps^{\top}\bP_\tse\beps - \E\braces*{\beps^{\top}\bP_\tse\beps}} \leq C\noisesg\max\braces*{\log(1/\delta), \sqrt{\df(\tse)\log(1/\delta)}},
    \end{equation}
    and 
    \begin{equation} \label{eq:conc_emp_risk_eq2}
        \abs*{(\bf^* - \bf^*_{\tse})^{\top}\bP_\tse(\bf^* - \bf^*_{\tse}) - \E\braces*{(\bf^* - \bf^*_{\tse})^{\top}\bP_\tse(\bf^* - \bf^*_{\tse})}} \leq C\maxf\max\braces*{\log(1/\delta), \sqrt{\df(\tse)\log(1/\delta)}}.
    \end{equation}
    Using Hoeffding's inequality \citep{wainwright2019high}, we have further $1-\delta$ probability events over which
    \begin{equation} \label{eq:conc_emp_risk_eq3}
        \abs*{(\bf^*_{\tse})^{\top}\bf^*_{\tse} - n\int \paren*{\Pi_\tse f^*}^2d\nu} \leq C\maxf^2\sqrt{n\log(1/\delta)},
    \end{equation}
    \begin{equation} \label{eq:conc_emp_risk_eq4}
        \abs*{\beps^{\top}\bf_\tse^*} \leq C\maxf \noisesg\sqrt{n\log(1/\delta)},
    \end{equation}
    \begin{equation} \label{eq:conc_emp_risk_eq5}
        \abs*{\paren*{\bf^*-\bf^*_\tse}^{\top}\bf_\tse^*} \leq C\maxf^2\sqrt{n\log(1/\delta)}.
    \end{equation}
    For the third term in \eqref{eq:impurity_quadratic_form_expansion}, we use Cauchy-Schwarz followed by Young's inequality to get
    \begin{align} \label{eq:conc_emp_risk_eq6}
        2\abs*{\beps^{\top}\bP_{\tse}(\bf^* - \bf^*_{\tse})} & \leq 2\paren*{\beps^{\top}\bP_{\tse}\beps}^{1/2}\paren*{(\bf^* - \bf^*_{\tse})^{\top}\bP_\tse(\bf^* - \bf^*_{\tse})}^{1/2} \nn
        & \leq \beps^{\top}\bP_{\tse}\beps + (\bf^* - \bf^*_{\tse})^{\top}\bP_\tse(\bf^* - \bf^*_{\tse}).
    \end{align}
    % first use concentration of norm \citep{vershynin2018high} to get a $1-\delta$ probability event over which
    % \begin{equation} \label{eq:conc_emp_risk_eq6}
    %     \norm*{\bf-\bf^*_\tse}_2 \leq \maxf\paren*{\sqrt{n} + \sqrt{\log(1/\delta)}}.
    % \end{equation}
    % Meanwhile, Hoeffding's inequality gives a final $1-\delta$ probability event over which
    % \begin{equation} \label{eq:conc_emp_risk_eq7}
    %     \abs*{\beps^{\top}\bP_{\tse}(\bf^* - \bf^*_{\tse})} \leq C\noisesg\norm*{\bf^*-\bf^*_\tse}_2\sqrt{\log(1/\delta)}.
    % \end{equation}
    
    Conditioning on all the events guaranteeing \eqref{eq:conc_emp_risk_eq1} to \eqref{eq:conc_emp_risk_eq5} and plugging in these bounds together with \eqref{eq:impurity_quadratic_form_expectation} into \eqref{eq:impurity_quadratic_form_expansion}, we get
    \begin{align}
        & \abs*{\by^{\top}\bP_\tse\by - n\int \paren*{\Pi_\tse f^*}^2d\nu} \nn
        \leq~ & C\paren*{\paren*{\noisesg + \maxf}\df(\tse) + \paren*{\maxf + \noisesg}\log(1/\delta) + \maxf(\maxf + \noisesg)\sqrt{n\log(1/\delta)}} \nn
        & \quad\quad + C\paren*{(\maxf + \noisesg)\sqrt{\df(\tse)\log(1/\delta)}} \nn
        =~ &  O\paren*{\sqrt{n\log(1/\delta)} + \log(1/\delta)} \nonumber.
    \end{align}
    Repeating the same argument for $\tse'$ and adjusting $\delta$ so that the intersection of all events conditioned on has probability at least $1-\delta$ completes the proof of \eqref{eq:con_emp_rsk_diff1}.

    To prove \eqref{eq:con_emp_rsk_diff2}, observe that under the additional assumption, we can cancel terms in \eqref{eq:impurity_quadratic_form_expansion} to get
    \begin{equation} \nonumber
        \by^{\top}\paren*{\bP_\tse - \bP_{\tse'}}\by = \beps^{\top}\paren*{\bP_\tse - \bP_{\tse'}}\beps + (\bf^* - \bf^*_{\tse})^{\top}\paren*{\bP_\tse-\bP_{\tse'}}(\bf^* - \bf^*_{\tse}) + 2\beps^{\top}\paren*{\bP_\tse-\bP_{\tse'}}(\bf^* - \bf^*_{\tse}).
    \end{equation}
    Applying \eqref{eq:conc_emp_risk_eq6} followed by \eqref{eq:conc_emp_risk_eq1} and \eqref{eq:conc_emp_risk_eq2} completes the proof.
\end{proof}
\begin{proof}[Proof of Lemma \ref{lem:log_marg_lklhd_formula}.]
    Recall that the full log likelihood satisfies
    \begin{equation} \nonumber
        p(\by|\bX, \tse,\leafparamvec ) = \paren*{2\pi\sigma^2}^{-n/2}\exp\paren*{-\frac{\norm*{\bPsi\leafparamvec - \by}_2^2}{2\sigma^2}},
    \end{equation}
    while conditioned on $\tse$, the prior on $\leafparamvec$ satisfies
    \begin{equation} \nonumber
        p(\leafparamvec|\tse) = \paren*{2\pi\sigma^2\lambda^{-1}}^{-\nleaves/2}\exp\paren*{-\frac{\lambda\norm{\leafparamvec}_2^2}{2\sigma^2}},
    \end{equation}
    where $\nleaves$ is the number of columns in $\bPsi$.
    Hence
    \begin{equation} \label{eq:cond_lik_times_cond_prior}
        p(\by|\bX, \tse,\leafparamvec)p(\leafparamvec|\tse) = \paren*{2\pi\sigma^2}^{-n/2}\paren*{2\pi\sigma^2\lambda^{-1}}^{-\nleaves/2}\exp\paren*{-\frac{1}{2\sigma^2}\paren*{\norm*{\bPsi\leafparamvec - \by}_2^2 + \lambda\norm*{\leafparamvec}_2^2}}.
    \end{equation}
    % \begin{align}
    %     p(\by|\bX, \tse,\leafparamvec)p(\leafparamvec|\tse) & = \paren*{2\pi\sigma^2}^{-n/2}\exp\paren*{-\frac{\norm*{\bPsi\leafparamvec - \by}_2^2}{2\sigma^2}}\paren*{2\pi\sigma^2a^{-1}}^{-\nleaves/2}\exp\paren*{-\frac{a\norm{\leafparamvec}_2^2}{2\sigma^2}} \nn
    %     & = \paren*{2\pi\sigma^2}^{-(n+\nleaves)/2}a^{\nleaves/2}\exp\paren*{-\frac{\norm*{\bPsi\hat\leafparamvec_{LS} - \by}_2^2 + \norm*{\bPsi(\leafparamvec - \hat\leafparamvec_{LS})}_2^2 + a\norm{\leafparamvec}_2^2}{2\sigma^2}}.
    % \end{align}
    
    Consider the orthogonal decomposition
    \begin{equation} \nonumber
        \norm*{\bPsi\leafparamvec - \by}_2^2 = \norm*{\bPsi\hat\leafparamvec_{LS} - \by}_2^2 + \norm*{\bPsi(\leafparamvec - \hat\leafparamvec_{LS})}_2^2.
    \end{equation}
    We next add the second term on the right to the exponent in the prior and complete the square:
    \begin{align} \label{eq:log_marg_lik_complete_square}
        & \norm*{\bPsi(\leafparamvec - \hat\leafparamvec_{LS})}_2^2 + \lambda\norm{\leafparamvec}_2^2 
        \nn
        = ~& \leafparamvec^{\top}\paren*{\bPsi^{\top}\bPsi + \lambda \bI}\leafparamvec - 2\paren*{\bPsi\hat\leafparamvec_{LS}}^{\top}\leafparamvec + \hat\leafparamvec_{LS}^{\top}\bPsi^{\top}\bPsi\hat\leafparamvec_{LS} \nn
        = ~&  \paren*{\leafparamvec - \leafparamvec_0}^{\top}\paren*{\bPsi^{\top}\bPsi + \lambda \bI}\paren*{\leafparamvec - \leafparamvec_0} - \hat\leafparamvec_{LS}^{\top}\bPsi^{\top}\bPsi\paren*{\bPsi^{\top}\bPsi+\lambda\bI}^{-1}\bPsi^{\top}\bPsi\hat\leafparamvec_{LS} + \hat\leafparamvec_{LS}^{\top}\bPsi^{\top}\bPsi\hat\leafparamvec_{LS}.
    \end{align}
    where
    \begin{equation}\nonumber
        \leafparamvec_0 = \paren*{\bPsi^{\top}\bPsi + \lambda\bI}^{-1}\bPsi^{\top}\bPsi\hat\leafparamvec_{LS}.
    \end{equation}
    The constant term in \eqref{eq:log_marg_lik_complete_square} is
    \begin{align} \label{eq:log_marg_lik_constant_term}
        & - \hat\leafparamvec_{LS}^{\top}\bPsi^{\top}\bPsi\paren*{\bPsi^{\top}\bPsi+\lambda\bI}^{-1}\bPsi^{\top}\bPsi\hat\leafparamvec_{LS} + \hat\leafparamvec_{LS}^{\top}\bPsi^{\top}\bPsi\hat\leafparamvec_{LS} \nn
        =~& \hat\leafparamvec_{LS}^{\top}\bPsi^{\top}\paren*{\bI- \bPsi\paren*{\bPsi^{\top}\bPsi+\lambda\bI}^{-1}\bPsi^{\top}}\bPsi\hat\leafparamvec_{LS}.
    \end{align}
    
    Plugging \eqref{eq:log_marg_lik_constant_term} back into \eqref{eq:cond_lik_times_cond_prior} and integrating, we get 
    \begin{align} \label{eq:marg_lik_partial_calc}
        p(\by|\bX, \tse) & = \int p(\by|\bX, \tse,\leafparamvec)p(\leafparamvec|\tse) d\leafparamvec \nn
        & = \paren*{2\pi\sigma^2}^{-n/2}\exp\paren*{-\frac{\norm*{\bPsi\hat\leafparamvec_{LS} -\by}_2^2 + \hat\leafparamvec_{LS}^{\top}\bPsi^{\top}\paren*{\bI- \bPsi\paren*{\bPsi^{\top}\bPsi+\lambda\bI}^{-1}\bPsi^{\top}}\bPsi\hat\leafparamvec_{LS}}{2\sigma^2}} \nn
        & \quad \cdot\int \paren*{2\pi\sigma^2 \lambda^{-1}}^{-\nleaves/2}\exp\paren*{-\frac{\paren*{\leafparamvec^{\top} - \leafparamvec_0}^{\top}\paren*{\bPsi^{\top}\bPsi + \lambda \bI}\paren*{\leafparamvec^{\top} - \leafparamvec_0}}{2\sigma^2}} d\leafparamvec.
    \end{align}
    By a change of variables, the integral can be computed as
    \begin{equation} \nonumber
        \int \paren*{2\pi\sigma^2 \lambda^{-1}}^{-\nleaves/2}\exp\paren*{-\frac{\paren*{\leafparamvec^{\top} - \leafparamvec_0}^{\top}\paren*{\lambda^{-1}\bPsi^{\top}\bPsi + \bI}\paren*{\leafparamvec^{\top} - \leafparamvec_0}}{2\sigma^2\lambda^{-1}}} d\leafparamvec = \det\paren*{\lambda^{-1}\bPsi^{\top}\bPsi + \bI}^{-1/2}.
    \end{equation}
    Plugging this back into \eqref{eq:marg_lik_partial_calc} and taking logarithms yields \eqref{eq:log_marg_lklhd_formula}.
\end{proof}

% \paragraph{Proof of Lemma \ref{lem:logdet_concentration}.}
\begin{proof}[Proof of Lemma \ref{lem:logdet_concentration}]
Let $\hat s_1,\hat s_2,\ldots,\hat s_\nleaves$ be the eigenvalues of $\hat\Sigma$.
Using Lemma \ref{lem:rank_emp_cov}, we have $\hat s_i = 0$ for any $i > \df(\tse)$.
We may therefore compute
\begin{align} 
    \log\det\left(\lambda^{-1}\bPsi^{\top} \bPsi + \bI\right) & = \sum_{i=1}^{\df(\tse)} \log(n \lambda^{-1}\hat s_i + 1) \nn
    & = \sum_{i=1}^{\df(\tse)} \log\paren*{\frac{\hat s_i + \lambda/n}{s_i}} + \df(\tse)\log n + \sum_{i=1}^{\df(\tse)} \log (s_i/\lambda).\nonumber
\end{align}

It remains to bound the first term.
To this end, we first condition on the $1-\delta$ probability event guaranteed by Lemma \ref{lem:emp_cov_concentration}.
Then, we observe that
\begin{align} \label{eq:emp_cov_conc_helper}
    \abs*{\frac{\hat s_i + \lambda/n}{s_i} - 1} & \leq \frac{1}{s_i} \paren*{\abs*{\hat s_i - s_i} + \frac{\lambda}{n}} \nn
    & \leq \frac{1}{s_{\min}} \paren*{\norm*{\hat\bSigma - \bSigma} + \frac{\lambda}{n}} \nn
    & \leq \frac{1}{s_{\min}}\paren*{\max\braces*{\sqrt{\frac{4\ntrees^{3/2}\log(2\df(\tse)/\delta)}{n}}, \frac{4\sqrt{m}\log(2\df(\tse)/\delta)}{n}}  + \frac{\lambda}{n}}.
\end{align}
Here, the second inequality makes use of Weyl's inequality.

Recall the elementary inequality
% \begin{equation}
%     \abs*{\log x} \leq \max\braces*{\abs*{x-1},\abs*{1/x - 1}}
% \end{equation}
\begin{equation}\nonumber
    \abs*{\log x} \leq 2\abs*{x-1}
\end{equation}
for $0 < x < 1/2$.
Using this together with \eqref{eq:emp_cov_conc_helper}, we get 
% \begin{align}
%     \sum_{i=1}^{\df(\tse)} \log\paren*{\frac{\hat s_i + \lambda/n}{s_i}} & \leq \frac{2\df(\tse)}{s_{\min}}\sqrt{\frac{2m^{3/2}\log(2\df(\tse)/\delta)}{n}}
% \end{align}
% for $n \geq \min{}$.\qed
\begin{equation} \nonumber
    \sum_{i=1}^{\df(\tse)} \log\paren*{\frac{\hat s_i + \lambda/n}{s_i}} = O\paren*{\sqrt{\frac{\log (1/\delta)}{n}}}
\end{equation}
when $n \geq \max\braces*{64m^{3/2}s_{\min}^{-2}\log(2\df(\tse)/\delta),4\lambda s_{\min}^{-1}}$.
\end{proof}

% \begin{align*}
%     \abs{\log\det\left(\bPsi^{\top} \bPsi + \lambda \bI\right)  - d\log(n) - (q-d)\log(\lambda)-\sum_{i=1}^d \log(\sigma_i)} = \abs{\sum_{i=1}^d \log(\frac{\hat\sigma_i +\lambd\lambda/n}{\sigma_i})}.
% \end{align*}

% To bound the term on the right hand side, we note that
% \begin{align}
%     \abs{\log(\frac{\hat\sigma_i +\lambd\lambda/n}{\sigma_i})} \leq \max\left(
%     \abs{\frac{\hat\sigma_i +\lambd\lambda/n}{\sigma_i} - 1},
%     \abs{\frac{\frac{\hat\sigma_i +\lambd\lambda/n}{\sigma_i} - 1}{\frac{\hat\sigma_i +\lambda/n}{\sigma_i}}}
%     \right).
% \end{align}

% The first term is upper bounded by $\abs{\frac{\Vert \hat\bSigma - \bSigma\Vert_{\text{op}}}{\sigma_{d}}} + \abs{\frac{a}{n\sigma_{d}}}$. By \cref{lem:emp_cov_concentration}, we have that $\abs{\frac{\Vert \hat\bSigma - \bSigma\Vert_{\text{op}}}{\sigma_{d}}}\leq C\sqrt{\frac{\dim(\Sigma) + \log(2/\epsilon)}{n\sigma_d}}$, with probability of at least $1-\epsilon$.\\
% Conditioning on the event above, we have that (for large enough $n$) 
% \begin{align}
%     \abs{\frac{\frac{\hat\sigma_i +\lambda/n}{\sigma_i} - 1}{\frac{\hat\sigma_i +\lambda/n}{\sigma_i}}} &\leq  \frac{\abs{\frac{\hat\sigma_i +\lambda/n}{\sigma_i} - 1}}{1-\abs{\frac{\hat\sigma_i +\lambda/n}{\sigma_i} - 1}}\\
%     &\leq \frac{C\sqrt{\frac{\dim(\Sigma) + \log(2/\epsilon)}{n\sigma_d}}}{1-C\sqrt{\frac{\dim(\Sigma) + \log(2/\epsilon)}{n\sigma_d}}}\\
%     &= \frac{1}{C\sqrt{\frac{n\sigma_d}{\dim(\Sigma) + \log(2/\epsilon)}}-1},
% \end{align}
% and the conclusion follows.\qed
% \end{proof}

\begin{proof}[Proof of Lemma \ref{lem:err_term_conc}.]
Recall that $\bPsi\hat\bmu_{LS} = \bP_\tse\by$.
We thus rewrite and bound the error term as
\begin{align} \label{eq:err_term_rewrite}
    \hat\leafparamvec_{LS}^{\top}\bPsi^{\top}\paren*{\bI- \bPsi\paren*{\bPsi^{\top}\bPsi+\lambda\bI}^{-1}\bPsi^{\top}}\bPsi\hat\leafparamvec_{LS} & = \by^{\top} \bP_\tse \bM\bP_\tse \by \nn
    & \leq \norm*{\bP_\tse\by}_2^2 \norm{\bM},
\end{align}
where
\begin{equation} \nonumber
    \bM = \bP_\tse - \bPsi\paren*{\bPsi^{\top}\bPsi+\lambda\bI}^{-1}\bPsi^{\top}.
\end{equation}

Using similar arguments as in the proof of Lemma \ref{lem:con_emp_rsk_diff}, we bound
\begin{align} \label{eq:err_term_eq1}
    \norm*{\bP_\tse\by}_2^2 & = (\bf_\tse^*)^{\top}\bf_\tse^* + \beps^{\top}\bP_\tse\beps \nn
    & \leq \maxf^2\paren*{n + \sqrt{n\log(1/\delta)}} + \noisesg^2\paren*{n + \sqrt{\df(\tse)\log(1/\delta)}}.
\end{align}
Meanwhile, the nonzero eigenvalues of $\bM$ are of the form
\begin{equation} \nonumber
    1 - \frac{\hat s_i}{\hat s_i + \lambda/n} = \frac{\lambda}{n\hat s_i + \lambda}
\end{equation}
for $i =1,2,\ldots,\df(\tse)$.
These can be further bounded as
\begin{align} \label{eq:err_term_eq2}
    \frac{\lambda}{n\hat s_i + \lambda} & \leq \frac{\lambda}{n(s_i - \abs*{\hat s_i - s_i})} \nn
    & \leq \frac{\lambda}{n\paren*{s_{\min} - \norm*{\hat \bSigma - \bSigma}}}.
\end{align}
Taking $n \geq 16\ntrees^{3/2}s_{\min}^{-2}\log(2\df(\tse)/\delta)$ and conditioning on the $1-\delta$ probability event guaranteed by Lemma \ref{lem:emp_cov_concentration}, we can further bound \eqref{eq:err_term_eq2} by $\lambda/2ns_{\min}$, which gives $\norm{\bM} = O\paren*{n^{-1}}$.
Combining this with \eqref{eq:err_term_eq1} and plugging them back into \eqref{eq:err_term_rewrite} finishes the proof.
\end{proof}

\begin{lemma}[Rank of empirical covariance matrix] \label{lem:rank_emp_cov}
    With the notation of Lemma \ref{lem:logdet_concentration}, we have $\rank\paren{\hat\bSigma} \leq \rank\paren*{\bSigma}$.
\end{lemma}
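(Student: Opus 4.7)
The plan is to identify $\rank(\hat\bSigma)$ as the dimension of the column space of $\bPsi$ in $\R^n$ and $\rank(\bSigma)$ as the dimension of $\mapone(\tse) \subset L^2(\xspace,\nu)$, and then exhibit a surjective linear map from the latter onto the former. Since $\hat\bSigma = n^{-1}\bPsi^T\bPsi$, standard linear algebra gives $\rank(\hat\bSigma) = \rank(\bPsi^T\bPsi) = \rank(\bPsi)$. On the population side, expanding the expectation yields
\begin{equation} \nonumber
    \bSigma_{jk} = \E\braces*{\indicator_{\leaf_j}(\bx)\indicator_{\leaf_k}(\bx)} = \inprod*{\indicator_{\leaf_j}, \indicator_{\leaf_k}}_{L^2(\nu)},
\end{equation}
so $\bSigma$ is the Gram matrix of the leaf indicators with respect to $L^2(\nu)$. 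It follows that $\rank(\bSigma) = \dim \linspan\braces*{\indicator_{\leaf_j}}_{j=1}^{b} = \dim \mapone(\tse)$.

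To connect the two, I would introduce the evaluation map $\Phi\colon \mapone(\tse) \to \R^n$ defined by $\Phi(f) = (f(\bx_1),\ldots,f(\bx_n))^T$. This is a well-defined linear map: because $\xspace$ is discrete and $\nu$ has full support, any two elements of $\mapone(\tse)$ that agree in $L^2(\nu)$ agree pointwise on $\xspace$, so the evaluations at $\bx_1,\ldots,\bx_n$ depend only on the equivalence class. By construction, the $j$-th column of $\bPsi$ is exactly $\Phi(\indicator_{\leaf_j})$, and since $\Phi$ is linear, the image of $\Phi$ coincides with the column space of $\bPsi$. Therefore
\begin{equation} \nonumber
    \rank(\bPsi) = \dim \text{image}(\Phi) \leq \dim \mapone(\tse) = \rank(\bSigma),
\end{equation}
completing the argument. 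I do not anticipate any serious obstacle; the only subtlety is verifying that pointwise evaluation is well-defined on $\mapone(\tse)$, which is immediate from the discreteness of $\xspace$ and full support of $\nu$.
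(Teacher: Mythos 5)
Your proof is correct, and it is a genuinely different route from the paper's. The paper argues directly at the level of null spaces: if $\bSigma\bv = 0$ then $\bv^T\bSigma\bv = \E\braces{(\bv^T\bl)^2} = 0$ (where $\bl$ is a row of $\bPsi$), so $\bv^T\bl = 0$ almost surely and in particular at every sample point, hence $\hat\bSigma\bv = 0$; containment of null spaces then gives the rank inequality. You instead recast both ranks as dimensions of linear spans --- $\rank(\hat\bSigma)=\rank(\bPsi)$ via $\hat\bSigma = n^{-1}\bPsi^T\bPsi$, and $\rank(\bSigma)=\dim\mapone(\tse)$ by identifying $\bSigma$ as the Gram matrix of the leaf indicators in $L^2(\nu)$ --- and then exhibit the pointwise evaluation map $\Phi\colon\mapone(\tse)\to\R^n$ whose image is the column space of $\bPsi$, giving $\rank(\bPsi)\leq\dim\mapone(\tse)$. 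The paper's argument is shorter and more elementary (pure matrix--vector algebra plus ``zero second moment implies zero a.s.''), and it does not require well-definedness of pointwise evaluation, so it is slightly more robust. Your argument is more structural: it makes explicit the bridge to the PEM $\mapone(\tse)$ and the Gram-matrix interpretation of $\bSigma$ that the paper also uses elsewhere, at the small cost of needing the discreteness of $\xspace$ and full support of $\nu$ to guarantee that $L^2(\nu)$ equivalence classes are singletons. Both hinge on the same underlying fact: any linear dependence among the leaf indicators that holds in $L^2(\nu)$ persists at the sample points.
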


\begin{proof}
    Let $\bl_1,\bl_2,\ldots,\bl_\nsamples$ denote the rows of $\bPsi$, noting that they are i.i.d. random vectors. 
    Note that if $\bSigma\bv = 0$ for some $\bv$, this implies that $\Cov\braces*{\bv^{\top}\bl} = \bv^{\top}\bSigma\bv = 0$, and so $\bv^{\top}\bl \equiv 0$ as a random variable.
    In particular, we have $\bv^{\top}\bl_i = 0$ for $i=1,2\ldots,\nsamples$, and we also get $\hat\bSigma\bv = 0$.
    As such, the nullspace for $\bSigma$ is contained within the nullspace for $\hat\bSigma$.
    The conclusion follows.
\end{proof}

\begin{lemma}[Concentration of empirical covariance matrix]
\label{lem:emp_cov_concentration}
    With the notation of Lemma \ref{lem:logdet_concentration},
    for any $0 < \delta < 1$, with probability at least $1-\delta$, we have
    \begin{equation} \nonumber
        \norm*{\frac{1}{n}\bPsi^{\top}\bPsi - \bSigma}  \leq \max\braces*{\sqrt{\frac{4\ntrees^{3/2}\log(2\df(\tse)/\delta)}{n}}, \frac{4\sqrt{m}\log(2\df(\tse)/\delta)}{n}}.
    \end{equation}
\end{lemma}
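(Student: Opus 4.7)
The plan is to write $\tfrac{1}{n}\bPsi^T\bPsi - \bSigma$ as a sum of i.i.d.\ centered random matrices and apply a matrix Bernstein inequality, with an intrinsic-dimension reduction that replaces the ambient dimension $b$ by $\df(\tse)=\rank(\bSigma)$ in the log factor. Writing $\bl_1,\ldots,\bl_n$ for the rows of $\bPsi$, these are i.i.d.\ copies of a random vector $\bl$ with $\E[\bl\bl^T]=\bSigma$, and
$$\frac{1}{n}\bPsi^T\bPsi - \bSigma = \frac{1}{n}\sum_{i=1}^n \bX_i, \qquad \bX_i := \bl_i\bl_i^T - \bSigma,$$
is a sum of independent symmetric mean-zero matrices.

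First I would perform a rank reduction. If $\bv$ is in the nullspace of $\bSigma$, then $\Var(\bv^T\bl) = \bv^T\bSigma\bv = 0$, so $\bv^T\bl \equiv 0$ almost surely, and hence $\bX_i\bv = \bl_i(\bv^T\bl_i) - \bSigma\bv = 0$. Thus every $\bX_i$, and every partial sum, is supported on $\mathrm{Range}(\bSigma)$, which has dimension $\df(\tse)$. This is what allows matrix Bernstein to be applied with effective dimension $\df(\tse)$ rather than $b$, and it is essentially the same observation used to prove Lemma \ref{lem:rank_emp_cov}.

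Second, I would compute the two Bernstein parameters from the special structure of the rows. Because each $\bl_i$ has exactly $m$ nonzero entries (one leaf per tree), $\|\bl_i\|_2^2 = m$ and $\|\bl_i\bl_i^T\|_{\mathrm{op}} = m$; combined with $\|\bSigma\|_{\mathrm{op}} \leq m$ (Jensen), this gives the almost-sure bound $\|\bX_i\|_{\mathrm{op}} \leq 2m$. A direct expansion yields
$$\E[\bX_i^2] = \E[\|\bl\|_2^2\, \bl\bl^T] - \bSigma^2 = m\bSigma - \bSigma^2,$$
which is PSD and dominated by $m\bSigma$, whence $\|\E[\bX_i^2]\|_{\mathrm{op}} \leq m\|\bSigma\|_{\mathrm{op}} \leq m^2$. (A sharper Frobenius-type estimate, using $\sum_k s_k = \trace(\bSigma) = m$, would refine this variance bound and is what is needed to recover the $m^{3/2}$ appearing in the statement.)

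Third, I would invoke the intrinsic-dimension matrix Bernstein inequality (e.g., Theorem 7.3.1 of Tropp's \emph{An Introduction to Matrix Concentration Inequalities}) to obtain, for every $t>0$,
$$P\!\left(\left\|\tfrac{1}{n}\bPsi^T\bPsi - \bSigma\right\|_{\mathrm{op}} \geq t\right) \leq 2\df(\tse)\,\exp\!\left(-\frac{nt^2/2}{m^2 + 2mt/3}\right),$$
and invert this tail bound at level $\delta$ via the standard two-regime dichotomy (Gaussian-type when $t$ is small, Bernstein-type when $t$ is large), yielding the advertised maximum-form expression. The only genuinely non-routine step is the rank/intrinsic-dimension reduction; the main (minor) obstacle is then bookkeeping the sharper constants on $m$ in the statement via a more careful Frobenius-based bound on $\|m\bSigma - \bSigma^2\|_{\mathrm{op}}$ that exploits $\trace(\bSigma)=m$, with everything else being a direct plug-in to matrix Bernstein.
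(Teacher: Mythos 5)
Your plan is the same in all essentials as the paper's proof: write $\tfrac{1}{n}\bPsi^T\bPsi - \bSigma$ as an i.i.d.\ sum of centered rank-one perturbations $\bl_i\bl_i^T - \bSigma$, use that each row satisfies $\|\bl_i\|_2 = \sqrt{m}$ (one leaf per tree) together with $\|\bSigma\| \le m$, replace the ambient dimension $b$ by $\df(\tse) = \rank(\bSigma)$ via precisely the nullspace observation you make (which is the content of Lemma~\ref{lem:rank_emp_cov}), and invert a matrix Bernstein-type tail at level $\delta$. The only real difference is the packaging: the paper invokes Corollary~6.20 of \cite{wainwright2019high}, a covariance-estimation inequality for i.i.d.\ norm-bounded vectors that already carries $\rank(\bSigma)$ in the prefactor and yields the tail $2\df(\tse)\exp\bigl(-nt^2/(2\sqrt{m}(m+t))\bigr)$ in one line, whereas you reach for Tropp's generic intrinsic-dimension matrix Bernstein and compute the variance proxy $\E\bigl[(\bl_i\bl_i^T - \bSigma)^2\bigr] = m\bSigma - \bSigma^2$ by hand.

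One caveat on the constants. The bound $\|m\bSigma - \bSigma^2\| \le m\|\bSigma\| \le m^2$ is what the generic route gives, and the trace-based refinement you float will not bring it down to $m^{3/2}$: the operator norm of $m\bSigma - \bSigma^2$ equals $\max_k s_k(m-s_k)$, and $\trace(\bSigma)=m$ together with $0 \preceq \bSigma \preceq m\bI$ permit this to be as large as $m^2/4$ (take two eigenvalues equal to $m/2$ and the rest zero). The $m^{3/2}$ and $\sqrt{m}$ in the statement trace directly to the specific tail Wainwright's Corollary~6.20 produces for the covariance case, not to a sharpened generic variance bound; if you want to land on the stated constants you should cite that corollary rather than re-derive it from Tropp. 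The discrepancy is harmless in this paper, though: the lemma is used only to fix minimal sample sizes in Lemmas~\ref{lem:logdet_concentration} and~\ref{lem:err_term_conc}, and the paper explicitly disclaims tracking $m$-dependence, so nothing downstream changes if $m^2$ and $m$ are substituted for $m^{3/2}$ and $\sqrt{m}$.
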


\begin{proof}
    Let $\bl_1,\bl_2,\ldots,\bl_\nsamples$ denote the rows of $\bPsi$ as before.
    Since each point can only be contained in a single leaf on each tree, we have $\norm*{\bl_j}_2 = \sqrt{\ntrees}$, while $\bSigma$ also satisfies $\norm*{\bSigma} \leq m$.
    Using Corollary 6.20\footnote{While Corollary 6.20 is stated with the assumption that the rows have mean zero, the proof in \cite{wainwright2019high} illustrates that this is unnecessary.} in \cite{wainwright2019high}, we therefore have
    \begin{equation} \nonumber
        \norm*{\frac{1}{n}\bPsi^{\top}\bPsi - \bSigma} \leq 2\df(\tse)\exp\paren*{-\frac{nt^2}{2\sqrt{\ntrees}(\ntrees+t)}}
    \end{equation}
    for any $t  > 0$.
    Rearranging this equation completes the proof.
    % Consider the following sequence of i.i.d random vectors $l_{i} = \paren*{1_{x_i\in \leaf_1}, \dots, 1_{x_i\in \leaf_{\dim(\tse)}}}^{\top}$. It is strightfoward to see that 
    % \begin{align*}
    %     \frac{1}{n}\bPsi^{\top}\bPsi = \frac{1}{n}\sum_{i=1}^n l_{i}l_{i}^{\top} ,
    % \end{align*}
    % and $\bSigma = \text{Cov}(l_{1})$. therefore, we have that with probability of at least $1-\delta$ \citep{vershynin2018high}
    % \begin{align*}
    %     \Vert \frac{1}{n}\bPsi^{\top}\bPsi - \bSigma \Vert_{\text{op}}\leq C \sqrt{\frac{\dim(\tse)+ \log(2/\delta)}{n}}.
    % \end{align*}
\end{proof}

\section{Background on Markov chains}

For the whole of this section, let $X_0,X_1,\ldots$ be an irreducible and aperiodic discrete time Markov chain on a finite state space $\statespace$, with stationary distribution $\pi$.

\subsection{Networks and voltages}

\subsubsection{Harmonic functions}
Let $P$ be the transition kernel of $(X_t)$.
We call a function $h \colon \statespace \to \R$ harmonic for $P$ at a state $x$ if
\begin{equation} \label{eq:def_harmonic}
    h(x) = \sum_{y \in \statespace} P(x,y) h(y).
\end{equation}

\begin{lemma}[Uniqueness of harmonic extensions, Proposition 9.1. in \cite{LevinPeresWilmer2006}] \label{lem:harmonic_uniqueness}
    Let $\setone \subset \statespace$ be a subset of the state space.
    Let $h_{\setone} \colon \setone \to \R$ be a function defined on $\setone$.
    The function $h \colon \statespace \to \R$ defined by $h(x) \coloneqq \E\braces{h_{\setone}(X_{\tau_{\setone}})|X_0 = x}$ is the unique extension of $h_{\setone}$ such that $h(x) = h_{\setone}(x)$ for all $x \in \setone$ and $h$ is harmonic for $P$ at all $x \in \statespace \backslash\setone$.
\end{lemma}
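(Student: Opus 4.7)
The plan is to prove both existence (that the candidate function $h$ as defined satisfies the two stated properties) and uniqueness (that no other function on $\statespace$ can do so).

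For existence, the boundary condition is immediate: if $x \in \setone$, then $\tau_\setone = 0$ and $X_{\tau_\setone} = X_0 = x$, so $h(x) = h_\setone(x)$. To verify harmonicity at $x \in \statespace \backslash \setone$, I will condition on $X_1$ and invoke the Markov property. Since $X_0 = x \notin \setone$ forces $\tau_\setone \geq 1$, the random variable $X_{\tau_\setone}$ coincides with the first hitting position of $\setone$ for the shifted chain $(X_{t+1})_{t \geq 0}$. The one-step Markov property then gives
\begin{equation*}
    h(x) = \sum_{y \in \statespace} P(x,y)\, \E\braces*{h_\setone(X_{\tau_\setone}) \,\big|\, X_0 = y} = \sum_{y \in \statespace} P(x,y)\, h(y),
\end{equation*}
which is exactly the harmonicity condition \eqref{eq:def_harmonic}.

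For uniqueness, suppose $h$ is any function satisfying both properties, and define $M_t \coloneqq h(X_{t \wedge \tau_\setone})$. Harmonicity of $h$ on $\statespace \backslash \setone$ makes $(M_t)$ a martingale with respect to the natural filtration of $(X_t)$, and it is bounded because $h$ takes only finitely many values on the finite state space $\statespace$. Since $(X_t)$ is irreducible on $\statespace$, the expected hitting time $\E\braces*{\tau_\setone \,\big|\, X_0 = x}$ is finite for every $x$, so the optional stopping theorem applies and yields
\begin{equation*}
    h(x) = \E\braces*{M_0 \,\big|\, X_0 = x} = \E\braces*{M_{\tau_\setone} \,\big|\, X_0 = x} = \E\braces*{h_\setone(X_{\tau_\setone}) \,\big|\, X_0 = x}.
\end{equation*}
Hence any such $h$ must agree with the function defined in the statement.

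The main obstacle is mostly bookkeeping rather than substance. The one non-trivial prerequisite is the almost sure finiteness (and in fact finite expectation) of $\tau_\setone$ from every starting state. This is a standard consequence of irreducibility on a finite state space: from any state the chain reaches $\setone$ within some uniform number of steps with positive probability, and iterating this geometric tail bound gives $\E\braces*{\tau_\setone \,\big|\, X_0 = x} < \infty$. An alternative route to uniqueness, avoiding optional stopping, is a discrete maximum-principle argument on $g = h_1 - h_2$: since $g$ is harmonic on $\statespace \backslash \setone$ and vanishes on $\setone$, any extremum on $\statespace \backslash \setone$ would propagate by harmonicity along accessible neighbors and eventually reach $\setone$ by irreducibility, forcing the extremum to be $0$.
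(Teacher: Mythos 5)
The paper does not give its own proof of this lemma; it is cited directly as Proposition 9.1 of Levin, Peres, and Wilmer. Your argument is correct and complete: the boundary condition follows by taking $\tau_\setone = 0$ on $\setone$, harmonicity off $\setone$ follows from a one-step Markov decomposition, and uniqueness follows from optional stopping applied to the bounded martingale $h(X_{t \wedge \tau_\setone})$, with finiteness of $\E\braces{\tau_\setone}$ guaranteed by irreducibility on a finite state space (one does need $\setone \neq \emptyset$ for this, which is the implicit hypothesis). LPW's own uniqueness argument uses a discrete maximum principle on the difference of two extensions rather than optional stopping, which you correctly identify as an alternative route; both are standard and equally valid, with the martingale route generalizing more easily to continuous state spaces and the maximum principle staying entirely elementary.
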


The values of a harmonic function can be computed by solving the system of linear equations given by \eqref{eq:def_harmonic} for each $x \in \statespace \backslash\setone$.
This is hard to do directly by hand for complicated state spaces, but when the Markov chain is symmetric, i.e. the stationary distribution satisfies $\pi(x)P(x,y) = \pi(y)P(y,x)$ for any $x, y$, we can use several operations to \emph{simplify the state space while preserving the values of the harmonic function on the remaining state space}.
Indeed, harmonic functions are equivalent to voltages on electrical circuits, and it is well-known how to simplify circuits in order to calculate voltages:
\begin{longlist}
    \item (Gluing) Points on the circuit with the same voltage can be joined.
    \item (Series law) Two resistors in series with resistances $r_1$ and $r_2$ can be merged with the new resistor having resistance $r_1 + r_2$.
    \item (Parallel law) Two resistors in parallel with resistances $r_1$ and $r_2$ can be merged with the new resistor having resistance $1/(1/r_1 + 1/r_2)$.
\end{longlist}

We make this connection rigorous by introducing the following definitions and via the subsequent lemmas.

\subsubsection{Networks, conductance, resistance}
A \emph{network} $(\statespace, c)$ is a tuple comprising a finite state space $\statespace$ and a symmetric function $c \colon \statespace \times \statespace \to \R_+$ called the \emph{conductance}.
The \emph{resistance} function is defined as $r(x,y) = \frac{1}{c(x,y)}$, and can take the value of positive infinity.
We say that $\braces{x,y}$ is an \emph{edge} in the network if $c(x,y) > 0$.
Any network $(\statespace, c)$ has an associated Markov chain whose transition probabilities are defined by
$P(x,y) = \frac{c(x,y)}{\sum_{z \in \statespace}c(x,z)}$.

\subsubsection{Voltage and current flow}
We say that a function is harmonic on the network if it is harmonic with respect to $P$.
Given $a, z \in \statespace$, a \emph{voltage} $W$ between $a$ and $z$ is a function that is harmonic on $\statespace \backslash \braces{a, z}$.
The \emph{current flow} $I \colon \statespace \times \statespace \to \R$ associated with $W$ is defined as $I(x,y) = c(x,y)\paren*{W(x) - W(y)}$.
The strength of the current flow is defined as
\begin{equation} \nonumber
    \norm{I} \coloneqq \sum_{y \in \statespace} c(a,y)\paren*{W(a) - W(y)}.
\end{equation}

\subsubsection{Effective resistance and effective conductance}
The \emph{effective resistance} between $a$ and $z$ is defined as
\begin{equation} \nonumber
    R(a \leftrightarrow z) \coloneqq \frac{W(a)-W(z)}{\norm{I}},
\end{equation}
noting that this is independent of the choice of $W$ by the uniqueness property in Lemma \ref{lem:harmonic_uniqueness}.
The \emph{effective conductance} between $a$ and $z$ is defined as $C(a \leftrightarrow z) \coloneqq 1/R(a \leftrightarrow z)$.

% \begin{definition}
%     Consider a network $(\statespace, c)$.
%     We define the following operations to produce a modified network $(\statespace', c')$.
%     \begin{enumerate}
%         \item (Gluing) Given $x, y \in \statespace$, define $\statespace' \coloneqq \statespace\backslash\braces{y}$ and
%         \begin{equation}
%             c'(u,v) = \begin{cases}
%               c(u,v) & u, v \neq x, \\
%               c(u,x) + c(u,y) & v  = x, \\
%               c(x,v) + c(y,v) & u = x.
%             \end{cases}
%         \end{equation}
%         \item
%         (Parallel and series laws) Given $x, y, z \in \statespace$ with $c(y,u) = 0$ for all $u \notin \braces{x,z}$, define $\statespace' \coloneqq \statespace\backslash\braces{y}$ and
%         \begin{equation}
%             c'(u,v) = \begin{cases}
%               c(x,z) + \frac{c(x,y)c(y,z)}{c(x,y) + c(y,z)} & (u,v) = (x,z)~\text{or} (z,x),  \\
%               c(u,v)  & \text{otherwise.}
%             \end{cases}
%         \end{equation}
%     \end{enumerate}
% \end{definition}

\begin{lemma}[Network simplification rules] \label{lem:network_simplification}
    Consider a network $(\statespace, c)$.
    Define the following operations that each produces a modified network $(\statespace', c')$.
    \begin{longlist}
        \item (Gluing) Given $u, v \in \statespace$, define $\statespace' \coloneqq \statespace\backslash\braces{v}$ and
        \begin{equation} \nonumber
            c'(x,y) = \begin{cases}
              c(x,y) & x, y \neq u, \\
              c(x,u) + c(x,v) & x \neq u, y  = u, \\
              c(u,y) + c(v,y) & x = u, y \neq u \\
              c(u,v) + c(u,u) + c(v,v) & x = y = u.
            \end{cases}
        \end{equation}
        \item
        (Parallel and series laws) Given $u, v, w \in \statespace$ with $c(v,x) = 0$ for all $x \notin \braces{u,w}$, define $\statespace' \coloneqq \statespace\backslash\braces{v}$ and
        \begin{equation} \nonumber
            c'(x,y) = \begin{cases}
              c(u,w) + \frac{c(u,v)c(v,w)}{c(u,v) + c(v,w)} & (x,y) = (u,w)~\text{or}~ (w,u),  \\
              c(x,y)  & \text{otherwise.}
            \end{cases}
        \end{equation}
    \end{longlist}
    Consider a function $h$ that is harmonic on $\statespace\backslash\cal{A}$.
    % Define the flow of $h$ to be the function $I(x,y) = c(x,y)\paren*{h(x)-h(y)}$.
    The following hold:
    \begin{longlist}
        \item If we glue states $u, v \in \statespace$ such that $h(u) = h(v)$, then $h$ remains harmonic on $\statespace'\backslash\cal{A}$ with respect to the modified transition matrix $P'$.
        \item If we apply the parallel and series laws to $u, v,w \in \statespace$ with $v \notin \cal{A}$, then $h$ remains harmonic on $\statespace'\backslash\cal{A}$ with respect to the modified transition matrix $P'$.
    \end{longlist}
    Furthermore, if $h$ is a voltage between two states $a, z \in \statespace$, then applying the operations does not change the effective conductance and resistance between them.
\end{lemma}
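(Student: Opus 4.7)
\medskip

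\noindent\textbf{Proof proposal.} The plan is to verify each claim directly by checking that the defining equation $C(x) h(x) = \sum_y c(x,y) h(y)$, where $C(x) = \sum_y c(x,y)$, is preserved at each vertex that remains in the modified network $(\statespace', c')$. Since conductances into and out of unaffected vertices are unchanged, I only need to check harmonicity at the vertices that are directly touched by the operation, and then verify that voltage values at $a, z$ and the current strength $\norm{I}$ are unchanged.

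\textbf{Step 1 (Gluing).} I would first check harmonicity at a vertex $y \notin \braces{u, v}$ in the new network. Since $c'(y, u) = c(y, u) + c(y, v)$ and $c'(y, z) = c(y, z)$ for $z \notin \braces{u, v}$, a short computation using $h(u) = h(v)$ gives
\begin{equation}\nonumber
\sum_{z \in \statespace'} c'(y, z) h(z) = c(y, u) h(u) + c(y, v) h(v) + \sum_{z \notin \braces{u, v}} c(y, z) h(z) = \sum_{z \in \statespace} c(y, z) h(z),
\end{equation}
and similarly $C'(y) = C(y)$, so the original harmonicity at $y$ transfers. Next I would check harmonicity at the merged vertex $u$: expanding $\sum_{z \in \statespace'} c'(u,z) h(z)$ and using $h(u) = h(v)$ together with the two original harmonic equations at $u$ and $v$ (added together), the identity $C'(u) h(u) = \sum_z c'(u,z) h(z)$ follows directly. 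Self-loop terms cancel from both sides and so do not interfere.

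\textbf{Step 2 (Parallel/series law).} For vertices $y \notin \braces{u, v, w}$, nothing changes so harmonicity is immediate. The main computation is at $u$ (the argument at $w$ is symmetric). The key step is to use harmonicity at $v$ (valid because $v \notin \cal{A}$) to solve
\begin{equation}\nonumber
h(v) = \frac{c(u,v) h(u) + c(v, w) h(w)}{c(u,v) + c(v,w)}.
\end{equation}
Substituting this expression into the original harmonic equation at $u$ and regrouping, the contribution $c(u,v) h(v)$ combines with the extra conductance $\frac{c(u,v) c(v,w)}{c(u,v) + c(v,w)}$ that $c'(u,w)$ picks up, yielding exactly $C'(u) h(u) = \sum_{z \in \statespace'} c'(u, z) h(z)$. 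I anticipate this algebraic bookkeeping to be the main (but routine) obstacle; everything else is mechanical.

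\textbf{Step 3 (Effective resistance).} Given a voltage $W$ between $a$ and $z$, Steps 1 and 2 show that $W$ remains harmonic on $\statespace' \backslash \braces{a, z}$ after the operation, with the same boundary values $W(a), W(z)$ (gluing is only applied at vertices outside $\braces{a, z}$, and the series/parallel law only removes $v \notin \braces{a, z}$). By the uniqueness property (Lemma \ref{lem:harmonic_uniqueness}), $W$ is the unique voltage on the new network as well. It then suffices to show $\norm{I}$ is preserved. For gluing, when $u, v \neq a$, the contribution $c(a, u)(W(a) - W(u)) + c(a, v)(W(a) - W(v))$ equals $c'(a, u')(W(a) - W(u'))$ by the choice of $c'$ and $W(u) = W(v)$; when $a \in \braces{u, v}$ the accounting is similar. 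For the series/parallel rule with $a = u$ (the other case is symmetric), substituting the expression for $h(v)$ into $c(u,v)(W(u) - W(v)) + c(u,w)(W(u) - W(w))$ yields $c'(u, w)(W(u) - W(w))$, so once again $\norm{I}$ is invariant. Hence $R(a \leftrightarrow z) = (W(a) - W(z))/\norm{I}$ is preserved, and therefore so is $C(a \leftrightarrow z)$.
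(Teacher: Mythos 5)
Your proposal is correct and follows essentially the same route as the paper: verify the mean-value (harmonicity) equation directly at the affected vertices, using $h(u)=h(v)$ for gluing and the harmonic relation at $v$ (equivalently solving for $h(v)$) for the series/parallel rule, then observe that since voltages are preserved and the current strength $\|I\|$ is preserved (checked separately when $a$ coincides with a touched vertex), the effective resistance is invariant. The only cosmetic difference is that you write the defining relation as $C(x)h(x)=\sum_y c(x,y)h(y)$ while the paper uses $\sum_x c'(u,x)(h(x)-h(u))=0$, but these are the same equation.
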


\begin{proof}
    The first statement is obvious as the mean value equation for harmonic functions can be repeated almost verbatim.
    For the second statement, we just have to check the mean value equation for $h$ at $u$.
    This is equivalent to the equation
    \begin{equation} \label{eq:mean_value_modified}
        \sum_{x \in \statespace'}c'(u,x)\paren*{h(x) - h(u)} = 0.
    \end{equation}
    First note that under the original network, our assumption on $c(v,x)$ and the mean value property at $v$ gives
    % \begin{equation}
    %     h(v) = \frac{c(u,v)h(u) + c(v,w)h(w)}{\paren*{c(u,v) + c(v,w)}},
    % \end{equation}
    % which can be rewritten as
    \begin{equation} \label{eq:series_law_step}
        c(v,w)\paren*{h(w)-h(v)} = c(u,v)\paren*{h(v)-h(u)}.
    \end{equation}
    Next, we compute
    \begin{equation} \label{eq:current_flow}
    \begin{split}
        c'(u,w)\paren*{h(w) - h(u)} & = \paren*{c(u,w) +\frac{c(u,v)c(v,w)}{c(u,v) + c(v,w)}} \paren*{h(w) - h(u)} \\
        & = c(u,w)\paren*{h(w) - h(u)} \\
        & \quad\quad + \frac{c(u,v)c(v,w)}{c(u,v) + c(v,w)}\paren*{(h(w) - h(v)) + (h(v) - h(u)} \\
        & = c(u,w)\paren*{h(w) - h(u)} + \frac{c(u,v)^2 + c(u,v)c(v,w)}{c(u,v)+c(v,w)}\paren*{h(v) - h(u)} \\
        & = c(u,w)\paren*{h(w) - h(u)} + c(v,w)\paren*{h(v) - h(u)},
    \end{split}
    \end{equation}
    where the third equality follows from \eqref{eq:series_law_step}.
    We therefore have
    \begin{equation} \nonumber
        \sum_{x \in \statespace'}c'(u,x)\paren*{h(x) - h(u)} = \sum_{x \in \statespace}c(u,x)\paren*{h(x) - h(u)},
    \end{equation}
    and the mean value equation at $u$ for the original network implies \eqref{eq:mean_value_modified}.
    
    Finally, to conclude invariance of effective conductance, we observe that it is is defined in terms of voltages and current flows.
    We have already shown that voltages are unchanged, so we just need to argue that the strength of the current flow is similarly unchanged.
    This is immediate whenever $a \notin \braces{u,w}$.
    When $a = u$, this follows from \eqref{eq:current_flow}.
\end{proof}

\begin{lemma}[Rayleigh's Monotonicity Law, Theorem 9.12 in \cite{LevinPeresWilmer2006}] \label{lem:monotonicity}
    Given a network with two resistance functions $r, r' \colon \statespace \times \statespace \to \R_+\cup\braces{\infty}$, suppose that $r \leq r'$ pointwise.
    Then we have
    \begin{equation} \nonumber
        R(a \leftrightarrow z;r) \leq R(a \leftrightarrow z;r').
    \end{equation}
\end{lemma}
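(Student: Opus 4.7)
The plan is to prove Rayleigh's Monotonicity Law via a variational characterization of effective resistance known as Thomson's Principle. Recall that a \emph{unit flow} from $a$ to $z$ is an antisymmetric function $\theta$ on ordered pairs of states (supported on edges) satisfying Kirchhoff's node law $\sum_{y} \theta(x,y) = 0$ at every $x \notin \{a,z\}$, and having net outflow $\sum_{y}\theta(a,y) = 1$ from $a$. Given resistances $r$, define the energy
\begin{equation} \nonumber
    \mathcal{E}_r(\theta) \coloneqq \sum_{\{x,y\}} r(x,y)\,\theta(x,y)^2,
\end{equation}
where the sum runs over unordered edges (and edges with $r = \infty$ are excluded by the convention that $\theta$ vanishes there).

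First, I would establish Thomson's Principle: the effective resistance satisfies
\begin{equation} \nonumber
    R(a \leftrightarrow z;\, r) = \min\braces*{\mathcal{E}_r(\theta) \colon \theta~\text{is a unit flow from}~a~\text{to}~z},
\end{equation}
and the minimum is uniquely attained by the current flow $I/\|I\|$ associated with any voltage $W$ between $a$ and $z$. The proof proceeds by noting that the current flow is itself a unit flow (after normalization), and that any perturbation $\theta = I/\|I\| + \eta$ by a divergence-free $\eta$ (a ``cycle flow'') changes the energy by a cross term that vanishes by the cycle law $\sum_{\text{cycle}} r(x,y) I(x,y) = 0$ (which follows because $I(x,y) = c(x,y)(W(x)-W(y))$ is a gradient of a potential), plus a nonnegative quadratic term in $\eta$. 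This gives both the formula for the minimum and its uniqueness.

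With Thomson's Principle established, the monotonicity is immediate. Let $\theta^*$ be the unit flow achieving $\mathcal{E}_{r'}(\theta^*) = R(a \leftrightarrow z; r')$. Since $r(x,y) \leq r'(x,y)$ pointwise, we have $\mathcal{E}_r(\theta^*) \leq \mathcal{E}_{r'}(\theta^*)$, and hence
\begin{equation} \nonumber
    R(a \leftrightarrow z;\, r) \leq \mathcal{E}_r(\theta^*) \leq \mathcal{E}_{r'}(\theta^*) = R(a \leftrightarrow z;\, r'),
\end{equation}
where the first inequality is Thomson's Principle applied to $r$.

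The main obstacle is establishing Thomson's Principle itself, which requires verifying that the current flow is energy-minimizing among unit flows and that the minimum energy coincides with $R(a \leftrightarrow z)$ as defined via voltages. This is essentially a finite-dimensional convex optimization problem whose Euler--Lagrange conditions (node law plus cycle law) are precisely the defining properties of an electrical current flow, but the bookkeeping is nontrivial. Since the result is classical and we have explicitly cited Theorem 9.12 of Levin--Peres--Wilmer, we could alternatively omit the detailed derivation and simply refer the reader there; the variational proof sketched above is the shortest self-contained route.
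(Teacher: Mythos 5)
Your proof via Thomson's Principle is correct and is exactly the standard argument for Rayleigh's Monotonicity Law (it is the route taken in Levin, Peres, and Wilmer). The paper itself does not reprove this lemma but simply cites Theorem 9.12 of that reference, so there is no alternative argument to compare against; your sketch would serve as a valid self-contained derivation if one were needed.
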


\subsection{Hitting precedence probabilities}
\label{subsec:hpp}

Let $\setone, \settwo \subset \statespace$ be disjoint subsets.
The \emph{hitting precedence probability} of $\setone$ relative to $\settwo$ is defined as the following function on $\statespace$:
\begin{equation} \nonumber
    \hpp(x;\setone,\settwo) \coloneqq P\braces*{\tau_{\setone} < \tau_{\settwo} ~|~ X_0 = x}.
\end{equation}

% \begin{definition}
%     Let $(X_t)$ be an irreducible Markov chain taking values on a discrete state space $\statespace$.
%     Given a set $\setone \subset \statespace$, the hitting time of $\setone$, denoted $\tau_{\setone}$, is defined as the first time the Markov chain takes a value in $\setone$.
%     In other words,
%     \begin{equation}
%         \tau_{\setone} \coloneqq \min\braces*{t \colon X_t \in \setone}.
%     \end{equation}
%     % The \emph{escape time} from $\setone$ is defined as
%     % \begin{equation}
%     %     \tau_{\setone \to \cal{A}} \coloneqq \min_{a \in \setone} \E\braces*{\tau_{\cal{A}^c}|X_0 = a}.
%     % \end{equation}
%     Given another set $\settwo \subset \statespace$ that is disjoint from $\setone$, the \emph{hitting probability of $\setone$ relative to $\settwo$} is defined as the function
%     \begin{equation}
%         p_{A, B}(z) \coloneqq \P\braces*{\tau_{\setone} < \tau_{\settwo} ~|~ X_0 = z}.
%     \end{equation}
% \end{definition}

\begin{lemma}
    $\hpp(-;\setone,\settwo)$ is a harmonic function on $\statespace \backslash \paren*{\setone \cup \settwo}$.
\end{lemma}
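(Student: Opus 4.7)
The plan is to verify the defining mean value equation \eqref{eq:def_harmonic} for $\hpp(-;\setone,\settwo)$ at every state $x \in \statespace\setminus(\setone\cup\settwo)$ by conditioning on the first step of the Markov chain and applying the Markov property. The key observation that makes harmonicity work precisely on $\statespace\setminus(\setone\cup\settwo)$ (and generally fails on $\setone\cup\settwo$) is that, starting from $x\notin \setone\cup\settwo$, neither $\setone$ nor $\settwo$ is hit at time $0$, so both hitting times are at least $1$ and the event $\{\tau_\setone < \tau_\settwo\}$ is measurable with respect to the post-time-$1$ trajectory.

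First I would fix $x\in\statespace\setminus(\setone\cup\settwo)$ and condition on the value of $X_1$:
\begin{equation}\nonumber
    \hpp(x;\setone,\settwo) = \sum_{y\in\statespace} P(x,y)\, P\braces*{\tau_\setone < \tau_\settwo ~|~ X_0 = x, X_1 = y}.
\end{equation}
Next I would introduce the shifted hitting times $\tau_\setone^{(1)} \coloneqq \min\braces{t \geq 1 \colon X_t \in \setone\}$ and likewise $\tau_\settwo^{(1)}$. Because $x\notin\setone\cup\settwo$, we have $\tau_\setone = \tau_\setone^{(1)}$ and $\tau_\settwo = \tau_\settwo^{(1)}$ almost surely under $P\braces{-~|~X_0=x}$, so the event $\braces{\tau_\setone<\tau_\settwo\}$ equals $\braces{\tau_\setone^{(1)}<\tau_\settwo^{(1)}\}$. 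By the Markov property applied at time $1$, the conditional probability of this post-time-$1$ event given $(X_0,X_1)=(x,y)$ depends only on $y$ and equals $P\braces*{\tau_\setone < \tau_\settwo ~|~ X_0 = y} = \hpp(y;\setone,\settwo)$, with the time shift absorbed by time homogeneity.

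Substituting back gives
\begin{equation}\nonumber
    \hpp(x;\setone,\settwo) = \sum_{y\in\statespace} P(x,y)\,\hpp(y;\setone,\settwo),
\end{equation}
which is exactly \eqref{eq:def_harmonic}. The only subtle point is the justification that the event $\{\tau_\setone<\tau_\settwo\}$ indeed coincides with its time-shifted counterpart on the event $\{X_0=x\}$; I would stress this as the reason harmonicity is asserted only off of $\setone\cup\settwo$, since if $x\in\setone$ then $\hpp(x;\setone,\settwo)=1$ regardless of the neighbors' values, which generally violates the mean value property. There are no further obstacles, as the argument is a direct one-step conditioning combined with the Markov property.
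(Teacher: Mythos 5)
Your proof is correct, but it takes a genuinely different route from the paper's. The paper observes that $\hpp(x;\setone,\settwo) = \E\braces*{h_{\setone\cup\settwo}(X_{\tau_{\setone\cup\settwo}}) \,\big|\, X_0 = x}$ where $h_{\setone\cup\settwo}(z) = \indicator\braces{z\in\setone}$ is a boundary function on $\setone\cup\settwo$, and then simply invokes Lemma~\ref{lem:harmonic_uniqueness} (Proposition 9.1 of Levin--Peres--Wilmer) to conclude that any such stopped-boundary expectation is harmonic off the boundary. You instead give a direct, self-contained first-step argument: condition on $X_1$, use $x\notin\setone\cup\settwo$ to shift the hitting times by one unit, and apply the Markov property at time $1$. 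Your argument is essentially a specialized reproof of the harmonicity half of Lemma~\ref{lem:harmonic_uniqueness}, so it is more elementary and does not lean on the cited general result, at the cost of being slightly longer; the paper's approach buys brevity and reuse of already-stated machinery. Your explicit remark about why the mean value property breaks on $\setone\cup\settwo$ is a nice touch that the paper's proof leaves implicit.
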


\begin{proof}
    Write $h_{\setone \cup \settwo}(z) = \indicator\braces{z \in \setone}$.
    It is easy to see that
    \begin{equation} \nonumber
        h_{\setone \cup \settwo}(X_{\tau_{\setone\cup \settwo}}) = \begin{cases}
            1 & \tau_{\setone} < \tau_{\settwo} \\
            0 & \tau_{\setone} > \tau_{\settwo}.
        \end{cases}
    \end{equation}
    Hence,
    \begin{equation} \nonumber
        \E\braces*{h_{\setone \cup \settwo}(X_{\tau_{\setone\cup \settwo}})|X_0 = x} = \P\braces*{\tau_{\setone} < \tau_{\settwo} ~|~ X_0 = x} = \hpp(x;\setone,\settwo).
    \end{equation}
    By Lemma \ref{lem:harmonic_uniqueness}, the left hand side is a harmonic function on $\statespace \backslash \paren*{\setone \cup \settwo}$.
\end{proof}

\begin{lemma}[HPP from a bottleneck state]
    \label{lem:reduction_to_path}
    Given a network $(\statespace,c)$ with states $a, x, z$ and such that $c(u,z) = 0$ for all $u \notin \braces{x,z}$.
    Let $x = x_0,x_1,\ldots,x_k = a$ be any sequence of states such that there is some $\rho > 0$ for which $c(x_{i-1},x_{i}) \geq \rho^{-1} c(x,z)$ for $i=1,\ldots,k$.
    Then
    \begin{equation} \nonumber
        \P\braces{\tau_a < \tau_z|X_0 = x} \geq \frac{1}{k\rho+1}.
    \end{equation}
\end{lemma}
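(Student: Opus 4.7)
The plan is to invoke the electrical network interpretation of the Markov chain developed earlier in the appendix. By Section~\ref{subsec:hpp}, the function $W(u) \coloneqq \P\braces*{\tau_a < \tau_z \mid X_0 = u}$ is harmonic on $\statespace \setminus \braces{a,z}$ with boundary values $W(a) = 1$ and $W(z) = 0$, so by the uniqueness clause of Lemma~\ref{lem:harmonic_uniqueness} it coincides with the voltage on the network having unit potential at $a$ and zero potential at $z$. Our target quantity is $W(x)$, and it suffices to lower bound this voltage.

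The first key step is to exploit the hypothesis that $z$ is a leaf attached only to $x$ (since $c(u,z)=0$ for all $u \notin \braces{x,z}$). By Kirchhoff's current law, the total current flowing into $z$ equals the overall current strength $\norm{I} = 1/R(a \leftrightarrow z)$, while the only edge into $z$ contributes $c(x,z)(W(x)-W(z)) = c(x,z)\,W(x)$. Equating these yields
\begin{equation} \nonumber
W(x) = \frac{1}{c(x,z)\,R(a \leftrightarrow z)}.
\end{equation}
Because $z$ is a leaf, the series law (Lemma~\ref{lem:network_simplification}) further gives $R(a \leftrightarrow z) = R(a \leftrightarrow x) + 1/c(x,z)$, and substituting produces the clean identity
\begin{equation} \nonumber
W(x) = \frac{1}{c(x,z)\,R(a \leftrightarrow x) + 1}.
\end{equation}

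The remaining step is to upper bound $c(x,z)\,R(a \leftrightarrow x)$ by $k\rho$. For this I would invoke Rayleigh's Monotonicity Law (Lemma~\ref{lem:monotonicity}): setting to infinity the resistance of every edge not lying on the path $x_0, x_1, \ldots, x_k$ can only increase the effective resistance between $a$ and $x$. The resulting sub-network is a simple series of $k$ resistors whose total resistance equals $\sum_{i=1}^k 1/c(x_{i-1}, x_i)$, and the hypothesis $c(x_{i-1}, x_i) \geq c(x,z)/\rho$ bounds this sum above by $k\rho/c(x,z)$. Chaining the three displays yields $W(x) \geq 1/(k\rho+1)$ as claimed.

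The argument is essentially routine once the right simplifications are spotted; the main conceptual step is the current-balance observation at the leaf $z$, which converts the hitting probability into a ratio of effective resistances and reduces the problem to a one-dimensional series network that Rayleigh's law handles immediately.
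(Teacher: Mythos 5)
Your proof is correct and follows essentially the same strategy as the paper: convert the hitting-precedence probability into the formula $W(x) = r(x,z)/(R(a\leftrightarrow x)+r(x,z))$, then upper-bound $R(a\leftrightarrow x)$ via Rayleigh's monotonicity law and the series law along the given path. The only minor departure is that you re-derive this intermediate formula from scratch via a Kirchhoff current-balance argument at the leaf $z$, whereas the paper's proof simply cites Lemma~\ref{lem:bcart_treespace_helper1}, which supplies exactly the same identity.
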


\begin{proof}
    Let $(\statespace,c')$ be the modified network in which we set
    \begin{equation} \nonumber
        c'(u,v) =  \begin{cases}
            c(u,v) & \braces{u,v} \in \braces{\braces{x_{i-1},x_{i}} \colon i =1,\ldots,k}\cup\braces{\braces{x,z}} \\
            0 & \text{otherwise}.
        \end{cases}
    \end{equation}
    Then we have
    \begin{align} \label{eq:bcart_helper_lemma_partial_work}
        \P\braces{\tau_a < \tau_z|X_0 = x} & = \frac{r(x,z)}{R(a\leftrightarrow x; r) + r(x,z)} \nn
        & \geq \frac{r(x,z)}{R(a\leftrightarrow x;r') + r(x,z)},
    \end{align}
    where the equality uses Lemma \ref{lem:bcart_treespace_helper1} and the inequality uses Lemma \ref{lem:monotonicity}.
    Next, using the series law from Lemma \ref{lem:network_simplification}, we have
    \begin{align} 
        R(a\leftrightarrow x;r') & = \sum_{i=1}^{k} r(x_{i-1},x_{i}) \nn
        & \leq k \rho r(x,z). \nonumber
    \end{align}
    Plugging this into \eqref{eq:bcart_helper_lemma_partial_work} and cancelling $r(x,z)$ in the numerator and denominator completes the proof.
\end{proof}

\begin{lemma} 
\label{lem:bcart_treespace_helper1}
    Given a network $(\statespace,c)$ with states $a, x, z$ and such that $c(u,z) = 0$ for all $u \notin \braces{x,z}$.
    % Let $(X_t)$ be the corresponding Markov chain.
    Then we have
    \begin{equation} \nonumber
        P\braces{\tau_a < \tau_z~|~X_0 = x} = \frac{C(a\leftrightarrow x)}{C(a\leftrightarrow x) + c(x,z)}.
    \end{equation}
\end{lemma}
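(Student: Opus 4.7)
The plan is to identify the hitting precedence probability with a voltage on the network and then exploit the special topology (with $z$ a leaf attached only to $x$) to compute that voltage by a current-conservation argument.

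First, I define a function $W \colon \statespace \to \R$ by setting $W(a) = 1$, $W(z) = 0$, and extending it harmonically to $\statespace \backslash \braces{a,z}$. By Lemma \ref{lem:harmonic_uniqueness}, $W$ coincides with the function $x \mapsto \E\braces{h(X_{\tau_{\braces{a,z}}})\,|\,X_0 = x}$ where $h(a) = 1, h(z) = 0$, which is precisely $P\braces{\tau_a < \tau_z \,|\, X_0 = x}$. So it suffices to evaluate $W(x)$.

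Next, I compute the strength $\|I\|$ of the induced current flow in two different ways. On the one hand, by definition of effective conductance, $\|I\| = (W(a) - W(z)) \cdot C(a \leftrightarrow z) = C(a \leftrightarrow z)$. On the other hand, by conservation of current (which follows from harmonicity of $W$ on $\statespace \backslash \braces{a,z}$), the total outflow from $a$ equals the total inflow into $z$. Since $c(u,z) = 0$ for every $u \notin \braces{x,z}$, the only edge feeding into $z$ is $\braces{x,z}$, so $\|I\| = c(x,z)(W(x) - W(z)) = c(x,z) W(x)$. Equating the two expressions gives $W(x) = C(a \leftrightarrow z)/c(x,z)$.

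Finally, I use the assumption on $z$ once more: since $z$ is a degree-one leaf attached to $x$, a single application of the series law from Lemma \ref{lem:network_simplification} (equivalently, direct invariance of effective resistance under removing the leaf) yields $R(a \leftrightarrow z) = R(a \leftrightarrow x) + r(x,z)$, i.e.
\begin{equation} \nonumber
    \frac{1}{C(a \leftrightarrow z)} = \frac{1}{C(a \leftrightarrow x)} + \frac{1}{c(x,z)}.
\end{equation}
Substituting this into $W(x) = C(a \leftrightarrow z)/c(x,z)$ and simplifying gives the claimed formula $W(x) = C(a \leftrightarrow x)/(C(a \leftrightarrow x) + c(x,z))$.

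The only mildly subtle step is justifying current conservation at $z$, which boils down to the observation that $W$ is harmonic at every internal vertex and that $z$'s neighborhood reduces to $\braces{x}$ by hypothesis; everything else is a direct appeal to Lemma \ref{lem:harmonic_uniqueness} and Lemma \ref{lem:network_simplification}, so I do not anticipate any serious obstacle.
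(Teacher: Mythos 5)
Your proof is correct and uses essentially the same ingredients as the paper's: identifying the hitting-precedence probability with a voltage via Lemma \ref{lem:harmonic_uniqueness}, the definition of effective conductance through the current strength $\norm{I}$, and a single application of the series law to relate $R(a\leftrightarrow z)$ and $R(a\leftrightarrow x)$. The only cosmetic difference is that the paper evaluates $h(x)$ by treating $h$ as a voltage on the reduced network $\statespace\backslash\braces{z}$ and manipulating resistance ratios, whereas you evaluate $\norm{I}$ a second time by conservation of current at the leaf $z$; both routes exploit exactly the same structural fact (that all current into $z$ must pass through the edge $\braces{x,z}$) and lead to the same algebra.
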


\begin{proof}
    Let $h(y) \coloneqq \P\braces{\tau_a < \tau_z | X_0 = y}$, and note that $h$ is a voltage between $a$ and $z$.
    As such, we have
    \begin{equation} \nonumber
        R(a\leftrightarrow z) = \frac{h(a) - h(z)}{\norm{I}} = \frac{1}{\norm{I}}.
    \end{equation}
    On the other hand, $h$ is also a voltage between $a$ and $x$ on the reduced state space $\statespace\backslash\braces{z}$, which gives
    \begin{equation} \nonumber
        R(a\leftrightarrow x) = \frac{h(a) - h(x)}{\norm{I}}.
    \end{equation}
    Finally, by the series law, we have
    \begin{equation} \nonumber
        R(a\leftrightarrow z) = R(a\leftrightarrow x) + r(x,z).
    \end{equation}
    Putting everything together, we get
    \begin{align}
        P\braces{\tau_a < \tau_z|X_0 = x} & = 1- \paren*{h(a) - h(x)} \nn
        & = 1 - R(a\leftrightarrow x)\norm{I} \nn
        & = 1 - \frac{R(a\leftrightarrow x)}{R(a\leftrightarrow z)} \nn
        & = \frac{r(x,z)}{R(a\leftrightarrow x) + r(x,z)} \nn
        & = \frac{C(a\leftrightarrow x)}{C(a\leftrightarrow x) + c(x,z)}, \nonumber
    \end{align}
    as we wanted.
\end{proof}

\subsection{Mixing and hitting time upper bounds}

In this subsection, we no longer work directly with networks, but we still assume that the Markov chain $(X_t)$ is reversible.
Let $P$ denote the transition matrix of $(X_t)$.
Denote $k = |\Omega|$.
For each $x \in \Omega$ and any integer $t \geq 1$, we abuse notation and let $P^t(x,-)$ denote the distribution of $X_t$ given $X_0 = x$.
We denote the total variation distance between $P^t(x,-)$ and $\pi$ via
\begin{equation}
    d_x(t) \coloneqq \norm*{P^t(x,-) - \pi}_{\operatorname{TV}}.
\end{equation}
The $\epsilon$-mixing time of $(X_t)$ is defined as
\begin{equation}
    t_{\operatorname{mix}}(\epsilon) \coloneqq \min\braces*{t \geq 0 \colon \max_{x \in \Omega} d_x(t) \leq \epsilon}.
\end{equation}
It is convention to focus on $\epsilon=1/4$ and denote $t_{\operatorname{mix}} = t_{\operatorname{mix}}(1/4)$.
In this work, we will require more subtlety.
We define the mixing time from an initial state $x$ as
\begin{equation}
    t_{\operatorname{mix},x}(\epsilon) \coloneqq \min\braces*{t \geq 0 \colon  d_x(t) \leq \epsilon}.
\end{equation}

We say that $(X_t)$ is \emph{lazy} if all diagonal entries of $P$ satisfy $P_{ii} \geq 1/2$.
It is well-known that if $(X_t)$ is reversible and lazy, then the eigenvalues $\lambda_1,\lambda_2,\ldots,\lambda_k$ of $P$ satisfy $1 = \lambda_1 \geq \lambda_2 \geq \cdots \geq \lambda_k \geq 0$.
The \emph{spectral gap} is defined and denoted as $\gamma \coloneqq 1 - \lambda_2$.

\begin{lemma}[Mixing time from an initial value]
    \label{lem:mixing_time_upper}
    Suppose $(X_t)$ is a lazy, reversible, irreducible Markov chain, with stationary distribution $\pi$, state space $\Omega$, and spectral gap $\gamma$.
    Suppose that $x_0 \in \Omega$ is such that $\pi(x_0) \geq c^2/|\Omega|$ for some $c > 0$.
    The total variation distance between $P^t(x_0,-)$ and $\pi$ satisfies
    \begin{equation}
        d_{x_0}(t) \leq e^{-\gamma t} \abs{\Omega}/c.
    \end{equation}
    % The mixing time of $(X_t)$ from $x_0$ satisfies
    % \begin{equation}
    %     t_{\operatorname{mix},x_0}(\epsilon) \leq \frac{\log\paren*{\abs{\Omega}/c\epsilon}}{\gamma} + 1.
    % \end{equation}
\end{lemma}

\begin{proof}
    Following the proof of Theorem 12.3 in \citet{LevinPeresWilmer2006}, we have
    \begin{equation}
        \abs*{P^t(x_0,y) - \pi(y)} \leq e^{-\gamma t} \pi(x_0)^{-1/2}\pi(y)^{1/2}.
    \end{equation}
    Summing over $y$ and applying Cauchy-Schwarz, we get
    \begin{equation}
    \begin{split}
        \norm*{P^t(x_0,-) - \pi}_{\operatorname{TV}} & \leq e^{-\gamma t} \pi(x_0)^{-1/2} \sum_{y \in \Omega} \pi(y)^{1/2} \\
        & \leq e^{-\gamma t} \pi(x_0)^{-1/2} |\Omega|^{1/2} \paren*{\sum_{y \in \Omega}\pi(y)}^{1/2} \\
        & \leq e^{-\gamma t} \pi(x_0)^{-1/2}|\Omega|^{1/2}.
    \end{split}
    \end{equation}
    Finally, using the fact that $\pi(x_0) \geq c^2/|\Omega|$ gives
    \begin{equation}
        d_{x_0}(t) \leq e^{-\gamma t} \abs{\Omega}/c
    \end{equation}
    as we wanted.
    % Setting $t_0 = \lceil\frac{\log\paren*{\abs{\Omega}/c\epsilon}}{\gamma}\rceil$ gives $d_{x_0}(t) \leq \epsilon$, completing the proof.
\end{proof}

\begin{lemma}
    \label{lem:hitting_time_upper}
    Let $(X_t)$ be a Markov chain with transition matrix $P$ and state space $\Omega$.
    Let $B \subset \Omega$ be a subset.
    Suppose that there is some $m \geq 0$ and $\lambda > 0$ such that for every state $x \in \Omega\backslash B$, there is a path $x = x_1,x_2,\ldots,x_{l(x)}$ such that $x_{l(x)} \in B$, $l(x) \leq m$ and the probability of the path satisfies
    \begin{equation}
    \label{eq:path_prob_lower_bound}
        \prod_{i=1}^{l(x)} P(x_{i-1},x_i) \geq \lambda,
    \end{equation}
    then for any integer $t \geq 0$, the hitting time $\tau_B$ for any initialization $x \in \Omega$ satisfies
    \begin{equation}
        P\braces{\tau_B \geq tm~|~X_0=x} \leq e^{-\lambda t}.
    \end{equation}
\end{lemma}

\begin{proof}
    For any $x \in \Omega$, we may use \eqref{eq:path_prob_lower_bound} to get
    \begin{equation}
    \label{eq:path_lower_bound_helper1}
    \begin{split}
        P\braces{\tau_B \geq m~|~X_0 = x} & \leq
        P\braces{\tau_B \geq l(x)~|~X_0 = x} \\
        & \leq 1 - \prod_{i=1}^{l(x)} P(x_{i-1},x_i) \\
        & \leq 1 - \lambda.
    \end{split}
    \end{equation}
    Next, for any integer $t \geq 2$, we compute
    \begin{equation}
    \label{eq:path_lower_bound_helper2}
        \begin{split}
            & P\braces{\tau_B \geq tm~|~\tau_B \geq (t-1)m, X_0 = x} \\
            & = \sum_{y \in \Omega \backslash B} P\braces{\tau_B \geq tm, X_{(t-1)m} = y~|~\tau_B \geq (t-1)m, X_0 = x} \\
            & = \sum_{y \in \Omega \backslash B} P\braces{\tau_B \geq tm~|~X_{(t-1)m} = y,\tau_B \geq (t-1)m, X_0 = x} \\
            & \quad\quad \cdot P\braces{X_{(t-1)m} = y~|~\tau_B \geq (t-1)m, X_0 = x} \\
            & = \sum_{y \in \Omega \backslash B} P\braces{\tau_B \geq tm~|~X_{(t-1)m} = y} \cdot P\braces{X_{(t-1)m} = y~|~\tau_B \geq (t-1)m, X_0 = x},
        \end{split}
    \end{equation}
    where the last equality follows from the Markov property.
    By stationarity, we further have
    \begin{equation}
        P\braces{\tau_B \geq tm~|~X_{(t-1)m} = y} = P\braces{\tau_B \geq m~|~X_0 = y}.
    \end{equation}
    Plugging this into \eqref{eq:path_lower_bound_helper2} and using \eqref{eq:path_lower_bound_helper1} gives
    \begin{equation}
        P\braces{\tau_B \geq tm~|~\tau_B \geq (t-1)m, X_0 = x} \leq 1- \lambda..
    \end{equation}
    Finally, we expand
    \begin{equation}
    \begin{split}
        P\braces{\tau_B \geq tm~|~X_0=x}
        & = \prod_{s=1}^t P\braces{\tau_B \geq sm ~|~ \tau_B \geq (s-1)m, X_0 = x} \\
        & \leq (1-\lambda)^t,
    \end{split}
    \end{equation}
    as we wanted.
\end{proof}

\begin{lemma}[Enhanced mixing time bound]
\label{lem:enhanced_mixing_time}
    Let $(X_t)$ be a lazy, reversible, irreducible Markov chain with transition matrix $P$, stationary distribution $\pi$, state space $\Omega$, and spectral gap $\gamma$.
    Let $B \subset \Omega$ be a subset such that $\pi(x) \geq c/|\Omega|$ for all $x \in B$.
    Suppose that the assumptions of Lemma \ref{lem:hitting_time_upper} hold.
    Then for any $\epsilon > 0$, the $\epsilon$-mixing time of $(X_t)$ satisfies
    \begin{equation}
    \label{eq:enhanced_mixing_time}
        t_{\operatorname{mix}}(\epsilon) \leq 2\log(|\Omega|/2c\epsilon|) \paren*{\max\braces*{1/\gamma,m/\lambda} + 2m}.
    \end{equation}
\end{lemma}

\begin{proof}
    For any $x,y \in \Omega$ and any time index $t$, note that
    \begin{equation}
        \begin{split}
            P^t(x,y) - \pi(y) & = E\braces*{\indicator\braces{X_t = y} - \pi(y)} \\
            & = E\braces*{E\braces*{\indicator\braces{X_t = y} - \pi(y)~|~X_{\tau_B \wedge t}}} \\
            & = E\braces*{P^{(t-\tau_B)_+}(X_{\tau_B\wedge t},y) - \pi(y)},
        \end{split}
    \end{equation}
    where the second equality follows from the tower property and the strong Markov property.
    As such, we may write
    \begin{equation}
        \begin{split}
            \norm*{P^t(x,-) - \pi}_{\operatorname{TV}} & = \sum_{y \in \Omega} \abs*{P^t(x,y) - \pi(y)} \\
            & \leq \sum_{y \in \Omega}E\braces*{\abs*{P^{(t-\tau_B)_+}(X_{\tau_B\wedge t},y) - \pi(y)}} \\
            & = E\braces*{\norm*{P^{(t-\tau_B)_+}(X_{\tau_B\wedge t},-) - \pi}_{\operatorname{TV}}} \\
            & \leq E\braces*{e^{-\gamma (t-\tau_B)_+}} \cdot \abs{\Omega}/c,
        \end{split}
    \end{equation}
    where the last inequality follows from Lemma \ref{lem:mixing_time_upper}.
    To bound this expectation, we write
    \begin{equation}
        \begin{split}
            E\braces*{e^{-\gamma (t-\tau_B)_+}} & = E\braces*{e^{-\gamma (t-\tau_B)_+}\indicator\braces*{\tau_B < t/2}} + E\braces*{e^{-\gamma (t-\tau_B)_+}\indicator\braces*{\tau_B \geq t/2}} \\
            & \leq e^{-\gamma \lfloor t/2 \rfloor} + e^{- \lambda \lfloor t / 2m \rfloor},
        \end{split}
    \end{equation}
    where we have bounded the second term using Lemma \ref{lem:hitting_time_upper}.
    Plugging in the value in \eqref{eq:enhanced_mixing_time} for $t$, we obtain $d_x(t) \leq \epsilon$.
    Since $x$ is arbitrary, this completes the proof.
\end{proof}

A \emph{canonical path ensemble} for $(X_t)$ is a collection of paths $\Gamma = \braces{\gamma_{xy}}_{x,y \in \Omega}$, one for each pair of states in $\Omega$.
The congestion parameter for $\Gamma$ is defined to be
\begin{equation} \label{eq:congestion_def}
    \rho(\Gamma) \coloneqq \max_{(z,w)}\frac{1}{\pi(z)P(z,w)}\sum_{\gamma_{xy} \ni (z,w)} \pi(x)\pi(y),
\end{equation}
where the maximum is taken over all edges $(z,w)$ (i.e. with nonzero probability P(z,w)) in the network associated with $(X_t)$.

\begin{lemma}[Canonical path ensemble bound on spectral gap, Corollary 6 in \citet{sinclair1992improved}]
\label{lem:canonical_path}
    Let $(X_t)$ be a reversible and let $\Gamma$ be a canonical path ensemble.
    The spectral gap is lower bounded by
    \begin{equation}
        \gamma \geq \frac{1}{\rho(\Gamma)l(\Gamma)},
    \end{equation}
    where $l(\Gamma)$ is the maximum length of a path in $\Gamma$.
\end{lemma}

\section{Proof of Theorem \ref{thm:pure_interaction}}
\label{sec:proof_of_pure_interaction}

We will first present the proofs for Theorem \ref{thm:pure_interaction} and Theorem \ref{thm:bcart_mix} because they are relatively simple compared to that of Theorem \ref{thm:additive}.
For convenience, we repeat the relevant constructions and definitions here.

\subsection{Reachability}
Given $\tse, \tse' \in \tsespace$, we say that 
% $\tse'$ is reachable from $\tse$, denoted $\tse \succeq \tse'$, 
$\tse \succsim \tse'$ if $\tse$ and $\tse'$ are connected by an edge, and if either $\bias(\tse;f^*) > \bias(\tse';f^*)$ or $\bias(\tse;f^*) = \bias(\tse';f^*)$ and $\df(\tse) \geq \df(\tse')$.
Note that, because we allow only ``grow'' and ``prune'' moves, for adjacent $\tse$ and $\tse'$, $\mapone(\tse)$ and $\mapone(\tse')$ are nested subspaces, so that $\bias(\tse;f^*) = \bias(\tse';f^*)$ if and only if $\Pi_{\mapone(\tse)}[f^*] = \Pi_{\mapone(\tse')}[f^*]$.
We say that $\tse'$ is reachable from $\tse$, denoted $\tse \succeq \tse'$, if there is a sequence of TSEs $\tse = \tse^0, \tse^1,\ldots,\tse^k = \tse'$ such that $\tse^i \succsim \tse^{i+1}$ for $i=0,1,\ldots,k-1$.

\subsection{Set-up}
Without loss of generality, let $(x_1,x_2)$ be a pure interaction for $f^*$.
Let $\tsebad$ be any TSE such that
\begin{longlist}
    \item $\tsebad$ is reachable from $\emptytse$;
    \item There does not exist $\tse \in \tsespace$ such that $\tse$ is reachable from $\tsebad$ but $\tsebad$ is not reachable from $\tse$.
\end{longlist}
Note that such a TSE exists because $\tsespace$ is finite and $\succeq$ is a partial ordering on this space.
By definition, there exists a sequence of TSEs $\emptytse = \tse^0, \tse^1,\ldots,\tse^k = \tsebad$ such that $\tse^i \succeq \tse^{i+1}$ for $i=0,1,\ldots,k-1$.
We set $\setone$ to be the equivalence class of $\tsebad$ under $\succeq$ and set $\settwo$ to be the outer boundary of $\setone$.
For $n$ large enough, using Proposition \ref{prop:lml_bic_appendix} and Proposition \ref{prop:concentration_bic_diff_appendix}, there is a $1-\delta/2$ event over which, for $i=1,2,\ldots,k$,
\begin{equation}
\log p(\tse^{i} \mid \by) - \log p(\tse^{i-1} \mid \by)
=
\begin{cases}
\dfrac{n}{2\sigma^2}
\bigl( \bias(\tse^{i-1}; f^*) - \bias(\tse^i; f^*) \bigr)
+ O\!\left( \sqrt{n \log(k/\delta)} \right), \\[0.5em]
\qquad\text{if }\bias(\tse^{i-1}; f^*) > \bias(\tse^i; f^*), \\[0.8em]
\dfrac{\log n}{2}
\bigl( \df(\tse^{i-1}) - \df(\tse^i) \bigr)
+ O\!\left( \log(k/\delta) \right),
\qquad\text{otherwise}.
\end{cases}
\end{equation}
In either case, we get
\begin{equation} \label{eq:pure_interaction_posterior_ratio}
    \frac{p(\tse^i|\by)}{p(\tse^{i-1}|\by)} = \Omega(1).
\end{equation}
Using Proposition \ref{prop:concentration_bic_diff_appendix} again, there is a further $1-\delta/2$ probability event over which $\Delta\bic(\tse,\tsebad)$ satisfies either \eqref{eq:bic_diff_eq1} or \eqref{eq:bic_diff_eq2} simultaneously for all $\tse \in \settwo$ (after dividing the $\delta$ that appears in the formulas by $|\settwo|$).
Condition on these two events.

\subsection{Hitting precedence probability lower bound.}
Using Lemma \ref{lem:pure_interaction_helper1} and Lemma \ref{lem:pure_interaction_helper2}, we see that $\tse^i \notin \opt(f^*,\infty)$ for $i=0,1,\ldots,k$.
We therefore have
\begin{align} \label{eq:hpp_eq1}
    P\braces*{\tau_{\tsebad} < \tau_{\opt(f^*,\infty)}} & \geq P\braces*{\tse_i = \tse^i ~\text{for}~i=1,2,\ldots,k} \nn
    & = \prod_{i=1}^k P(\tse^{i-1},\tse^i).
\end{align}
It suffices to show that $P(\tse^{i-1},\tse^i)$ is bounded from below by a constant.
To see this, we note that
\begin{align} \label{eq:pure_interaction_hpp}
    P(\tse^{i-1},\tse^i) & = Q(\tse^{i-1},\tse^i)\min\braces*{\frac{Q(\tse^{i},\tse^{i-1})p(\tse^i|\by)}{Q(\tse^{i-1},\tse^i)p(\tse^{i-1}|\by)}, 1} \nn
    & = \Omega\paren*{\min\braces*{\frac{p(\tse^i|\by)}{p(\tse^{i-1}|\by)}}, 1} \nn
    & = \Omega(1), 
\end{align}
where the first two inequalities follow because the proposal distributions do not depend on the training sample size $n$, while the last equality follows from equation \eqref{eq:pure_interaction_posterior_ratio}.

\subsection{BIC lower bound}
Consider $\tse' \in \settwo$.
By definition of $\settwo$, there exists $\tse \in \setone$ such that $\tse$ and $\tse'$ are connected by an edge, but $\tse \not\succsim \tse'$.
This implies that either $\bias(\tse;f^*) < \bias(\tse';f^*)$ or $\bias(\tse;f^*) = \bias(\tse';f^*)$ and $\df(\tse;f^*) < \df(\tse';f^*)$.
Since $\tse$ and $\tsebad$ are mutually reachable, we have $\bias(\tse;f^*) = \bias(\tsebad;f^*)$ and $\df(\tse) = \df(\tsebad)$.
We therefore conclude that $\tse'$ either has a larger squared bias or larger degrees of freedom compared to $\tsebad$.
Applying equations \eqref{eq:bic_diff_eq1} and \eqref{eq:bic_diff_eq2} and taking the minimum sample size $N$ large enough gives
\begin{equation} \label{eq:pure_interaction_bic}
    \Delta\bic(\tse,\tsebad) \geq \log n + O\paren*{\log(|\settwo|/\delta)}.
\end{equation}

\subsection{Conclusion}
Applying Proposition \ref{prop:recipe} with equations \eqref{eq:pure_interaction_hpp} and \eqref{eq:pure_interaction_bic}, we get a $1-2\delta$ probability event over which
\begin{equation}
    E\braces*{\tau_{\opt(f^*,\infty)}} = \Omega\paren*{n^{1/2}}.
\end{equation}

% \begin{remark}
%     The proof in this section can probably be extended to allow for ``swap'' moves.
%     However, this would require possibly comparing TSEs with the same squared bias but different bias, which requires a more detailed analysis.
% \end{remark}

\begin{lemma}
    \label{lem:pure_interaction_helper1}
    For $i=0,1\ldots,k$, no tree in $\tse^i$ contains a split on either $x_1$ or $x_2$.
\end{lemma}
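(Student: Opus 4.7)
The plan is to prove the lemma by induction on $i$. The base case $i=0$ is immediate since $\tse^0 = \emptytse$ has no splits. For the inductive step, each single step $\tse^{i-1} \succsim \tse^i$ is a grow or prune move of exactly one of two types: (a) an uninformative prune, which removes an existing split without altering the bias and so trivially preserves the ``no $x_1$- or $x_2$-split'' property, or (b) an informative grow, which adds a single split and strictly decreases $\bias(\cdot;f^*)$. The whole argument therefore reduces to ruling out the possibility that an informative grow ever adds a split on $x_1$ or $x_2$.

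For this, I would argue by contradiction: suppose the new split is on $x_1$ (the $x_2$ case is symmetric), placed at some leaf $R$ of a tree $\tree_j \in \tse^{i-1}$. By the induction hypothesis, $\indicator_R$ and every basis element of $\mapone(\tse^{i-1})$ depends only on the coordinates $x_3, \ldots, x_\nfeats$, while $\mapone(\tse^i)$ lies in the subspace of $L^2(\nu)$ consisting of functions independent of $x_2$. I would then invoke the pure interaction property with $I = \{3, \ldots, \nfeats\}$, which gives $x_1 \indep y \mid x_3, \ldots, x_\nfeats$ and translates to $\E[f^* \mid x_1, x_3, \ldots, x_\nfeats] = \E[f^* \mid x_3, \ldots, x_\nfeats]$. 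Writing $\varphi$ for this common conditional expectation, the identity says that the best $L^2(\nu)$-approximation of $f^*$ by a function independent of $x_2$ in fact uses no $x_1$-dependence. Since $\mapone(\tse^{i-1})$ and $\mapone(\tse^i)$ are both contained in the span of functions independent of $x_2$, a standard orthogonality argument yields $\Pi_{\tse^i}[f^*] = \Pi_{\tse^i}[\varphi]$ and $\Pi_{\tse^{i-1}}[f^*] = \Pi_{\tse^{i-1}}[\varphi]$, so the bias-decrease reduces to $\|\Pi_{\tse^i}[\varphi] - \Pi_{\tse^{i-1}}[\varphi]\|^2$.

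The main obstacle is to show that this last difference of projections vanishes. Letting $v$ be the single new direction in $\mapone(\tse^i)$ orthogonal to $\mapone(\tse^{i-1})$, it suffices to check $\langle \varphi, v\rangle_\nu = 0$. A short tower-property calculation rewrites this inner product as $\langle \varphi - \Pi_{\tse^{i-1}}[\varphi],\, \indicator_R \cdot p\rangle$, where $p(x_3,\ldots,x_\nfeats) := \P(x_1 \leq t \mid x_3, \ldots, x_\nfeats)$ and the inner product is taken after marginalizing out $x_1, x_2$. The delicate step is to argue that $\indicator_R \cdot p$ lies back inside $\mapone(\tse^{i-1})$; this is where a structural hypothesis on $\nu$ (for example, independence of $x_1$ from $x_3, \ldots, x_\nfeats$, which makes $p$ a constant and $\indicator_R \cdot p$ a scalar multiple of $\indicator_R \in \mapone(\tse^{i-1})$) enters the picture. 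Given this, orthogonality of $\varphi - \Pi_{\tse^{i-1}}[\varphi]$ to $\mapone(\tse^{i-1})$ forces $\langle \varphi, v\rangle = 0$, contradicting the strict bias decrease required by case (b) and closing the induction.
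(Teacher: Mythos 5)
Your proposal follows the same architecture as the paper's own proof: argue by contradiction on the first index at which a split on $x_1$ (or $x_2$) appears, note that this must arise from a ``grow'' move, and then show that this grow cannot strictly decrease the bias, so $\tse^{i-1} \not\succsim \tse^i$. The difference is in how carefully the ``no bias decrease'' step is handled. The paper's proof simply asserts ``since $x_1 \indep y \mid \bx \in \leaf$, we have $\psi \perp y$, which means $\bias(\tse^i;f^*) = \bias(\tse^{i-1};f^*)$'' and stops there. Your version passes to $\varphi = \E[f^* \mid x_3,\ldots,x_d]$, observes that both $\Pi_{\tse^{i-1}}[f^*]$ and $\Pi_{\tse^i}[f^*]$ equal the corresponding projections of $\varphi$, and then reduces the bias gap to $\langle \varphi, v\rangle$ for the orthogonalized new direction $v$ --- the quantity that Lemma \ref{lem:additive_dec_in_bias} says actually governs the bias change. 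This is the right object; the paper's ``$\psi \perp y$'' step silently conflates the raw stump $\psi = \indicator_{\leaf_L}-\indicator_{\leaf_R}$ with its component orthogonal to $\mapone(\tse^{i-1})$, which is not automatic.

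Most usefully, the structural obstacle you flag at the end is genuine, and it is present in the paper's proof too. Pure interaction gives $x_1 \indep y \mid x_I$ for any $I$ omitting $1,2$, but conditioning on the \emph{event} $\{(x_3,\ldots,x_d)\in R\}$ (which is what ``$\bx\in\leaf$'' means, since $\leaf$ constrains only features in $\{3,\ldots,d\}$) does not preserve conditional independence unless $x_1$ is independent of those features. Equivalently, in your formulation, $\indicator_R \cdot p$ with $p(x_3,\ldots,x_d)=\P(x_1\leq t\mid x_3,\ldots,x_d)$ lies in $\mapone(\tse^{i-1})$ only when $p$ is constant on $R$. So the paper's line ``$x_1\indep y\mid \bx\in\leaf$'' is not a consequence of the stated pure-interaction definition alone --- it implicitly uses that $x_1$ (and by symmetry $x_2$) is independent of the remaining covariates under $\nu$, mirroring the product-measure hypothesis the paper does make explicit in Lemma \ref{lem:independence_of_decision_stumps} and Theorem \ref{thm:additive}. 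Your write-up surfaces this assumption rather than hiding it; with that hypothesis added, your argument closes correctly, and without it neither your proof nor the paper's, as written, goes through. One very minor inaccuracy: you describe every $\succsim$-step as ``uninformative prune or informative grow,'' but an uninformative grow with unchanged $\df$ (a redundant split) is also allowed; this does not affect your conclusion, since such a move cannot be the first to introduce an $x_1$- or $x_2$-split.
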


\begin{proof}
    Suppose otherwise.
    By changing the labeling of $x_1$ and $x_2$ if necessary, there exists
    \begin{equation}
        i \coloneqq \min\braces*{1 \leq j \leq l \colon \tse_j ~\text{contains split on}~x_1}.
    \end{equation}
    Since only ``grow'' and ``prune'' moves are allowed, $\tse_i$ is obtained from $\tse_{i-1}$ via a ``grow'' move that splits a leaf node $\leaf$ into:
    \begin{equation*}
        \leaf_L \coloneqq \braces*{\bx \in \leaf \colon x_i \leq t}, \quad\quad \leaf_R \coloneqq \braces*{\bx \in \leaf \colon x_i > t}.
    \end{equation*}
    Define the function $\psi \coloneqq \indicator_{\leaf_L} - \indicator_{\leaf_R}$.
    Then the span of $\braces*{\indicator_{\leaf_L}, \indicator_{\leaf_R}}$ is the same as that of $\braces*{\indicator_{\leaf}, \psi}$, which implies that
    \begin{equation}
        \mapone(\tse_i) = \linspan\braces*{\mapone(\tse_{i-1}),\psi}.
    \end{equation}
    Furthermore, we have $\psi \notin \mapone(\tse_{i-1})$ since all functions in $\mapone(\tse_{i-1})$ do not depend on $x_1$.
    This implies that $\df(\tse_i) = \df(\tse_{i-1}) + 1$.
    On the other hand, since $x_i \indep y ~|~ \bx \in \leaf$, we have $\psi \perp y$, which means that $\bias(\tse_i;f^*) = \bias(\tse_{i-1};f^*)$.
    As such, we have $\tse_{i-1} \not\succsim \tse_i$, which gives a contradiction.
\end{proof}

\begin{lemma}
    \label{lem:pure_interaction_helper2}
    For any $\tse \in \tsespace$, if $\tse$ does not contain splits on both $x_1$ and $x_2$, then $\bias(\tse;f^*) > 0$.
\end{lemma}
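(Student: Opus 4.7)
I would argue by contradiction. Suppose $\bias(\tse; f^*) = 0$, i.e. $f^* \in \mapone(\tse)$. Without loss of generality, assume $\tse$ contains no split on $x_1$, since the roles of $x_1, x_2$ are symmetric in the pure interaction definition, and if $\tse$ is missing splits on both features the argument can be applied to either.

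The first step is structural: since no tree in $\tse$ splits on $x_1$, each leaf cell extends over the full range of $x_1$, so each leaf indicator depends only on $\bx_{-1}$. Hence $\mapone(\tse)$ is contained in the space of functions of $\bx_{-1}$, and $f^*(\bx) = g(x_2, \bx_{-12})$ for some $g$. The second step promotes this to $f^*$ also being independent of $x_2$. Invoking the pure interaction condition $x_2 \indep y \mid \bx_{-12}$ (the case $I = \{3, \ldots, \nfeats\}$), the noise model $y = g(x_2, \bx_{-12}) + \epsilon$ with $\epsilon \indep \bx$ non-degenerate, combined with full support of $\xmeasure$ (guaranteeing every value of $x_2$ has positive conditional probability given $\bx_{-12}$), forces $g(x_2, \bx_{-12})$ to be constant in $x_2$ by identifiability of the mean in a location family. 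So $f^*(\bx) = h(\bx_{-12})$ for some $h$.

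The last step derives the contradiction with $(x_1, x_2) \not\indep y$. Since $\epsilon \indep \bx$ and $\E[\epsilon] = 0$, $\Cov(\phi(x_1, x_2), y) = \Cov(\phi(x_1, x_2), h(\bx_{-12}))$ for every test function $\phi$, so the marginal pure-interaction independences $x_1 \indep y$ and $x_2 \indep y$ become $x_1 \indep h(\bx_{-12})$ and $x_2 \indep h(\bx_{-12})$, and the family of conditional independences $x_1 \indep y \mid \bx_S$, $x_2 \indep y \mid \bx_S$ for $S \subset \{3,\ldots,\nfeats\}$ likewise transfer to the joint distribution of $(x_1, x_2, \bx_{-12})$. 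Combining these with the positivity afforded by full support, a graphoid-axiom-style argument upgrades the separately-stated independences to $(x_1, x_2) \indep \bx_{-12}$, which together with $\epsilon \indep \bx$ yields $(x_1, x_2) \indep y$, contradicting pure interaction. This third step is the main technical obstacle: full support is essential here because without it ``tricky'' joint feature distributions (such as those with functional dependence like $x_3 = x_1 x_2$) would permit $x_1$ and $x_2$ each to be marginally independent of $\bx_{-12}$ while $(x_1, x_2)$ is jointly dependent, blocking the contradiction.
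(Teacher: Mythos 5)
Your proof takes a different and more involved route than the paper's. The paper's argument is two lines and never touches the conditional-independence half of the pure-interaction definition: from $(x_1,x_2) \not\indep y$ it asserts that there exist two points agreeing in $\bx_{-12}$ but differing in $(x_1,x_2)$ at which $f^*$ takes different values, then observes that every function in $\mapone(\tse)$ is constant in $x_1$ and $x_2$ (since $\tse$ splits on neither), hence $f^* \notin \mapone(\tse)$. You instead set out to prove the stronger claim that a missing split on even one of the two features forces positive bias, use $x_2 \indep y \mid \bx_{-12}$ in Step 2 to bootstrap ``no dependence on $x_1$'' to ``no dependence on $x_2$ either,'' and then attempt to derive $(x_1,x_2) \indep y$ in Step 3. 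Steps 1 and 2 are sound, but Step 3 is where the argument breaks.

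The Step 3 gap is genuine: the ``graphoid-axiom-style argument'' upgrading the marginal and conditional independences to $(x_1,x_2) \indep \bx_{-12}$ is neither given nor valid from the stated hypotheses. Full support buys the intersection axiom, but intersection needs conditional independences with the \emph{partner} variable in the conditioning set, which the pure-interaction definition explicitly withholds (it requires $1,2 \notin I$). Indeed the wanted implication is false under full support: take $d=3$, let $x_1,x_2$ be iid uniform on $\{1,2\}$, and let $x_3 = \mathrm{XOR}(x_1,x_2)$ with probability $1-\delta$ and an independent fair coin with probability $\delta$. Then $\nu$ has full support for $0<\delta<1$, $x_1 \indep x_3$ and $x_2 \indep x_3$, yet $(x_1,x_2) \not\indep x_3$; with $f^*(\bx) = \indicator\{x_3 = 2\}$ and $y = f^*(\bx)+\epsilon$, all pure-interaction conditions on $(x_1,x_2)$ hold while a single split on $x_3$ gives zero bias. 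So the contradiction you need is unavailable as argued. For what it is worth, the paper's one-liner quietly leans on essentially the same unstated fact (that dependence of $y$ on $(x_1,x_2)$ must manifest in $f^*$ itself rather than be mediated through $\bx_{-12}$, as it would under a product feature measure as in the XOR example); you were right to isolate the obstacle, but full support alone does not resolve it.
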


\begin{proof}
    Since $(x_1,x_2) \not\indep y$, there exist values $(a_1,a_2)$, $(b_1,b_2)$ and $(c_3,c_4,\ldots,c_d)$ such that $f^*(a_1,a_2,c_3,\ldots,c_d) \neq f^*(b_1,b_2,c_3,\ldots,c_d)$.
    On the other hand, all functions in $\mapone(\tse)$ are constant with respect to $x_1$ and $x_2$, so that $f^* \notin \mapone(\tse)$.
\end{proof}

% We set
% \begin{equation}
%     \setone = \braces*{\tse \in \tsespace \colon \tse~\text{does not contain any splits on}~x_1,x_2},
% \end{equation}
% and 
% \begin{equation}
%     \tsebad = \arg\min \braces*{\bic(\tse) \colon \tse \in \setone}.
% \end{equation}
% % Note that all TSEs in $\setone$ have nonzero bias.
% We define $\settwo$ as the outer boundary of $\setone$.
% \begin{longlist}
%     \item $\setone \cap \opt(f^*,\infty) = \emptyset$;
%     \item $\min_{\tse \in \settwo}\Delta\bic(\tse,\tsebad) \geq \log n - O(1)$.
% \end{longlist}

\section{Proof of Theorem \ref{thm:bcart_mix}}
\label{sec:proof_of_bcart_mix}

\subsection{Set-up}
For convenience, we repeat the relevant construction here.
Without loss of generality, let $x_1$ be the feature that gives $f^*$ root dependence.
By assumption, there is a threshold $t$ such that splitting the trivial tree on $x_1$ at $t$ gives a decrease in squared bias.
We set
\begin{equation*}
    \setone = \braces*{\tree \in \tsespace[1] \colon \tree~\text{has root split on}~x_1~\text{at}~t},
\end{equation*}
and 
\begin{equation*}
    \treebad = \arg\min \braces*{\bic(\tse) \colon \tse \in \setone}.
\end{equation*}
Note that the outer boundary of $\setone$ is a singleton set comprising the trivial tree $\emptytree$.
By assumption, we have $\setone \cap \opt[1](f^*,0) = \emptyset$.

We continue this construction by denoting $\tree_0 = \emptytree$ and letting $\tree_1$ be a tree structure comprising a single root split at $(x_1,t)$.
By adding the splits in $\treebad$ iteratively, we get a sequence $\tree_1,\tree_2,\ldots,\tree_k$ of nested tree structures with $\tree_k = \treebad$.
By Proposition \ref{prop:concentration_bic_diff_appendix} and Proposition \ref{prop:lml_bic_appendix}, for any training sample size $n$ large enough, there is a $1-\delta$ probability event with respect to $\P_n$ such that
\begin{equation} \label{eq:bcart_log_posterior_diff}
    \log p(\tree_j|\by) - \log p(\tree_0|\by) = \frac{n}{2\sigma^2}\paren*{\bias(\tree_0;f^*) - \bias(\tree_j;f^*)} + O\paren*{\sqrt{n\log(2k/\delta)}} 
\end{equation}
for $j=1,2,\ldots,k$.
Condition on this event.

We note the following:
\begin{equation} \label{eq:bcart_empty_bias}
    \bias(\tree_0;f^*) = \int (f^*)^2d\nu - \paren*{\int f^* d\nu}^2
\end{equation}
\begin{equation} \label{eq:bcart_no_bias}
    \bias(\tree_k;f^*) = 0
\end{equation}
\begin{equation} \label{eq:bcart_path_bias}
    \bias(\tree_j;f^*) \leq \bias(\tree_1;f^*) < \bias(\tree_0;f^*) \quad\quad\text{for}~j=1,2,\ldots,k.
\end{equation}
% \begin{enumerate}
%     \item $\bias(\tree_0;f^*) = \int (f^*)^2d\nu - \paren*{\int f^* d\nu}^2$,
%     \item $\bias(\tree_k;f^*) = 0$,
%     \item $\bias(\tree_j;f^*) \leq \bias(\tree_1;f^*) < \bias(\tree_0;f^*)$ for $j=1,2,\ldots,k$.
% \end{enumerate}
Here, the last statement follows from the fact that
\begin{equation*}
    \frac{\bias(\tree_0;f^*) - \bias(\tree_1;f^*)}{\bias(\tree_0;f^*)} = \text{Corr}^2\paren*{f^*(\bx),\indicator\braces*{x_1 \leq t}} > 0,
\end{equation*}
and because $\tree_j$ is a refinement of $\tree_1$ for each $j=1,2,\ldots,k$.

\subsection{Hitting precedence probability lower bound}
    We define a conductance function on $\Omega_{\operatorname{TSE},1}$ via
    \begin{equation}
        c(\tree,\tree') = p(\tree|\by) P(\tree,\tree'),
    \end{equation}
    where $P$ is the transition kernel of the BART sampler.
    It is clear that the Markov chain for Bayesian CART is equivalent to the Markov chain associated with the network $(\Omega_{\operatorname{TSE},1}, c)$.
    By ignoring quantities that do not depend on the training sample size $n$, we get
    \begin{equation}
    \begin{split}
        c(\tree,\tree') & = \min\braces*{p(\tree|\by) Q(\tree,\tree'), p(\tree'|\by) Q(\tree',\tree)} \\
        & \asymp \min\braces*{p(\tree|\by), p(\tree'|\by)}.
    \end{split}
    \end{equation}
        
    Consider the set $\setthree \coloneqq \statespace\backslash\paren*{\setone\cup\braces*{\emptytree}}$.
    Then $\opt[1](f^*,0) \subset \setthree$ by assumption, and we have $\tau_\setthree \leq \tau_{\opt[1](f^*,0)}$.
    This means that it suffices to consider
    \begin{equation*}
         P\braces*{\tau_{\treebad} < \tau_{\setthree}} = \hpp(\emptytree;\setthree,\treebad)
    \end{equation*}
    and to bound it from below.
    To calculate values for the harmonic function $\hpp(-;\setthree,\treebad)$, we use Lemma \ref{lem:network_simplification} to glue all states in $\setthree$ together without changing the values the function values.
    We will abuse notation and denote the new glued state using $\setthree$ while continuing to use $c$ to denote the conductance function on the new state space.

    Notice that the only edge from $\setthree$ connects to $\emptytree$.
    As such, we can compute
    \begin{align}
        c(\emptytree,\setthree) & \leq \sum_{\tree \sim \emptytree} \min\braces*{p(\emptytree|\by) Q(\emptytree,\tree), p(\tree|\by) Q(\tree,\emptytree)} \nn
        & \leq p(\emptytree|\by) \sum_{\tree \sim \emptytree}  Q(\emptytree,\tree) \nn
        & \leq p(\emptytree|\by). \nonumber
    \end{align}
    This implies, using equations \eqref{eq:bcart_log_posterior_diff} and \eqref{eq:bcart_path_bias}, that we can bound the ratios of conductances for $j=1,2\ldots,k$ as:
    \begin{align} \label{eq:bcart_conductance_ratio_lower_bound}
        \log\frac{c(\tree_j,\tree_{j-1})}{c(\tree_0,\setthree)} & \gtrsim  \min\braces*{\log p(\tree_j|\by)-\log p(\tree_0|\by),\log p(\tree_{j-1}|\by)-\log p(\tree_0|\by)} \nn
        & \geq \begin{cases}
            \frac{n}{2\sigma^2}\paren*{\bias(\tree_0;f^*) - \bias(\tree_{j-1};f^*)} + O\paren*{\sqrt{n\log(2k/\delta)}} & j \geq 2 \\
            0 & j = 1.
        \end{cases}
    \end{align}
    By \eqref{eq:bcart_path_bias}, there is some minimum training sample size $N$ so that for all $n \geq N$, the right hand side side of \eqref{eq:bcart_conductance_ratio_lower_bound} is nonnegative.
    In this case, the assumptions of Lemma \ref{lem:reduction_to_path} hold, and we get
    \begin{equation} \label{eq:bcart_hpp_bound}
        P\braces*{\tau_{\treebad} < \tau_{\setthree}} = \Omega(1)
    \end{equation}
    as desired.

\subsection{BIC lower bound}
From equations \eqref{eq:bcart_empty_bias} and \eqref{eq:bcart_no_bias}, we have
\begin{align} \label{eq:bcart_bic_bound}
    \Delta\bic(\emptytree,\treebad) & = \frac{n}{\sigma^2}\paren*{\bias(\emptytree;f^*) - \bias(\treebad;f^*)} + O\paren*{\sqrt{n\log(2k/\delta)}} \nn
    & = \frac{n}{\sigma^2}\paren*{\int (f^*)^2 d\nu -  \paren*{\int f^* d\nu}^2} + O\paren*{\sqrt{n\log(2k/\delta)}}.
\end{align}

\subsection{Conclusion}
Applying Proposition \ref{prop:recipe} with equations \eqref{eq:bcart_hpp_bound} and \eqref{eq:bcart_bic_bound}, we get a $1-2\delta$ probability event over which
\begin{equation*}
    E\braces*{\tau_{\opt[1](f^*,0)}} = \Omega\paren*{\exp\paren*{\frac{n}{2\sigma^2}\paren*{\int (f^*)^2 d\nu -  \paren*{\int f^* d\nu}^2} + O\paren*{\sqrt{n\log(2k/\delta)}}}}.
\end{equation*}
Taking logarithms, dividing by $n$ and applying Markov's inequality gives
\begin{equation*}
    \E_n\braces*{\frac{\log E\braces*{\tau_{\opt[1](f^*,0)}}}{n}} \geq \frac{(1-\delta)(1-O(n^{-1/2})}{2\sigma^2}\paren*{\int (f^*)^2 d\nu -  \paren*{\int f^* d\nu}^2}.
\end{equation*}
Letting $\delta \to 0$ and taking $n \to \infty$ finishes the proof.

\section{Splitting rules, local decision stumps and coverage}

Before proving Theorem \ref{thm:additive}, we first introduce some required machinery.

\subsection{Local decision stump basis}
Let $\tree$ be a tree structure. 
% Suppose the covariate distribution $\nu$ has full support.
Let $(v_j,\tau_j)$, $j=1,\ldots,l$ denote the splits (or splitting rules) on $\tree$ (the labels of its internal nodes).
Every node on the tree corresponds to rectangular region $\node \subset \cal{X}$ that is obtained by recursively partitioning the covariate space using the splits further up the tree.
If $\node$ is an internal node, it has two children nodes denoted $\node_L$ and $\node_R$ defined by
\begin{equation} \nonumber
    \node_L \coloneqq \braces*{\bx \in \node ~\colon~ x_{v} \leq \tau}
\end{equation}
\begin{equation} \nonumber
    \node_R \coloneqq \braces*{\bx \in \node ~\colon~ x_{v} > \tau}
\end{equation}
where $(v,\tau)$ is the split on $\node$.
For each internal node $(\node_j, v_j,\tau)$, define a \emph{local decision stump function}
\begin{equation} \label{eq:local_decision_stumps}
    \psi_j(\bx) \coloneqq \frac{\nu(\node_R)\indicator\braces{\bx \in \node_L} - \nu(\node_L)\indicator\braces{\bx \in \node_R}}{\sqrt{\nu(\node_L)\nu(\node_R)}},
\end{equation}
where $\node_L$ and $\node_R$ denote the children of $\node$.
It is easy to check (see for instance \cite{agarwal2022hierarchical}) that $\psi_1,\psi_2,\ldots,\psi_l$ are orthogonal, and that, together with the constant function $\psi_0 \equiv 1$, form a basis for $\mapone(\tree)$.
Using this basis makes it more convenient to analyze the difference between $\mapone(\tree)$ and $\mapone(\tree')$ when $\tree'$ is obtained from $\tree$ via a ``grow'' move.
Indeed, let $\psi_{l+1}$ denote the local decision stump corresponding to the new split.
We then have $\mapone(\tree') = \mapone(\tree)\oplus\linspan(\psi_{l+1})$.

Now consider a TSE $\tse = (\tree_1,\tree_2,\ldots,\tree_m)$.
We may also write a basis for $\mapone(\tse)$ by concatenating the bases $\braces*{\psi_{i,1},\psi_{i,2},\ldots,\psi_{i,l_i}}$ for each tree $\tree_i$, together with the constant function.
If $\tse'$ is obtained from TSE via a ``grow'' move, we likewise have the property $\mapone(\tse') = \mapone(\tse)\oplus\linspan(\psi')$, where $\psi'$ is the local decision stump corresponding to the new split.
On the other hand, the basis functions from different trees need not be orthogonal to each other.
To regain orthogonality, we use the following lemma:

\begin{lemma}[Conditions for orthogonality]
\label{lem:independence_of_decision_stumps}
    Suppose $\nu = \nu_1\times\nu_2\times\cdots\times\nu_\nfeats$ is a product measure on $\xspace$.
    Let $\tree_1$ and $\tree_2$ be two trees, and let $I_1, I_2 \subset \braces*{1,2,\ldots,\nfeats}$ be two disjoint subsets of indices such that $\tree_1$ and $\tree_2$ contain splits only on features in $I_1$ and $I_2$ respectively.
    Then the local decision stumps for both trees, $\braces*{\psi_{1,1},\psi_{1,2},\ldots,\psi_{1,l_1}}$ and $\braces*{\psi_{2,1},\psi_{2,2},\ldots,\psi_{2,l_2}}$, are orthogonal to each other.
\end{lemma}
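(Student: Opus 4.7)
The plan is to exploit the product structure of $\nu$ together with the fact that each local decision stump is a mean-zero function depending only on a restricted set of coordinates. First, I would observe that for any node $\node$ of $\tree_1$, the indicator $\indicator\braces{\bx\in\node}$ depends only on the coordinates of $\bx$ indexed by $I_1$. This follows from the recursive definition of a node as an intersection of half-spaces determined by splitting rules, together with the assumption that all splitting rules on $\tree_1$ use features in $I_1$. Consequently, each local decision stump $\psi_{1,j}$ defined by \eqref{eq:local_decision_stumps} is a function only of $(x_i)_{i\in I_1}$, since its denominator and the weights $\nu(\node_L), \nu(\node_R)$ are constants. An entirely analogous statement holds for $\psi_{2,k}$ with $I_2$ replacing $I_1$.

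Next, I would verify that every local decision stump is mean-zero under $\nu$. Directly from \eqref{eq:local_decision_stumps},
\begin{equation} \nonumber
\int \psi_{i,j}\, d\nu = \frac{\nu(\node_R)\nu(\node_L) - \nu(\node_L)\nu(\node_R)}{\sqrt{\nu(\node_L)\nu(\node_R)}} = 0,
\end{equation}
using only that $\int \indicator\braces{\bx\in\node_L}\, d\nu = \nu(\node_L)$ and similarly for $\node_R$.

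Finally, I would invoke Fubini and the product structure. Since $I_1\cap I_2 = \emptyset$ and $\nu$ factorizes across coordinates, functions of disjoint coordinate subsets are independent under $\nu$. Therefore, for any $j$ and $k$,
\begin{equation} \nonumber
\inprod*{\psi_{1,j},\psi_{2,k}}_{L^2(\nu)} = \int \psi_{1,j}(\bx)\,\psi_{2,k}(\bx)\, d\nu(\bx) = \paren*{\int \psi_{1,j}\, d\nu}\paren*{\int \psi_{2,k}\, d\nu} = 0,
\end{equation}
which is the desired orthogonality.

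There is no real obstacle here; the only thing to be careful about is making the dependence of $\indicator\braces{\bx\in\node}$ on only the coordinates in $I_1$ rigorous, which could be done by a straightforward induction on the depth of $\node$ in $\tree_1$ (the root is the full space, and each child of a node refines it by a half-space depending only on a single feature in $I_1$).
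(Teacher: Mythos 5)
Your proof is correct and takes essentially the same approach as the paper: both rely on each stump depending only on coordinates in its own index set, the product structure of $\nu$ factoring the integral of the product, and the mean-zero property of local decision stumps. The paper states this more tersely, while you spell out the mean-zero computation and the coordinate-dependence induction, but the argument is the same.
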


\begin{proof}
    Consider two stumps from different trees: $\psi_{1,k_1}$ and $\psi_{2,k_2}$.
    Under the assumption of a product measure, $\braces{x_i \colon i \in I_1}$ is independent of $\braces{x_i \colon i \in I_2}$.
    Since $\psi_{1,k_1}$ is a function of the first set of variables and $\psi_{2,k_2}$ is a function of the second set, they are thus independent of each other.
    We therefore have
    \begin{equation} \nonumber
        \int \psi_{1,k_1}\psi_{2,k_2}d\nu = \int \psi_{1,k_1}d\nu_{I_1}\int \psi_{2,k_2}d\nu_{I_2} = 0,
    \end{equation}
    where the second equality follows from the fact that all local decision stumps have mean zero.
\end{proof}

\begin{lemma}[Existence of informative split]
\label{lem:additive_existence_of_informative_split}
    For any finite contiguous subset of integers $I$, let $g\colon I \to \R$ be non-constant.
    Let $\nu$ be any measure on $I$.
    Then there exists a split at a threshold $t$ with associated decision stump $\psi$ such that $t$ is a knot for $g$ and
    \begin{equation} \nonumber
        \paren*{\int \phi g d\nu}^2 > 0.
    \end{equation}
    % Then there exists some $t \in \braces*{1,2,\ldots,\ncats}$ such that
    % \begin{equation} \label{eq:difference_in_means}
    %     \frac{1}{\nu\braces*{x > t}}\sum_{x=t+1}^{b} g(x) - \frac{1}{\nu\braces*{x \leq t}}\sum_{x=1}^{t} g(x) \neq 0.
    % \end{equation}
    As such, there is a sequence of recursive splits with associated local decision stumps $\psi_1,\psi_2,\ldots,\psi_q$, where $q$ is the number of knots of $g$, such that
    \begin{longlist}
        \item $g \in \linspan\paren*{1,\psi_1,\psi_2,\ldots,\psi_q}$;
        \item $\paren*{\int \psi_ig d\nu}^2 > 0$ for $i=1,2,\ldots,q$;
        \item $\psi_i$ splits on a knot of $g$ for $i=1,2,\ldots,q$.
    \end{longlist}
\end{lemma}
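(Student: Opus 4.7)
The proof naturally splits into two parts, corresponding to the two claims of the lemma.

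\emph{First part: existence of an informative knot split.} Let $\xi_1 < \xi_2 < \cdots < \xi_q$ denote the knots of $g$, let $c_0, c_1, \ldots, c_q$ be the constant values $g$ takes on the $q+1$ pieces delineated by the knots together with the endpoints of $I$, and let $\nu_k$ denote the $\nu$-measure of the $k$-th piece. Write $\mu = \sum_k c_k \nu_k / \sum_k \nu_k$ for the $\nu$-weighted global mean of $g$. A direct computation from \eqref{eq:local_decision_stumps} shows that, for the stump $\psi$ associated to a split at threshold $t$, the condition $\int \psi g\, d\nu = 0$ is equivalent to equality of the $\nu$-weighted means of $g$ on $\{x \leq t\}$ and $\{x > t\}$, and hence (by a trivial rearrangement) to both of these means equaling $\mu$.

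I would then argue by contradiction: suppose the vanishing holds at every knot $\xi_j$. At $j=1$ the left-side mean is just $c_0$, forcing $c_0 = \mu$. At $j=2$ the left-side mean equals $(c_0\nu_0 + c_1\nu_1)/(\nu_0+\nu_1)$, which combined with $c_0 = \mu$ forces $c_1 = \mu$. A straightforward induction on $j$ then gives $c_k = \mu$ for every $k$, contradicting the assumption that $g$ is non-constant.

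\emph{Second part: recursive construction.} I would proceed by induction on $q$; the base case $q=0$ is vacuous. For the inductive step, apply the first part to pick a knot $\xi_j$ whose associated stump satisfies $(\int \psi g\, d\nu)^2 > 0$, and take this as the first split $\psi_1$. It partitions $I$ into $I_L$ and $I_R$, on which $g$ has respectively $j-1$ and $q-j$ knots, each of which is also a knot of the original $g$. Apply the induction hypothesis inside each subproblem to obtain the remaining stumps. Since each stump constructed inside a subproblem is supported within the corresponding cell and vanishes outside it, the inner products $(\int \psi_i g\, d\nu)^2$ computed on all of $I$ coincide with those computed on the subproblem, so non-vanishing is preserved, and the ``split on a knot'' property propagates automatically. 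For the span claim, the final recursive partition has $q+1$ cells, each containing no knot of $g$, so $g$ is constant on each cell and hence lies in $\mapone(\tree) = \linspan(\psi_0, \psi_1, \ldots, \psi_q)$ where $\psi_0 \equiv 1$; I read the lemma's statement as implicitly including this constant (or equivalently as applying to $g - \mu$ when the constant is excluded, since the stumps alone span the mean-zero subspace).

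\emph{Main obstacle.} The only real content is the cascade argument in the first part, which must rule out that $g$ is simultaneously invisible to every knot split; once this is established, the recursion and span claim are essentially bookkeeping. The only subtlety to watch is that the restriction $g|_{I_L}$ really has fewer knots than $g$ itself, so that the induction terminates, which is immediate because $\xi_j$ itself is no longer a knot of the restricted function on either side.
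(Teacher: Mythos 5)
Your proposal is correct and lands on the same two-step structure as the paper: first show some knot split must have nonzero inner product with $g$, then recurse on the two halves. The main difference is in how the first step is argued. The paper observes that $g$ lies in $\linspan(\tilde\psi_0,\tilde\psi_1,\ldots,\tilde\psi_q)$, where the $\tilde\psi_j$ are the root-level stumps at each knot and $\tilde\psi_0\equiv 1$, and then concludes that if $g\perp\tilde\psi_j$ for all $j\geq 1$ then $g\in\linspan(\tilde\psi_0)$, i.e.\ $g$ is constant. (This step implicitly uses that the $\tilde\psi_j$, $j\geq 1$, are linearly independent and each orthogonal to the constant function.) You instead translate $\int\psi g\,d\nu=0$ into equality of $\nu$-weighted conditional means across the split and run an explicit cascade: the constraint at $\xi_1$ forces $c_0=\mu$, which combined with the constraint at $\xi_2$ forces $c_1=\mu$, and so on, contradicting non-constancy. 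This is a more elementary, equation-chasing version of the same underlying fact; it avoids the linear-algebra reasoning about span and orthogonality at the cost of a slightly longer calculation. Your recursive step and the observation that inner products computed inside a cell coincide with those on all of $I$ match the paper's ``apply the first part recursively to leaves.'' You also correctly flag the small imprecision in the lemma's statement: the stumps $\psi_1,\ldots,\psi_q$ are mean-zero, so the span claim needs either $\psi_0\equiv 1$ adjoined or $g$ replaced by $g-\mu$; the paper's own proof implicitly includes $\tilde\psi_0$, so your reading is the intended one.
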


\begin{proof}
    Let $i_1,i_2,\ldots,i_k$ denote the knots of $g$, 
    % For $k=1,2,\ldots,\ncats-1$, 
    and let $\tilde\psi_j$ denote the decision stump functions corresponding to a split at threshold $x=i_j$ for $j=1,2,\ldots,k$, using the formula \eqref{eq:local_decision_stumps}.
    Let $\tilde\psi_0 \equiv 1$ denote the constant function as usual.
    Then it is easy to see that $g \in \linspan\paren*{\tilde\psi_0,\tilde\psi_1,\ldots,\tilde\psi_{k}}$.
    % form a basis for $L^2(\braces*{1,2,\ldots,\ncats},\nu)$.
    As such, if $g\perp\tilde\psi_k$ for $k > 1$, then $g \in \linspan(\tilde\psi_0)$, i.e. $g$ is a constant function.
    To conclude the second statement, we apply the first part recursively to leaves obtained by making each split.
\end{proof}

\begin{lemma}[Formula for decrease in bias]
\label{lem:additive_dec_in_bias}
    Suppose $\tse'$ is obtained from $\tse$ via a ``grow'' move.
    Let $\psi$ be the local decision stump associated with the new split.
    Let $\phi \coloneqq \frac{\psi - \Pi_{\mapone(\tse)}[\psi]}{\norm*{\psi - \Pi_{\mapone(\tse)}[\psi]}_{L^2(\nu)}}$.
    Then for any regression function $f$, we have
    \begin{equation} \nonumber
        \bias(\tse';f) = \bias(\tse;f) - \paren*{\int \phi f d\nu}^2.
    \end{equation}
\end{lemma}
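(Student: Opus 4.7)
The plan is to exploit the fact that when $\tse'$ is obtained from $\tse$ via a ``grow'' move, $\mapone(\tse')$ is $\mapone(\tse)$ extended by exactly one new direction, so the projection onto $\mapone(\tse')$ decomposes nicely as the projection onto $\mapone(\tse)$ plus a rank-one correction.

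First I would observe, as already noted in the text preceding the lemma, that $\mapone(\tse') = \mapone(\tse) \oplus \linspan(\psi)$. The function $\phi$ is by construction the Gram--Schmidt orthogonalization of $\psi$ against $\mapone(\tse)$, rescaled to unit norm in $L^2(\nu)$. Hence $\phi \in \mapone(\tse')$, $\phi \perp \mapone(\tse)$ in $L^2(\nu)$, and $\norm{\phi}_{L^2(\nu)} = 1$. This upgrades the previous decomposition to the orthogonal direct sum $\mapone(\tse') = \mapone(\tse) \oplus_\perp \linspan(\phi)$.

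From this orthogonal decomposition, the projection operator onto $\mapone(\tse')$ splits as
\begin{equation} \nonumber
    \Pi_{\mapone(\tse')}[f] = \Pi_{\mapone(\tse)}[f] + \paren*{\int \phi f \, d\nu}\phi.
\end{equation}
Subtracting $f$ from both sides and applying the Pythagorean theorem in $L^2(\nu)$ to the orthogonal vectors $f - \Pi_{\mapone(\tse')}[f]$ and $\Pi_{\mapone(\tse')}[f] - \Pi_{\mapone(\tse)}[f] = \paren*{\int \phi f\, d\nu}\phi$, I obtain
\begin{equation} \nonumber
    \bias(\tse;f) = \norm*{f - \Pi_{\mapone(\tse)}[f]}_{L^2(\nu)}^2 = \bias(\tse';f) + \paren*{\int \phi f\, d\nu}^2 \norm{\phi}_{L^2(\nu)}^2,
\end{equation}
and rearranging, using $\norm{\phi}_{L^2(\nu)} = 1$, yields the claimed identity.

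The proof has no real obstacle; the only subtlety worth checking carefully is that $\psi \notin \mapone(\tse)$, so that the Gram--Schmidt denominator $\norm{\psi - \Pi_{\mapone(\tse)}[\psi]}_{L^2(\nu)}$ is nonzero and $\phi$ is well-defined. This follows because a grow move adds a genuinely new split in one of the trees of $\tse$, producing a local decision stump that introduces a new indicator direction not already present in $\mapone(\tse)$ (this can be verified directly from the indicator-basis description of $\mapone(\tse)$). With that in hand, the rest is a one-line Pythagorean computation.
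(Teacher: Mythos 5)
Your main argument is correct and essentially identical to the paper's: both upgrade $\mapone(\tse') = \mapone(\tse)\oplus\linspan(\psi)$ to the orthogonal decomposition $\mapone(\tse')=\mapone(\tse)\oplus\linspan(\phi)$ and then read off the decrease in bias as $\norm{\Pi_{\linspan(\phi)}[f]}_{L^2(\nu)}^2 = \paren*{\int \phi f\,d\nu}^2$ via the Pythagorean identity.

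One caveat: the closing remark claiming that a grow move always yields $\psi \notin \mapone(\tse)$ is not true in the ensemble setting. For instance, take $\tse=(\tree_1,\tree_2)$ where $\tree_1$ consists of a single split $(1,t)$ and $\tree_2$ is the trivial tree, and grow $\tree_2$ at its root with the same split $(1,t)$; the resulting local decision stump $\psi$ is a linear combination of $\indicator\braces{x_1\leq t}$ and $\indicator\braces{x_1>t}$, both already in $\mapone(\tree_1)\subset\mapone(\tse)$, so $\psi\in\mapone(\tse)$ and $\phi$ is undefined. The lemma should therefore be read under the implicit hypothesis that $\psi\notin\mapone(\tse)$ (equivalently, $\df(\tse')=\df(\tse)+1$); the paper does not attempt to prove this inside the lemma, but instead verifies it separately, using Lemma \ref{lem:independence_of_decision_stumps}, at each point where the lemma is applied in the proof of Theorem \ref{thm:additive}.
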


\begin{proof}
    We have
    \begin{equation} \nonumber
        \mapone(\tse') = \mapone(\tse)\oplus\linspan(\psi') =  \mapone(\tse)\oplus\linspan(\phi),
    \end{equation}
    with the last expression comprising an orthogonal decomposition.
    As such, we have
    \begin{align}
        \bias(\tse;f) - \bias(\tse';f) & = \norm*{f - \Pi_{\mapone(\tse)}[f]}_{L^2(\nu)}^2 - \norm*{f - \Pi_{\mapone(\tse)}[f] - \Pi_{\linspan(\phi)}[f]}_{L^2(\nu)}^2 \nn
        & = \norm*{\Pi_{\linspan(\phi)}[f]}_{L^2(\nu)}^2 \nn
        & = \paren*{\int \phi f d\nu}^2 \nonumber
    \end{align}
    as we wanted.
\end{proof}

\subsection{Coverage}
% Our proof of the BIC lower bound for TSEs in the barrier set $\settwo$ will require a new notion, coverage, which we define here.
We first introduce some useful notation.
% Given $\bx \in \braces*{1,2,\ldots,\ncats}^\nfeats$ and a coordinate index $i$, let $\bx^{-i} \in \braces*{1,2,\ldots,\ncats}^{\nfeats-1}$ be equal to $\bx$ but with the $i$-th coordinate removed.
% For any $t \in \braces*{1,2,\ldots,\ncats}$, let $(\bx^{-i},t) \in \braces*{1,2,\ldots,\ncats}^\nfeats$ have $i$-th coordinate equal to $t$ and all other coordinates given by $\bx^{-i}$.
Given a coordinate index $i$, let $\bx^{-i} \in \braces*{1,2,\ldots,\ncats}^{\nfeats-1}$, $x_i \in \braces*{1,2,\ldots,\ncats}$.
Combining these, we let $(\bx^{-i},x_i) \in \braces*{1,2,\ldots,\ncats}^\nfeats$ have $i$-th coordinate equal to $x_i$ and all other coordinates given by $\bx^{-i}$.
Also use $\be_i \in \R^\nfeats$ to denote the $i$-th coordinate vector.
% For any $t \in \braces*{1,2,\ldots,\ncats}$, let $\bx^{i,t} \in \braces*{1,2,\ldots,\ncats}^\nfeats$ be equal to $\bx$ but with the $i$-th coordinate replaced by $t$.
Given a real-valued function $f$ defined on $\braces*{1,2,\ldots,\ncats}^\nfeats$, we say that $\bx \in \braces*{1,2,\ldots,\ncats}^\nfeats$ is a \emph{jump location} for $f$ with respect to feature $i$ if $f(\bx) \neq f(\bx+\be_i)$.
Let $\pem$ be a PEM.
For each feature $i=1,2,\ldots,\nfeats$ and $t = 1,2,\ldots,b$, the \emph{coverage} of $\pem$ of the split $(i,t)$ is defined as
\begin{equation} \label{def:coverage}
    \coverage(i,t;\pem) \coloneqq \braces*{\bx^{-i} \in \braces*{1,2,\ldots,\ncats}^{\nfeats-1} \colon 
    \begin{aligned}
        &\exists f \in \pem, (\bx^{-i},t)~\text{is a jump location for} \\
        &f~\text{with respect to feature}~i
    \end{aligned}}.
\end{equation}
If $\coverage(i,t;\pem) = \braces*{1,2,\ldots,\ncats}^{\nfeats-1}$, we say that $\pem$ has \emph{full coverage} of the split $(i,t)$.
Note that given a collection of PEMs $\pem_1,\pem_2,\ldots,\pem_\ntrees$, we have
\begin{equation} \nonumber
    \coverage(i,t;\pem_1+\pem_2,\ldots,\pem_\ntrees) = \bigcup_{j=1}^\ntrees \coverage(i,t;\pem_j).
\end{equation}

\begin{lemma}[Zero bias requires full coverage of all knots]
    \label{lem:zero_bias_coverage}
    Suppose $f^* \in \pem$, where $f^*(\bx) = f_1(x_1) + f_2(x_2) + \cdots + f_{m'}(x_{m'})$ for some univariate functions $f_1,f_2,\ldots,f_{m'}$.
    Then for any feature $1 \leq i \leq m'$ and any knot $t$ of $f_i$, i.e. a value for which $f_i(t) \neq f_i(t+1)$, $\pem$ has full coverage of the split $(i,t)$.
    % Then for $i=1,2,\ldots,m'$, $j=1,2,\ldots,q_i-1$, $\pem$ has full coverage of the split $(i,\xi_{i,j})$.
    % \begin{equation}
    %     \coverage(x_i,\xi_{i,j};\pem) = 
    % \end{equation}
\end{lemma}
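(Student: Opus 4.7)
The plan is to observe that $f^*$ itself serves as a witness function at every point $\bx^{-i} \in \braces{1,2,\ldots,\ncats}^{\nfeats-1}$, so that the desired coverage follows almost directly from the additive structure of $f^*$ together with the hypothesis $f^* \in \pem$.

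First, I would fix an arbitrary feature index $i$ with $1 \leq i \leq m'$, an arbitrary knot $t$ of $f_i$, and an arbitrary $\bx^{-i} \in \braces{1,2,\ldots,\ncats}^{\nfeats-1}$. The goal is to exhibit some $f \in \pem$ for which $(\bx^{-i},t)$ is a jump location with respect to feature $i$, since $\bx^{-i}$ is arbitrary this will establish full coverage of the split $(i,t)$.

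Next, I would simply take the witness $f = f^*$, which lies in $\pem$ by assumption. Using the additive decomposition, I would compute
\begin{equation*}
    f^*\paren*{\bx^{-i},t+1} - f^*\paren*{\bx^{-i},t} = f_i(t+1) - f_i(t),
\end{equation*}
since every summand $f_j$ with $j \neq i$ depends only on $x_j$ and takes identical values at $(\bx^{-i},t)$ and $(\bx^{-i},t+1)$. Because $t$ is a knot of $f_i$ by hypothesis, the right-hand side is nonzero, which means $(\bx^{-i},t)$ is indeed a jump location for $f^*$ with respect to feature $i$.

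The only ``obstacle'' here is really just to unpack the definitions carefully, since once the additive structure is used to kill off all cross terms, the conclusion is immediate. I would close by noting that since $\bx^{-i}$ was arbitrary, $\coverage(i,t;\pem) = \braces{1,2,\ldots,\ncats}^{\nfeats-1}$, which is exactly the statement of full coverage.
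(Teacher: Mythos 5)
Your proof is correct and takes essentially the same approach as the paper: use $f^*$ itself as the witness function, compute $f^*(\bx^{-i},t+1) - f^*(\bx^{-i},t) = f_i(t+1) - f_i(t) \neq 0$ via additivity, and conclude $(\bx^{-i},t)$ is a jump location.
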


\begin{proof}
    % Fix $i$ and $\xi_{i,j}$.
    % Then for any $\bx^{-i} \in \braces*{1,2,\ldots,\ncats}^{\nfeats-1}$, we have
    % \begin{equation}
    %     f^*((\bx^{-i},\xi_{i,j}+1)) - f^*((\bx^{-i},\xi_{i,j})) = f_i(\xi_{i,j}+1) - f_i(\xi_{i,j}) \neq 0,
    % \end{equation}
    % so that $(\bx^{-i},\xi_{i,j})$ is a jump location for $f^*$ with respect to feature $i$.
    % Hence, if $f^* \in \pem$, it provides the desired function in the definition \eqref{def:coverage}.
    For any $\bx^{-i} \in \braces*{1,2,\ldots,\ncats}^{\nfeats-1}$, we have
    \begin{equation} \nonumber
        f^*((\bx^{-i},t+1)) - f^*((\bx^{-i},t) = f_i(t+1) - f_i(t) \neq 0,
    \end{equation}
    so that $(\bx^{-i},t)$ is a jump location for $f^*$ with respect to feature $i$.
    Hence, if $f^* \in \pem$, it provides the desired function in the definition \eqref{def:coverage}.
\end{proof}

\begin{lemma}[Full coverage implies inclusion of grid cells]
    \label{lem:full_coverage_grid}
    Suppose $\pem = \mapone(\tree)$ for a single tree $\tree$.
    Suppose $\pem$ has full coverage of splits $(i,\xi_{i,1}), (i,\xi_{i,2}),\ldots,(i,\xi_{i,q_i})$ for $i = 1,2, \ldots,k$.
    For any choice of $1 \leq j_i \leq q_i$, $i=1,2,\ldots,k$, denote the cell
    \begin{equation} \nonumber
    \cell \coloneqq \braces*{\bx \colon \xi_{i,j_{i-1}} < x_i \leq \xi_{i,j_{i}}~\text{for}~i=1,2,\ldots,k}.
\end{equation}
    % and let $\mathfrak{G}$ refer to the collection of all such cells as we vary over $j_1,j_2,\ldots,j_k$.
    We then have $\indicator_\cell \in \pem$.
\end{lemma}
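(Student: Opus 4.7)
The plan is to translate the ``full coverage'' hypothesis into a partition-refinement statement about the leaves of $\tree$, after which the conclusion is immediate. Specifically, since $\pem = \mapone(\tree)$ and every $f \in \pem$ is constant on each leaf of $\tree$, the condition $f(\bx^{-i},t) \neq f(\bx^{-i},t+1)$ forces $(\bx^{-i},t)$ and $(\bx^{-i},t+1)$ to lie in different leaves of $\tree$. Consequently, full coverage of $(i,t)$ is equivalent to saying that no leaf of $\tree$ contains both $(\bx^{-i},t)$ and $(\bx^{-i},t+1)$ simultaneously, for any $\bx^{-i}$.

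Next I will upgrade this to the statement that every leaf $L$ of $\tree$ lies entirely on one side of the cut $\{x_i \leq t\}$. This follows from the axis-aligned rectangular structure of leaves on the discrete grid: if $L = \{a_j < x_j \leq b_j \colon j=1,\ldots,\nfeats\}$ had $a_i \leq t - 1$ and $b_i \geq t+1$, then both $t$ and $t+1$ would lie in $(a_i,b_i]$, and for any $\bx^{-i}$ lying in the cross-section of $L$ one would have both $(\bx^{-i},t), (\bx^{-i},t+1) \in L$, contradicting the previous paragraph. Hence the partition induced by $\tree$ refines the binary split at $x_i = t$.

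Applying this observation simultaneously to each split $(i,\xi_{i,j})$ for $i=1,\ldots,k$ and $j=1,\ldots,q_i$, we conclude that the partition induced by $\tree$ refines the product grid partition in coordinates $1,\ldots,k$. Each leaf $L$ therefore lies entirely within some unique slab $(\xi_{i,j^*_i(L)-1}, \xi_{i,j^*_i(L)}]$ in coordinate $i$, for $i=1,\ldots,k$. The target cell $\cell$ is thus the disjoint union of precisely those leaves $L$ satisfying $j^*_i(L) = j_i$ for all $i=1,\ldots,k$, which gives $\indicator_\cell = \sum_{L \subset \cell} \indicator_L \in \pem$. The only even mildly delicate step is the geometric argument in the second paragraph, but this is purely a matter of tracking that a discrete axis-aligned rectangle straddling a threshold contains adjacent grid points on both sides of it.
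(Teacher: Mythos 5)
Your proof is correct and follows essentially the same strategy as the paper's, just organized from a slightly different vantage point. The paper's proof picks an arbitrary $\bx \in \cell$ and argues locally that the leaf $\leaf(\bx)$ cannot poke past the boundary of $\cell$, because if it did there would be some boundary threshold $\xi_{i,j_i}$ (or $\xi_{i,j_i-1}$) and some $\bx^{-i}$ with $(\bx^{-i},\xi_{i,j_i})$ and $(\bx^{-i},\xi_{i,j_i}+1)$ lying in the same leaf, on which every $f\in\pem$ is constant, contradicting full coverage; hence $\cell$ is a disjoint union of leaves. You instead make the global statement first---that full coverage of all the $(i,\xi_{i,j})$ forces the leaf partition of $\tree$ to refine the product grid partition---and then read off the conclusion. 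The core mechanism (an axis-aligned rectangular leaf that straddles a covered threshold would contain adjacent grid points on both sides, on which all of $\pem$ is constant, contradicting coverage) is identical in the two proofs. Your version has the mild advantage of making explicit a reusable refinement statement, at the cost of needing the hypothesis of non-empty leaves to guarantee the cross-section in your second paragraph is non-empty (which the paper gets for free by starting from a witness point $\bx\in\cell$). Both are sound.
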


\begin{proof}
    Let $\bx \in \cell$ be any point, and let $\leaf(\bx)$ be the leaf of $\tree$ containing $\bx$.
    We claim that $\leaf(\bx) \subset \cell$.
    Suppose not, then there exists a coordinate direction $i$ in which $\leaf(\bx)$ exceeds $\cell$.
    By reordering if necessary, we may thus assume that $(\bx^{-i},\xi_{i,j_i}+1) \in \leaf(\bx)$. 
    % there exists a point $(\bx^{-1},z)$ such that $z > \xi_{1,j_1}$ but $(\bx^{-1},z) \in \leaf(\bx)$.
    For any $f \in \pem$, we may write $f = a_0\indicator_{\leaf(\bx)} + \sum_{l=1}^L a_l \indicator_{\leaf_l}$, where $\leaf_1,\leaf_2,\ldots,\leaf_L$ are leaves in $\tree$.
    We then have
    \begin{equation} \nonumber
        f(\bx^{-i},\xi_{i,j_i}+1) = a_0 = f(\bx^{-i},\xi_{i,j_i}),
    \end{equation}
    which contradicts our assumption that $\pem$ has full coverage of the split at $(i,\xi_{i,j_i})$.
    We thus have
    \begin{equation} \nonumber
        \cell = \bigcup_{\bx \in \cell} \leaf(\bx) = \bigcup_{\leaf_l \subset \cell}\leaf_l.
    \end{equation}
    Since the right hand side is union over a collection of disjoint sets, we have $\indicator_\cell = \sum_{\leaf_l \subset \cell} \indicator_{\leaf_l} \in \pem$.
\end{proof}

\begin{lemma}[Sufficient condition for lack of coverage]
    \label{lem:lack_of_coverage}
    Let $\tree$ be a tree structure.
    Consider a split $(i,t)$, and let $\node$ be any node in $\tree$.
    Suppose the following hold:
    \begin{longlist}
        \item $(\bx^{-i},t) \in \node$ for some $\bx^{-i} \in \braces{1,2\ldots,\ncats}^{\nfeats-1}$;
        \item No ancestor or descendant of $\node$, including $\node$ itself, uses the splitting rule $(i,t)$;
    \end{longlist}
    Then if we let $\node^{-i}$ denote the projection of $\node$ onto all but the $i$-th coordinate, we have
    \begin{equation} \nonumber
        \coverage(i,t;\mapone(\tree)) \cap \node^{-i} = \emptyset.
    \end{equation}
\end{lemma}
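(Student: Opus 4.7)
The plan is to translate the conclusion into a statement about leaves of $\tree$. Recall that $\mapone(\tree)$ is spanned by indicators of the leaves of $\tree$, so any $f \in \mapone(\tree)$ is constant on each leaf. Therefore $(\bx^{-i},t)$ is not a jump location for any $f \in \mapone(\tree)$ with respect to feature $i$ precisely when the two points $(\bx^{-i},t)$ and $(\bx^{-i},t+1)$ lie in the same leaf of $\tree$. Thus, to prove the lemma, it suffices to show: for every $\bx^{-i} \in \node^{-i}$, the two points $(\bx^{-i},t)$ and $(\bx^{-i},t+1)$ fall into the same leaf of $\tree$.

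The proof then has two stages. First, I will argue that both points belong to $\node$. Because $\tree$ is axis-aligned, $\node$ is a rectangular cell of the form $\node = \node^{-i} \times (a_i,b_i]$ for some $a_i,b_i$. By hypothesis there is some $\tilde\bx^{-i}$ with $(\tilde\bx^{-i},t) \in \node$, which forces $a_i < t \leq b_i$; so $(\bx^{-i},t) \in \node$ for every $\bx^{-i}\in \node^{-i}$. For the companion point $(\bx^{-i},t+1)$, the only way it could fail to lie in $\node$ is if $t = b_i$. Since $\tree$ has a finite depth, $b_i$ is either the maximum categorical value (in which case $t+1$ is outside the covariate space and the jump is vacuous) or else $b_i$ is imposed by an ancestor split of the form $(i,b_i) = (i,t)$. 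The latter is excluded by hypothesis, so $(\bx^{-i},t+1) \in \node$ as well.

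Second, I will argue that starting from $\node$ and descending to the leaves, both points go the same way at every internal node encountered. The key observation is that for any split on some coordinate $j$ at threshold $t'$, the two points $(\bx^{-i},t)$ and $(\bx^{-i},t+1)$ go to different children only if $j = i$ and $t \leq t' < t+1$, i.e.\ $t' = t$ (using that thresholds are integer-valued). By the no-descendant hypothesis, no node on the descent from $\node$ uses splitting rule $(i,t)$, so the two points traverse identical paths and end up in the same leaf. Combining stages one and two completes the argument.

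I expect no serious obstacle; the only slightly delicate point is handling the boundary case $t = b_i$ in the first stage, which is cleanly dispatched by the ancestor hypothesis together with the observation that if $t = \ncats$ then there is nothing to check because $(\bx^{-i},t+1)$ lies outside $\xspace$. Everything else reduces to the elementary fact that integer thresholds can separate only consecutive integers that straddle them.
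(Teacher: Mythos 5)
Your proof is correct and follows essentially the same route as the paper's: both reduce the claim to showing that for every $\bz^{-i}\in\node^{-i}$ the two points $(\bz^{-i},t)$ and $(\bz^{-i},t+1)$ land in the same leaf, and both conclude from this that every $f\in\mapone(\tree)$, being constant on leaves, agrees at the two points. The one small thing you do more carefully than the paper's version is to explicitly split off the edge case $t=\ncats$ (where $(\bz^{-i},t+1)$ leaves the covariate space) and note that the jump condition is vacuously false there; the paper's proof asserts directly that $(\bz^{-i},t+1)$ is in the same leaf as $(\bz^{-i},t)$ without separating that case, which is slightly terser but relies on the same ancestor-split argument you give. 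The added clarity is welcome, but it is a matter of exposition rather than substance.
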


\begin{proof}
    Let $\bz^{-i} \in \node^{-i}$ be any point.
    By the first assumption, $t$ is within the bounds of $\node$ along direction $i$, so $(\bz^{-i},t) \in \node$.
    Let $\leaf_0$ be the leaf node containing $(\bz^{-i},t)$.
    By the second assumption, $(\bz^{-i},t+1) \in \leaf_0$, otherwise some parent of $\leaf$ would have made the split $(i,t)$.
    To show that $\bz^{-i} \notin \coverage(i,t;\mapone(\tree))$, it suffices to show that $f(\bz^{-i},t+1) = f(\bz^{-i},t)$ for all $f \in \mapone(\tree)$.
    We may write $f = \sum_{j=0}^k a_j \indicator_{\leaf_j}$ where $\leaf_1,\leaf_2,\ldots,\leaf_j$ are other leaves in $\tree$.
    We then have $f(\bz^{-i},t+1) = a_0 = f(\bz^{-1},t)$ as we wanted.
\end{proof}

\begin{proposition}[Dimension of additive functions]
    \label{prop:additive_dim}
    Suppose $f^* \in \pem$, where $f^*(\bx) = f_1(x_1) + f_2(x_2) + \cdots + f_m(x_{m})$ for some univariate functions $f_1,f_2,\ldots,f_{m}$.
    Then for any $l < m$ we have $\dim_l(f^*) > \dim_m(f^*)$, while for any $l \geq m$, we have
    \begin{equation}
        \dim_l(f^*) = \dim_m(f^*) = \sum_{i=1}^m \dim_1(f_i) - m + 1.
    \end{equation}
\end{proposition}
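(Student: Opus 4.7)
By Lemma \ref{lem:invariance_of_dim} I may take $\nu$ to be the uniform product measure, and without loss of generality each $f_i$ is non-constant with number of knots $q_i \geq 1$. Writing $Q := 1 + \sum_{i=1}^m q_i$, the plan is to establish (a) the upper bound $\dim_l(f^*) \leq Q$ whenever $l \geq m$; (b) the matching lower bound $\dim_l(f^*) \geq Q$ for every $l$ (combined with (a) this gives the equality claim); and (c) strengthen (b) to a strict inequality when $l < m$.

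For (a), I would use Lemma \ref{lem:additive_existence_of_informative_split} to build, for each $i \leq m$, a single-feature tree $\tree_i$ that splits only on $x_i$ at the $q_i$ knots of $f_i$, so that $f_i \in \mapone(\tree_i)$ and $\df(\tree_i) = q_i + 1$. I then pad with $l - m$ trivial trees. Since the non-trivial trees split on pairwise disjoint features, Lemma \ref{lem:independence_of_decision_stumps} makes their local decision stumps orthogonal, yielding $\df(\tse) = 1 + \sum_i q_i = Q$ with $f^* = \sum_i f_i \in \mapone(\tse)$.

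For (b), since $\dim_l(f^*)$ is non-increasing in $l$ (any TSE can be padded with a trivial tree without changing $\mapone$), it suffices to prove the lower bound at $l = m$ and invoke monotonicity for smaller $l$. Let $\tse$ be any $m$-tree TSE with $f^* \in \mapone(\tse)$, and let $I_j$ denote the set of features split on by $\tree_j$. Lemma \ref{lem:zero_bias_coverage} forces $\mapone(\tse)$ to have full coverage of every knot split $(i, \xi_{i,k})$ for $i \leq m$, and Lemma \ref{lem:lack_of_coverage} then forces each such feature to appear in some $I_j$. In the clean case where the $I_j \cap \{1,\ldots,m\}$ are disjoint singletons (one feature per tree), Lemma \ref{lem:full_coverage_grid} applied to each tree gives $\df(\tree_j) \geq q_{i_j} + 1$, and the orthogonality identity $\df(\tse) = 1 + \sum_j (\df(\tree_j) - 1)$ (justified by Lemma \ref{lem:independence_of_decision_stumps}) assembles to $\df(\tse) \geq Q$.

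For (c), when $l < m$ the pigeonhole principle ensures that some $I_j$ contains at least two indices $i_1, i_2 \in \{1,\ldots,m\}$. In the disjoint case, no other tree splits on $i_1$ or $i_2$, so Lemma \ref{lem:lack_of_coverage} forces $\tree_j$ itself to provide full coverage of every knot on features in $I_j \cap \{1,\ldots,m\}$; Lemma \ref{lem:full_coverage_grid} then yields $\df(\tree_j) \geq \prod_{i \in I_j \cap \{1,\ldots,m\}}(q_i + 1)$, which combined with the elementary inequality $\prod_i (q_i + 1) - 1 > \sum_i q_i$ (valid for at least two positive $q_i$'s) and the orthogonality identity gives $\df(\tse) > Q$. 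The main obstacle I anticipate is handling the overlapping-feature case in both (b) and (c), where coverage of a single knot can be split across several trees and the clean orthogonality identity no longer applies; I plan to handle this either by an ANOVA-style decomposition onto the subspaces $V_I$ of functions depending only on features in $I$ (which are orthogonal under product $\nu$) and tracking the contribution of each tree to each $V_I$, or by a reduction argument that pushes shared splits into a single tree while showing $\df(\tse)$ is not decreased.
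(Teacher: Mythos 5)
Your upper-bound construction (padding $m$ single-feature trees with trivial trees and invoking Lemma~\ref{lem:independence_of_decision_stumps}) coincides with the paper's, and your overall skeleton---coverage constraints via Lemmas~\ref{lem:zero_bias_coverage} and~\ref{lem:lack_of_coverage}, pigeonhole for $l<m$---is the right shape. However, the gap you yourself flag in steps (b) and (c) is genuine and is exactly where the difficulty lies. In fact the hypothesis you would need is even stronger than the one you state: the identity $\df(\tse) = 1 + \sum_j(\df(\tree_j)-1)$ requires the \emph{full} feature sets $I_j$ to be pairwise disjoint, not merely $I_j\cap\{1,\dots,m\}$ (two trees sharing a split on some feature $k>m$ already breaks orthogonality and turns the identity into a one-sided inequality in the wrong direction for a lower bound). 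Neither the ANOVA-decomposition plan nor the split-relocation reduction is carried out, and neither is obviously routine, so the proposal is incomplete.

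The paper sidesteps the case analysis entirely with an incremental argument that you should adopt. Take an arbitrary $l$-tree TSE $\tse$ with $f^* \in \mapone(\tse)$ and construct it from the trivial ensemble via a sequence of grow moves. At the \emph{first} time any required knot split $(i,\xi_{i,j})$ (with $i\leq m$, $\xi_{i,j}$ a knot of $f_i$) is introduced, Lemma~\ref{lem:lack_of_coverage} applied to each tree's root shows the current PEM has zero coverage of $(i,\xi_{i,j})$; the newly added local decision stump therefore has a jump where no existing function does, so $\df$ increases by exactly one at that step. Summing over the $\sum_i(\dim_1(f_i)-1)$ required knot splits gives the lower bound with no assumption whatsoever on how splits are distributed across trees or features. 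For $l<m$ the paper extracts the strict extra unit of $\df$ with a cylinder argument rather than the $\prod_i(q_i+1)$ arithmetic you propose: pigeonhole places two required features $i\neq j$ on the same tree; if the TSE contained only a single local decision stump splitting on each of the relevant knots $(i,s)$ and $(j,t)$, full coverage would force each stump's node to be unrestricted in every coordinate other than its own splitting feature, but two such ``cylinder'' nodes cannot coexist on one tree because the root split restricts at least one of them in a second direction. Hence some knot has $\geq 2$ linearly independent stumps, which is the extra degree of freedom. Replacing your per-tree bookkeeping with this growing-sequence argument closes the gap; as written, the proposal does not prove the proposition.
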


\begin{proof}
    Let $\pem=\mapone(\tse)$ for some $\tse$ with $l$ trees, and suppose $f^*  \in \pem$.
    For $i = 1,2, \ldots,m$, let $(i,\xi_{i,1}), (i,\xi_{i,2}),\ldots,~$ $(i,\xi_{i,q_i-1})$ denote the knots of $f_i$, where $q_i = \dim_1(f_i)$
    We claim that $\tse$ must contain a node with splitting rule $(i,\xi_{i,j})$ for every choice of $i$ and $j$.
    If not, then $\pem$ does not cover $(i,\xi_{i,j})$, contradicting Lemma \ref{lem:zero_bias_coverage}.
    
    Construct $\tse$ via a sequence of grow moves in arbitrary order, thereby deriving a sequence $\emptytse = \tse^0, \tse^1,\ldots,\tse^k$.
    Consider the first time $t_{i,j}$ for which a split using rule $(i,\xi_{i,j})$ is added to the TSE.
    Let $\psi_{i,j}$ denote the local decision stump associated with the split.
    % Collect these for all $1 \leq i \leq m$, $1 \leq j \leq q_i$ and re-index to obtain the sequence $t_1 < t_2 < \cdots < t_k$.
    % For each $t_l$, let $\psi_{t_l}$ denote the local decision stump associated to the split.
    Since $\mapone(\tse^{t_{i,j}-1})$ has zero coverage of $(i,\xi_{i,j})$, we have $\df(\tse^{t_{i,j}}) = \df(\tse^{t_{i,j}-1})+1$.
    Adding this up over all splits and adding the constant function gives the lower bound $\df(\pem) \geq \sum_{i=1}^m q_i - m+1$.
    By putting all splits on feature $i$ on the $i$-th tree for $i=1,2,\ldots,m$, we see that this lower bound is achievable whenever $l \geq m$, thereby giving
    \begin{equation} \nonumber
        \dim_l(f^*) = \sum_{i=1}^m q_i - m + 1 = \dim_m(f^*).
    \end{equation}

    When $l < m$, then by the pigeonhole principle, there exists splits on different features $(i,s)$ and $(j,t)$ that occur on the same tree.
    We claim that this implies that either there exists two linearly independent local decision stumps $\phi$ and $\psi$ splitting on $(i,s)$ or the same applies to $(j,t)$.
    Suppose not, then the unique local decision stump splitting on $(i,s)$ must not depend on any other feature, while the same applies to that splitting on $(j,t)$.
    However, because nodes depend on the feature used in the root split, this cannot be simultaneously true if they both correspond to splits on the same tree.
    By repeating the argument above, we therefore get $\df(\tse) \geq \sum_{i=1}^m q_i - m + 2$.
    Since this holds for any $\tse$, we have
    \begin{equation} \nonumber
        \dim_l(f^*) \geq \sum_{i=1}^m q_i - m  + 2 > \dim_m(f^*)
    \end{equation}
    as we wanted.
\end{proof}

\section{Proof of Theorem \ref{thm:additive} Part 1}

\subsection{Set-up}
% We generalize our construction in Section \ref{sec:proof_sketch} to handle the general case $m \leq m'$.
% First, for convenience, denote $q_j = \dim_1(f_j)$ for $j =1,2,\ldots,m'$.
For $i=1,2,\ldots,m'$, denote $q_i = \dim_1(f_i)$, and let $0=\xi_{i,0} < \xi_{i,1} < \cdots < \xi_{i,q_{i}} = b$ denote the \emph{knots} of $f_j$, i.e. the values for which $f_i(\xi_{i,j}) \neq f_i(\xi_{i,j}+1)$, together with the endpoints.\footnote{ $\dim_1(f_i)$ is simply the number of constant pieces of $f_i$ or alternatively, one larger than the number of knots of $f_i$.}
Without loss of generality, assume that $f_1,\ldots,f_{m'}$ are ordered in descending order of their 1-ensemble dimension, i.e. $q_1 \geq q_2 \geq \cdots \geq q_{m'}$.

We now define $\tsebad$ and a ``bad set'' $\setone$.
To this end, we define a collection of partition models $\pem_1,\pem_2,\ldots,\pem_m$ (spans of indicators in a single partition) as follows.
First, for $i=3,4,\ldots,m-1$, $j=1,2,\ldots,q_i$, define the cells $\leaf_{i,j} \coloneqq \braces{\bx \colon \xi_{i,{j-1}} < x_i \leq \xi_{i,j}}$,
and set $\pem_i = \linspan\paren*{\braces*{\indicator_{\leaf_{i,j}} \colon j=1,2,\ldots,q_i}}$, i.e. for each $i$, $\pem_i$ contains splits only on feature $i$, and only at the knots of $f_i$. 
This also implies that $f_i \in \pem_i$. 
Next, for the remaining component functions, $i=m,m+1,\ldots,m'$, $j_i=1,2,\ldots,q_i$, define the cells
\begin{equation}
    \label{eq:leaves_for_product_tree}
    \leaf_{m,j_m,j_{m+1},\ldots,j_{m'}} \coloneqq \braces*{\bx \colon \xi_{i,j_{i-1}} < x_i \leq \xi_{i,j_{i}}~\text{for}~i=m,m+1,\ldots,m'},
\end{equation}
and set $\pem_m$ to be the span of their indicators.
Observe that $\pem_m$ comprises a grid contains splits only on features $m,{m+1},\ldots,{m'}$, and only at the knots of 
$f_m,f_{m+1},\ldots,f_{m'}$ respectively.
This also implies $f_m,f_{m+1},\ldots,f_{m'} \in \pem_m$. 

Finally, to introduce inefficiency, we define each of $\pem_1$ and $\pem_2$ to have splits on both features 1 and 2.
This construction is fairly involved and will be detailed in full in the next section.
For now, it suffices to assume that $f_1 + f_2 \in \pem_1 + \pem_2$, which implies that $f^* \in \pem_1 + \pem_2 + \cdots + \pem_m$.
% Specifically, set $i_k = k$ for $0,1,q_1-2$ and $i_{q_1-1} = q_1$.
% Also set $j_0 = 0, j_1 = q_2-1, j_2 = q_2$.
% For $k=1,2,\ldots,q_1-1$ and $l=1,2$, define the cell
% \begin{equation}
%     \leaf_{1,k,l} \coloneqq \braces*{\bx \colon \xi_{1,i_{k-1}} < x_1 \leq \xi_{1,i_{k}}~\text{and}~\xi_{2,j_{l-1}} < x_2 \leq \xi_{2,j_{l}}},
%     % \braces*{\xi_{i_j} < x_1 \leq \xi_{i_{j+1}}} \times \braces*{\zeta_{k_l} < x_2 \leq \zeta_{k_{l+1}}} \times \braces*{1,2,\ldots,\ncats}^{\nfeats-2}, \quad j = 0,1,\ldots,p, k = 0, 1, 2.
% \end{equation}
% and set $\pem_1$ to be the span of their indicators.
% Define $\pem_2$ analogously, but with the roles of features 1 and 2 interchanged.
% By Lemma \ref{lem:additive_model_f1f2}, $f_1, f_2 \in \pem_1+\pem_2$, which implies that $f^* \in \pem_1 + \pem_2 + \cdots + \pem_m$.
Define $\setone$ via
\begin{equation} \nonumber
    \setone \coloneqq \braces*{(\tree_1,\tree_2,\ldots,\tree_m) \colon \mapone(\tree_i) = \pem_i ~\text{for}~i=1,2,\ldots,m}.
\end{equation}
It is clear that for any $\tse \in \setone$, we have $\mapone(\tse) = \pem_1 + \pem_2 + \cdots + \pem_m$, so that $\setone$ comprises a collection of TSEs with zero bias.
We will set $\tsebad$ to be a particular element of $\setone$.
We set $\settwo$ to be the outer boundary of $\setone$.

\subsection{Hitting precedence probability lower bound}
\textit{Step 1: Construction of path.}
We construct a path in $\tsespace$ comprising $\emptytse = \tse^0,\tse^1,\ldots,\tse^s = \tsebad$ such that $\tse^{i+1}$ is obtained from $\tse^i$ via a ``grow'' move for $i=0,1,\ldots,s-1$.
To do this, we first apply Lemma \ref{lem:additive_existence_of_informative_split} to obtain, for each feature $j=1,2,\ldots,m'$, a sequence of recursive splits $\psi_{j,1},\psi_{j,2},\ldots,\psi_{j,q_j-1}$ such that
\begin{longlist}
    \item $f_j \in \linspan(\psi_{j,1},\psi_{j,2},\ldots,\psi_{j,q_j-1})$;
    \item $\paren*{\int f_j \psi_{j,l} d\nu_j}^2 > 0$ for $l=1,2,\ldots,q_j-1$;
    \item $\psi_{j,l}$ splits on a knot of $f_j$ for $l=1,2,\ldots,q_j-1$.
\end{longlist}
Note that for convenience, we have identified the splits with their associated local decision stumps.
We break up the path into $m$ segments, $0 = s_0 < s_1 < \cdots < s_m = k$, with the $j$-th segment comprising $\tse^{s_{j-1}+1},\tse^{s_{j-1}+2}, \ldots, \tse^{s_j}$, possessing the following desired properties:
\begin{longlist}
    \item During this segment, the $j$-th tree is grown from the trivial tree $\emptytree$ to its final state $\tree_j^*$, while no other trees are modified;
    \item $\mapone(\tree_j^*) = \pem_j$;
    \item $\bias(\tse^{i-1};f^*) > \bias(\tse^i;f^*)$ for $i=s_{j-1}+1,s_{j-1}+2,\ldots,s_{j}$.
\end{longlist}

% We now show that there is a choice of such a sequence such that $\mapone(\tree_j^*) = \pem_j$ and
% \begin{equation}
%     \bias(\tse^{i-1};f^*) > \bias(\tse^i;f^*)
% \end{equation}
% for $i=s_{j-1}+1,s_{j-1}+2,\ldots,s_{j}$.

For $j=3,4,\ldots,m-1$, the splits $\psi_{j,1},\psi_{j,2},\ldots,\psi_{j,q_j-1}$ immediately yield a sequence of trees
% we simply use Lemma \ref{lem:additive_existence_of_informative_split} iteratively on leaf nodes to get a sequence of trees 
$\emptytree = \tree_j^0, \tree_j^1,\ldots,\tree_j^{q_j-1} = \tree_j^*$ such that each $\tree_j^l$ is obtained from $\tree^{l-1}$ via adding the split $\psi_{j,l}$.
% a split on a knot of $f_j$ and so that the local decision stump associated to the split, $\psi_i$, satisfies $\paren*{\int \psi_i f_j d\nu_j }^2 > 0$.
Next, assuming correctness of the construction up to $\tse^{s_{j-1}}$, $\psi_{j,l}$ is orthogonal to $\mapone(\tse^{s_{j-1}})$ and also to the previous splits $\psi_{j,1},\psi_{j,2},\ldots,\psi_{j,l-1}$, by Lemma \ref{lem:independence_of_decision_stumps}.
This allows us to apply Lemma \ref{lem:additive_dec_in_bias} to get
\begin{equation} \nonumber
    \bias(\tse^{s_{j-1}+i};f^*) = \bias(\tse^{s_{j-1}+i-1};f^*) - \paren*{\int \psi_i f^* d\nu}^2.
\end{equation}
We then notice that by the independence of different features,
\begin{equation} \nonumber
    \paren*{\int \psi_i f^* d\nu}^2 = \paren*{\int \psi_i f_j d\nu}^2 > 0.
\end{equation}

Let us construct $\tree^*_m$ inductively.
First, fix $0 \leq p < m'-m$,
% , and use Lemma \ref{lem:additive_existence_of_informative_split} as before to obtain a sequence of recursive splits on feature $m+p$, $\psi_1,\psi_2,\ldots,\psi_{q_{m+p}}$, such that $\paren*{\int \psi_i f_{m+p} d\nu_j }^2 > 0$ for $i=1,2,\ldots,q_{m+p}$.
% Note that for convenience, we have identified splits with their associated local decision stumps.
and assume that we have constructed a tree $\tree_{m,p}^*$ such that the leaves of $\tree_{m,p}^*$ comprise the collection (see equation \eqref{eq:leaves_for_product_tree}):
\begin{equation} \nonumber
    \braces*{\leaf_{m,j_m,j_{m+1},\ldots,j_{m+p}} \colon 1 \leq j_{m+i} \leq q_{m+i}, i=0,1,2,\ldots,p}
\end{equation}
We now construct a sequence $\tree_{m,p}^* = \tree_{m,p}^0, \tree_{m,p}^1,\ldots,\tree_{m,p}^r = \tree_{m,p+1}^*$ by looping over the leaves, and for each leaf $\leaf = \leaf_{m,j_m,j_{m+1},\ldots,j_{m+p}}$, iteratively adding the splits $\tilde\psi_1,\tilde\psi_2,\ldots,\tilde\psi_{q_{m+p+1}-1}$, where $\tilde\psi_l = \nu(\leaf)^{-1/2}\psi_{m+p+1,l} \indicator_{\leaf}$ for $l=1,2,\ldots,q_{m+p+1}-1$.
Note that these are orthonormal.
When adding the split $\tilde\psi_l$, the decrease in squared bias can thus be computed as
\begin{equation} \label{eq:bias_dec_product_tree}
    \bias(\tree_{m,p}^{i-1};f^*) - \bias(\tree_{m,p}^{i};f^*) = \paren*{\int \tilde\psi_l f^* d\nu}^2 = \nu(\leaf)\paren*{\int \psi_{m+p+1,l} f_j d\nu}^2 > 0.
\end{equation}
We lift the sequence of tree structures to a sequence of TSEs, using Lemma \ref{lem:independence_of_decision_stumps} and Lemma \ref{lem:additive_dec_in_bias} as before to translate \eqref{eq:bias_dec_product_tree} to be in terms of the sequence of TSEs.

It remains to define $\pem_1$ and $\pem_2$ and construct $\tree_1^*$ and $\tree_2^*$.
Let $a_i$ be the index of the knot $\xi_{i,a_i}$ forming the threshold for $\psi_{i,q_i}$ for $i=1,2$.
For $k=0,1,\ldots,q_1-1$, define $b_k = k$ for $k < a_i$ and $b_k = k+1$ for $k \geq a_i$.
Similarly, for $l=0,1,\ldots,q_{2}-1$,define $c_l = l$ for $l < a_2$ and $c_l = l+1$ for $k \geq a_2$.
Set $\pem_1$ to be the span of indicators of the cells
\begin{equation} \nonumber
    \leaf_{1,k,l} \coloneqq \braces*{\bx \colon \xi_{1,b_{k-1}} < x_1 \leq \xi_{1,b_k}~\text{and}~ \xi_{2,c_{l-1}} < x_2 \leq \xi_{2,c_l}}
\end{equation}
for $k=1,2,\ldots,q_1-1$ and $l=1,2$.
Next, define $d_0=0$, $d_1 = a_1$, $d_2 = q_1$, $e_1 = 0$, $e_1 = a_2$, $e_2 = q_2$.
We set $\pem_2$ to be the span of indicators of the cells
\begin{equation} \nonumber
    \leaf_{2,k,l} \coloneqq \braces*{\bx \colon \xi_{1,d_{k-1}} < x_1 \leq \xi_{1,d_k}~\text{and}~ \xi_{2,e_{l-1}} < x_2 \leq \xi_{2,e_l}}
\end{equation}
for $k, l = 1,2$.
% Using Lemma \ref{lem:additive_model_f1f2}, we have $f_1 + f_2 \in \pem_1 + \pem_2$ as desired.
We construct $\tree_1^*$ similarly to $\tree_m^*$, through recursive partitioning on $x_1$ and $x_2$, but without making the final split on both features.
Using the same argument as before, we obtain the desired sequence of trees.
Finally, to construct $\tree_2^*$, we simply make the omitted splits on the new tree.
More precisely, we define the splits
\begin{equation} \nonumber
    \psi_1 = \frac{\nu\braces*{x_1 > \xi_{1,a_1}}\indicator\braces*{x_1 \leq \xi_{1,a_1}} - \nu\braces*{x_1 \leq \xi_{1,a_1}}\indicator\braces*{x_1 > \xi_{1,a_1}}}{\sqrt{\nu\braces*{x_1 \leq \xi_{1,a_1}}\nu\braces*{x_1 > \xi_{1,a_1}}}}
\end{equation}
and $\psi_2 = \nu\braces*{x_1 \leq \xi_{1,a_1}}^{1/2}\phi\indicator\braces*{x \leq \xi_{1,a_1}}$, $\psi_3 = \nu\braces*{x_1 > \xi_{1,a_1}}^{1/2}\phi\indicator\braces*{x > \xi_{1,a_1}}$, where
\begin{equation} \nonumber
    \phi = \frac{\nu\braces*{x_2 > \xi_{2,a_2}}\indicator\braces*{x_2 \leq \xi_{2,a_2}} - \nu\braces*{x_2 \leq \xi_{2,a_2}}\indicator\braces*{x_2 > \xi_{2,a_2}}}{\sqrt{\nu\braces*{x_2 \leq \xi_{2,a_2}}\nu\braces*{x_2 > \xi_{2,a_2}}}}.
\end{equation}
It is clear that these, together with the constant function, span $\pem_2$.
Furthermore, it is easy to see that we have
\begin{equation} \nonumber
    \frac{\psi_1 - \Pi_{\pem_1}[\psi_1]}{\norm{\psi_1 - \Pi_{\pem_1}[\psi_1]}_{L^2(\nu)}} = \psi_{1,q_1},
\end{equation}
% while
\begin{equation} \nonumber
    \frac{\psi_2 - \Pi_{\pem_1\oplus\linspan(\psi_1)}[\psi_2]}{\norm{\psi_2 - \Pi_{\pem_1\oplus\linspan(\psi_1)}[\psi_2]}_{L^2(\nu)}} = \nu\braces*{x_1 \leq \xi_{1,a_1}}^{-1/2}\psi_{2,q_2}\indicator\braces*{x \leq \xi_{1,a_1}},
\end{equation}
\begin{equation} \nonumber
    \frac{\psi_3 - \Pi_{\pem_1\oplus\linspan(\psi_1,\psi_2)}[\psi_3]}{\norm{\psi_3 - \Pi_{\pem_1\oplus\linspan(\psi_1,\psi_2)}[\psi_3]}_{L^2(\nu)}} = \nu\braces*{x_1 > \xi_{1,a_1}}^{-1/2}\psi_{2,q_2}\indicator\braces*{x > \xi_{1,a_1}}.
\end{equation}
By construction, we have
\begin{equation} \nonumber
    \paren*{\int \psi_{1,q_1}f_1 d\nu_1}^2, \paren*{\int \psi_{2,q_2}f_2 d\nu_2}^2 > 0.
\end{equation}
Applying Lemma \ref{lem:additive_dec_in_bias} then shows that the desired property for the 2nd segment of TSEs is satisfied.
It is clear from the construction that $\psi_{j1},\psi_{j2},\ldots,\psi_{jq_{j-1}} \in \pem_1$ and that $\psi_{jq_j} \in \pem_2$ for $j=1,2$.
This implies that $f_1 + f_2 \in \pem_1 + \pem_2$ as desired.

\textit{Step 2: Disjointness from optimal set.}
We have already proved, in this construction, that every ``grow'' move adds a split that decreases bias, and therefore must be linearly independent from the existing PEM.
In other words, we have $\df(\tse^{i+1}) = \df(\tse^i) + 1$ for $i=0,1,\ldots,{k-1}$.
Expanding this gives $\df(\tsebad) = k + 1 = \sum_{j=1}^m (s_j - s_{j-1}) + 1$.
Since each split increments the number of leaf nodes by one, each $s_j - s_{j-1} + 1$ is equal to the number of cells in $\pem_j$, which we now compute as follows.
For $j=3,4,\ldots,m-1$, we have $s_j - s_{j-1} = \dim_1(f_j) - 1$,
while for $j=m$, we have $s_m - s_{m-1} = \prod_{l=m}^{m'} \dim_1(f_l) - 1$.
Likewise, we have $s_1 = \paren*{\dim_1(f_1)- 1}\paren*{\dim_1(f_2)-1} - 1$ and $s_2 - s_1 = 3$.
Putting these together gives
\begin{equation} \nonumber
    \df(\tsebad) = \sum_{j=3}^{m-1} \dim_1(f_j) + \prod_{l=m}^{m'} \dim_1(f_l) + \paren*{\dim_1(f_1)- 1}\paren*{\dim_1(f_2)-1} + 5 - m.
\end{equation}

Let us now show that this is suboptimal, by constructing a TSE $\tsegood$ with zero bias but fewer degrees of freedom.
To do so, we simply set $\tsegood = \paren*{\tree_1'\tree_2',\tree_3^*,\ldots,\tree_m^*}$, where $\tree_j^*$ has the same structure as in $\treebad$ for $j=3,4,\ldots,m$.
On the other hand, we define $\tree_1'$ using $\psi_{1,1},\psi_{1,2},\ldots,\psi_{1,q_1}$ and $\tree_2'$ using $\psi_{2,1},\psi_{2,2},\ldots,\psi_{2,q_2}$.
By assumption, we have $f_j \in \mapone(\tree_j')$ for $j=1,2$, which yields unbiasedness.
The degrees of freedom is given by
\begin{equation} \label{eq:tsegood_dof}
    \df(\tsegood) = \sum_{j=1}^{m-1} \dim_1(f_j) + \prod_{l=m}^{m'} \dim_1(f_l) - m + 1.
\end{equation}
Taking the difference gives
\begin{align}
    \df(\tsebad) - \df(\tsegood) & = \paren*{\dim_1(f_1)- 1}\paren*{\dim_1(f_2)-1} - \dim_1(f_1) - \dim_1(f_2) + 4\nn
    & = \paren*{\dim_1(f_1)- 2}\paren*{\dim_1(f_2)-2} + 1, \nonumber
\end{align}
which is strictly larger than 0 if $\dim_1(f_1),\dim_1(f_2) \geq 2$.
This condition is satisfied since we have assumed all component functions are non-constant.
Combining this with the fact that $\bias(\tse^i) > 0$ for all $i < s$, we therefore have $\tse^i \notin \opt(f^*,k)$ for all $i=0,1,\ldots,s$, with $k = \paren*{\dim_1(f_1)- 2}\paren*{\dim_1(f_2)-2}$.

\textit{Step 3: Conclusion.}
Using Proposition \ref{prop:concentration_bic_diff_appendix} and Proposition \ref{prop:lml_bic_appendix}, for $n$ large enough, there is a $1-\delta/2$ event over which
\begin{equation} \label{eq:additive_hpp_lower_bound_p1}
    \log p(\tse^i|\by) - \log p(\tse^{i-1}|\by) = \frac{n}{2\sigma^2}\paren*{\bias(\tse^{i-1};f^*) - \bias(\tse^i;f^*)} + O\paren*{\sqrt{n\log(s/\delta)}}.
\end{equation}
Conditioning on this set, we get $\frac{p(\tse^i|\by)}{p(\tse^{i-1}|\by)} = \Omega(1)$ for $i=1,2,\ldots,s$ for all $n$ large enough.
We may then repeat the calculations in the proof of Theorem \ref{thm:pure_interaction}, specifically equations \eqref{eq:hpp_eq1} and \eqref{eq:pure_interaction_hpp}, to get
\begin{equation} \nonumber
    P\braces*{\tau_{\tsebad} < \tau_{\opt(f^*,k)}} \geq P\braces*{\tse_i = \tse^i ~\text{for}~ i=1,2,\ldots,s} = \Omega(1).
\end{equation}

% Specifically, set $i_k = k$ for $0,1,q_1-2$ and $i_{q_1-1} = q_1$.
% Also set $j_0 = 0, j_1 = q_2-1, j_2 = q_2$.
% For $k=1,2,\ldots,q_1-1$ and $l=1,2$, define the cell
% \begin{equation}
%     \leaf_{1,k,l} \coloneqq \braces*{\bx \colon \xi_{1,i_{k-1}} < x_1 \leq \xi_{1,i_{k}}~\text{and}~\xi_{2,j_{l-1}} < x_2 \leq \xi_{2,j_{l}}},
%     % \braces*{\xi_{i_j} < x_1 \leq \xi_{i_{j+1}}} \times \braces*{\zeta_{k_l} < x_2 \leq \zeta_{k_{l+1}}} \times \braces*{1,2,\ldots,\ncats}^{\nfeats-2}, \quad j = 0,1,\ldots,p, k = 0, 1, 2.
% \end{equation}
% and set $\pem_1$ to be the span of their indicators.
% Define $\pem_2$ analogously, but with the roles of features 1 and 2 interchanged.
% By Lemma \ref{lem:additive_model_f1f2}, $f_1, f_2 \in \pem_1+\pem_2$, which implies that $f^* \in \pem_1 + \pem_2 + \cdots + \pem_m$.

% \begin{lemma}
%     For $j=1,2,\ldots,\ntrees$, there is a sequence of TSEs $\tse_{k_{j-1}+1},\tse_{k_{j-1}+2} + \cdots + \tse_{k_j}$ such that 
% \end{lemma}

\subsection{BIC lower bound}
Consider $\tse'= (\tree_1',\tree_2',\ldots,\tree_m') \in \settwo$.
By definition of $\settwo$, there exists $\tse = (\tree_1,\tree_2,\ldots,\tree_m) \in \setone$ such that $\tse'$ is obtained from $\tse$ via a ``grow'', ``prune'', ``change'', or ``swap'' move.
% Suppose $\bias(\tse';f^*) > 0$.
We now consider each type of move and show that either $\bias(\tse';f^*) > 0$ or $\df(\tse') > \df(\tse) = \df(\tsebad)$.
To prove this, we make use of a few key observations.
\begin{longlist}
    \item Since a move only affects one tree with index $i_0$, we have $\tree_i' = \tree_i$ and $\mapone(\tree_i') = \pem_i$ for all $i \neq i_0$;
    \item Every split $(i,\xi_{i,j})$ necessary for $\mapone(\tse')$ to be unbiased (see Lemma \ref{lem:zero_bias_coverage}) has full coverage in a single tree and has zero coverage in all other trees;
    \item Since no move is allowed to result in an empty leaf node, if $\node$ is an internal node in $\tree_{i_0}$ or $\tree_{i_0}'$ with a split $(i,\xi)$, no ancestor or descendant of $\node$ makes the same split $(i,\xi)$.
\end{longlist}
For convenience, we denote $\pem_{-i_0} = \pem_1 + \cdots \pem_{i_0-1} + \pem_{i_0+1} + \cdots + \pem_m$.

\textit{Case 1: ``grow'' move.}
% If a ``grow'' move is performed, 
Let $\psi$ denote the local decision stump associated with the new split, and let $\leaf$ denote the leaf that is split.
As shown earlier, we have $\psi \perp \pem_{i_0}$.
Next, for any boundary point $(\bx^{-j},\xi)$ of $\leaf$ in direction $j$ (that is not an a boundary point of the entire space $\xspace$), $\psi$ has a jump location at $(\bx^{-j},\xi)$ with respect to feature $j$.
On the other hand, by our construction, this means that $(j,\xi)$ is a split fully covered by $\pem_{i_0}$ and has zero coverage in $\pem_{-i_0}$.
By Lemma \ref{lem:zero_bias_coverage}, this means $\psi \notin \pem_{-i_0}$.
Together, this implies that $\psi \notin \mapone(\tse)$ and $\df(\tse') = \df(\tse) + 1$ as desired.

\textit{Case 2: ``prune'' move.}
% If a prune move is performed, 
% Let $\psi$ denote the local decision stump associated with the removed split, and 
Suppose the pruned split is $(k,t)$ and occurs on a leaf $\leaf$.
% and is with respect to a feature $k$ at threshold $\xi_{k,j}$.
Let $\leaf^{-k}$ denote the projection of the leaf onto all but the $k$-th coordinate.
Applying the third observation above together with Lemma \ref{lem:lack_of_coverage} gives
\begin{equation} \nonumber
    \coverage(k,t;\mapone(\tse'))\cap \leaf^{-k} = \coverage(k,t;\mapone(\tree_{i_0}')) \cap \leaf^{-k} = \emptyset.
\end{equation}
% where the second equality follows from applying a similar argument as in the previous case.
Lemma \ref{lem:zero_bias_coverage} then implies that $\bias(\tse';f^*) > 0$.

\textit{Case 3: ``change'' move.}
Suppose the ``changed'' split occurs on a node $\node$ and is with respect to a feature $k$ at threshold $\xi_{k,j}$.
% Let $\psi$ denote the local decision stump associated with the removed split, and suppose it splits a leaf $\leaf$ on a feature $k$ at threshold $\xi_{k,j}$.
Then since no descendant of $\node$ makes the same split, if we let $\node^{-k}$ denote the projection of $\node$ onto all but the $k$-th coordinate, then as before, we have
\begin{equation} \nonumber
    \coverage(k,t;\mapone(\tse'))\cap \node^{-k} = \coverage(k,t;\mapone(\tree_{i_0}')) \cap \node^{-k} = \emptyset,
\end{equation}
and $\bias(\tse';f^*) > 0$.

\textit{Case 4: ``swap'' move.}
The swap move can either be performed on a pair of parent-child nodes, or on a parent with both of its children, if both children have the same splitting rule.
In the latter case, $\mapone(\tree') = \mapone(\tree)$, contradicting our assumption that $\tse' \notin \setone$.
In the former case, let $\node$ denote the parent, and let $\node_L$ and $\node_R$ denote its two children.
Suppose without loss of generality that the splitting rules $(j,s)$ and $(k,t)$, of $\node$ and $\node_L$ respectively, are to be swapped.
Then by construction of $\tree_{i_0}$, $t$ must be a knot for $f_k$.
By the third observation, no ancestor of $\node_L$ uses the splitting rule $(k,t)$, which implies that a descendant of $\node_R$ must split on $(k,t)$.
However, this would mean that this swap move is not allowed, giving a contradiction.

Finally, using Proposition \ref{prop:concentration_bic_diff_appendix} and Proposition \ref{prop:lml_bic_appendix}, for $n$ large enough, there is a $1-\delta/2$ event over which
\begin{align} \label{eq:additive_bic_lower_bound}
    & \Delta\bic(\tse',\tse) \nn
    & \quad = \begin{cases}
        \frac{n}{\sigma^2}\bias(\tse';f^*) + O\paren*{\sqrt{n\log(|\settwo|/\delta)}} & \text{if}~ \bias(\tse';f^*) > 0 \\
        \log n\paren*{\df(\tse') - \df(\tsebad)} + O\paren*{\log(|\settwo|/\delta)} & \text{otherwise}.
    \end{cases}
\end{align}
Condition further on this event.

\subsection{Conclusion}
Applying Proposition \ref{prop:recipe} together with equations \eqref{eq:additive_hpp_lower_bound_p1} and \eqref{eq:additive_bic_lower_bound}, while taking $n$ large enough, we get a $1-2\delta$ probability event over which
\begin{equation} \nonumber
    E\braces*{\tau_{\opt(f^*,k)}} = \Omega\paren*{n^{1/2}},
\end{equation}
where $k = \paren*{\dim_1(f_1)- 2}\paren*{\dim_1(f_2)-2}$.

% \begin{lemma}
%     \label{lem:additive_model_f1f2}
%     We have $f_1,f_2 \in \pem_1 + \pem_2$.
% \end{lemma}

% \begin{proof}
%     Define the cells $\cell_k = \braces*{\bx \colon \xi_{1,k-1} < x_1 \leq \xi_{1,k}}$ for $k=1,2,\ldots,q_1$.
%     % $\leaf_{1,k,1}\cup\leaf_{1,k,2}$ for $k=1,2,\ldots,q_1-2$.
%     We have $f_1 = \sum_{k=1}^{q_1} a_k \indicator_{\cell_k}$ for some coefficients $a_1, a_2,\ldots,a_{q_1}$.
%     Next, observe that $\indicator_{\cell_k} = \indicator_{\leaf_{1,k,1}} + \indicator_{\leaf_{1,k,2}} \in \pem_1$ for $k=1,2,\ldots,q_1-2$,
%     while $\indicator_{\cell_{q_1}} = \sum_{l=1}^{q_2-1}\indicator_{\leaf_{2,2,l}} \in \pem_2$.
%     Finally, $\indicator_{\cell_{q_1-1}} = \indicator_{\leaf_{1,q_1-1,1}} + \indicator_{\leaf_{1,q_1-1,2}} - \indicator_{\cell_{q_1}} \in \pem_1 + \pem_2$.
% \end{proof}

\section{Proof of Theorem \ref{thm:additive} Part 2}
\label{sec:proof_of_additive}

\subsection{Set-up}
We first use the same definition of $\tsegood = (\tree_1,\tree_2,\ldots,\tree_\ntrees)$ as in the previous section (see the paragraph immediately preceding equation \eqref{eq:tsegood_dof}).
To define $\tsebad = (\tree_1^*,\tree_2^*,\ldots,\tree_\ntrees^*)$, we start with $\tsegood$ and simply swap the roles of $f_1$ and $f_{m'}$.
More precisely, we let $\tree_j^* = \tree_j$ for $j=2,3,\ldots,\ntrees-1$.
We define $\tree_1^*$ as comprising the local decision stumps $\phi_{\ntrees',1},\phi_{\ntrees',2},\ldots,\phi_{\ntrees',q_{\ntrees'}}$, which were defined in the proof of the hitting precedence probability lower bound in the previous section.
Define $\pem_j = \mapone(\tree_j^*)$ for $j=1,2,\ldots,\ntrees-1$.
We then define $\pem_\ntrees$ to be a grid on features $\braces*{1,\ntrees,\ntrees+1,\ldots,\ntrees'-1}$, or in other words, $\mapone(\tree_\ntrees^*)$ is the span of indicators of the cells
\begin{equation}
    % \label{eq:leaves_for_product_tree}
    \leaf_{m,j_m,j_{m+1},\ldots,j_{m'}} \coloneqq \braces*{\bx \colon \xi_{i,j_{i-1}} < x_i \leq \xi_{i,j_{i}}~\text{for}~i=1,m,m+1,\ldots,m'-1},
\end{equation}
as we vary $i=1, m, m+1,\ldots,m'-1$ and $j_i=1,2,\ldots,q_i$.
We construct $\tree_\ntrees^*$ such that $\mapone(\tree_\ntrees^*)$, in the manner described in the proof of the hitting precedence probability lower bound in the previous section.
Here, recall that $\dim_1(f_1) > \dim_1(f_\ntrees)$, which will introduce suboptimality into $\tsebad$.

Notice that $f^* \in \pem \coloneqq \pem_1 + \pem_2 + \cdots \pem_\ntrees$ via the same argument as in the previous section.
Next, we define the set $\setone$ via
\begin{equation}\nonumber
    \setone \coloneqq \braces*{\paren*{\tse_1',\tse_2',\ldots,\tse_m'} \colon \mapone(\tse_j) \supseteq \pem_j ~\text{for}~j=1,2,\ldots,\ntrees}.
\end{equation}
We define $\settwo$ to be the outer boundary of $\setone$.

\subsection{Hitting precedence probability lower bound}
This follows the proof of the hitting precedence probability lower bound in the previous section almost exactly. 
First, using the same construction, there is path in $\tsespace$ comprising $\emptytse = \tse^0,\tse^1,\ldots,\tse^s = \tsebad$ such that $\tse^{i+1}$ is obtained from $\tse^i$ via a ``grow'' move and
\begin{equation}\nonumber
    \bias(\tse^{i};f^*) > \bias(\tse^{i-1};f^*)
\end{equation}
for $i=0,1,\ldots,s-1$.

Next, we compute
\begin{equation}\nonumber
    \df(\tsebad) = \sum_{j=2}^{\ntrees-1} \dim_1(f_j) + \dim_1(f_{m'}) + \dim_1(f_1)\prod_{j=\ntrees}^{\ntrees'-1}\dim_1(f_j) - \ntrees + 1.
\end{equation}
Taking the difference between this and \eqref{eq:tsegood_dof} gives
\begin{align} \label{eq:dof_diff}
    \df(\tsebad) - \df(\tsegood) & = \dim_1(f_{m'}) - \dim_1(f_1) + \dim_1(f_1)\prod_{j=\ntrees}^{\ntrees'-1}\dim_1(f_j) - \prod_{j=\ntrees}^{\ntrees'}\dim_1(f_j) \nn
    & = \paren*{\dim_1(f_1) - \dim_1(f_{\ntrees'})}\paren*{\prod_{j=\ntrees}^{\ntrees'-1}\dim_1(f_j) - 1},
\end{align}
which is strictly larger than $k \coloneqq \dim_1(f_1) - \dim_1(f_{\ntrees'}) - 1$ if $\dim_1(f_j) \geq 2$ for $j=2,3,\ldots,m'-1$.
This condition is satisfied since we have assumed all component functions are non-constant.
Combining this with the fact that $\bias(\tse^i) > 0$ for all $ i < s$, we therefore have $\tse^i \notin \opt(f^*,k)$.
% where $k$ is the quantity on the right hand side of \eqref{eq:dof_diff}.

Finally, using Proposition \ref{prop:concentration_bic_diff_appendix} and Proposition \ref{prop:lml_bic_appendix}, for $n$ large enough, there is a $1-\delta/2$ event over which
\begin{equation} \label{eq:additive_hpp_lower_bound}
    \log p(\tse^i|\by) - \log p(\tse^{i-1}|\by) = \frac{n}{2\sigma^2}\paren*{\bias(\tse^{i-1};f^*) - \bias(\tse^i;f^*)} + O\paren*{\sqrt{n\log(s/\delta)}}.
\end{equation}
Conditioning on this set, we get $\frac{p(\tse^i|\by)}{p(\tse^{i-1}|\by)} = \Omega(1)$ for $i=1,2,\ldots,s$ for all $n$ large enough.
We may then repeat the calculations in the proof of Theorem \ref{thm:pure_interaction}, specifically equations \eqref{eq:hpp_eq1} and \eqref{eq:pure_interaction_hpp}, to get
\begin{equation} \label{eq:additive_hpp_lower_bound_p2}
    P\braces*{\tau_{\tsebad} < \tau_{\opt(f^*,k)}} \geq P\braces*{\tse_i = \tse^i ~\text{for}~ i=1,2,\ldots,s} = \Omega(1).
\end{equation}

\subsection{BIC lower bound}
Consider $\tse'= (\tree_1',\tree_2',\ldots,\tree_m') \in \settwo$.
By definition of $\settwo$, there exists $\tse = (\tree_1,\tree_2,\ldots,\tree_m) \in \setone$ such that $\tse'$ is obtained from $\tse$ via a ``prune'' move (a ``grow'' move will not break the defining constraint of $\setone$.)
% Suppose $\bias(\tse';f^*) > 0$.
We now show that either $\bias(\tse';f^*) > 0$ or $\df(\tse') \geq \df(\tsebad) + \min_{1 \leq i \leq \ntrees'} \dim_1(f_i) - 2$.

Let $\tree_{i_0}$ be the tree that is pruned, and supposed the prune split is $(j,t)$, occurring on a node $\node$.
Since $\mapone(\tree_{i_0}') \not\supseteq \pem_{i_0}$, $(j,t)$ must be on a feature split on in $\tree_{i_0}^*$ and on a knot $\xi_{j,k}$ for $f_j$.
Furthermore, because only ``grow'' and ``prune'' moves are allowed, and this is the first time $\mapone(\tree_{i_0}') \not\supseteq \pem_{i_0}$, $\tree_{i_0}^*$ must be a subtree of $\tree_{i_0}$, and $\node$ is an internal node of $\tree_{i_0}^*$.
Using Lemma \ref{lem:lack_of_coverage}, we have $\coverage(j,t;\mapone(\tree_{i_0}'))\cap\node^{-j} = \emptyset$.

Suppose that $\bias(\tse';f^*) = 0$, i.e. $f^* \in \mapone(\tse')$.
Let $I_1,I_2,\ldots,I_\ntrees$ be the subsets of feature indices split on in trees $\tree_1^*,\tree_2^*,\ldots,\tree_\ntrees^*$ respectively.
% Assume for now that $j=1$, i.e. that the pruned split occurs on $\tree_1$.
We claim that there exists some tree $\tree_i$, $i\neq i_0$ such that for all choices of coordinates $\bx_{I_i}$ in the index set $I_i$, there exists a choice of coordinates $\bz_{-I_i\cup\braces{j}}$ in the index set $\braces*{1,2,\ldots,\ncats}\backslash (I_i\cup\braces{j})$ such that $(\bx_{I_i},\bz_{-I_i\cup\braces{j}}) \in \coverage(j,t;\mapone(\tree_i))$.
% We claim that there exists some tree $\tree_i$, $i\neq i_0$ such that $\coverage(j,t;\tree_i)$ does not depend on features in $I_i$.
This claim is proved as Lemma \ref{lem:additive_p2_helper} below.
Assuming it for now, let $\leaf_1,\leaf_2,\ldots,\leaf_L$ be the leaves of $\tree_i^*$.
Consider one such leaf $\leaf_l$.
Since $\leaf_l$ does not depend on the features in $I_i$, we may pick $(\bx_{I_i},\bz_{-I_i}) \in \coverage(j,t;\mapone(\tree_i))$ such that $(\bx_{I_i},\bz_{-I_i}) \in \leaf_l$.
% Since $\coverage(j,t;\tree_i)$ does not depend on features in $I_i$, we have $\coverage(j,t;\tree_i) \cap \leaf_l^{-j} \neq \emptyset$.
% Furthermore, since $\leaf_l$ does not depend on feature $j$, for any $\bx^{-j} \in \coverage(j,t;\tree_i)\cap \leaf^{-j}$, we have $(\bx^{-j},t) \in \leaf_l$.
By Lemma \ref{lem:lack_of_coverage}, $\leaf_l$ in $\tree_i$ must contain a descendant that uses the splitting rule $(j,t)$.
In particular, $\leaf_l$ is split in $\tree_i$, and this split can be represented by a local decision stump $\psi_l$.
In this manner, we obtain $\psi_1,\psi_2,\ldots,\psi_L$.
Denote $\mathbb{U} = \linspan(\psi_1,\psi_2,\ldots,\psi_{L-1})$.
We claim that $\mathbb{U}\cap\pem = \emptyset$.
To see this, take any $f = \sum_{l=1}^{L-1} a_l \indicator_{\leaf_l} \in \mathbb{U}$ and assume that $f \neq 0$.
First note that $f\perp\pem_i$ by orthogonality of local decision stump features from a single tree.
Furthermore, $f$ has jump locations with respect to features in $I_i$.
But no function in $\pem_{-i}$ depends on features in $I_i$, which means that $f \notin \pem_{-i}$.
This gives $f \notin \pem$.

Let $\psi'$ denote the local decision stump associated to the pruned split, and let $\pem'$ denote its orthogonal complement in $\pem$.
We have $\pem' \subset \mapone(\tse')$ and have further shown that $\mathbb{U} \subset \mapone(\tse')$.
We therefore have
\begin{align}
    \df(\tse') & = \dim(\mapone(\tse')) \nn
    & \geq \dim(\pem') + \dim(\mathbb{U}) \nn
    & = \df(\tsebad) - 1 + L - 1 \nn
    & \geq \df(\tsebad) + \min_{1\leq i \leq m'}\dim_1(f_i) - 2.\nonumber
\end{align}
Here, the first inequality follows from the trivial intersection of $\pem'$ and $\mathbb{U}$.

Using Proposition \ref{prop:concentration_bic_diff_appendix} and Proposition \ref{prop:lml_bic_appendix}, there is a $1-\delta/2$ event over which
\begin{align} \label{eq:additive_bic_lower_bound_p2}
    & \Delta\bic(\tse',\tse) \nn
    & \quad = \begin{cases}
        \frac{n}{\sigma^2}\bias(\tse';f^*) + O\paren*{\sqrt{n\log(|\settwo|/\delta)}} & \text{if}~ \bias(\tse';f^*) > 0 \\
        \log n\paren*{\df(\tse') - \df(\tsebad)} + O\paren*{\sqrt{\log(|\settwo|/\delta)}} & \text{otherwise}.
    \end{cases}
\end{align}
Condition further on this event.

\subsection{Conclusion}
Applying Proposition \ref{prop:recipe} together with equations \eqref{eq:additive_hpp_lower_bound_p2} and \eqref{eq:additive_bic_lower_bound_p2}, while taking $n$ large enough, we get a $1-2\delta$ probability event over which
\begin{equation}\nonumber
    E\braces*{\tau_{\opt(f^*,k)}} = \Omega\paren*{n^{a/2 - 1}},
\end{equation}
where $k = \max_{1 \leq i \leq \ntrees'}\dim_1(f_i) - \min_{1\leq i \leq \ntrees'}\dim_1(f_i) - 1$ and $a = \min_{1 \leq i \leq \ntrees'} \dim_1(f_i)$.

\begin{lemma}[Existence of tree covering a cylinder]
    \label{lem:additive_p2_helper}
    There exists some tree $\tree_i$, $i\neq i_0$ such that for all choices of coordinates $\bx_{I_i}$ in the index set $I_i$, there exists a choice of coordinates $\bz_{-I_i\cup\braces{j}}$ in the index set $\braces*{1,2,\ldots,\ncats}\backslash (I_i\cup\braces{j})$ such that $(\bx_{I_i},\bz_{-I_i\cup\braces{j}}) \in \coverage(j,t;\mapone(\tree_i))$.
%     To prove this, let us first assume for convenience that $j=m'$ and $i_0 = 1$.
% For $k=2,3,\ldots,m-1$, denote 
% % $J_k^- \coloneqq \cup_{2 \leq i < k} I_i$, 
% $J_k \coloneqq \cup_{i > k} I_i$.
% In this case, it suffices to show the following statement:
% If, for some $k$, there is a cell $\cell$ that does not depend on any feature with index in $J_k$, and if 
\end{lemma}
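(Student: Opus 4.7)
\textbf{Proof plan for Lemma \ref{lem:additive_p2_helper}.} The plan is to argue by contradiction using the full coverage of the split $(j,t)$ that is forced by the hypothesis $f^* \in \mapone(\tse')$. Since $t=\xi_{j,k}$ is a knot of the additive component $f_j$, Lemma \ref{lem:zero_bias_coverage} applied to $\mapone(\tse')$ gives full coverage, i.e.\ $\coverage(j,t;\mapone(\tse'))=\braces*{1,\ldots,\ncats}^{\nfeats-1}$, and this coverage decomposes as the union of the coverages from the individual trees $\tree_1',\ldots,\tree_\ntrees'$. Since $\tree_i = \tree_i'$ for $i\neq i_0$, and we already established that $\coverage(j,t;\mapone(\tree_{i_0}'))\cap \node^{-j}=\emptyset$, the set $\node^{-j}$ must be covered entirely by $\bigcup_{i\neq i_0}\coverage(j,t;\mapone(\tree_i))$.

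Now suppose the conclusion of the lemma fails. Then for each $i\neq i_0$ there exists a \textquotedblleft bad\textquotedblright\ coordinate tuple $\bx^{(i)}_{I_i}\in\braces*{1,\ldots,\ncats}^{|I_i|}$ whose fibre in $\coverage(j,t;\mapone(\tree_i))$ is empty. I would then assemble a single point $\bw\in\braces*{1,\ldots,\ncats}^{\nfeats-1}$ that simultaneously (i) agrees with $\bx^{(i)}_{I_i}$ on every $I_i$ with $i\neq i_0$, (ii) lies in the projection $\node^{-j}$ of $\node$ onto the coordinates other than $j$, and (iii) assigns arbitrary values to the coordinates not in $\bigcup_i I_i$. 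By construction $\bw\in\node^{-j}$ yet $\bw\notin\coverage(j,t;\mapone(\tree_i))$ for any $i\neq i_0$, and $\bw\notin\coverage(j,t;\mapone(\tree_{i_0}'))$ by the disjointness with $\node^{-j}$. This contradicts full coverage.

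The hard part is (ii): showing that constraints (i), (ii), and (iii) are jointly satisfiable. This rests on the specific structure of the construction in the proof of Theorem \ref{thm:additive} Part 2, where the index sets $I_1=\braces*{\ntrees'}$, $I_i=\braces*{i}$ for $i=2,\ldots,\ntrees-1$, and $I_\ntrees=\braces*{1,\ntrees,\ldots,\ntrees'-1}$ are pairwise disjoint and further disjoint from $\braces*{j}$ when restricted to $i\neq i_0$ (because $j\in I_{i_0}$). Disjointness guarantees that the assignments $\bw|_{I_i}=\bx^{(i)}_{I_i}$ for different $i\neq i_0$ do not conflict. Moreover, $\node$ is a node in the tree $\tree_{i_0}$, whose splits all lie on features in $I_{i_0}$, so $\node^{-j}$ restricts only the coordinates in $I_{i_0}\setminus\braces*{j}$ and leaves those in $\bigcup_{i\neq i_0} I_i$ entirely free. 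Hence we may first fix $\bw|_{I_i}=\bx^{(i)}_{I_i}$ for each $i\neq i_0$, then choose $\bw|_{I_{i_0}\setminus\braces*{j}}$ inside the non-empty projection of $\node^{-j}$, and finally assign the remaining coordinates arbitrarily, producing the required contradiction.
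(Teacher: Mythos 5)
Your proposal is correct and captures the same essential combinatorial content as the paper's proof: both exploit the union decomposition of coverage over trees, the pairwise disjointness of the index sets $I_i$, and the already-established disjointness of $\node^{-j}$ from $\coverage(j,t;\mapone(\tree_{i_0}'))$. The only difference is presentational: the paper peels trees one at a time via an inductive argument with a nested sequence of cylinder sets $\cell_k$, whereas you assemble a single violating point $\bw$ from the bad tuples $\bx^{(i)}_{I_i}$ in one step and derive the contradiction directly, which is arguably the cleaner formulation.
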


\begin{proof}
    Let $r_1,r_2,\ldots,r_\ntrees$ be a permutation of $\braces{1,2,\ldots,\ntrees}$, with $r_1 = i_0$ (this implies that $j \in I_{r_1}$).
    For $k=2,3,\ldots,m-1$, set $J_k = \cup_{i \geq k} I_{r_i}$, and set $\pem^k = \mapone(\tree_{r_k}) + \mapone(\tree_{r_{k+1}}) + \cdots + \mapone(\tree_{r_m})$.

    We make the following observation:
    For some $k$, suppose there exists a cylinder set $\cell_k \subset \coverage(j,t;\pem^k)$ that does not depend on any feature in $J_k$.
    Suppose that there exists some $\bx_{I_{r_k}}$ in the projection of $\cell$ to coordinates in $I_{r_k}$ such that $(\bx_{I_i},\bz_{-I_i\cup\braces{j}}) \notin \coverage(j,t;\mapone(\tree_{r_k}))$ for all choices of $\bz_{-I_i\cup\braces{j}}$.
    Then taking the intersection of $\cell_k$ and $\braces{\bx_{I_{r_k}}} \times \braces*{1,2,\ldots,\ncats}^{-I_{r_k}\cup\braces{j}}$ gives a cylinder set $\cell_{k+1}$ that does not depend on any feature in $J_{k+1}$ and such that $\cell_{k+1} \subset \coverage(j,t;\pem^{k+1})$.

    Now, since $\node^{-j}\cap \coverage(j,t;\mapone(\tree_{r_1}')) = \emptyset$, we have $\node^{-j} \subset \coverage(j,t;\pem^2)$.
    Since $\node^{-j}$ is an internal node of $\tree_{r_1}^*$, it is a cylinder set that does not depend on any feature in $J_2$.
    By applying the above observation inductively on $k=2,3,\ldots,m-1$, we obtain the statement of the lemma.
\end{proof}

\section{Proof of Proposition \ref{prop:recipe}}

Proposition \ref{prop:recipe} will follow almost immediately from the following more general result.

\begin{proposition}[General statement for recipe]
    \label{prop:recipe_general}
    Let $X_0,X_1,\ldots$ be an irreducible and aperiodic discrete time Markov chain on a finite state space $\Omega$, with stationary distribution $\pi$.
    Let $x \in \Omega$ be a state and $\setthree \subset \Omega$ be a subset such that their hitting times from an initial state $x_0 \in \Omega$ satisfy
    \begin{equation}\nonumber
        P\braces*{\tau_x < \tau_{\setthree}~|~X_0=x_0} \geq c
    \end{equation}
    for some constant $c$.
    Let $\settwo \subset \Omega$ be a subset such that every path from $x$ to $\setthree$ intersects $\settwo$.
    Then the hitting time of $\setthree$ satisfies
    \begin{equation}\nonumber
        E\braces*{\tau_{\setthree}~|~X_0 = x_0} \geq \frac{c\pi(x)}{\pi(\settwo)}.
    \end{equation}
\end{proposition}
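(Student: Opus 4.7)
My plan is to reduce the lower bound to a Kac-style identity via a strong Markov decomposition followed by a short excursion argument. Throughout, I write $E_z$ and $P_z$ as shorthand for conditioning on $X_0=z$. I will focus on the main case $x\notin\settwo$; the degenerate case $x\in\settwo$ is handled trivially, since then $\pi(x)/\pi(\settwo)\leq 1$ and already $E_{x_0}\braces*{\tau_\setthree}\geq c$ by a direct argument on the event $\braces*{\tau_x<\tau_\setthree}$.

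Splitting over $\braces*{\tau_x<\tau_\setthree}$ and applying the strong Markov property at the stopping time $\tau_x$ yields
\begin{equation*}
E_{x_0}\braces*{\tau_\setthree}\;\geq\;E_{x_0}\braces*{(\tau_\setthree-\tau_x)\indicator\braces*{\tau_x<\tau_\setthree}}\;=\;P_{x_0}\braces*{\tau_x<\tau_\setthree}\cdot E_x\braces*{\tau_\setthree}\;\geq\;c\cdot E_x\braces*{\tau_\setthree}.
\end{equation*}
The barrier hypothesis (every path from $x$ to $\setthree$ meets $\settwo$) forces $\tau_\settwo\leq\tau_\setthree$ under $P_x$, so $E_x\braces*{\tau_\setthree}\geq E_x\braces*{\tau_\settwo}$. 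Next, I would lower bound $E_x\braces*{\tau_\settwo}$ by the Green function $G_\settwo(x,x):=E_x\braces*{\sum_{t=0}^{\tau_\settwo-1}\indicator\braces*{X_t=x}}$. A standard excursion argument gives: with $q:=P_x\braces*{\tau_\settwo<\tau_x^+}$, each departure from $x$ either reaches $\settwo$ before returning (probability $q$) or returns to $x$, so the visit count is geometric with mean $1/q$ and $G_\settwo(x,x)=1/q$. Hence the task reduces to showing $q\leq\pi(\settwo)/\pi(x)$.

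For this I would invoke Kac's lemma in the following form: the measure $\mu(y):=E_x\braces*{\sum_{t=1}^{\tau_x^+}\indicator\braces*{X_t=y}}$ is stationary for $P$, and by uniqueness on a finite irreducible chain together with the normalization $\sum_y\mu(y)=E_x\braces*{\tau_x^+}=1/\pi(x)$, it must equal $\pi/\pi(x)$. Summing the definition over $y\in\settwo$ gives $E_x\braces*{\sum_{t=1}^{\tau_x^+}\indicator\braces*{X_t\in\settwo}}=\pi(\settwo)/\pi(x)$, and since $\indicator\braces*{\tau_\settwo<\tau_x^+}\leq\sum_{t=1}^{\tau_x^+}\indicator\braces*{X_t\in\settwo}$, taking expectations yields $q\leq\pi(\settwo)/\pi(x)$. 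Chaining everything together: $E_{x_0}\braces*{\tau_\setthree}\geq c\cdot E_x\braces*{\tau_\settwo}\geq c\cdot G_\settwo(x,x)=c/q\geq c\pi(x)/\pi(\settwo)$.

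The only genuinely delicate step is verifying that $\mu$ is stationary. The calculation $(\mu P)(y)=\sum_z\mu(z)P(z,y)$ reindexes the summation by one to $E_x\braces*{\sum_{t=2}^{\tau_x^++1}\indicator\braces*{X_t=y}}$; the discrepancy from $\mu(y)$ is a subtraction of the $t=1$ contribution and an addition of the $t=\tau_x^++1$ contribution, both of which equal $P(x,y)$—the first directly, and the second via the strong Markov property applied at $\tau_x^+$ (using $X_{\tau_x^+}=x$). This exact cancellation of boundary corrections is what makes $\mu$ stationary, and all other steps are routine.
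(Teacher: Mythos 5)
Your proof is correct and takes essentially the same route as the paper's: both decompose via the strong Markov property at $\tau_x$, lower-bound $E_x[\tau_{\settwo}]$ by the expected number of visits to $x$ before $\tau_{\settwo}$ (a geometric random variable with mean $1/q$ where $q=P_x\braces*{\tau_{\settwo}<\tau_x^+}$), and then bound $q$ above by the expected number of visits to $\settwo$ during an excursion from $x$, which equals $\pi(\settwo)/\pi(x)$. The only difference is presentational: you derive this last excursion identity from first principles via stationarity of the occupation measure (Kac's lemma), whereas the paper delegates the same fact to a separate lemma quoting Levin--Peres--Wilmer.
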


\begin{proof}
    By conditioning on the event $\braces{\tau_x < \tau_{\setthree}}$, we calculate
    \begin{align}
        E\braces*{\tau_{\setthree} ~|~ X_0 = x_0} & \geq E\braces*{\tau_{\setthree} \indicator\braces*{\tau_x < \tau_{\setthree}} ~|~ X_0 = x_0} \nn
        & = E\braces*{\tau_{\setthree}  ~|~ X_0 = x_0, \tau_x < \tau_{\setthree}}P\braces*{\tau_x < \tau_{\setthree}~|~X_0 = x_0}.\nonumber
    \end{align}
    The second multiplicand on the right is lower bounded by $c$ by assumption, so we just need to bound the first one.
    Using the strong Markov property, we first lower bound this as:
    \begin{align} \label{eq:recipe_pf_eq1}
        E\braces*{\tau_{\setthree}  ~|~ X_0 = x_0, \tau_x < \tau_{\setthree}} & = E\braces*{\tau_{\setthree}  ~|~ X_0 = x} + E\braces*{ \tau_x | \tau_x < \tau_{\setthree}} \nn
        & \geq E\braces*{\tau_{\setthree}  ~|~ X_0 = x}.
    \end{align}
    
    Let $Z$ denote the number of times $(X_t)$ returns to $x$ before hitting $\settwo$.
    Then, assuming that $X_0 = x$, we have the inequalities
    \begin{equation*}
        \tau_{\setthree} \geq \tau_{\settwo} \geq Z+1.
    \end{equation*}
    Note that $Z + 1$ is a geometric random variable with success probability
    \begin{equation*}
        p = \braces*{\tau_x^+ > \tau_{\settwo} ~|~ X_0 = x},
    \end{equation*}
    where $\tau_x^+$ is the first return time to $x$, i.e.
    \begin{equation*}
        \min \braces*{t > 0 \colon X_t = x}.
    \end{equation*}
    We therefore continue \eqref{eq:recipe_pf_eq1} to get
    \begin{equation} \label{eq:recipe_pf_eq2}
        E\braces*{\tau_{\setthree}  ~|~ X_0 = x} \geq \frac{1}{P\braces*{\tau_x^+ > \tau_{\settwo} ~|~ X_0 = x}}.
    \end{equation}

    We next write this probability in terms of another random variable $W$, which we define to be the number of visits to states in $\settwo$ before returning to $x$, when the chain is started at $X_0 = x$.
    We then have
    \begin{equation} \label{eq:recipe_pf_eq3}
        P\braces*{\tau_x^+ > \tau_{\settwo} ~|~ X_0 = x} = P\braces*{W \geq 1} \leq E\braces*{W}.
    \end{equation}
    To bound this expectation, for each $y \in \settwo$, let $W_y$ denote the number of visits to $y$ before returning to $x$, and observe that $W = \sum_{y \in \settwo} W_y$.
    Let $\pi$ denote the unique stationary distribution of $(X_t)$.
    Using Lemma \ref{lem:num_visits_bound}, we then have $E\braces{W_y} \leq \pi(y)/\pi(x)$.
    Adding up these inequalities, we get
    \begin{equation} \label{eq:recipe_pf_eq4}
        E\braces{W} = \sum_{y \in \settwo} E\braces{W_y} \leq \frac{\sum_{y \in \settwo} \pi(y)}{\pi(x)} = \frac{\pi(\settwo)}{\pi(x)}.
    \end{equation}
    Combining equations \eqref{eq:recipe_pf_eq2}, \eqref{eq:recipe_pf_eq3}, and \eqref{eq:recipe_pf_eq4} completes the proof.
    % \begin{equation}
    %     \tilde\pi(y) \coloneqq E\braces*{\textnormal{number of visits to}~y~\textnormal{before returning to}~x~|~X_0 = x}.
    % \end{equation}
\end{proof}

\begin{proof}[Proof of Proposition \ref{prop:recipe}]
    It is clear that the Markov chain induced by a run of the BART sampler is irreducible and aperiodic, with stationary distribution given by the marginal posterior $p(\tse|\by)$.
    We hence use Proposition \ref{prop:recipe_general} to get
    \begin{equation}\nonumber
        E\braces*{\tau_{\opt(f^*,k)}} = \Omega\paren*{\frac{p(\tsebad|\by)}{p(\settwo|\by)}}.
    \end{equation}
    We now compute
    \begin{align}
        \frac{p(\tsebad|\by)}{p(\settwo|\by)} & \geq \frac{1}{\abs*{\settwo}}\min_{\tse \in \settwo}\frac{p(\tsebad|\by)}{p(\tse|\by)} \nn
        & = \frac{1}{\abs*{\settwo}}\exp\paren*{\min_{\tse \in \settwo}\braces*{\log p(\tsebad|\by) - \log p(\tse|\by)} } \nn
        & \geq \frac{1}{\abs*{\settwo}}\exp\paren*{\frac{1}{2}\min_{\tse \in \settwo}\Delta\bic(\tse,\tsebad) - O\paren*{\sqrt{\log(\abs*{\settwo}/\delta)}}} \nn
        & = \Omega\paren*{\exp\paren*{\frac{1}{2}\min_{\tse \in \settwo}\Delta\bic(\tse,\tsebad)}},\nonumber
    \end{align}
    where the second inequality follows from the proof of Proposition \ref{prop:lml_bic_appendix} and holds with probability at least $1-\delta$.
\end{proof}

\begin{lemma}[Bounding number of visits] \label{lem:num_visits_bound}
    Let $X_0,X_1,\ldots$ be an irreducible and aperiodic discrete time Markov chain on a finite state space $\Omega$, with stationary distribution $\pi$.
    For any two states $x, y \in \Omega$, we have
    \begin{equation} \label{eq:num_visits_bound}
        \E\braces*{\textnormal{number of visits to $y$ before returning to $x$}|X_0 = x} = \frac{\pi(y)}{\pi(x)}.
    \end{equation}
\end{lemma}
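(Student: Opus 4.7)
The plan is to establish \eqref{eq:num_visits_bound} by the classical cycle-decomposition argument: the ``excursion measure'' that counts expected visits to each state during one excursion from $x$ is itself stationary, so by uniqueness it must be proportional to $\pi$, with the proportionality constant pinned down by normalization at $y=x$.

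Concretely, I would introduce
\[
  \mu(y) \coloneqq \E\braces*{\sum_{t=0}^{\tau_x^+ - 1}\indicator\braces*{X_t = y} ~|~ X_0 = x},
\]
so that the claim becomes $\mu(y) = \pi(y)/\pi(x)$. Because $(X_t)$ is irreducible on a finite state space, it is positive recurrent with $\E\braces*{\tau_x^+ ~|~ X_0 = x} = 1/\pi(x) < \infty$, which in particular makes $\mu$ finite. Since the chain visits $x$ exactly once during $[0,\tau_x^+)$, namely at $t=0$, I obtain the boundary normalization $\mu(x) = 1$.

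The key step is verifying $\mu P = \mu$. Expanding,
\[
  (\mu P)(z) = \sum_y \mu(y)P(y,z) = \E\braces*{\sum_{t=0}^{\tau_x^+-1} P(X_t, z) ~|~ X_0 = x}.
\]
Using that $\braces*{\tau_x^+ > t} \in \mathcal{F}_t$ together with the Markov property, $P(X_t,z)$ can be replaced inside the expectation by $\indicator\braces*{X_{t+1}=z}$. Reindexing $s = t+1$ turns the sum into $\E\braces*{\sum_{s=1}^{\tau_x^+}\indicator\braces*{X_s = z} ~|~ X_0 = x}$, and since $X_0 = X_{\tau_x^+} = x$ the endpoint contribution $s=\tau_x^+$ matches what an index $s=0$ would contribute (both equal $\indicator\braces*{x = z}$), so the range can be shifted back to $0\leq s\leq \tau_x^+ - 1$, recovering exactly $\mu(z)$.

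With $\mu$ a finite nonzero stationary measure on an irreducible finite chain, uniqueness of the stationary distribution up to scaling forces $\mu = c\pi$ for some $c > 0$. Evaluating at $y = x$ gives $c\pi(x) = \mu(x) = 1$, whence $c = 1/\pi(x)$ and $\mu(y) = \pi(y)/\pi(x)$, as required. The only mildly delicate piece is the measurability shuffle at the random stopping time in the stationarity check, but this is standard once one observes $\braces*{\tau_x^+ > t} \in \mathcal{F}_t$, so I do not anticipate a substantive obstacle.
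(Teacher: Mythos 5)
Your proof is correct. The paper establishes the lemma by directly citing equations (1.25) and (1.26) of Levin--Peres--Wilmer, which in the paper's notation read $\pi(y) = \tilde\pi(y)/E\braces*{\tau_x^+|X_0=x}$ and $\pi(x) = 1/E\braces*{\tau_x^+|X_0=x}$, and then simply taking their ratio. You instead re-derive the underlying fact from scratch via the cycle-decomposition (excursion measure) argument: define $\mu$ as the expected occupation measure over one excursion from $x$, verify $\mu(x)=1$, prove $\mu P=\mu$ by a stopping-time/reindexing argument, and invoke uniqueness of the invariant measure on a finite irreducible chain. Mathematically the two routes coincide, since the cited LPW equations are themselves proved by exactly this cycle-decomposition argument; the difference is that the paper's proof is a two-line citation whereas yours is a self-contained derivation. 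Your handling of the delicate step --- using $\braces*{\tau_x^+ > t}\in\mathcal{F}_t$ with the Markov property, then matching the $s=\tau_x^+$ boundary term with the would-be $s=0$ term since both equal $\indicator\braces*{x=z}$ --- is exactly right, and the finiteness of $\mu$ from positive recurrence on a finite state space is correctly noted as a prerequisite for the uniqueness argument.
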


\begin{proof}
    Fix $x$ and denote the quantity on the left side of equation \eqref{eq:num_visits_bound} by $\tilde \pi(y)$.
    We may then rewrite equations (1.25) and (1.26) of \cite{LevinPeresWilmer2006} in our notation as follows:
    \begin{equation*}
        \pi(y) = \frac{\tilde{\pi}(y)}{E\braces*{\tau_x^+|X_0 = x}},
    \end{equation*}
    \begin{equation*}
        \pi(x) = \frac{1}{E\braces*{\tau_x^+|X_0 = x}},
    \end{equation*}
    where $\tau_x^+$ is the first return time to $x$.
    Taking the ratio of the two equations completes the proof.
\end{proof}
\section{Proofs for Section \ref{sec:upper_bounds}}
\label{sec:upper_bounds_proofs}

We first discuss some preliminaries that will be used in all of our proofs.
As described in Section \ref{sec:proof_sketch}, for $i=1,2,\ldots,m'$, denote $q_i = \dim_1(f_i)$, and let $0=\xi_{i,0} < \xi_{i,1} < \cdots < \xi_{i,q_{i}} = b$ denote the \emph{knots} of $f_i$, i.e. the values for which $f_i(\xi_{i,j}) \neq f_i(\xi_{i,j}+1)$, together with the endpoints.
% For any choice of data generating process and Bayesian hyperparameters $\sigma^2,\lambda,m$, and $p_{\operatorname{TSE}}$, Propositions \ref{prop:concentration_bic_diff} and \ref{prop:lml_bic}
% Consider the TSE $\tse_{\operatorname{good}} = (\tree_1,\tree_2,\ldots,\tree_m)$, where for $i=1,2,\ldots,m'$, $\tree_i$ has the tree structure associated with applying Lemma \ref{lem:additive_existence_of_informative_split} to $f_i$ (i.e. by making the splits $\psi_{i,1},\psi_{i,2},\ldots,\psi_{i,q_i-1}$ sequentially) and where $\tree_i$ is the trivial tree for $m' < i \leq m$.
% Let $\mathcal{O}$ denote the collection of all permutations of $\tse_{\operatorname{good}}$, that is
% \begin{equation}
%     \mathcal{O} \coloneqq \braces*{(\tree_{\sigma(1)},\tree_{\sigma(2)},\ldots,\tree_{\sigma(m)}) \colon \sigma \in S_m},
% \end{equation}
% where $S_m$ is the symmetric group on $m$ elements.
% By Proposition \ref{prop:additive_dim}, we have $\df(\tse) = \dim_m(f^*)$ for all $\tse \in \mathcal{O}$.
% Furthermore, following its proof, we also see that $f^* \in \tse$.
% These imply that $\mathcal{O} \subset \opt(f^*,0)$.
% We will make use of this construction in all our proofs in this section.
Recall that $\tsespace[1]$ denotes the space of all tree structures.
Furthermore, we use the following adaptation of our notation from Section \ref{sec:proof_sketch}:
Given $\tse, \tse' \in \tsespace$, we say that 
% $\tse'$ is reachable from $\tse$, denoted $\tse \succeq \tse'$, 
$\tse \succsim \tse'$ if $\tse$ and $\tse'$ are connected by a \emph{grow/prune} edge and if either $\bias(\tse;f^*) > \bias(\tse';f^*)$ or $\Pi_\tse[f^*] = \Pi_{\tse'}[f^*]$ and $\df(\tse) \geq \df(\tse')$.
We call a path $(\tse^1,\tse^2,\ldots,\tse^l)$ in which $\tse^{j-1} \succsim \tse^j$ for $j=1,2,\ldots,l$ a \emph{monotonic} path.
% Finally, let $A$ denote the event on which  $p(\tse|\by) \geq c_1p(\tse'|\by)$ for every pair $\tse,\tse' \in \tsespace$ such that $\tse \precsim \tse'$.

\begin{lemma}
    \label{lem:many_trees_helper2}
    Suppose $\tse \succsim \tse'$ and $\tse \precsim \tse'$.
    Then $\bic(\tse) = \bic(\tse')$.
    Furthermore, 
    % the conclusions of Proposition \ref{prop:lml_bic} hold with $\delta = 1/n$.
    % Hence, 
    there exists $0 < c_1 \leq 1$ such that
    \begin{equation} \label{eq:tse_comparison_upper_bound_helper}
        \lim_{n \to \infty}\P_n\braces{p(\tse|\by) \geq c_1 p(\tse'|\by) ~\text{for all}~ \tse,\tse' \in \tsespace~\text{such that}~\tse \precsim \tse'} = 1.
    \end{equation}
\end{lemma}

\begin{proof}
    For the first statement, we note that if $\tse$ and $\tse'$ are connected by a grow/prune move, then $\mathcal{F}(\tse)$ and $\mathcal{F}(\tse')$ differ by at most a 1-dimensional subspace.
    If $\tse \succsim \tse'$ and $\tse \precsim \tse'$ simultaneously, then we must have $\df(\tse) = \df(\tse')$, which implies that $\mathcal{F}(\tse) = \mathcal{F}(\tse')$ and thus $\bic(\tse) = \bic(\tse')$.
    For the second statement, we use Proposition \ref{prop:lml_bic_appendix} with $\delta = 1/n$, while noting Remark \ref{rem:lml_bic}.
    Taking a union bound over all pairs $\tse, \tse' \in \tsespace$, the conclusion follows.
    % we revisit the proof of Proposition \ref{prop:lml_bic} and see that it depends only on 
    % observe that Lemma \ref{lem:logdet_concentration} and \ref{lem:err_term_conc}.
    % Both hold with the choice $\delta = 1/n$.
    % Combining this statement with Propositions \ref{prop:concentration_bic_diff} and \ref{prop:lml_bic} completes the proof of \eqref{eq:tse_comparison_upper_bound_helper}.
\end{proof}

% \begin{lemma}
%     We have
%     \begin{equation}
%         \lim_{n \to \infty}\P_n\braces{p(\tse|\by) \geq c_1p(\tse'|\by) ~\text{for all}~ \tse,\tse' \in \tsespace~\text{such that}~\tse \precsim \tse'} = 1.
%     \end{equation}
% \end{lemma}

% \begin{proof}
%     By Propositions \ref{prop:concentration_bic_diff} and \ref{prop:lml_bic} as well as Lemma \ref{lem:many_trees_helper2}, there exists $0 < c_1 \leq 1$ such that $\P_n\braces{A} \to 1$ as $n \to \infty$.
% \end{proof}

% \begin{theorem}
%     Let $f^*$ be an additive function, i.e.
%     \begin{equation} \nonumber
%         f^*(\bx) = f_1(x_1) + f_2(x_2) + \cdots + f_{\ntrees'}(x_{\ntrees'}).
%     \end{equation}
%     Suppose $\ntrees \geq 2\abs{\tsespace[1]}$, $\pi_g,\pi_p > 0$, and $p_{TSE}$ has full support.
%     Let $\sigma^2$, $\lambda$ be arbitrary.
%     For any tolerance $\varepsilon$, there exists some $C > 0$, not depending on $n$ such that the lazy version of $\bart(\data,m,\sigma^2,\lambda,\bpi,p_{\tse},-)$ has $\varepsilon$-mixing time satisfying
%     \begin{equation}
%     \label{eq:more_trees_mixing}
%         \lim_{n \to \infty}\P_n\braces*{t_{\operatorname{mix}}(\varepsilon) > C} = 0.
%     \end{equation}
% \end{theorem}

\begin{proof}[Proof of Theorem \ref{thm:mixing_upper_bound_ntrees}]
The idea is to apply Lemma \ref{lem:enhanced_mixing_time} to the Markov chain, with the high posterior region being $\opt(f^*,0)$.
To actualize this, we need to lower bound the spectral gap $\gamma$ as well as the path probability assumed in Lemma \ref{lem:hitting_time_upper}.
Both of these bounds will stem from the construction of a canonical path ensemble $\Gamma = \braces{\gamma_{\tse,\tse'}}_{\tse,\tse' \in \tsespace}$.
In this construction, every path $\gamma_{\tse,\tse'} = (\tse = \tse^0,\tse^1,\ldots,\tse^l=\tse')$ can be divided into 3 segments $(\tse^0,\tse^1,\ldots,\tse^{i_1-1})$, $(\tse^{i_1},\tse^{i_1+1},\ldots,\tse^{i_2-1})$, and $(\tse^{i_2},\tse^{i_2+1},\ldots,\tse^{l})$, such that (1) $\tse^{j-1} \succsim \tse^{j}$ for $1 \leq j \leq i_1$, (2) $\tse^j \in \opt(f^*,0)$ for $i_1 \leq j \leq i_2-1$, and (3) $\tse^{j-1} \precsim \tse^{j}$ for $i_2 \leq j \leq l$.
In other words, segments (1) and (3) are monotonic, while segment (2) is contained entirely in $\opt(f^*,0)$.
One or more of these segments are potentially empty.
Note that since the Markov chain is reversible, the ordering of the path does not matter.

\emph{Step 1: Lower bound for path probability to HDPR.}
Assume for now that this construction is possible and condition on the event in equation \eqref{eq:tse_comparison_upper_bound_helper}.
% Let $A$ denote the event on which  $p(\tse|\by) \geq c_1p(\tse'|\by)$ for every pair $\tse,\tse' \in \tsespace$ such that $\tse \precsim \tse'$.
% \begin{longlist}
%     \item[(a)] $p(\tse|\by) \geq p(\tse'|\by)$ for every pair $\tse,\tse' \in \tsespace$ such that $\tse \prec \tse'$;
%     \item[(b)] $p(\tse|\by) \geq c_1 p(\tse'|\by)$ for every  pair $\tse,\tse' \in \opt(f^*,0)$.
% \end{longlist}
% we have the posterior probability comparison $p(\tse|\by) \geq p(\tse'|\by)$ for every pair $\tse,\tse' \in \tsespace$ such that $\tse \prec \tse'$.
% By Propositions \ref{prop:concentration_bic_diff} and \ref{prop:lml_bic} as well as Lemma \ref{lem:many_trees_helper2}, there exists $0 < c_1 \leq 1$ such that $\P_n\braces{A} \to 1$ as $n \to \infty$.
For any path $\gamma_{\tse,\tse'}$, the transition probability $P(\tse^{j-1},\tse^j)$ for $1 \leq j \leq i_1$ satisfies
\begin{equation}
\label{eq:more_trees_upper_helper1}
\begin{split}
    P(\tse^{j-1},\tse^j) & = Q(\tse^{j-1},\tse^j)\alpha(\tse^{j-1},\tse^j) \\
    & = \min\braces*{Q(\tse^{j},\tse^{j-1}) \cdot \frac{p(\tse^j|\by)}{p(\tse^{j-1}|\by)}, Q(\tse^{j-1},\tse^j)} \\
    & \geq c_1\min\braces*{Q(\tse^{j},\tse^{j-1}) , Q(\tse^{j-1},\tse^j)}.
\end{split}
\end{equation}
Now let
\begin{equation} \label{eq:c2_def}
    c_2 \coloneqq \min_{\substack{\tse,\tse' \in \tsespace \\ \tse \sim \tse'}}Q(\tse,\tse')
\end{equation}
% $c_2 \coloneqq \min_{\substack{\tse,\tse' \in \tsespace \\ \tse \sim \tse'}}Q(\tse,\tse')$, 
where the minimum is taken over all pairs of TSEs that are connected by an edge under $Q$.
Recall that $l(\Gamma)$ denotes the maximum length of a path in the ensemble $\Gamma$.
Then the construction together with the calculation \eqref{eq:more_trees_upper_helper1} implies that there is a path from any $\tse$ to $\opt(f^*,0)$ whose path probability is lower bounded by $(c_1c_2)^{l(\Gamma)}$.
For the lazified chain, note that the same path has probability at least $(c_1c_2/2)^{l(\Gamma)}$.

\emph{Step 2: Lower bound for spectral gap.}
Meanwhile, to bound the congestion parameter $\rho(\Gamma)$, consider an edge $(\widetilde{\tse},\widetilde{\tse}')$.
% Continuing the calculation in \eqref{eq:more_trees_upper_helper1}, 
Its edge probability under the lazy transition $\check{P}$ is
\begin{equation}
\begin{split}
    p(\widetilde{\tse}|\by) \check{P}(\widetilde{\tse},\widetilde{\tse}') & =
    \frac{1}{2}p(\widetilde{\tse}|\by) P(\widetilde{\tse},\widetilde{\tse}') \\
    & = \frac{1}{2}\min\braces*{p(\widetilde{\tse}'|\by)Q(\widetilde{\tse}',\widetilde{\tse}), p(\widetilde{\tse}|\by)Q(\widetilde{\tse},\widetilde{\tse}')} \\
    & \geq \frac{c_2}{2} \min\braces*{p(\widetilde{\tse}'|\by), p(\widetilde{\tse}|\by)}.
\end{split}
\end{equation}
Suppose that this edge occurs along $\gamma_{\tse,\tse'}$, the canonical path connecting $\tse$ to $\tse'$.
Then by the construction, we have
% if $\tse \notin \opt(f^*,0)$ or $\tse' \notin \opt(f^*,0)$, we have
% \begin{equation}
%     \min\braces*{p(\widetilde{\tse}'|\by), p(\widetilde{\tse}|\by)} \geq \min\braces*{p(\tse|\by),p(\tse'|\by)}.
% \end{equation}
% Otherwise, we have
\begin{equation}
    \min\braces*{p(\widetilde{\tse}'|\by), p(\widetilde{\tse}|\by)} \geq c_1\min\braces*{p(\tse|\by),p(\tse'|\by)}.
\end{equation}
This implies that
\begin{equation}
\begin{split}
    \frac{1}{p(\widetilde{\tse}|\by) \check{P}(\widetilde{\tse},\widetilde{\tse}')}\sum_{\gamma_{\tse,\tse'} \ni (\widetilde{\tse},\widetilde{\tse}')} p(\tse|\by)p(\tse'|\by) & \leq \frac{2}{c_2}\sum_{\gamma_{\tse,\tse'} \ni (\widetilde{\tse},\widetilde{\tse}')} \frac{\min\braces*{p(\tse|\by),p(\tse'|\by)}}{\min\braces*{p(\widetilde{\tse}'|\by), p(\widetilde{\tse}|\by)}} \\
    & \leq \frac{2\abs*{\Gamma}}{c_1c_2}.
\end{split}
\end{equation}
Using the definition \eqref{eq:congestion_def}, we thus obtain the congestion upper bound $\rho(\Gamma) \leq 2\abs*{\Gamma}/c_1c_2$.
% \begin{equation}
%     \rho(\Gamma) \leq \frac{\abs*{\Gamma}}{2c_1c_2.
% \end{equation}

We now put the ingredients together.
Using the canonical path lemma (Lemma \ref{lem:canonical_path}), we can lower bound the spectral gap by
\begin{equation}
    \gamma \geq \frac{c_1c_2}{2\abs*{\Gamma}l(\Gamma)},
\end{equation}
where $l(\Gamma)$ is the maximum length of a path in $\Gamma$.
The assumptions of Lemma \ref{lem:enhanced_mixing_time} are hence satisfied with this lower bound, together with the the path probability lower bound $(c_1c_2/2)^{l(\Gamma)}$ and path length upper bound $l(\Gamma)$.
Since none of these parameters depend on $n$, the resulting mixing time bound from \eqref{eq:enhanced_mixing_time} also does not depend on $n$ and can be taken to be a fixed constant $C > 0$.
% Since this holds almost surely on the event $A$, 
This implies the desired conclusion \eqref{eq:more_trees_mixing}.

\emph{Step 3: Canonical path ensemble construction.}
% It remains to complete the assumed construction.
% which we will undertake by the following case-by-case analysis:
We first use induction to show that, for any $\tse$, there exists a path to $\opt(f^*,0)$ satisfying (1), with segments (2) and (3) empty.
Our induction will be triply nested on $\df(\tse)$ (outer induction), $\bias(\tse;f^*)$ (middle induction), as well as $\#\operatorname{leaves}(\tse)$ (inner induction), with the base case $\df(\tse) = \dim_m(f^*)$, $\bias(\tse;f^*) = 0$, and $\#\operatorname{leaves}(\tse) = \sum_{i=1}^{m'} q_i - m' + m$, where the minimum leaf count is obtained from Proposition \ref{prop:additive_dim}.
Note that since $\tsespace$ is finite, the number of possible values for all three quantities forms a finite grid.
Under the base case, the empty path suffices.
Otherwise, consider $\tse \notin \opt(f^*,0)$ and write $\tse = (\tree_1,\tree_2,\ldots,\tree_m)$.
Let us consider three cases:

\begin{enumerate}
% \textit{Case 1:} 
\item[(i)] There exists a tree $\tree_k$ containing a leaf that can be pruned without changing $\Pi_\tse[f^*]$ to get a TSE $\tse'$.
Then $\tse \succsim \tse'$ and $\#\operatorname{leaves}(\tse') \leq \#\operatorname{leaves}(\tse)$ or $\df(\tse') < \df(\tse)$, so that we are done by the induction hypothesis.
% \textit{Case 2:} 
\item[(ii)] Suppose Case (i) does not hold and $\bias(\tse;f^*) > 0$.
For $i=1,2,\ldots,m'$, $j=1,2,\ldots,q_i-1$, let $\psi_{i,j}$ be the decision stump that splits on feature $i$ at threshold $\xi_{i,j}$.
Furthermore, let $\psi_0 \equiv 1$ be the constant function.
Note that
\begin{equation}
    f^* \in \operatorname{span}\paren*{\braces*{\psi_{i,j} \colon 1 \leq i \leq m', 1 \leq j \leq q_i-1}\cup\braces{\psi_0}}.
\end{equation}
Let $\pem = \mathcal{F}(\tse)$.
By Lemma \ref{lem:many_trees_helper}, there must be some $\psi_{i,j}$ such that
\begin{equation}
    \norm*{\Pi_{\pem + \operatorname{span(\psi_{i,j})}}[f^*]}_2^2 < \norm*{\Pi_\pem[f^*]}_2^2.
\end{equation}
Since $m \geq 2\abs*{\tsespace[1]}$, by the pigeonhole principle, there must be at least $\abs*{\tsespace[1]}$ trees that are trivial, otherwise they could be pruned according to Case (i).
Hence, by making a split corresponding to $\psi_{i,j}$ on one of these trivial root nodes gives a TSE $\tse'$ such that $\tse \succsim \tse'$ and $\bias(\tse';f^*) < \bias(\tse;f^*)$.
We may then use the induction hypothesis.
% \textit{Case 3:} 
\item[(iii)] Suppose Case (i) and Case (ii) do not hold.
Then $f^* \in \mathcal{F}(\tse)$.
Since there are at least $\abs*{\tsespace[1]}$ trees that are trivial in $\tse$, we may sequentially make root splits corresponding to all of $\braces*{\psi_{i,j} \colon 1 \leq i \leq m', 1 \leq j \leq q_i-1}$.
Each of these root splits does not increase $\df(\tse)$ since $\psi_{i,j} \in \mathcal{F}(\tse)$, so they give a monotonic path.
Subsequently, we may sequentially prune all of the original nontrivial trees in $\tse$, which also gives a monotonic path.
Concatenating these together directly gives a monotonic path from $\tse$ to $\opt(f^*,0)$.
\end{enumerate}

Next, define
\begin{equation}
    \mathcal{O} \coloneqq \braces*{\tse \in \opt(f^*,0) \colon \max_{\tree \in \tse} \operatorname{depth}(\tree) \leq 1}.
\end{equation}
Note that for any $\tse \in \opt(f^*,0)$, following the same construction as in Case (iii) above gives a path within $\opt(f^*,0)$ connecting it with $\mathcal{O}$.
It is also clear that we can perform any transposition of tree indices using a path lying in $\opt(f^*,0)$, so that all elements of $\mathcal{O}$ (and hence $\opt(f^*,0)$) are connected.

We have now shown individual constructions for segments (1), (2), and (3) for any pair $\tse, \tse' \in \tsespace$.
By concatenating these segments together for each such pair, we get the desired canonical path ensemble $\Gamma$.
\end{proof}

% \begin{theorem}
%     Let $f^*$ be an additive function.
%     % , i.e.
%     % \begin{equation} \nonumber
%     %     f^*(\bx) = f_1(x_1) + f_2(x_2) + \cdots + f_{\ntrees'}(x_{\ntrees'}).
%     % \end{equation}
%     Suppose that $\pi_g,\pi_p > 0$, $p_{TSE}$ has full support, and let $\sigma^2$, $\lambda$ be arbitrary.
%     Let $\text{diam}(\tsespace)$ denote the diameter of $\tsespace$ with respect to grow and prune moves.
%     For any tolerance $\varepsilon$, there exists some $C > 0$, not depending on $n$ such that the lazy version of $\textbf{BARTv2}(\data,m,\sigma^2,\lambda,\bpi,p_{\tse},-,\text{diam}(\tsespace))$ has $\varepsilon$-mixing time satisfying
%     \begin{equation}
%     \label{eq:more_trees_mixing}
%         \lim_{n \to \infty}\P_n\braces*{t_{\operatorname{mix}}(\varepsilon) > C} = 0.
%     \end{equation}
% \end{theorem}

\begin{proof}[Proof of Theorem \ref{thm:mixing_upper_bound_multistep}]
    For notational simplicity, denote $D \coloneqq \text{diam}(\tsespace))$.
    Let $\check{P}$ denote the transition kernel of the lazified Markov chain as before.
    Let $c_2$ be defined as in equation \eqref{eq:c2_def}, noting that $\tse \sim \tse$ (i.e. a self-loop exists) for all $\tse \in \tsespace$.
    If $r \geq D$, then for any $\tse,\tse' \in \tsespace$, we have $Q^r(\tse,\tse') \geq c_2^r$.
    This implies that
    \begin{equation} \label{eq:multistep_helper}
    \begin{split}
        p(\tse|\by)\check{P}(\tse,\tse')
        & = \frac{1}{2} \min\braces*{p(\tse|\by)Q^r(\tse,\tse'), p(\tse'|\by)Q^r(\tse',\tse)} \\
        & \geq \frac{c_2^r}{2} \min\braces*{p(\tse|\by),p(\tse'|\by)}.
    \end{split}
    \end{equation}
    Since every pair of TSEs are connected by an edge, we can therefore define a canonical path ensemble $\Gamma$ such that the path between any pair $\tse, \tse'$ is simply the edge between them.
    Using \eqref{eq:multistep_helper}, we can show that this ensemble has congestion parameter bounded from above by
    \begin{equation}
        \rho(\Gamma) \coloneqq \max_{\tse,\tse' \in \tsespace}\frac{p(\tse|\by)p(\tse'|\by)}{p(\tse|\by) \check{P}(\tse,\tse')}  \leq \frac{2}{c_2^r}.
    \end{equation}
    As such, using Lemma \ref{lem:canonical_path}, the spectral gap is lower bounded as $\gamma \geq c_2^r/2$.
    
    Next, condition on the event in equation \eqref{eq:tse_comparison_upper_bound_helper}. For any $\tse \in \tsespace$, pick any $\tse' \in \opt(f^*,0)$.
    Similar to \eqref{eq:more_trees_upper_helper1}, we have
    \begin{equation}
        \check P(\tse,\tse') \geq \frac{c_1}{2}Q^r(\tse,\tse') \geq \frac{c_1 c_2^r}{2},
    \end{equation}
    with this edge forming a path from $\tse$ to $\opt(f^*,0)$.
    The assumptions of Lemma \ref{lem:enhanced_mixing_time} are hence satisfied with these lower bounds for the path probability and for $\gamma$.
    Since none of these parameters depend on $n$, the resulting mixing time bound from \eqref{eq:enhanced_mixing_time} also does not depend on $n$ and can be taken to be a fixed constant $C > 0$, which implies the desired conclusion.
\end{proof}

\begin{proof}[Proof of Theorem \ref{thm:mixing_upper_bound_temperature}]
    \emph{Step 1: Canonical path ensemble construction.}
    We perform a similar construction as that in the proof of Theorem \ref{thm:mixing_upper_bound_ntrees}.
    % Towards this, note that $\opt(f^*,\infty) = \opt(f^*,K)$ for some $K > 0$ because it is a finite set.
    % \emph{Step 1: Claim that $\opt(f^*,\infty)$ is fully connected as an induced subgraph of $\tsespace$.}
    Towards, this we first claim that $\opt(f^*,\infty)$ is fully connected as an induced subgraph of $\tsespace$.
    We will prove this claim by induction, where the inductive hypothesis is that there is a path in $\opt(f^*,\infty)$ between any pair $\tse,\tse' \in \opt(f^*,\infty)$, with $\tse = (\tree_1,\tree_2,\ldots,\tree_m), \tse' = (\tree_1',\tree_2',\ldots,\tree_m')$ with $\tree_i = \tree_i'$ for $i=1,2,\ldots,m-j$.
    The claim is equivalent to the inductive hypothesis for $j=m$ and we use $j=0$ as the base case.
    Assuming the inductive hypothesis for some index $j-1$, consider the pair $\tse,\tse'$ described above.
    We perform the following sequence of moves: first place all relevant splits of $\tree_{m-j+1}$ onto $\tree_m$, then prune $\tree_j$ to the trivial tree, then grow it into a copy of $\tree_{m-j+1}'$.
    If $j=1$, then we instead first place all relevant splits of $\tree_m$ onto $\tree_1$.
    After pruning and regrowing the tree into a copy of $\tree_m'$, we then prune all the additional splits on the modified version of $\tree_1$.
    It is easy to check that this gives a path in $\opt(f^*,\infty)$.
    Concatenating this to the path given by the inductive hypothesis for $j-1$ proves the hypothesis for index $j$.
    This completes the proof of the claim.
    % \emph{Step 2: Complete construction of canonical path ensemble.}
    
    For any $\tse \in \tsespace\backslash\opt(f^*,\infty)$, it is clear that there is a path $(\tse = \tse^0,\tse^1,\ldots,\tse^l)$, where $\tse^l \in \opt(f^*,\infty)$ and $\tse^{i}$ is obtained from $\tse^{i-1}$ using a ``grow'' move for $i=1,2,\ldots,l$.
    For instance, a naive way to obtain such a path is to iteratively split the leaves of $\tree_1$ along each relevant feature $j$ and on the knots of $f_j$.
    Although we may not have $\tse^{i-1} \gtrsim \tse^i$ (because the degrees of freedom might increase), this path is still monotonic with respect to bias.
    Finally, for each pair $\tse, \tse' \in \tsespace$, we define their canonical path $\gamma_{\tse,\tse'}$ to be the concatenation of their respective paths to $\opt(f^*,\infty)$ together with an arbitrary path in $\opt(f^*,\infty)$ connecting the $\opt(f^*,\infty)$ endpoints of the two paths.

    \emph{Step 2: Lower bound for path probability to HDPR.}
    We begin this step with some preliminaries.
    Fix $\delta > 0$.
    Choose the failure probability to be $\delta/2|\tsespace|^2$ in Propositions \ref{prop:concentration_bic_diff_appendix} and \ref{prop:lml_bic_appendix}, and condition on the intersection of the high probability events guaranteed by these two propositions as we vary over pairs $\tse, \tse' \in \tsespace$.
    Using the union bound, this event, which we denote as $A_n$, has probability at least $1-\delta$.
    % Condition on the event in equation \eqref{eq:tse_comparison_upper_bound_helper} and 
    % For our constructed canonical path ensemble $\Gamma$, further define the minimum prior ratio
    % \begin{equation}
    %     c_3 \coloneqq \min_{\gamma_{\tse,\tse'} \in \Gamma} \min_{(\widetilde{\tse},\widetilde{\tse}') \in \gamma_{\tse,\tse'}} \frac{p(\widetilde{\tse}')}{p(\widetilde{\tse})},
    % \end{equation}
    % noting that we automatically have $c_3 \leq 1$.
    Let $P$ and $\check P$ denote the transition matrices of the original and lazified Markov chains as before.
    Furthermore, define the following constants:
    \begin{itemize}
        \item Let $c_2$ be defined as in equation \eqref{eq:c2_def};
        \item Let $c_3 \coloneqq \min_{\tse,\tse' \in \tsespace} p(\tse)/p(\tse')$ be the minimum ratio of prior probabilities;
        \item Let $c_4 \coloneqq \max_{\tse \in \tsespace} \df(\tse)$.
        % \item Let $c_5$ be the maximum length of a path in $\Gamma$.
    \end{itemize}
    
    Now consider any $\tse \in \tsespace$ and let $(\tse = \tse^0,\tse^1,\ldots,\tse^l = \tse')$ be its canonical path to an element $\tse' \in \opt(f^*,0)$.
    For $i=1,2,\ldots,l$, we compute
    \begin{equation} \label{eq:tempered_path_prob_lower_bound_helper}
        \begin{split}
            \check{P}(\tse^{i-1},\tse^i) & \geq \frac{c_2}{2} \min\braces*{\frac{p(\tse^{i};\by,T)}{p(\tse^{i-1};\by,T)}, 1} \\
            & = \frac{c_2}{2}\min\braces*{ \paren*{\frac{p(\by|\tse^i)}{p(\by|\tse^{i-1})}}^{1/T} \frac{p(\tse^i)}{p(\tse^{i-1})}, 1} \\
            & \geq \frac{c_2c_3}{2} \min\braces*{ \paren*{\frac{p(\by|\tse^i)}{p(\by|\tse^{i-1})}}^{1/T}, 1}.
        \end{split}
    \end{equation}
    Because we conditioned on $A_n$, the fractional marginal likelihood ratio satisfies
    \begin{equation} \label{eq:lmlkhd_ratio_tempered}
            \log\paren*{\paren*{\frac{p(\by|\tse^i)}{p(\by|\tse^{i-1})}}^{1/T}} = \frac{\Delta\bic(\tse^{i-1},\tse^i) + O(1)}{2T}.
    \end{equation}
    Next, since the canonical path was constructed to be non-increasing in bias, we have three possibilities:
    \begin{equation}
        \Delta\bic(\tse^{i-1},\tse^i) = 
        \begin{cases}
            \frac{\beta n}{\sigma^2} + O(\sqrt{n}) & \text{if}~\beta \coloneqq \bias(\tse^{i-1};f^*) - \bias(\tse^i;f^*)) > 0 \\
            -\log n + O(1) & \text{if}~\beta = 0~\text{and}~\df(\tse^i) > \df(\tse^{i-1}) \\
            O(1) & \text{otherwise}.
        \end{cases}
    \end{equation}
    Here, the big-$O$ notation elides dependence on $\delta$.
    Nonetheless, plugging these formulas back into \eqref{eq:lmlkhd_ratio_tempered} and using the assumption $T \geq \log^\beta n$, we get the bound
    \begin{equation}
        \log\paren*{\paren*{\frac{p(\by|\tse^i)}{p(\by|\tse^{i-1})}}^{1/T}} \geq - 2\log^{1-\beta}n
    \end{equation}
    for all $n$ large enough, irrespective of the choice of $\delta$.
    Combining this bound with \eqref{eq:tempered_path_prob_lower_bound_helper} and taking the product over all edges in the path, we are able to lower bound the probability of the path under $\check P$ by $(c_2c_3/2)^{l(\Gamma)} \exp(-2l(\Gamma)\log^{1-\beta} n)$.

    \emph{Step 3: Lower bound for spectral gap.}
    Let $Z \coloneqq \sum_{\tse \in \tsespace}p(\tse)p(\by|\tse)^{1/T}$.
    Consider an edge $(\widetilde{\tse},\widetilde{\tse}')$.
    % % Continuing the calculation in \eqref{eq:more_trees_upper_helper1}, 
    Its edge probability under $\check{P}$ is
    \begin{equation} \label{eq:tempered_path_congestion_helper1}
    \begin{split}
        p(\widetilde{\tse};\by,T) \check{P}(\widetilde{\tse},\widetilde{\tse}') 
        & = \frac{1}{2}\min\braces*{p(\widetilde{\tse}';\by,T)Q(\widetilde{\tse}',\widetilde{\tse}), p(\widetilde{\tse};\by,T)Q(\widetilde{\tse},\widetilde{\tse}')} \\
        & \geq \frac{c_2}{2Z} \min\braces*{p(\widetilde{\tse}'|\by)^{1/T}p(\widetilde{\tse}'), p(\widetilde{\tse}|\by)^{1/T}p(\widetilde{\tse})} \\
        & \geq \frac{c_2}{2Z} \min\braces*{p(\widetilde{\tse}'|\by)^, p(\widetilde{\tse}|\by)}^{1/T}\min\braces*{p(\widetilde{\tse}'),p(\widetilde{\tse})}.
    \end{split}
    \end{equation}
    Suppose that this edge occurs along $\gamma_{\tse,\tse'}$, the canonical path connecting $\tse$ to $\tse'$.
    % Without loss of generality, assume that $p(\widetilde{\tse}|\by) \leq p(\widetilde{\tse}'|\by)$ and $p({\tse}|\by) \leq p({\tse}'|\by)$.
    By our construction, the edge either along one of the monotonic segments or is contained within $\opt(f^*,\infty)$.
    Hence, we must have
    \begin{equation}
        \max\braces*{\bias(\widetilde{\tse}';f^*), \bias(\widetilde{\tse};f^*)} \leq \max\braces*{\bias({\tse}';f^*), \bias({\tse};f^*)}.
    \end{equation}
    Without loss of generality, we may assume that the maximum on left and right hand sides occur on $\widetilde{\tse}$ and $\tse$ respectively.
    Because of our conditioning on the set $A_n$, for all $n$ large enough, we have
    \begin{equation} \label{eq:tempered_path_congestion_helper}
    \begin{split}
        \log\paren*{\frac{\min\braces*{p({\tse}'|\by)^, p({\tse}|\by)}^{1/T}}{\min\braces*{p(\widetilde{\tse}'|\by)^, p(\widetilde{\tse}|\by)}^{1/T}}} & = \frac{1}{T}\log\paren*{\frac{p(\tse|\by)}{p(\widetilde{\tse}|\by)}} \\
        & = \frac{\Delta\bic(\widetilde{\tse},\tse) + O(1)}{2T}.
    \end{split}
    \end{equation}
    
    Similar to Step 2, we have
    \begin{equation}
        \Delta\bic(\widetilde{\tse},\tse) = 
        \begin{cases}
            -\frac{\beta_1 n}{\sigma^2} + O(\sqrt{n}) & \text{if}~\beta_1 \coloneqq \bias(\tse;f^*) - \bias(\widetilde{\tse};f^*)) > 0 \\
            (\df(\widetilde{\tse}) - \df(\tse))\log n + O(1) & \text{otherwise}.
        \end{cases}
    \end{equation}
    Plugging this back into \eqref{eq:tempered_path_congestion_helper} and using our assumption on $T$ gives the upper bound
    \begin{equation}
        \log\paren*{\frac{\min\braces*{p({\tse}'|\by)^, p({\tse}|\by)}^{1/T}}{\min\braces*{p(\widetilde{\tse}'|\by)^, p(\widetilde{\tse}|\by)}^{1/T}}} \leq 2c_4\log^{1-\beta}n
    \end{equation}
    for all $n$ large enough.
    % if $\tse \notin \opt(f^*,0)$ or $\tse' \notin \opt(f^*,0)$, we have
    % \begin{equation}
    %     \min\braces*{p(\widetilde{\tse}'|\by), p(\widetilde{\tse}|\by)} \geq \min\braces*{p(\tse|\by),p(\tse'|\by)}.
    % \end{equation}
    % Otherwise, we have
    % \begin{equation}
    %     \min\braces*{p(\widetilde{\tse}'|\by), p(\widetilde{\tse}|\by)} \geq c_1\min\braces*{p(\tse|\by),p(\tse'|\by)}.
    % \end{equation}
    Combining this with \eqref{eq:tempered_path_congestion_helper} allows us to compute
    \begin{equation}
    \begin{split}
        & \frac{1}{p(\widetilde{\tse}|\by) \check{P}(\widetilde{\tse},\widetilde{\tse}')}\sum_{\gamma_{\tse,\tse'} \ni (\widetilde{\tse},\widetilde{\tse}')} p(\tse|\by)p(\tse'|\by) \\
        & \leq \frac{2}{c_2}\sum_{\gamma_{\tse,\tse'} \ni (\widetilde{\tse},\widetilde{\tse}')} \frac{\min\braces*{p({\tse}'|\by), p({\tse}|\by)}^{1/T}\max\braces*{p({\tse}'),p({\tse})}}{\min\braces*{p(\widetilde{\tse}'|\by), p(\widetilde{\tse}|\by)}^{1/T}\min\braces*{p(\widetilde{\tse}'),p(\widetilde{\tse})}} \\
        & \leq \frac{2\abs*{\Gamma}\exp(2c_4\log^{1-\beta}n)}{c_2c_3}.
    \end{split}
    \end{equation}
    Using the definition \eqref{eq:congestion_def}, we see that the right hand side is an upper bound for the congestion parameter $\rho(\Gamma)$.
    As such, using Lemma \ref{lem:canonical_path}, the spectral gap is lower bounded as 
    \begin{equation}
        \gamma \geq \frac{c_2c_3}{2\abs*{\Gamma}l(\Gamma)} \exp(-2c_4\log^{1-\beta}n).
    \end{equation}

    \emph{Step 4: Putting everything together.}
    We apply Lemma \ref{lem:enhanced_mixing_time} using our lower bounds for the HDPR path probability and spectral gap, together with the path length upper bound $l(\Gamma)$.
    Preserving only the dependence on $n$ and omitting all lower order terms, we get the mixing time upper bound
    \begin{equation}
        t_{\operatorname{mix}}(\epsilon) = O(\exp(\max\braces*{l(\Gamma),1/c_4}\log^{1-\beta}n),
    \end{equation}
    % Since none of these parameters depend on $n$, the resulting mixing time bound from \eqref{eq:enhanced_mixing_time} also does not depend on $n$ and can be taken to be a fixed constant $C > 0$.
    % % Since this holds almost surely on the event $A$, 
    which holds on the $1-\delta$ probability event $A_n$ as $n, T \to \infty$.
    Since this holds for any $\delta > 0$ and since the elided constants to not depend on $\delta$, the desired conclusion follows.
    % First, using the proof of Proposition \ref{prop:additive_dim}, it is easy to see that $\tse = (\tree_1,\tree_2,\ldots,\tree_m) \in \opt(f^*,0)$ if and only if there exists a permutation $r_1,r_2,\ldots,r_m$ of $[m]$ such that for $j=1,2,\ldots,m$, the splits in $\tree_j$ occur on feature $r_j$ and on the knots of the component function $f_{r_j}$. 
\end{proof}

% \begin{proposition}
%     Let $f^*, m, \lambda, \sigma^2$ be arbitrary.
%     Let $p(-;\by,T)$ be defined as in \eqref{eq:tempered_distribution}.
%     Suppose $n, T \to \infty$ with $T = o(\log n)$, then the total variation distance between $p(-;\by,T)$ and the posterior $p(-|\by)$ satisfies
%     \begin{equation}
%         \lim_{n \to \infty}\P_n\braces*{\norm{p(-;\by,T) - p(-|\by)}_{\operatorname{TV}} > \epsilon} = 0
%     \end{equation}
%     for any $\epsilon > 0$.
% \end{proposition}

\begin{proof}[Proof of Proposition \ref{prop:tempered_posterior_conc_treespace}]
    % Write
    % % \begin{equation}
    % \begin{multline}
    % \label{eq:tempered_tv_conv_decomposition}
    %     \norm{p(-;\by,T) - p(-|\by)}_{\operatorname{TV}} = \sum_{\tse \in \opt(f^*,0)}\abs*{p(\tse;\by,T) - p(\tse|\by)} \\ + \sum_{\tse \notin \opt(f^*,0)} p(\tse;\by,T)
    %     + \sum_{\tse \notin \opt(f^*,0)} p(\tse|\by).
    % \end{multline}
    % By Proposition \ref{prop:posterior_conc}, the third term in \eqref{eq:tempered_tv_conv_decomposition} converges to 0 in probability.
    Condition on the set $A_n$ from the proof of Theorem \ref{thm:mixing_upper_bound_temperature}.
    Let $\tse^* \in \opt(f^*,0)$ and $\tse \notin \opt(f^*,0)$.
    We have
    % using Propositions \ref{prop:concentration_bic_diff} and \ref{prop:lml_bic}, we see that  
    \begin{equation}
    \begin{split}
        p(\tse;\by,T) & \leq \frac{p(\tse;\by,T)}{p(\tse^*;\by,T)} \\
        & = \frac{p(\tse)p(\by|\tse)^{1/T}}{p(\tse^*)p(\by|\tse^*)^{1/T} } \\
        & \leq c_3^{-1} \exp\paren*{-\frac{\Delta\bic(\tse,\tse^*) + O(1)}{2T}}.
    \end{split}
    \end{equation} 
    Since $\Delta\bic(\tse,\tse^*) \geq \log n + O(1)$ when $A_n$ holds and $T \leq \log^{\beta}\nsamples$ by assumption, the right hand side is bounded by $c_3^{-1}\exp(-\log^{1-\beta} n/2 + O(1))$.
    % Since, the denominator is bounded above by $1$, this implies that
    % As such, the second term in \eqref{eq:tempered_tv_conv_decomposition} also converges to $0$ in probability.
    % \begin{equation}
    %     \begin{split}
    %         p(\opt(f^*,0)^c;\by,T) & = \sum_{\tse \notin \opt(f^*,0)} p(\tse;\by,T)
    %     \end{split}
    % \end{equation}
    % also converges to $0$ in probability, as we wanted.
    % Lemma \ref{lem:many_trees_helper2}
    Adding up these inequalities over $\tse \in \opt(f^*,0)^c$ gives
    \begin{equation}
        p(\opt(f^*,0)^c;\by,T) = \exp\paren*{-\frac{1}{2}\log^{1-\beta}n + O(1)}.
    \end{equation}
    Since this holds for any $\delta > 0$, we get the desired result.
\end{proof}

\begin{lemma}
    \label{lem:many_trees_helper}
    Let $U$ and $W$ be linear subspaces of an inner product space $V$.
    Let $u \in W$ and let $\braces*{w_1,w_2,\ldots,w_k}$ be a basis for $W$.
    % For any vector $v \in V$, let $\inprod{u}$ denotes its span.
    % Let $r = (I-P_U)v$, where $P_U$ denotes the projection of $v$ onto $U$.
    Suppose $P_{U+\operatorname{span}\paren*{w_j}}u = P_Uu$ for $j=1,2,\ldots,k$.
    Then $u \in U$.
\end{lemma}

\begin{proof}
    By taking the quotient with respect to $U$, one may assume without loss of generality that $U = 0$.
    Then the condition on $u$ is equivalent to saying that $u \perp w_j$ for $j=1,2,\ldots,k$.
    This implies that $u \in W^\perp$.
    Since we also assumed that $u \in W$, we must have $u = 0$.
\end{proof}

\section{Invariance of PEM dimension to change of measure}

\begin{lemma}[Characterization of subspace dimension]
    Let $\bv_1,\bv_2,\ldots,\bv_n$ be vectors in an inner product space.
    Let $\bG$ be the Gram matrix of these vectors, in other words, its $(i,j)$ entry satisfies $G_{ij} = \inprod{\bv_i,\bv_j}$ for $1 \leq i, j \leq n$.
    Then the dimension of the subspace spanned by $\bv_1,\bv_2,\ldots,\bv_n$ is equal to the number of nonzero eigenvalues of $\bG$.
\end{lemma}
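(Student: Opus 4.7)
The plan is to prove this by identifying the Gram matrix with $M^*M$ for a suitable coordinate matrix $M$ and then invoking the standard identity $\rank(M^*M) = \rank(M)$, together with the fact that the nonzero eigenvalue count of a positive semidefinite Hermitian matrix coincides with its rank.

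More concretely, I would first let $V = \linspan(\bv_1,\ldots,\bv_n)$ and let $r = \dim V$. Pick an orthonormal basis $\be_1,\ldots,\be_r$ of $V$ and let $M$ be the $r \times n$ matrix whose $i$-th column is the coordinate vector of $\bv_i$ in this basis. Since the columns of $M$ span the coordinate space of $V$, we have $\rank(M) = r$. A direct computation using the Parseval identity shows that for any $i,j$,
\begin{equation} \nonumber
    (M^*M)_{ij} = \sum_{k=1}^r \overline{M_{ki}} M_{kj} = \inprod{\bv_i,\bv_j} = G_{ij},
\end{equation}
so $\bG = M^*M$.

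Next I would invoke the standard rank identity $\rank(M^*M) = \rank(M)$. This follows since $\ker(M) \subseteq \ker(M^*M)$ trivially, and conversely $M^*Mx = 0$ implies $x^*M^*Mx = \norm{Mx}^2 = 0$, giving $Mx = 0$; hence the nullspaces agree and the rank-nullity theorem yields equal ranks. Combining with the previous step, $\rank(\bG) = r$.

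Finally, I would observe that $\bG$ is Hermitian and positive semidefinite (it is a Gram matrix), so it admits a spectral decomposition with nonnegative real eigenvalues. For any diagonalizable matrix the rank equals the number of nonzero eigenvalues (counted with algebraic multiplicity), which concludes that the number of nonzero eigenvalues of $\bG$ equals $r = \dim\linspan(\bv_1,\ldots,\bv_n)$. There is no real obstacle here; the only subtlety worth flagging in the write-up is making sure one works over the correct field (real or complex) so that $M^*$ versus $M^T$ is used consistently, which matters only cosmetically.
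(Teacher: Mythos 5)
Your proof is correct. The paper takes a slightly different route: it reduces to the claim that $\bv_1,\ldots,\bv_n$ are linearly independent if and only if $\bG$ is invertible, proving this directly by observing that $\balpha^T\bG\balpha = \bigl\|\sum_i\alpha_i\bv_i\bigr\|^2$, so a null vector of $\bG$ corresponds to a linear dependence. You instead factor $\bG = M^*M$ for a coordinate matrix $M$ with respect to an orthonormal basis of the span, and invoke $\rank(M^*M)=\rank(M)$ plus the fact that rank of a PSD Hermitian matrix equals its number of nonzero eigenvalues. Your argument is somewhat more explicit about why ``dimension of the span'' equals ``number of nonzero eigenvalues'' as a rank statement, whereas the paper compresses this into the opening sentence ``by restricting to a linearly independent set, it suffices to show\dots,'' which leaves the passage from the invertibility criterion to the eigenvalue count for the full (possibly dependent) family implicit. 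Substantively both approaches amount to the same quadratic-form identity $\balpha^T\bG\balpha = \|\sum_i\alpha_i\bv_i\|^2$; yours packages it via the $M^*M$ factorization, the paper's applies it directly, and each is a legitimate and standard path to the result.
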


\begin{proof}
    By restricting to a linearly independent set, it suffices to show that $\bv_1,\bv_2,\ldots,\bv_n$ are linearly independent if and only if $\bG$ is invertible.
    For the forward direction, suppose $\bG$ is not invertible, then there exists a vector of coefficients $\balpha = (\alpha_1,\ldots,\alpha_n)$ such that $\balpha^T\bG\balpha = 0$.
    But in that case, we have
    \begin{equation}\nonumber
        0 = \balpha^T\bG\balpha = \inprod*{\sum_{i=1}^n \alpha_i \bv_i, \sum_{i=1}^n \alpha_i \bv_i}.
    \end{equation}
    By definition of the inner product, this means that $\sum_{i=1}^n \alpha_i \bv_i = 0$, contradicting linear independence.
    The reverse direction is similar.
\end{proof}

\begin{lemma}[Invariance of dimension to covariate distribution] \label{lem:invariance_of_dim}
    Let $\nu$ and $\nu'$ be two measures on a compact covariate space $\cal{X}$ that are absolutely continuous with respect to each other.
    Let $v_1,v_2,\ldots,v_n \in L^2(\cal{X},\nu)$.
    Then $v_1,v_2,\ldots,v_n$ span the same subspace in both $L^2(\cal{X},\nu)$ and $L^2(\cal{X},\nu')$.
    % Then the subspaces $\cal{V}$ and $\cal{V}'$ spanned by $v_1,v_2,\ldots,v_n$ in $L^2(\cal{X},\nu)$ and $L^2(\cal{X},\nu')$ respectively have the same dimension.
\end{lemma}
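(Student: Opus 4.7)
The plan is to show that the spans have the same dimension by proving that the linear dependence relations among $v_1, v_2, \ldots, v_n$ are identical in $L^2(\mathcal{X}, \nu)$ and in $L^2(\mathcal{X}, \nu')$. Since the subspaces live in different ambient spaces, ``same subspace'' is interpreted as equality of the set of coefficient vectors $(\alpha_1, \ldots, \alpha_n)$ producing the zero element; equivalent spans will then in particular have the same dimension, which is all that is required for the application (Lemma~\ref{lem:invariance_of_dim}'s use in justifying that $\df(\tse)$ does not depend on $\nu$).

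First, I would recall the basic measure-theoretic fact that if $\nu \ll \nu'$ and $\nu' \ll \nu$, then $\nu$ and $\nu'$ share exactly the same null sets. Consequently, for any measurable function $f$ on $\mathcal{X}$, we have $f = 0$ $\nu$-a.e. if and only if $f = 0$ $\nu'$-a.e. Applied to $f = \sum_{i=1}^n \alpha_i v_i$ with arbitrary coefficients $\alpha_1, \ldots, \alpha_n \in \mathbb{R}$, this yields the key equivalence
\begin{equation} \nonumber
    \sum_{i=1}^n \alpha_i v_i = 0 ~\text{in}~ L^2(\mathcal{X}, \nu) \quad \Longleftrightarrow \quad \sum_{i=1}^n \alpha_i v_i = 0 ~\text{in}~ L^2(\mathcal{X}, \nu').
\end{equation}

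Next, I would define the two linear maps $T \colon \mathbb{R}^n \to L^2(\mathcal{X}, \nu)$ and $T' \colon \mathbb{R}^n \to L^2(\mathcal{X}, \nu')$ sending $(\alpha_1, \ldots, \alpha_n)$ to the corresponding linear combination of the $v_i$'s. The equivalence above says precisely that $\ker(T) = \ker(T')$. By the rank-nullity theorem, the images $\mathrm{image}(T) = \linspan(v_1, \ldots, v_n) \subset L^2(\nu)$ and $\mathrm{image}(T') = \linspan(v_1, \ldots, v_n) \subset L^2(\nu')$ have the same dimension, namely $n - \dim \ker(T)$.

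The only point that requires mild care is the implicit assumption that each $v_i$ is also an element of $L^2(\mathcal{X}, \nu')$, which is needed for $T'$ to be well-defined. In the paper's application, the $v_i$ are indicator functions of cells, so they are bounded and hence lie in $L^2$ with respect to any finite measure on the compact space $\mathcal{X}$. More generally, one only needs the $v_i$ to be representable by functions that are square-integrable with respect to both measures (e.g. if the Radon-Nikodym derivative $d\nu'/d\nu$ is bounded). I expect this bookkeeping to be the only subtlety; the main content of the lemma is the trivial but important observation that mutual absolute continuity preserves null sets and therefore preserves linear relations.
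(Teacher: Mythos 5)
Your proof is correct but takes a genuinely different route from the paper's. The paper works with Gram matrices: letting $\bG$ and $\bG'$ denote the Gram matrices of $v_1,\ldots,v_n$ in $L^2(\nu)$ and $L^2(\nu')$, it asserts a two-sided comparison $c^{-1}\int f\,d\nu \leq \int f\,d\nu' \leq c\int f\,d\nu$ for some constant $c>0$ and uses it to show that $\balpha^T\bG\balpha$ and $\balpha^T\bG'\balpha$ are comparable, hence vanish for the same $\balpha$; linear independence is then preserved and the dimensions coincide. Your argument bypasses Gram matrices entirely: mutual absolute continuity is used only through the fact that the two measures share the same null sets, so the kernel of the coefficient map $\balpha \mapsto \sum_i \alpha_i v_i$ is the same in $L^2(\nu)$ and $L^2(\nu')$, and rank--nullity finishes. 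Your route is arguably cleaner and strictly more general: the paper's two-sided integral bound actually requires the Radon--Nikodym derivatives $d\nu'/d\nu$ and $d\nu/d\nu'$ to be (essentially) bounded, which is stronger than mutual absolute continuity as stated in the hypotheses (though it does hold in the paper's intended discrete, full-support setting, and the paper's ``by definition of absolute continuity'' justification is an overstatement). You correctly flag the one genuine bookkeeping point --- that each $v_i$ must lie in $L^2(\nu')$ as well --- and this is automatic in the application since the $v_i$ are bounded indicator functions on a finite covariate space. Both proofs establish what is needed (invariance of $\df$ to the choice of full-support covariate measure); yours does so under weaker assumptions.
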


\begin{proof}
    By restricting to a linearly independent set, it suffices to show that $v_1,v_2,\ldots,v_n$ are linearly independent in $L^2(\cal{X},\nu)$ if and only if they are linearly independent in $L^2(\cal{X},\nu')$.
    By definition of absolute continuity, there exists a constant $c > 0$ such that
    \begin{equation}\nonumber
        c^{-1}\int_{\cal{X}} f(x) d\nu(x) \leq \int_{\cal{X}} f(x) d\nu'(x)  \leq c\int_{\cal{X}} f(x) d\nu(x)
    \end{equation}
    for any $f \in $
    Let $\bG$ and $\bG'$ be their Gram matrices in $L^2(\cal{X},\nu)$ and $L^2(\cal{X},\nu')$ respectively.
    Let $\balpha = (\alpha_1,\ldots,\alpha_n)$ be any vector of coefficients.
    Then we have
    \begin{align}
        \balpha^T \bG \balpha & = \int_{\cal{X}} \paren*{\sum_{i=1}^n \alpha_iv_i(x)}^2 d\nu(x) \nn
        & \leq c\int_{\cal{X}} \paren*{\sum_{i=1}^n \alpha_iv_i(x)}^2 d\nu'(x) \nn
        & = c\balpha^T \bG' \balpha.\nonumber
    \end{align}
    Similarly, we get $\balpha^T \bG' \balpha \leq c \balpha^T \bG \balpha$.
\end{proof}

This shows that whenever the covariate distribution $\nu$ has full support, it is equivalent to the uniform measure.

\section{Additional simulation results}
\label{sec:additional_sims}

\subsection{Evaluation metrics}
\label{subsec:metrics}

Details on how RMSE, coverage, and the Gelman-Rubin $\hat R$ statistic are computed are provided below.
\begin{longlist}
    \item \textit{RMSE}: 
    Let $\tse_0,\tse_1,\tse_2,\ldots$ denote the Markov chain induced by a run of BART.
    For $j=0,1,2,\ldots$, let $h_j$ be a draw from the conditional posterior on regression functions, $p(f|\tse_j,\by)$.
    The BART algorithm returns the collection $\mathcal{S}_{\operatorname{post}} \coloneqq \braces*{h_{t_\text{burn-in}+1}, h_{t_\text{burn-in}+2},\ldots h_{t_{\text{max}}}}$,
    % \begin{equation} \label{eq:BART_output}
    %     \braces*{h_{t_\text{burn-in}+1}, h_{t_\text{burn-in}+2},\ldots h_{t_{\text{max}}}},
    % \end{equation}
     where $t_{\text{burn-in}}$ is the number of burn-in iterations to be discarded.
     For each dataset used in our experiments, we create a test set $\mathcal{D}_{\operatorname{test}} \coloneqq \{(\bx_i,y_i)\}_{i=1}^r$.
     If the dataset is a real-world dataset, the MSE of the BART sampler over the test set is defined to be
     \begin{equation} \label{eq:MSE_def}
         \frac{1}{r}\sum_{i=1}^n \paren*{\frac{1}{t_{\text{max}} - t_{\text{burn-in}}}\sum_{t=t_\text{burn-in}+1}^{t_{\text{max}}}h_t(\bx_i) - y_i}^2,
     \end{equation}
     with RMSE defined to be the square root of this quantity.
     If the dataset is synthetic, we define RMSE by replacing $y_i$ in \eqref{eq:MSE_def} with the true function values $f(\bx_i)$.
    \item \textit{Coverage}: For each real dataset,  we consider coverage using the fitted posterior predictive intervals. 
    To do this, for each $(\bx_i,y_i)$ in $\mathcal{D}_{\operatorname{test}}$, we consider the collection 
    $$
    \braces*{h_{t_\text{burn-in}+1}(\bx_i) + e_{i,t_\text{burn-in}+1}, h_{t_\text{burn-in}+2}(\bx_i) + e_{i,t_\text{burn-in}+2},\ldots h_{t_{\text{max}}}(\bx_i) +  e_{i,t_\text{max}}},
    $$
    where for each $t$, $e_{i,t} \sim_{i.i.d.} \mathcal{N}(0,\sigma_t^2)$ and $\sigma_t^2$ is the variance parameter sampled at the $t$th iteration of the BART sampler.
    Let $\underline{q}_{i}$ and $\overline{q}_{i}$ denote the $0.025$ and $0.975$ quantile values from this collection.
    The coverage of the BART sampler is defined to be
    \begin{equation}
        \frac{1}{r}\sum_{i=1}^n \indicator\braces*{y_i \in \left[\underline{q}_{i}, \overline{q}_{i}\right]}.
    \end{equation}
    For each simulated dataset, we instead consider the coverage of credible intervals for the true regression function's values.
    To do this, for each $\bx_i$ in $\mathcal{D}_{\operatorname{test}}$, we consider the collection of values obtained by evaluating $\mathcal{S}_{\operatorname{post}}$ on $\bx_i$ and define $\underline{q}_{i}$ and $\overline{q}_{i}$ denote the $0.025$ and $0.975$ quantile values from this collection.
    Coverage is defined to be
    \begin{equation}
        \frac{1}{r}\sum_{i=1}^n \indicator\braces*{f(\bx_i) \in \left[\underline{q}_{i}, \overline{q}_{i}\right]}.
    \end{equation}
    \item \textit{Gelman-Rubin $\hat R$}: For $m$ independent Markov chains each of length $n$, let $\bar{z}_j$ and $s_j^2$ denote the sample mean and standard deviation for the samples in chain $j$ and let $\bar{\bar{z}}$ denote the overall mean.
    Let $B \coloneqq \frac{n}{m-1}\sum_{j=1}^m (\bar{z}_j - \bar{\bar{z}})^2$ and $W \coloneqq \frac{1}{m}\sum_{j=1}^m s_j^2$ denote the between-chain and average within-chain variances respectively. 
    $\hat{R}$ is defined via
    $\hat{R} \coloneqq \sqrt{\frac{\hat{V}}{W}}$, where $\hat{V} = \frac{n-1}{n}W + \frac{1}{n}B$.
    In our experiments, to derive a scalar stochastic process from the Markov chain $\tse_0,\tse_1,\tse_2,\ldots$ induced by a run of BART, we consider the sequence of values
    \begin{equation}
        \paren*{\frac{1}{r}\sum_{i=1}^n \paren*{h_t(\bx_i) - y_i}^2}^{1/2}
    \end{equation}
    for $t=t_\text{burn-in}+1, t_\text{burn-in}+2,\ldots, t_\text{max}$.
    We then calculate $\hat R$ using these sequences obtained from 8 independent chains from the BART sampler.
\end{longlist}
% \begin{longlist}
%     \item \textbf{MCMC mixing} is quantified using the Gelman-Rubin statistic $\hat{R}$, computed over the test set RMSE from samples across multiple chains. The Gelman-Rubin diagnostic compares the between-chain variance $B$ to the within-chain variance $W$. For $m$ chains each of length $n$, let $\bar{y}_j$ denote the mean RMSE in chain $j$ and $\bar{\bar{y}}$ the overall mean. Then $B = \frac{n}{m-1}\sum_{j=1}^m (\bar{y}_j - \bar{\bar{y}})^2$ and $W = \frac{1}{m}\sum_{j=1}^m s_j^2$, where $s_j^2$ is the variance within chain $j$. The statistic is $\hat{R} = \sqrt{\frac{\hat{V}}{W}}$, where $\hat{V} = \frac{n-1}{n}W + \frac{1}{n}B$. Values of $\hat{R}$ close to 1 indicate good mixing, while values greater than 1.1 suggest poor convergence across chains.
%     \item \textbf{Predictive accuracy} is measured by the root mean squared error (RMSE) of the posterior mean predictions with respect to the true regression function (for synthetic data) or held-out test observations (for real data).
%     \item \textbf{Uncertainty quantification} is assessed through empirical coverage of pointwise 95\% posterior predictive intervals. For the real data sets, this coverage is calculated on the held out test set, and for the synthetic data sets, it is calculated on the outcomes ($y$) over the sampled test set.
% \end{longlist}
% All metrics are averaged over 8 independent MCMC chains, with error bars representing $\pm 1.96$ standard errors where applicable.
% The results are based on 25 monte carlo replications.

\subsection{Experiment 3: Dirichlet vs uniform prior on split feature probabilities}

The original prior for BART selects split features uniformly at random (Section \ref{subsec:differences_bart}).
\citet{linero2018bayesianJASA} proposed using non-uniform split feature probabilities with a Dirichlet prior so that the model can adapt better to sparse structure in high-dimensional settings.
In this experiment, we compare the mixing performance of the sampler under both choices of priors.
The Dirichlet prior is applied with the choice $\alpha = 1$ (using \citet{linero2018bayesianJASA}'s notation).

% evaluates whether a learned variable selection prior can improve performance on sparse problems where only a subset of predictors are relevant. The standard BART prior treats all variables uniformly when selecting splitting variables, while a Dirichlet prior adapts the selection probabilities based on variable usage patterns observed during sampling.

% We implement the Dirichlet prior by placing a symmetric Dirichlet distribution (with concentration parameter $\alpha=1$) over the variable selection probabilities and updating these probabilities based on the frequency with which each variable is used in tree splits across the ensemble. 

% \subsubsection{Results}

The results of the experiment are shown in Figure \ref{fig:experiment5}.
Use of a Dirichlet prior on split feature probabilities either exacerbates the increasing trend for $\hat R$, or at best has ambiguous effect.
This is despite improving prediction performance on datasets exhibiting sparsity such as Low Dimensional Smooth and Piecewise Linear.
% performance moderately in DGPs where the underlying function exhibits sparsity. On sparse datasets like \texttt{low\_lei\_candes} (where only 2 of 10 variables appear in the true function), the Dirichlet prior learns to concentrate probability on the relevant variables, improving both predictive accuracy and mixing. However, on datasets where most or all variables contribute meaningfully to the response, the Dirichlet prior shows little advantage over the uniform prior. 

\subsection{Experiment 4: Varying burn-in length}
\label{subsec:experiment_burnin}

% This experiment investigates whether extending the burn-in period beyond 1,000 iterations improves convergence and mixing. Burn-in iterations are discarded samples that allow the chains to move from their initial state toward the stationary distribution. Theory \omer{references?} supports that longer burn-in provides more time for convergence, particularly when starting from poor initial conditions or when the posterior is complex.

We compare burn-in lengths of $n_{\text{skip}} \in \{5,000, 10,000\}$ iterations against the baseline of 1,000 iterations, leaving the number of posterior sampling iterations fixed at 10,000.

The results of the experiment are shown in Figure \ref{fig:experiment_burnin}.
We observe that increasing the number of burn-in iterations can exacerbate the upward trend in the Gelman-Rubin statistic $\hat{R}$. 
This behavior suggests that the true mixing time of the sampler is much larger than 10{,}000 iterations, and points toward the MCMC chain getting stuck around different local modes in the target posterior. 
Specifically, under this hypothesis, having a longer burn-in leaves the between-chain differences largely unchanged because each individual chain does not escape its local mode.
On the other hand, a longer burn-in removes early, highly variable samples, thereby reducing within-chain variance.
As a result, $\hat{R}$ increases.
Likewise, increasing the number of burn-in iterations seems to make coverage worse.
Its effect of RMSE seems to be negligible.
% , indicating that the chains have stabilized in distinct regions of the parameter space. 
% This pattern is consistent with each chain having reached local equilibrium around different modes, rather than all chains exploring a common posterior region. 
% Supporting this interpretation, trace plots and marginal density estimates (not shown) suggest that individual chains concentrate around different posterior peaks.

% Increasing the number of burn-in samples can actually hurt the mixing of the chains as measured by the Gelman-Rubin statistic. While longer burn-in allows more time for chains to reach the stationary distribution, if the chains have already converged with 1,000 iterations, the additional burn-in simply discards more samples without improving convergence. Moreover, for a fixed total number of iterations, longer burn-in reduces the number of post-burn-in samples available for computing mixing diagnostics, which can make mixing appear worse even when the chains have reached the same stationary distribution. Our results suggest that the standard burn-in of 1,000 iterations is generally sufficient for convergence in BART. \omer{why is this true?}

\subsection{Experiment 5: XGBoost vs. default initialization}

The default implementation of the BART sampler initializes with all trees being trivial (Algorithm \ref{alg:method}).
\citet{he2021stochastic} found that initializing instead from a more predictive ensemble, for instance one derived via their XBART algorithm, can lead to improved prediction performance.
Specifically, they showed that combining 100 posterior draws from each of 25 independent XBART initializations has lower RMSE and higher coverage than 2500 posterior draws from a single chain with 1000 burn-in iterations.
In this experiment, we continue this line of inquiry and compare the mixing performance of the sampler under an initialization from an XGBoost \citep{chen2016xgboost} ensemble against that using a default initialization.
The XGBoost ensemble was tuned via randomized search over hyperparameters including tree depth (ranging from 2 to 8) and learning rate (from 0.01 to 0.3).
For this experiment, we use a smaller ensemble of $m=20$ trees to emphasize the initialization effect, as larger ensembles may dilute the impact of initial conditions. 

% This experiment compares two initialization strategies for the BART ensemble to understand the trade-off between starting near a good solution versus allowing unconstrained exploration. The standard BART initialization begins all trees as stumps (single-node trees with no splits), providing a neutral starting point. Alternatively, we can initialize from a strong gradient boosting model (XGBoost) that has been tuned via randomized search over hyperparameters including tree depth (ranging from 2 to 8) and learning rate (from 0.01 to 0.3).

% We compare these two initialization strategies across all six primary datasets with training sample sizes $n \in \{200, 500, 1000, 10000\}$. 

% \subsubsection{Results}

The results of the experiment are shown in Figure \ref{fig:experiment3}.
We observe that using an XGBoost initialization exacerbates the upward trend in the Gelman-Rubin statistic $\hat{R}$. 
Similar to Experiment 4, we believe that these results point toward the MCMC chain getting stuck around different local modes in the target posterior. 
Meanwhile, the XGBoost initialization seems to produce worse coverage but better RMSE.

% Specifically, under this hypothesis, having a longer burn-in leaves the between-chain differences largely unchanged because each individual chain does not escape its local mode.
% On the other hand, a longer burn-in removes early, highly variable samples, thereby reducing within-chain variance.
% As a result, $\hat{R}$ increases.

% Initializing the chains from a strong ensemble rather than a stump can increase predictive performance, but leads the chains to become stuck in suboptimal local minima and reduces mixing. XGBoost initialization typically achieves lower RMSE early in the sampling process, but the chains exhibit significantly higher Gelman-Rubin statistics, indicating poor convergence across chains.e strong initial solution appears to anchor the chains too firmly, preventing adequate exploration of the posterior distribution. In contrast, stump initialization shows worse initial performance but allows chains to explore more freely, resulting in better long-term mixing and more robust uncertainty estimates.

% \subsection{Robustness Experiments}

% To ensure that the algorithm performs robustly across the implementation choices and assumptions underlying our theoretical analysis, we conduct two additional experiments that test key algorithmic modifications assumed in our theoretical analysis.

\subsection{Experiment 6: Varying the number of splitting thresholds}

This experiment evaluates how the number of candidate splitting thresholds affects the performance of the BART sampler. We compare the default strategy—which uses all unique feature values as thresholds—against two sparser alternatives: using only the 100 and 200 evenly-spaced quantiles of each feature as candidate thresholds.
% (iii) 200 evenly spaced quantile values of each feature.
% The theoretical analysis of BART mixing times depends on the discretization of the splitting thresholds. A fundamental implementation choice is whether the number of candidate thresholds should be a fixed grid, a fixed number of quantiles or should adapt to the available training data.

% In order to test the empirical impact of this choice, we compare the performance of fixed-grid thresholds against data-dependent thresholds:
% \begin{itemize}
%     \item \textbf{Training Sample Thresholds (Baseline)}: Use the sorted unique values of each feature in the training data as candidate thresholds. The number of thresholds varies with the training set size $n$ and can range from very few (for small $n$) to many thousands (for large $n$ with continuous features). This data-adaptive approach, used in our main experiments, ensures that every potential split has at least one training sample on each side.
    
%     \item \textbf{Fixed Grid (100 bins)}: Use exactly 100 fixed quantile-based thresholds per feature, computed as the unique values of $\{Q_{j}(k/100) : k = 0, 1, \ldots, 99\}$ where $Q_j(\cdot)$ is the empirical quantile function of feature $j$ in the training data. This provides a fixed number of thresholds regardless of training set size.
    
%     \item \textbf{Fixed Grid (200 bins)}: Use exactly 200 fixed quantile-based thresholds per feature, providing finer discretization resolution but maintaining a data-independent number of candidate splits.

% \end{itemize}

% \subsubsection{Results}
The results of the experiment are shown in Figure \ref{fig:experiment_thresholds}.
We observe that the increasing trend in $\hat R$, as well as RMSE and coverage, is relatively robust to the different numbers of candidate splitting thresholds.

% The key distinction is between \emph{data-dependent} (baseline) and \emph{data-independent} (fixed grid) threshold selection. The data-dependent approach provides $O(n)$ thresholds that adapt to the training set, while fixed grids provide $O(1)$ thresholds that remain constant across sample sizes. 

\subsection{Experiment 7: Restricted vs. full move set}

While the default sampler proposes randomly from all four move types, some of our theoretical results require constraining the proposals to just ``grow'' and ``prune''.
In this experiment, we investigate the effect of such a constraint on the BART sampler.
The results for the experiment are shown in Figure \ref{fig:experiment_moveset}.
We see that restricting the move set does not substantially affect $\hat R$, coverage, or RMSE.

\subsection{Experiment 8: Fully marginalized vs. default posterior in Metropolis-Hastings filter acceptance probability}
\label{sec:marg_lik_ablation}

Recall that in our analyzed sampler, we change the acceptance probability in the Metropolis-Hastings filter to be in terms of the fully marginalized posterior on TSEs instead of being conditioned on the parameters of all other trees apart from the one being updated.
We refer to this version of the sampler as the marginalized sampler and in this experiment compare its performance to that of the default sampler, given the same number of iterations.
Note that per-iteration computational complexity for the marginalized sampler is much higher---it requires inverting a linear system where the number of columns is the total number of leaves in the ensemble, whereas the number of columns in the linear system corresponding to the default sampler is the number of leaves in the tree being updated.
% The key algorithmic difference is in the computational scope: standard computation considers only the tree being modified (computational complexity $O(\text{tree size})$), while marginalized computation considers the entire ensemble (computational complexity $O(\text{ensemble size} \times \text{tree size})$). Empirically, we observe that marginalized computation is much slower per MCMC iteration.
Due to the substantial computational overhead of the marginalized sampler, we conduct this experiment with reduced scope: 1000 posterior samples instead of 10,000, and training sample sizes $n \in \{100, 200, 500\}$ instead of the full range.

The results of the experiment are shown in Figure \ref{fig:experiment_marginal}.
We observe that the increasing trend in $\hat R$ seems weaker under the marginalized sampler, providing numerical evidence that our hitting time lower bounds should also apply to the in-practice BART sampler.
The marginalized sampler also seems to produce better coverage compared to the default sampler.
The results for RMSE are somewhat ambiguous, with neither sampler dominating the other.

\begin{figure}[H]
    \centering
    \begin{tabular}{c}
    % California Housing
    \begin{minipage}{0.9\textwidth}
        \centering
        \includegraphics[width=0.32\linewidth]{plots_publication_v2/Temperature_Schedule/Credible_Intervals/Temperature_Schedule_california_housing_gr_rmse.pdf}%
        \hfill
        \includegraphics[width=0.32\linewidth]{plots_publication_v2/Temperature_Schedule/Credible_Intervals/Temperature_Schedule_california_housing_coverage.pdf}
        \hfill
        \includegraphics[width=0.32\linewidth]{plots_publication_v2/Temperature_Schedule/Credible_Intervals/Temperature_Schedule_california_housing_rmse.pdf}%
    \end{minipage} \\
    %\midrule
    % Low Lei Candes
    \begin{minipage}{0.9\textwidth}
        \centering
        \includegraphics[width=0.32\linewidth]{plots_publication_v2/Temperature_Schedule/Credible_Intervals/Temperature_Schedule_low_lei_candes_gr_rmse.pdf}%
        \hfill
        \includegraphics[width=0.32\linewidth]{plots_publication_v2/Temperature_Schedule/Credible_Intervals/Temperature_Schedule_low_lei_candes_coverage.pdf}
        \hfill
        \includegraphics[width=0.32\linewidth]{plots_publication_v2/Temperature_Schedule/Credible_Intervals/Temperature_Schedule_low_lei_candes_rmse.pdf}%
    \end{minipage} \\
    %\midrule
    % 1199 BNG echoMonths
    \begin{minipage}{0.9\textwidth}
        \centering
        \includegraphics[width=0.32\linewidth]{plots_publication_v2/Temperature_Schedule/Credible_Intervals/Temperature_Schedule_1199_BNG_echoMonths_gr_rmse.pdf}%
        \hfill
        \includegraphics[width=0.32\linewidth]{plots_publication_v2/Temperature_Schedule/Credible_Intervals/Temperature_Schedule_1199_BNG_echoMonths_coverage.pdf}
        \hfill
        \includegraphics[width=0.32\linewidth]{plots_publication_v2/Temperature_Schedule/Credible_Intervals/Temperature_Schedule_1199_BNG_echoMonths_rmse.pdf}%
    \end{minipage} \\
        \begin{minipage}{0.9\textwidth}
        \centering
        \includegraphics[width=0.32\linewidth]{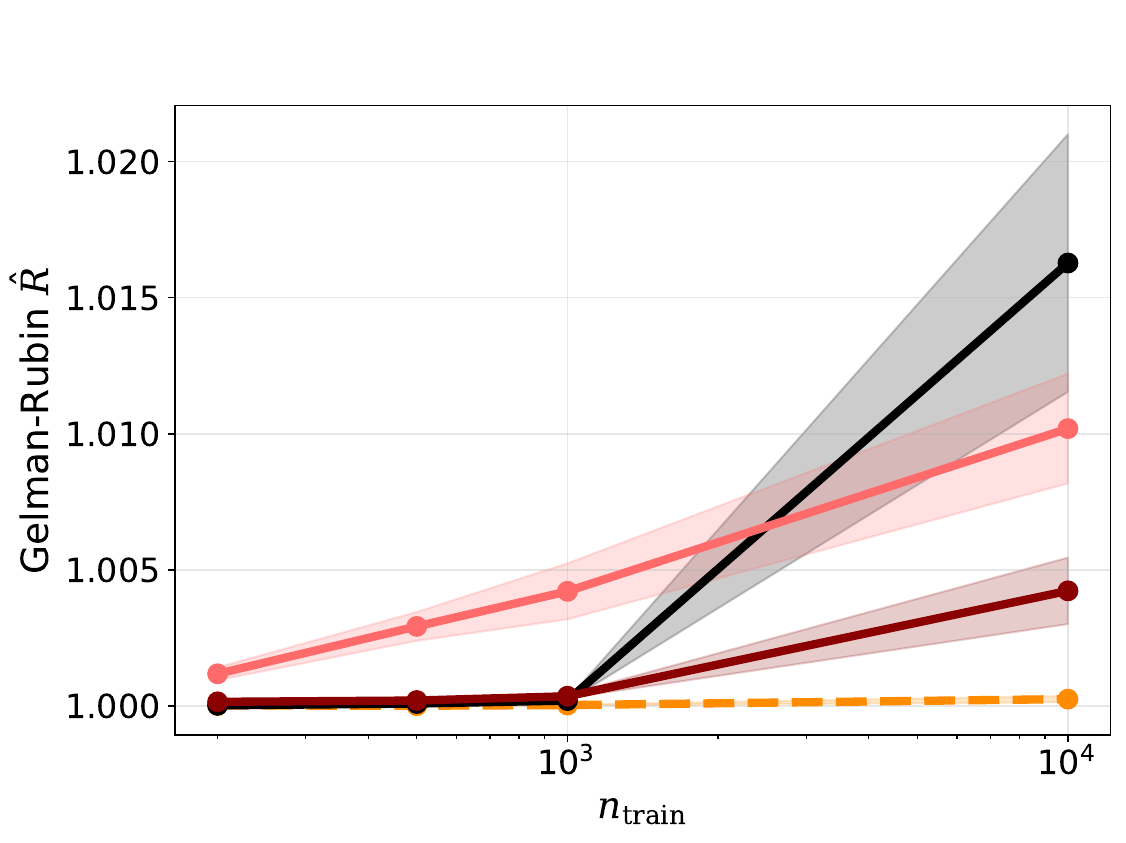}%
        \hfill
        \includegraphics[width=0.32\linewidth]{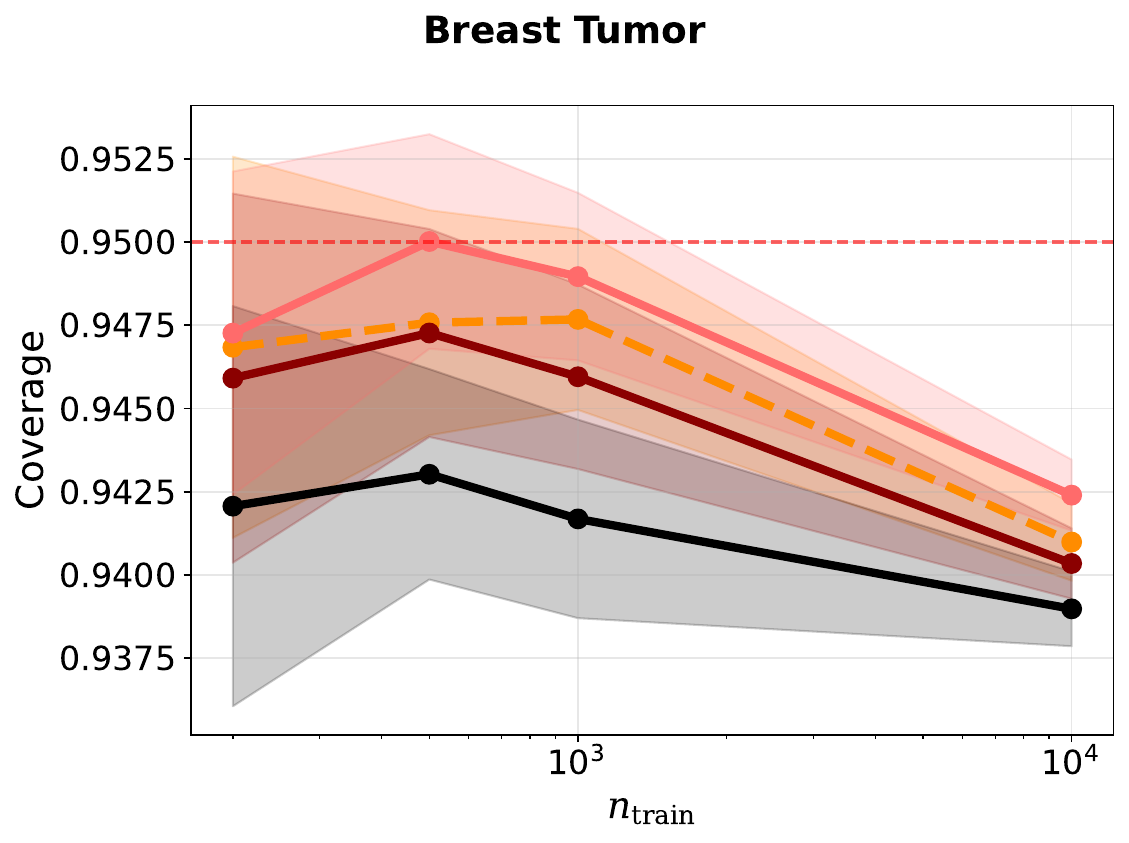}
        \hfill
        \includegraphics[width=0.32\linewidth]{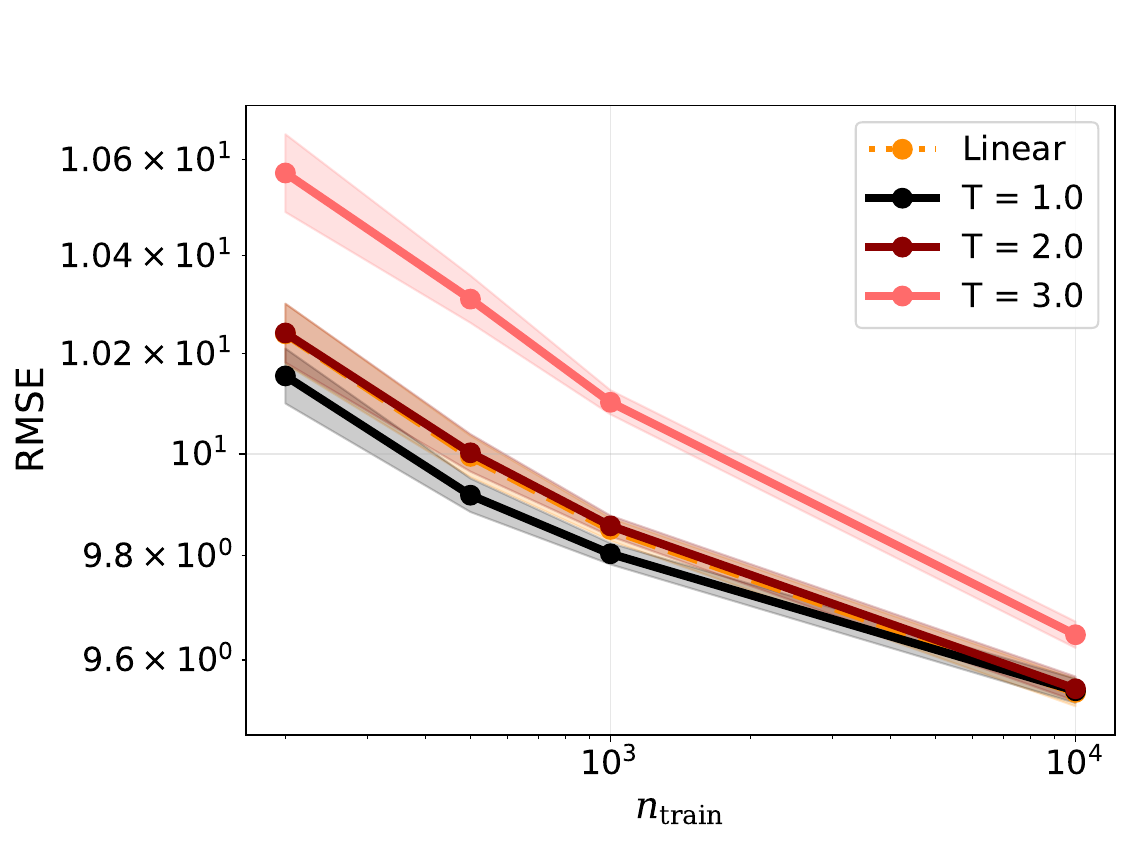}%
    \end{minipage} \\
    %\midrule
    % Low Lei Candes
    \begin{minipage}{0.9\textwidth}
        \centering
        \includegraphics[width=0.32\linewidth]{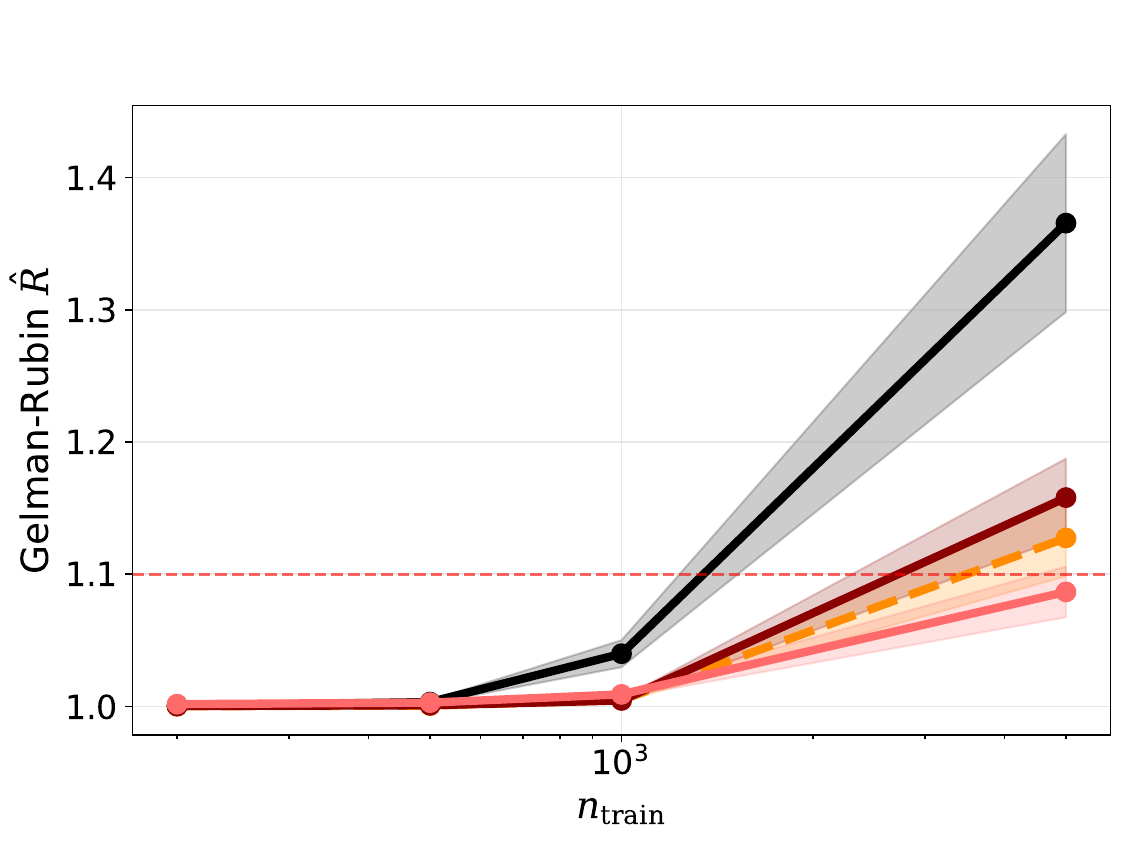}%
        \hfill
        \includegraphics[width=0.32\linewidth]{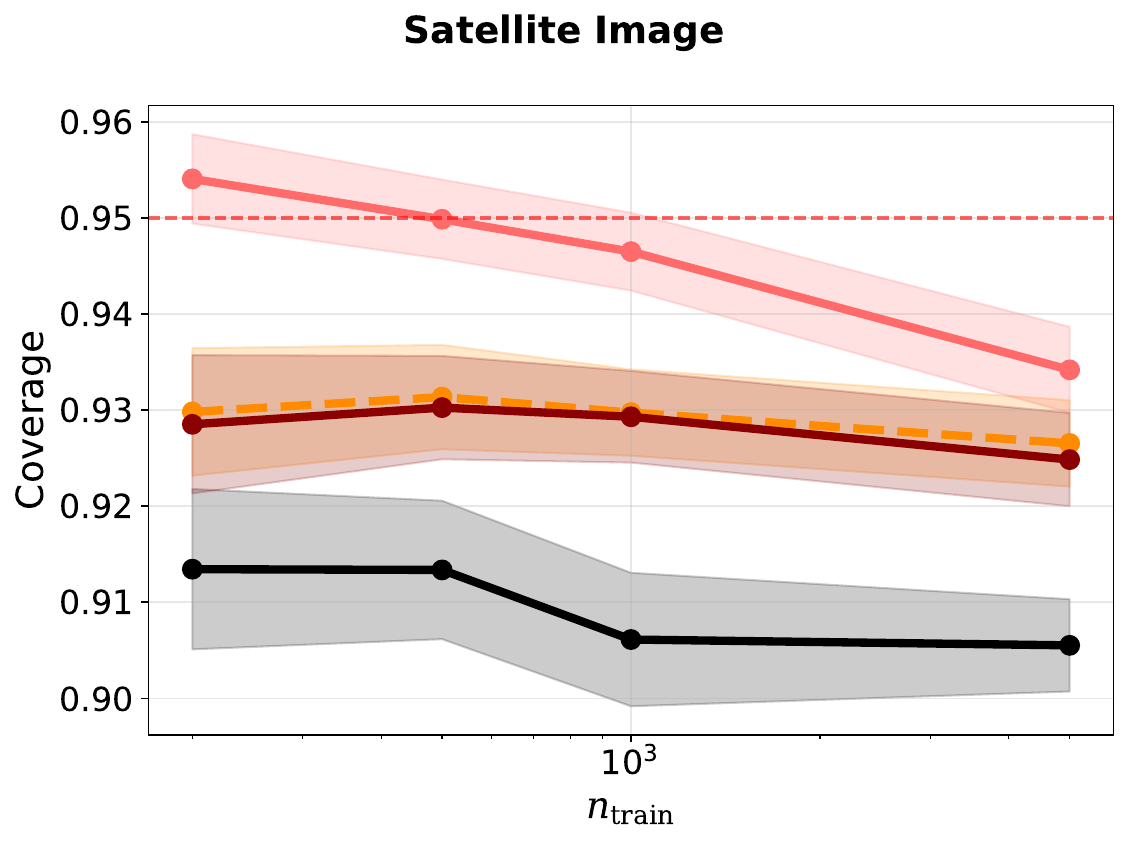}
        \hfill
        \includegraphics[width=0.32\linewidth]{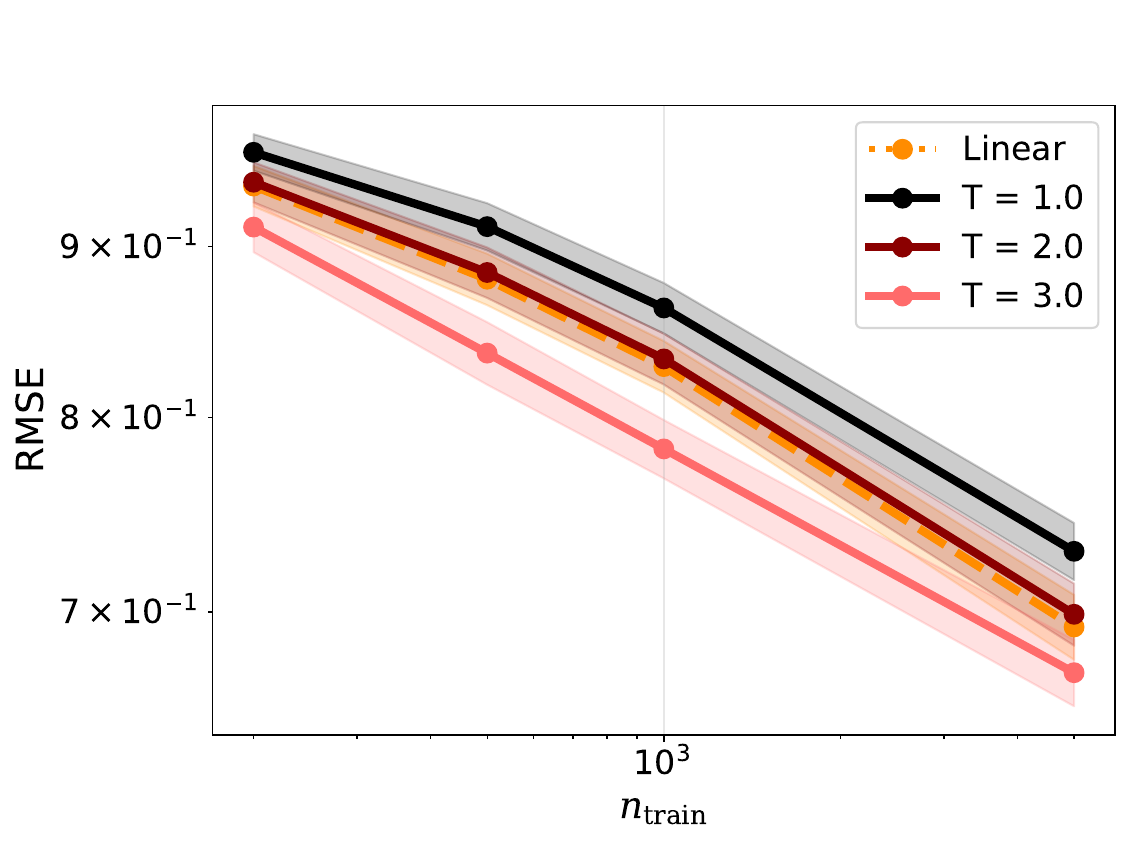}%
    \end{minipage} \\
    %\midrule
    % 1199 BNG echoMonths
    \begin{minipage}{0.9\textwidth}
        \centering
        \includegraphics[width=0.32\linewidth]{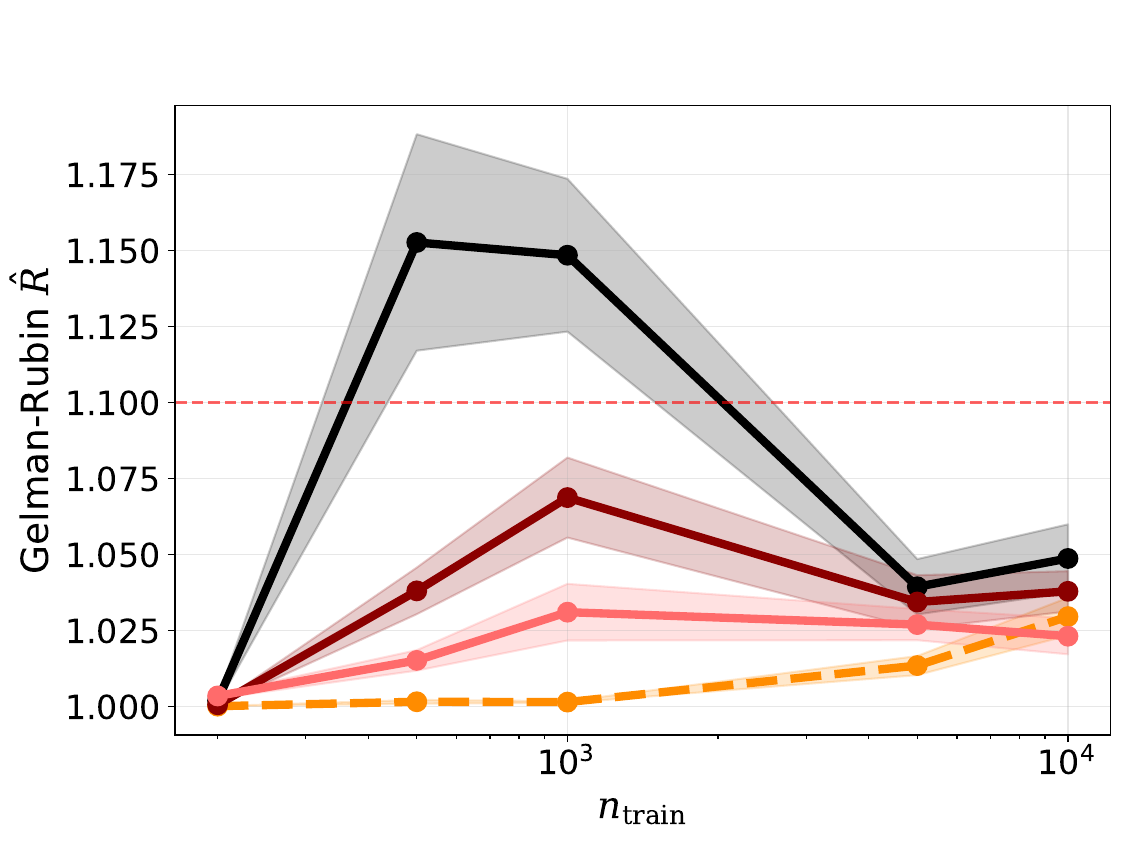}%
        \hfill
        \includegraphics[width=0.32\linewidth]{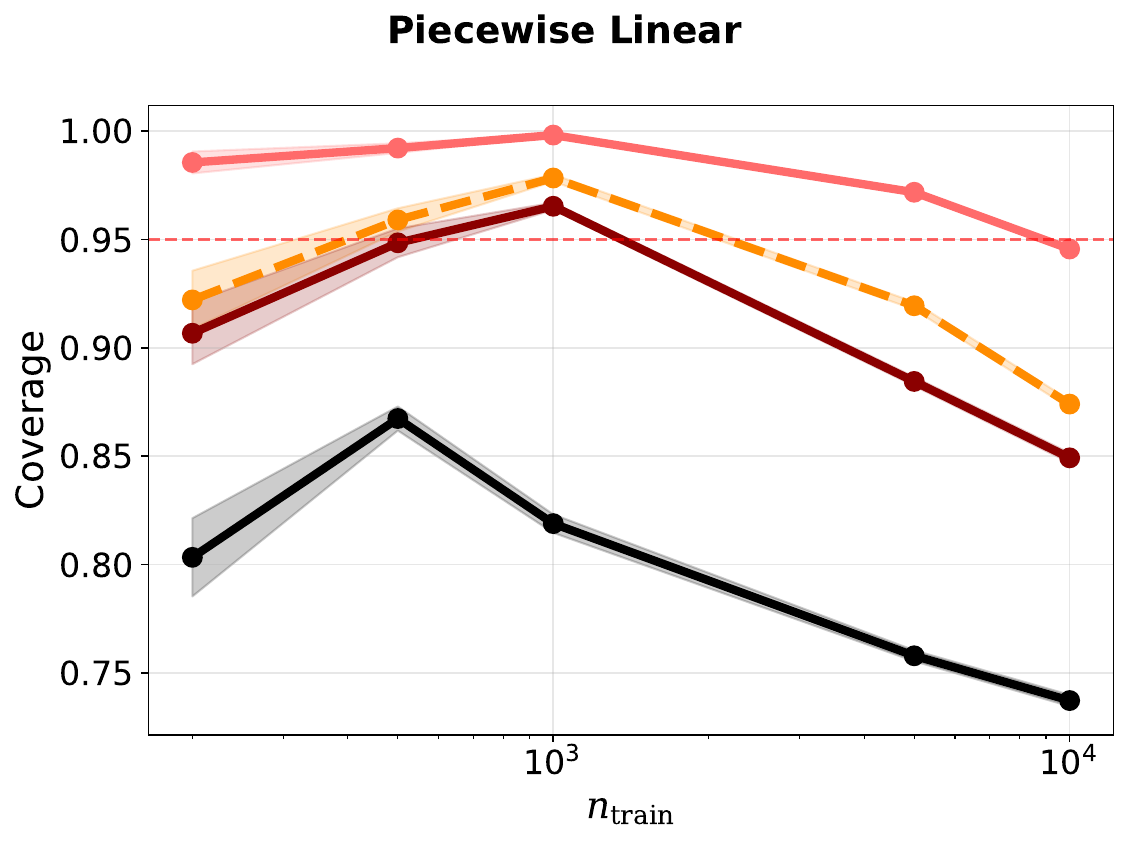}
        \hfill
        \includegraphics[width=0.32\linewidth]{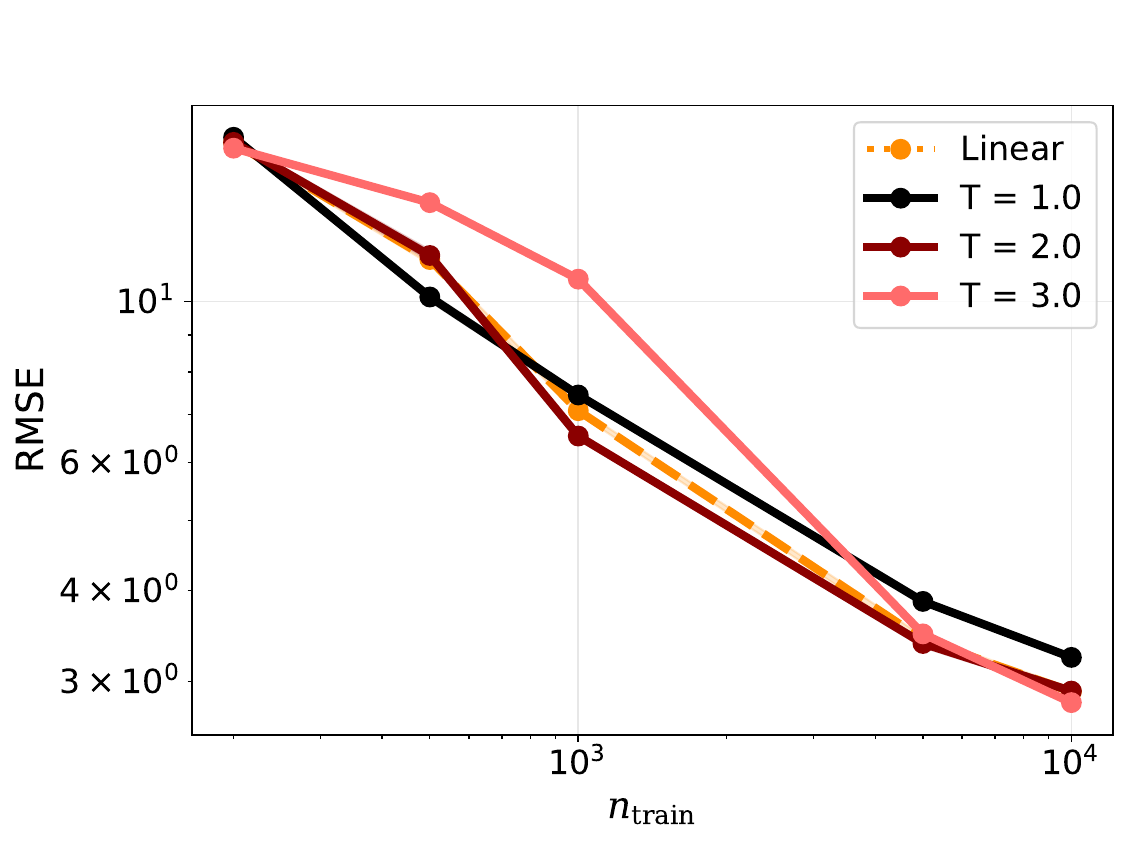}%
    \end{minipage} \\
    \end{tabular}
    \caption{
    Values for Gelman-Rubin $\hat R$ (left), coverage (center), and RMSE (right) for the BART sampler under different fixed temperatures ($T \in \braces{1,2,3}$) as well as a linear temperature schedule.
    Results are plotted for California Housing, Low Dimensional Smooth, Echo Months, Breast Tumor, Satellite Image, and Piecewise Linear datasets (from top to bottom).
    Error bars represent $\pm 1.96$ standard errors from 25 replicates.
    }
    \label{fig:experiment1_additional}
\end{figure}

\newpage
\begin{figure}[H]
    \centering
    \begin{tabular}{c}
      \begin{minipage}{0.9\textwidth}
        \centering
        \includegraphics[width=0.32\linewidth]{plots_publication_v2/plots_gr_quantiles/gr_quantiles_california_housing_ntrain10k.png}%
        \hfill
        \includegraphics[width=0.32\linewidth]{plots_publication_v2/plots_gr_quantiles/gr_quantiles_low_lei_candes_ntrain10k.png}
        \hfill
        \includegraphics[width=0.32\linewidth]{plots_publication_v2/plots_gr_quantiles/gr_quantiles_1199_BNG_echoMonths_ntrain10k.png}%
    \end{minipage} \\
    %\midrule
      \begin{minipage}{0.9\textwidth}
        \centering
        \includegraphics[width=0.32\linewidth]{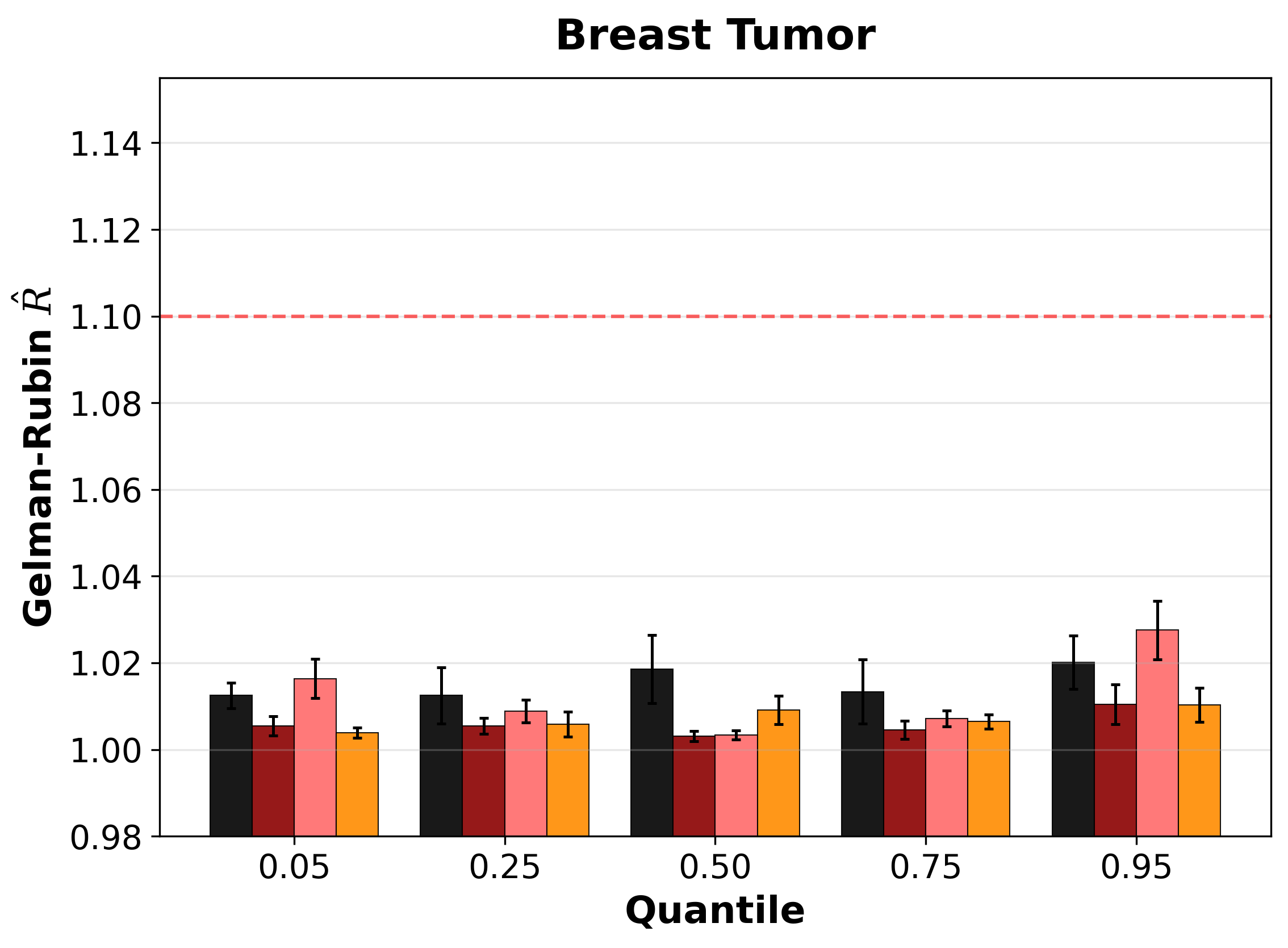}%
        \hfill
        \includegraphics[width=0.32\linewidth]{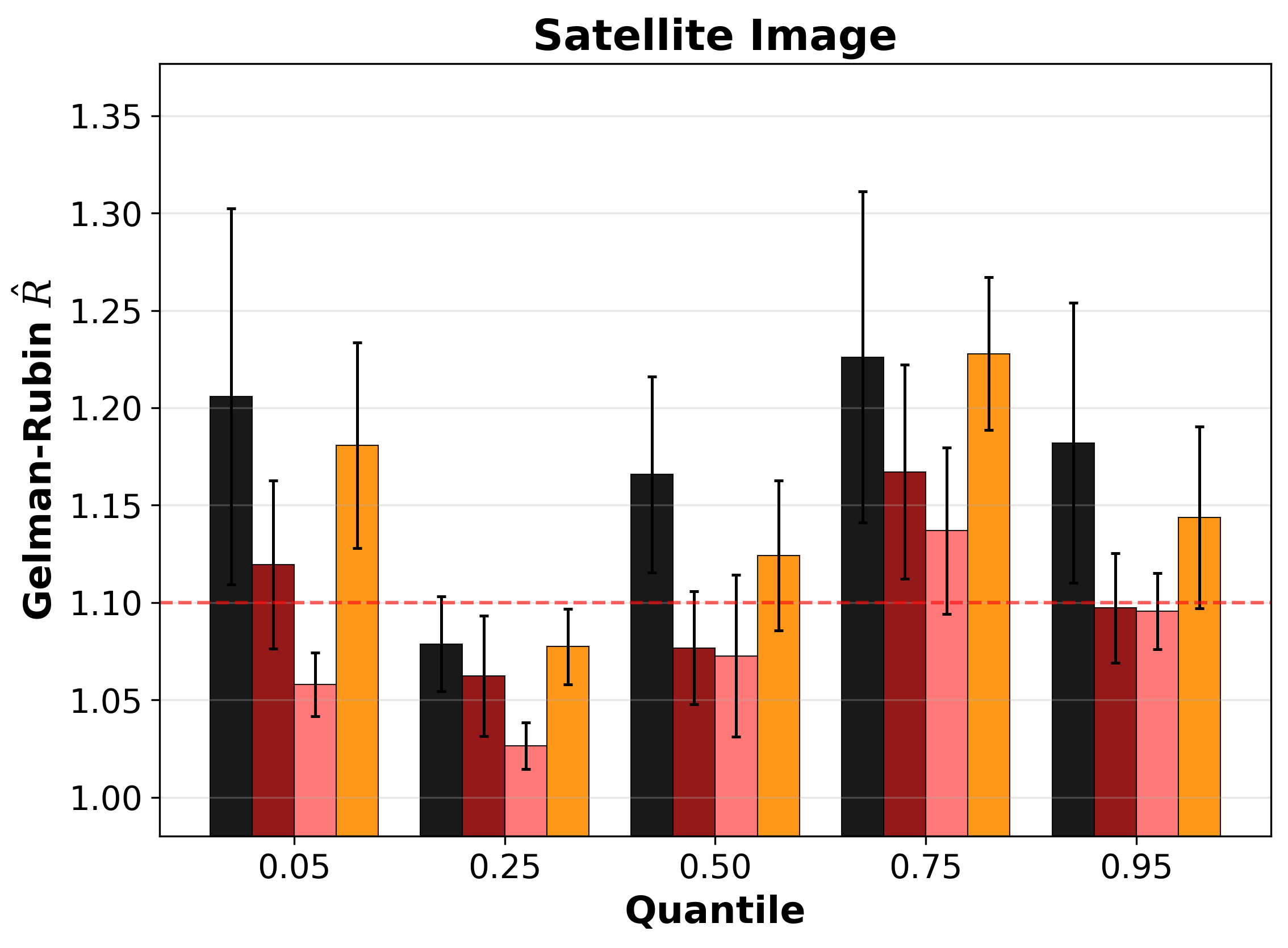}
        \hfill
        \includegraphics[width=0.32\linewidth]{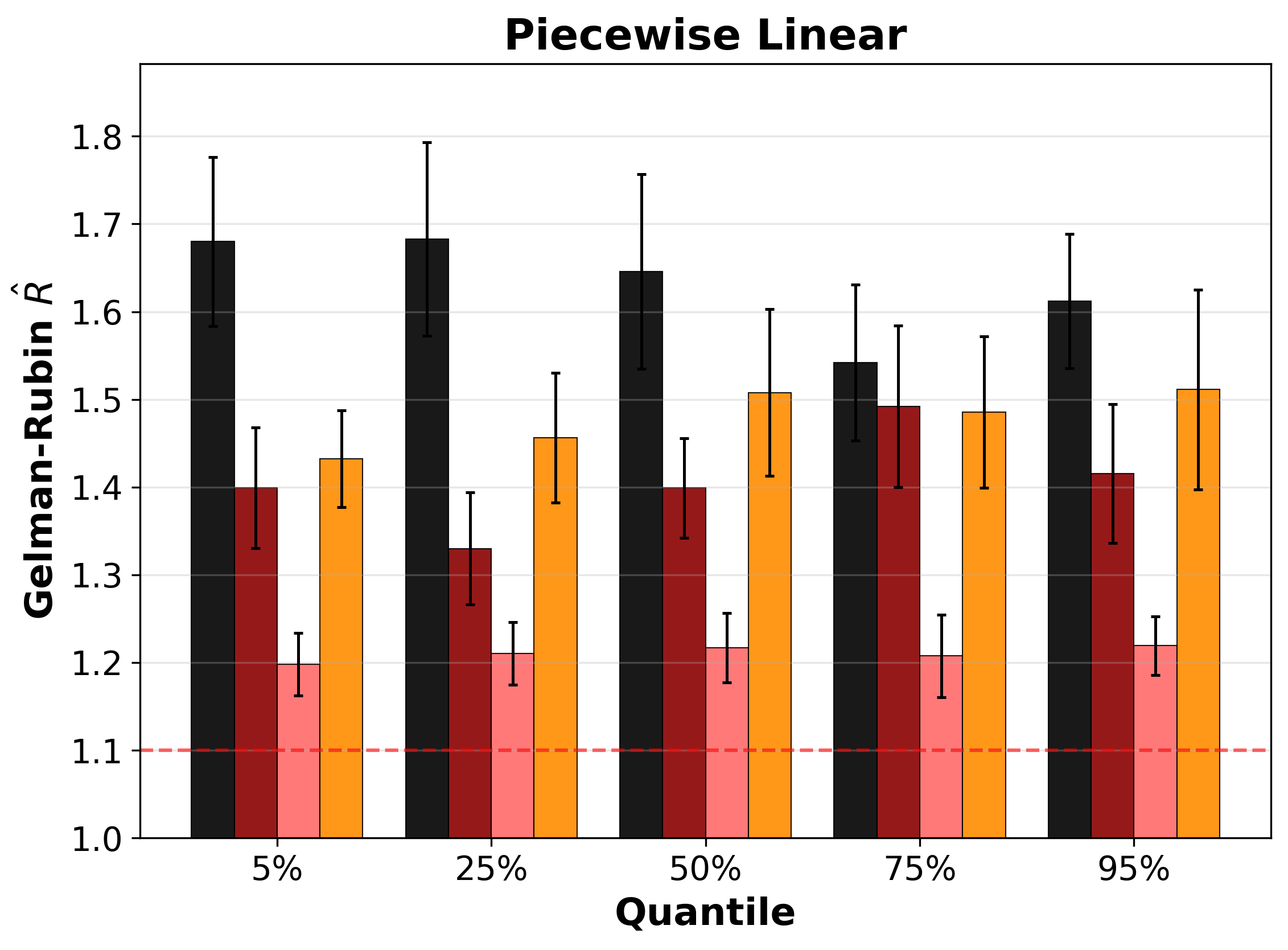}%
    \end{minipage} \\
    \end{tabular}
    \caption{
    Values for Gelman-Rubin $\hat R$ for the BART sampler under different fixed temperatures ($T \in \braces{1,2,3}$) when $\hat R$ is computed with respect to the 0.05, 0.25, 0.50, 0.75, and 0.95 quantiles for the fitted responses on the held-out test set.
    Results are plotted for the California Housing (leftow Dimensional Smooth (center, top), Echo Months (right, top), Breast Tumor (left, bottom), Satellite Image (center, bottom), and Piecewise Linear (right, bottom) datasets. 
        }
    \label{fig:experiment1_bar_additional}
\end{figure}

\newpage

% \subsubsection{Experiment 2: Trees}
\begin{figure}[H]
    \centering
    \begin{tabular}{c}
    % California Housing
    \begin{minipage}{0.9\textwidth}
        \centering
        \includegraphics[width=0.32\linewidth]{plots_publication_v2/Trees/Credible_Intervals/Trees_california_housing_gr_rmse.pdf}
        \hfill
        \includegraphics[width=0.32\linewidth]{plots_publication_v2/Trees/Credible_Intervals/Trees_california_housing_coverage.pdf}
        \hfill
        \includegraphics[width=0.32\linewidth]{plots_publication_v2/Trees/Credible_Intervals/Trees_california_housing_rmse.pdf}%
    \end{minipage} \\
    %\midrule
    % Low Lei Candes
    \begin{minipage}{0.9\textwidth}
        \centering
        \includegraphics[width=0.32\linewidth]{plots_publication_v2/Trees/Credible_Intervals/Trees_low_lei_candes_gr_rmse.pdf}%
        \hfill
        \includegraphics[width=0.32\linewidth]{plots_publication_v2/Trees/Credible_Intervals/Trees_low_lei_candes_coverage.pdf}
        \hfill
        \includegraphics[width=0.32\linewidth]{plots_publication_v2/Trees/Credible_Intervals/Trees_low_lei_candes_rmse.pdf}%
    \end{minipage} \\
    %\midrule
    % 1199 BNG echoMonths
    \begin{minipage}{0.9\textwidth}
        \centering
        \includegraphics[width=0.32\linewidth]{plots_publication_v2/Trees/Credible_Intervals/Trees_1199_BNG_echoMonths_gr_rmse.pdf}%
        \hfill
        \includegraphics[width=0.32\linewidth]{plots_publication_v2/Trees/Credible_Intervals/Trees_1199_BNG_echoMonths_coverage.pdf}
        \hfill
        \includegraphics[width=0.32\linewidth]{plots_publication_v2/Trees/Credible_Intervals/Trees_1199_BNG_echoMonths_rmse.pdf}%
    \end{minipage} \\
    %\midrule
    % 1201 BNG breastTumor
    \begin{minipage}{0.9\textwidth}
        \centering
        \includegraphics[width=0.32\linewidth]{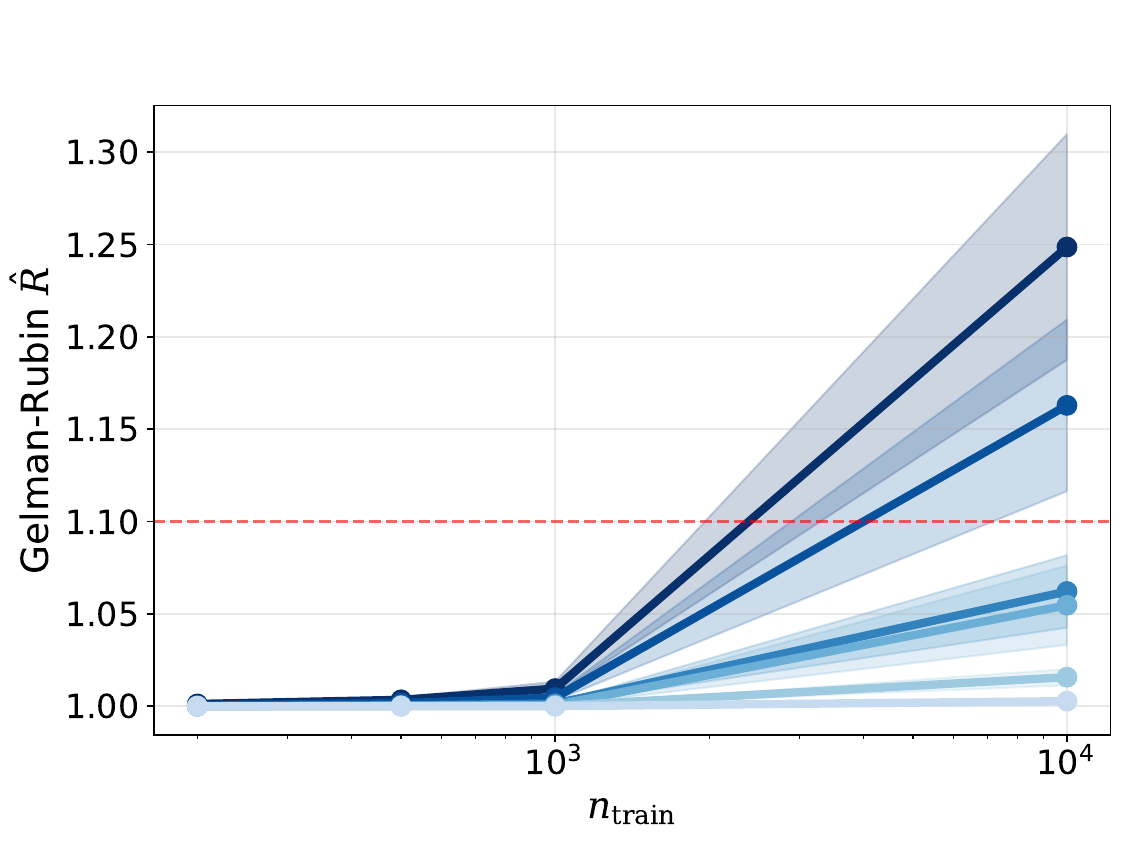}
        \hfill
        \includegraphics[width=0.32\linewidth]{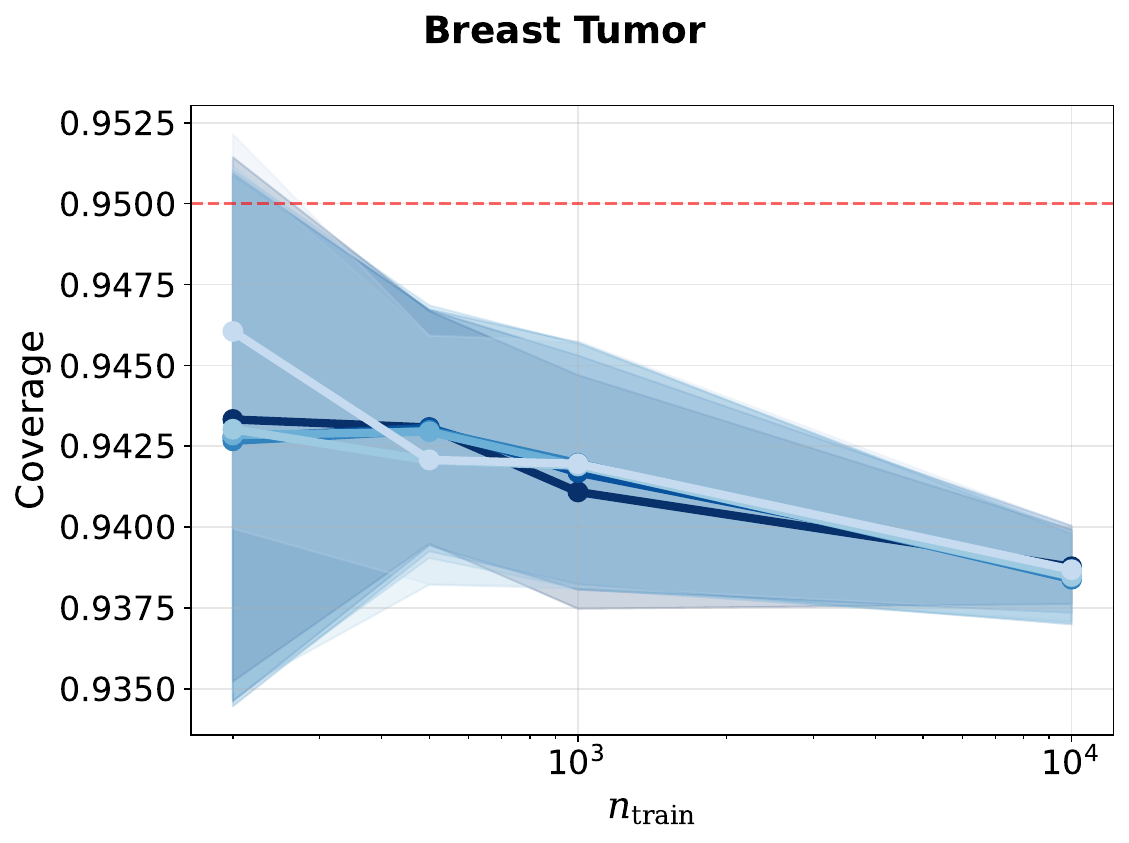}
        \hfill
        \includegraphics[width=0.32\linewidth]{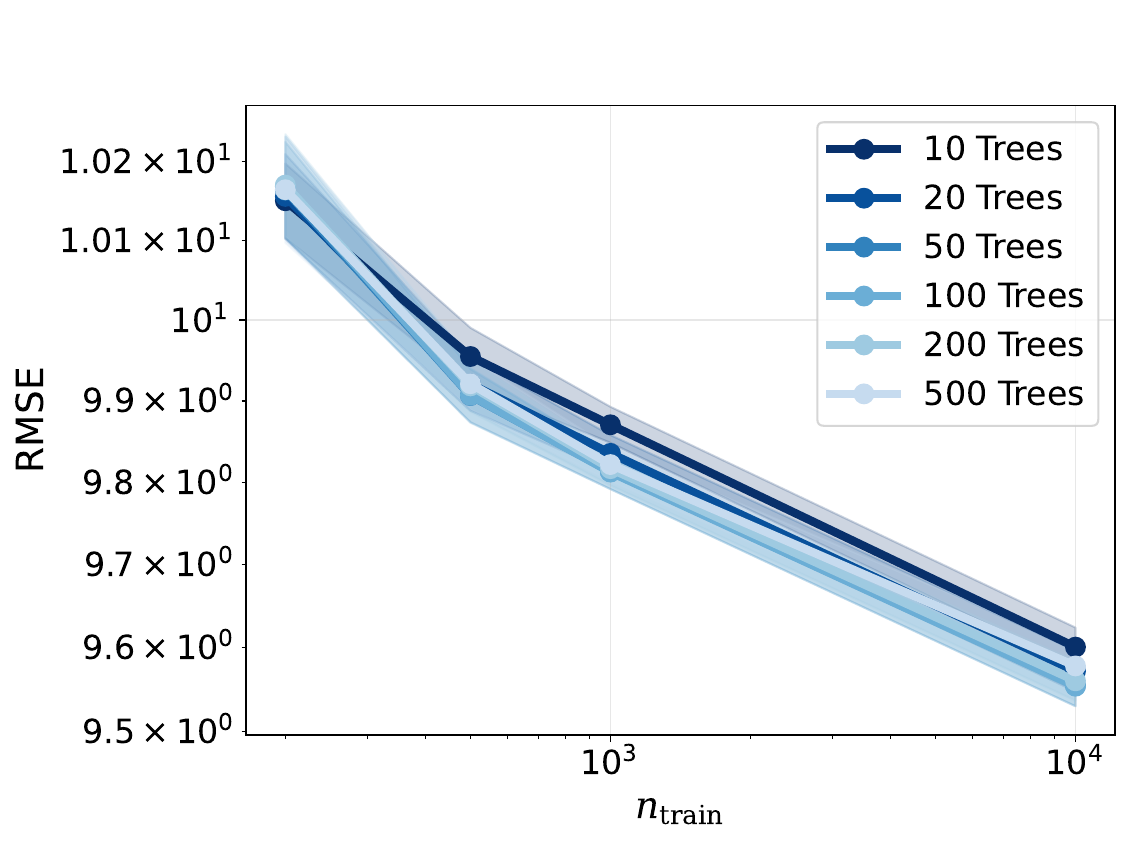}%
    \end{minipage} \\
    %\midrule
    % 294 satellite_image
    \begin{minipage}{0.9\textwidth}
        \centering
        \includegraphics[width=0.32\linewidth]{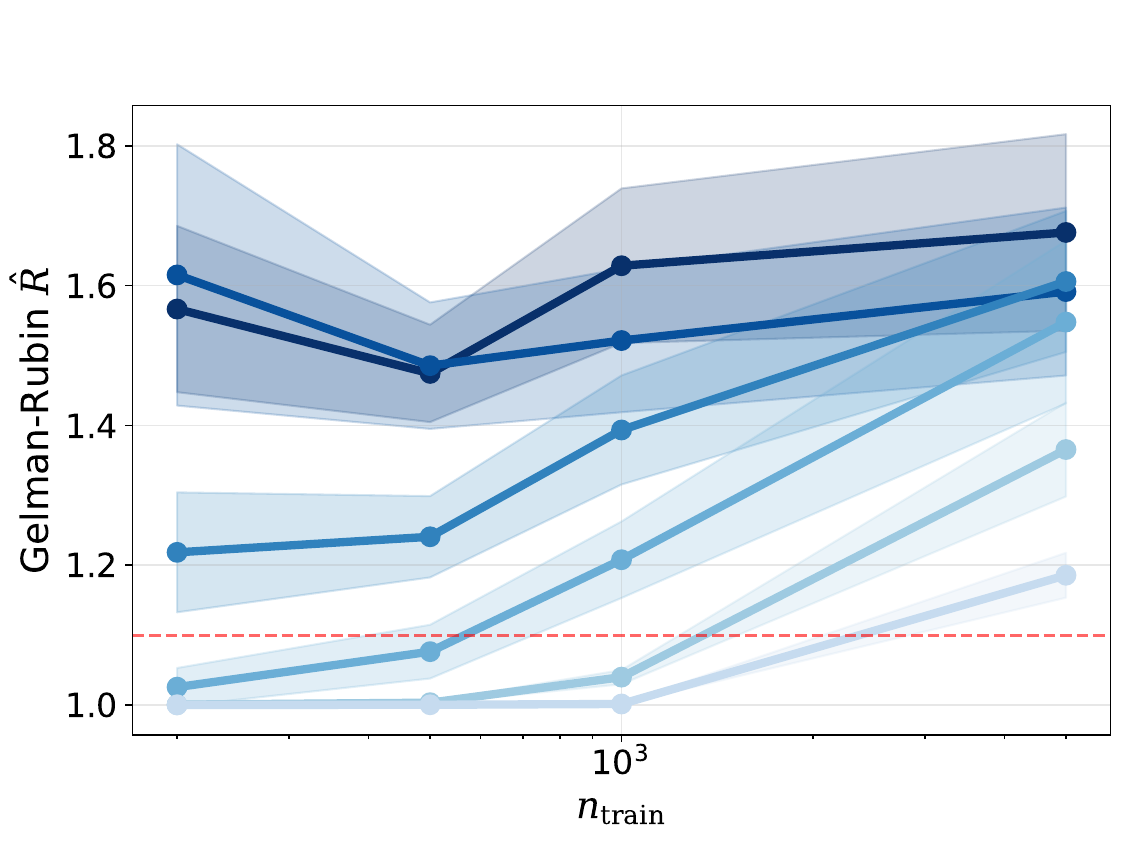}%
        \hfill
        \includegraphics[width=0.32\linewidth]{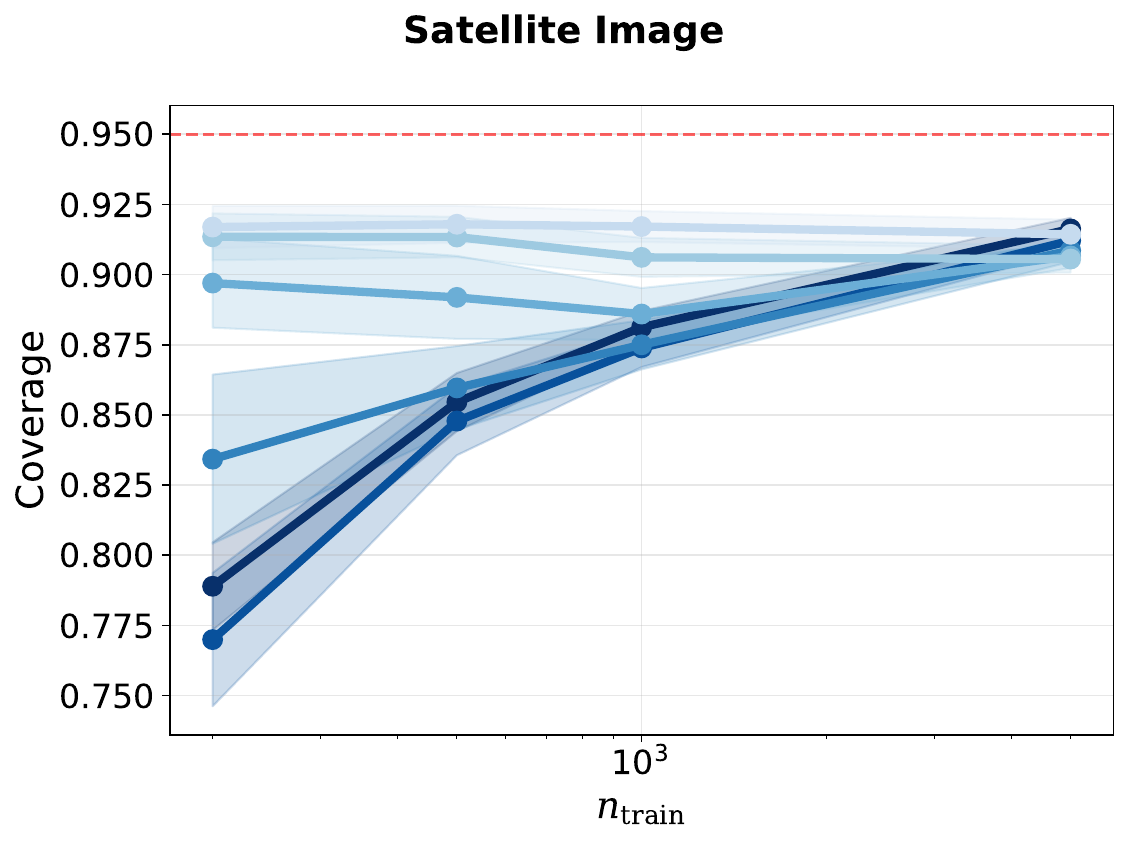}
        \hfill
        \includegraphics[width=0.32\linewidth]{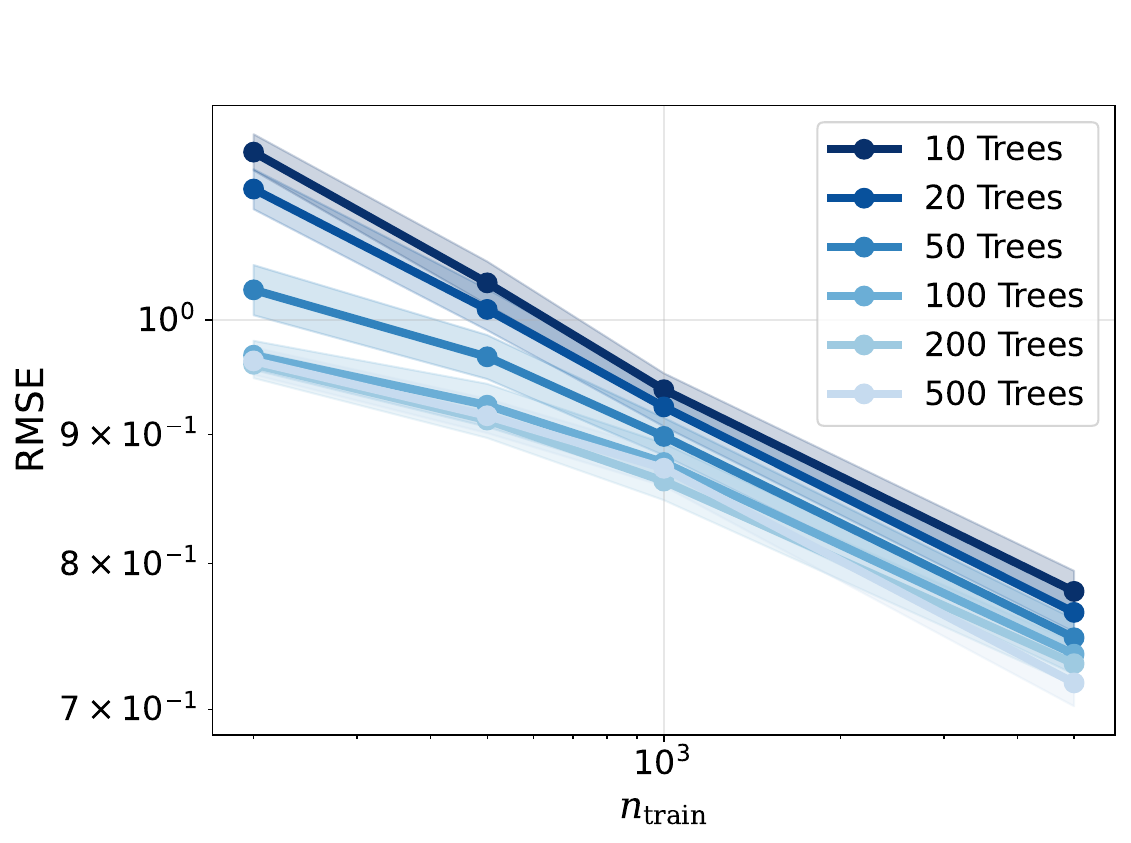}%
    \end{minipage} \\
    %\midrule
    % piecewise_linear_kunzel
    \begin{minipage}{0.9\textwidth}
        \centering
        \includegraphics[width=0.32\linewidth]{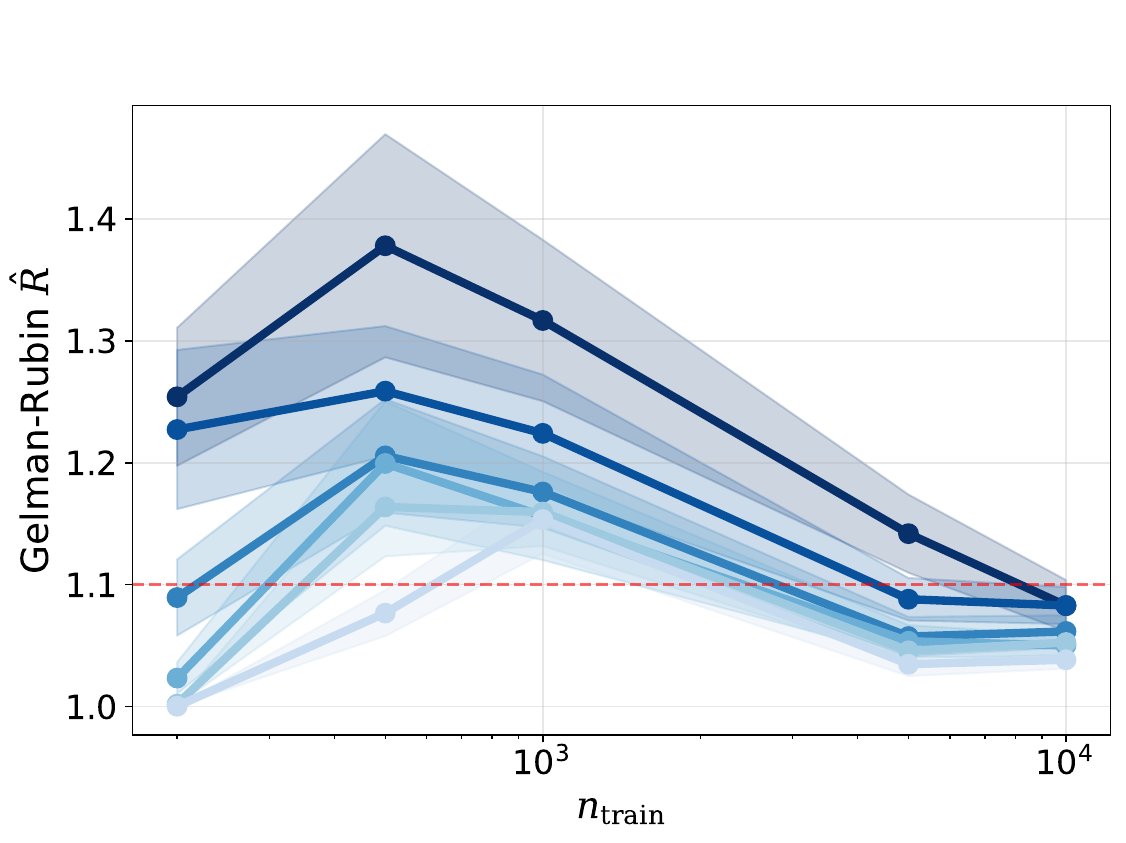}%
        \hfill
        \includegraphics[width=0.32\linewidth]{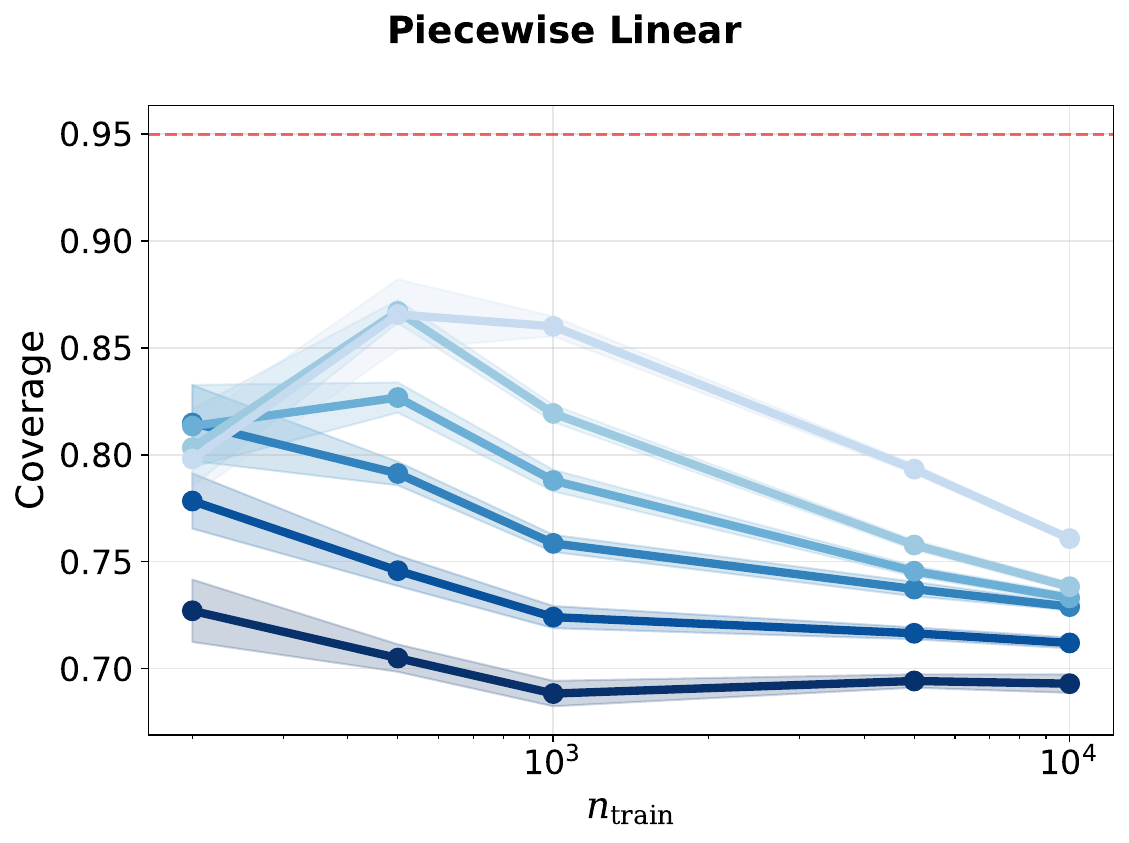}
        \hfill
        \includegraphics[width=0.32\linewidth]{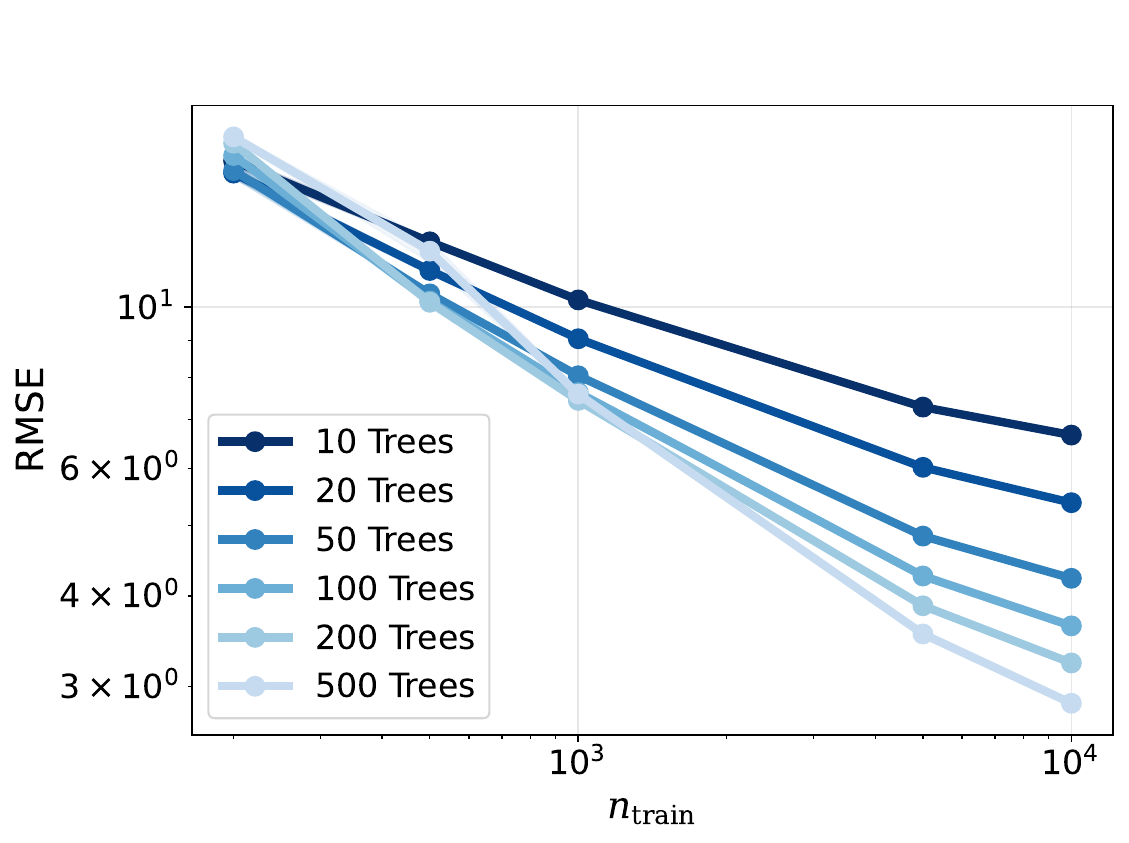}%
    \end{minipage} \\
    \end{tabular}
    \caption{
    Values for Gelman-Rubin $\hat R$ (left), coverage (center), and RMSE (right) for the BART sampler under different numbers of trees in the BART model.
    Results are plotted for California Housing, Low Dimensional Smooth, Echo Months, Breast Tumor, Satellite Image, and Piecewise Linear datasets (from top to bottom).
    Error bars represent $\pm 1.96$ standard errors from 25 replicates.
    }
    \label{fig:experiment2_additional}
\end{figure}

\newpage

\begin{figure}[H]
    \centering
    \begin{tabular}{c}
    % California Housing
    \begin{minipage}{0.9\textwidth}
        \centering
        \includegraphics[width=0.32\linewidth]{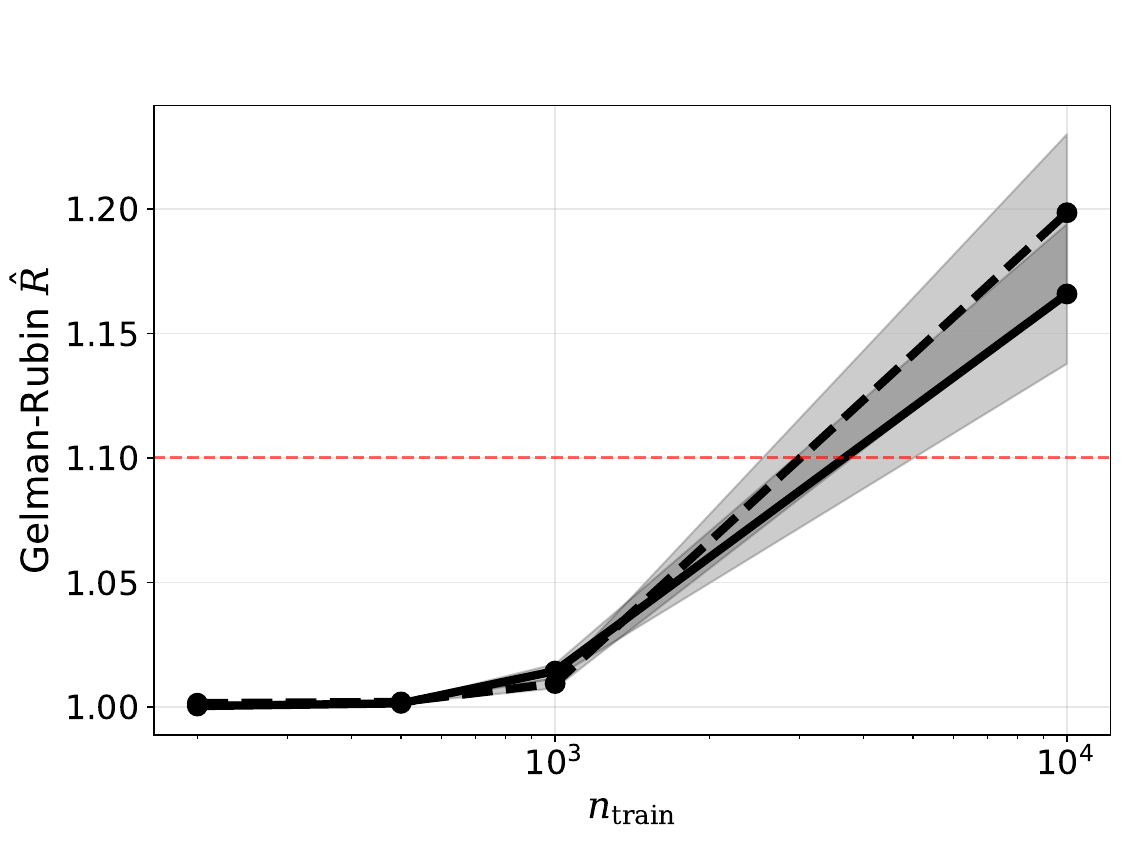}%
        \hfill
        \includegraphics[width=0.32\linewidth]{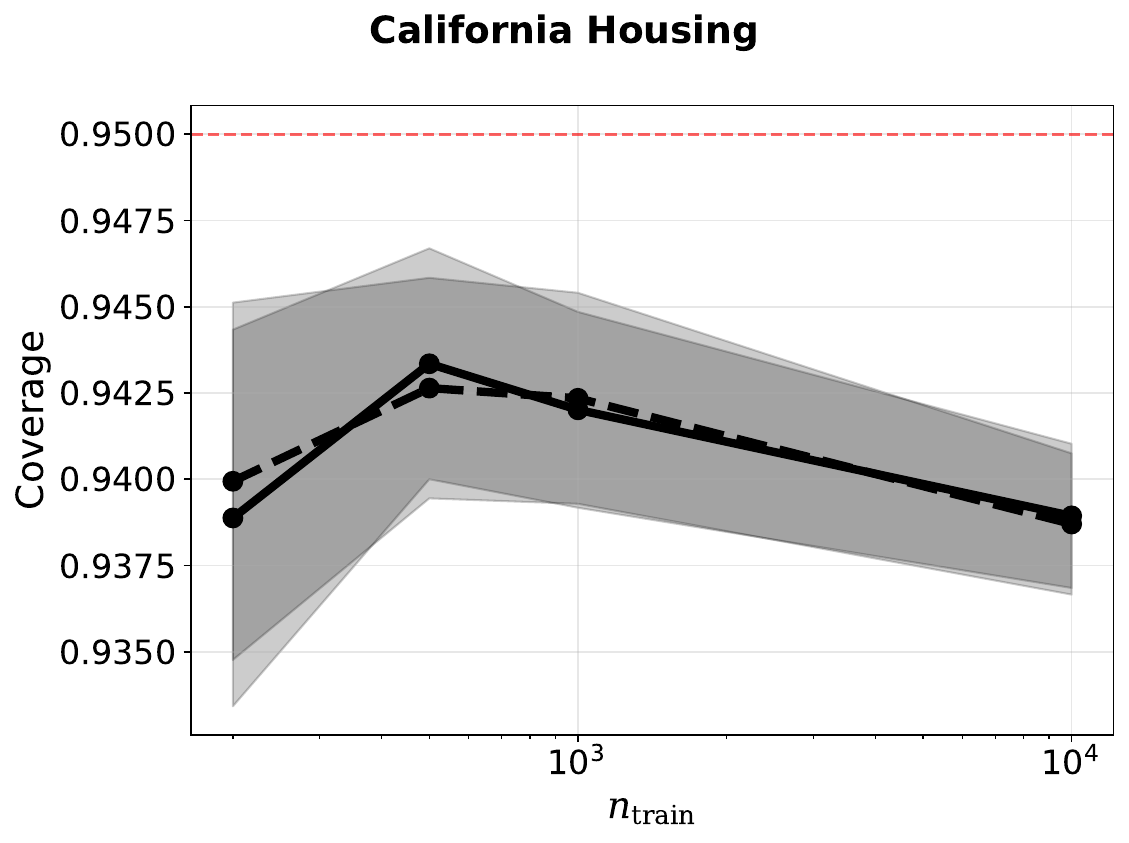}%
        \hfill
        \includegraphics[width=0.32\linewidth]{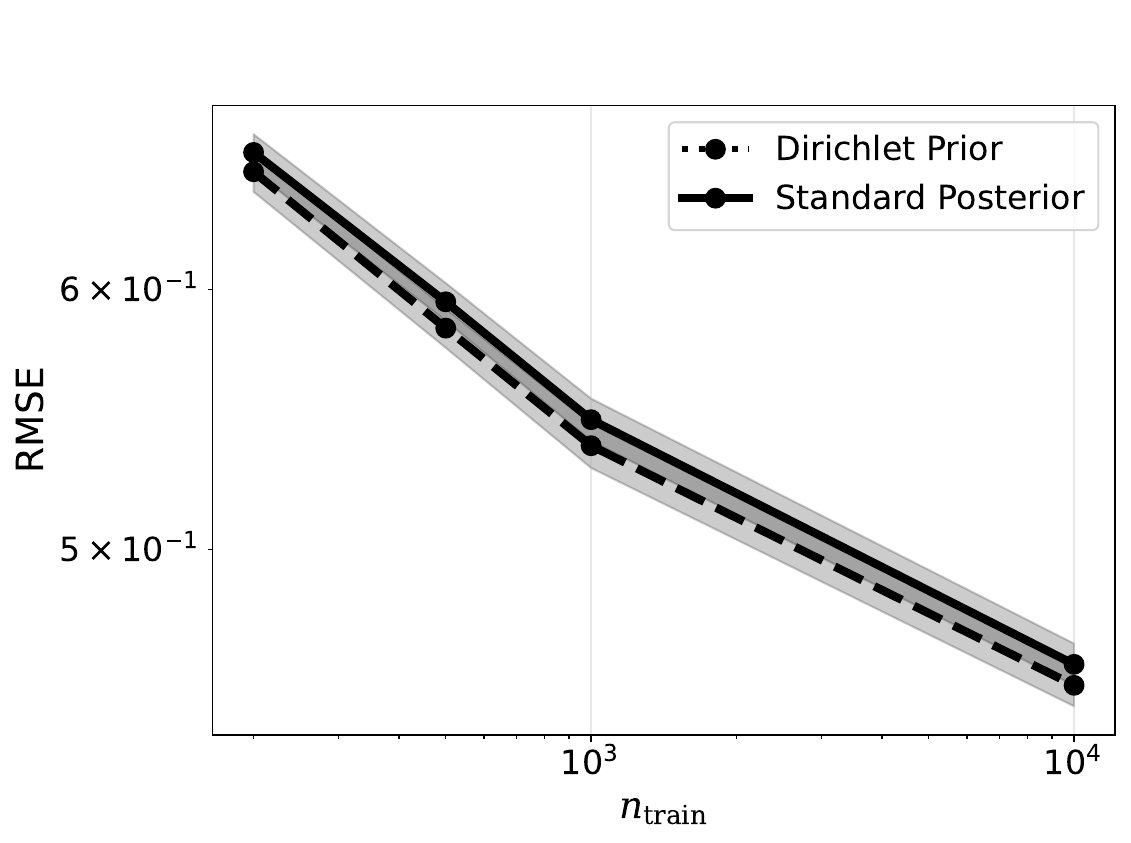}
    \end{minipage} \\
    % Low Lei Candes
    \begin{minipage}{0.9\textwidth}
        \centering
        \includegraphics[width=0.32\linewidth]{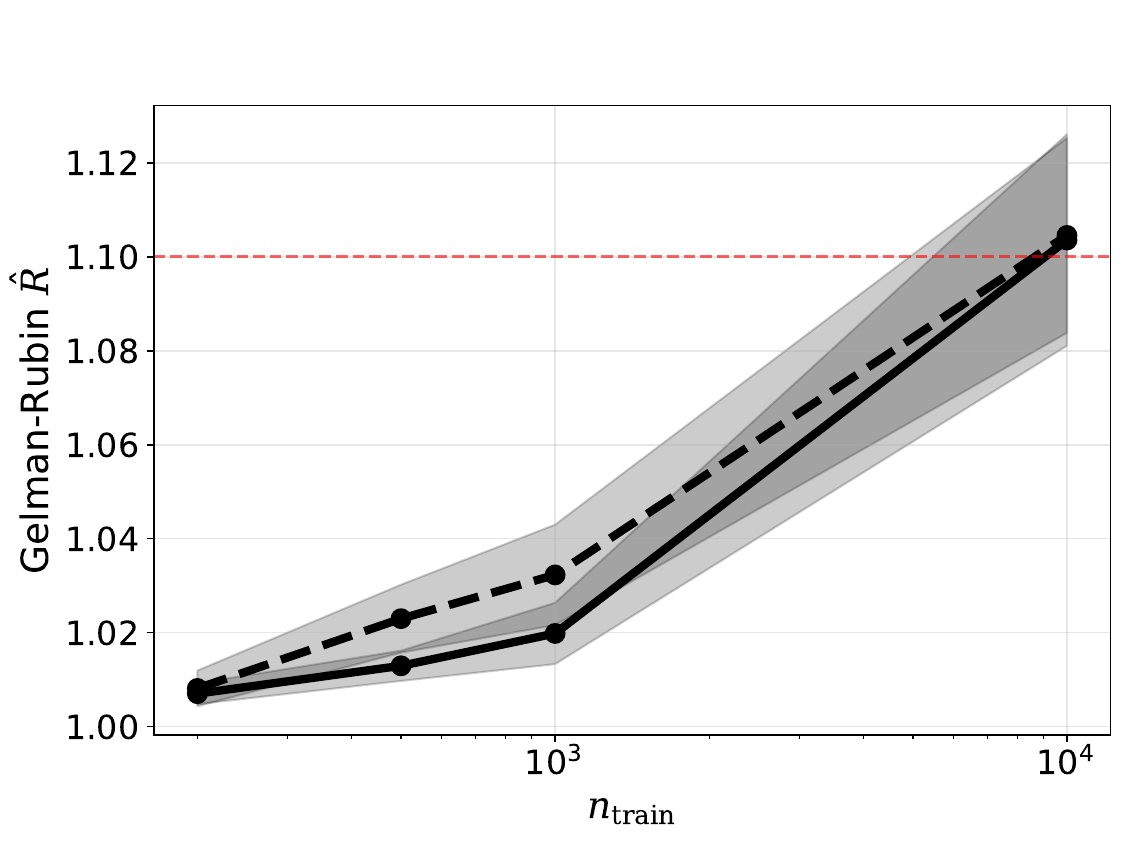}%
        \hfill
        \includegraphics[width=0.32\linewidth]{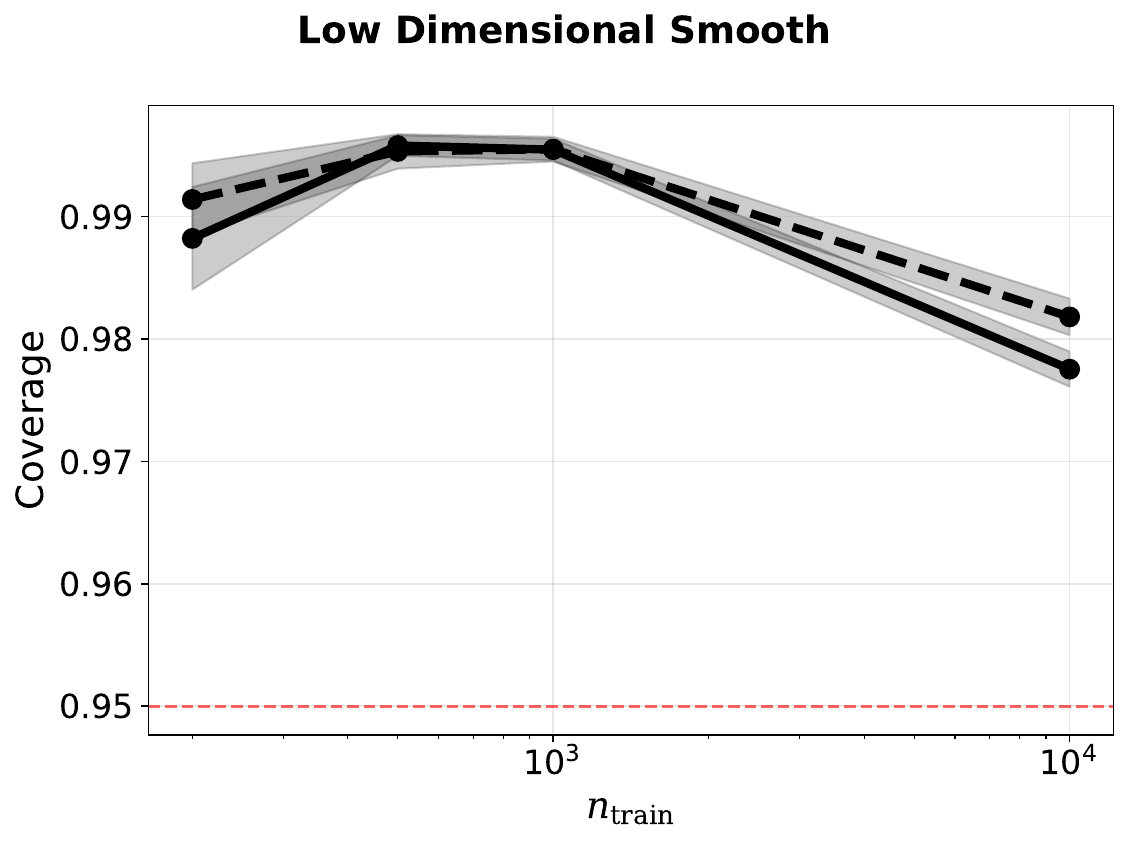}%
        \hfill
        \includegraphics[width=0.32\linewidth]{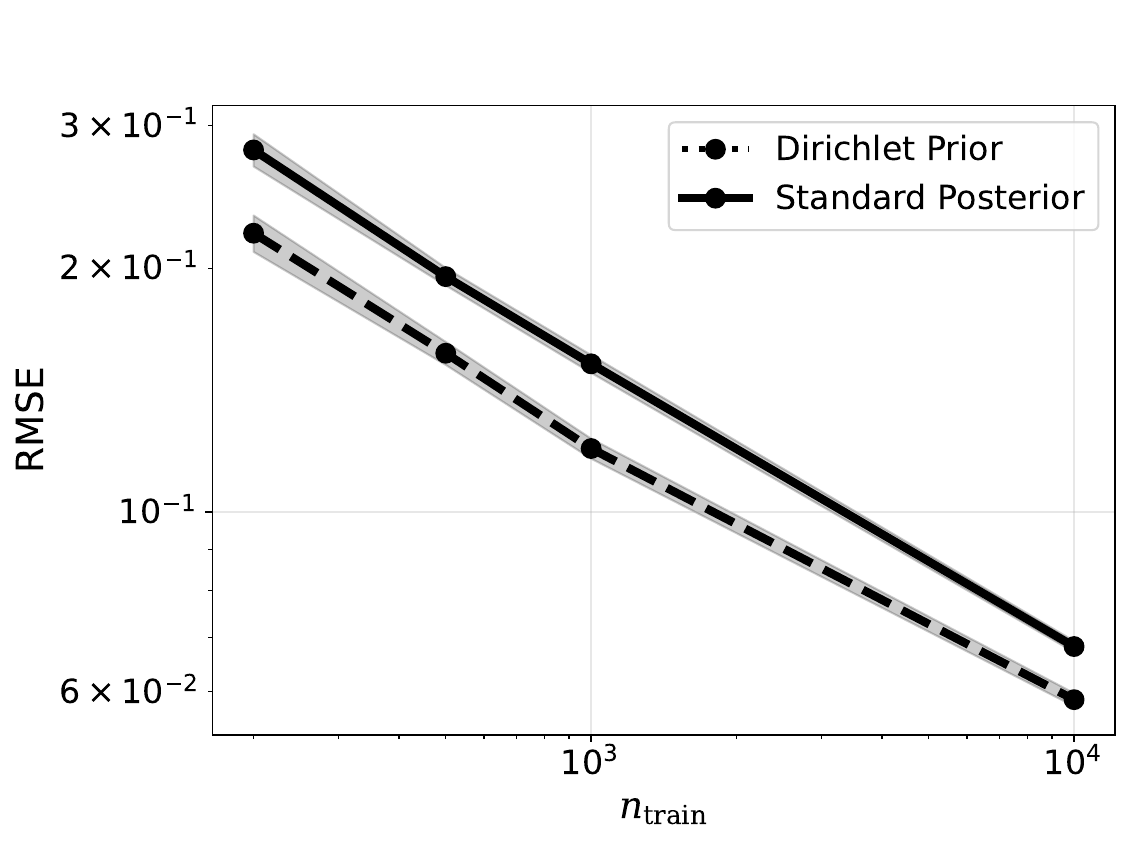}
    \end{minipage} \\
    % 1199 BNG echoMonths
    \begin{minipage}{0.9\textwidth}
        \centering
        \includegraphics[width=0.32\linewidth]{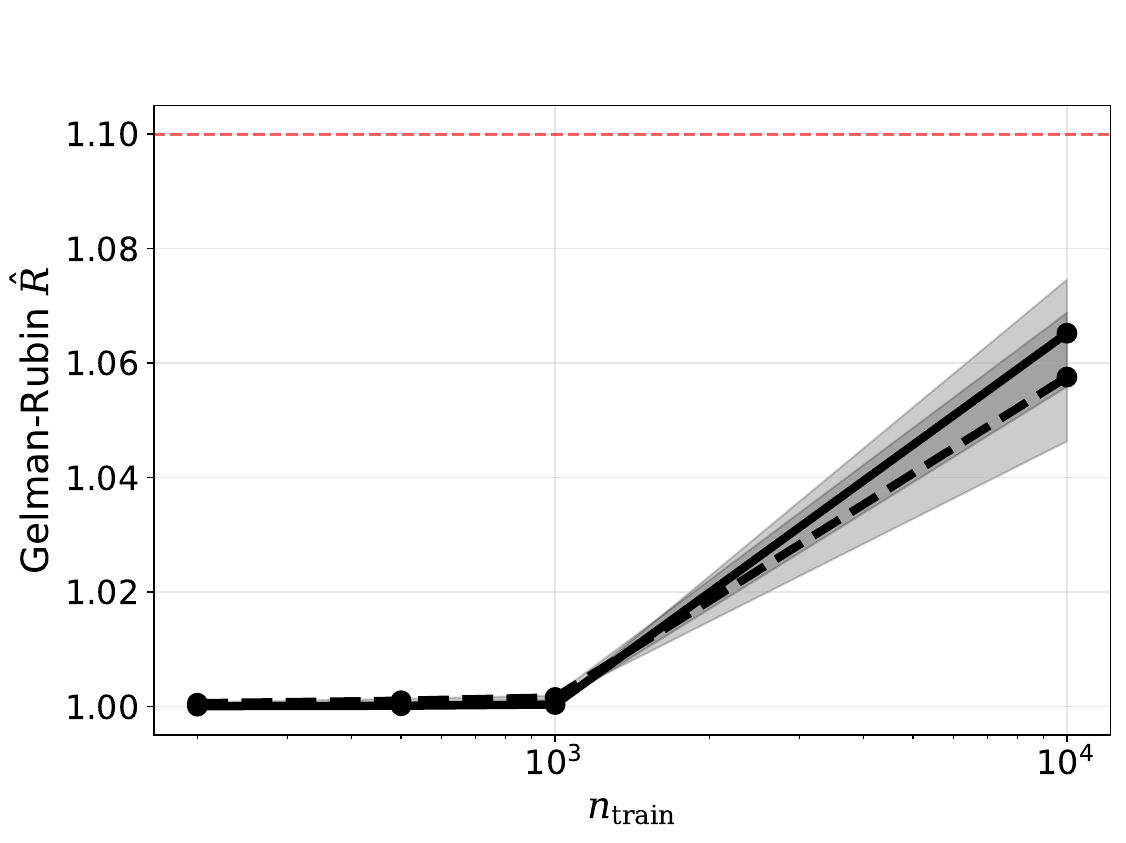}%
        \hfill
        \includegraphics[width=0.32\linewidth]{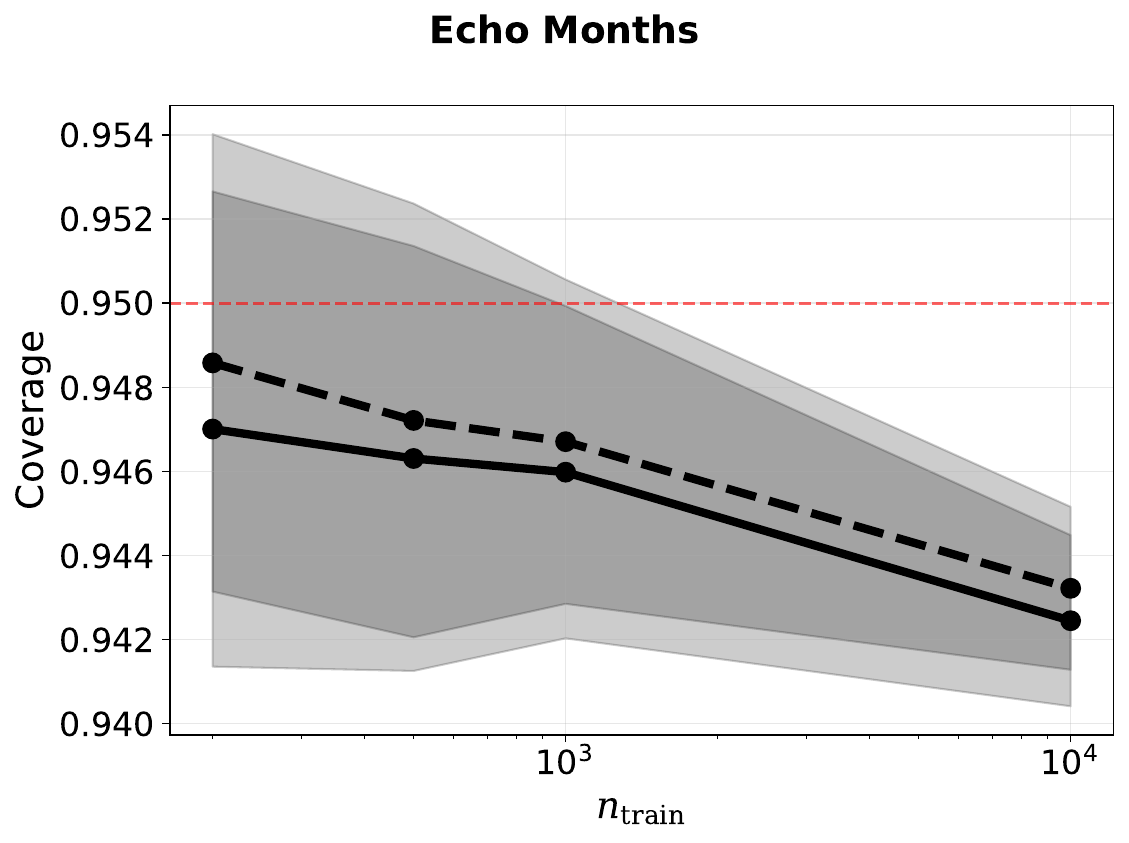}%
        \hfill
        \includegraphics[width=0.32\linewidth]{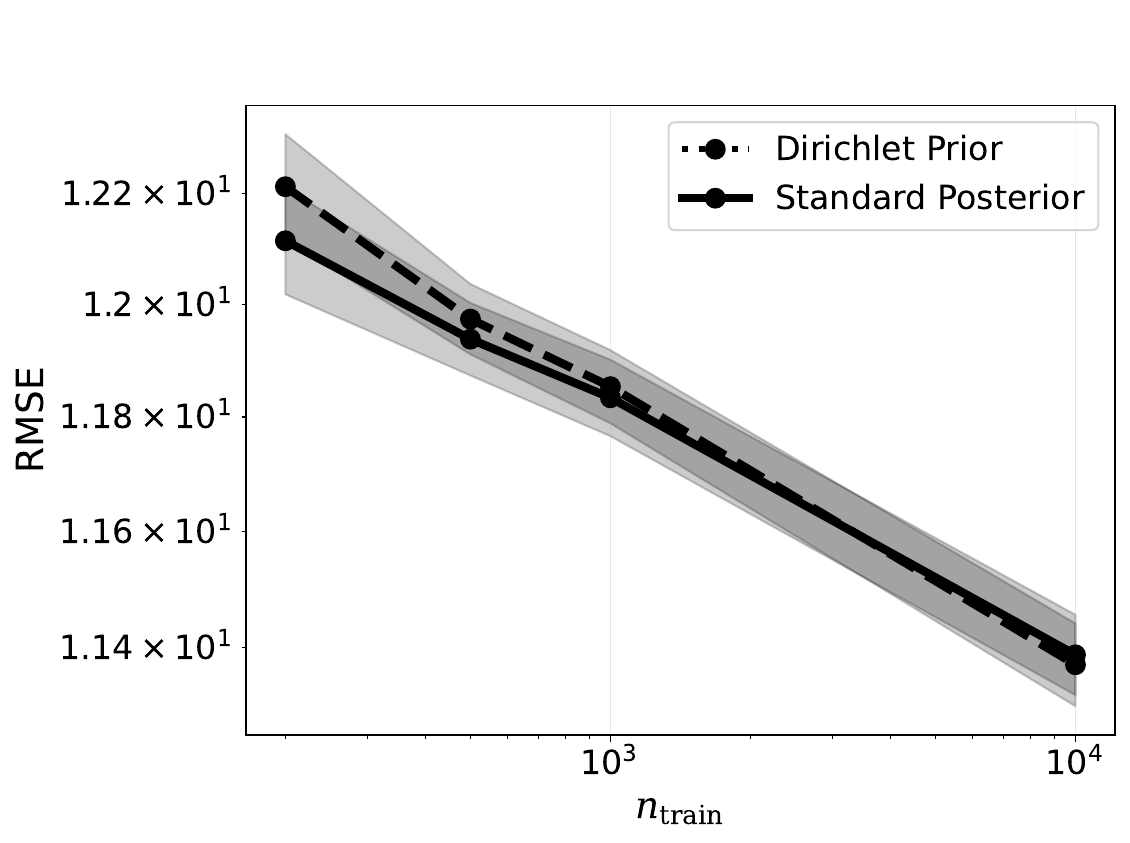}
    \end{minipage} \\
    \begin{minipage}{0.9\textwidth}
        \centering
        \includegraphics[width=0.32\linewidth]{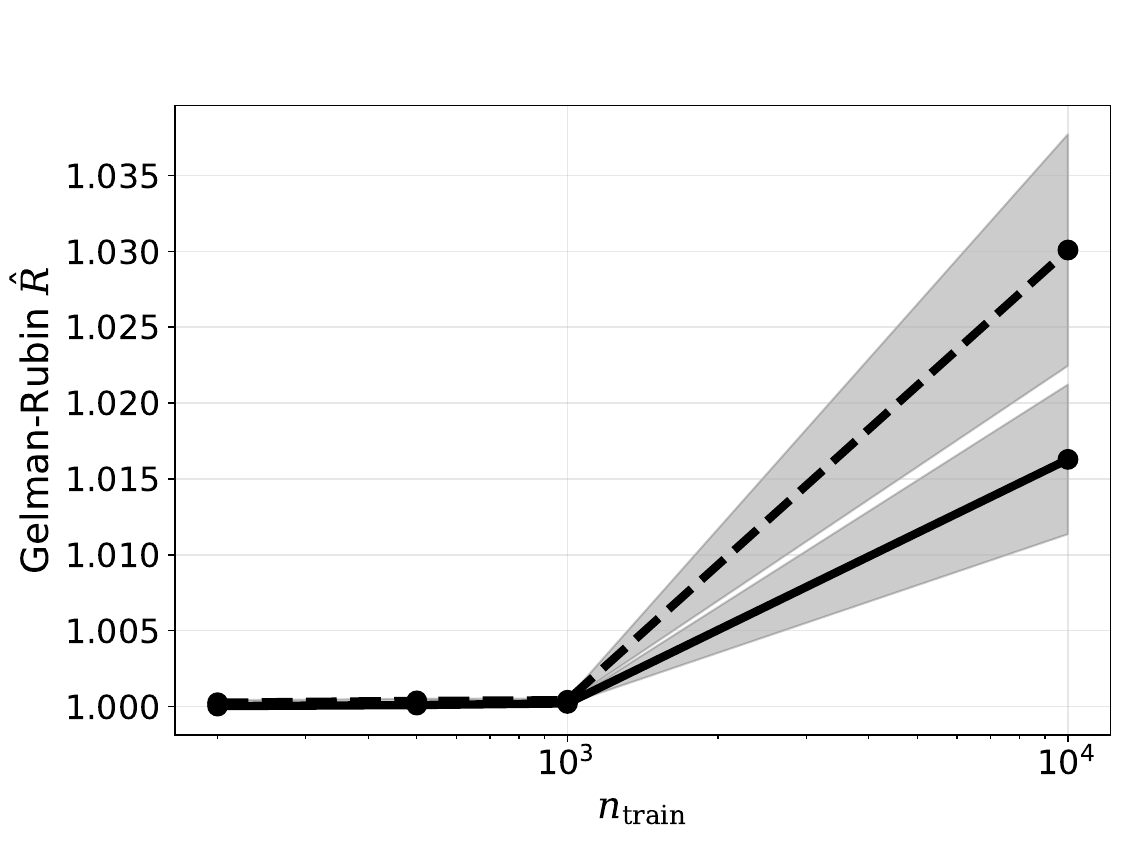}
        \hfill
        \includegraphics[width=0.32\linewidth]{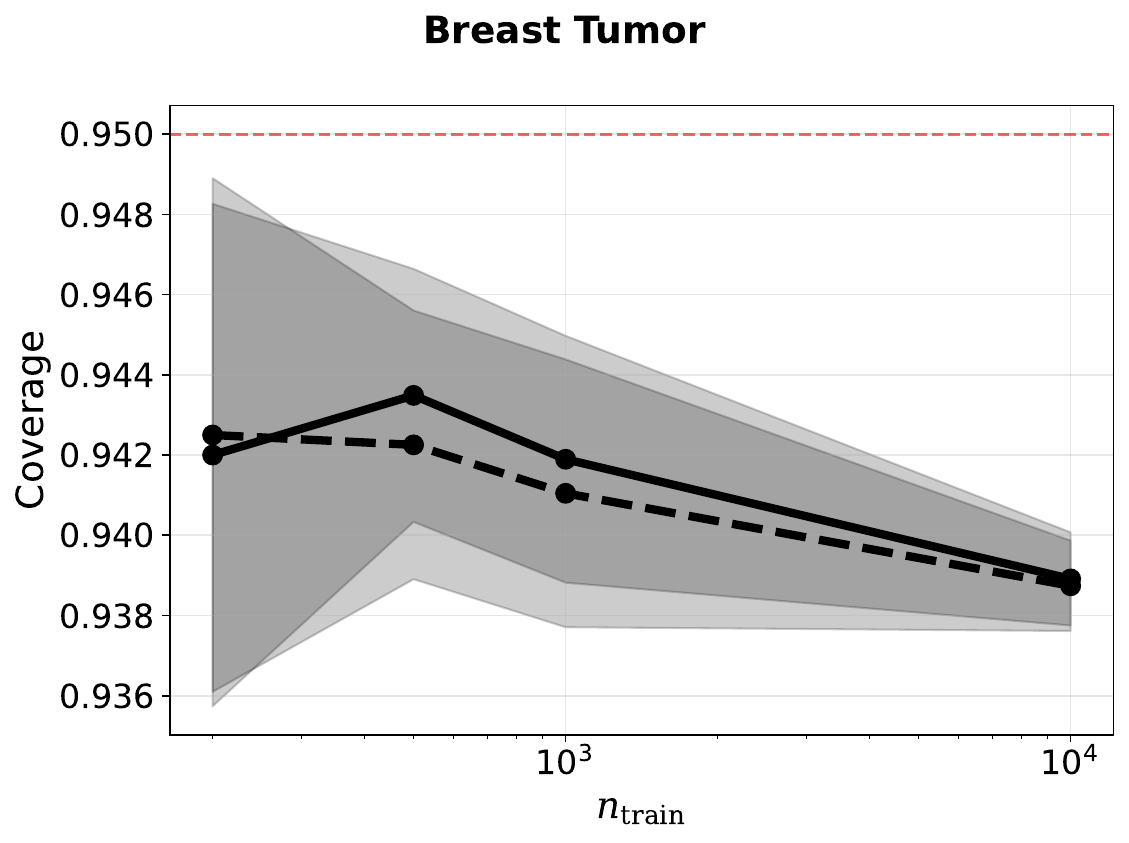}
        \hfill
        \includegraphics[width=0.32\linewidth]{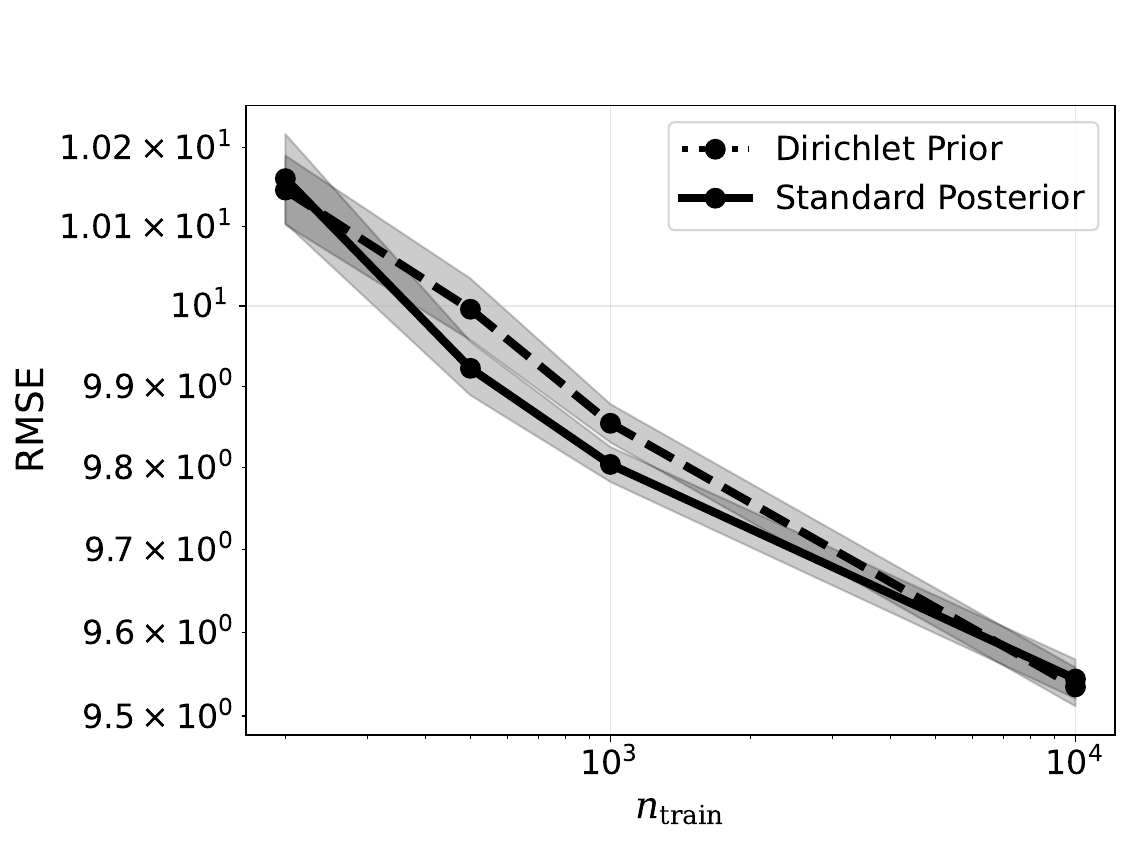}%
    \end{minipage} \\
    %\midrule
    % Low Lei Candes
    \begin{minipage}{0.9\textwidth}
        \centering
        \includegraphics[width=0.32\linewidth]{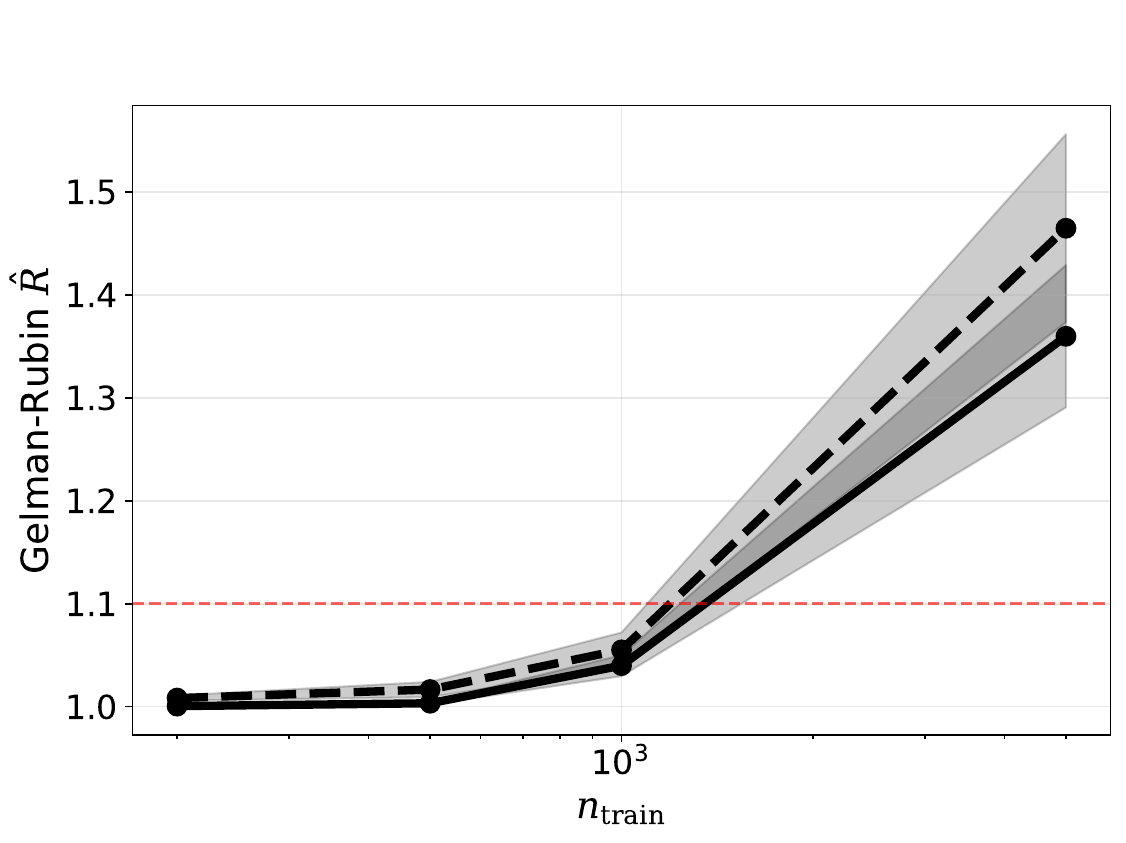}%
        \hfill
        \includegraphics[width=0.32\linewidth]{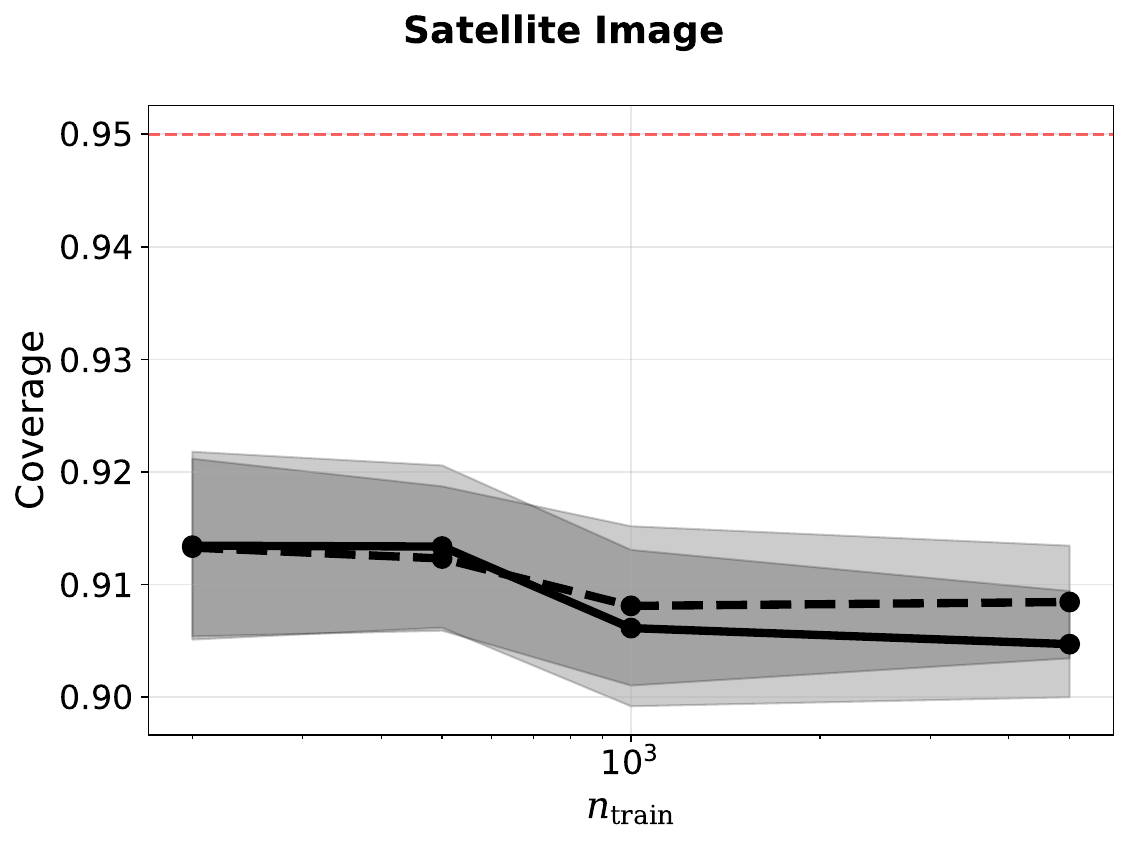}
        \hfill
        \includegraphics[width=0.32\linewidth]{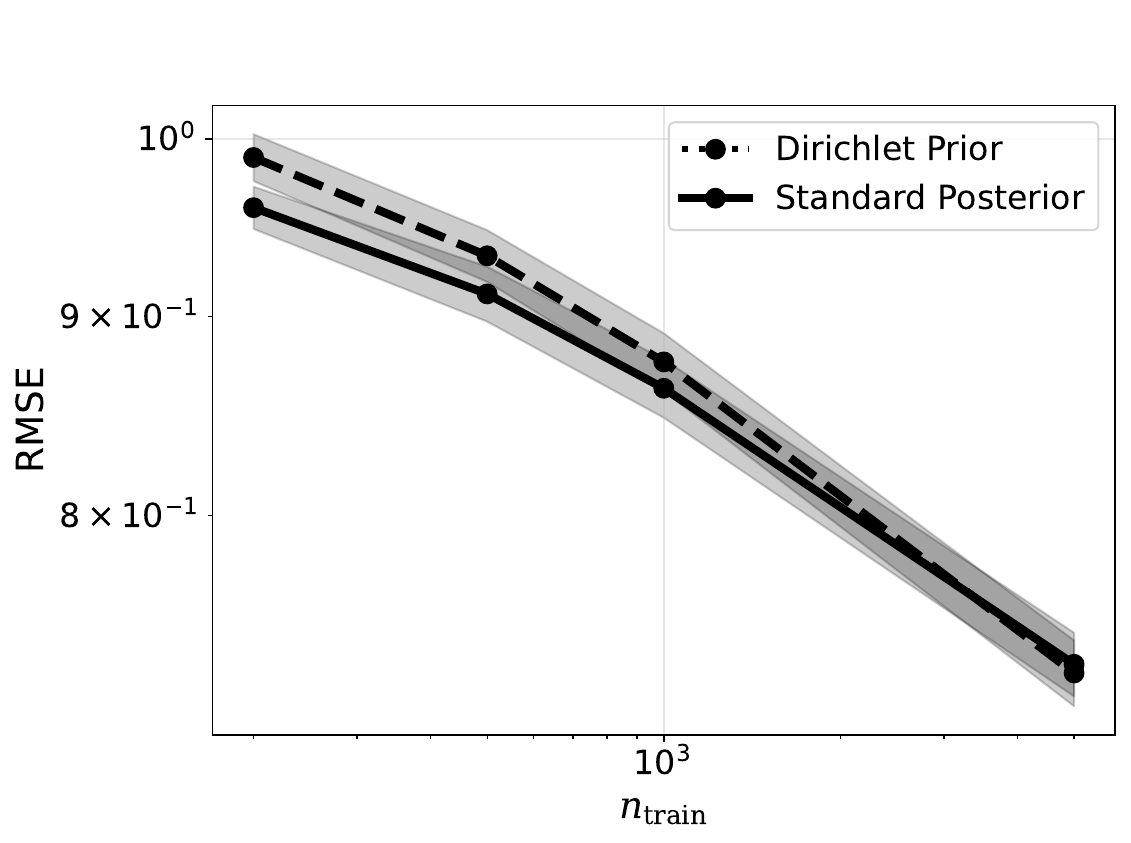}%
    \end{minipage} \\
    %\midrule
    % 1199 BNG echoMonths
    \begin{minipage}{0.9\textwidth}
        \centering
        \includegraphics[width=0.32\linewidth]{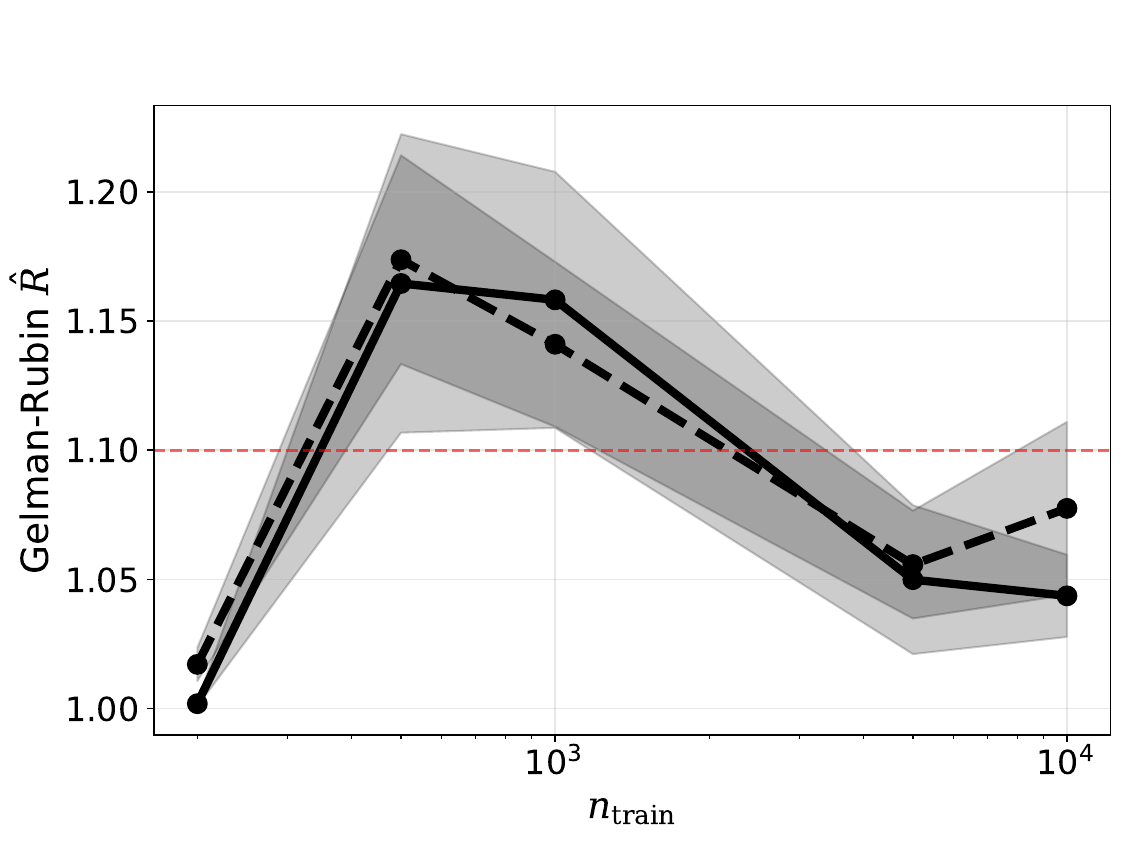}%
        \hfill
        \includegraphics[width=0.32\linewidth]{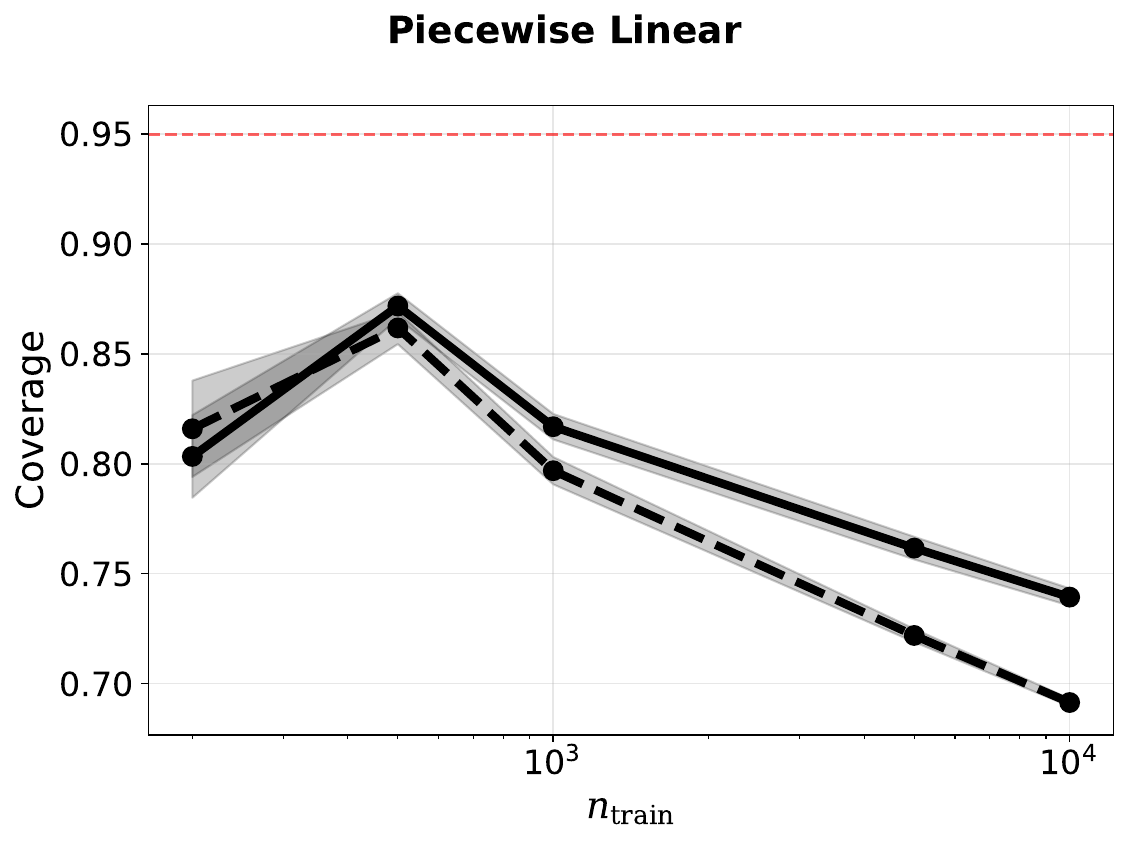}
        \hfill
        \includegraphics[width=0.32\linewidth]{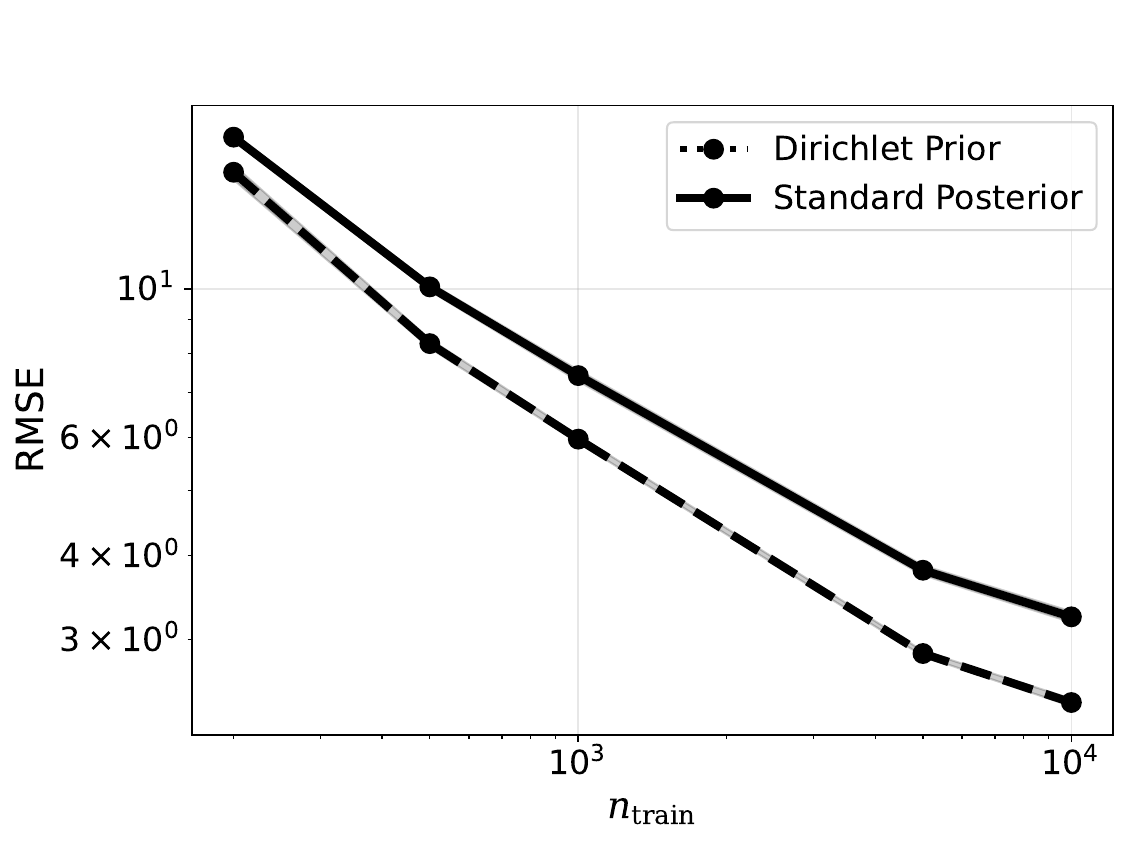}%
    \end{minipage} \\
    \end{tabular}
    \caption{
    Values for Gelman-Rubin $\hat R$ (left), coverage (center), and RMSE (right) for the BART sampler under the Dirichlet prior and the standard (uniform) prior for split feature probabilities.
    Results are plotted for California Housing, Low Dimensional Smooth, Echo Months, Breast Tumor, Satellite Image, and Piecewise Linear datasets (from top to bottom).
    Error bars represent $\pm 1.96$ standard errors from 25 replicates.
    }
    \label{fig:experiment5}
\end{figure}

\newpage

% \subsubsection{Experiment 4: Burn-in}
\begin{figure}[H]
    \centering
    \begin{tabular}{c}
    % California Housing
    \begin{minipage}{0.9\textwidth}
        \centering
        \includegraphics[width=0.32\linewidth]{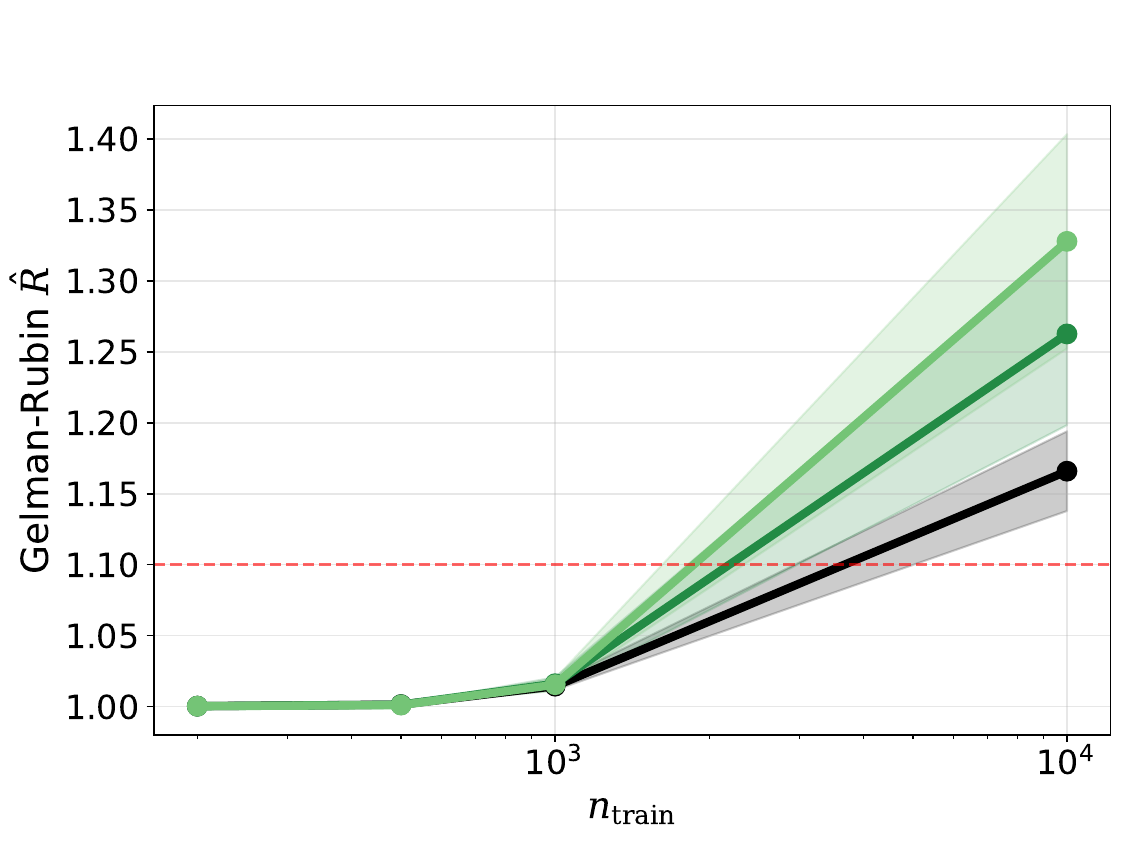}
        \hfill
        \includegraphics[width=0.32\linewidth]{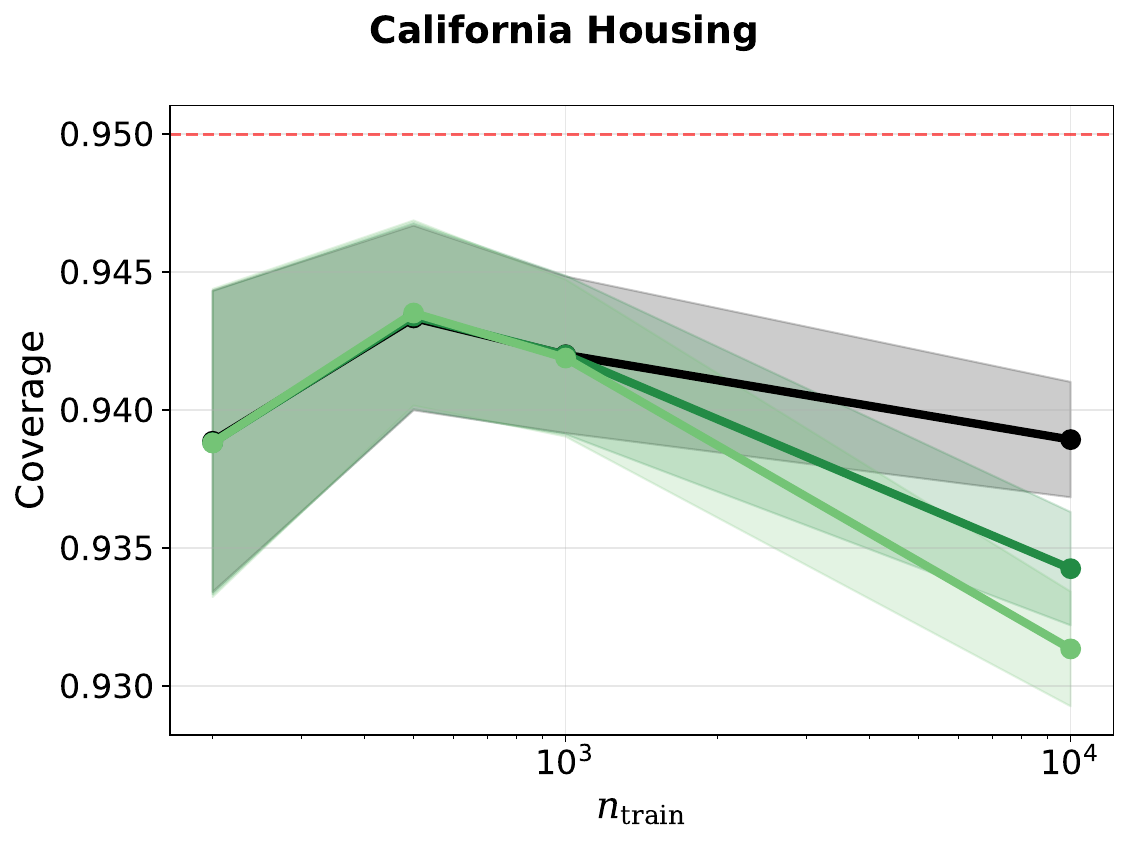}
        \hfill
        \includegraphics[width=0.32\linewidth]{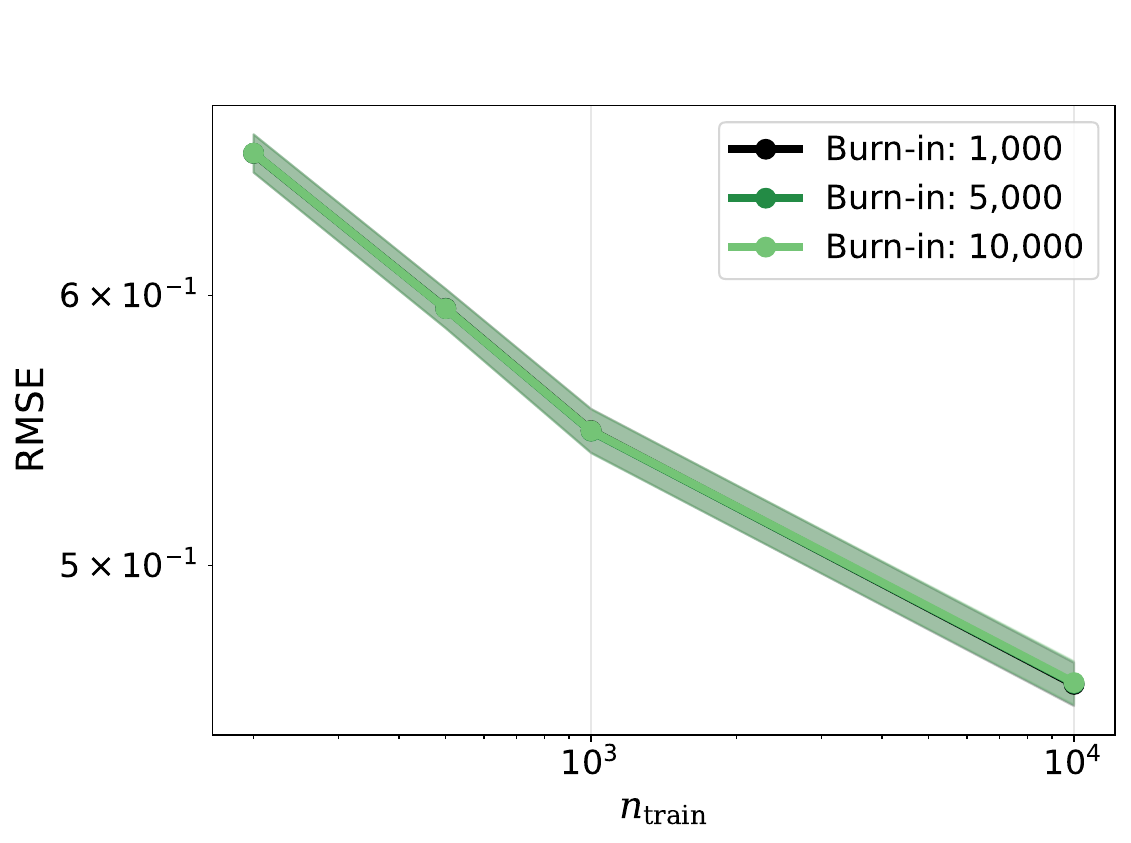}%
    \end{minipage} \\
    %\midrule
    % Low Lei Candes
    \begin{minipage}{0.9\textwidth}
        \centering
        \includegraphics[width=0.32\linewidth]{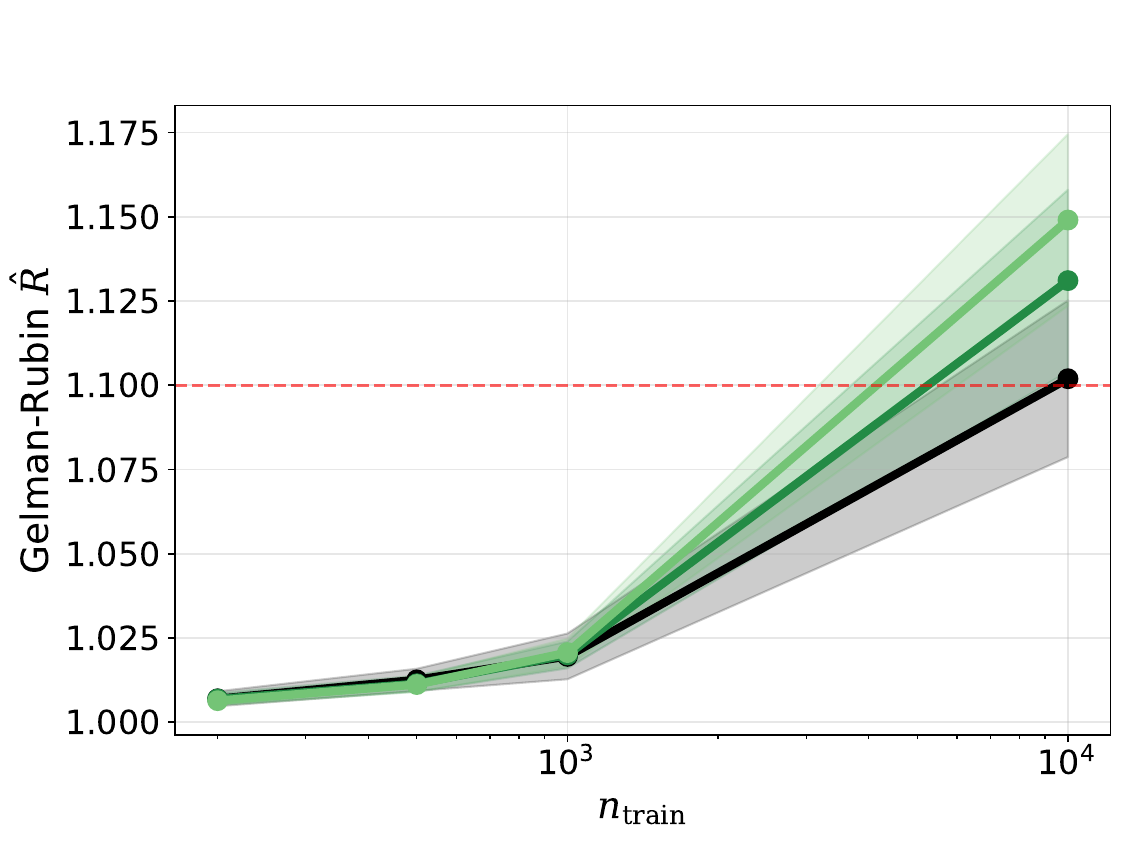}%
        \hfill
        \includegraphics[width=0.32\linewidth]{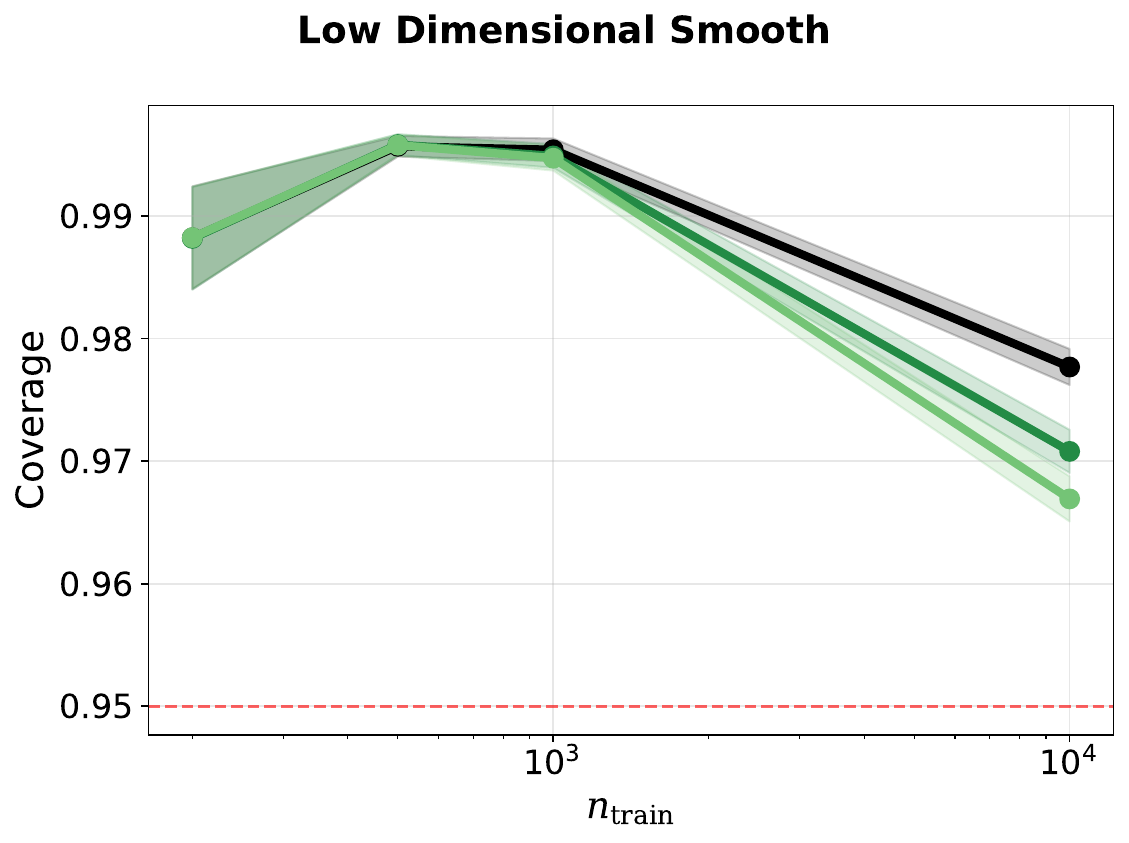}
        \hfill
        \includegraphics[width=0.32\linewidth]{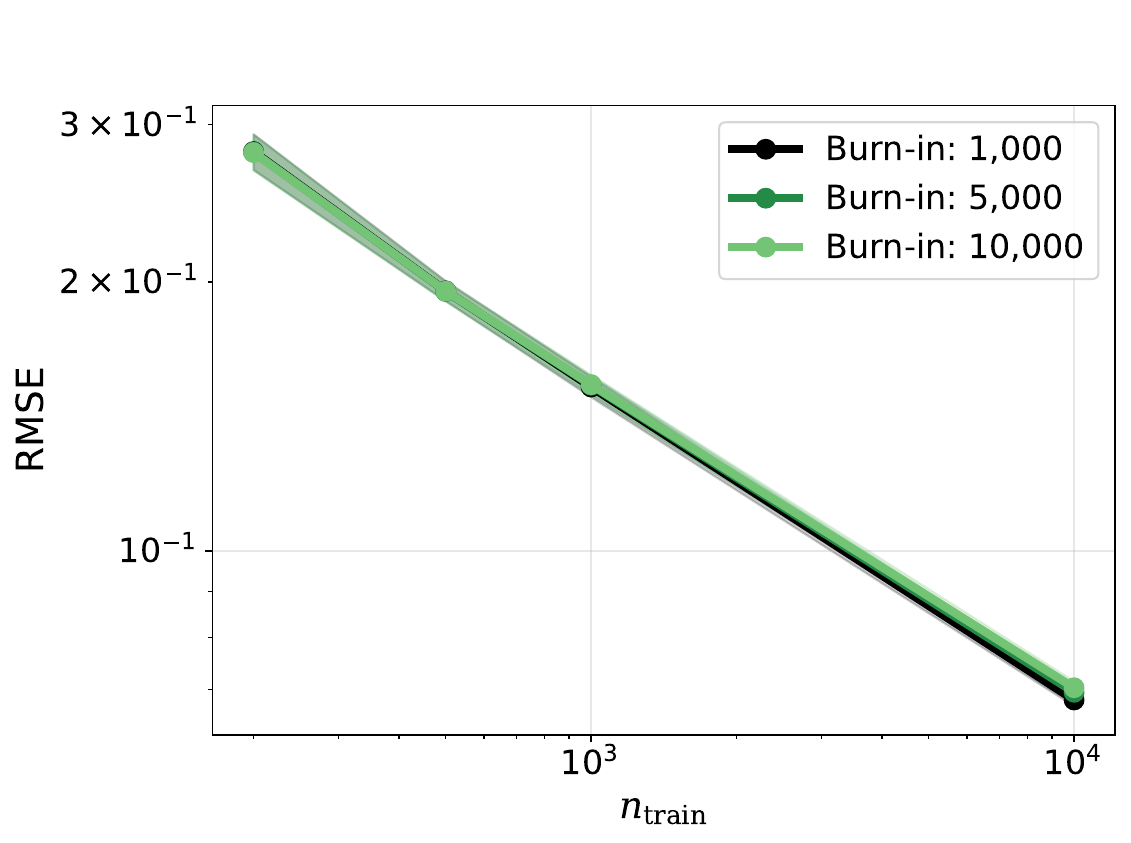}%
    \end{minipage} \\
    %\midrule
    % 1199 BNG echoMonths
    \begin{minipage}{0.9\textwidth}
        \centering
        \includegraphics[width=0.32\linewidth]{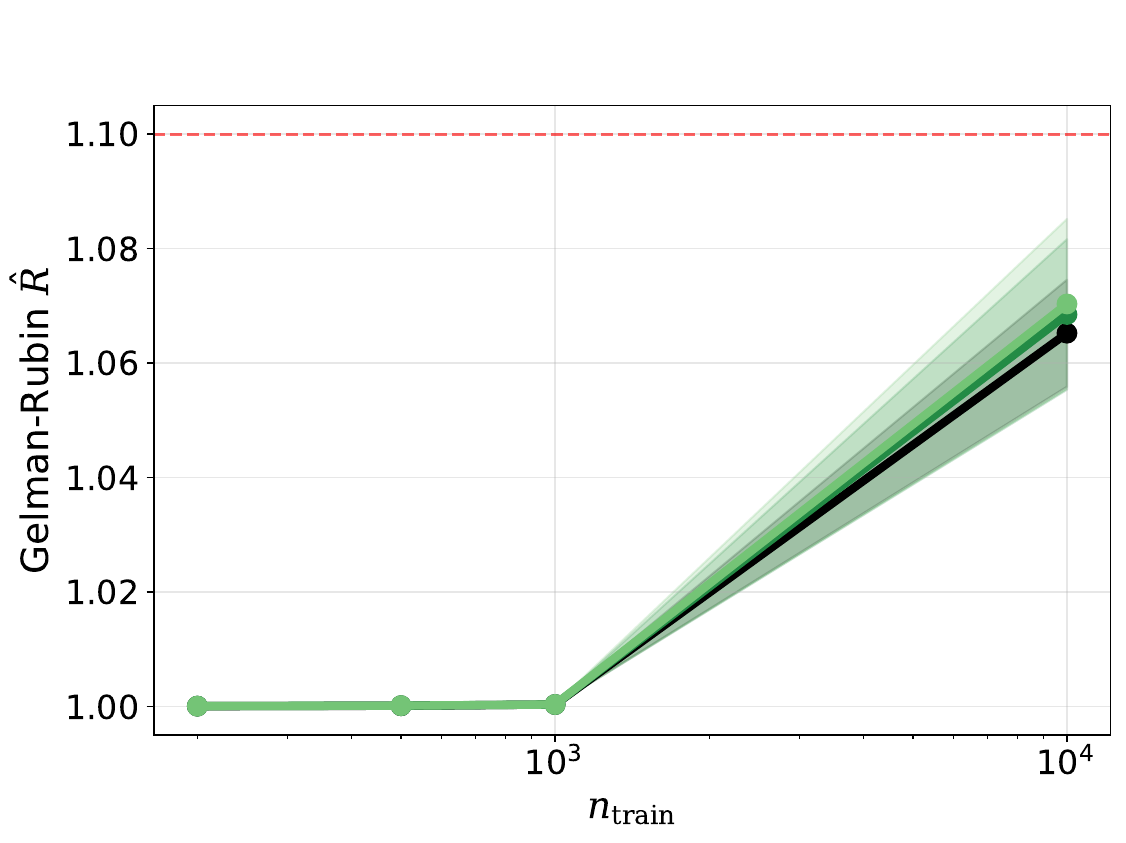}%
        \hfill
        \includegraphics[width=0.32\linewidth]{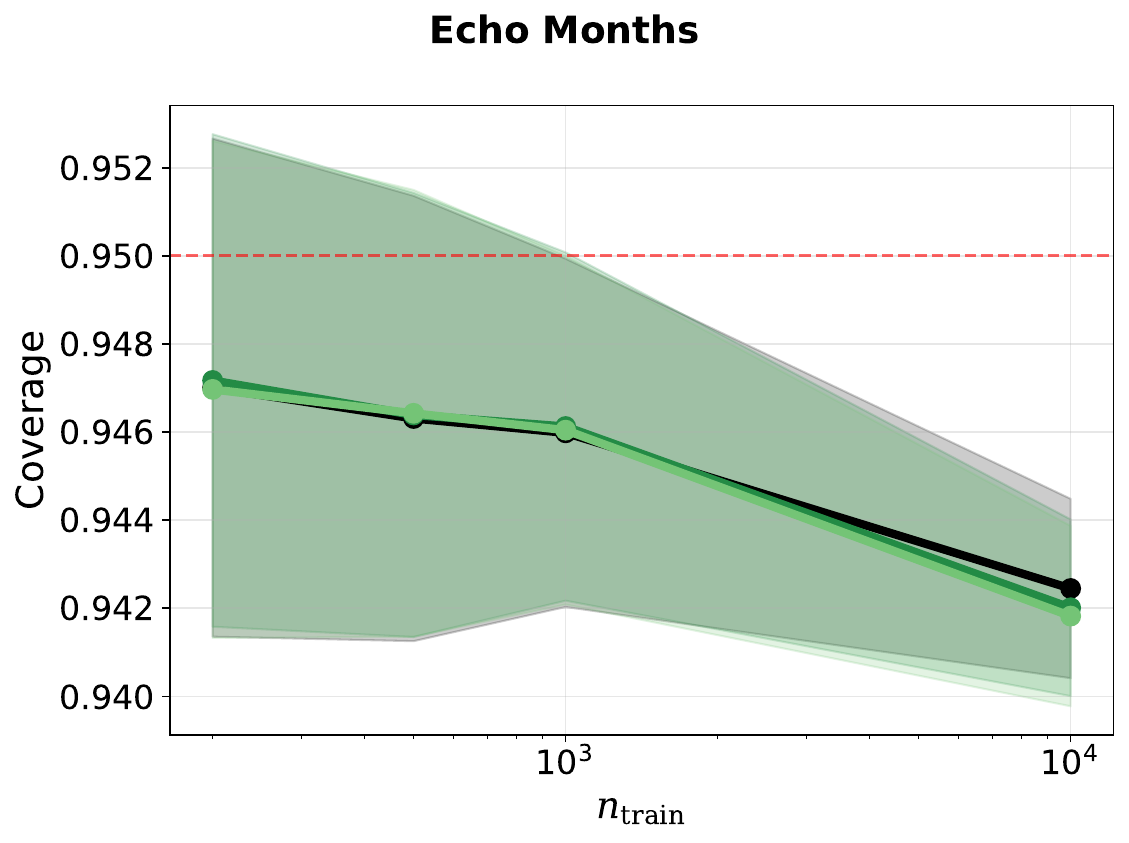}
        \hfill
        \includegraphics[width=0.32\linewidth]{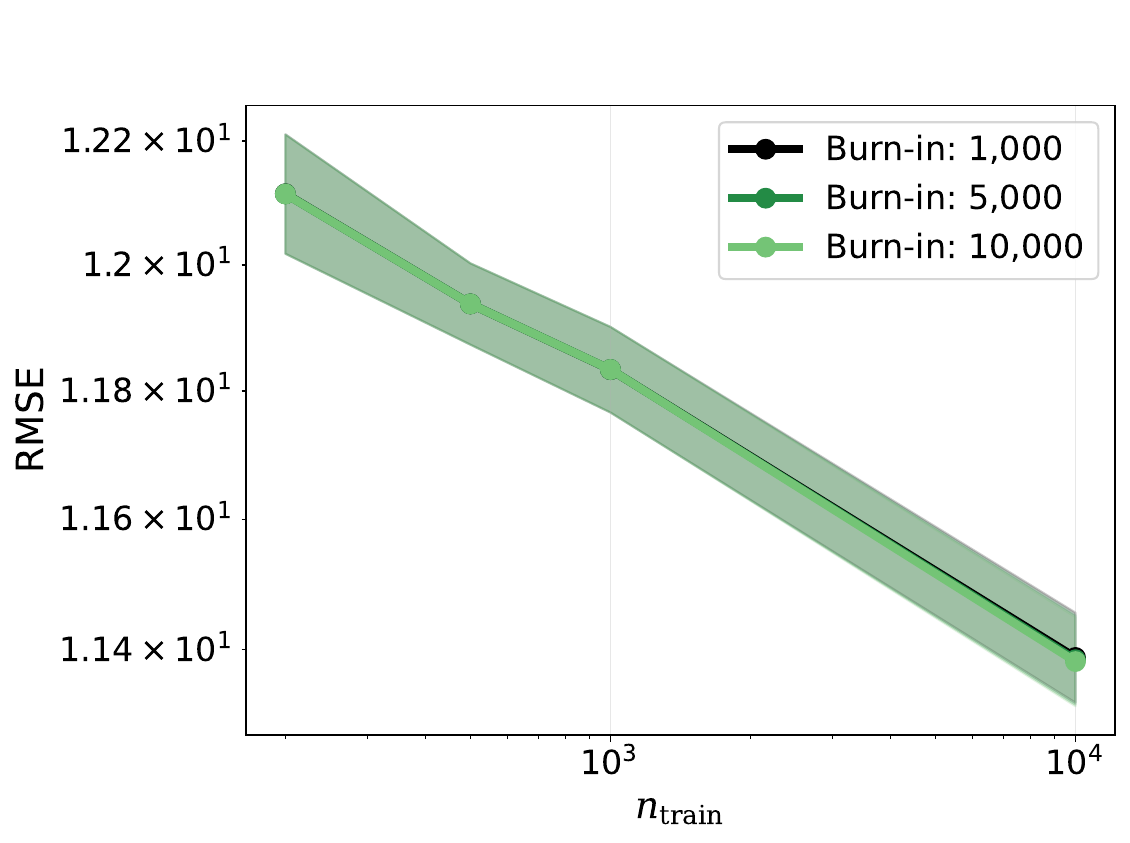}%
    \end{minipage} \\
    %\midrule
    % 1201 BNG breastTumor
    \begin{minipage}{0.9\textwidth}
        \centering
        \includegraphics[width=0.32\linewidth]{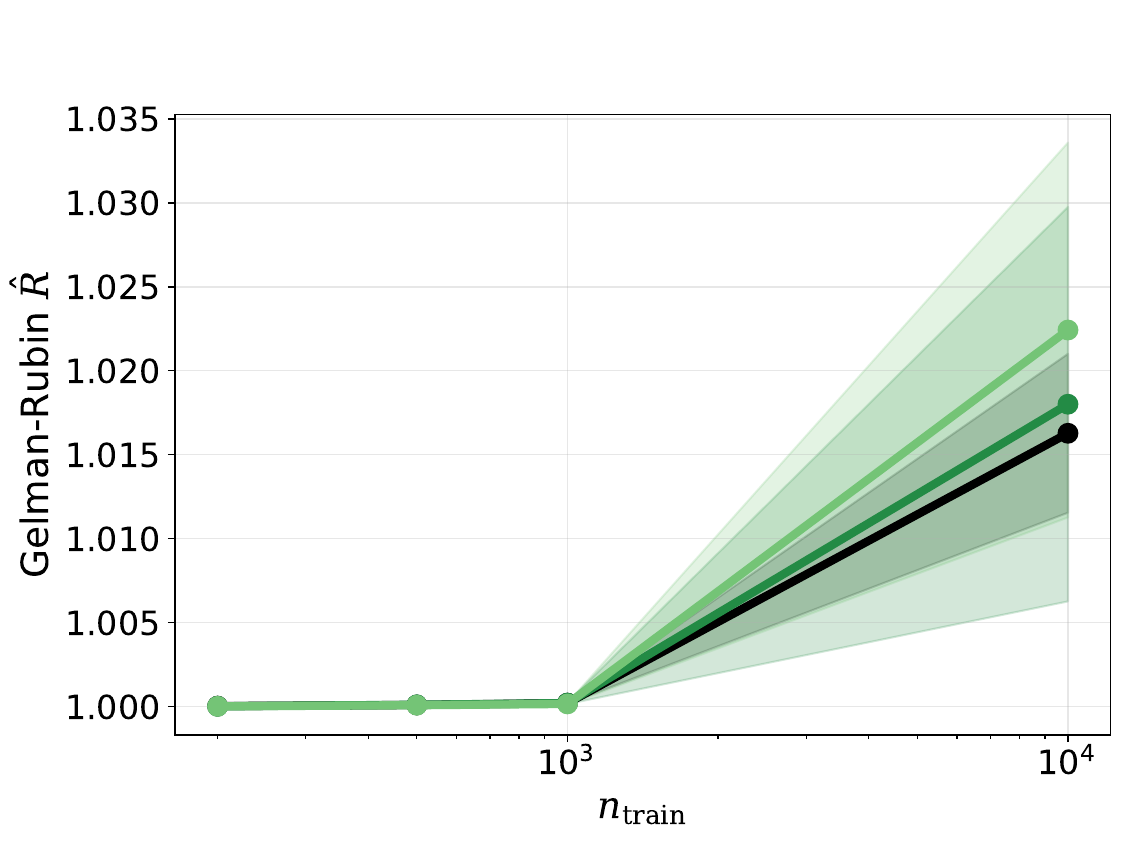}
        \hfill
        \includegraphics[width=0.32\linewidth]{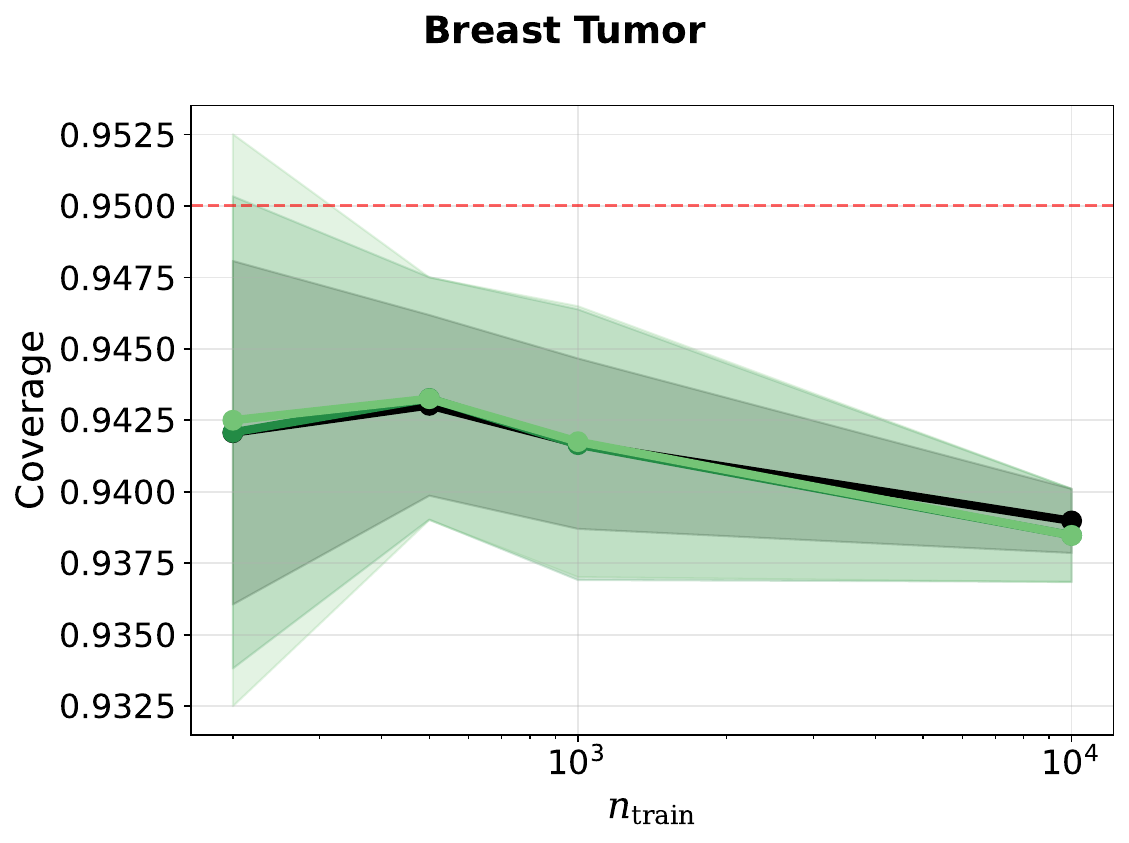}
        \hfill
        \includegraphics[width=0.32\linewidth]{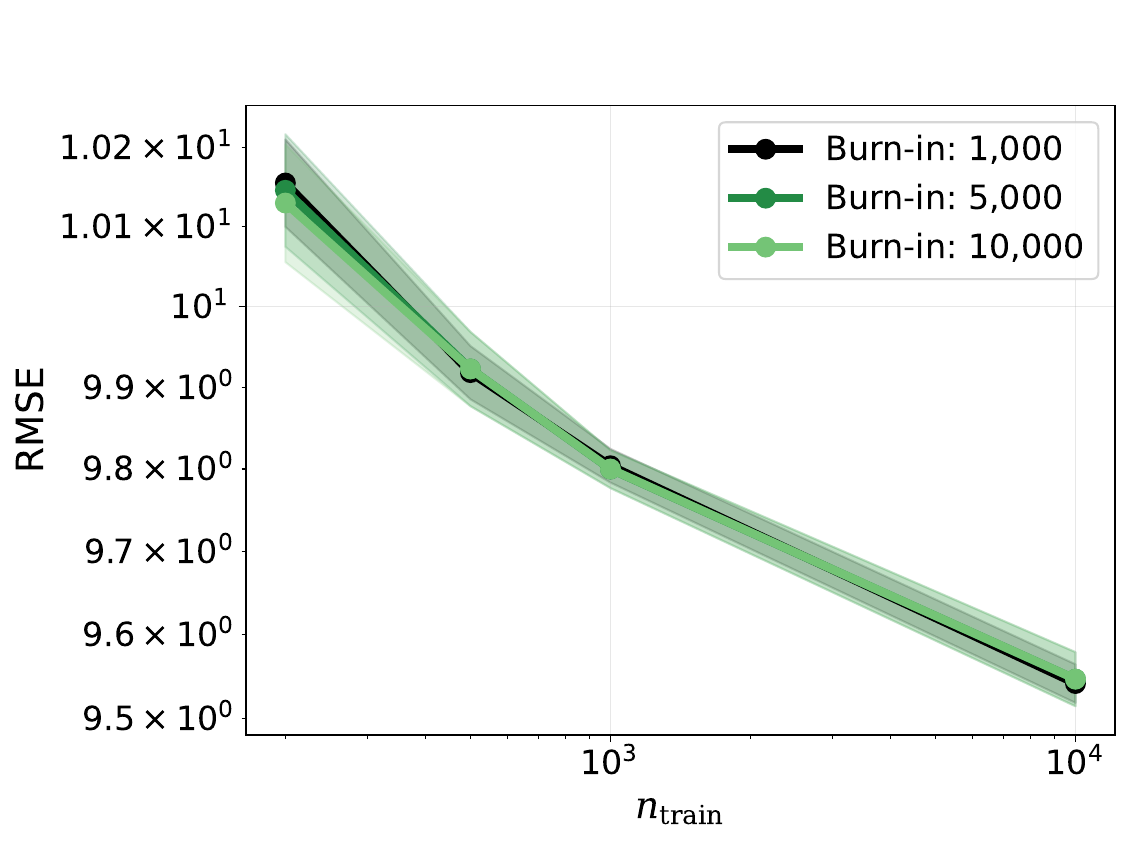}%
    \end{minipage} \\
    %\midrule
    % 294 satellite_image
    \begin{minipage}{0.9\textwidth}
        \centering
        \includegraphics[width=0.32\linewidth]{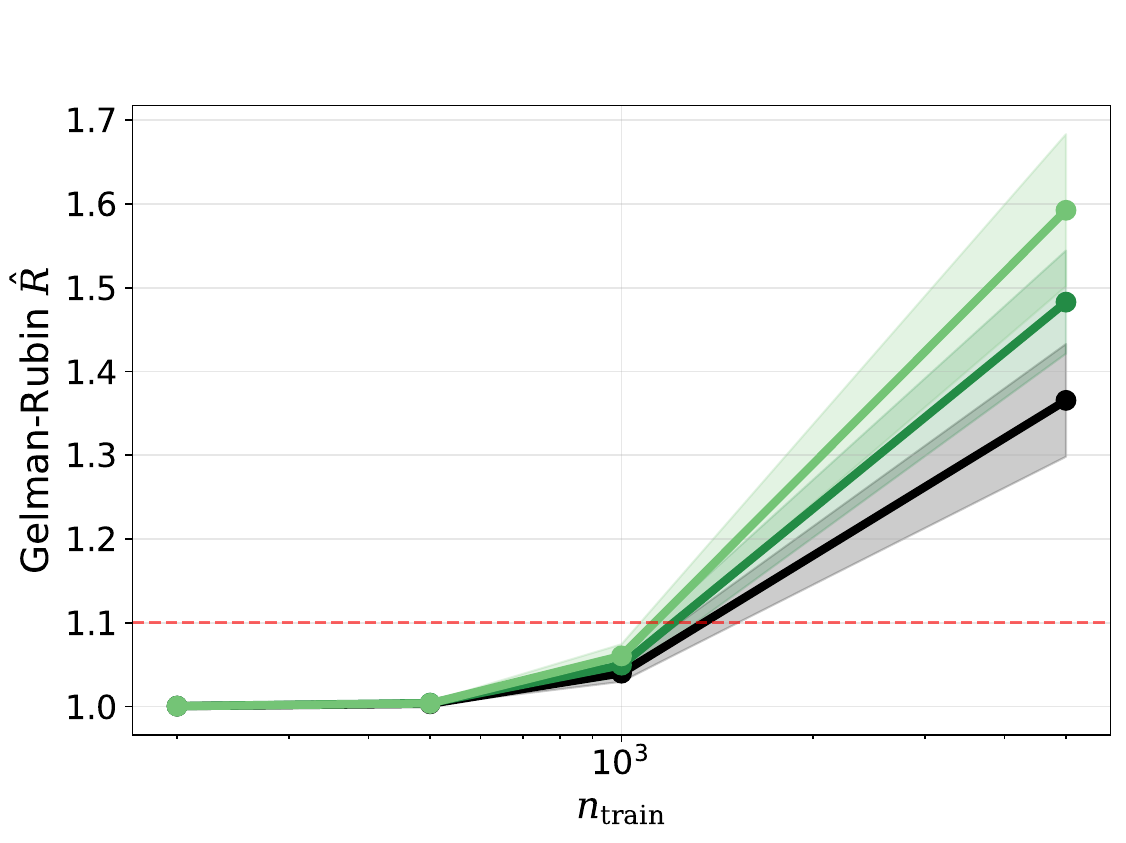}%
        \hfill
        \includegraphics[width=0.32\linewidth]{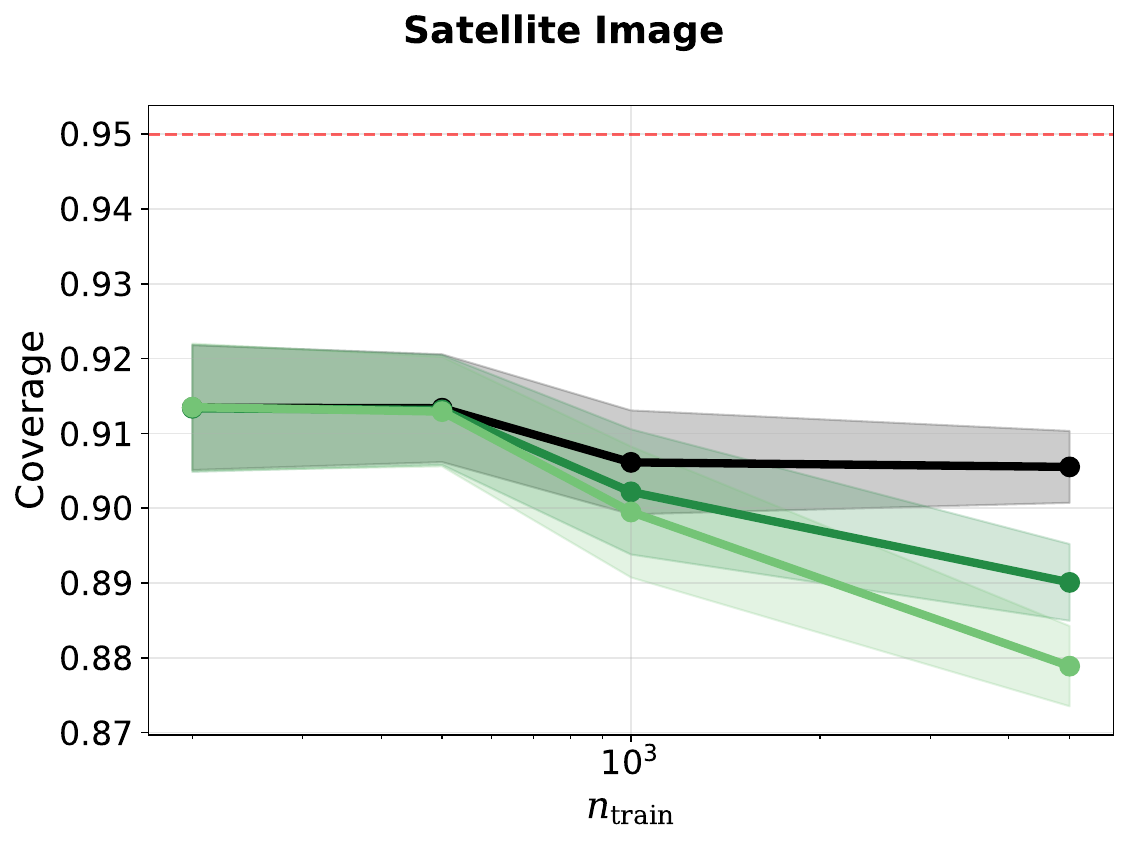}
        \hfill
        \includegraphics[width=0.32\linewidth]{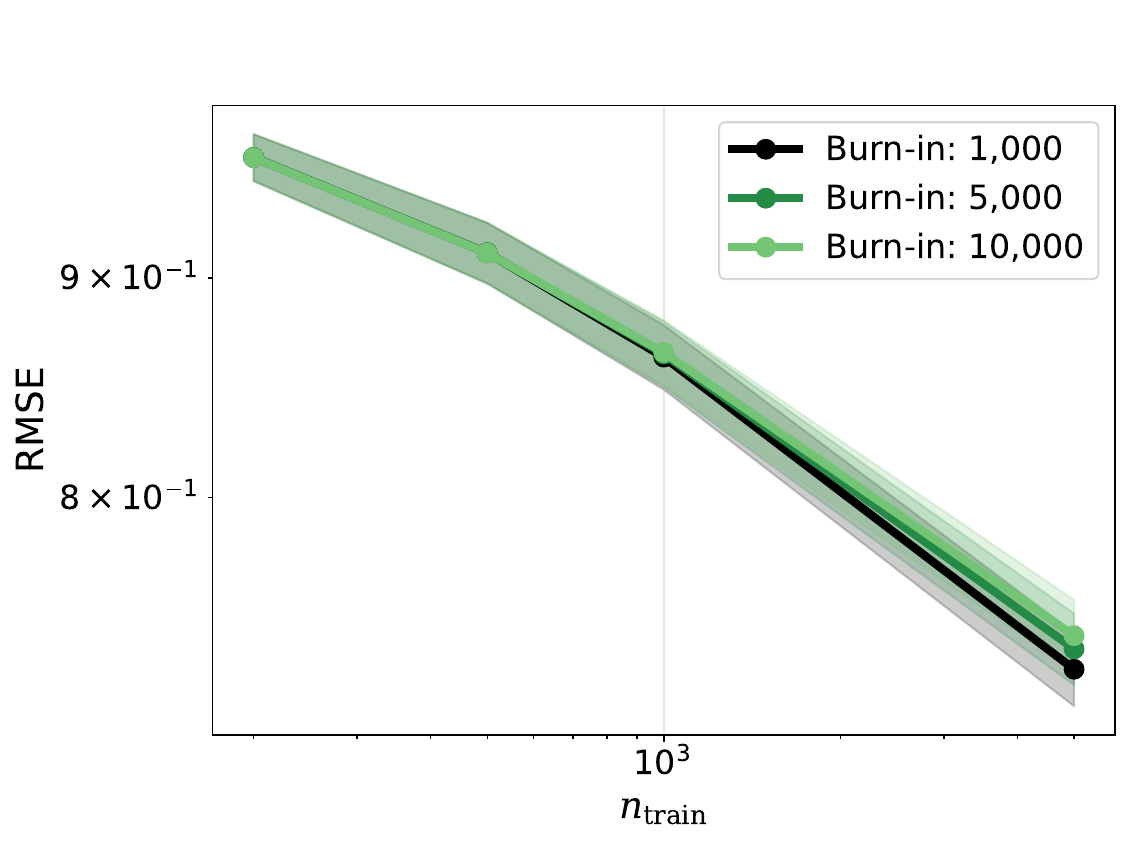}%
    \end{minipage} \\
    %\midrule
    % piecewise_linear_kunzel
    \begin{minipage}{0.9\textwidth}
        \centering
        \includegraphics[width=0.32\linewidth]{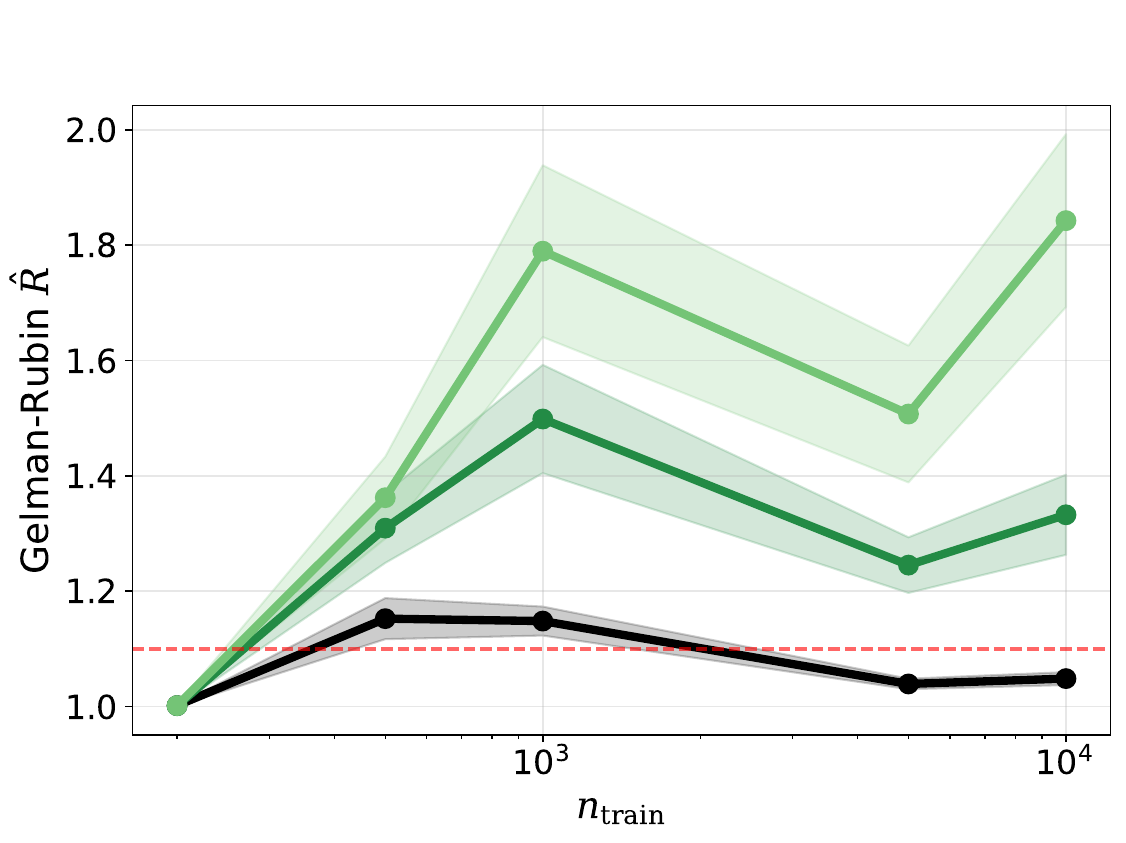}%
        \hfill
        \includegraphics[width=0.32\linewidth]{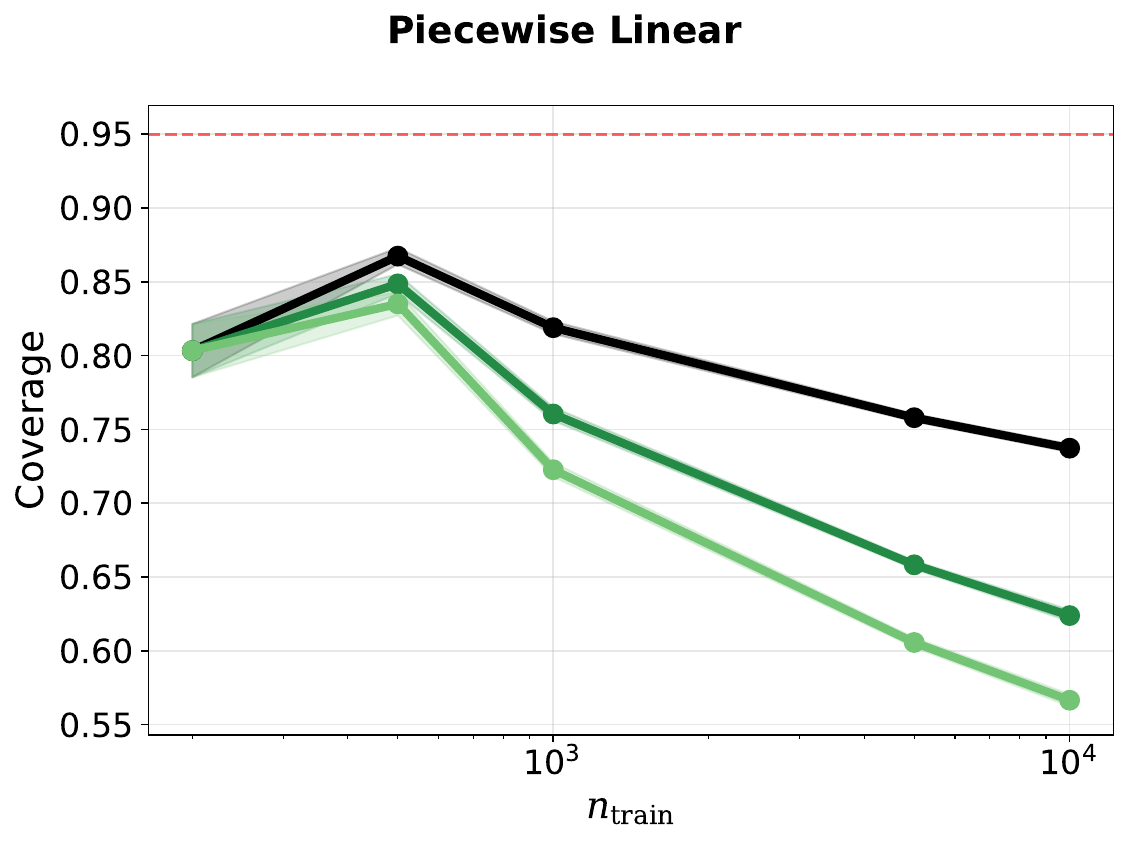}
        \hfill
        \includegraphics[width=0.32\linewidth]{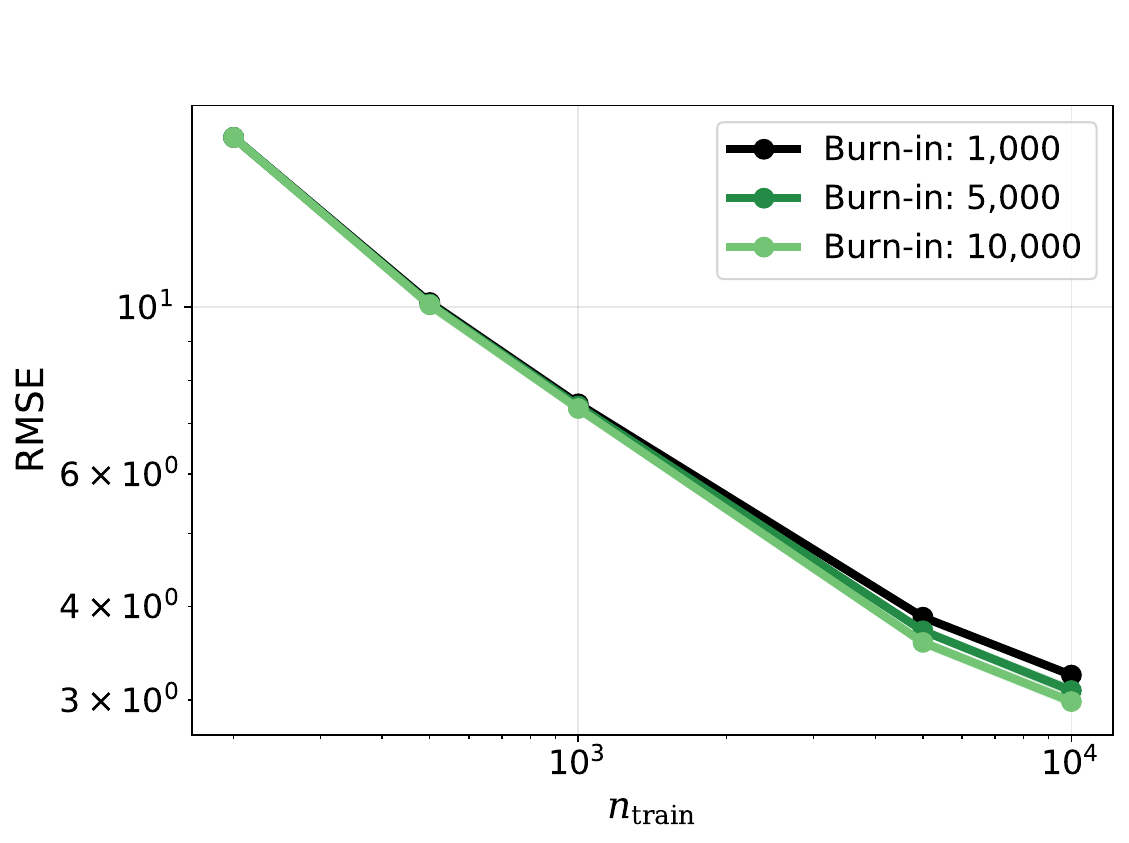}%
    \end{minipage} \\
    \end{tabular}
    \caption{
    Values for Gelman-Rubin $\hat R$ (left), coverage (center), and RMSE (right) for the BART sampler under different burn-in lengths.
    Results are plotted for California Housing, Low Dimensional Smooth, Echo Months, Breast Tumor, Satellite Image, and Piecewise Linear datasets (from top to bottom).
    Error bars represent $\pm 1.96$ standard errors from 25 replicates.
    }
    \label{fig:experiment_burnin}
\end{figure}

\newpage

\begin{figure}[H]
    \centering
    \begin{tabular}{c}
    % California Housing
    \begin{minipage}{0.9\textwidth}
        \centering
        \includegraphics[width=0.32\linewidth]{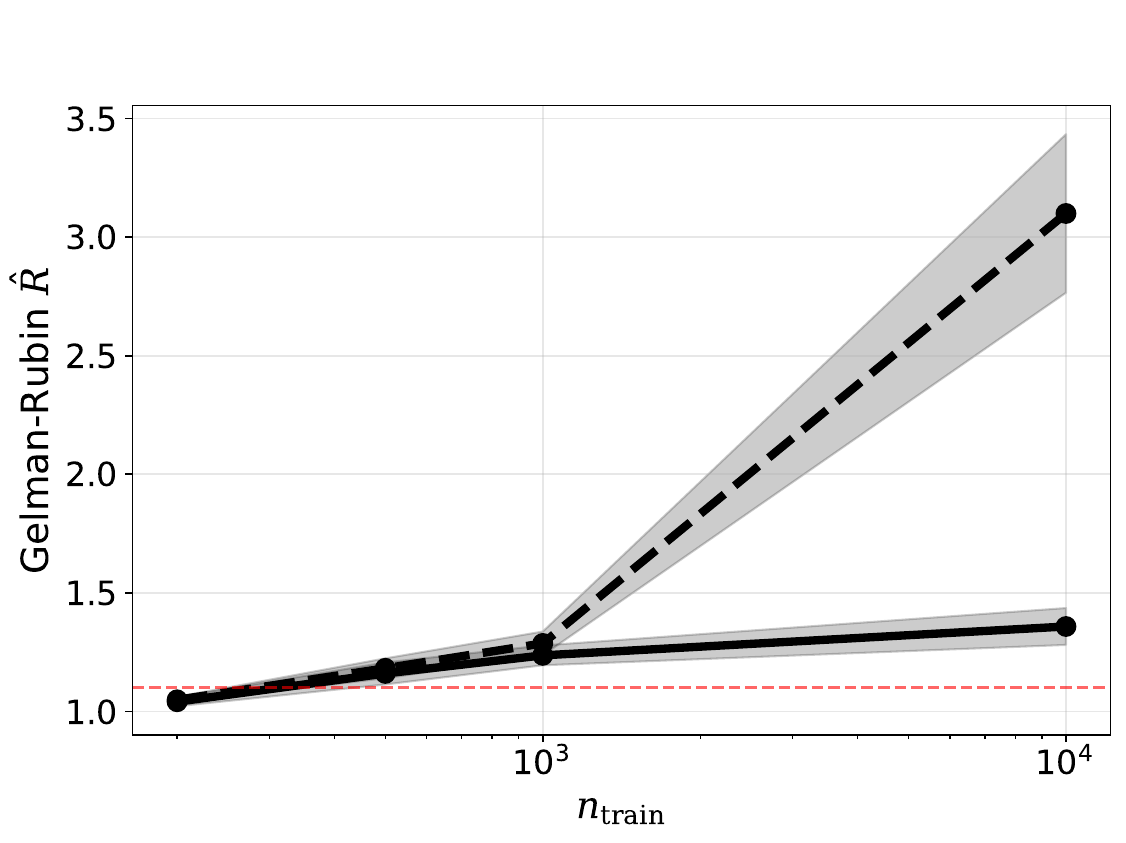}%
        \hfill
        \includegraphics[width=0.32\linewidth]{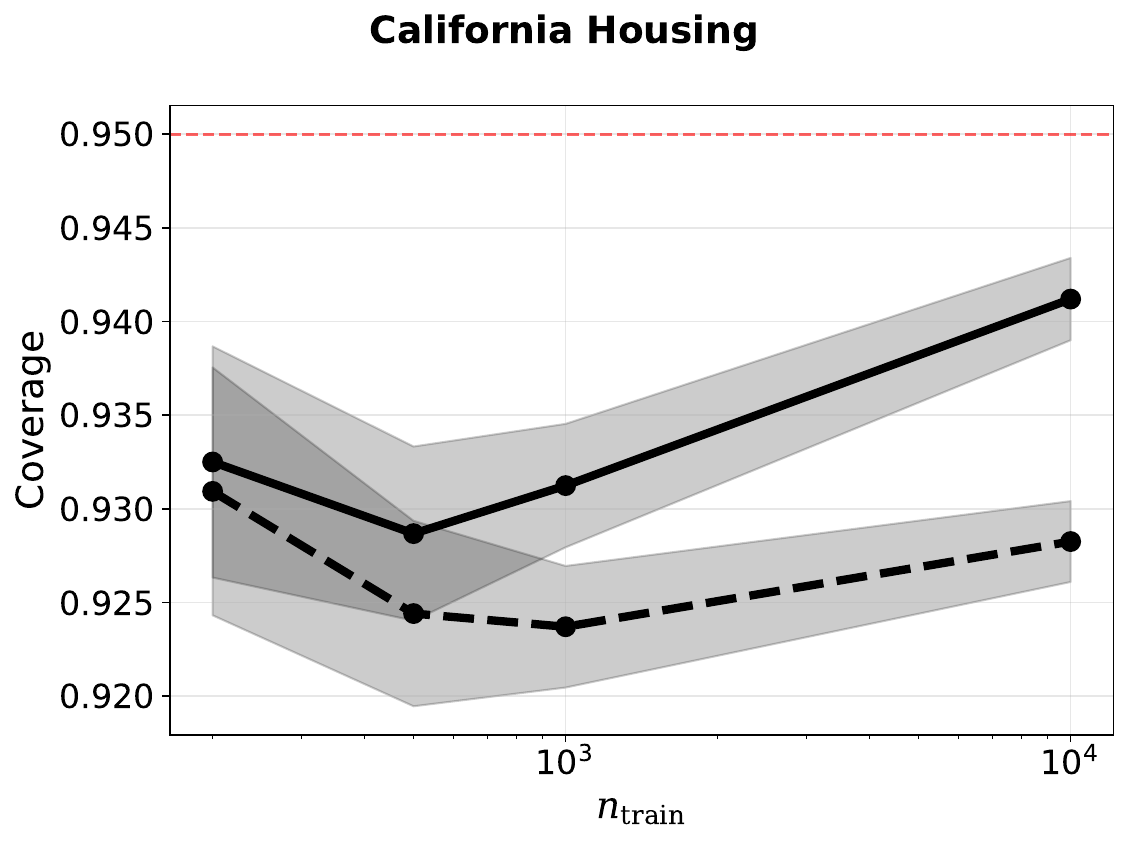}%
        \hfill
        \includegraphics[width=0.32\linewidth]{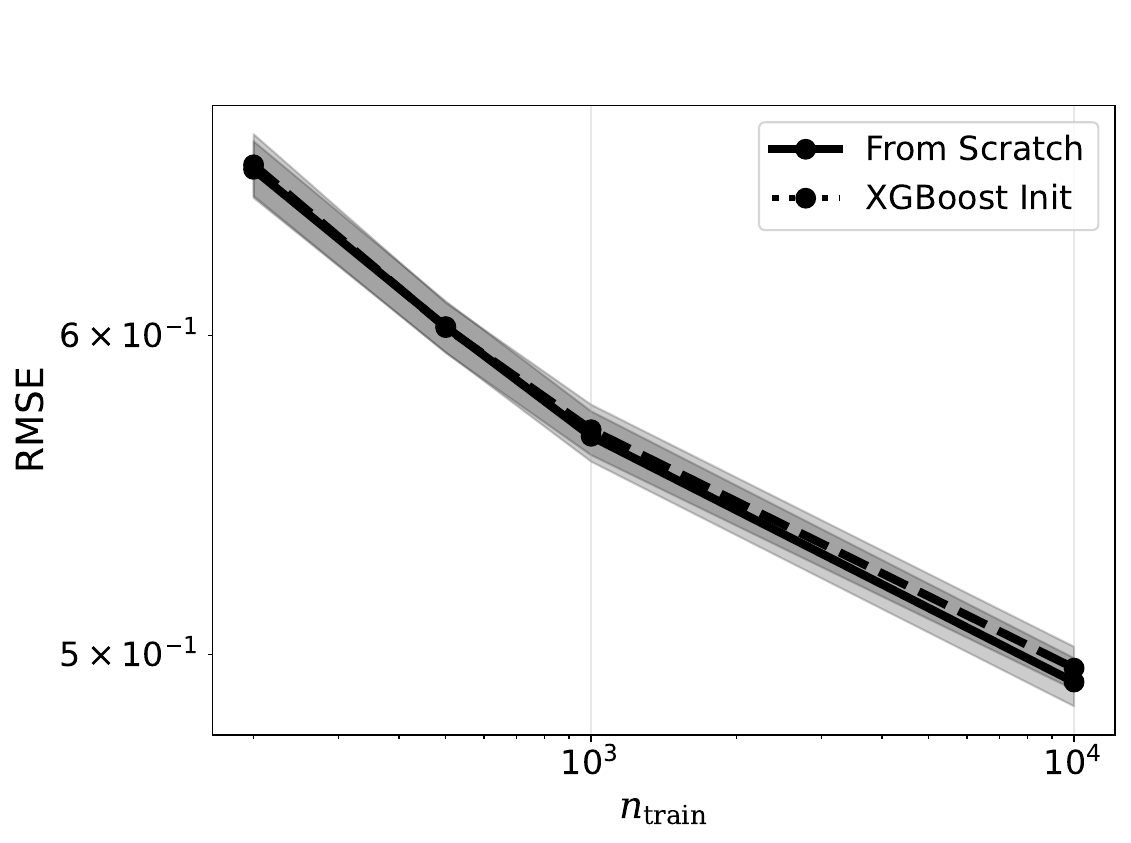}
    \end{minipage} \\
    % Low Lei Candes
    \begin{minipage}{0.9\textwidth}
        \centering
        \includegraphics[width=0.32\linewidth]{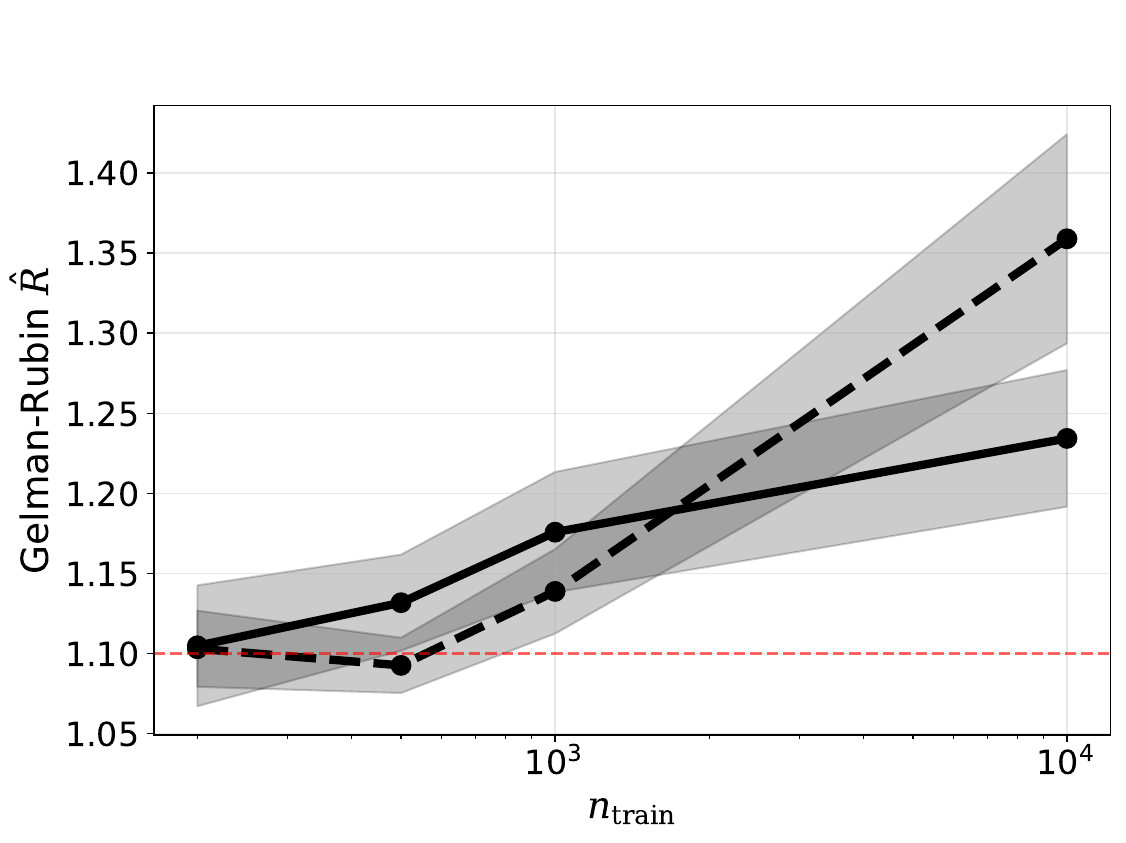}%
        \hfill
        \includegraphics[width=0.32\linewidth]{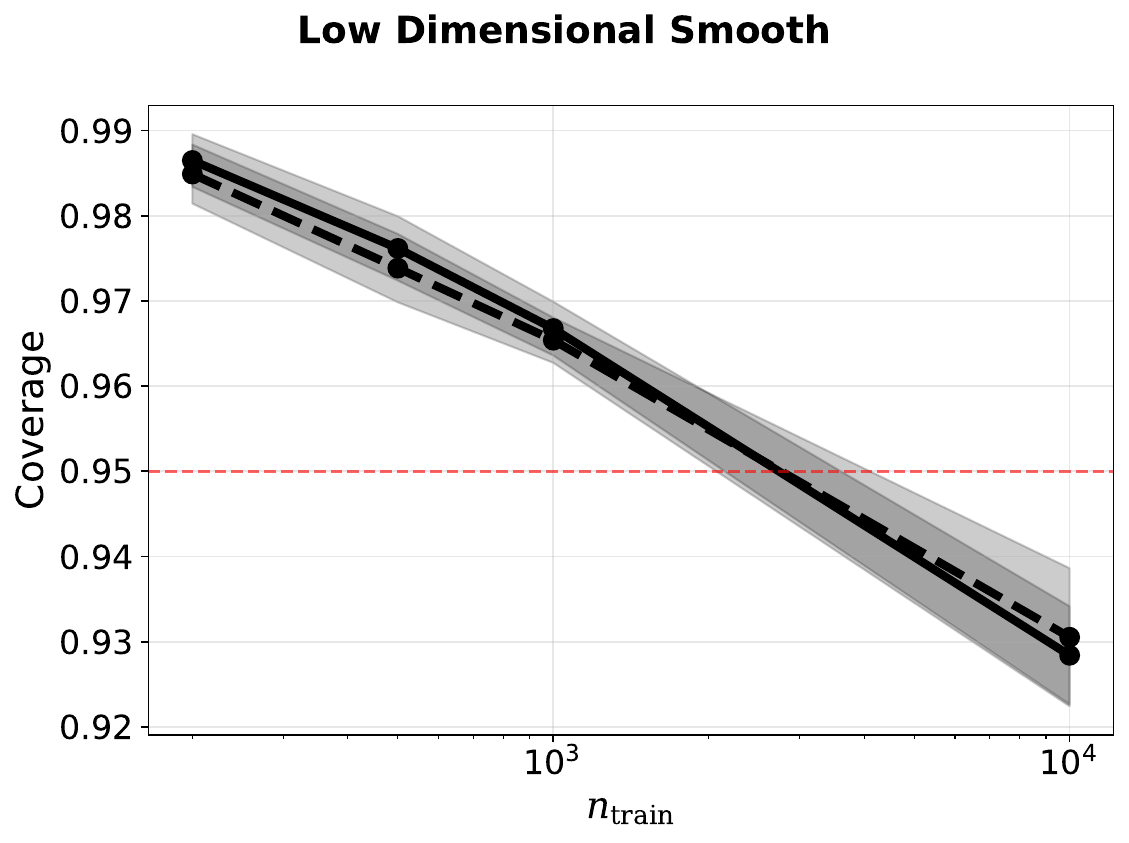}%
        \hfill
        \includegraphics[width=0.32\linewidth]{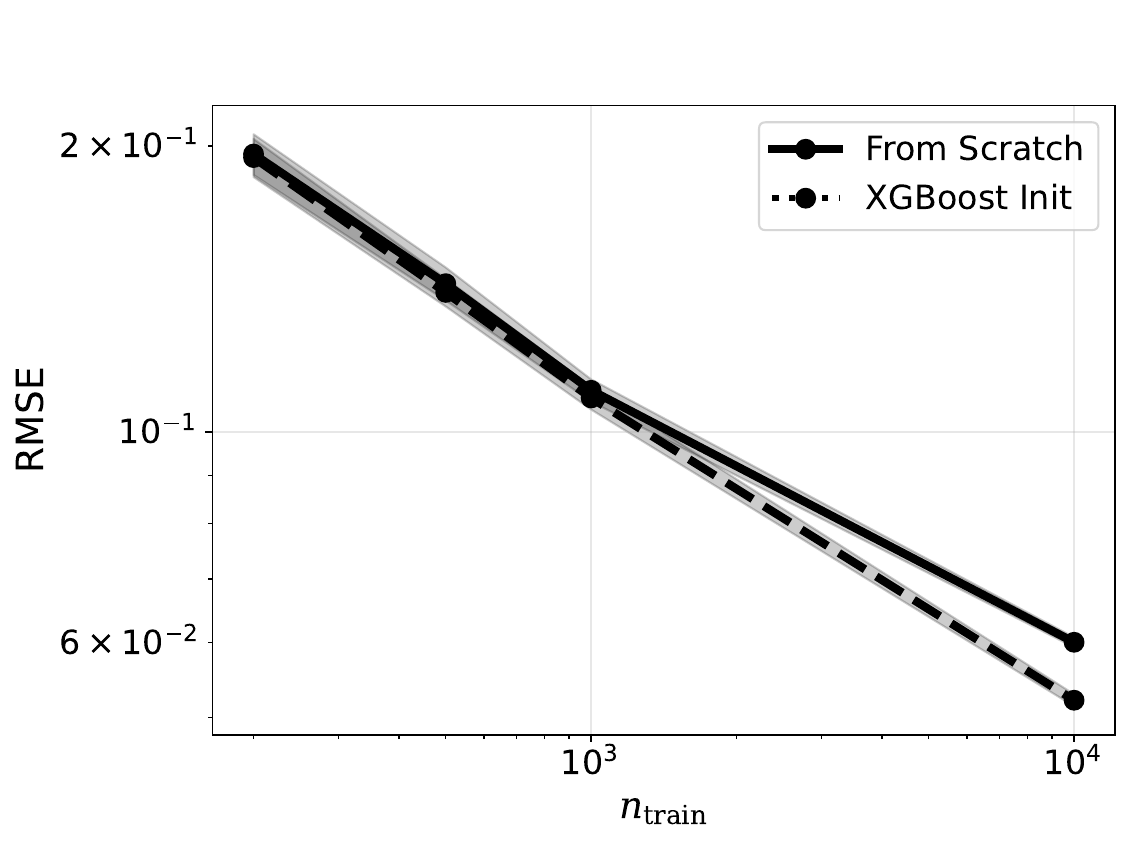}
    \end{minipage} \\
    % 1199 BNG echoMonths
    \begin{minipage}{0.9\textwidth}
        \centering
        \includegraphics[width=0.32\linewidth]{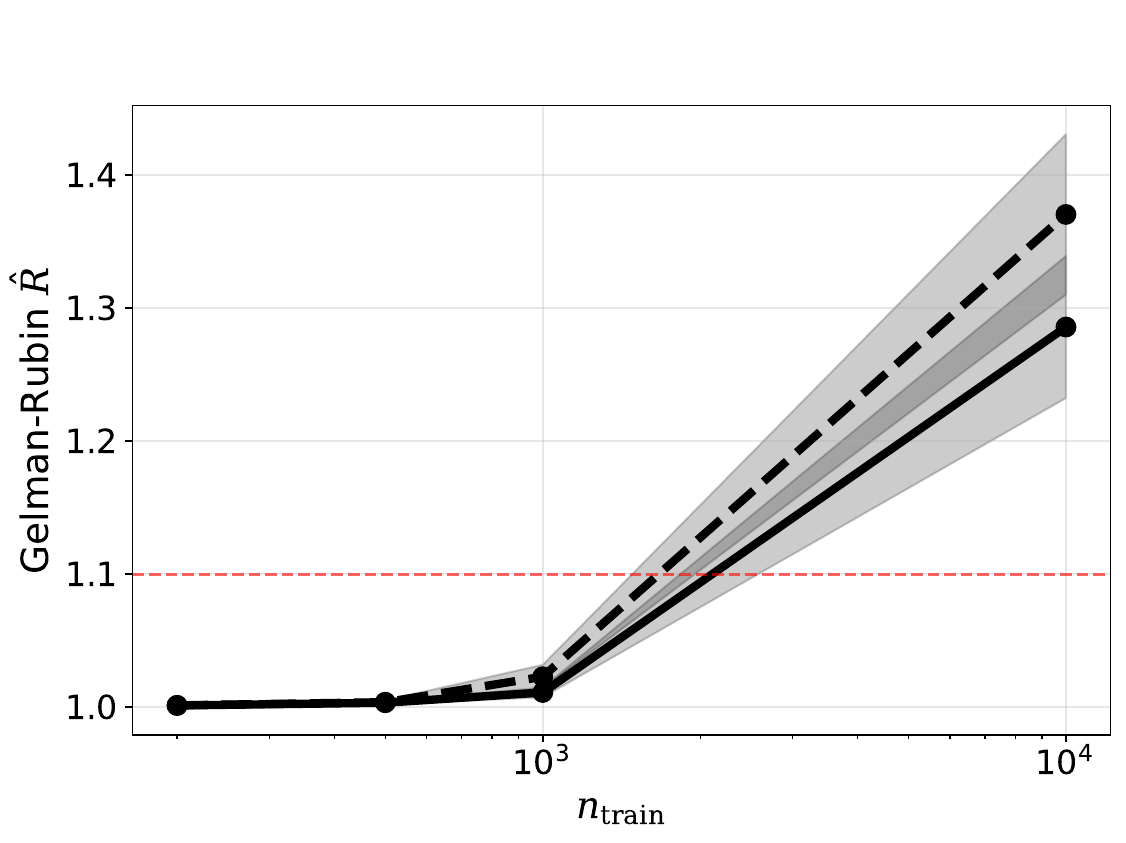}%
        \hfill
        \includegraphics[width=0.32\linewidth]{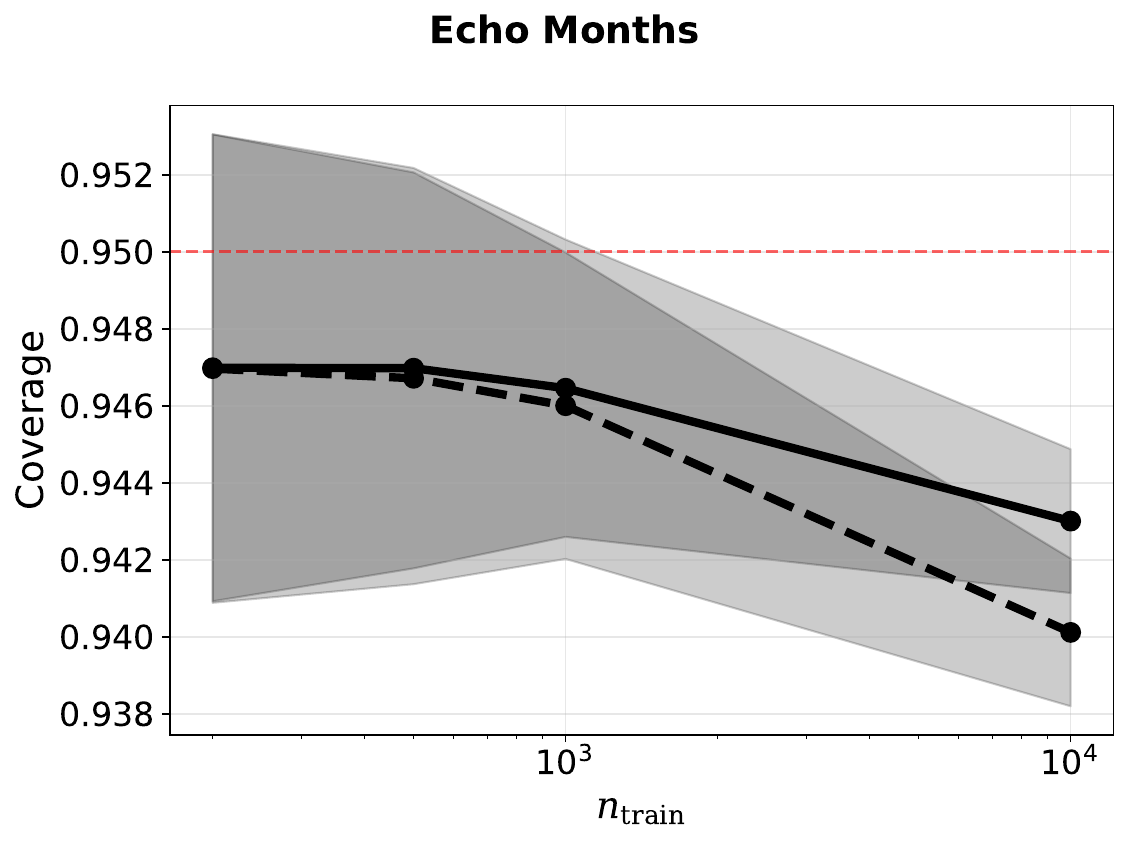}%
        \hfill
        \includegraphics[width=0.32\linewidth]{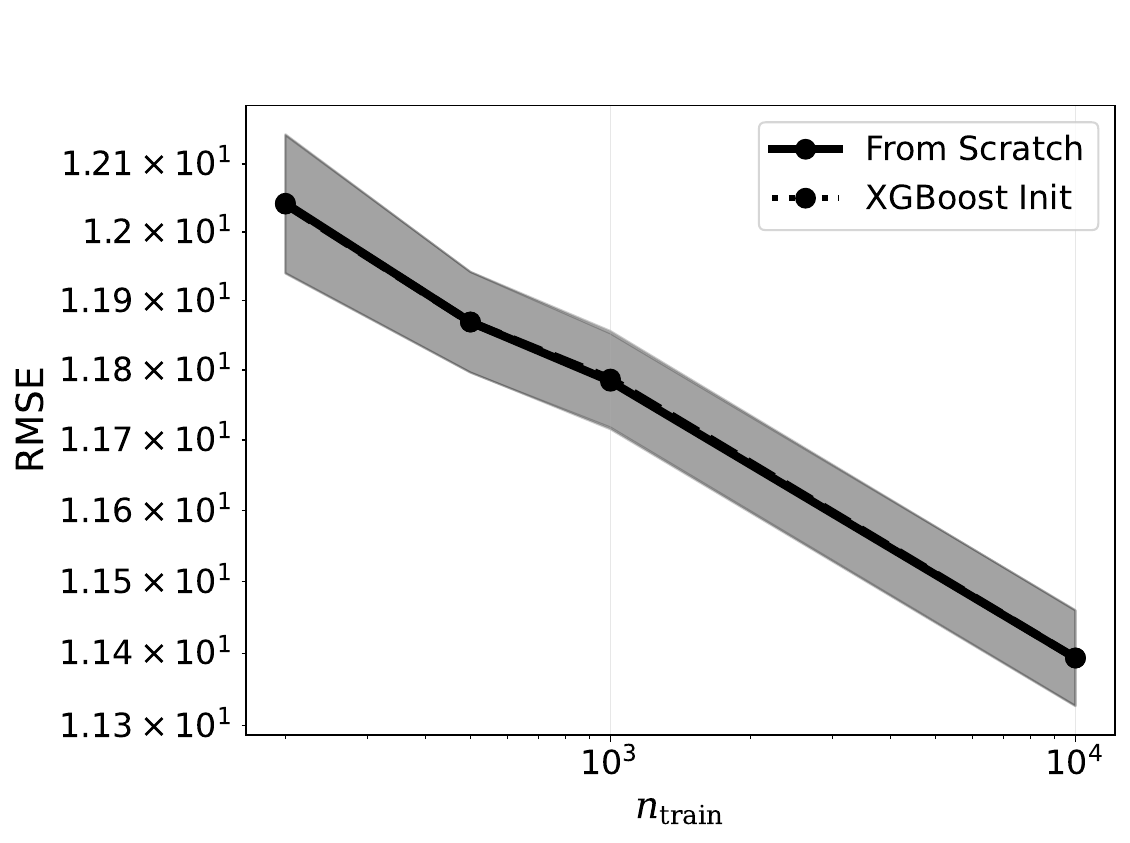}
    \end{minipage} \\
    \begin{minipage}{0.9\textwidth}
        \centering
        \includegraphics[width=0.32\linewidth]{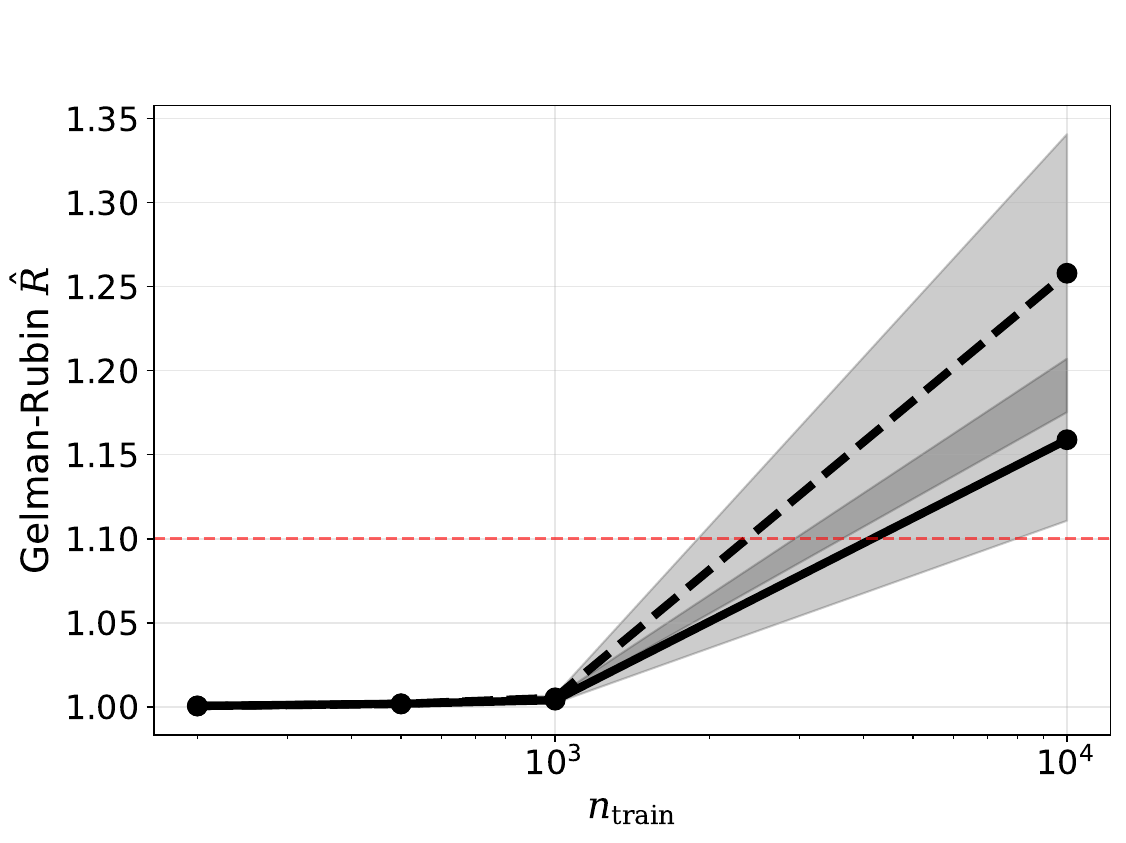}
        \hfill
        \includegraphics[width=0.32\linewidth]{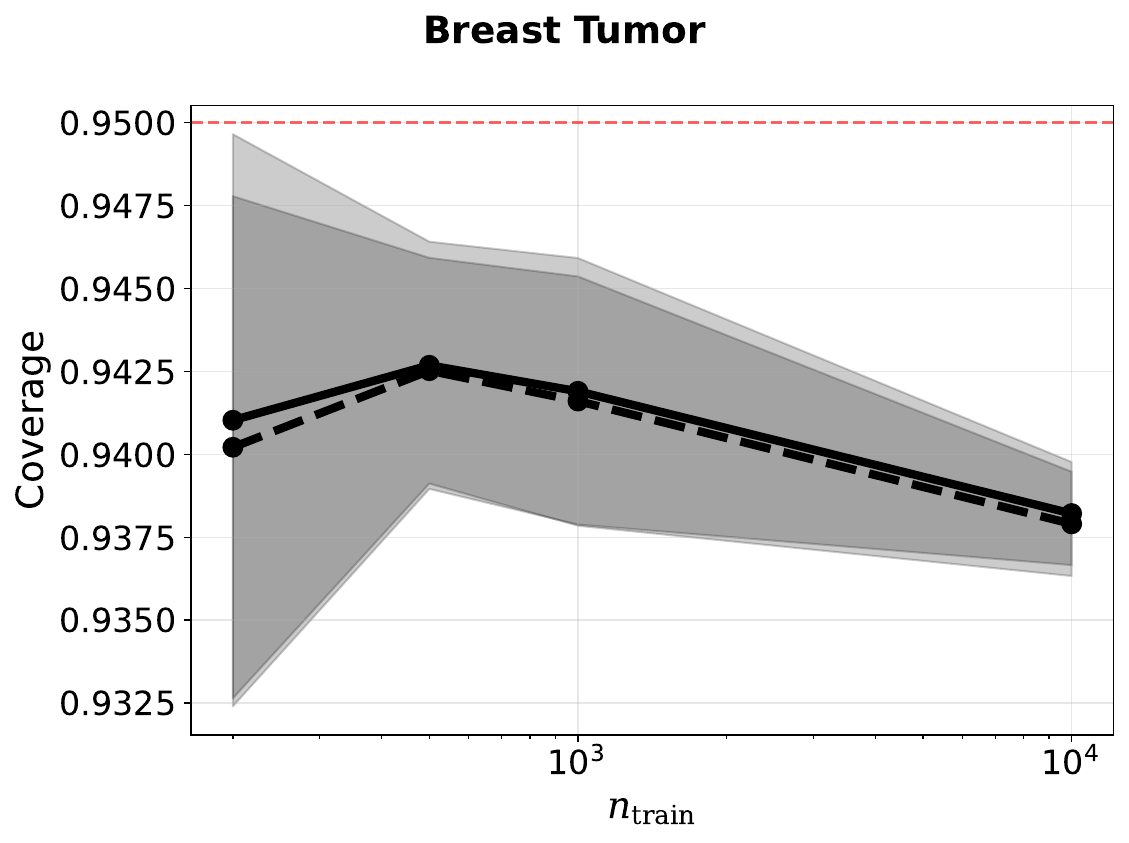}
        \hfill
        \includegraphics[width=0.32\linewidth]{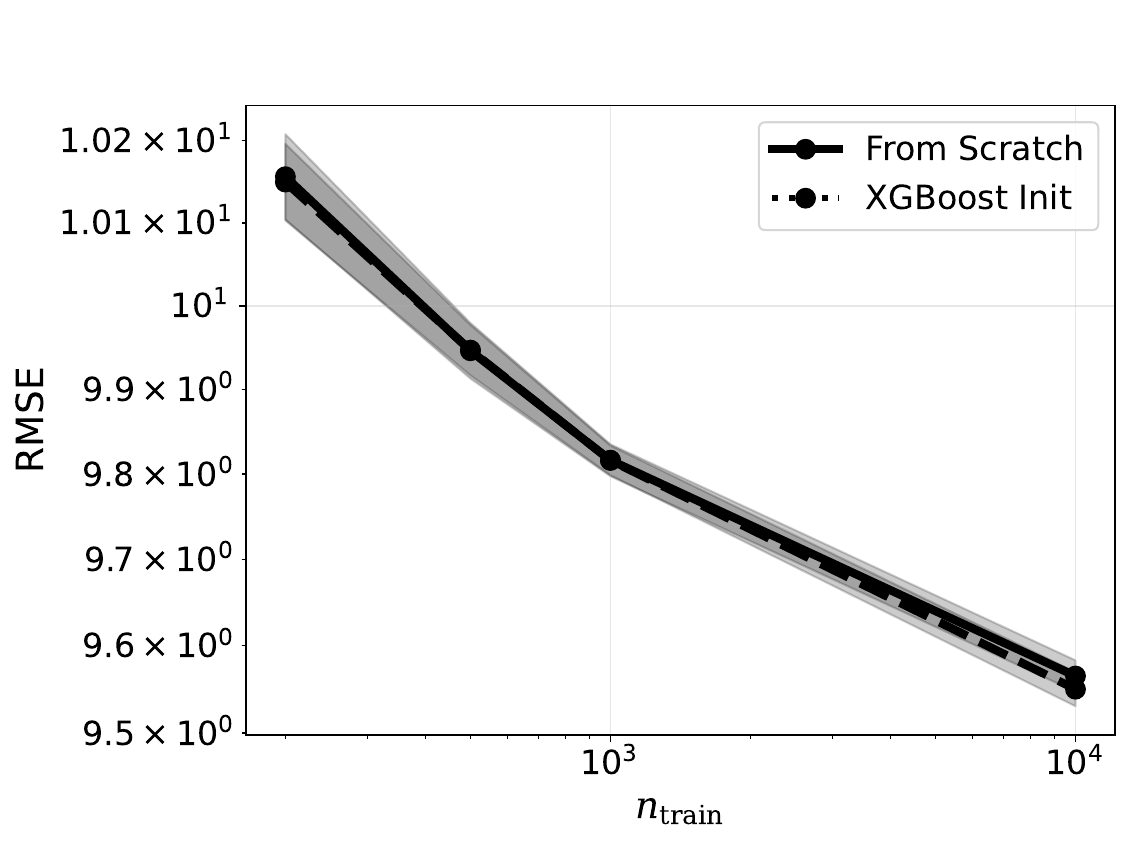}%
    \end{minipage} \\
    %\midrule
    % Low Lei Candes
    \begin{minipage}{0.9\textwidth}
        \centering
        \includegraphics[width=0.32\linewidth]{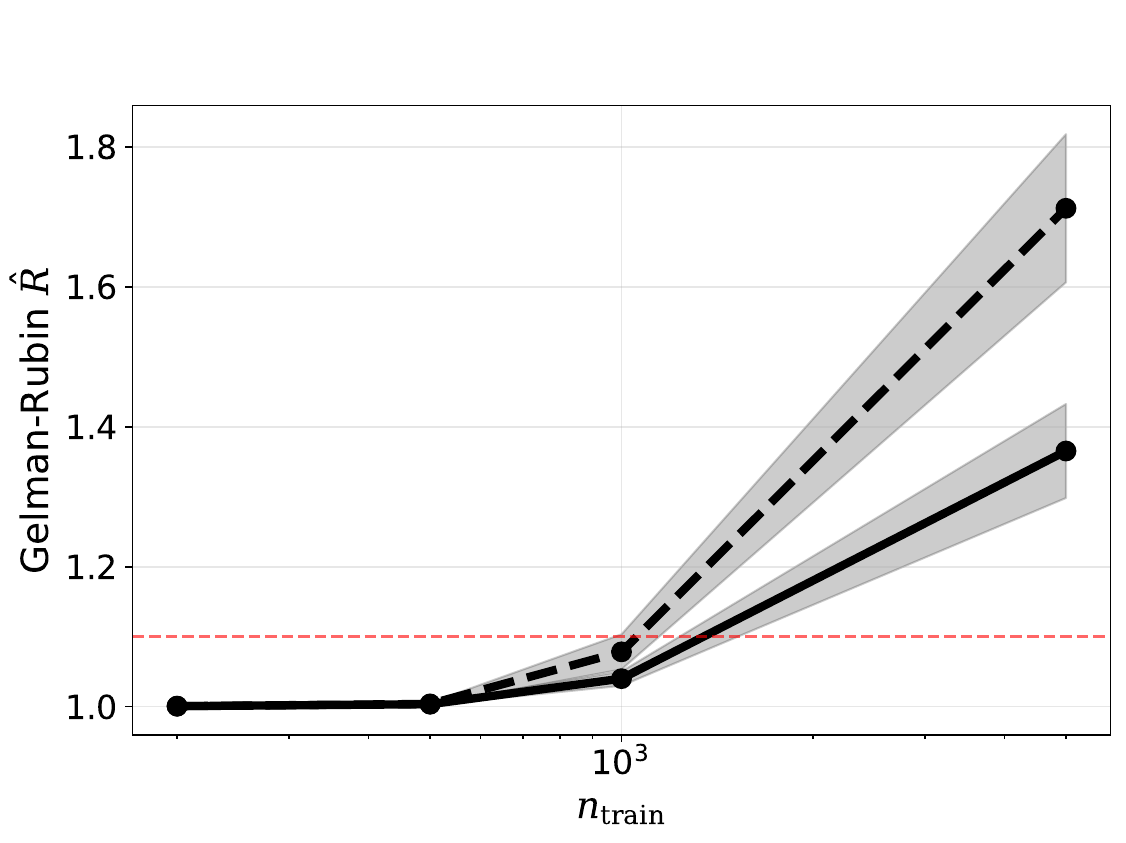}%
        \hfill
        \includegraphics[width=0.32\linewidth]{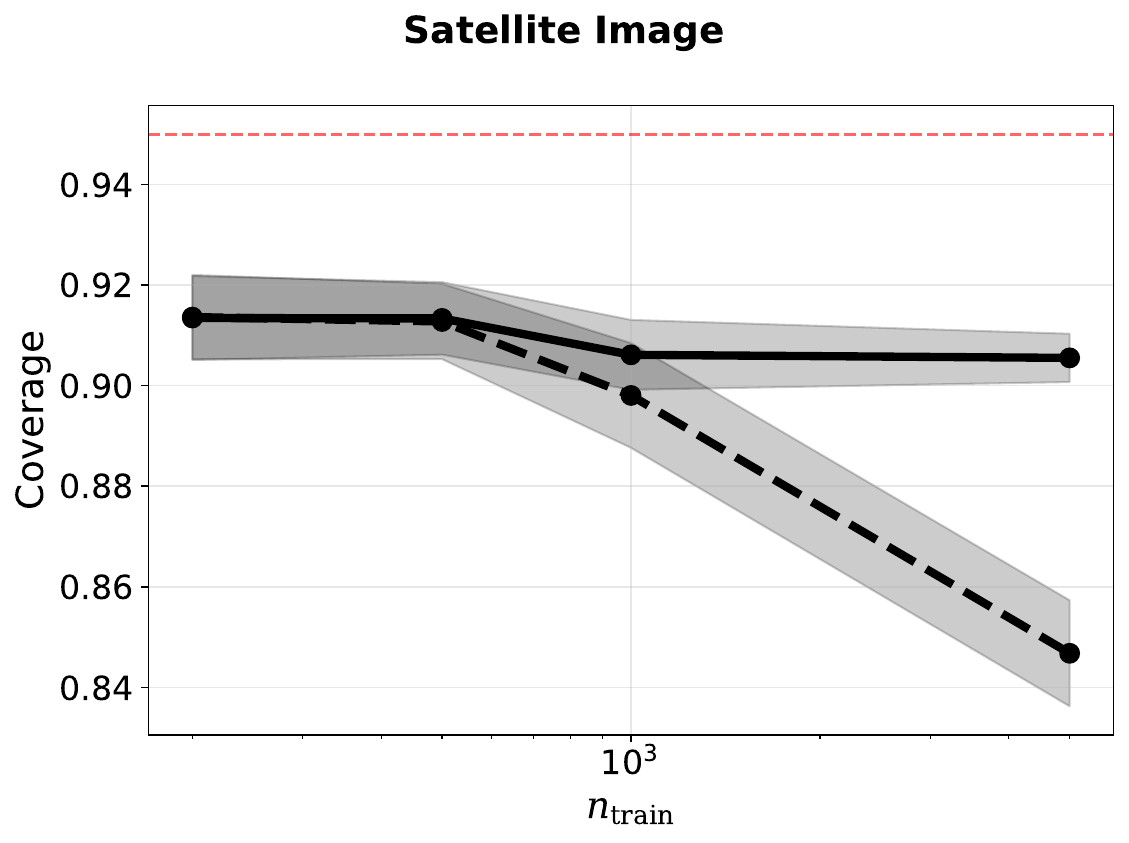}
        \hfill
        \includegraphics[width=0.32\linewidth]{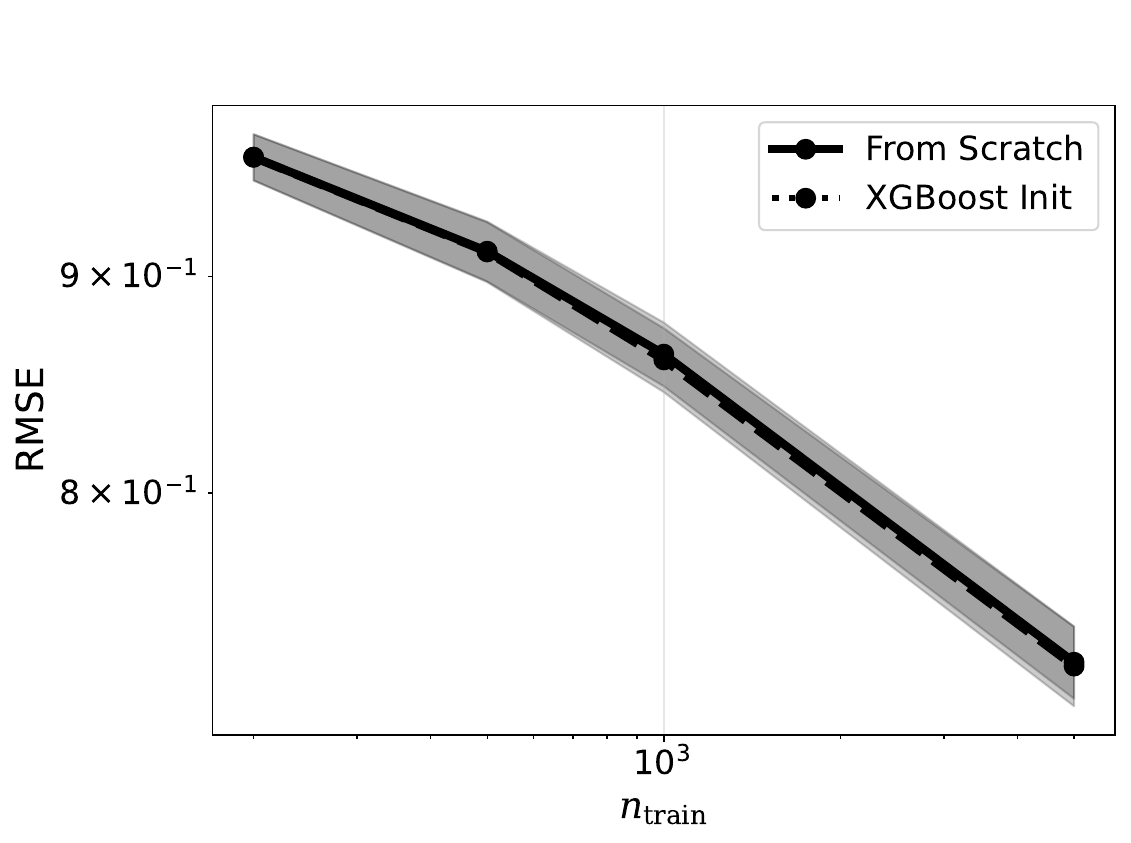}%
    \end{minipage} \\
    %\midrule
    % 1199 BNG echoMonths
    \begin{minipage}{0.9\textwidth}
        \centering
        \includegraphics[width=0.32\linewidth]{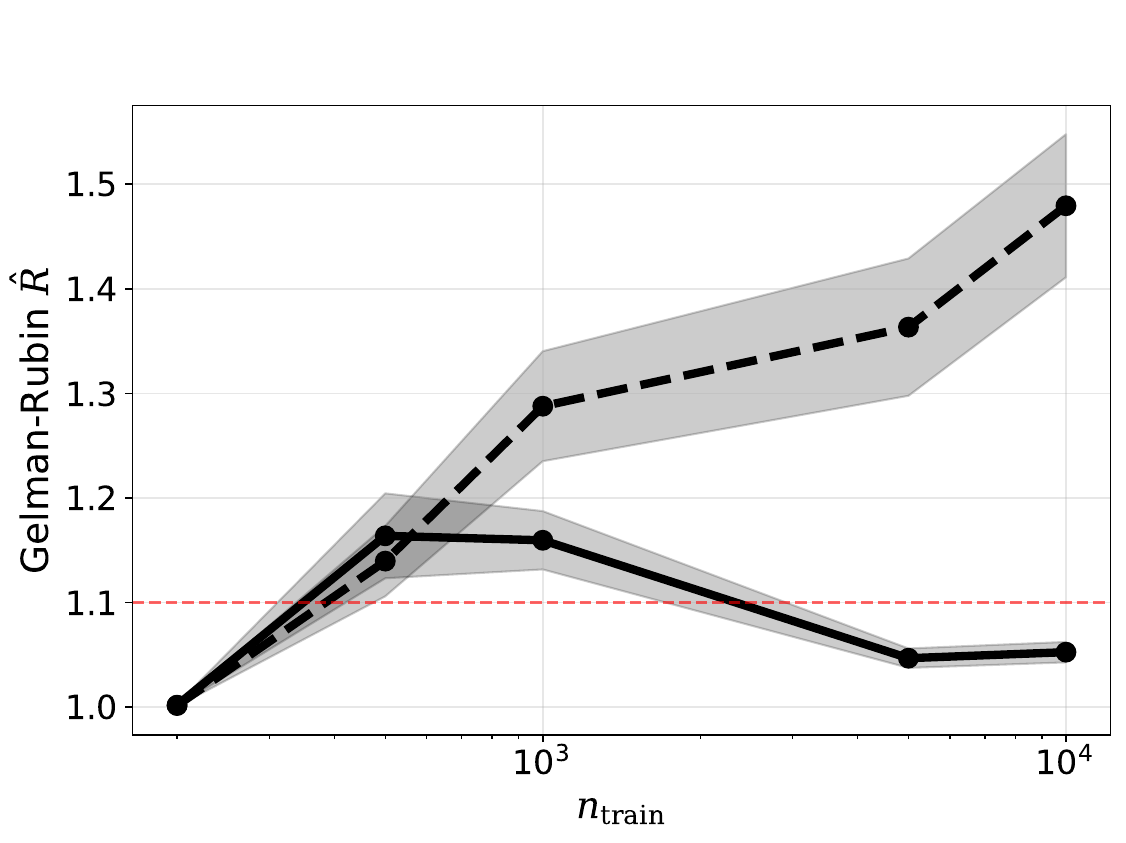}%
        \hfill
        \includegraphics[width=0.32\linewidth]{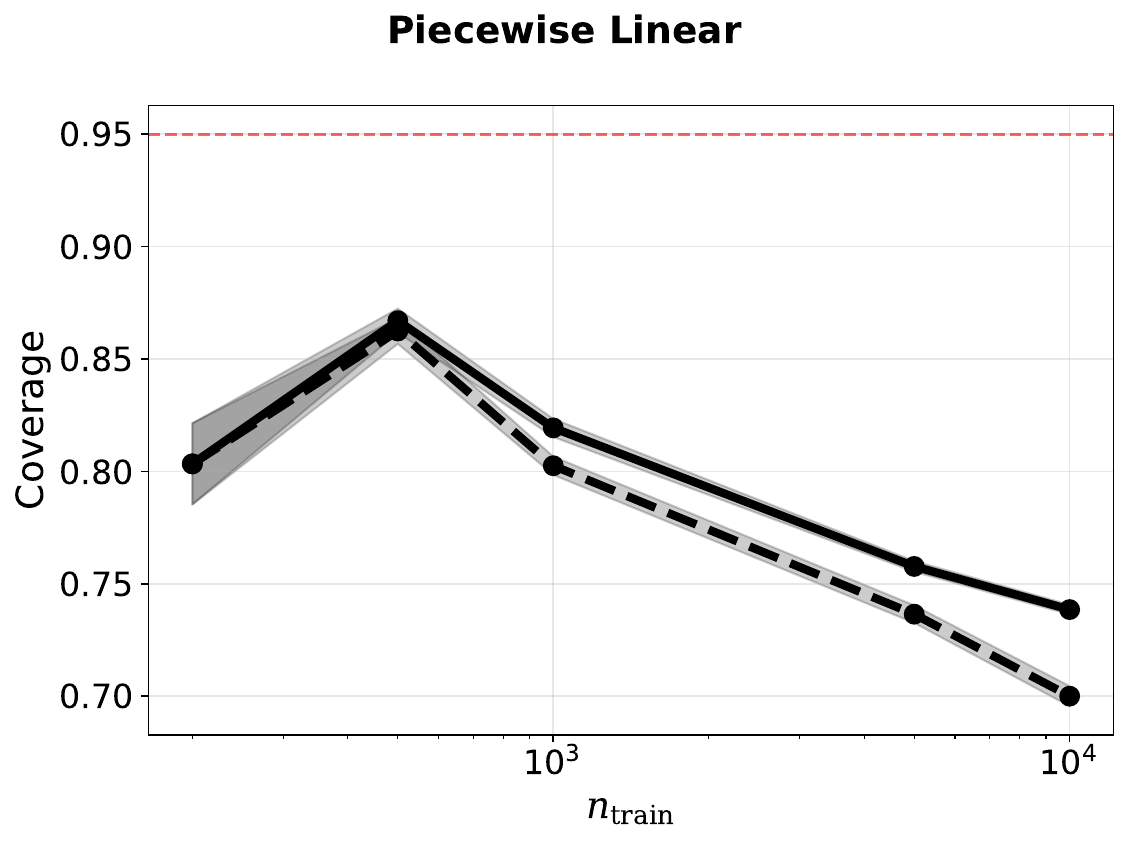}
        \hfill
        \includegraphics[width=0.32\linewidth]{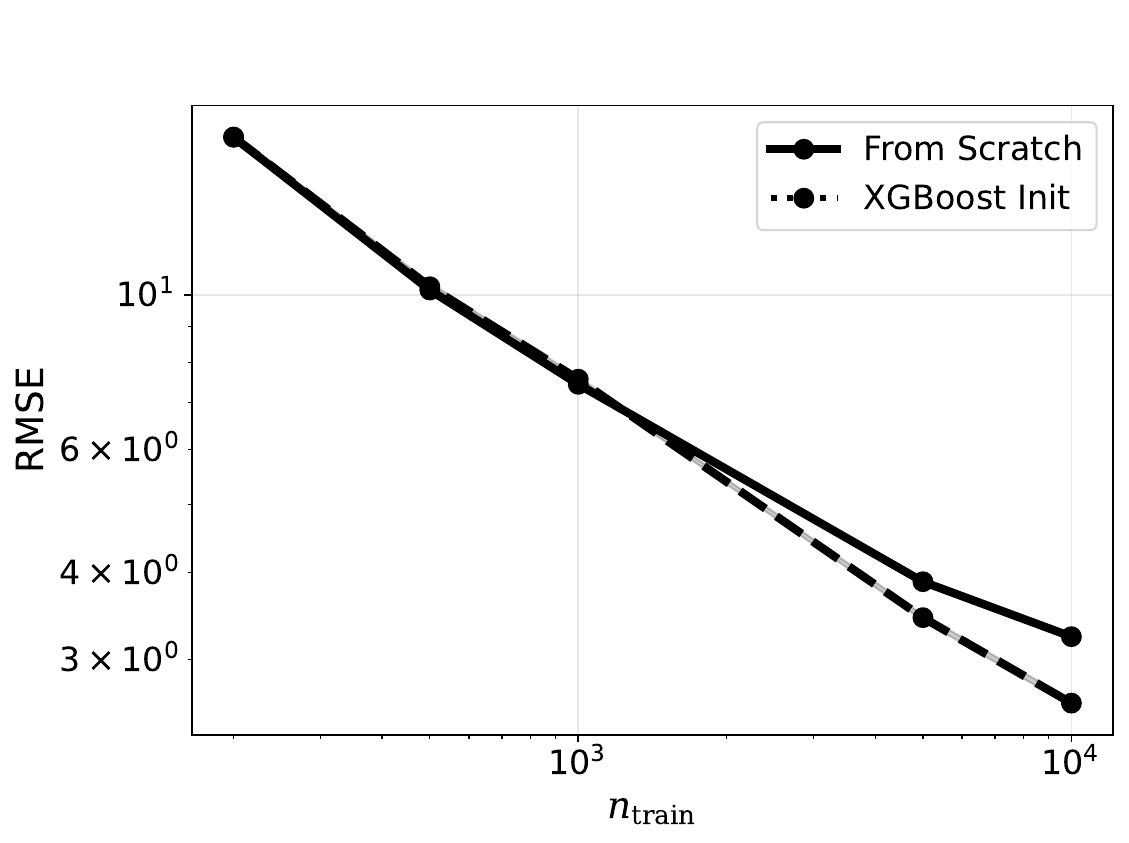}%
    \end{minipage} \\
    \end{tabular}
    \caption{
    Values for Gelman-Rubin $\hat R$ (left), coverage (center), and RMSE (right) for the BART sampler under an XGBoost initialization and the standard BART initialization.
    Results are plotted for California Housing, Low Dimensional Smooth, Echo Months, Breast Tumor, Satellite Image, and Piecewise Linear datasets (from top to bottom).
    Error bars represent $\pm 1.96$ standard errors from 25 replicates.
    }
    \label{fig:experiment3}
\end{figure}

\newpage

\begin{figure}[H]
    \centering
    \begin{tabular}{c}
    % California Housing
    \begin{minipage}{0.9\textwidth}
        \centering
        \includegraphics[width=0.32\linewidth]{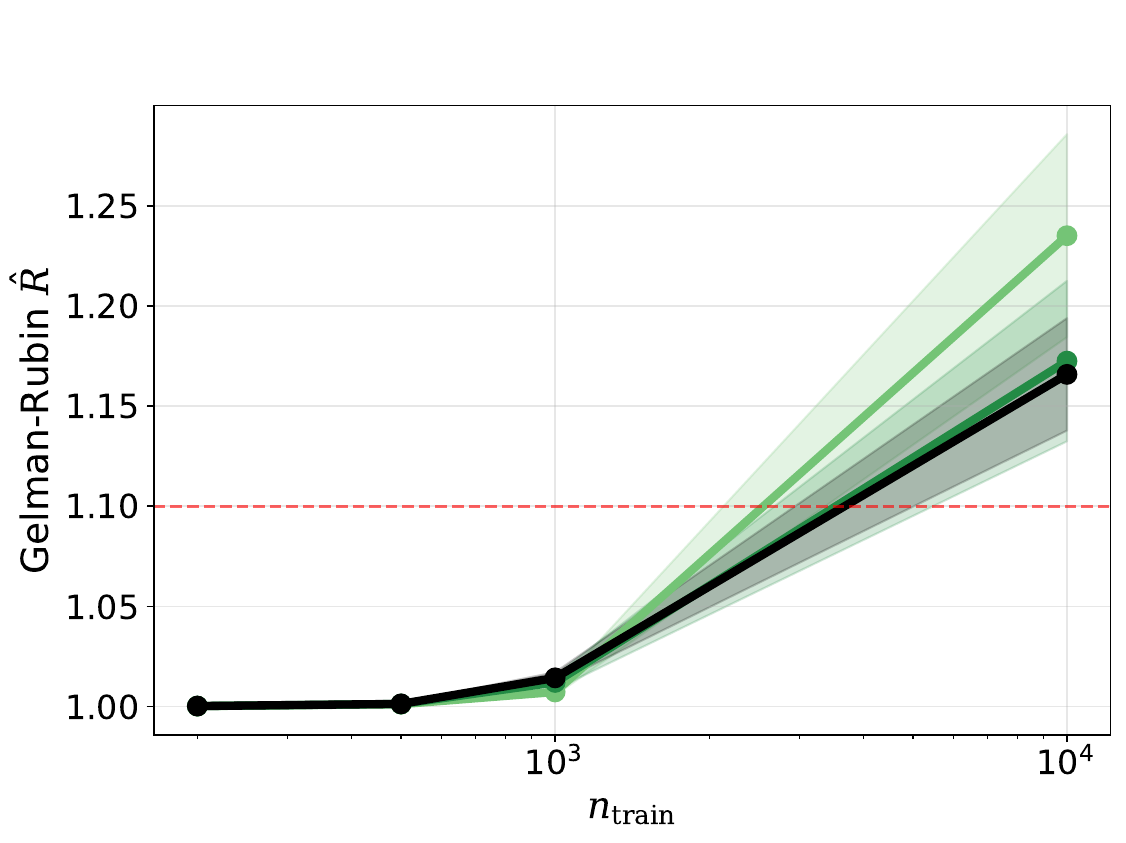}%
        \hfill
        \includegraphics[width=0.32\linewidth]{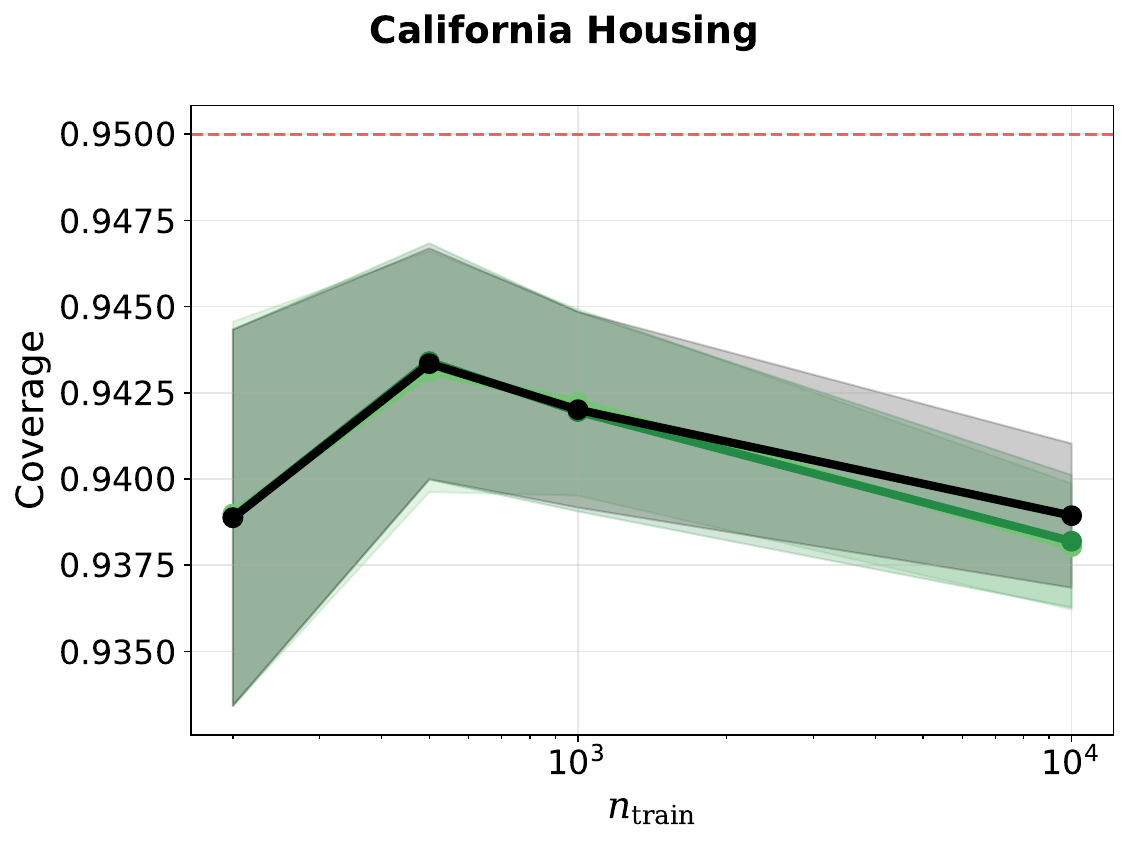}%
        \hfill
        \includegraphics[width=0.32\linewidth]{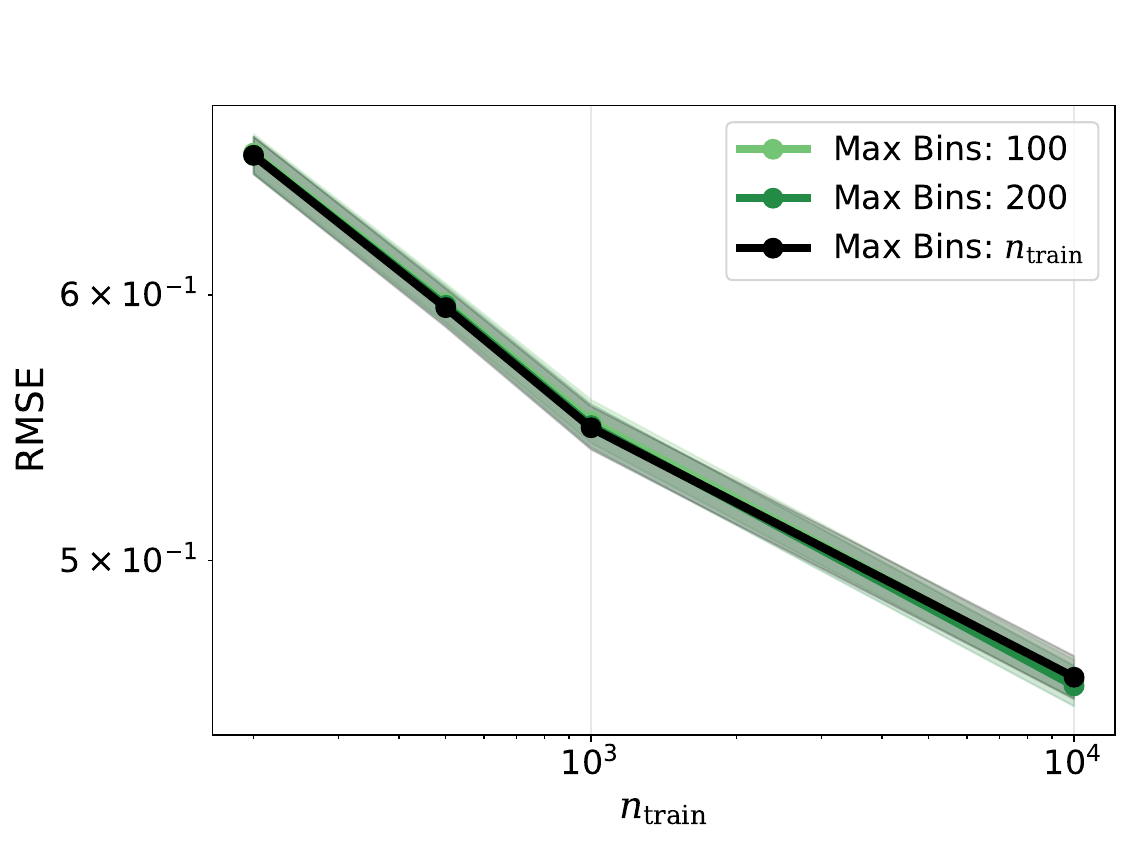}
    \end{minipage} \\
    % Low Lei Candes
    \begin{minipage}{0.9\textwidth}
        \centering
        \includegraphics[width=0.32\linewidth]{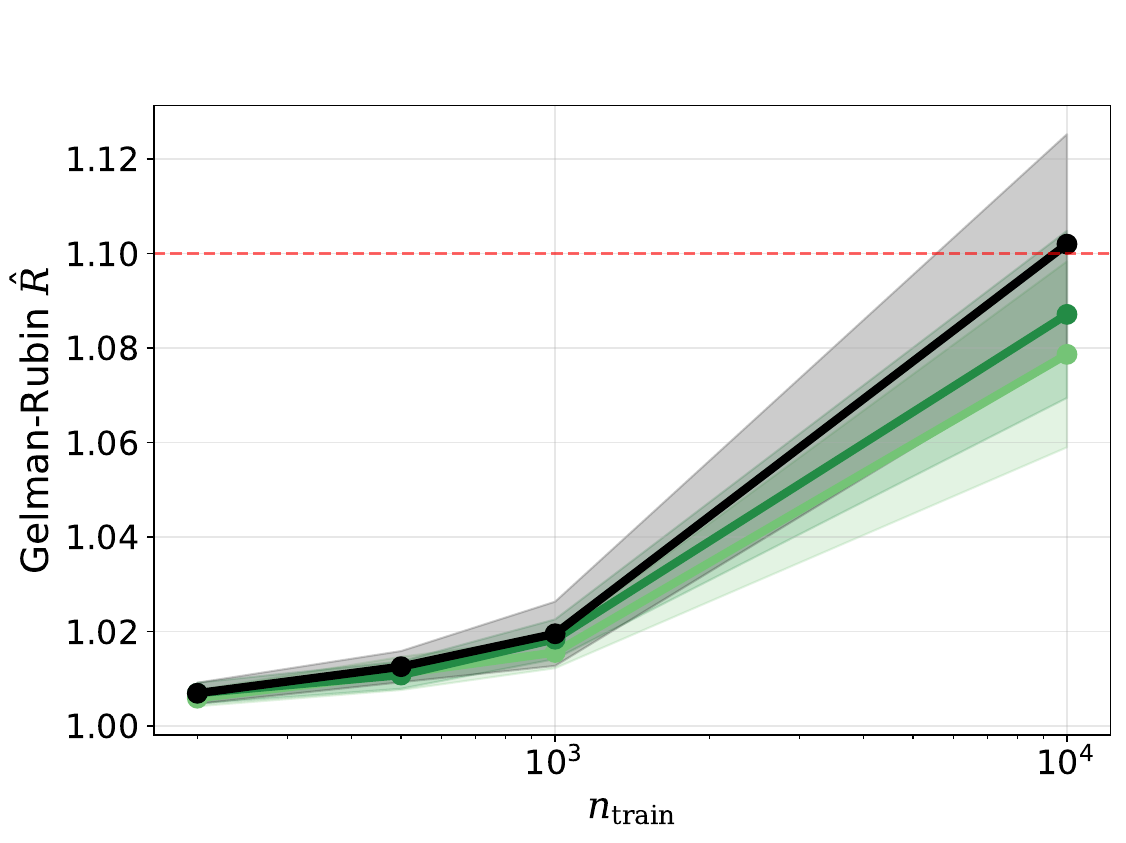}%
        \hfill
        \includegraphics[width=0.32\linewidth]{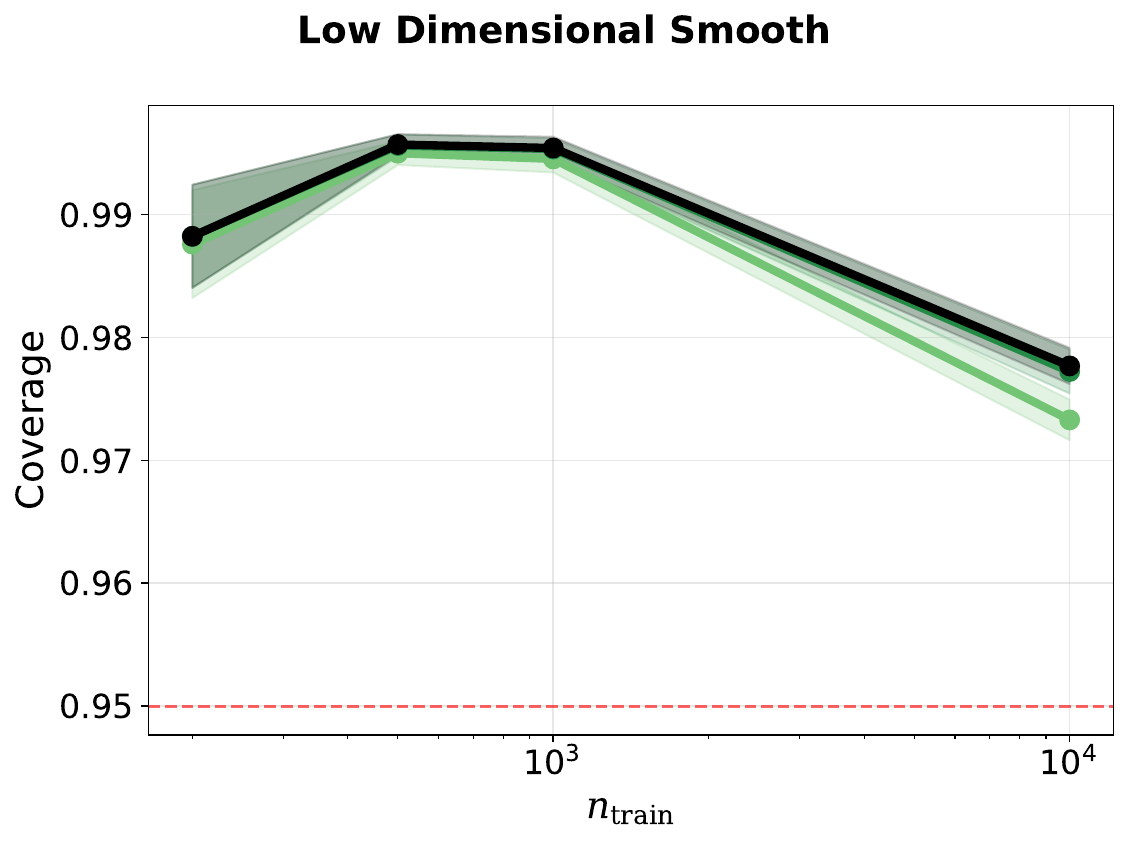}%
        \hfill
        \includegraphics[width=0.32\linewidth]{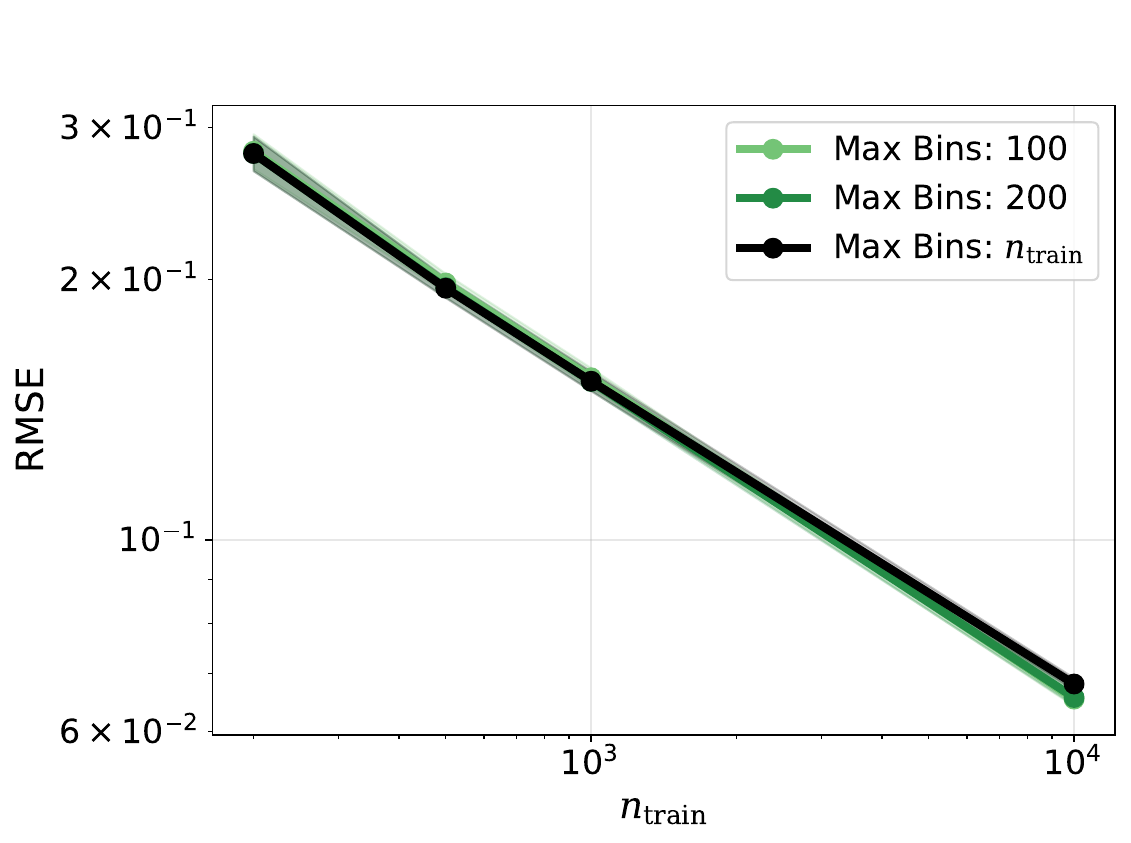}
    \end{minipage} \\
    % 1199 BNG echoMonths
    \begin{minipage}{0.9\textwidth}
        \centering
        \includegraphics[width=0.32\linewidth]{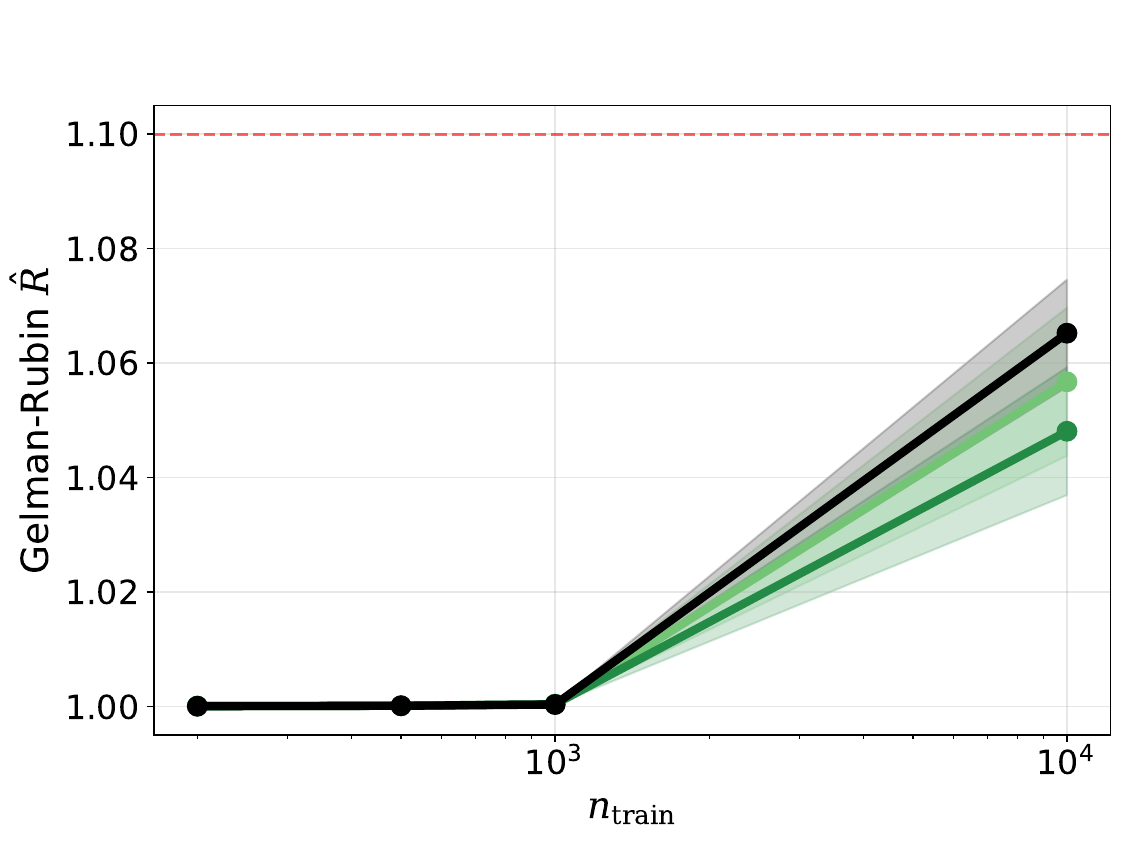}%
        \hfill
        \includegraphics[width=0.32\linewidth]{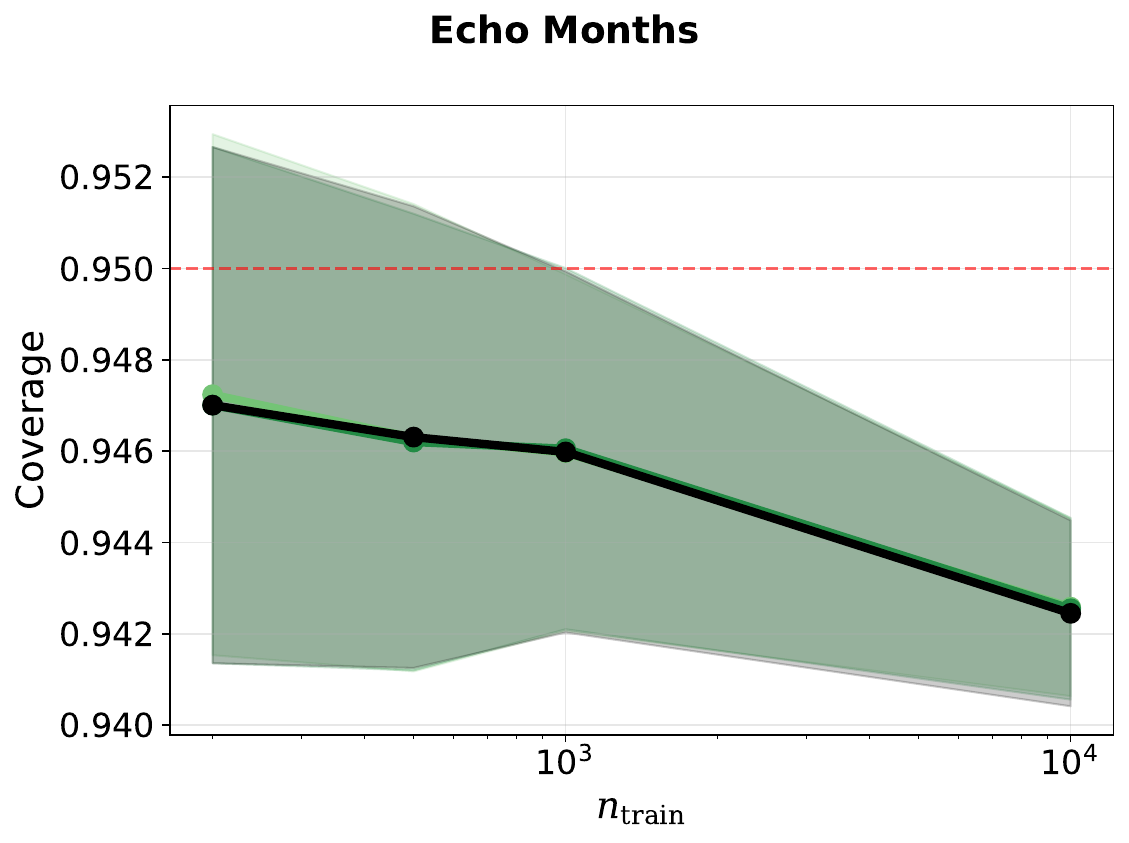}%
        \hfill
        \includegraphics[width=0.32\linewidth]{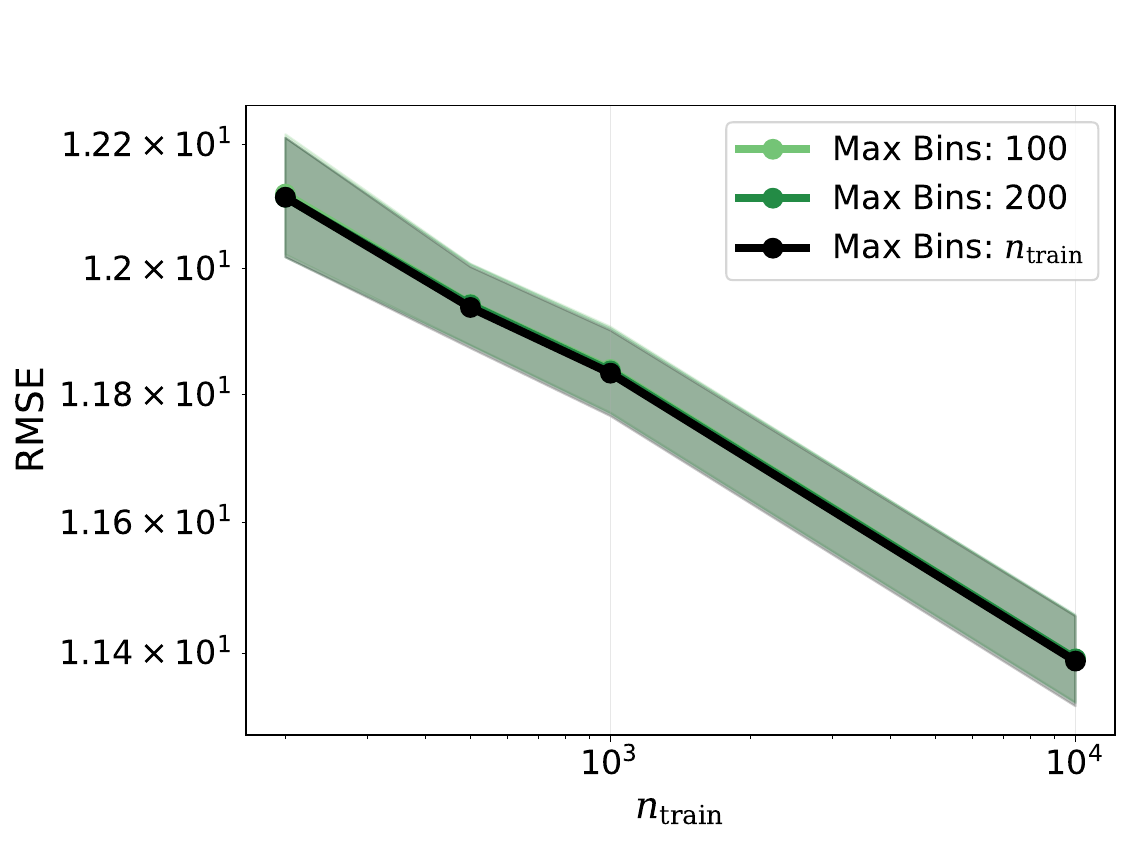}
    \end{minipage} \\
      \begin{minipage}{0.9\textwidth}
        \centering
        \includegraphics[width=0.32\linewidth]{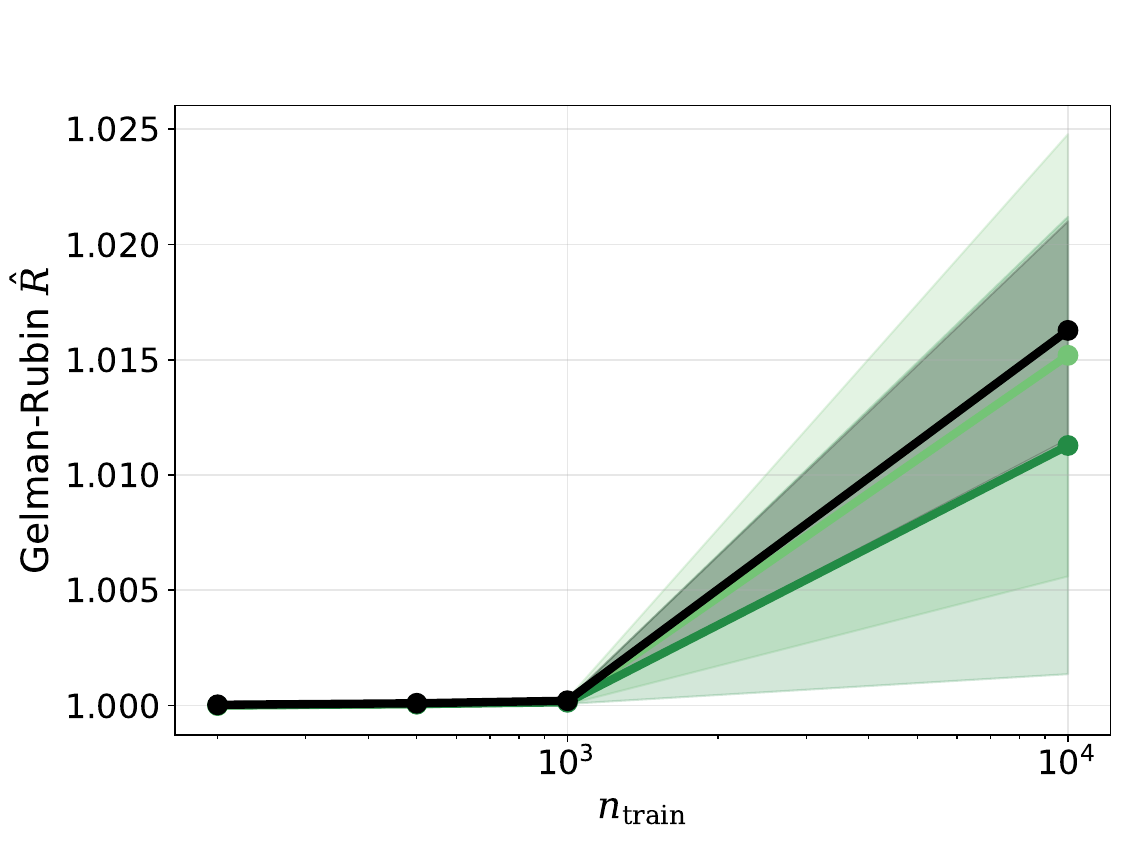}%
        \hfill
        \includegraphics[width=0.32\linewidth]{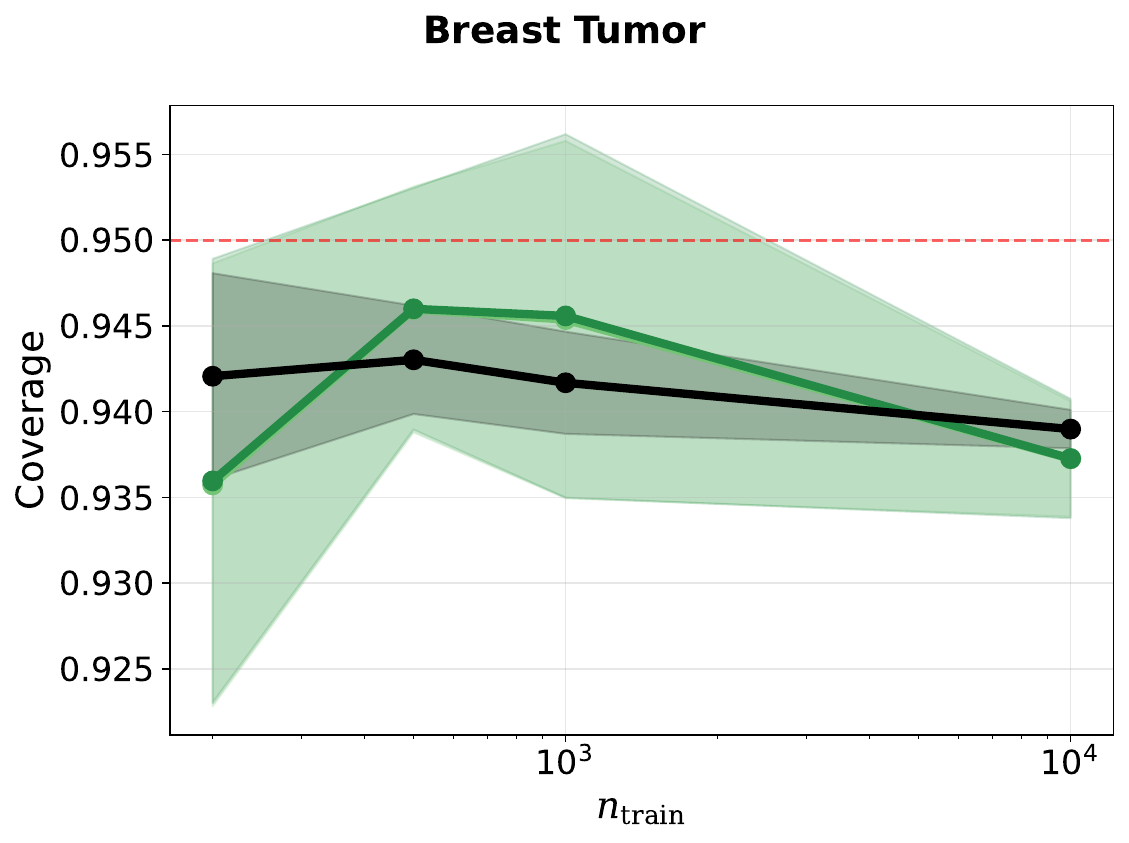}%
        \hfill
        \includegraphics[width=0.32\linewidth]{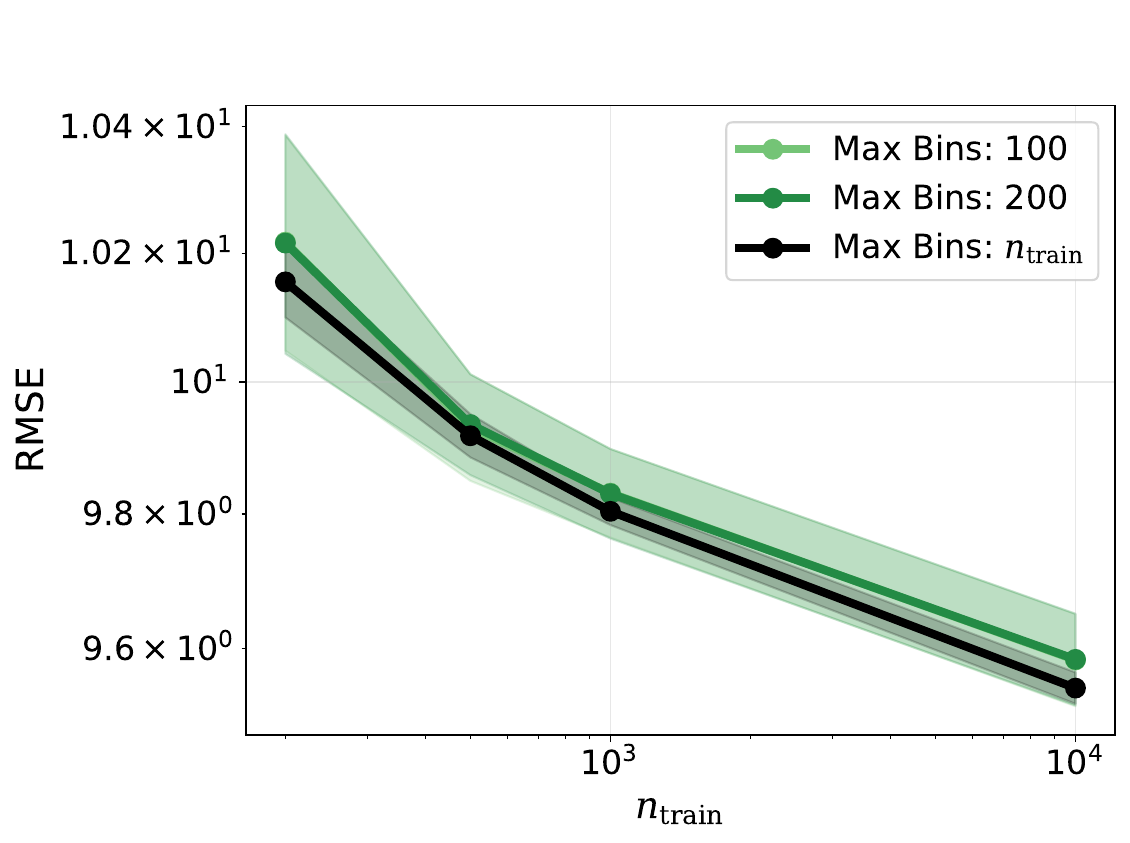}
    \end{minipage} \\
    % Low Lei Candes
    \begin{minipage}{0.9\textwidth}
        \centering
        \includegraphics[width=0.32\linewidth]{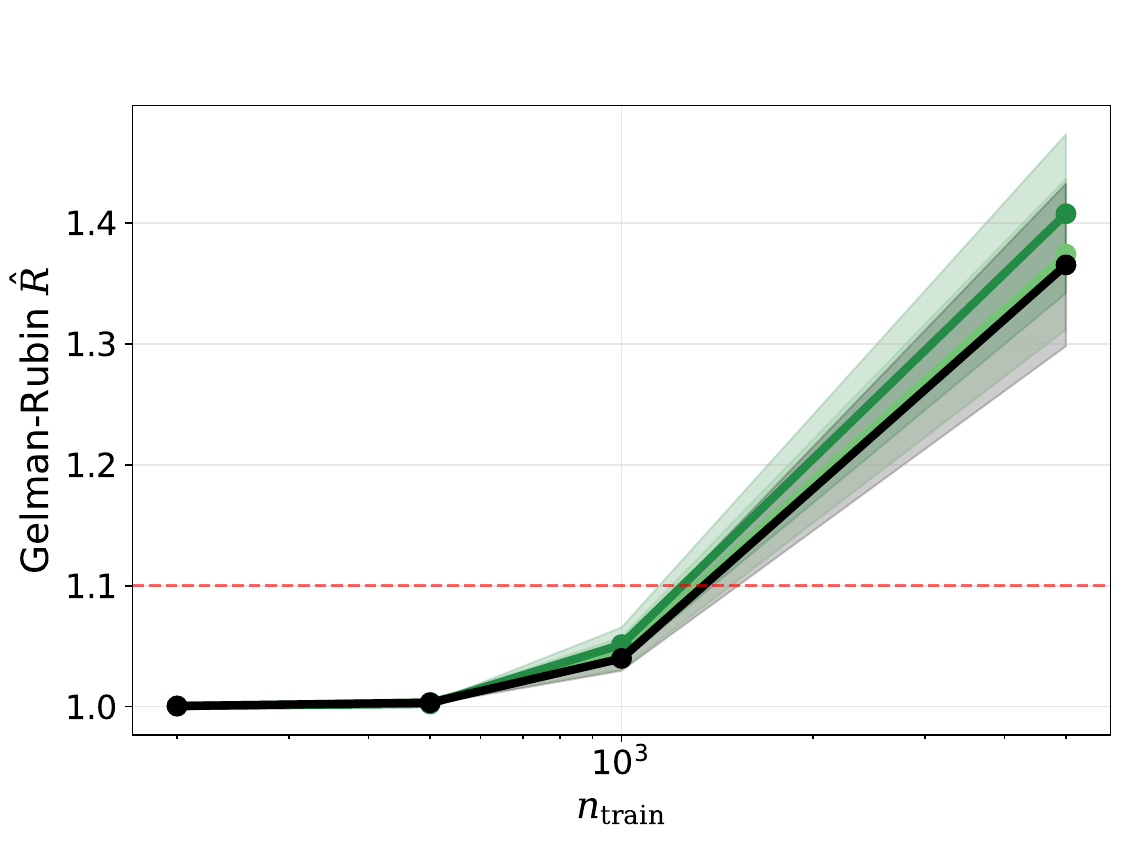}%
        \hfill
        \includegraphics[width=0.32\linewidth]{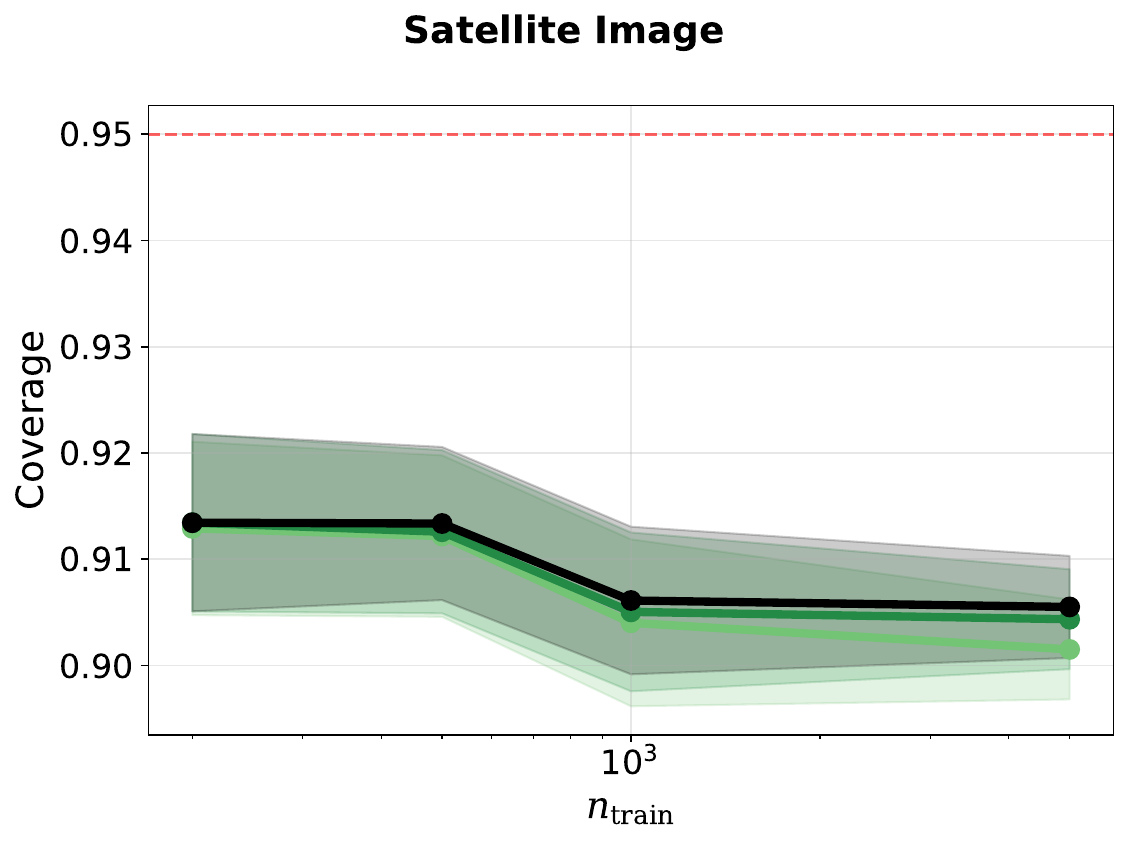}%
        \hfill
        \includegraphics[width=0.32\linewidth]{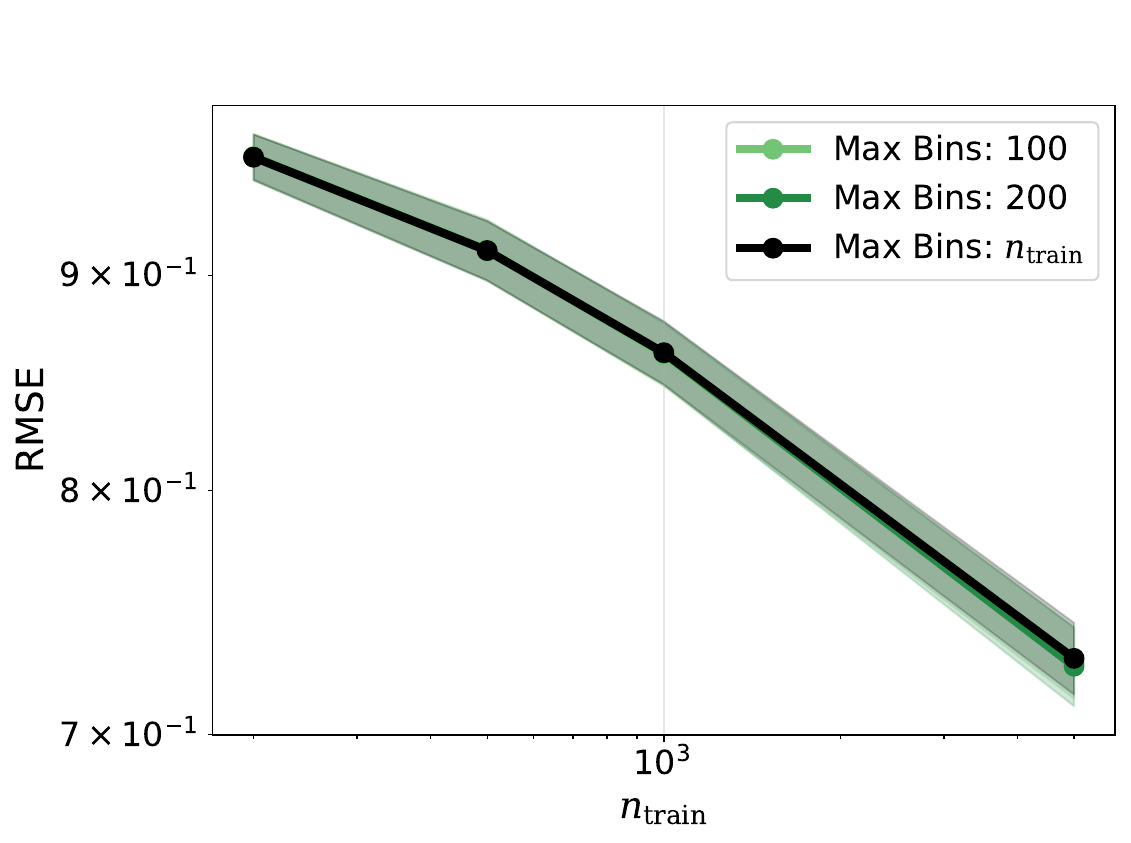}
    \end{minipage} \\
    % 1199 BNG echoMonths
    \begin{minipage}{0.9\textwidth}
        \centering
        \includegraphics[width=0.32\linewidth]{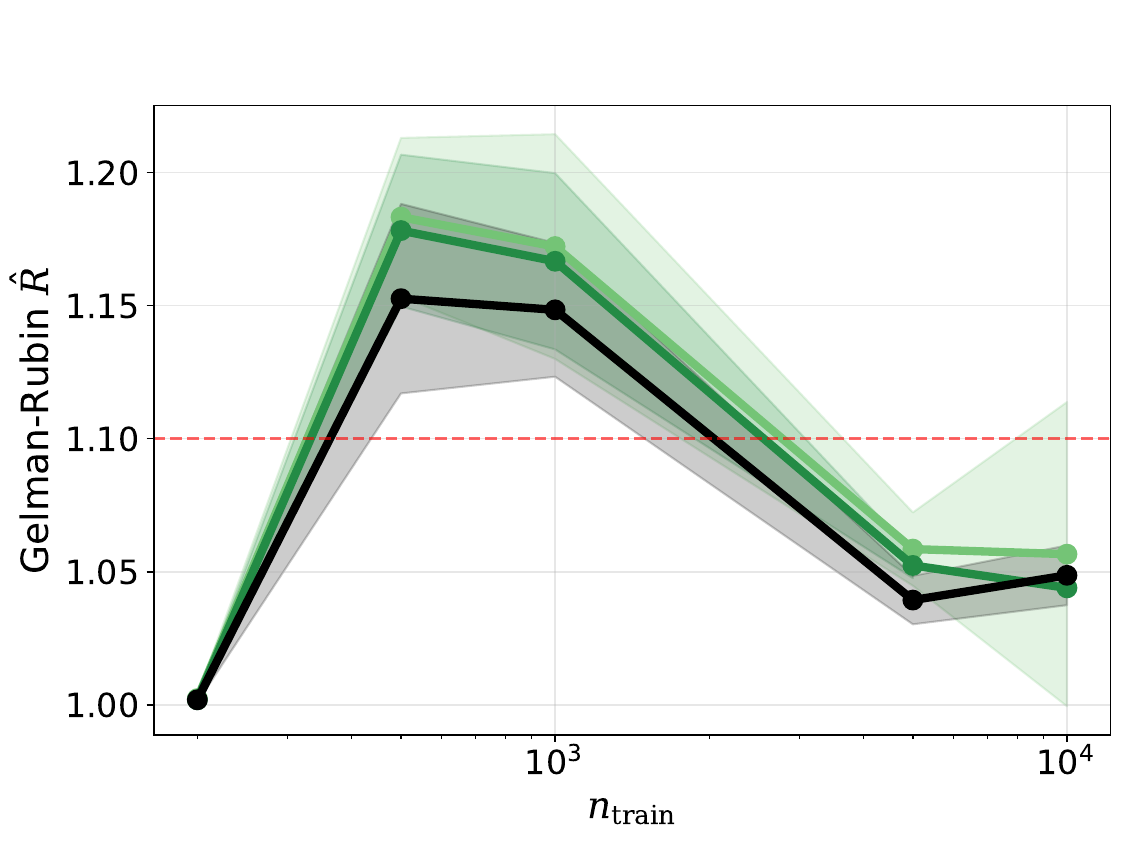}%
        \hfill
        \includegraphics[width=0.32\linewidth]{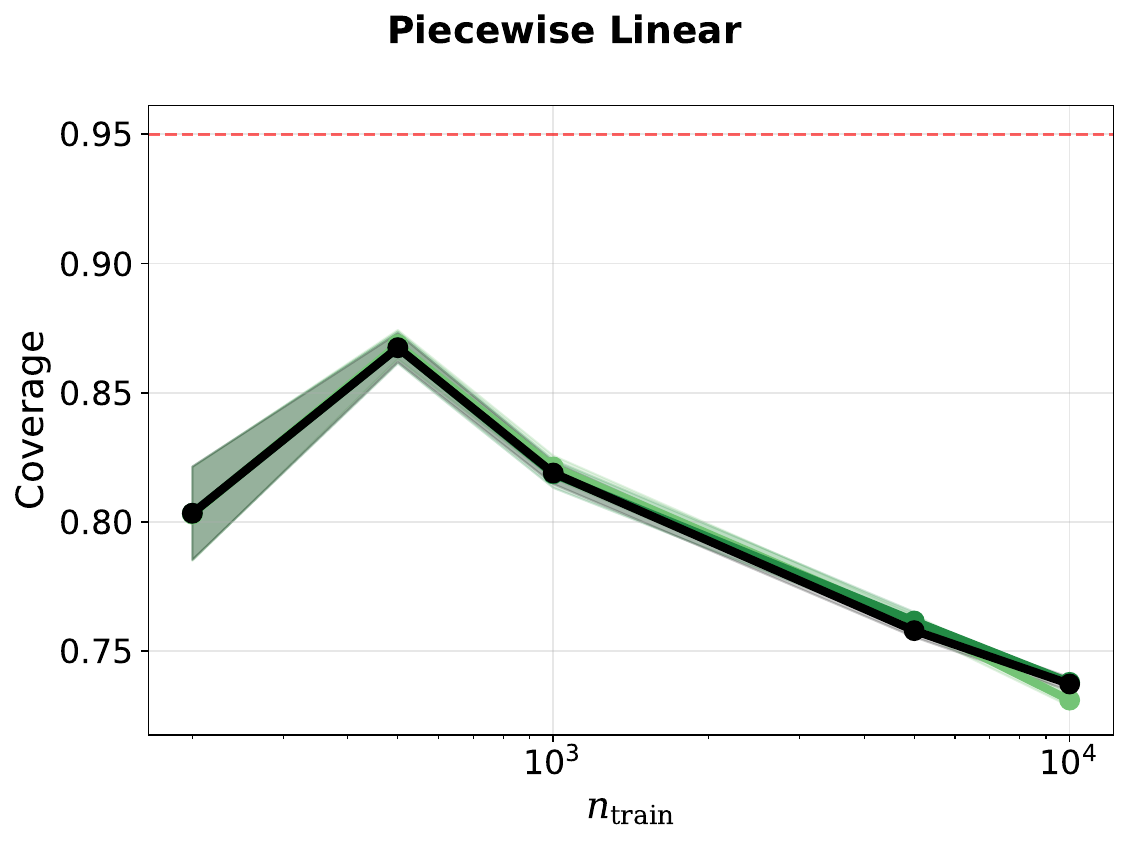}%
        \hfill
        \includegraphics[width=0.32\linewidth]{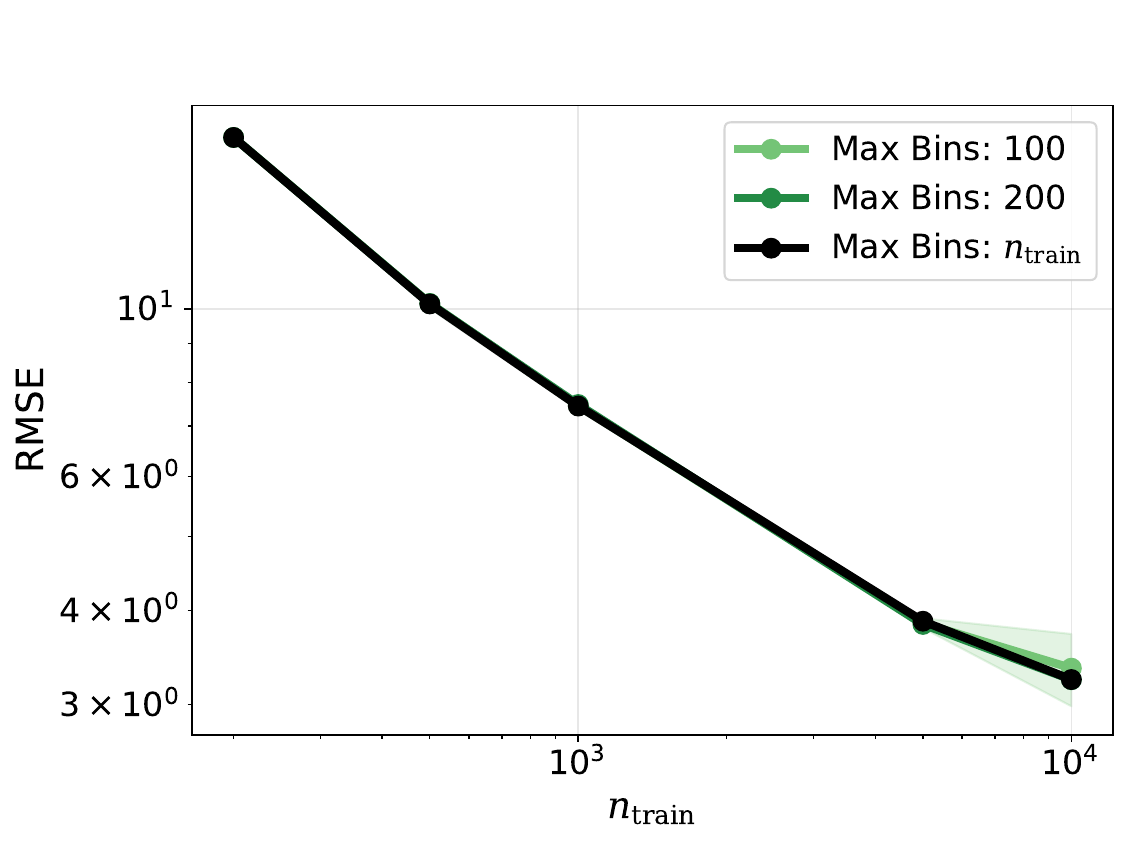}
    \end{minipage} \\
    \end{tabular}
    \caption{
    Values for Gelman-Rubin $\hat R$ (left), coverage (center), and RMSE (right) for the BART sampler under different number of candidate splitting thresholds.
    Results are plotted for California Housing, Low Dimensional Smooth, Echo Months, Breast Tumor, Satellite Image, and Piecewise Linear datasets (from top to bottom).
    Error bars represent $\pm 1.96$ standard errors from 25 replicates.
    }
    \label{fig:experiment_thresholds}
\end{figure}

\newpage

\begin{figure}[H]
    \centering
    \begin{tabular}{c}
    % California Housing
    \begin{minipage}{0.9\textwidth}
        \centering
        \includegraphics[width=0.32\linewidth]{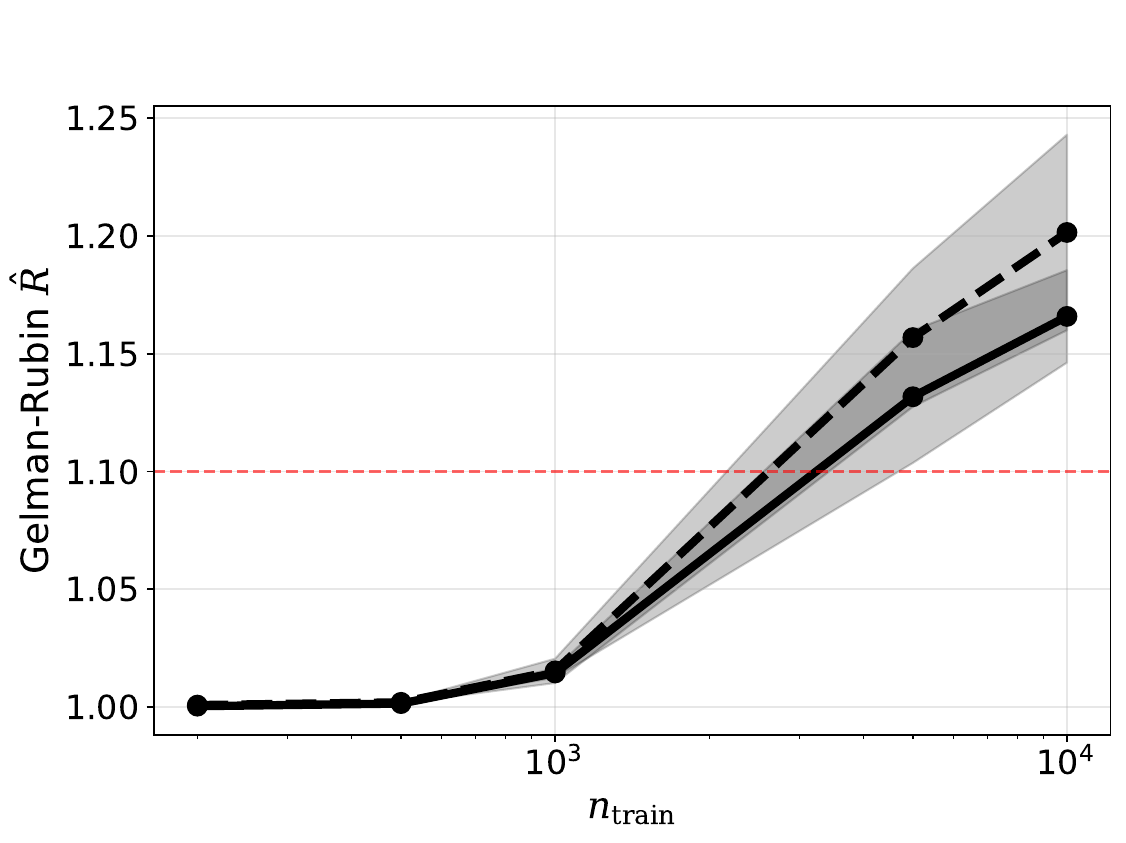}%
        \hfill
        \includegraphics[width=0.32\linewidth]{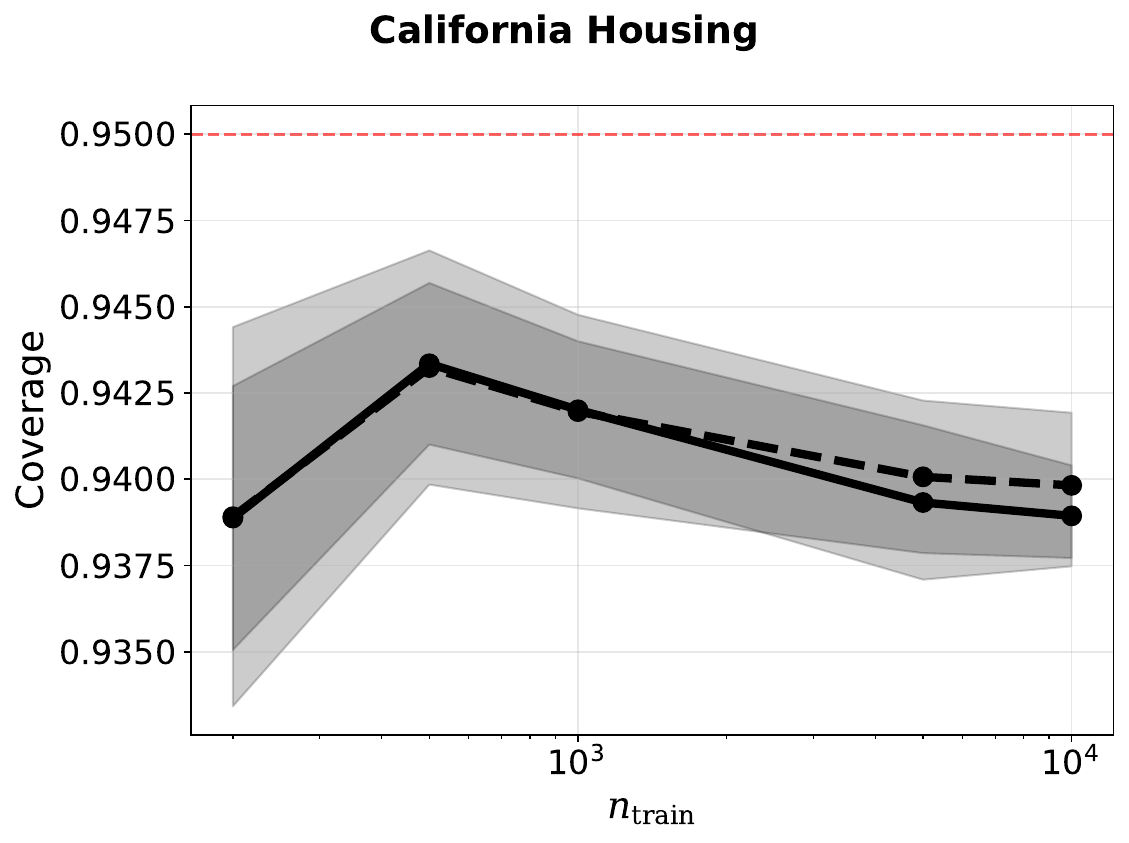}%
        \hfill
        \includegraphics[width=0.32\linewidth]{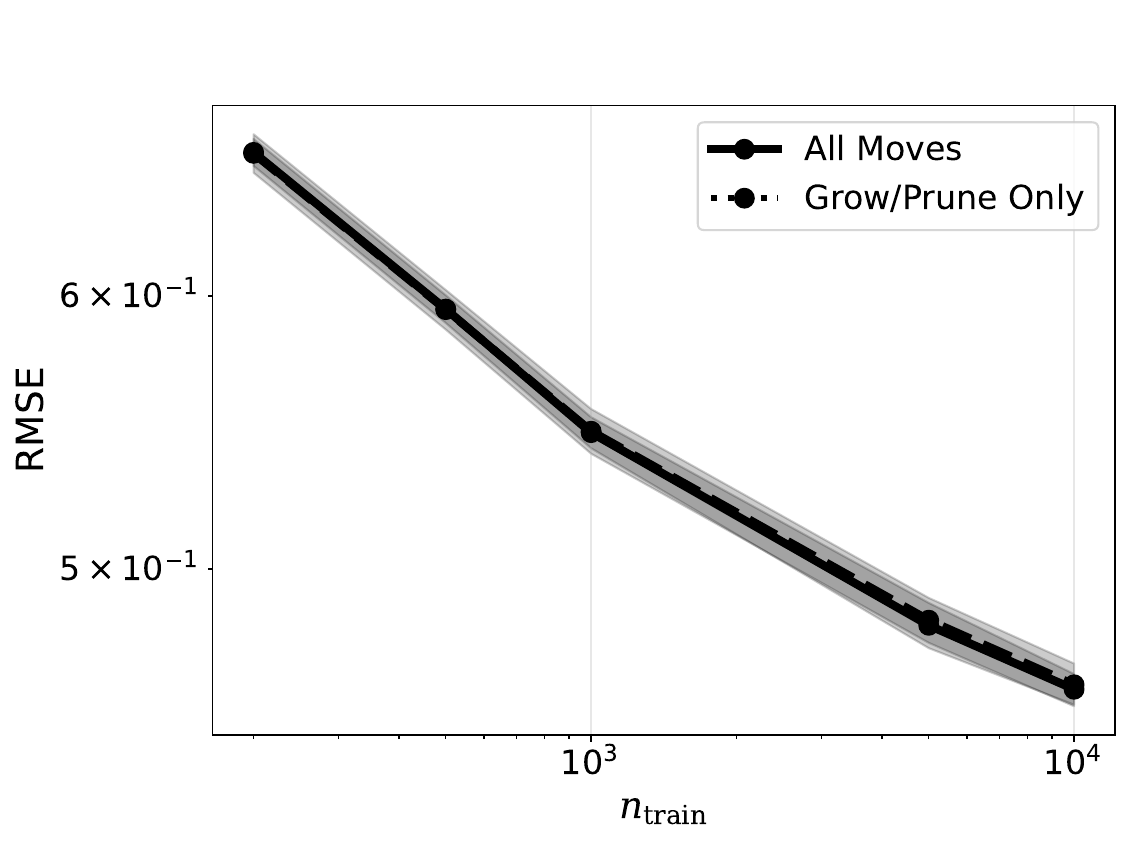}
    \end{minipage} \\
    % Low Lei Candes
    \begin{minipage}{0.9\textwidth}
        \centering
        \includegraphics[width=0.32\linewidth]{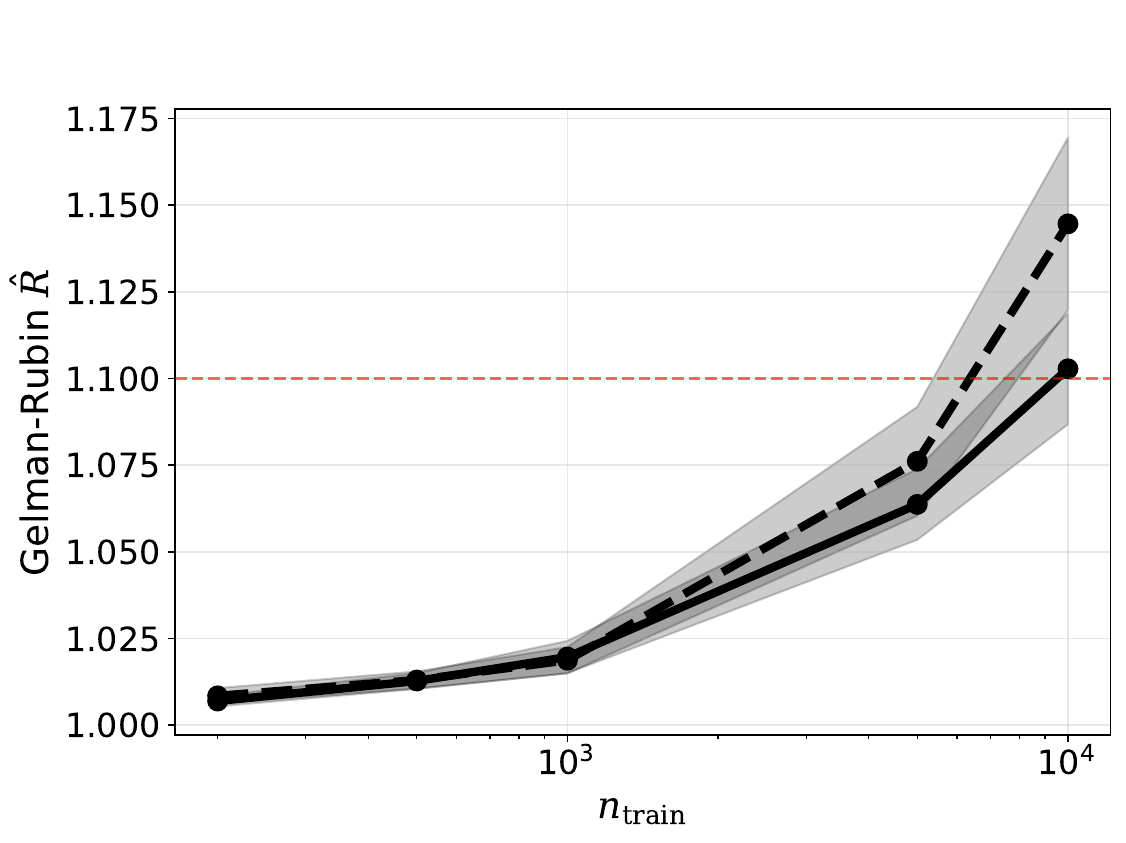}%
        \hfill
        \includegraphics[width=0.32\linewidth]{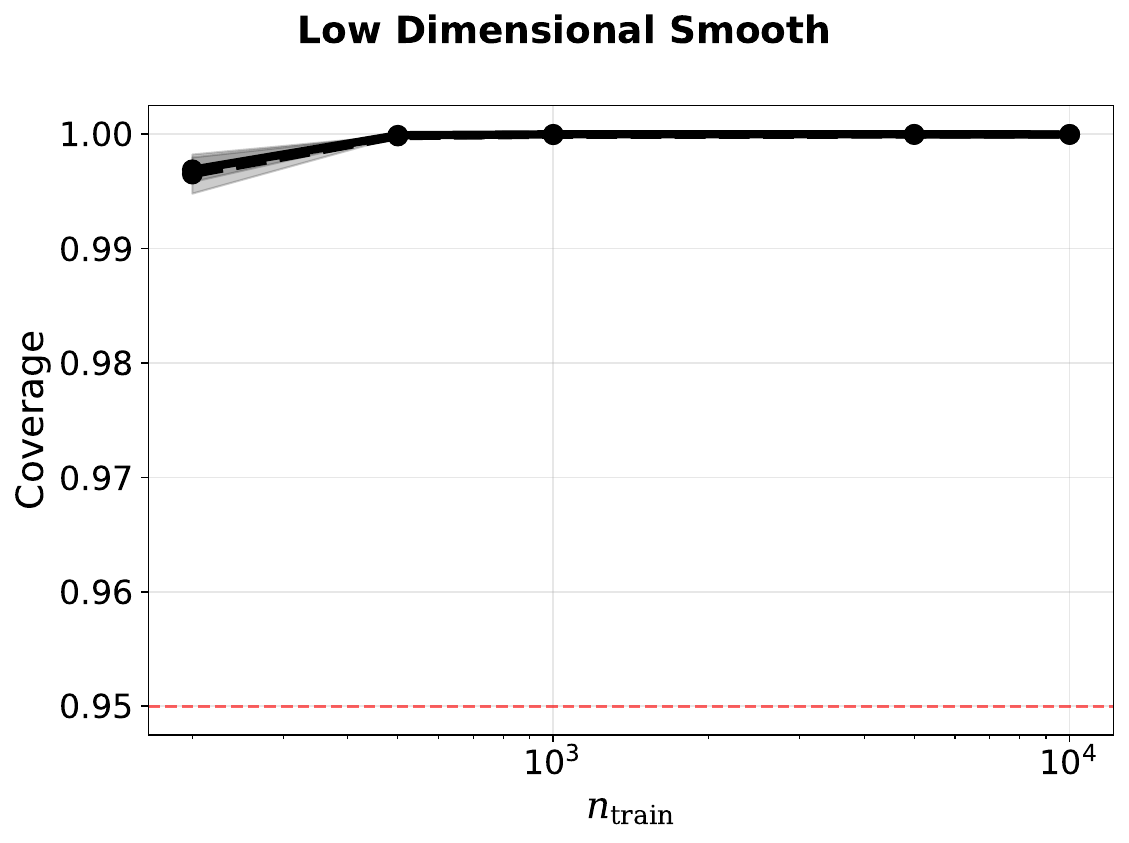}%
        \hfill
        \includegraphics[width=0.32\linewidth]{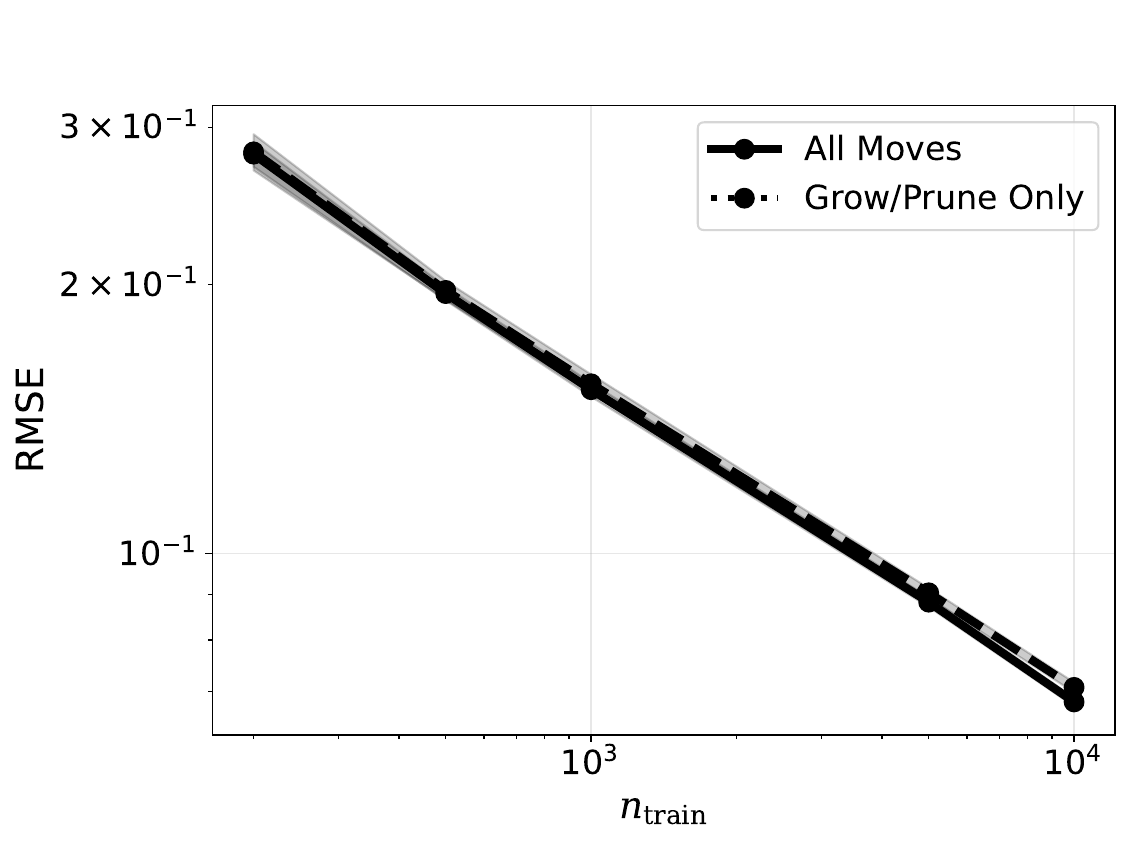}
    \end{minipage} \\
    % 1199 BNG echoMonths
    \begin{minipage}{0.9\textwidth}
        \centering
        \includegraphics[width=0.32\linewidth]{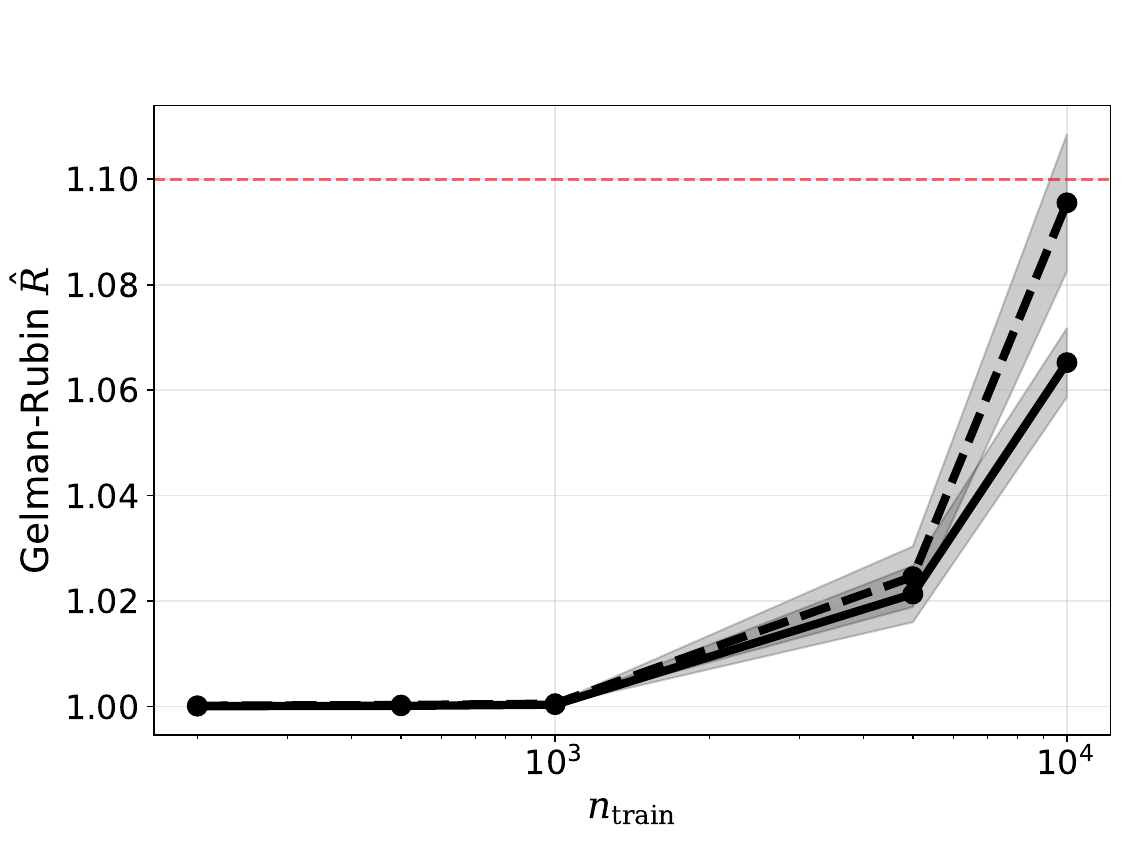}%
        \hfill
        \includegraphics[width=0.32\linewidth]{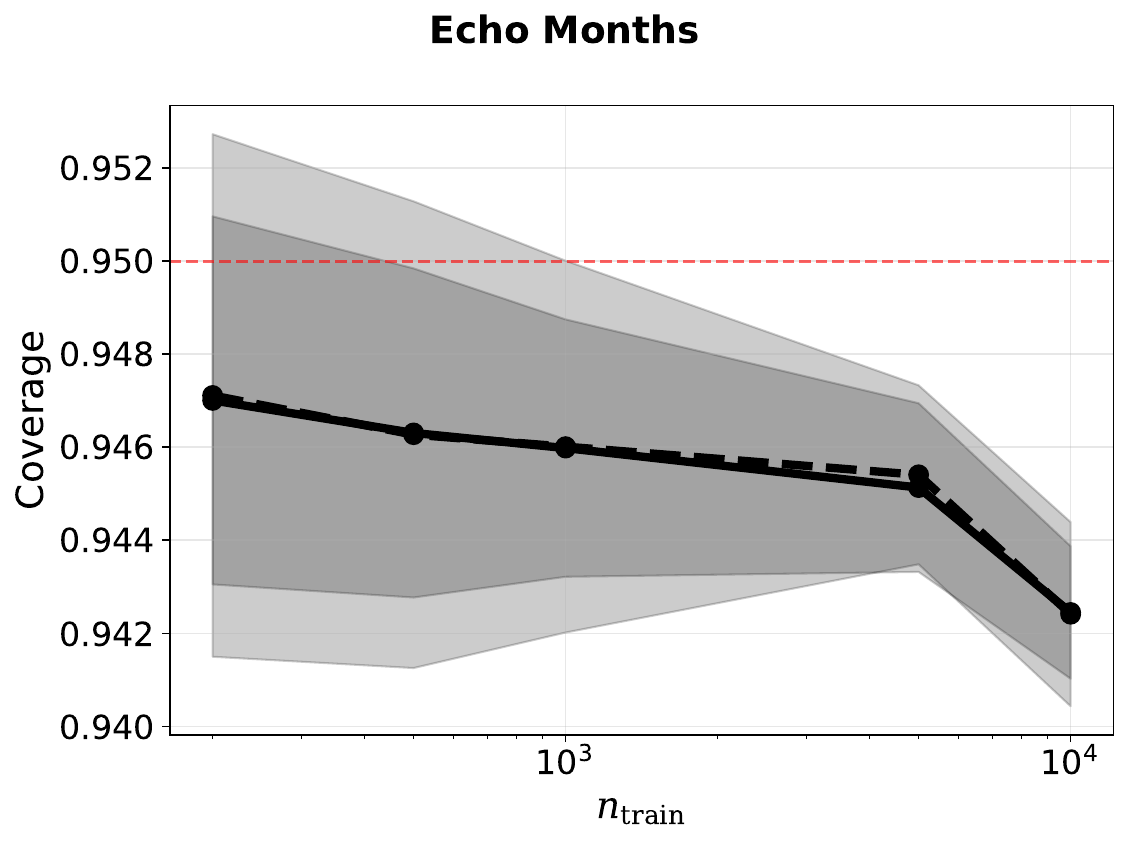}%
        \hfill
        \includegraphics[width=0.32\linewidth]{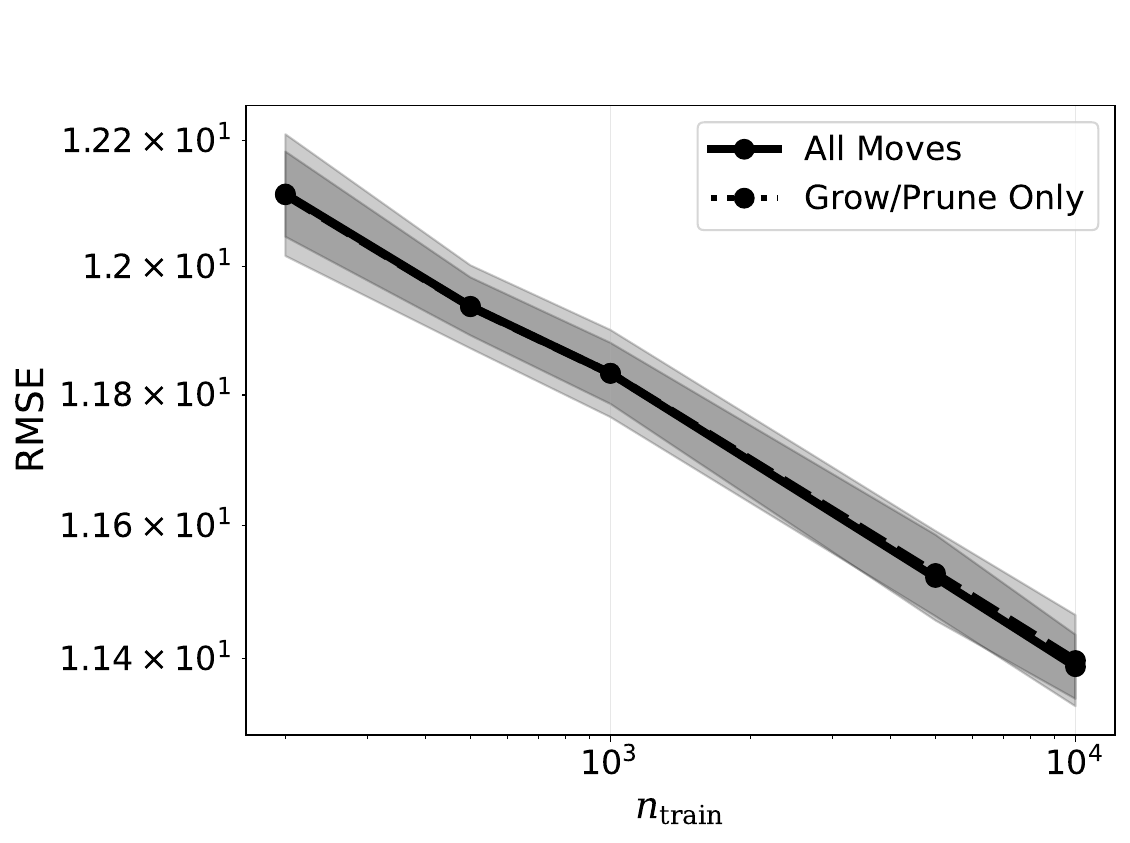}
    \end{minipage} \\
        \begin{minipage}{0.9\textwidth}
        \centering
        \includegraphics[width=0.32\linewidth]{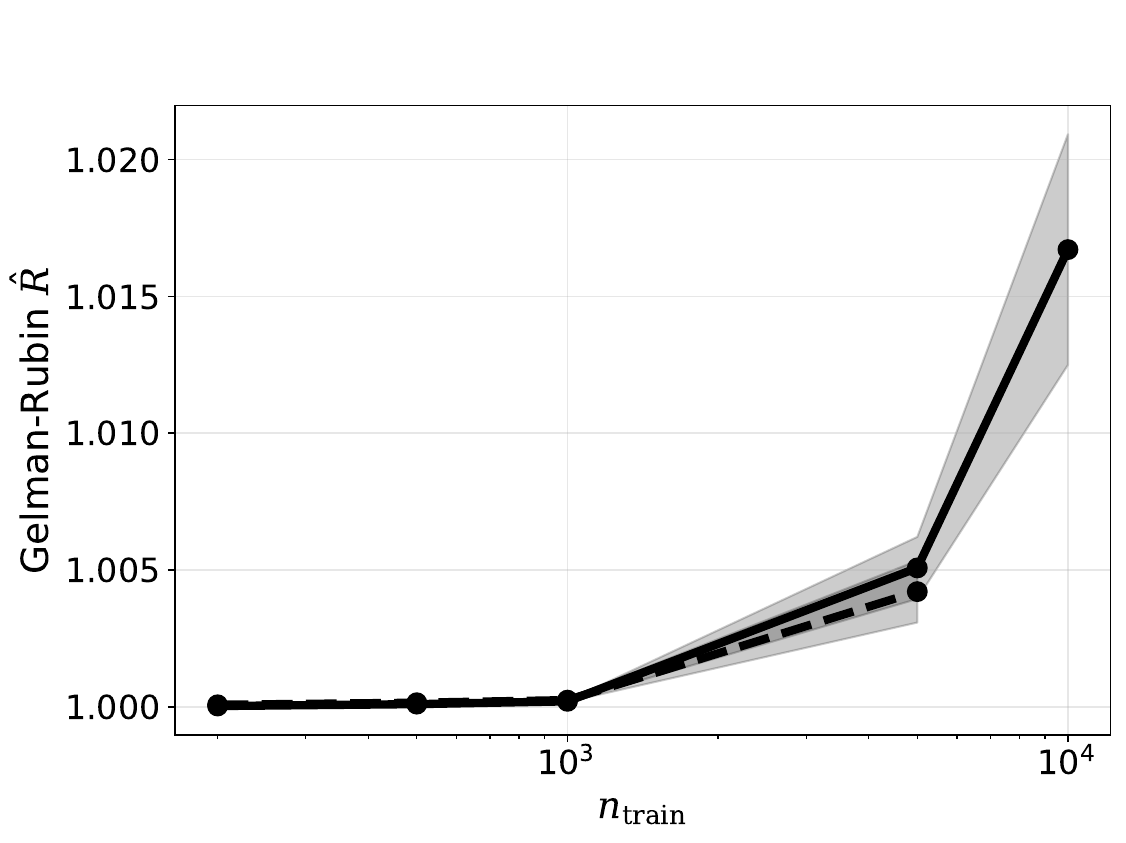}%
        \hfill
        \includegraphics[width=0.32\linewidth]{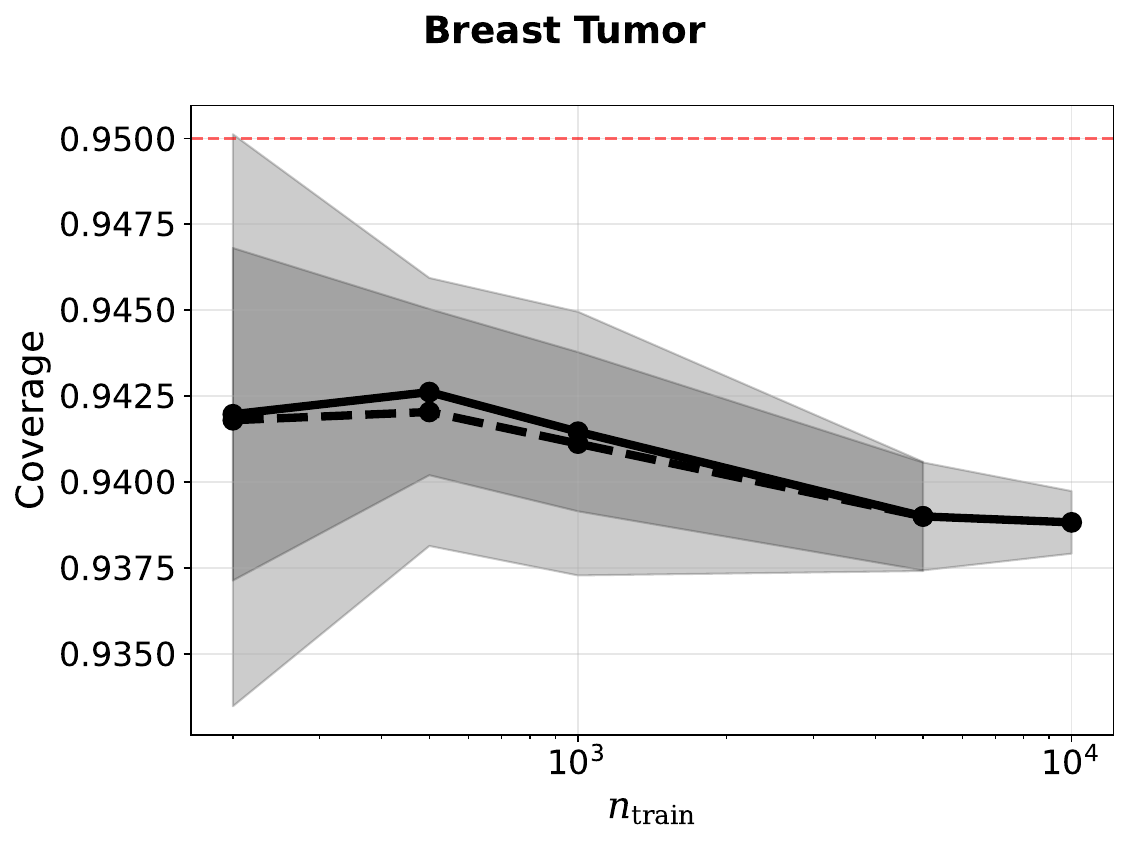}%
        \hfill
        \includegraphics[width=0.32\linewidth]{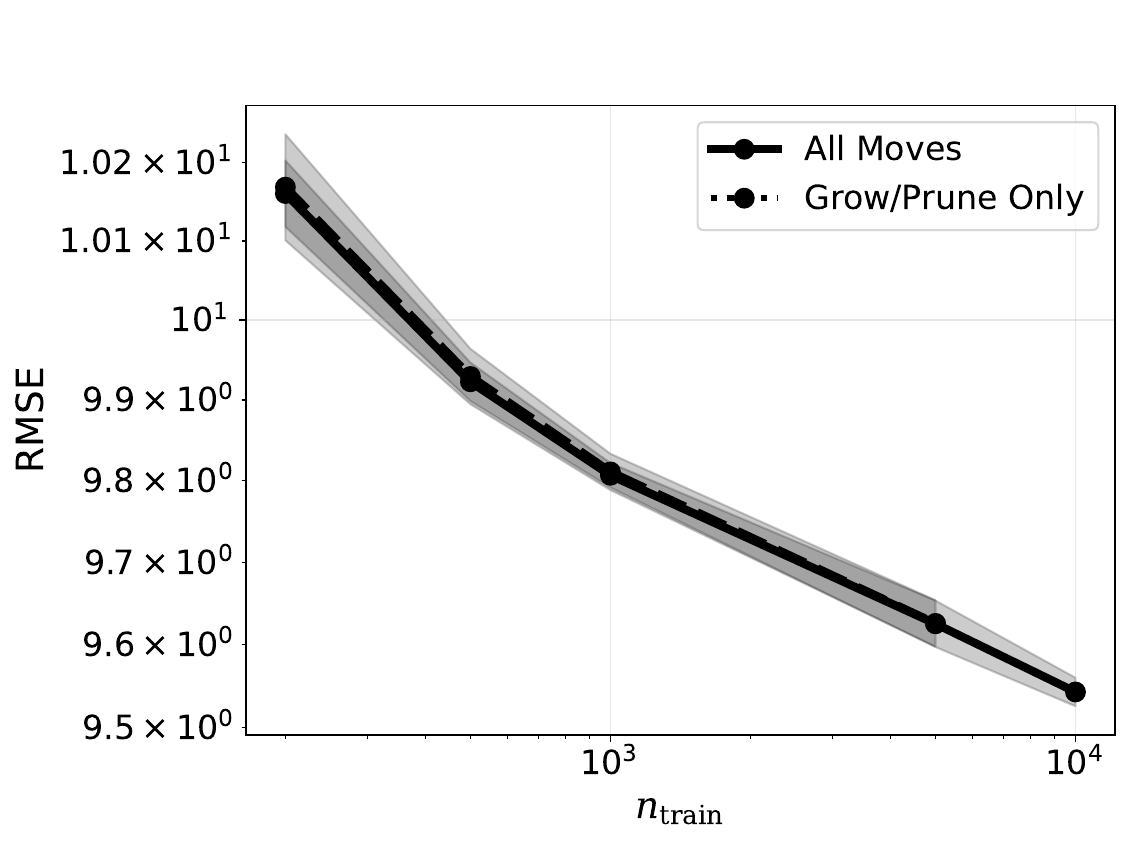}
    \end{minipage} \\
    % Low Lei Candes
    \begin{minipage}{0.9\textwidth}
        \centering
        \includegraphics[width=0.32\linewidth]{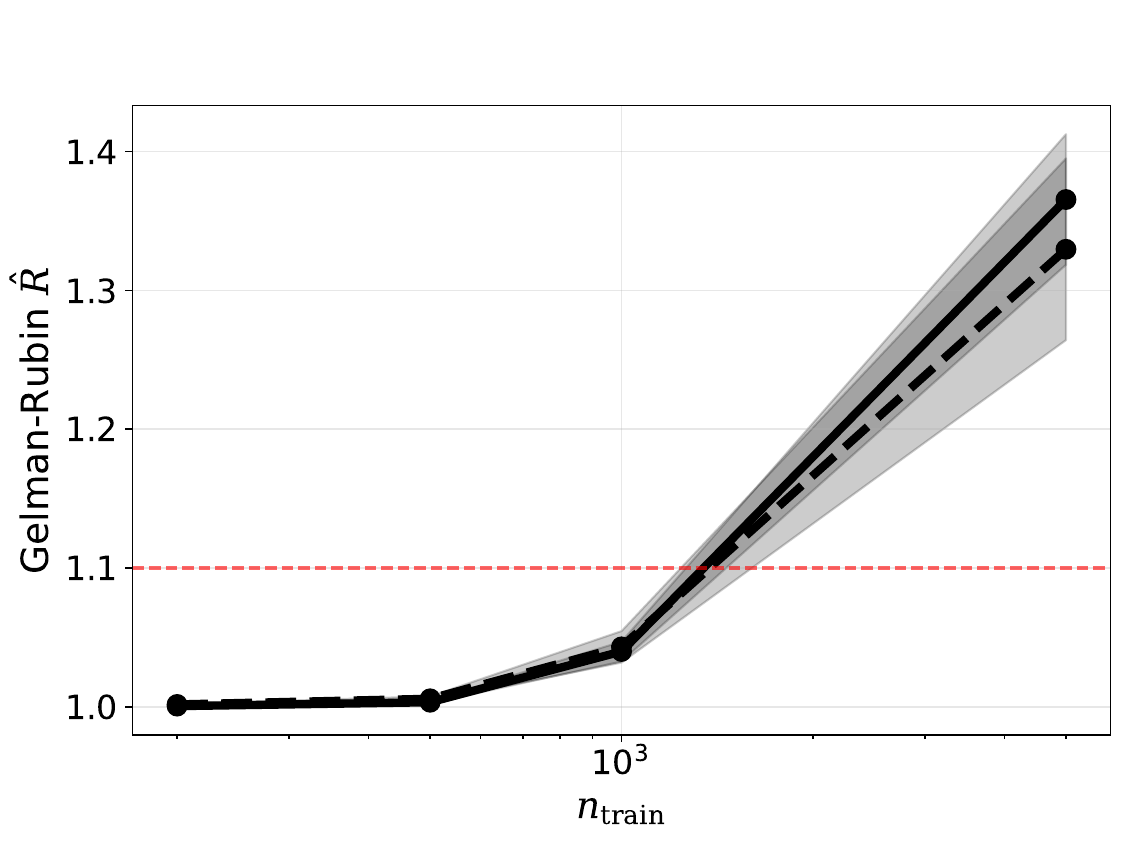}%
        \hfill
        \includegraphics[width=0.32\linewidth]{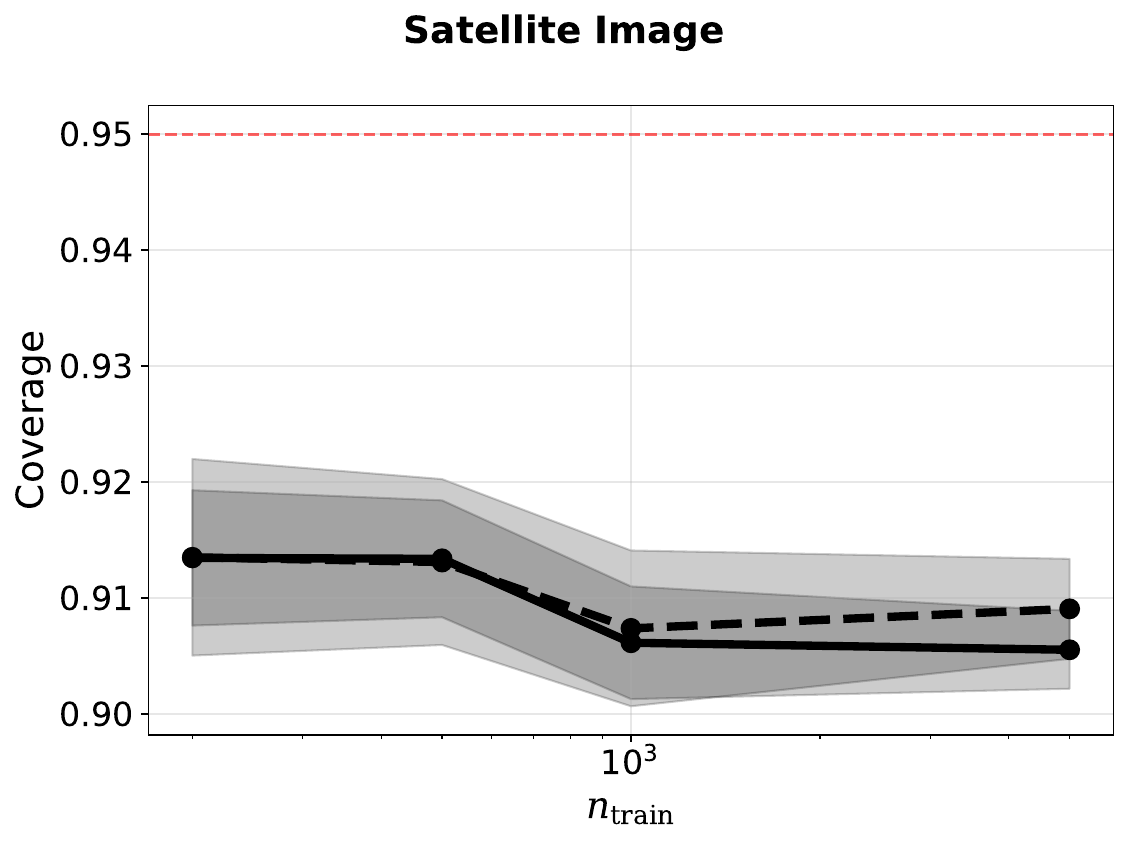}%
        \hfill
        \includegraphics[width=0.32\linewidth]{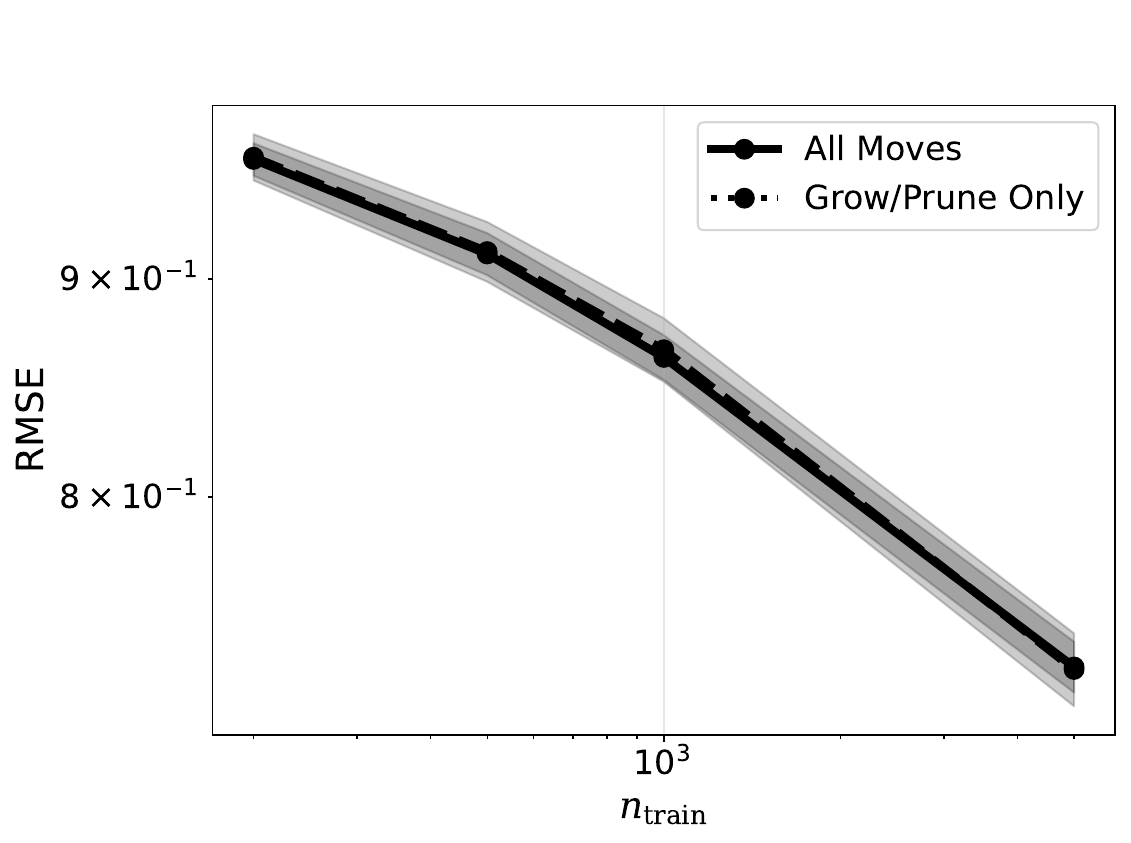}
    \end{minipage} \\
    % 1199 BNG echoMonths
    \begin{minipage}{0.9\textwidth}
        \centering
        \includegraphics[width=0.32\linewidth]{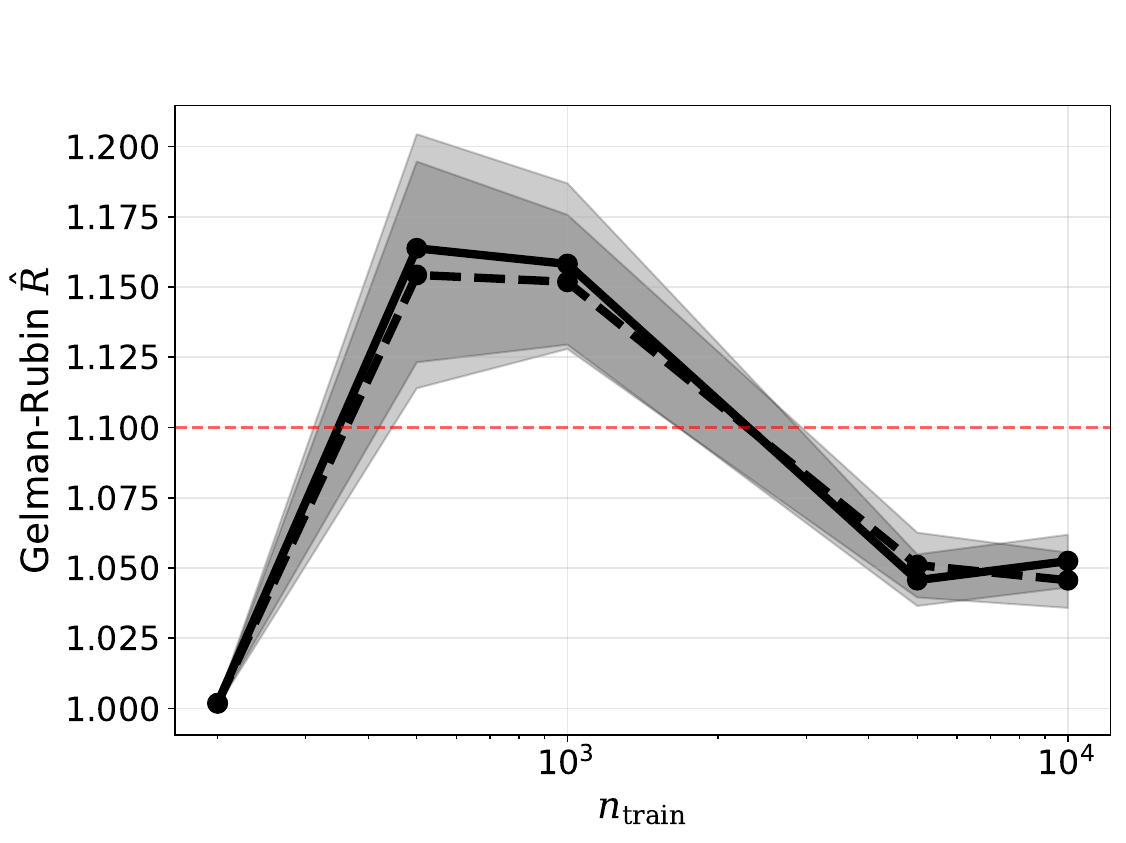}%
        \hfill
        \includegraphics[width=0.32\linewidth]{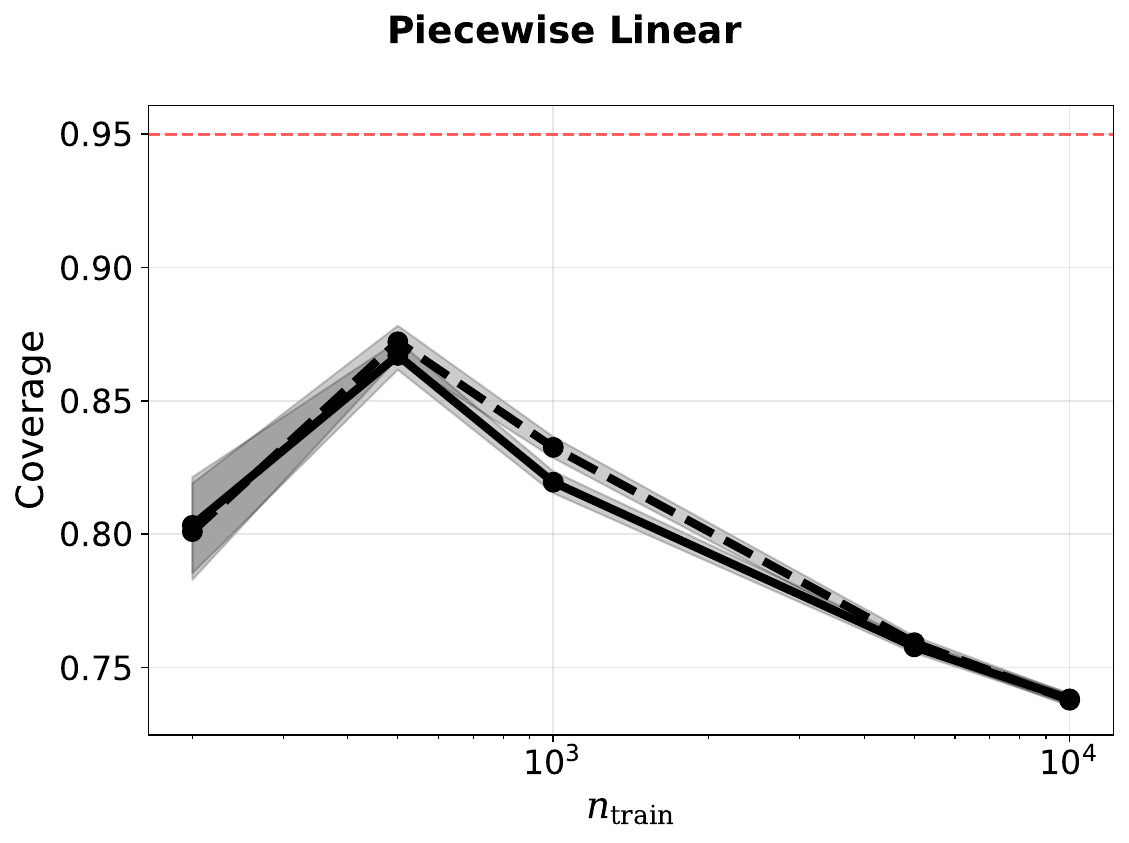}%
        \hfill
        \includegraphics[width=0.32\linewidth]{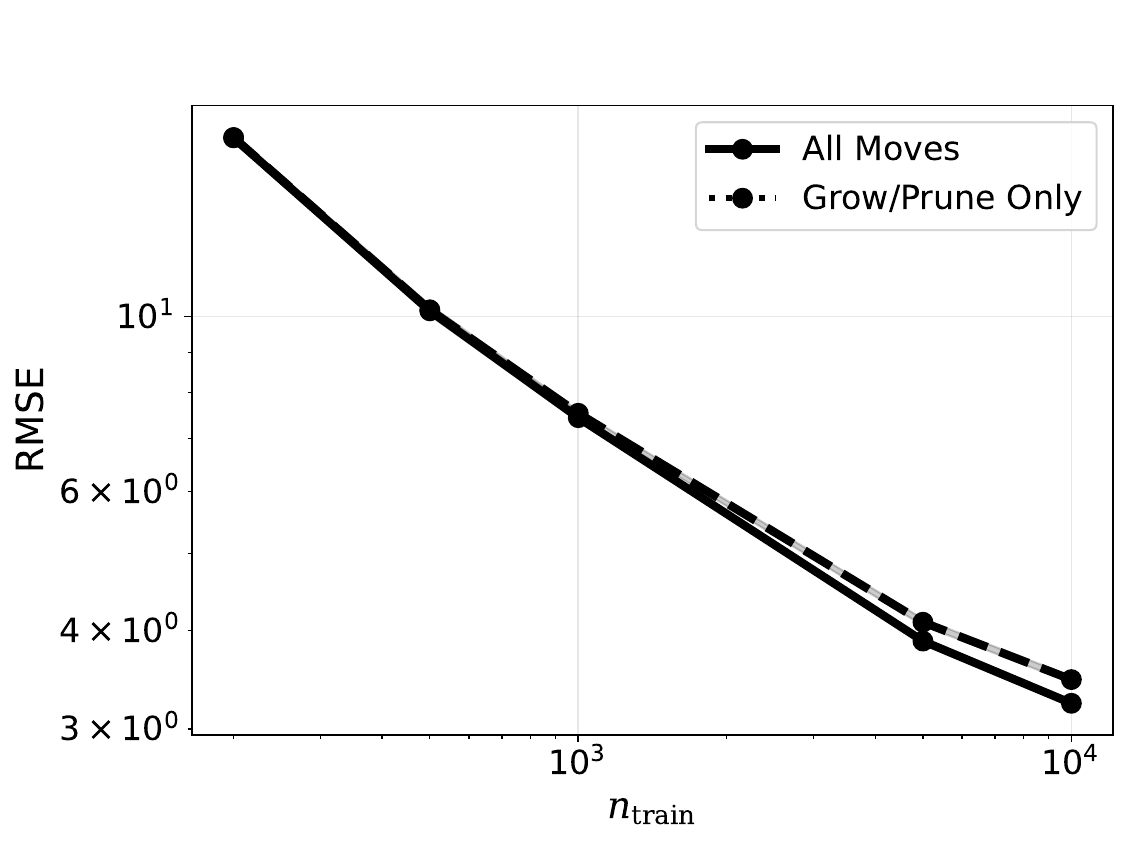}
    \end{minipage} \\
    \end{tabular}
    \caption{
    Values for Gelman-Rubin $\hat R$ (left), coverage (center), and RMSE (right) for the BART sampler under both the full move set, and the move set restricted to just ``grow'' and ``prune''.
    Results are plotted for California Housing, Low Dimensional Smooth, Echo Months, Breast Tumor, Satellite Image, and Piecewise Linear datasets (from top to bottom).
    Error bars represent $\pm 1.96$ standard errors from 25 replicates.
    }
    \label{fig:experiment_moveset}
\end{figure}

\newpage

\begin{figure}[H]
    \centering
    \begin{tabular}{c}
    % California Housing
    \begin{minipage}{0.9\textwidth}
        \centering
        \includegraphics[width=0.32\linewidth]{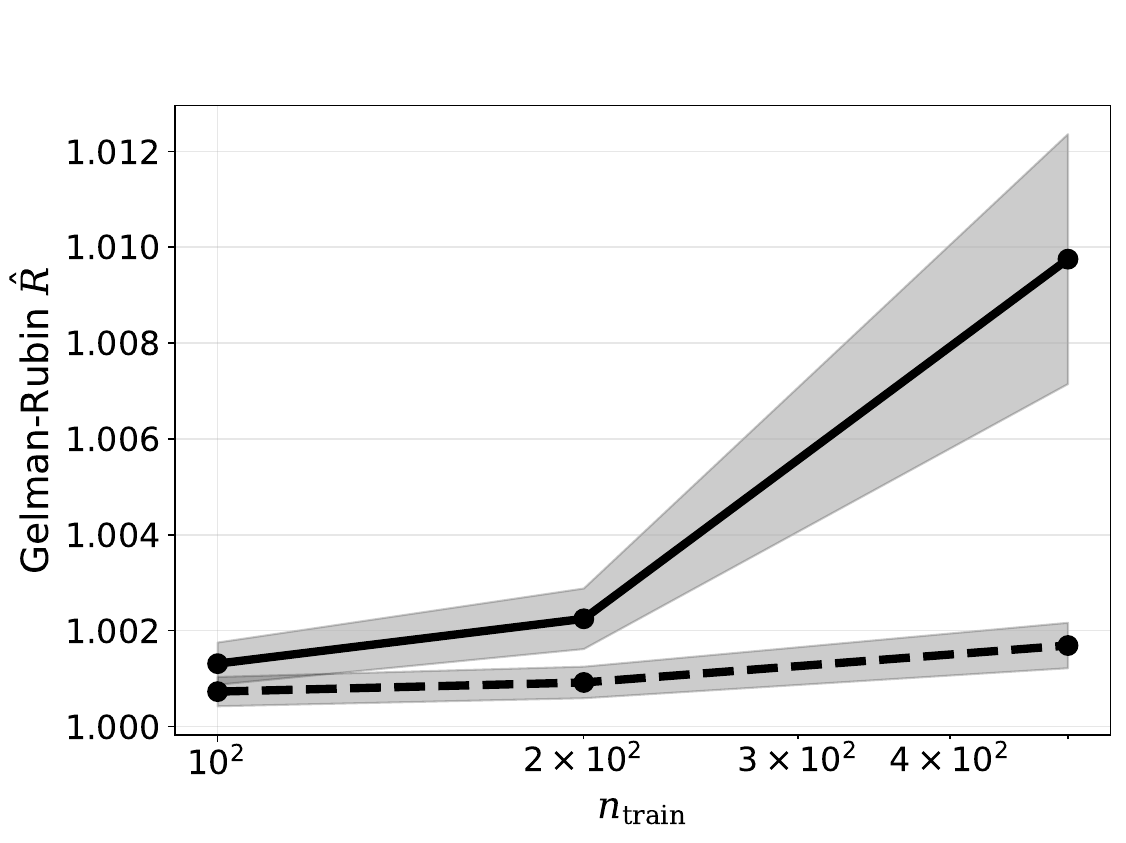}%
        \hfill
        \includegraphics[width=0.32\linewidth]{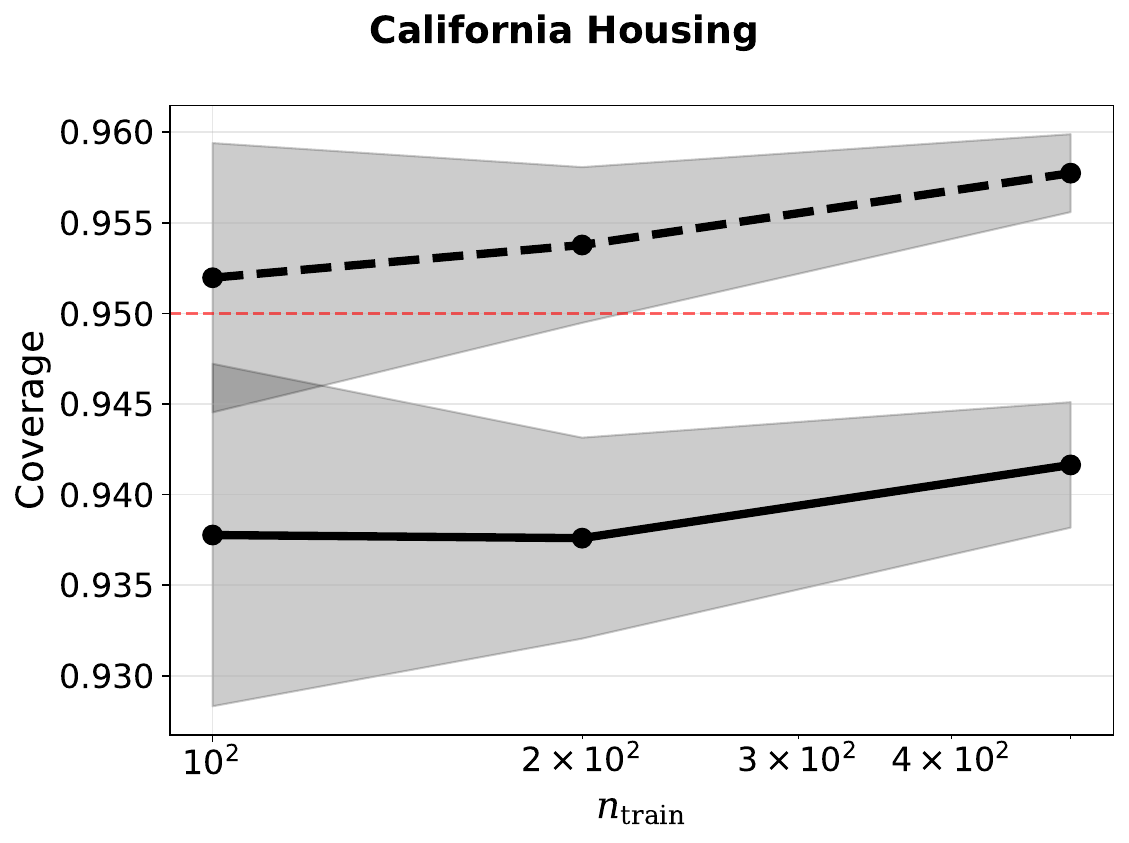}%
        \hfill
        \includegraphics[width=0.32\linewidth]{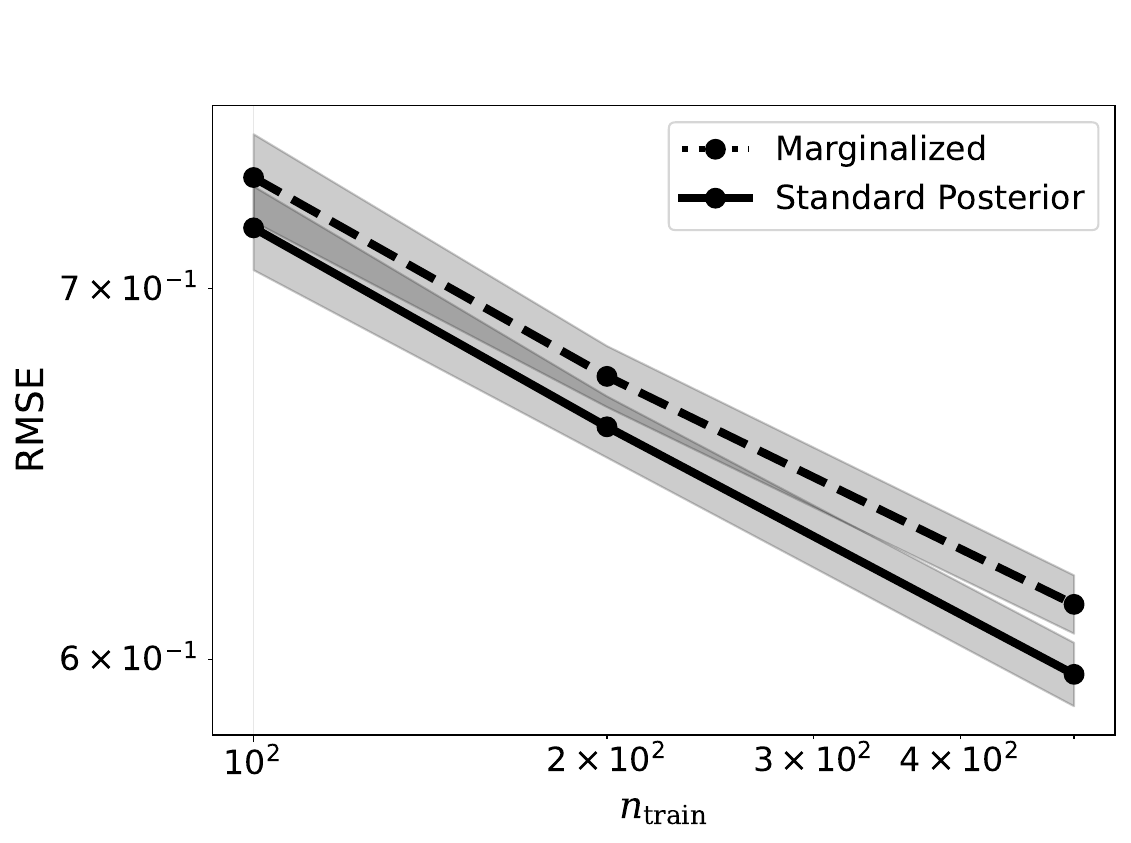}
    \end{minipage} \\
    % Low Lei Candes
    \begin{minipage}{0.9\textwidth}
        \centering
        \includegraphics[width=0.32\linewidth]{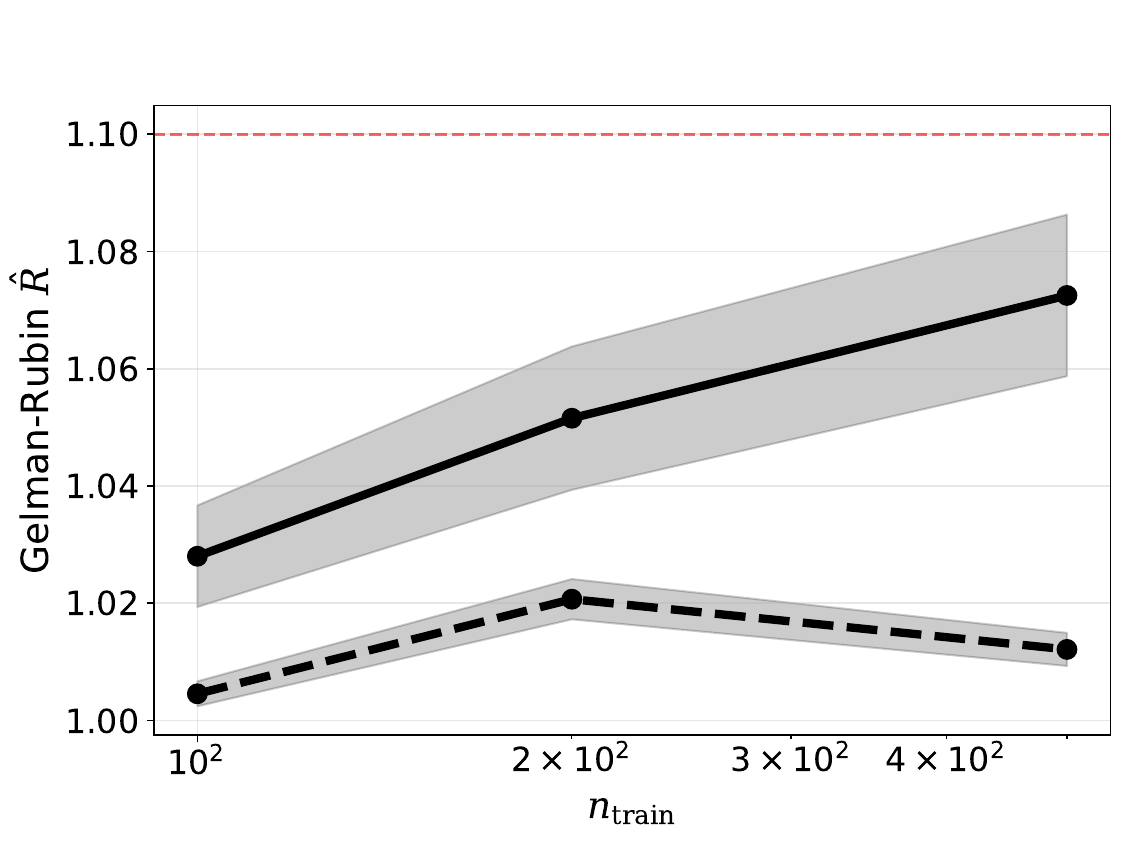}%
        \hfill
        \includegraphics[width=0.32\linewidth]{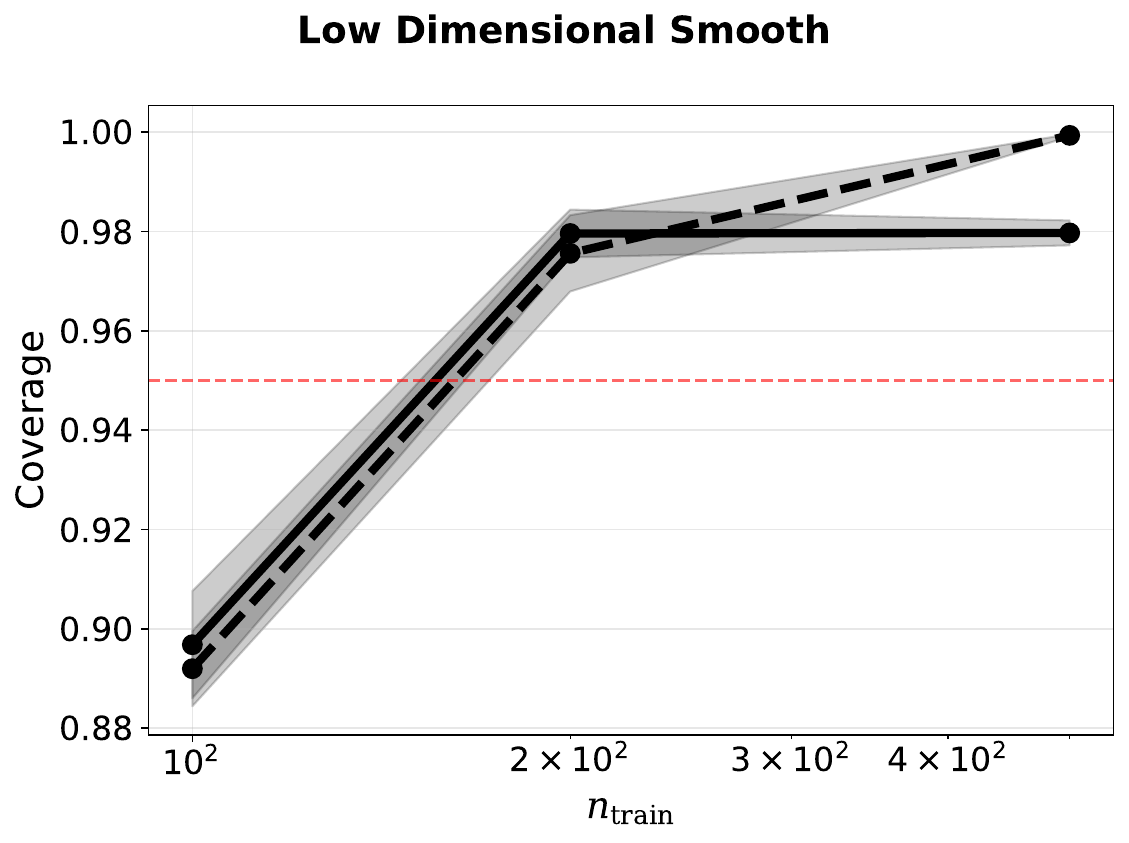}%
        \hfill
        \includegraphics[width=0.32\linewidth]{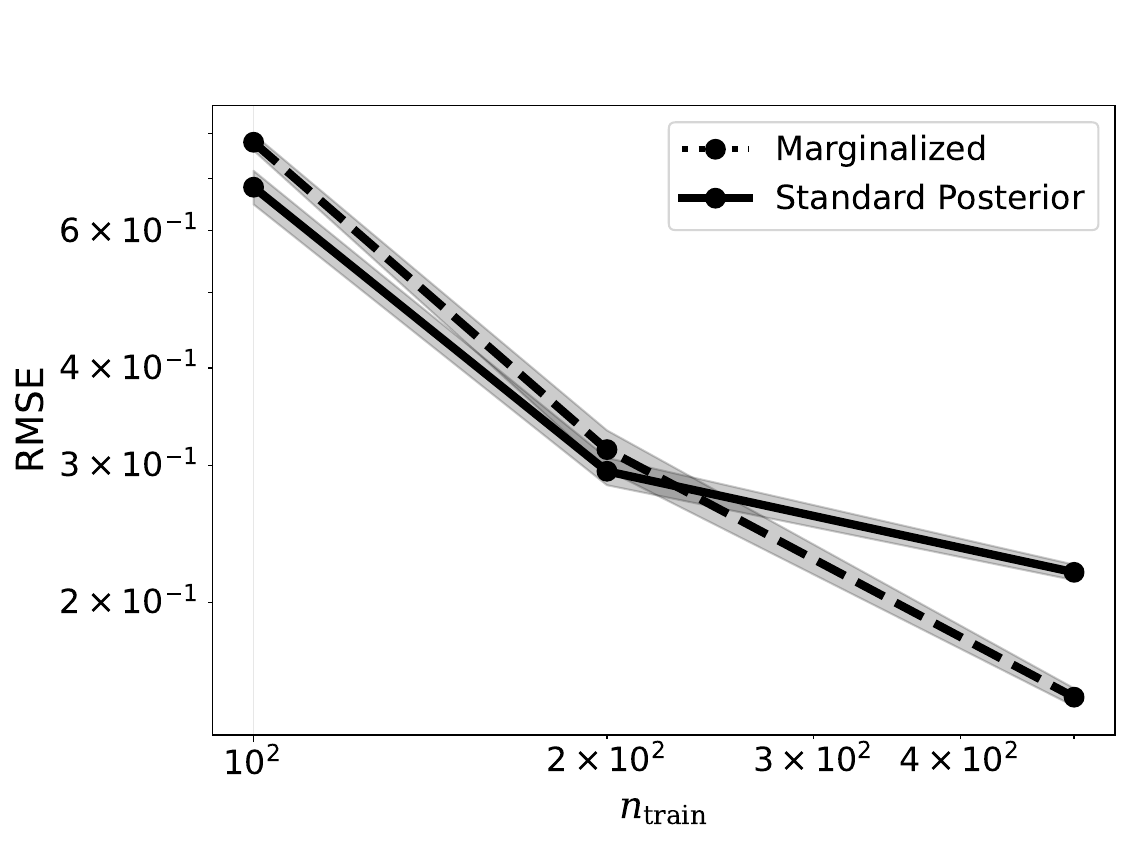}
    \end{minipage} \\
    % 1199 BNG echoMonths
    \begin{minipage}{0.9\textwidth}
        \centering
        \includegraphics[width=0.32\linewidth]{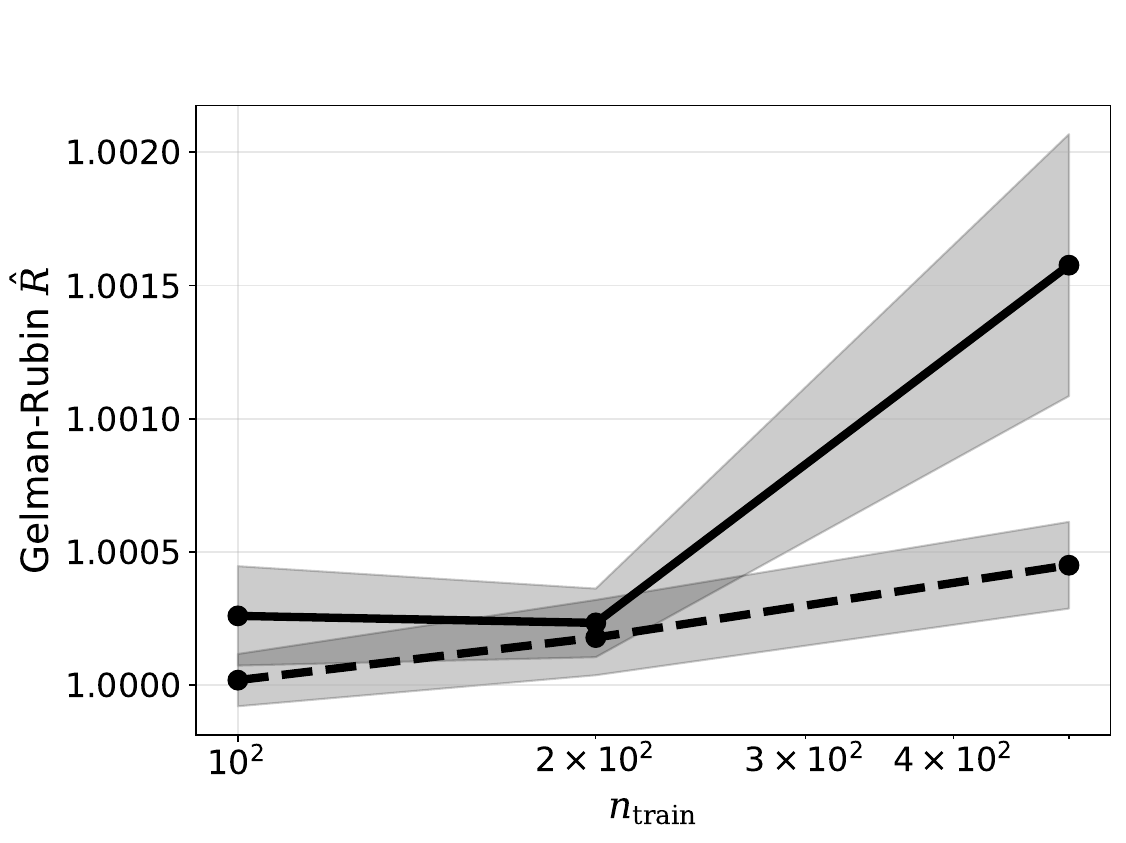}%
        \hfill
        \includegraphics[width=0.32\linewidth]{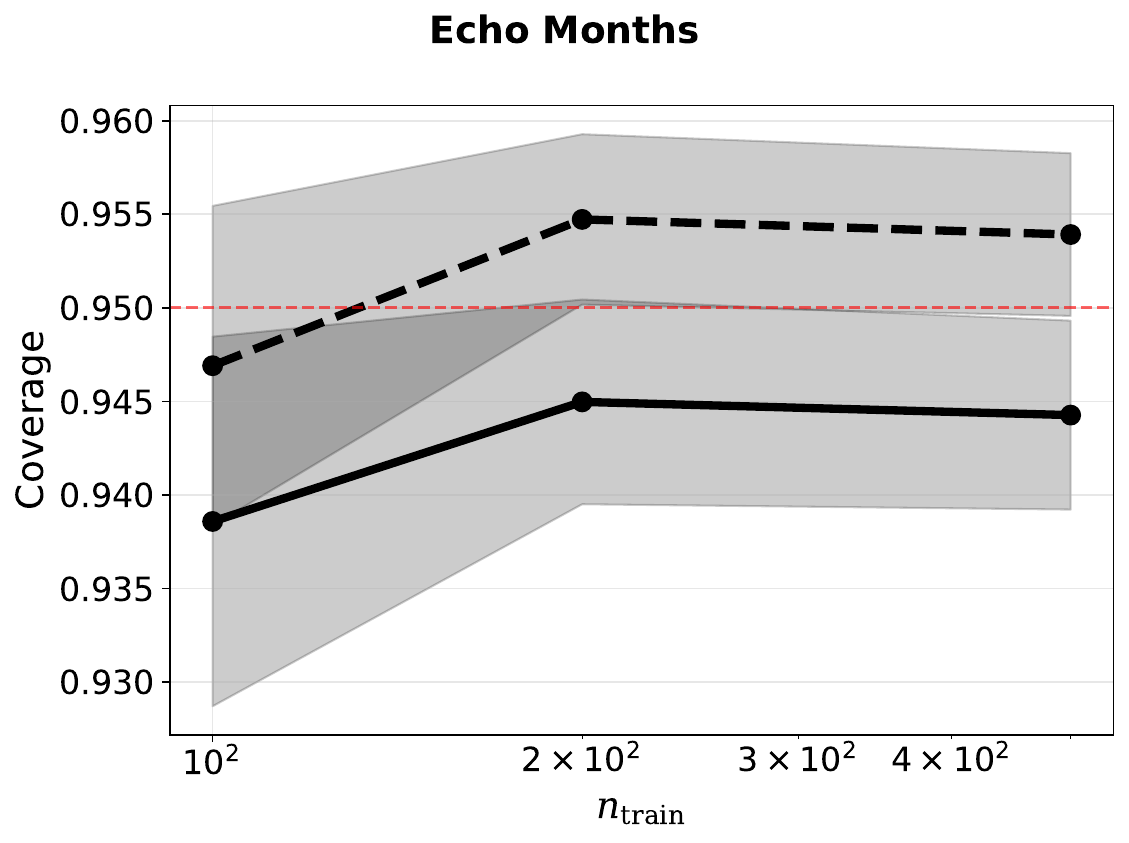}%
        \hfill
        \includegraphics[width=0.32\linewidth]{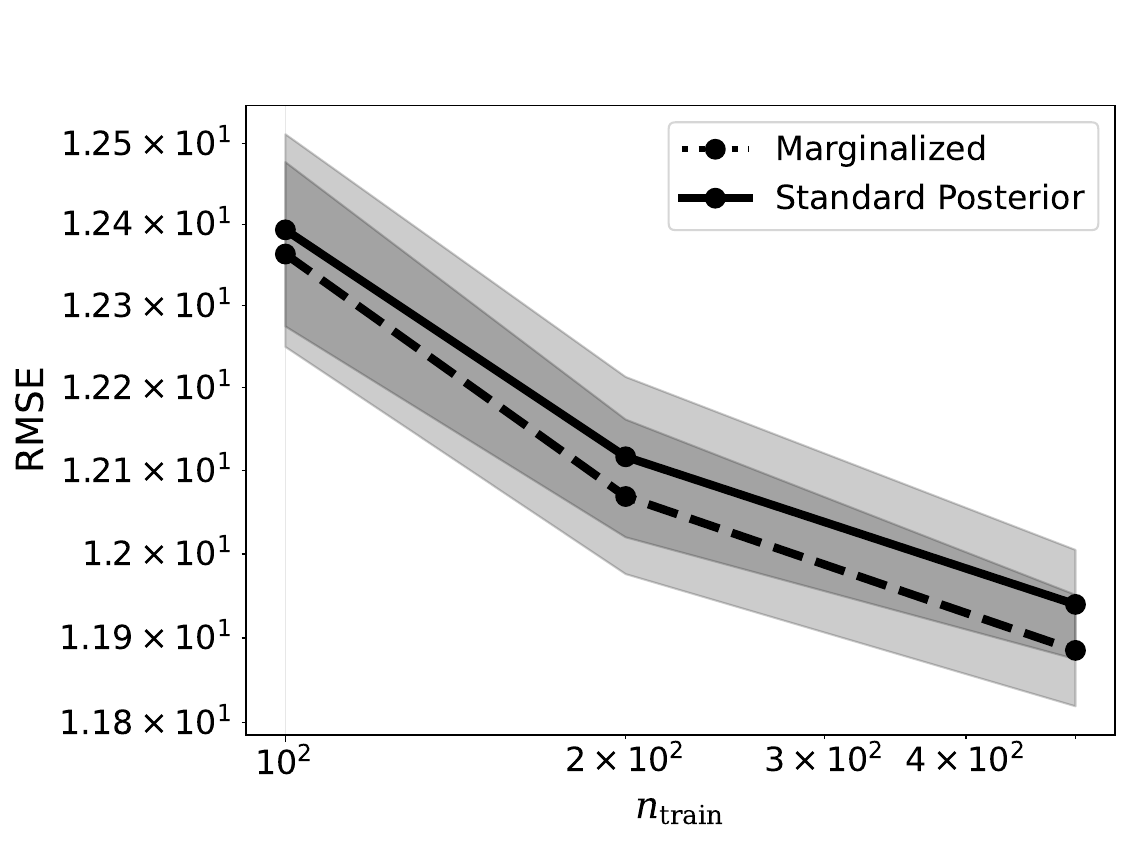}
    \end{minipage} \\
        \begin{minipage}{0.9\textwidth}
        \centering
        \includegraphics[width=0.32\linewidth]{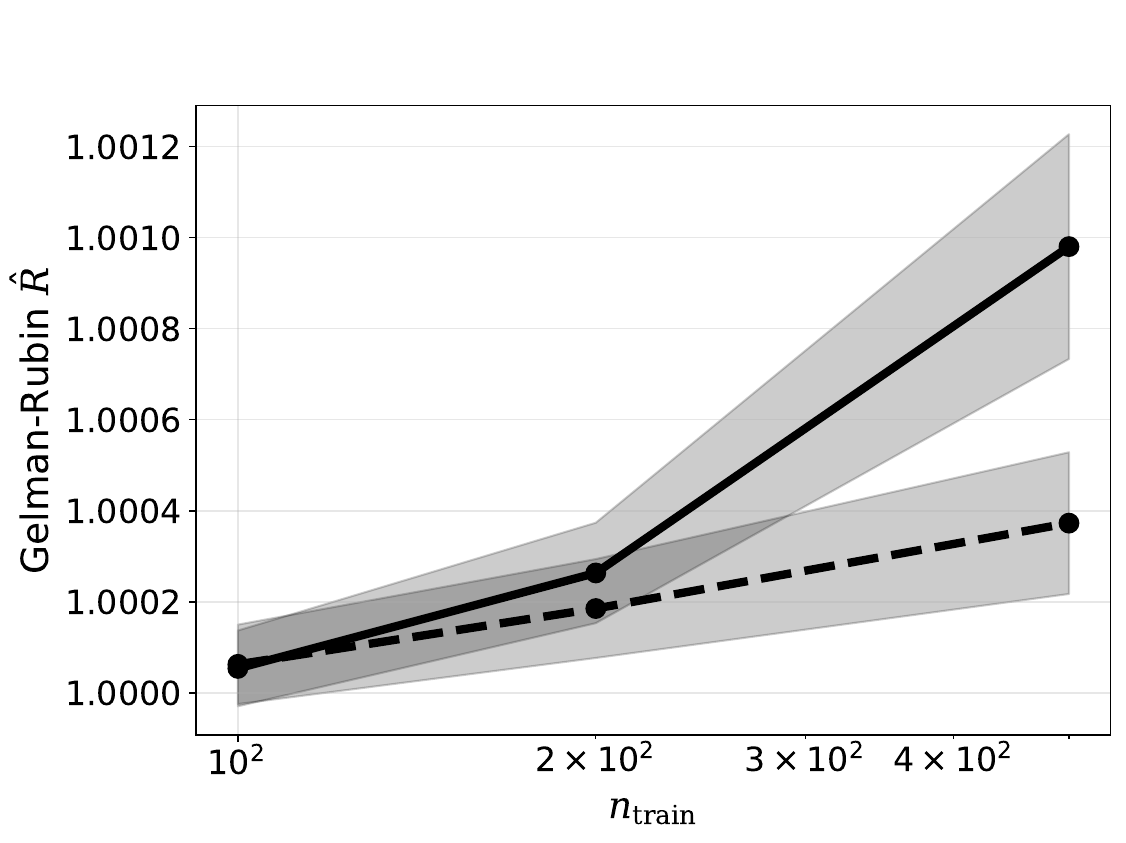}%
        \hfill
        \includegraphics[width=0.32\linewidth]{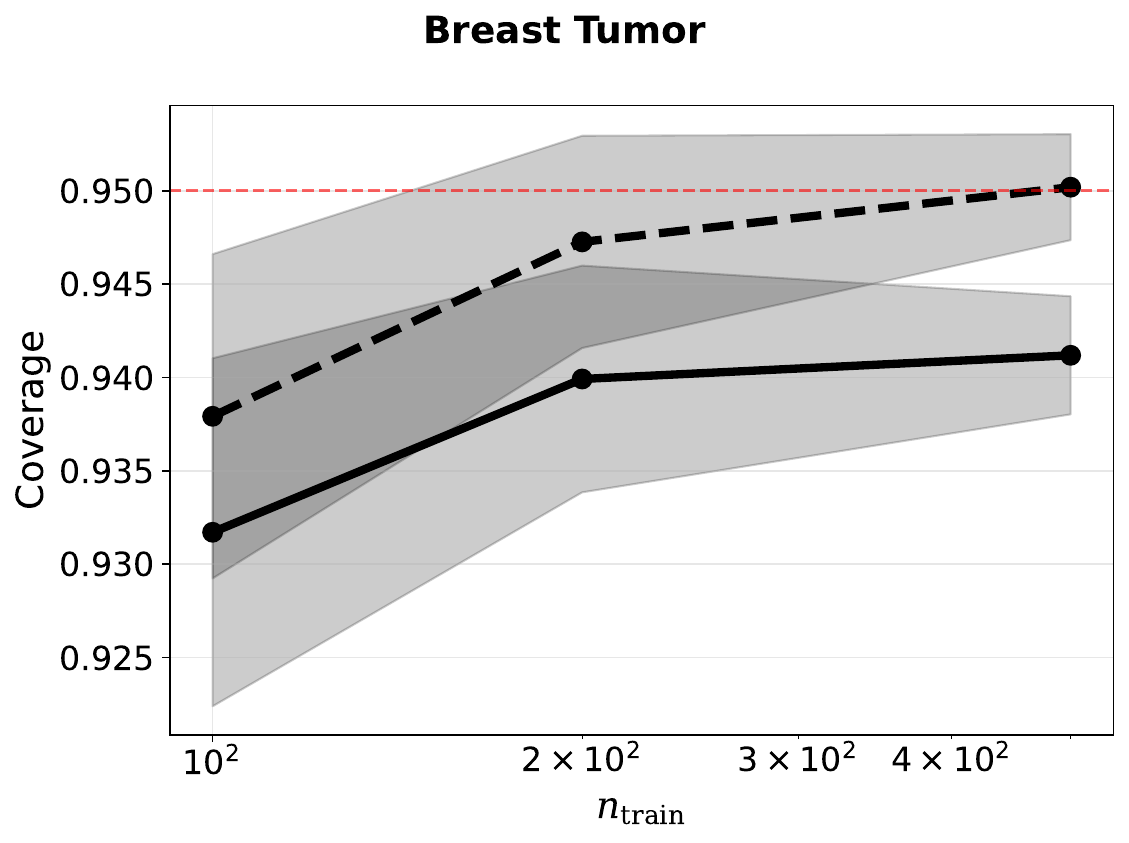}%
        \hfill
        \includegraphics[width=0.32\linewidth]{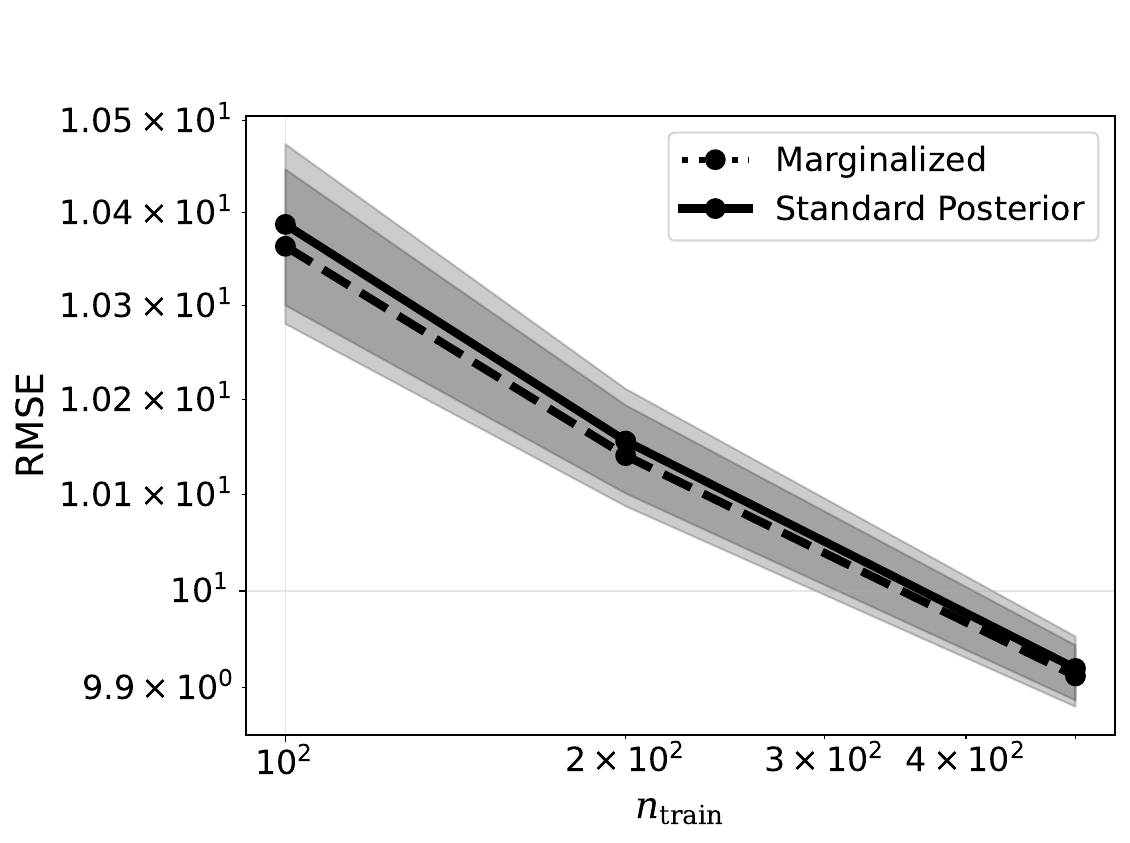}
    \end{minipage} \\
    % Low Lei Candes
    \begin{minipage}{0.9\textwidth}
        \centering
        \includegraphics[width=0.32\linewidth]{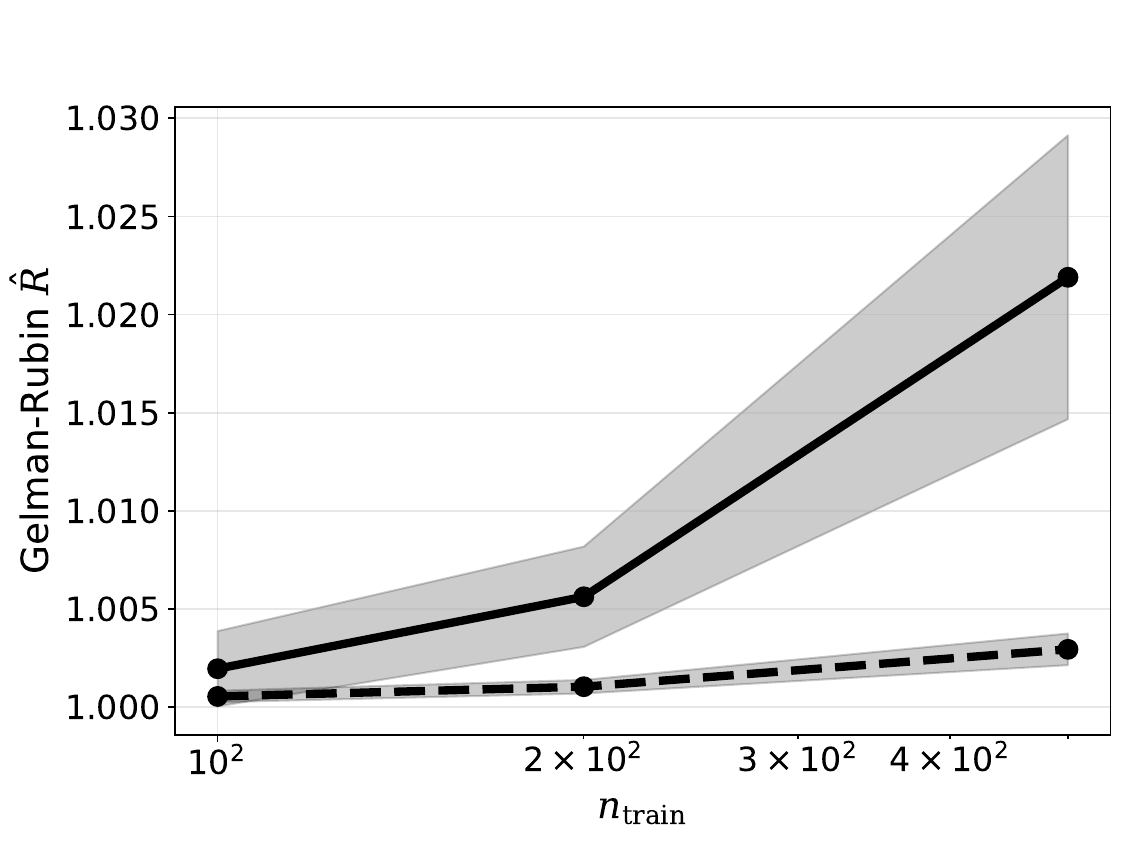}%
        \hfill
        \includegraphics[width=0.32\linewidth]{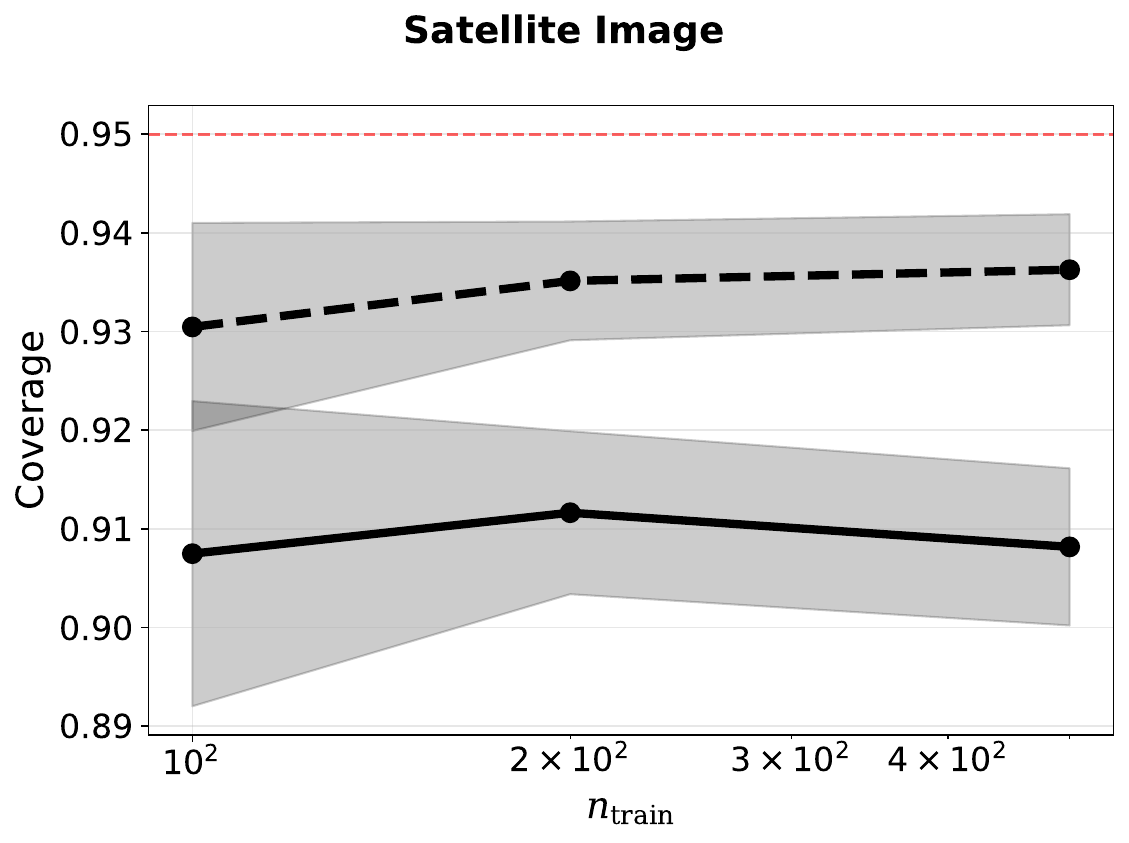}%
        \hfill
        \includegraphics[width=0.32\linewidth]{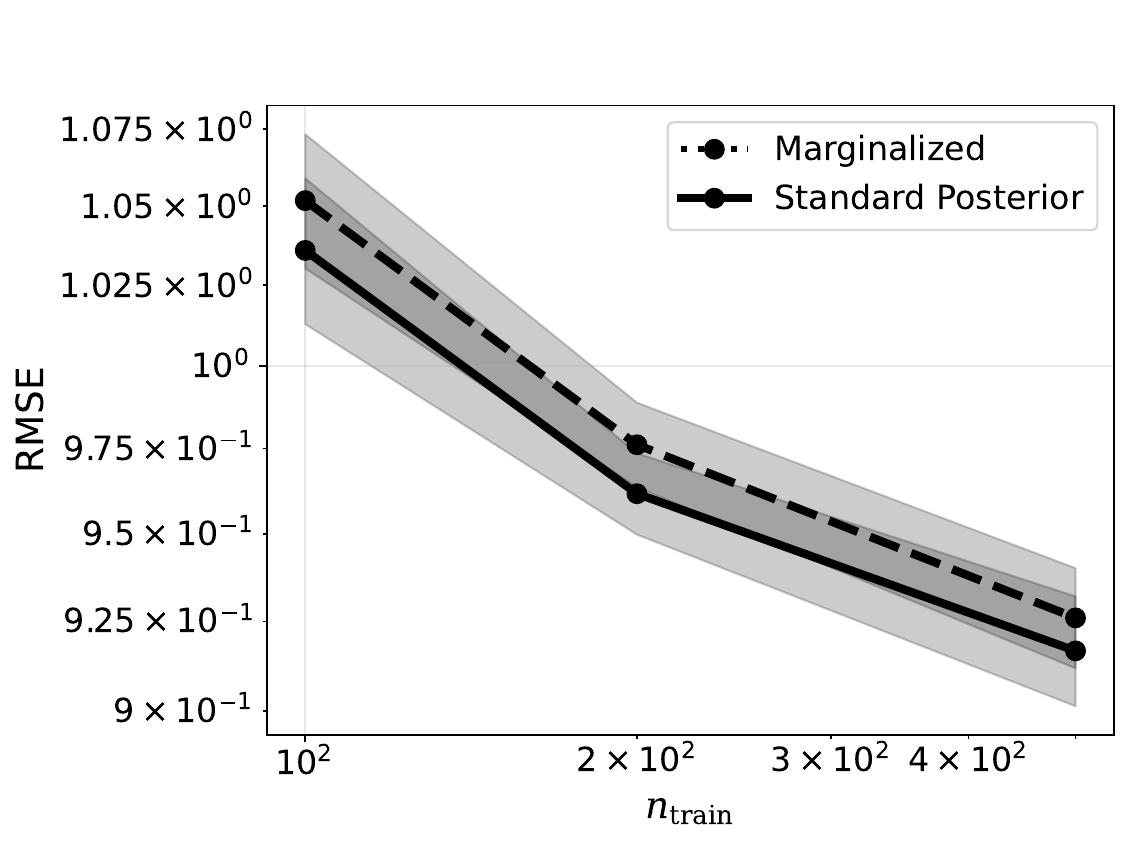}
    \end{minipage} \\
    % 1199 BNG echoMonths
    \begin{minipage}{0.9\textwidth}
        \centering
        \includegraphics[width=0.32\linewidth]{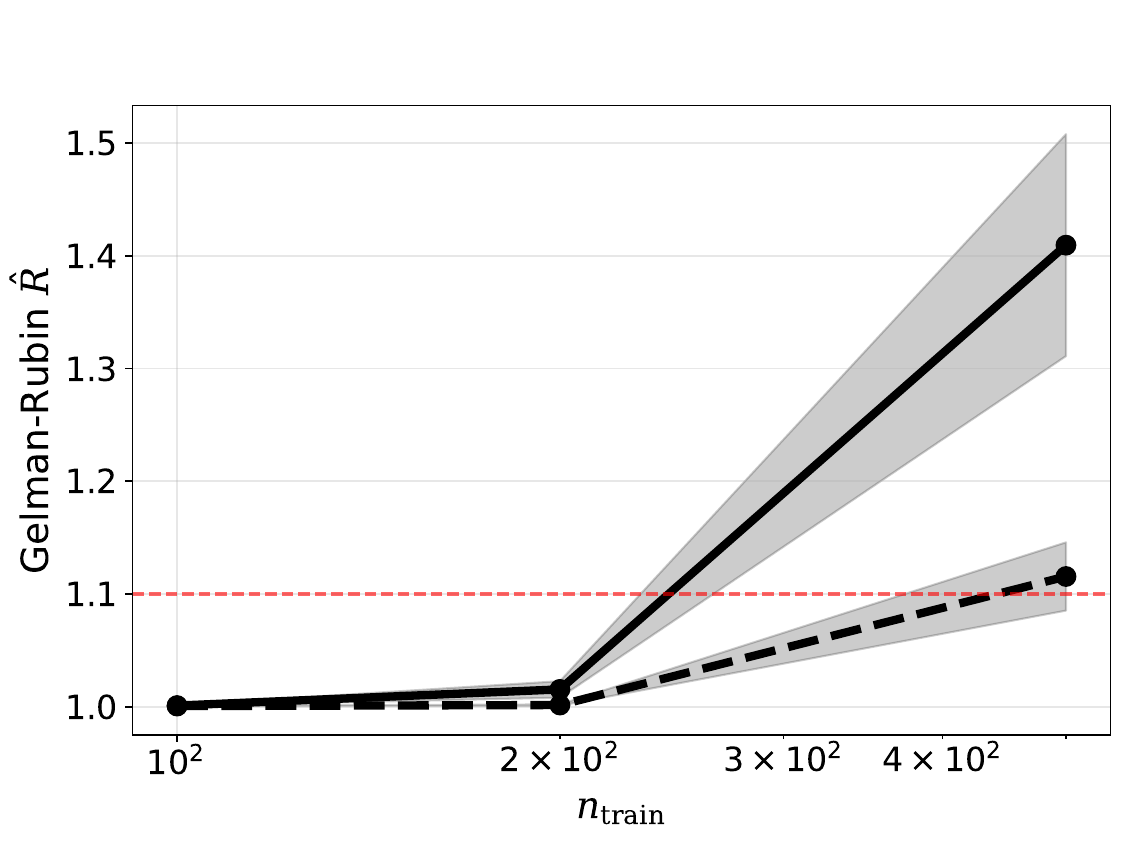}%
        \hfill
        \includegraphics[width=0.32\linewidth]{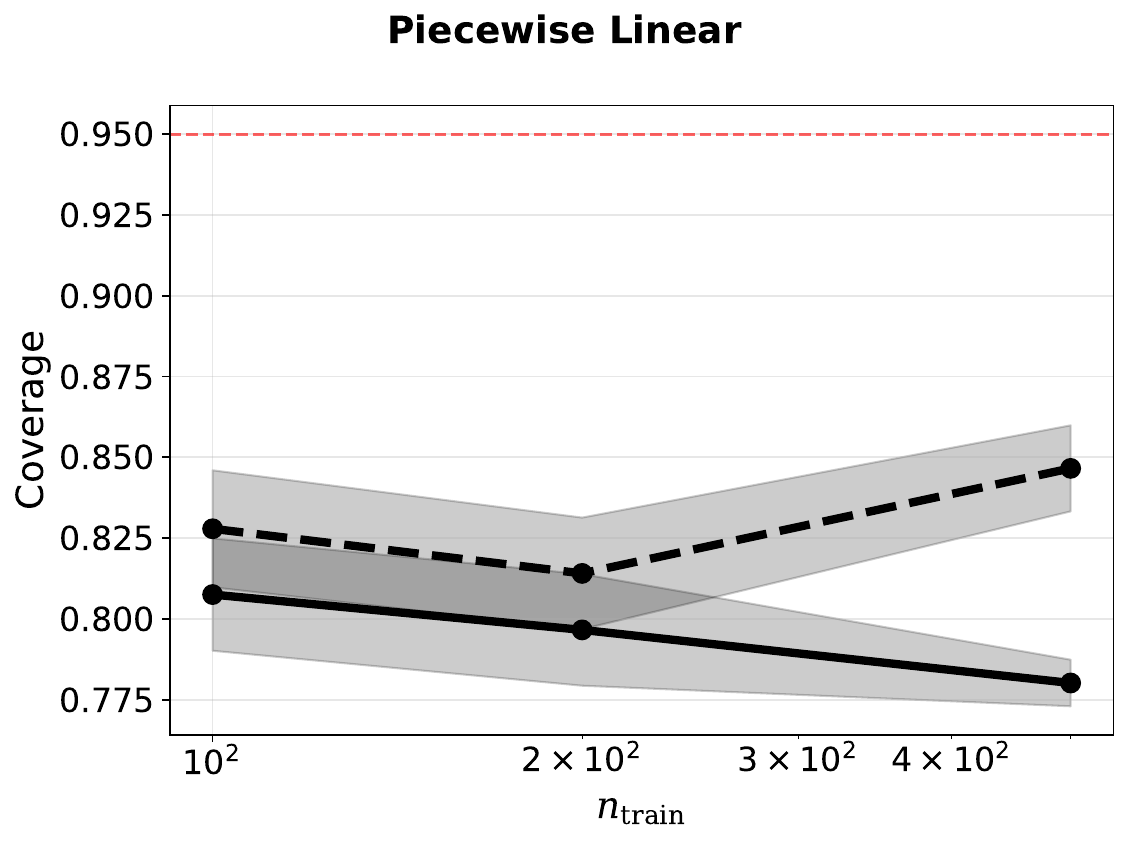}%
        \hfill
        \includegraphics[width=0.32\linewidth]{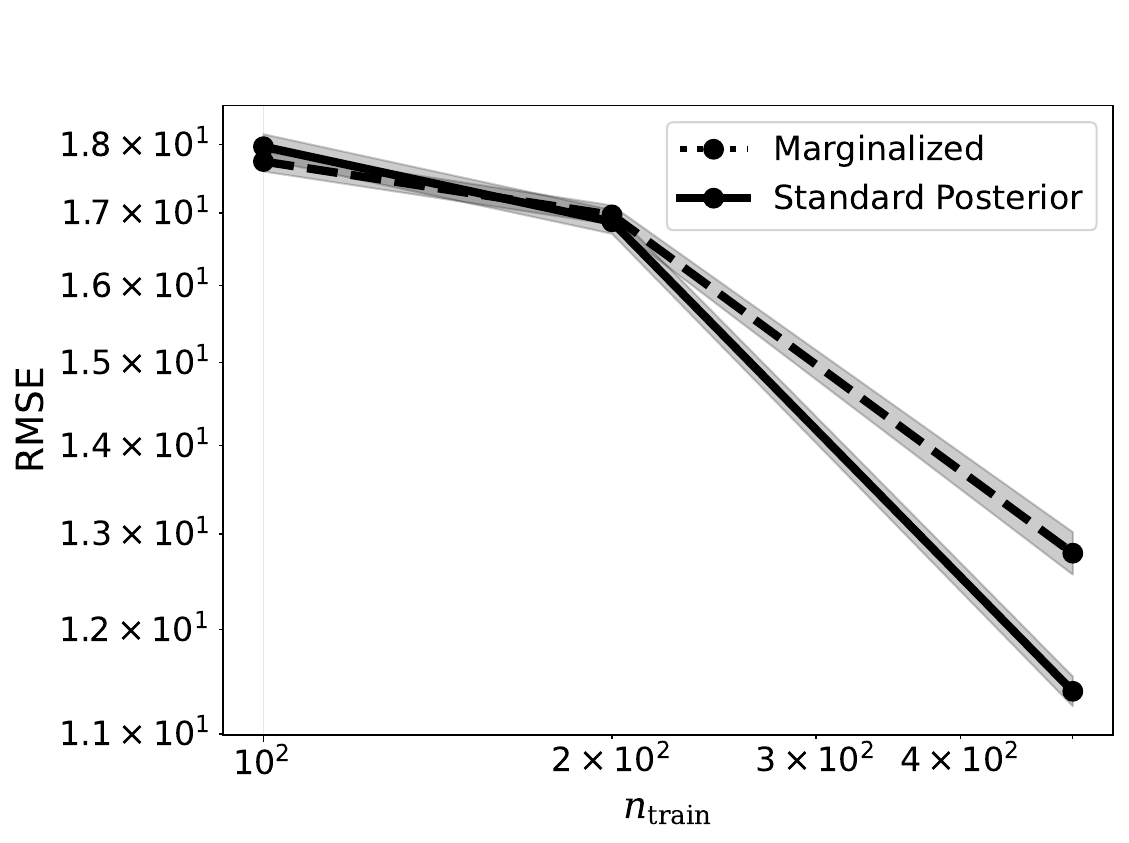}
    \end{minipage} \\
    \end{tabular}
    \caption{
    Values for Gelman-Rubin $\hat R$ (left), coverage (center), and RMSE (right) for the BART sampler under both the marginalized posterior calculation and the standard calculation for the Metropolis-Hastings acceptance probability.
    Results are plotted for California Housing, Low Dimensional Smooth, Echo Months, Breast Tumor, Satellite Image, and Piecewise Linear datasets (from top to bottom).
    Error bars represent $\pm 1.96$ standard errors from 25 replicates.
    }
    \label{fig:experiment_marginal}
\end{figure}

\newpage

\end{document}